\newcommand{\removed}[1]{}
\begin{document}

\title{Clustering a mixture of Gaussians with unknown covariance}\blfootnote{Author names are sorted alphabetically.}

\author{Damek Davis\thanks{School of ORIE, Cornell University. Email: \texttt{dsd95@cornell.edu}.}
	\and Mateo D{\'\i}az\thanks{Computing and Mathematical Sciences, California Institute of Technology. Email: \texttt{mateodd@caltech.edu}.}
	\and Kaizheng Wang\thanks{Department of IEOR, Columbia University. Email: \texttt{kaizheng.wang@columbia.edu}.}
}

\date{October 2021}

\maketitle

\begin{abstract}
We investigate a clustering problem with data from a mixture of Gaussians that share a common but unknown, and potentially ill-conditioned, covariance matrix. We start by considering Gaussian mixtures with two equally-sized components and derive a Max-Cut integer program based on maximum likelihood estimation. We prove its solutions achieve the optimal misclassification rate when the number of samples grows linearly in the dimension, up to a logarithmic factor. However, solving the Max-cut problem appears to be computationally intractable. To overcome this, we develop an efficient spectral algorithm that attains the optimal rate but requires a quadratic sample size. Although this sample complexity is worse than that of the Max-cut problem, we conjecture that no polynomial-time method can perform better. Furthermore, we gather numerical and theoretical evidence that supports the existence of a statistical-computational gap. Finally, we generalize the Max-Cut program to a $k$-means program that handles multi-component mixtures with possibly unequal weights. It enjoys similar optimality guarantees for mixtures of distributions that satisfy a transportation-cost inequality, encompassing Gaussian and strongly log-concave distributions.
\end{abstract}

\noindent \textbf{Keywords:} mixture models, clustering, Maximum cut, k-means, statistical-computational tradeoff, transportation-cost inequality.

\section{Introduction}

Clustering is a ubiquitous problem in statistics and machine learning \citep{HTF09}. It aims to partition a heterogeneous, unlabeled dataset into groups of similar samples. Clustering algorithms are often developed and analyzed under mixture models \citep{Lin95, FJa02}. Among them, Gaussian mixture models are arguably the most canonical.
This paper studies the clustering problem with data from a mixture of multiple Gaussians with \emph{unknown} covariance matrices.

To set the stage, consider a Gaussian mixture with two symmetric components. Let $\{ ( \bx_i , y_i^{\star} ) \}_{i=1}^n \subseteq \RR^d \times \{ \pm 1 \}$ be i.i.d.~samples generated from the model
\begin{align}
\label{eqn-joint-model-intro}
\PP(y_i^\star = -1) =  \PP(y_i^\star = 1) = 1/2 \quad\text{and} \quad
\bx_i |  y_i^{\star} \sim N (  y_i^{\star} \bmu^{\star} , \bSigma^{\star}).
\end{align}
That is, $\bx_i$ is drawn from $N(-\bmu^\star , \bSigma^{\star} )$ or $N(\bmu^\star , \bSigma^{\star} )$ with equal probability. The mean vector $ \bmu^{\star} \in \RR^d$ and covariance matrix $\bSigma^{\star}  \succ 0$ are unknown, and only the samples $\{ \bx_i \}_{i=1}^n$ are observable. The goal of clustering is to recover the latent variables $\{ y_i^{\star} \}_{i=1}^n$ from $\{ \bx_i \}_{i=1}^n$.

In general, one may only recover the labels $\{ y_i^{\star} \}_{i=1}^n$ inexactly, with expected misclassification rate depending on a certain signal-to-noise ratio and sample complexity at least linear in the dimension. Indeed, recall the natural signal-to-noise ratio measure:
\begin{align}
\snr 
= \bmu^{ \star \top} \bSigma^{\star-1} \bmu^\star.
\label{eqn-intro-snr}
\end{align}
This is motivated by Fisher's work on linear discriminant analysis \citep{Fis36}
: when $\bmu^{\star}$ and $\bSigma^{\star}$ are known, the Bayes-optimal estimate of $y_i^{\star}$ is $\sgn( \langle \bSigma^{\star -1} \bmu^{\star} , \bx_i \rangle )$ and its expected misclassification rate $e^{ - \Omega(\snr) }$. Importantly, this error rate serves as a lower bound for all possible estimators. Moreover, when the mean and covariance are unknown, any classifier that achieves the Bayes-optimal error rate requires a linear sample size $n = \Omega(d)$, even in the simpler supervised setting where labels $\{y_i^\star\}$ are observed~\cite{Fri89}.





While supervised classification is well-understood, to the best of our knowledge, the following questions on \emph{unsupervised clustering} remain open in full generality:

{ \setlist{rightmargin=\leftmargin} \begin{itemize} \item[]\centering\emph{When the labels, mean, and covariance are unknown, is it possible to achieve the Bayes-optimal rate with (near) linear sample complexity?\\ If so, is there a computationally efficient estimator?}
\end{itemize} }
\noindent 
In this paper, we answer the first question in the affirmative and provide a partial answer to the second. Before describing our results in more detail, we first review existing approaches, which broadly fall into two categories: known covariance and unknown covariance.

When the covariance $\bSigma^{\star}$ is known, multiplying the data by $\bSigma^{\star -1/2}$ reduces the problem to the spherical case with $\bSigma^{\star} = \bI_d$. There is a vast literature for this setting, covering the EM algorithm \citep{BWY17,DTZ17,WZh19,DHK20,KCa20}, spectral methods \citep{VWa04,JKW17,Nda18,LZZ19}, tensor decomposition \citep{AGH14}, semi-definite relaxation of $k$-means \citep{CYa212}, among others. 
In these settings, it is known that if $\snr \gg 1$
, $n = \tilde\Omega(d)$ suffices for consistent clustering with error rate $e^{ - \Omega(\snr) }$ \citep{LZZ19}.

When the covariance is unknown, the problem is more complex. Known results either (a) have at least quadratic sample complexities or (b) have error rates depending suboptimally on SNR.
For example, a number of works \citep{BVe08, MVa10, BSi10,GHK15, bakshi2020robustly, bakshi2020outlier} consider general multi-component Gaussian mixtures with unknown covariance matrices, but require sample sizes on the order of $n = \Omega(d^k)$ for large, often unspecified $k$. Likewise, the work \cite{CMZ19} studies the local convergence of the EM algorithm under the condition $n = \tilde{\Omega}(d)$, but the suggested initialization scheme \cite{GHK15} requires sample complexity at least $n = \Omega(d^k)$ for some unspecified $k$. Finally, we also mention \cite{FPB17}, which provides an estimator derived from a convex optimization problem that succeeds when $n = \tilde{\Omega}(d^2)$. We note that this estimator requires the further assumption $\bSigma^\star \bmu^\star = \bm{0}$, which implies $\mathrm{SNR} = \infty$.


Next, we elaborate on (b) as it motivates the core ideas of this work. Instead of $\snr$, a variety of works consider the following alternative signal-to-noise ratio:
\begin{align*}
S =  \| \bmu^{\star} \|_2^2 / \| \bSigma^{\star} \|_2
\end{align*}
In particular, when $S \gg 1$ and $n = \tilde\Omega(d)$, it is known that Lloyd's algorithm \citep{LZh16, CZh21},  semi-definite relaxations of $k$-means \citep{Roy17,MVW17,FCh18,GVe19,CYa21} and spectral algorithms \citep{AFW20} achieve an error rate of $e^{ - \Omega ( S )}$. This rate depends suboptimally on $\snr$. Indeed, we always have $\snr \geq S$. In addition, both quantities may take on vastly different values, even if the clusters are well-separated: for example, when $\bmu^{\star} = (0, 1)^{\top}$ and $\bSigma^{\star} = \diag( 1, 0.01 )$, we have $S = 1$ and $\snr = 100$. Thus in this setting, even though $\snr$ is large, $S$-based algorithms may fail.  

%
%

As a brief numerical illustration, Figure~\ref{fig_PCA} shows experimental results on the Fashion-MNIST dataset \citep{XRV17}, where we randomly select 1000 T-shirts/tops and 1000 pullovers, each of which is a $28\times 28$ grayscale image represented by a vector in $[0,1]^{784}$. We conduct PCA on the centered data and plot the data projected onto the two leading PCs in the left panel of Figure \ref{fig_PCA}. 
On the 2-dimensional data, $k$-means, which requires a large $S$, has a $44.8\%$ error rate, whereas our new method (Algorithm \ref{alg-ppi} initialized by Algorithm \ref{alg:spectral}), which requires a large $\snr$, only incurs a $7.1\%$ error; see the middle and right panels in Figure~\ref{fig_PCA}.
\begin{figure}[t]
	\centering
	\includegraphics[width=1\textwidth]{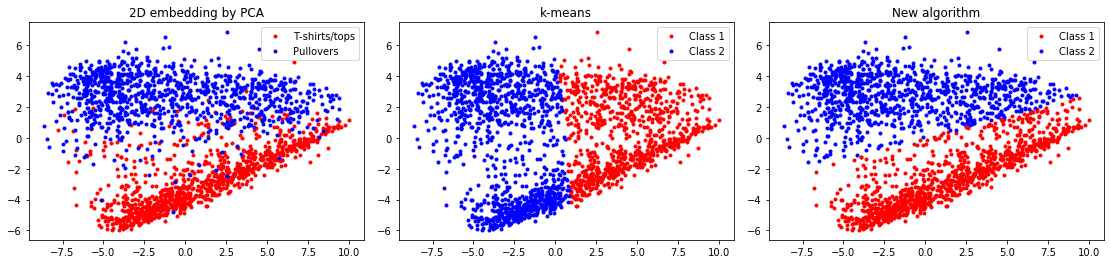}
	\caption{Fashion-MNIST: visualization (left), $k$-means (middle) and a new method (right).}
	\label{fig_PCA}
\end{figure}

A natural question is whether the approaches that succeed when $S \gg 1$ may be adapted to succeed when $S \lesssim  1 \ll \snr$. To gain some intuition, observe the key geometric distinction between the two measures: When $S \gg 1$, the data mostly falls into two well-separated Euclidean balls. In contrast, when $\snr \gg 1$, we may only conclude that some non-singular linear transformation may be similarly separated by Euclidean balls. While one may attempt to estimate this transformation of the data and apply an $S$-based algorithm, this appears to be as difficult as estimating $\bSigma^\star$.  Thus, instead of adapting existing $S$-based algorithms, this work develops an alternative strategy, as follows:
\begin{itemize}
\item[] (\textbf{Statistically optimal algorithm}) We prove that the maximum likelihood estimator (MLE) of the cluster labels solves the following Max-Cut integer program
$$
\max_{\by \in  \{\pm 1\}^n} \by^\top \bH \by,
$$
where $\bH$ is the projection onto the range of $\bX = (\bx_1,\cdots,\bx_n)^{\top}$. It is invariant under non-singular linear transforms of the data.
We show that when $n = \tilde\Omega(d)$, the MLE achieves both the Bayes-optimal error rate and the information threshold for exact recovery.
\item[] (\textbf{Computationally efficient algorithm}) While the MLE achieves the optimal error rate, it does not appear to be efficiently computable. Nevertheless, when $n = \tilde \Omega(d)$ we provide an efficient iterative algorithm that converges to a satisfactory estimator after $O(\log n)$ iterations, given a warm start that agrees with $\{ y_i^{\star} \}_{i=1}^n$ on a constant fraction of samples. We moreover develop a spectral algorithm that yields such initialization when $n = \tilde\Omega(d^2)$. 
\item[] (\textbf{Statistical-computational gap}) Observing the gap between the MLE and the spectral estimator, we conjecture that when $S \lesssim 1 \ll \snr$ and $d \ll n \ll d^2$, no polynomial-time algorithm can perform better than random guessing, although consistent clustering is statistically possible. We provide theoretical and numerical evidence to support this conjecture.
\item[] (\textbf{General mixture models}) Finally, we extend our results to the multi-class setting. Here, we propose a $k$-means algorithm on transformed data and prove it obtains the optimal error rate when $n = \tilde{\Omega}(d)$. These results hold for mixtures of distributions that satisfy a transportation-cost inequality, including Gaussians and strongly log-concave distributions.
\end{itemize}

%

We compare the existing algorithms based on $S \gg 1$ and ours in Table~\ref{table:update_map}.

\begin{table}[t]
\begin{center}
	\begin{tabular}{|c|c|c|c|c|}
		\hline
          \multirow{2}{*}{{\bf Algorithm}}& \multicolumn{2}{c|}{{\bf Sample complexity}}   & \multirow{2}{*}{{\bf Error}} &  \multirow{2}{*}{\vspace{-0.3em}\shortstack{{\bf Computational}\\{\bf complexity} }} \\ \cline{2-3}
          & $\snr \gg 1$  &  $S \gg 1$ & & \\ [0.5ex]
          \hline
		\hline
		\makecell{Existing algorithms \\ \cite{LZh16, CZh21, Roy17,MVW17,FCh18,GVe19,CYa21, AFW20}} & { -} & $n = \tilde \Omega(d)$ & $\exp(-\Omega(S))$& Polynomial  \\
          \hline
          \makecell{Spectral algorithm \\ (Corollary~\ref{cor-spec})} & $n = \tilde \Omega(d^2)$ & $n = \tilde \Omega(d^2)$ & $\exp(-\Omega(\snr))$& Polynomial \\
          \hline
		\makecell{Max-Cut algorithm \\  (Theorem~\ref{thm-warm-main-IP})}  &{ $n = \tilde \Omega(d)$ } & $n = \tilde \Omega(d)$ &$\exp(-\Omega(\snr))$ & Exponential \\
		\hline
	\end{tabular}
      \end{center}
 \caption{Comparison of sample complexity, misclassification rate, and computational complexity under different signal strength assumptions. Since $\snr \geq S$, the bounds in the second column imply those in the third column.}
 \label{table:update_map}
\end{table}

\subsection*{Additional related work} The Max-Cut and $k$-means programs in this paper are closely related to discriminative clustering \citep{YZW07, FPB17}. The spectral algorithm is inspired by independent component analysis \citep{Car89} and is similar to the method in \cite{HSS16} for the ``planted sparse vector'' problem. 
It is also related to the Reweighted PCA algorithm of \cite{tan2018polynomial} for Non-Gaussian Principle Component Analysis \cite{blanchard2006search}. 
Statistical-computational tradeoff in clustering is also studied by \cite{BMV18} 
but their goal is to identify a growing number of spherical Gaussians. 
Our converse results for polynomial-time algorithms use reductions from the ``Boolean Vector in Random Subspace'' problem \citep{GJJ20,MRX20,mao2021optimal}. In our general results for multiple clusters, the transportation-cost inequality is the $T_2$ inequality first studied by \cite{Tal96}. The analysis builds upon the dimension-free concentration and certain continuity properties of $T_2$ distributions. Several provable algorithms handle multi-class mixtures \citep{BVe08,MVa10,BSi10}, but exhibit higher polynomial sample complexity. Another line of research \cite{ASW15, VAr17, CMZ19} considers sparsity assumptions to reduce the sample complexity.

\subsection*{Outline} The rest of the paper is organized as follows. \Cref{sec-warmup} introduces the two-component symmetric Gaussian mixture model and a Max-Cut integer program. \Cref{sec-spectral} studies a two-stage efficient algorithm. \Cref{sec-gap} investigates the gap between sample complexities of the algorithms above. \Cref{sec-gmm-k} analyzes multi-class mixtures of $T_2$ distributions and a $k$-means algorithm. \Cref{sec-t2-optimal} presents optimality guarantees. Finally, \Cref{sec-discussions} concludes the paper and discusses possible future directions.

\subsection*{Notation}
We use the symbol $[n]$ as a shorthand for $\{ 1, 2, \cdots, n \}$ and $| \cdot |$ to denote the absolute value of a real number or cardinality of a set. For real numbers $a$ and $b$, we let $a \wedge b = \min \{ a, b \}$ and $a \vee b = \max \{ a, b \}$. For nonnegative sequences $\{ a_n \}_{n=1}^{\infty}$ and $\{ b_n \}_{n=1}^{\infty}$, we write $a_n \lesssim b_n$ or $a_n = O(b_n)$ or $b_n = \Omega(a_n)$ if there exists a positive constant $C$ such that $a_n \leq C b_n$. In addition, we write $a_n \asymp b_n$ if $a_n \lesssim b_n$ and $b_n \lesssim a_n$; $a_n = o(b_n)$ or $a_n \ll b_n$ or $b_n = \omega(a_n)$ if $a_n = O(c_n b_n)$ for some $c_n \to 0$.
Notations with tildes ($\tilde O$, $\tilde \Omega$, $\tilde o$ and $\tilde \omega$) hide logarithmic factors.
Define $\sgn(x) = 1$ if $x \geq 0$ and $-1$ otherwise. Let $\SSS^{d-1} = \{ \bx \in \RR^d:~ \| \bx \|_2 = 1 \}$.
$\| \bA \|_2 = \sup_{\|\bx\|_2 = 1 } \| \bA \bx \|_2$ denotes the spectral norm and $\| \bA \|_{\mathrm{F}}$ denotes the Frobenius norm. The symbol $\bA^\dagger$ denotes the Moore-Penrose pseudoinverse of a square matrix. We use $\Range(\bA)$ for the the column space of $\bA$. Additionally, $\cP_{d, r}$ denotes the set of all $d\times d$ projection matrices with rank $r$. The symbols
$W_2 (\cdot, \cdot )$ and $D(\cdot \| \cdot )$ refer to the Wasserstein-2 distance and Kullback-Leibler divergence between two probability distributions.
We denote the set of all Borel probability measures over $\RR^d$ by $\mathscr{P}(\RR^d)$.
Define $\| X \|_{\psi_2} = \sup_{p \geq 1} p^{-1/2} \EE^{1/p} |X|^p$ for random variable $X$ and $\|\bX\|_{\psi_2} = \sup_{\| \bu \|_2 = 1} \|\dotp{\bu}{\bX}\|_{\psi_2} $ for random vector $\bX$.
Let $\phi ( \cdot, \bmu , \bSigma )$ be the probability density function of $N( \bmu , \bSigma )$ with $\bmu \in \RR^d$ and $\bSigma \succ 0$.


\label{sec:intro}
\section{Two-component Gaussian mixtures: a Max-Cut program}\label{sec-warmup}
In this section, we introduce an integer program to solve the clustering problem and show that it achieves optimal error when $n = \tilde \Omega(d)$. The section begins with a brief derivation of our Max-Cut integer program formulation; afterwards, we describe invariance properties of and a canonical form for the program; and finally, we state and sketch a proof of our main recovery guarantees.

\subsection{From MLE to Max-Cut}\label{sec-warmup-mle-maxcut}




We now derive the Max-cut integer program. Consider the clustering problem under Model \eqref{eqn-joint-model-intro}. We investigate the likelihood function to deal with nuisance parameters $\bmu^{\star}$ and $\bSigma^{\star}$. Let $\bX = (\bx_1,\cdots,\bx_n)^{\top} \in \RR^{n \times d}$ be the data matrix. If $\by^{\star}$ was observable, we would get the (complete-data) likelihood function
\begin{align}
L (  \bmu, \bSigma ; \bX ,  \by^{\star}) = \prod_{i=1}^{n} \bigg( \frac{1}{2} \phi ( \bx_i, \bmu , \bSigma ) \bigg)^{ ( 1 + y_{i}^{\star} ) / 2 }
 \bigg( \frac{1}{2} \phi ( \bx_i, -\bmu , \bSigma ) \bigg)^{( 1 - y_{i}^{\star}  ) / 2 },
\label{eqn-likelihood}
\end{align}
where $\bmu \in \RR^{d}$ and $\bSigma \in \RR^{d\times d}$. We can easily maximize the $L$ in (\ref{eqn-likelihood}) with respect to $(\bmu , \bSigma )$ to get the maximum likelihood estimates of $( \bmu^{\star}, \bSigma^{\star})$. The following lemma presents their expressions; see \Cref{proof-lem-warmup-MLE} for its proof.

\begin{lemma}\label{lem-warmup-MLE}
Let $\{ \bx_i \}_{i=1}^n$ be i.i.d.~samples generated from Model (\ref{eqn-joint-model-intro}). Fix $\by \in [-1, 1]^n$ and define $(\widehat{\bmu}, \widehat{\bSigma}) = \argmax_{\bmu \in \RR^d, \bSigma \succ 0} L (  \bmu, \bSigma ; \bX ,  \by)$.
With probability 1, we have
\begin{align*}
& \widehat{\bmu}   = \frac{1}{n} \bX^{\top} \by
\qquad\text{and}\qquad
\widehat{\bSigma}
= n^{-1} \bX^{\top}  (
\bI_n - n^{-1} \by \by^{\top}
 ) \bX.
\end{align*}
\end{lemma}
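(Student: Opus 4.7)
The plan is a direct computation: take the log-likelihood, profile out $\bmu$ first by a first-order condition (the log-likelihood is a concave quadratic in $\bmu$ for each fixed $\bSigma \succ 0$), then profile out $\bSigma$ using the standard MLE identity for a Gaussian covariance.

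Concretely, ignoring additive constants, $-2 \log L(\bmu,\bSigma;\bX,\by)$ equals
\[
n \log \det(\bSigma) + \sum_{i=1}^{n}\Big[ \tfrac{1+y_i}{2}(\bx_i-\bmu)^\top \bSigma^{-1}(\bx_i-\bmu) + \tfrac{1-y_i}{2}(\bx_i+\bmu)^\top \bSigma^{-1}(\bx_i+\bmu) \Big].
\]
Using $(1+y_i)/2+(1-y_i)/2=1$ and $(1+y_i)/2-(1-y_i)/2=y_i$, the quadratic terms collapse to $\sum_i \bx_i^\top \bSigma^{-1}\bx_i - 2 y_i \bmu^\top \bSigma^{-1}\bx_i + n\bmu^\top\bSigma^{-1}\bmu$, so the cross term depends on $\by$ only through $\bX^\top \by$. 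Setting the gradient in $\bmu$ to zero yields the normal equation $n\bSigma^{-1}\bmu = \bSigma^{-1}\bX^\top \by$, hence $\widehat{\bmu}=\bX^\top \by/n$ regardless of $\bSigma$.

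Next, I would plug $\widehat{\bmu}$ back and collect terms as a single trace. A short manipulation gives the profiled objective
\[
n \log \det(\bSigma) + \Tr\!\big(\bSigma^{-1} \bS\big), \qquad \bS := \bX^\top(\bI_n - n^{-1}\by\by^\top)\bX.
\]
At this point I invoke the standard identity that, for $\bS \succ 0$, $\bSigma \mapsto n\log\det(\bSigma)+\Tr(\bSigma^{-1}\bS)$ is uniquely minimized over $\bSigma\succ 0$ at $\bSigma=\bS/n$ (a one-line consequence of $\log\det$ concavity, or of $\Tr(\bSigma^{-1}\bS)-n\log\det(\bSigma^{-1}\bS)\geq$ its value at the identity after the change of variables $\bSigma^{-1}\bS \mapsto \bA$). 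This gives $\widehat{\bSigma}=\bS/n$ as claimed.

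The only nontrivial point is verifying that $\bS \succ 0$ almost surely so the unconstrained optimum over $\bSigma \succ 0$ exists and is unique. For any $\by\in[-1,1]^n$ the projector-like matrix $\bI_n - n^{-1}\by\by^\top$ is positive semidefinite with rank at least $n-1$; since Model \eqref{eqn-joint-model-intro} makes $\bX$ have a density on $\RR^{n\times d}$, the random matrix $\bX^\top(\bI_n - n^{-1}\by\by^\top)\bX$ is positive definite almost surely whenever $n\ge d+1$, which covers the regime of interest. The main obstacle is bookkeeping in this last step uniformly over $\by$, but since we only need the statement pointwise in $\by$ and $\bS\succ 0$ on a full-measure set, this is routine.
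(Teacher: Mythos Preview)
Your proof is correct and follows essentially the same direct-computation approach as the paper: expand the log-likelihood, observe it is concave quadratic in $\bmu$ with optimizer $\widehat{\bmu}=n^{-1}\bX^\top\by$ independent of $\bSigma$, then reduce the $\bSigma$-optimization to the standard Gaussian covariance MLE. The only cosmetic difference is that the paper first computes $\argmax_{\bSigma}$ as a function of a generic $\bmu$ and then substitutes $\widehat{\bmu}$, whereas you profile out $\bmu$ first and then invoke the $\log\det$ identity directly; your added remark on $\bS\succ 0$ requiring $n\ge d+1$ is a welcome clarification that the paper leaves implicit.
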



For every class label vector $\by \in \{ \pm 1 \}^n$, we measure its goodness of fit to the data $\bX$ through the function
$\by \mapsto \max_{ \bmu , \bSigma } L (  \bmu, \bSigma ; \bX ,  \by).$
When $\by^{\star}$ is unknown, a candidate estimator $\widehat{\by}$ is therefore the maximizer of the function above: 
\begin{align}
\widehat \by \in \argmax_{\by\in \{\pm 1\}^n } f(\by), \qquad \text{where} \quad f(\by)= \max_{\bmu \in \RR^d, \bSigma \succ 0}  \{ \log L (  \bmu, \bSigma ; \bX ,  \by) \} .
\label{eqn-warmup-nll}
\end{align}
The following lemma shows that Problem~\eqref{eqn-warmup-nll} admits a Max-Cut integer quadratic programming formulation;
we defer the proof to \Cref{proof-lem-warmup-obj}.
\begin{lemma}\label{lem-warmup-obj}
Define $\bH \in \RR^{n \times n}$ to be the orthogonal projection onto $\Range(\bX)$. Then the following holds:
\begin{align*}
\max_{\bmu \in \RR^d, \bSigma \succ 0} \{ \log L (  \bmu, \bSigma ; \bX ,  \by) \}
= -\frac{n}{2} \log (1 - \by^{\top} \bH \by ) + \mathrm{const}.
\end{align*}
As a result, solving Problem (\ref{eqn-warmup-nll}) is equivalent to solving the Max-Cut Problem:
\begin{align}
\max_{\by \in \{ \pm 1 \}^n} \by^{\top} \bH \by.
\label{eqn-warmup-maxcut}
\end{align}
\end{lemma}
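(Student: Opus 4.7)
My plan is to plug the MLE formulas from \Cref{lem-warmup-MLE} into the complete-data log-likelihood, use the first-order condition for $\bSigma$ to collapse the quadratic part to a constant, and then apply the matrix determinant lemma to isolate the $\by$-dependence inside $\det(\widehat\bSigma)$. Monotonicity of $-\log(1-t)$ then converts the resulting optimization into the Max-Cut problem \eqref{eqn-warmup-maxcut}.

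Concretely, since $y_i\in\{\pm 1\}$ makes exactly one of the exponents $(1\pm y_i)/2$ equal to $1$, the log-likelihood simplifies to
\begin{align*}
\log L(\bmu,\bSigma;\bX,\by) = -n\log 2 - \tfrac{n}{2}\log\det\!\bigl((2\pi)^d\bSigma\bigr) - \tfrac{1}{2}\sum_{i=1}^{n}(\bx_i-y_i\bmu)^\top\bSigma^{-1}(\bx_i-y_i\bmu).
\end{align*}
At the MLE pair $(\widehat\bmu,\widehat\bSigma)$ from \Cref{lem-warmup-MLE} one has $n\widehat\bSigma = \sum_i (\bx_i - y_i\widehat\bmu)(\bx_i - y_i\widehat\bmu)^\top$, so the quadratic form telescopes to $\Tr(\widehat\bSigma^{-1}\cdot n\widehat\bSigma) = nd$, a constant. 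Hence $\max_{\bmu,\bSigma\succ 0}\log L = -\tfrac{n}{2}\log\det(\widehat\bSigma) + C_0$ where $C_0$ does not depend on $\by$, and the task reduces to computing $\det(\widehat\bSigma)$.

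Using $\by^\top\by = n$, I rewrite $\widehat\bSigma = n^{-1}\bigl(\bX^\top\bX - n^{-1}(\bX^\top\by)(\bX^\top\by)^\top\bigr)$. Almost surely under Model \eqref{eqn-joint-model-intro}, $\bX^\top\bX$ is invertible whenever $n\geq d$, so the matrix determinant lemma gives
\begin{align*}
\det(\widehat\bSigma) = n^{-d}\det(\bX^\top\bX)\,\bigl(1 - n^{-1}\by^\top\bH\by\bigr),
\end{align*}
since $\bH = \bX(\bX^\top\bX)^{-1}\bX^\top$ is exactly the orthogonal projector onto $\Range(\bX)$. Substituting back establishes the first assertion, with the factor $n^{-d}\det(\bX^\top\bX)$ absorbed into the additive constant (the $n^{-1}$ inside the logarithm is a harmless normalization for what follows). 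Since $t\mapsto -\log(1-t)$ is strictly increasing on $[0,1)$, maximizing $\log L$ over $\by\in\{\pm 1\}^n$ is equivalent to maximizing $\by^\top\bH\by$, which is \eqref{eqn-warmup-maxcut}.

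The one subtlety to verify is that $\widehat\bSigma\succ 0$, so that \Cref{lem-warmup-MLE} genuinely attains the maximum and $\log\det$ is well defined: this follows because $\bI_n - n^{-1}\by\by^\top$ is the rank-$(n-1)$ projector onto $\by^{\perp}$, and almost surely the $d$-dimensional subspace $\Range(\bX^\top)$ intersects $\mathrm{span}(\by)$ trivially when $n>d$, so $\bX^\top(\bI_n - n^{-1}\by\by^\top)\bX\succ 0$. Everything else is a routine determinant calculation rather than a real obstacle.
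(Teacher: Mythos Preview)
Your proof is correct and follows essentially the same route as the paper: both reduce the maximized log-likelihood to $-\tfrac{n}{2}\log\det(\widehat\bSigma)+\mathrm{const}$ and then extract the $\by$-dependence via a rank-one determinant identity (the paper factors through the whitened matrix $\widetilde\bX=\bX(n^{-1}\bX^\top\bX)^{-1/2}$, you apply the matrix determinant lemma directly---these are the same computation). One small slip: in your positivity argument you write $\Range(\bX^\top)$ where you mean $\Range(\bX)\subseteq\RR^n$; and your observation about the missing $n^{-1}$ inside the logarithm is correct---the paper's own derivation also arrives at $1-n^{-1}\by^\top\bH\by$.
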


This lemma formulates the clustering problem as an integer program (\ref{eqn-warmup-maxcut}) with rich interpretations. Recall that the (weighted) Max-Cut problem for an undirected graph with adjacency matrix $\bA \in \RR^{n\times n}$ is given by
\begin{align*}
\max_{S \subseteq [n]} \bigg\{ \sum_{i \in S,~ j \notin S} a_{ij} \bigg\} =
\max_{\by \in \{ \pm 1 \}^n} \bigg\{ \frac{1}{4} \sum_{i, j \in [n]} a_{ij} (1 - y_i y_j) \bigg\}.
\end{align*}
Since the objective function on the right-hand side is $\frac{1}{4} \langle \bA , \bm{1}_n \bm{1}_n^{\top} - \by \by^{\top} \rangle $, an equivalent formulation is
$\max_{\by \in \{ \pm 1 \}^n} \langle -\bA , \by \by^{\top} \rangle$.
Therefore, the clustering problem (\ref{eqn-warmup-maxcut}) is equivalent to the Max-Cut problem with adjacency matrix $\bH$. To see why $\bH$ is a natural candidate for adjacency matrix, we note that it is in fact the Gram matrix of standardized data $\{ \widetilde{\bSigma}^{-1/2} \bx_i \}_{i=1}^n$, where $\widetilde{\bSigma} = \bX^{\top} \bX / n$ denotes the sample covariance matrix.



Although Max-Cut is known to be NP-hard in the worst case, researchers have proposed a number of widely-succesful heuristics. Unfortunately, known heuristics do not appear to give satisfying results for our clustering problem. Indeed, the commonly-used spectral relaxation $\max_{\| \by \|_2^2 = n } \langle \bH , \by \by^{\top} \rangle$ is clearly unsatisfactory, since $\bH$ is a rank-$d$ projection matrix and its leading eigenvector is not unique. Another famous heuristic developed by Goemans-Williamson~\citep{GWi95} solves the semi-definite relaxation:
\begin{align}
\max_{\bY \in \RR^{n \times n} } \langle \bH , \bY \rangle \qquad \text{s.t. }~~ \bY \succeq 0,~~ \diag(\bY) = \bm{1}.
\label{eqn-maxcut-sdp}
\end{align}
We will see in \Cref{sec-gap} that although \eqref{eqn-maxcut-sdp} is more powerful than the spectral relaxation above, it still fails to recovery $\by^\star$ when $n$ is a linear multiple of the dimension $d$.

\subsection{Useful properties of~\eqref{eqn-warmup-maxcut}: invariance and canonical form}\label{sec-warmup-canonical}

Remarkably, the formulation (\ref{eqn-warmup-maxcut}) is invariant under non-degenerate linear transforms of the data. In other words, for any non-singular $\bT \in \RR^{n\times n}$, the original data $\{ \bx_i \}_{i=1}^n$ and the transformed data $\{ \bT \bx_i \}_{i=1}^n$ yield the same predicted labels. Consequently, all of the following data distributions
\begin{equation}\label{eq:invariance}
\frac{1}{2} N(\bT\bmu^{\star}, \bT\bSigma^{\star}\bT^{\top}) + \frac{1}{2} N(-\bT\bmu^{\star}, \bT\bSigma^{\star}\bT^{\top}), \qquad \bT \in \RR^{n\times n} \text{ and } \det ( \bT )\neq 0
\end{equation}
generate the same Max-Cut formulation. This grants us the luxury of choosing $\bT$ arbitrarily to facilitate theoretical analysis of (\ref{eqn-warmup-maxcut}).


The next lemma states the existence of non-singular map $\bT \in \RR^{d\times d}$ such that the transformed data matrix $\bX \bT^{\top} = (\bT\bx_1,\cdots, \bT\bx_n)^{\top} $ has a convenient form. The proof is simple and so we omit it.

\begin{lemma}[Canonical model]
  There exists a non-singular matrix $\bT \in \RR^{d\times d}$ such that
  \begin{align}
   \bX \bT^{\top} = ( \sqrt{ 1 - \sigma^2 } \by^{\star} + \sigma \bg_1, \bg_2,\cdots,\bg_d) \in \RR^{n\times d},
\label{eqn-warmup-canonical}
\end{align}
where $\sigma = 1 / \sqrt{ \mathrm{SNR} + 1 }$ and $\{ \bg_j \}_{j=1}^d$ are i.i.d.~$N(\bm{0},\bI_n)$ vectors that are independent of $\by^{\star}$.
\end{lemma}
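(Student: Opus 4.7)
The plan is to construct $\bT$ by hand starting from the joint-model representation. Conditionally on $\by^{\star}$, we may write $\bX = \by^{\star}\bmu^{\star\top} + \bW(\bSigma^{\star})^{1/2}$, where $\bW \in \RR^{n\times d}$ has i.i.d.~$N(0,1)$ entries and is independent of $\by^{\star}$. Writing $\bT^{\top} = [\bt_1,\ldots,\bt_d]$ and $\bs_j = (\bSigma^{\star})^{1/2}\bt_j$, the $j$-th column of $\bX\bT^{\top}$ equals $(\bmu^{\star\top}\bt_j)\,\by^{\star} + \bW\bs_j$. Matching \eqref{eqn-warmup-canonical} reduces to choosing $\bt_1,\ldots,\bt_d$ so that (a) the signal coefficients $\bmu^{\star\top}\bt_j$ equal $\sqrt{1-\sigma^2}$ for $j=1$ and $0$ for $j\ge 2$, and (b) the vectors $\bs_j$ satisfy $\|\bs_1\|_2^2 = \sigma^2$, $\|\bs_j\|_2^2 = 1$ for $j\ge 2$, and $\bs_j \perp \bs_k$ for $j\ne k$. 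Condition (b) is exactly what is needed, since the general identity $\mathrm{Cov}(\bW\bs_j,\bW\bs_k) = (\bs_j^{\top}\bs_k)\bI_n$ then forces the Gaussian columns $\bW\bs_1,\ldots,\bW\bs_d$ to be jointly $N(0,\bI_n)$ and pairwise independent.

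The first step is to set $\bt_1 = c(\bSigma^{\star})^{-1}\bmu^{\star}$ with $c = 1/\sqrt{\snr(\snr+1)}$. A short calculation yields $\bmu^{\star\top}\bt_1 = c\,\snr = \sqrt{\snr/(\snr+1)} = \sqrt{1-\sigma^2}$ and $\|\bs_1\|_2^2 = c^2\,\snr = 1/(\snr+1) = \sigma^2$, as required. Next, extend $\bs_1/\sigma$ to an orthonormal basis $\{\bs_1/\sigma,\bs_2,\ldots,\bs_d\}$ of $\RR^d$ and define $\bt_j = (\bSigma^{\star})^{-1/2}\bs_j$ for $j\ge 2$. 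Because $\bs_1 = c(\bSigma^{\star})^{-1/2}\bmu^{\star}$, the orthogonality $\bs_j^{\top}\bs_1 = 0$ translates directly into $\bmu^{\star\top}\bt_j = 0$, giving property (a); property (b) is built in by construction.

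Finally, $\bT^{\top} = (\bSigma^{\star})^{-1/2}[\bs_1,\bs_2,\ldots,\bs_d]$ is a product of two non-singular matrices (the $\bs_j$ are linearly independent by construction, and $\bSigma^{\star}\succ 0$), so $\bT$ itself is non-singular. Independence of $\bg_1,\ldots,\bg_d$ follows from joint Gaussianity together with the diagonal covariance structure arranged in (b), and independence from $\by^{\star}$ is inherited from the independence of $\bW$ and $\by^{\star}$. There is essentially no obstacle here: the argument is purely algebraic, which is presumably why the authors chose to omit it, and the only mild care needed is in bookkeeping the constants so that the signal magnitude $\sqrt{1-\sigma^2}$ and noise magnitude $\sigma$ match up consistently with $\sigma^2 = 1/(\snr+1)$.
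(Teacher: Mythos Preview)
Your proof is correct. The paper explicitly omits its own argument (``The proof is simple and so we omit it.''), and your explicit construction---taking $\bt_1$ proportional to $(\bSigma^{\star})^{-1}\bmu^{\star}$ and completing $\bs_1/\sigma$ to an orthonormal basis---is exactly the kind of direct verification the authors had in mind; there is nothing further to compare.
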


We call (\ref{eqn-warmup-canonical}) a canonical form of the data matrix.


%

\subsection{Optimality of the Max-Cut program}

We now turn to our main theoretical guarantees for the Max-Cut formulation~\eqref{eqn-warmup-maxcut}: we prove that the maximizer asymptotically achieves the optimal error rate. We place the proof in \Cref{proof-thm-warm-main-IP}.
\begin{theorem}[Clustering error of the integer program]\label{thm-warm-main-IP}
	Consider Model (\ref{eqn-joint-model-intro}) and assume that $\mathrm{SNR} = \bmu^{\star \top} \bSigma^{\star -1} \bmu^{\star} \to \infty$. Let $\widehat{\by}$ be an optimal solution to the integer program (\ref{eqn-warmup-maxcut}) and define its misclassification proportion
	\begin{align}
	\cR ( \widehat{\by} , \by^{\star} ) = n^{-1} \min_{s = \pm 1} | \{ i \in [n]:~  s \widehat y_i \neq y_i^{\star} \} |.
	\label{eqn-err-rate}
	\end{align}
	When $n \to \infty$ and $n / (d \log n) \to \infty$, we have the following.
	\begin{enumerate}
		\item If $1 \ll \mathrm{SNR} \leq C \log n$ for some constant $C > 0$, then $\EE \cR(\widehat{\by}, \by^{\star}) \leq e^{-\mathrm{SNR} / [2 + o(1)]}$.
		\item If $\mathrm{SNR} \geq (2 + \varepsilon) \log n$ for some constant $\varepsilon > 0$, then $\PP [
		\cR(\widehat{\by}, \by^{\star}) = 0
		] = 1 - o(1)$.
	\end{enumerate}
\end{theorem}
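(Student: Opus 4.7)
The plan is to reduce to the canonical form of Section~2.2, derive a sharp deterministic optimality inequality, control its probability via Gaussian concentration, and conclude with a union bound. By the invariance of the Max-Cut program under non-singular linear transformations, I may assume $\bX = (\sqrt{1-\sigma^2}\by^\star + \sigma\bg_1,\bg_2,\ldots,\bg_d)$ with $\sigma = 1/\sqrt{\snr+1}$ and $\{\bg_j\}_{j=1}^d$ i.i.d.\ $N(\zero,\bI_n)$ independent of $\by^\star$; note $\sigma \to 0$ as $\snr\to\infty$.

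Let $\widehat\by$ be an optimizer, WLOG with $\cR(\widehat\by,\by^\star)\le 1/2$, and let $S=\{i:\widehat y_i\neq y_i^\star\}$ with $k=|S|$. Setting $\bz\in\RR^n$ by $z_i = y_i^\star\,\ind_{i\in S}$ gives $\widehat\by = \by^\star - 2\bz$, so $\widehat\by^\top\bH\widehat\by \ge \by^{\star\top}\bH\by^\star$ expands to $\bz^\top\bH\bz \ge \by^{\star\top}\bH\bz$. The canonical form yields the identity $(\bI-\bH)\by^\star = -\tfrac{\sigma}{\sqrt{1-\sigma^2}}(\bI-\bH)\bg_1$, since $\sqrt{1-\sigma^2}\by^\star + \sigma\bg_1 \in \Range(\bX)$. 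Combining this with $\by^{\star\top}\bz = k$ and $\bz^\top\bH\bz = k - \|(\bI-\bH)\bz\|^2$ reduces the optimality inequality to the compact form
\[
\|(\bI-\bH)\bz\|^2 \;\le\; -\frac{\sigma}{\sqrt{1-\sigma^2}}\,\bg_1^\top(\bI-\bH)\bz.
\]

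Next I would bound the probability of this event for a fixed $S$. Conditioning on $\bG=(\bg_2,\ldots,\bg_d)$, I write $\bH = \bH_G + \bP_{\bW}$, where $\bH_G$ is the projection onto $\Range(\bG)$ (independent of $\bg_1$) and $\bP_{\bW}$ is the rank-one projection onto $\bW := (\bI-\bH_G)\bV_1 = \sqrt{1-\sigma^2}(\bI-\bH_G)\by^\star + \sigma(\bI-\bH_G)\bg_1$. The sample complexity $n \gg d\log n$ ensures via random-matrix bounds that $\|(\bI-\bH_G)\bz\|^2 \ge k(1-o(1))$ uniformly over $|S|=k$. Letting $\bxi := (\bI-\bH_G)\bg_1$ be standard Gaussian on an $(n-d+1)$-dimensional subspace (independent of $\bG$) and expanding the right-hand side in $\bxi$, a quadratic-form calculation gives for fixed $S$ the tail bound $\exp\!\big(-\tfrac{1-\sigma^2}{2\sigma^2}\|(\bI-\bH)\bz\|^2(1-o(1))\big) \le e^{-\snr\, k/(2+o(1))}$. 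A union bound then yields $\PP(|S|=k) \le \binom{n}{k}e^{-\snr k/(2+o(1))}$. For part~(1), $\EE\cR(\widehat\by,\by^\star) = n^{-1}\sum_{k\ge 1} k\,\PP(|S|=k)$ is dominated by the $k=1$ term when $1 \ll \snr \le C\log n$ and $n \gg d\log n$, giving $\EE\cR \le e^{-\snr/(2+o(1))}$. For part~(2), $\snr \ge (2+\varepsilon)\log n$ makes $\binom{n}{k}e^{-\snr k/2} \le e^{-\varepsilon k(\log n)/2}$, whose sum over $k\ge 1$ is $o(1)$.

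The main obstacle is that $\bH$ depends on $\bg_1$, so $\bg_1^\top(\bI-\bH)\bz$ is not linear in $\bg_1$ and a naive Gaussian tail bound fails. The conditioning-on-$\bG$ decomposition addresses this, but obtaining the sharp constant $1/2$ in the exponent (required to match the Bayes-optimal rate) demands careful bookkeeping: one must verify that the rank-one correction $\bP_{\bW}$ changes the effective Gaussian variance by only $(1+o(1))$ and that cross terms contribute negligibly, using $\sigma^2/(1-\sigma^2) = 1/\snr$ and concentration of $\|(\bI-\bH_G)\bg_1\|^2 \approx n-d$. A secondary technicality is uniform-over-$S$ control of $\|(\bI-\bH_G)\bz\|^2$, which succeeds precisely because $n\gg d\log n$.
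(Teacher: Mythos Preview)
Your reduction to the canonical model and the deterministic optimality identity $\|(\bI-\bH)\bz\|^2 \le -\frac{\sigma}{\sqrt{1-\sigma^2}}\,\bg_1^\top(\bI-\bH)\bz$ match the paper's starting point (its Lemma~\ref{lem-det-lower} with $p=2$). The per-$S$ tail heuristic and the conditioning on $\bG$ are reasonable, and your argument for part~(2) is essentially correct. The fatal gap is in part~(1): the union bound over all candidate error sets $S$ is far too loose when $\snr < 2\log n$.

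Concretely, set $q=e^{-\snr/2}$. Your bound $\PP(|S|=k)\le\binom{n}{k}q^k$ gives
\[
\EE\cR \;\le\; n^{-1}\sum_{k\ge 1} k\binom{n}{k}q^k \;=\; q\,(1+q)^{n-1}.
\]
For this to be $O(q)=e^{-\snr/2}$ you need $(1+q)^{n-1}=O(1)$, i.e.\ $nq=O(1)$, i.e.\ $\snr\ge 2\log n+O(1)$. In the regime $1\ll\snr\le (2-\delta)\log n$ the right-hand side blows up like $\exp(n^{\delta/2})$, so the claim that the sum is ``dominated by the $k=1$ term'' is false: already the $k=2$ term $n^{-1}\cdot 2\binom{n}{2}q^2\asymp nq^2$ exceeds the $k=1$ term $q$ precisely when $nq>1$. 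The loss comes from the fact that the events ``$S$ is a feasible error set'' are highly correlated across overlapping $S$'s, and a union bound ignores this.

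The paper's remedy is to avoid enumerating over $S$ altogether. From the same deterministic identity it derives, via H\"older with exponent $p$, the pointwise bound
\[
|\cM| \;\le\; \bigg(\frac{\|(\bI-\bH)\bg_1\|_p}{(1-\delta_n)\sqrt{\snr}}\bigg)^p
\]
once it establishes $\|\bH(\widetilde\by-\by^\star)\|_2^2\le\delta_n\|\widetilde\by-\by^\star\|_2^2$ for the optimizer (this is the contraction step, proved via localization and Gordon's escape theorem; your uniform control of $\|(\bI-\bH_G)\bz\|^2$ would need the same machinery). Taking expectations and using $\EE\|(\bI-\bH)\bg_1\|_p^p\lesssim n(p/e)^{p/2}$, then optimizing $p=\snr$, yields $\EE|\cM|\lesssim n\,e^{-\snr/2}$ directly. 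This $\ell_p$ trick with $p$ adapted to the signal strength is the missing idea; your $\ell_2$-plus-union-bound route cannot reach part~(1).
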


To place our result in context, recall the Bayes-optimal error rate $1 - \Phi (\sqrt{\snr})$ \cite{Fis36}. Now, since $\Phi(x) = 1 - e^{-x^2 / [2 + o(1)]}$ as $x \to +\infty$, the error rate of any estimator for $\by^{\star}$ must be at least $e^{-\mathrm{SNR} / [2 + o(1)]}$ even if both $\bmu^{\star}$ and $\bSigma^{\star}$ are known. Consequently, \Cref{thm-warm-main-IP} shows that the solution to the integer program (\ref{eqn-warmup-maxcut}) achieves the optimal rate. On the other hand, \cite{Nda18} proved that when $\bSigma^{\star} = \bI_d$ and $n / d \to \infty$, exact recovery of $\by^{\star}$ with high probability is not possible if $\| \bmu^{\star} \|_2 < (2 - \varepsilon) \log n$ holds for a constant $\varepsilon \in (0, 2)$ as $n \to \infty$. Therefore, for Model (\ref{eqn-joint-model-intro}), exact recovery of $\by^{\star}$ is not possible when $\mathrm{SNR} < (2 - \varepsilon) \log n$. Altogether, \Cref{thm-warm-main-IP} states that the integer program (\ref{eqn-warmup-maxcut}) achieves the information threshold for exact recovery.
The sample size requirement $n / (d \log n) \to \infty$ is optimal up to a logarithmic factor, as $n = \Omega(d)$ is clearly necessary. 
Moreover, no prior knowledge of $\bmu^{\star}$ or $\bSigma^{\star}$ is needed.

We now provide a proof sketch of Theorem~\ref{thm-warm-main-IP} and defer the details to \Cref{proof-thm-warm-main-IP}.
First denote the alignment of $\hat \by$ with $\by^\star$ by $\widetilde{\by} := \widehat{\by} \sgn( \langle \widehat{\by}, \by^{\star} \rangle )$ and let $\cM_{\tilde \by} := \{ i \in [n]:~ \widetilde y_i \neq y_i^{\star} \}$ denote the samples misclassified by $\widetilde{\by}$, where we adopt the convention $\sgn(0) = 1$. Then $\widetilde \by$ is an optimal solution to the integer program (\ref{eqn-warmup-maxcut}) aligned with $\by^{\star}$, and $\cR ( \widehat{\by} , \by^{\star} ) = | \cM_{\tilde \by} | / n$. Thanks to the analysis in Section \ref{sec-warmup-canonical}, it suffices to focus on the canonical model (\ref{eqn-warmup-canonical}). In what follows, we use $\bz$ to denote the Gaussian vector $\bg_1$ therein. Given these conventions, the following lemma provides a lower bound for the optimality gap of~\eqref{eqn-warmup-maxcut} for any vector $\by \in \{\pm1\}^n$. The proof appears in Appendix~\ref{proof-lem-det-lower}.

\begin{lemma}[Deterministic optimality gap]\label{lem-det-lower-fake}
For all $\by \in \{ \pm 1 \}^n$, define the set of samples misclassified by $\by$ as $\cM_{\by} := \{ i \in [n]:~ y_i \neq y_i^{\star} \}$. 
	Then if $\cM_{\by} \neq \varnothing$, we have
	\begin{align*}
	&\by^{\star \top} \bH \by^{\star} - \by^{\top} \bH \by  \geq 4 |\cM_{\by}| \bigg(
	1 - \frac{ \| \bH (\by - \by^{\star}) \|_{2}^2 }{ \| \by - \by^{\star} \|_{2}^2 }
	-
	\frac{ \| (\bI - \bH) \bz \|_p }{ |\cM_{\by}|^{1/p} \sqrt{\mathrm{SNR}}}
	\bigg)
	, \qquad  1 \leq p \leq \infty.
	\end{align*}
	Here we define $x^{1/\infty} = 1$ for $x > 0$.
\end{lemma}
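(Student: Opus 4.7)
The plan is to pass to the canonical data model of \Cref{sec-warmup-canonical} and then expand the optimality gap algebraically, using only one geometric fact: the first column $\bv_1 = \sqrt{1-\sigma^2}\,\by^\star + \sigma \bz$ of the canonical data matrix lies in $\Range(\bX)$, so $(\bI - \bH)\bv_1 = \bm{0}$. By the invariance noted after equation~\eqref{eq:invariance}, $\bH$ is unchanged under the non-singular right-multiplication that defines the canonical form, so I may assume this form without loss of generality.

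First I would use $\|\by\|_2^2 = \|\by^\star\|_2^2 = n$ to rewrite the gap in terms of the complementary projection, and then expand around $\by^\star$ with $\bd := \by - \by^\star$:
\begin{equation*}
\by^{\star\top}\bH\by^\star - \by^\top\bH\by \;=\; \by^\top(\bI-\bH)\by - \by^{\star\top}(\bI-\bH)\by^\star \;=\; 2\,\by^{\star\top}(\bI-\bH)\bd + \bd^\top(\bI-\bH)\bd.
\end{equation*}
The vector $\bd$ has exactly $|\cM_\by|$ nonzero coordinates, each of absolute value $2$, so $\|\bd\|_2^2 = 4|\cM_\by|$ and the quadratic term is
\begin{equation*}
\bd^\top(\bI-\bH)\bd \;=\; \|\bd\|_2^2 - \|\bH\bd\|_2^2 \;=\; 4|\cM_\by|\,\bigl(1 - \|\bH(\by-\by^\star)\|_2^2/\|\by-\by^\star\|_2^2\bigr).
\end{equation*}

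For the linear term, rearranging $(\bI-\bH)\bv_1 = \bm 0$ and using $\sigma/\sqrt{1-\sigma^2} = 1/\sqrt{\snr}$ (which follows from $\sigma^2 = 1/(\snr+1)$) gives the key identity
\begin{equation*}
(\bI - \bH)\by^\star \;=\; -\frac{1}{\sqrt{\snr}}\,(\bI-\bH)\bz,
\end{equation*}
so that $2\,\by^{\star\top}(\bI-\bH)\bd = -\tfrac{2}{\sqrt{\snr}}\,\langle(\bI-\bH)\bz,\,\bd\rangle$. For any conjugate pair $p,q \in [1,\infty]$, Hölder's inequality gives
\begin{equation*}
|\langle(\bI-\bH)\bz,\,\bd\rangle| \;\leq\; \|(\bI-\bH)\bz\|_p\,\|\bd\|_q \;=\; 2\,|\cM_\by|^{1/q}\,\|(\bI-\bH)\bz\|_p,
\end{equation*}
with the convention $|\cM_\by|^{1/\infty}=1$. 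Substituting $1/q = 1 - 1/p$ rewrites $|\cM_\by|^{1/q} = |\cM_\by|/|\cM_\by|^{1/p}$, and combining with the quadratic term and factoring $4|\cM_\by|$ yields the advertised inequality. There is no genuine obstacle here: the argument is bookkeeping once one exploits invariance to reach the canonical form and uses the single identity $(\bI-\bH)\bv_1 = \bm 0$, which converts the unknown-signal quantity $\by^\star$ into the tractable Gaussian quantity $\bz$.
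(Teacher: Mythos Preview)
Your proof is correct and follows essentially the same approach as the paper: both reduce to the canonical model, use the identity $(\bI-\bH)(\sqrt{1-\sigma^2}\,\by^\star+\sigma\bz)=\bm 0$ to replace $(\bI-\bH)\by^\star$ by $-\tfrac{1}{\sqrt{\snr}}(\bI-\bH)\bz$, arrive at the exact identity $\by^{\star\top}\bH\by^\star-\by^\top\bH\by=\|(\bI-\bH)(\by-\by^\star)\|_2^2-\tfrac{2}{\sqrt{\snr}}\langle \by-\by^\star,(\bI-\bH)\bz\rangle$, and then apply H\"older. Your expansion via $\by=\by^\star+\bd$ is in fact a bit more direct than the paper's route through $\|(\bI-\bH)(\by+\sigma'\bz)\|_2^2$, but the ingredients and destination are identical.
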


Based on the above lemma, we will provide a bound on the misclassification rate $\cR ( \widehat{\by} , \by^{\star} )$. To that end, let $p \in (1, +\infty)$ be a quantity to be determined and denote $\cM := \cM_{\widetilde{\by}}$. Now observe that optimality of $\widetilde\by$ forces $\by^{\star \top} \bH \by^{\star} - \widetilde\by^{\top} \bH \widetilde\by \leq 0$. Thus, either $|\cM| = 0$ or
\[
1 - \frac{ \| \bH (\widetilde\by - \by^{\star}) \|_{2}^2 }{ \| \widetilde\by - \by^{\star} \|_{2}^2 }
\leq  \frac{1}{\sqrt{\mathrm{SNR}}} \bigg(
\frac{ \| (\bI - \bH) \bz \|_p^p }{ |\cM| }
\bigg)^{1/p}.
\]
By rearranging terms, we get
\begin{align*}
|\cM| \leq \bigg( \frac{ \| (\bI - \bH) \bz \|_p / \sqrt{\mathrm{SNR}} }{
	1 - \| \bH (\widetilde\by - \by^{\star}) \|_{2}^2 / \| \widetilde\by - \by^{\star} \|_{2}^2  }
\bigg)^p.
\end{align*}
We now upper bound the numerator and lower bound the denominator of this expression.

To that end, recall $\bH$ is the projection matrix associated with a random $d$-dimensional subspace in $\RR^n$, where $d$ is much smaller than $n$. In particular, if a random vector $\bv \in \RR^n$ is independent or weakly dependent on $\bH$, then we expect the projection $\bH \bv$ to be negligible compared to $\bv$ itself. Consequently, we expect
$$
\| (\bI - \bH) \bz \|_p \approx \| \bz \|_p \qquad \text{ and } \qquad \| \bH (\widetilde\by - \by^{\star}) \|_{2}^2 \ll \| \widetilde\by - \by^{\star} \|_{2}^2.$$
Plugging these inequalities into the bound on $|\cM|$, we find
\[
\EE |\cM| \lesssim \mathrm{SNR}^{-p/2} \EE \| \bz \|_p^p = \mathrm{SNR}^{-p/2} n \EE |Z|^p,
\]
where $Z \sim N(0, 1)$. For $p \geq 1$, we have $\EE |Z|^p \leq \sqrt{2} (p / e)^{p/2}$ (see Lemma \ref{lem-gaussian-moments}). Then
\[
\EE \cR ( \widehat{\by} , \by^{\star} ) = \EE |\cM| / n
\lesssim \bigg( \frac{p}{e  \mathrm{SNR} } \bigg)^{p/2} = q^{q  e  \mathrm{SNR}/ 2}
= \exp \bigg (
\frac{e  \mathrm{SNR}}{2}
q \log q
\bigg).
\]
where the equality follows from the substitution $q = p / ( e  \mathrm{SNR} )$. Minimizing the above expression in $q$, we find $q = 1/e$ and consequently $p = \mathrm{SNR}$, yielding the corresponding bound $\EE \cR ( \widehat{\by} , \by^{\star} ) \lesssim e^{-\mathrm{SNR}/2}$.
Furthermore, If $\mathrm{SNR}  = (2 + \varepsilon) \log n$ for some constant $\varepsilon > 0$, then we have the bound $\EE \cR ( \widehat{\by} , \by^{\star} ) \lesssim n^{-1 - \varepsilon/2}$. Thus,
\[
\PP [ \cR ( \widehat{\by} , \by^{\star} ) = 0 ]  = 1 - \PP [ \cR ( \widehat{\by} , \by^{\star} ) \geq 1/n ] \geq 1 - \frac{\EE \cR ( \widehat{\by} , \by^{\star} )}{1/n} = 1 - o(1).
\]
Consequently, with high probability, the classification error of $\widehat{\by}$ is $0$. Therefore, $\pm \by^{\star}$ are the only optimal solutions to (\ref{eqn-warmup-maxcut}).

The $\ell_p$ analysis above with $p$ adaptive to the signal strength is crucial for obtaining sharp error bounds. See \cite{AFW20} for more examples.

\label{sec:ip} 

\section{A two-stage efficient algorithm}\label{sec-spectral}

In this section, we develop a two-stage algorithm for producing an estimate of $\by^\star$. Inspired by the Max-Cut formulation, we first demonstrate that a variant of the projected power iteration converges to a satisfactory estimate in roughly $\log n$ iterations, provided that $n = \tilde\Omega(d)$ and our initial guess agrees with the ground truth on a constant fraction of samples. Then we develop a spectral method that provides such an initial guess whenever $n = \tilde \Omega(d^2)$. Applied in succession, these algorithms provide a satisfactory estimate of $\by^\star$ when $n = \tilde \Omega(d^2)$, in $\tilde O(d^2 n)$ arithmetic operations.


\subsection{Projected power iteration}

The Max-Cut integer program $\max_{\by\in \{ \pm 1 \}^n } \by^{\top} \bA \by$ with a real symmetric matrix $\bA \in \RR^{n\times n}$ looks similar to Rayleigh quotient maximization $\max_{\bu \in \SSS^{n-1}} \bu^{\top} \bA \bu$. The latter can be efficiently solved by the power iteration
\[
\bu^{t+1} = \bA \bu^t / \| \bA \bu^t \|_2
\]
under general conditions. Motivated by this similarity, we propose Algorithm \ref{alg-ppi}, a natural variant of the power iteration (Algorithm \ref{alg-ppi}) adapted to Max-Cut (\ref{eqn-warmup-maxcut}). The following theorem shows that with proper initialization, Algorithm \ref{alg-ppi} requires just $4 \lceil \log_2 n \rceil + 4$ iterations to find a classifier with optimal error rate.
The proof appears in \Cref{proof-thm-ppi}.

\begin{algorithm}[t]
{\bf Input} data matrix $\bX \in \RR^{n\times d}$, initial guess $\by^0 \in \{ \pm 1 \}^n$. 
	\\
	{\bf Compute} $\bH = \bX (\bX^{\top} \bX)^{-1} \bX^{\top}$ and set $T = 4 \lceil \log_2 n \rceil + 4$.
	\\
	{\bf For $t = 0,1,\ldots, T - 1$}\\
	\hspace*{.5cm} $\by^{t+1} = \sgn ( \bH \by^t )$ \qquad \textit{// applied in an entry-wise manner}\\
	{\bf Return} $\widehat{\by}^{\mathrm{PPI}} = \by^{T}$. \\
	\caption{Projected power iteration}
	\label{alg-ppi}
\end{algorithm}

\begin{theorem}[Local convergence]\label{thm-ppi}
	Consider Model (\ref{eqn-joint-model-intro}) with $n / (d \log n) \to \infty$ as $n \to \infty$. Let $\by^0 \in \{ \pm 1 \}^n$ be the initial guess of $\by^{\star}$, which is possibly random, and $\widehat{\by}^{\mathrm{PPI}}$ be the output of Algorithm \ref{alg-ppi}. Then the following hold:
	\begin{enumerate}
		\item If $1 \ll \mathrm{SNR} \leq C \log n$ for some constant $C$, then there exists a constant $c > 0$ such that
		\[
		\EE \cR ( \widehat{\by}^{\mathrm{PPI}}, \by^{\star} ) \leq \PP \Big(
		\cR(\by^0 , \by^{\star}) > c
		\Big) + e^{-\mathrm{SNR} / [ 2 + o(1) ]} .
		\]
		\item If $\mathrm{SNR} \geq (2 + \varepsilon) \log n$ for some constant $\varepsilon$, then there exists a constant $c > 0$ such that
		\[
		\PP  ( \widehat{\by}^{\mathrm{PPI}} \neq \by^{\star} ) \leq \PP \Big(
		\cR(\by^0 , \by^{\star}) > c
		\Big) + o(1).
		\]
	\end{enumerate}
\end{theorem}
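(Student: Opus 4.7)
My plan is to work in the canonical model~(\ref{eqn-warmup-canonical}), express a single step of Algorithm~\ref{alg-ppi} coordinate-wise, establish a one-step contraction of the misclassification count, and iterate this $T = 4\lceil\log_2 n\rceil + 4$ times. Since $\sgn(\bH(-\by)) = -\sgn(\bH \by)$, the iteration descends to unordered partitions, so I may assume that every iterate is sign-aligned with $\by^\star$ and track $\cM_{\by^t} = \{i \in [n] : y_i^t \neq y_i^\star\}$ directly.

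In the canonical coordinates, $\bv_1 := \sqrt{1 - \sigma^2}\, \by^\star + \sigma \bg_1 \in \Range(\bX)$ with $\sigma = 1/\sqrt{\snr+1}$, so $(\bI - \bH)\bv_1 = 0$ yields the identity
\begin{align*}
\bH \by^\star - \by^\star = -(\bI - \bH)\by^\star = \tfrac{1}{\sqrt{\snr}} (\bI - \bH)\bg_1.
\end{align*}
For an aligned $\by = \by^\star - 2\be$ with $\be_i = y_i^\star \mathbf{1}\{i \in \cM_\by\}$, coordinate $i$ is misclassified by $\sgn(\bH \by)$ iff
\begin{align*}
2 y_i^\star (\bH \be)_i + y_i^\star\bigl((\bI - \bH)\by^\star\bigr)_i \geq 1.
\end{align*}

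The heart of the argument is a one-step contraction: on a high-probability event $\cE$ depending only on $\bX$, I would show that there is a constant $\rho \in (0,1)$ for which, uniformly over aligned $\by$ with $|\cM_\by| \leq c n$ (for a small absolute constant $c$),
\begin{align*}
\bigl|\cM_{\sgn(\bH \by)}\bigr| \leq \rho\, |\cM_\by| + N_\star,
\qquad
N_\star := \bigl|\{i : y_i^\star((\bI - \bH)\by^\star)_i \geq 1/4\}\bigr|.
\end{align*}
The contraction comes from writing $\bH = \bU \bU^\top$ with orthonormal $\bU \in \RR^{n\times d}$ and observing that one column of $\bU$ is nearly aligned with $\by^\star/\sqrt n$ (by construction of $\bv_1$), so $\bH \be$ splits as a ``signal'' piece proportional to $(|\cM_\by|/n)\by^\star$ plus a ``residual'' governed by the action of a $(d-1)$-dimensional random subspace on $\be$. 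A Bernstein-plus-net argument over sparse supports of size at most $cn$ then shows that this residual exceeds $1/4$ on at most $\rho\,|\cM_\by|$ indices. Iterating $T$ times annihilates any initial error of size at most $cn$, leaving $|\cM_{\by^T}| \leq N_\star/(1-\rho)$.

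Finally, I would bound $\EE N_\star$ by the $\ell_p$/Markov argument used in the Theorem~\ref{thm-warm-main-IP} sketch. Using $(\bI-\bH)\by^\star = -(1/\sqrt{\snr})(\bI-\bH)\bg_1$ together with $\EE|Z|^p \leq \sqrt{2}(p/e)^{p/2}$ for $Z \sim N(0,1)$, the choice $p = \snr$ gives $\EE N_\star/n \lesssim e^{-\snr/[2 + o(1)]}$, which yields part~1. For part~2, $\snr \geq (2+\varepsilon)\log n$ forces $\EE N_\star = o(1)$, so Markov gives $N_\star = 0$ with probability $1 - o(1)$; combined with the contraction, this yields exact recovery (after sign alignment) with high probability. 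The hardest step will be the uniform sparse concentration inside the contraction: a naive $\binom{n}{cn}$ union bound is wasteful, so I would instead control a high moment (matching the eventual $p = \snr$) by decoupling the indicator sums over candidate supports from the random orthonormal basis of $\Range(\bH)$ and chaining; the hypothesis $n = \tilde\Omega(d\log n)$ is precisely what tames the resulting combinatorial factors.
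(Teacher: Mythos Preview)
Your overall architecture---canonical coordinates, the identity $\bH\by^\star-\by^\star=(1/\sqrt{\snr})(\bI-\bH)\bg_1$, a one-step contraction $|\cM_{t+1}|\le \rho\,|\cM_t|+N_\star$ from a uniform bound on $\|\bH(\by-\by^\star)\|_2$ over sparse differences, iterate $O(\log n)$ times, then bound the residual set by the $\ell_p$/Markov argument---is exactly the skeleton of the paper's proof (Lemmas~\ref{lem-maxcut-local} and~\ref{cor-maxcut}). The uniform contraction you describe is precisely Lemma~\ref{lem-H-contraction-local}, and a plain union bound over $\binom{n}{m}$ supports is in fact what is used there; no moment/chaining refinement is needed.

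There is, however, a real gap: your threshold $1/4$ in $N_\star$ is \emph{fixed}, and that is incompatible with the claimed rate $e^{-\snr/[2+o(1)]}$. The Markov bound you invoke reads
\[
N_\star \;\le\; |\{i:\,|((\bI-\bH)\by^\star)_i|\ge 1/4\}|\;\le\; 4^{\,p}\,\snr^{-p/2}\,\|(\bI-\bH)\bg_1\|_p^p,
\]
so $\EE N_\star/n \lesssim 4^{p}(p/(e\,\snr))^{p/2}$. At $p=\snr$ this is $4^{\snr}e^{-\snr/2}=e^{\snr(2\log 2-1/2)}\to\infty$; optimizing over $p$ gives only $e^{-\snr/32}$, not $e^{-\snr/[2+o(1)]}$. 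The same issue breaks part~2: $n\,e^{-\snr/32}=o(1)$ requires $\varepsilon>30$, not arbitrary $\varepsilon>0$. In the Theorem~\ref{thm-warm-main-IP} sketch the threshold was effectively $1$ (because there one controls the \emph{actual} misclassification set of the optimum), which is why no such factor appeared.

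The fix---and this is the one nontrivial idea your outline misses---is to let the threshold vanish. In the paper's parametrization $S_\delta=|\{i:(\bH\by^\star)_iy_i^\star<\delta\}|$ the $\ell_p$ bound gives $\EE S_\delta/n\le[(1+o(1))/(1-\delta)]^{\snr}e^{-\snr/2}$, so one needs $\delta\to 0$. But smaller $\delta$ forces a stronger contraction $\|\bH(\by-\by^\star)\|_2^2\le (\delta^2/8)\|\by-\by^\star\|_2^2$, which by Lemma~\ref{lem-H-contraction-local} only holds on a ball of radius $C_\delta\sqrt n$ with $C_\delta\to 0$. The paper resolves this with a \emph{two-scale} argument: first run the contraction with $\delta=1/2$ (constant basin $C_{1/2}$) for $\sim 2\log_2 n$ steps to reach $|\cM|\le 2S_{1/2}$, which is $o(n)$; this places the iterate inside the $C_\delta\sqrt n$ ball for any fixed $\delta$, and then a second phase with $\delta=\delta_n\to 0$ contracts to $|\cM|\le 2S_{\delta_n}$, yielding the sharp exponent. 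Your single-scale scheme with a fixed threshold cannot reproduce this.
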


We note in passing that Algorithm~\ref{alg-ppi} is closely related to the EM algorithm~\citep{DLR77}. Indeed, for Model (\ref{eqn-joint-model-intro}), the EM algorithm iterates
\begin{align}
\by^{t+1} &
= \tanh \bigg(
\frac{ \bH \by^{t} }{
	1 -  \langle \by^{t} , \bH \by^t  \rangle / n
}
\bigg)  \in [-1, 1]^{n} ,
\label{eqn-em}
\end{align}
where $\tanh$ is applied in an entrywise manner; see \Cref{sec-eqn-em-proof}. Hence, Algorithm \ref{alg-ppi} is a variant of EM with hard label assignments.

\subsection{A spectral algorithm}\label{sec-spectral-alg}

Algorithm \ref{alg-ppi} only guarantees quality output when initialized near the $\by^\star$. In this section, we develop an efficient procedure that yields such a warm start. 

To that end, we first transform the data distribution in Model (\ref{eqn-joint-model-intro}) to a more useful form. Define the inverse square root of the mixture covariance by $\bT = (\bmu^{\star}\bmu^{\star\top} + \bSigma^{\star})^{-1/2}$, and define the transformed mean $\bnu^{\star} = \bT\bmu^{\star}$. Then we have $\bT\bSigma^{\star}\bT^{\top} = \bI_d - \bnu^{\star}\bnu^{\star\top}$ and $\| \bnu^{\star} \|_2 < 1$. Thus, multiplying the samples by $\bT$ turns the data distribution in \eqref{eqn-joint-model-intro} to
\begin{align}
\frac{1}{2} N(\bnu^{\star}, \bI_d - \bnu^{\star}\bnu^{\star\top} ) + \frac{1}{2} N(-\bnu^{\star}, \bI_d - \bnu^{\star}\bnu^{\star\top}),
\label{eqn-warmup-canonical-0}
\end{align}
which only has one unknown vector $\bnu^{\star} \in \RR^d$. In contrast, Model \eqref{eqn-joint-model-intro} has one unknown vector $\bmu^{\star} \in \RR^{d}$ plus one unknown matrix $\bSigma^{\star} \in \RR^{d\times d}$.
Since the sample covariance matrix $\widetilde\bSigma = \bX^{\top} \bX / n$ approximates $\bmu^{\star}\bmu^{\star\top} + \bSigma^{\star}$, the whitened data $\{ \widetilde{\bSigma}^{-1/2} \bx_i \}_{i=1}^n$ are approximately i.i.d.~samples from Model \eqref{eqn-warmup-canonical-0}.
An estimate $\hat \bnu$ of $\bnu^\star$ immediately yields an estimate $\sgn( \langle \hat \bnu , \widetilde{\bSigma} \bx_i \rangle  )$ of $y^{\star}_i$.

In what follows, we will focus on Model \eqref{eqn-warmup-canonical-0}, derive an algorithm for estimating $\bnu^\star$, and then extend it to the general case \eqref{eqn-joint-model-intro}.
Note that one cannot even distinguish the mixture distribution \eqref{eqn-warmup-canonical-0} from $N(\bm{0} , \bI_d)$ using the first- and second-order moments. To estimate $\bnu^\star$, 
%
we develop a spectral algorithm inspired by the Fourth Order Blind Identification (FOBI) algorithm \citep{Car89} from Independent Component Analysis. The key insight of FOBI is that eigenvectors of a weighted covariance matrix $\EE ( \| \bx \|_2^2 \bx \bx^{\top} )$ reveal meaningful structures. To that end, define an auxiliary matrix $$\widehat{\bS} = \frac{1}{n} \sum_{i = 1}^n (  \| \bx_i \|_2^2 - d ) \bx_i \bx_i^{\top}.$$ The following lemma shows that $\bnu^{\star}$ is an eigenvector of $\EE \widehat{\bS}$ associated to its smallest eigenvalue, and $\widehat{\bS}$ is close to $\EE \widehat{\bS}$ when $n = \tilde{\Omega}(d^2)$.
See \Cref{sec-lem-spec-kurtosis-proof} for its proof.

\begin{lemma}[Matrix concentration]\label{lem-spec-kurtosis}
Suppose that $\{ \bx_i \}_{i=1}^n$ are i.i.d.~from Model (\ref{eqn-warmup-canonical-0}) with $\| \bnu^{\star} \|_2 \in (0, 1)$. We have
	\[
	\EE \widehat\bS   = 2 \bI_d - 2 (1 - \sigma^2)^2 \frac{ \bnu^{\star} \bnu^{\star \top} }{ \| \bnu^{\star} \|_2^2 } .
	\]
	Furthermore, if $n \gtrsim d^2 \log^3 n$, then for any constant $C_1 > 0$ there exists a constant $C_2 > 0$ such that
	\begin{align*}
	\PP \bigg(
	\| \widehat\bS    - \EE \widehat\bS    \|_2 < C_2
	\frac{d \log^{3/2} n }{ \sqrt{n} } \bigg)
	\geq 1 - n^{-C_1}.
	\end{align*}
\end{lemma}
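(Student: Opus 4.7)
The proof has two parts: the expectation formula via Gaussian moment computations, and the concentration bound via a truncated matrix Bernstein argument.

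For the expectation formula, I would write $\bx = y\bnu^\star + \bz$ with $y$ Rademacher and $\bz \sim N(\bm 0, \bI - \bnu^\star\bnu^{\star\top})$ independent. The law of $\bx$ is symmetric under $\bx \mapsto -\bx$ and under orthogonal maps fixing $\bnu^\star$, so $\EE\widehat\bS$ shares these symmetries and must take the form $\alpha \bI_d + \beta\, \bnu^\star\bnu^{\star\top}/\|\bnu^\star\|_2^2$ for scalars $\alpha,\beta$ depending only on $\sigma$ and $d$. To identify the scalars, decompose $\bx = a\,\bnu^\star/\|\bnu^\star\|_2 + \bw$ with $a = y\|\bnu^\star\|_2 + \sigma\xi$, $\xi \sim N(0,1)$, and $\bw \sim N(\bm 0, \bI_{d-1})$ on the orthogonal complement (using $\|\bnu^\star\|_2^2 = 1 - \sigma^2$, as follows from the construction of $\bnu^\star$). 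Evaluating $\bu^\top \EE \widehat\bS\,\bu$ for $\bu$ along and perpendicular to $\bnu^\star$ reduces to elementary fourth-order Gaussian moments and yields $\alpha = 2$ and $\beta = -2(1-\sigma^2)^2$.

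For the concentration bound, apply matrix Bernstein to $n(\widehat\bS - \EE\widehat\bS) = \sum_i (\bM_i - \EE\bM_i)$ with $\bM_i = (\|\bx_i\|_2^2 - d)\bx_i\bx_i^\top$. Since $\|\bM_i\|_2$ is unbounded, first truncate on the event $E_i = \{\|\bx_i\|_2^2 \leq 2d + C\log n\}$, which by Hanson--Wright satisfies $\PP(E_i^c) \leq n^{-C_1 - 1}$ for $C$ large. On $\cE := \bigcap_i E_i$ (which holds with probability at least $1 - n^{-C_1}$), $\widehat\bS$ agrees with the truncated average $\widehat\bS' := n^{-1}\sum_i \bM_i \mathbf{1}_{E_i}$, and the summands satisfy the uniform bound $R := \max_i \|\bM_i \mathbf{1}_{E_i}\|_2 = O(d^2 + \log^2 n)$. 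For the variance proxy, Cauchy--Schwarz gives, for any unit $\bu$,
\[
\bu^\top \EE \bM_i^2 \bu = \EE\!\left[(\|\bx\|_2^2-d)^2\|\bx\|_2^2 (\bu^\top\bx)^2\right] \leq \sqrt{\EE\!\left[(\|\bx\|_2^2-d)^4\|\bx\|_2^4\right]}\cdot \sqrt{\EE(\bu^\top\bx)^4},
\]
and Gaussian moment bounds ($\EE(\|\bx\|_2^2-d)^4 = O(d^2)$, $\EE\|\bx\|_2^4 = O(d^2)$, $\EE(\bu^\top\bx)^4 \leq 3$) give $v := \|\EE (\bM_i \mathbf{1}_{E_i})^2\|_2 = O(d^2)$. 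Matrix Bernstein with deviation $t = C_2 d\log^{3/2}n/\sqrt n$ then yields a tail of order $2d\exp(-\Omega(\log^3 n)) \leq n^{-C_1}$ for $C_2$ sufficiently large, using that $n \gtrsim d^2 \log^3 n$ makes the variance term $nv$ dominate $R \cdot nt$ in the Bernstein denominator. Finally, the truncation bias $\|\EE\widehat\bS - \EE\widehat\bS'\|_2$ is controlled by a further Cauchy--Schwarz applied on $E_i^c$ and is negligible at the target rate.

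The main obstacle is that $\bM_i$ is a degree-four polynomial in a Gaussian vector, so its operator norm is only sub-exponential in $\sqrt{\|\bM_i\|_2}$ and matrix Bernstein does not apply directly. The truncation level must be calibrated so that both the uniform bound $R$ and the truncation bias remain compatible with the ambient regime $n \gtrsim d^2 \log^3 n$. Separately, care is needed so that the variance proxy $v$ scales as $d^2$ rather than as $R^2 = d^4$; the Cauchy--Schwarz decoupling in the display above accomplishes this and is precisely what pins the final rate at $d \log^{3/2} n / \sqrt n$.
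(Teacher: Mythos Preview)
Your proposal is correct, and both parts reach the same conclusions as the paper. The expectation computation is essentially identical to the paper's: both reduce by rotational invariance to the canonical frame $\bnu^\star/\|\bnu^\star\|_2=\be_1$ and then evaluate fourth moments coordinate by coordinate.

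The concentration argument, however, is a genuinely different route. The paper does \emph{not} use matrix Bernstein. It truncates only the scalar weight $\|\bx_i\|_2^2-d$ at level $R=\sqrt{2sd\log n}+s\log n=O(\sqrt{d}\log n)$, so that for each fixed unit $\bu$ the summands $(\|\bx_i\|_2^2-d)(\bu^\top\bx_i)^2\bm 1_{\{\cdot\}}$ have $\psi_1$-norm $O(R)$; it then applies a scalar sub-exponential Bernstein inequality, takes a $(1/4)$-net of $\SSS^{d-1}$ of size $9^d$, and union-bounds. Your approach replaces the net by matrix Bernstein, at the cost of a much coarser uniform bound $R=O(d^2)$ on $\|\bM_i\bm 1_{E_i}\|_2$; you recover the same rate because the variance proxy $v=\|\EE\bM_i^2\|_2=O(d^2)$ is tight. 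Either way the threshold $n\gtrsim d^2\log^3 n$ is exactly what makes the sub-exponential/linear term in the Bernstein denominator comparable to the variance term.

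Two small tightenings are worth making explicit. First, in your variance display you need $\EE[(\|\bx\|_2^2-d)^4\|\bx\|_2^4]=O(d^4)$, not merely the two marginal moments you list; one more Cauchy--Schwarz (or expanding $\|\bx\|_2^4=((\|\bx\|_2^2-d)+d)^2$) gives this. Second, your claim that ``the variance term $nv$ dominates $R\cdot nt$'' is not literally true once $C_2$ is taken large, but the conclusion survives: in the linear regime the Bernstein exponent is $\Omega(nt/R)=\Omega(C_2\sqrt{n}\log^{3/2}n/d)\geq \Omega(C_2\log^3 n)$ under the sample-size assumption, which is still ample to absorb the $2d$ prefactor and reach $n^{-C_1}$.
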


As a consequence of Lemma~\ref{lem-spec-kurtosis}, one may therefore apply the Davis-Kahan theorem \citep{DKa70} to show that the unit-norm eigenvector $\bv$ of $\widehat{\bS}$ associated to its smallest eigenvalue is aligned with $\bnu^{\star}$. Then we may estimate $\by^{\star}$ (up to a global sign flip) by $\sgn ( \bX \bv )$.

Turning to the general case, when the data come from Model \eqref{eqn-joint-model-intro}, we can apply the whitening transform $\bX \mapsto \sqrt{n} \bX (\bX^{\top} \bX)^{-1/2}$ to approximately get the special model (\ref{eqn-warmup-canonical-0}). Based on the observations above, we propose a spectral method (Algorithm \ref{alg:spectral}) for estimating $\by^{\star}$ and analyze its behavior in Theorem~\ref{thm-spec}.
The proof appears in \Cref{sec-thm-spec-proof}.

\begin{algorithm}[t]
	{\bf Input} Data matrix $\bX = (\bx_1,\cdots,\bx_n)^{\top} \in \RR^{n\times d}$. \\
	{\bf Step 1.} Compute $\bW = \sqrt{n} \bX ( \bX^{\top} \bX)^{-1/2}$ and let $\bw_i$ be the $i$-th column of $\bW^{\top}$. \\
	{\bf Step 2.} Compute the weighted sample covariance matrix
	\[
	\bS = \frac{1}{n} \sum_{i=1}^{n} ( \| \bw_{i} \|_2^2 - d ) \bw_{i} \bw_{i}^{\top} .
	\]
	\\
	{\bf Step 3.} Compute the eigenvector $\bv \in \SSS^{d-1}$ of $\bS$ associated with its smallest eigenvalue.\\
	{\bf Output} $\widehat{\by}^{\mathrm{spec}} = \sgn ( \bW \bv ) $.
	\caption{Spectral initialization}
	\label{alg:spectral}
\end{algorithm}


\begin{theorem}[Spectral initialization]\label{thm-spec}
	Consider Model (\ref{eqn-joint-model-intro}) with $\sigma = 1 / \sqrt{ \mathrm{SNR} + 1} < 1 - \delta$ and $n > c d^2 \log^3 n$ for some constants $\delta \in (0,1)$ and $c > 0$. Let $\widehat\by^{\mathrm{spec}}$ be the output of Algorithm \ref{alg:spectral} and $\cR$ be the misclassification error defined in \eqref{eqn-err-rate}. For any constant $C_1 > 0$, there exists a constant $C_2$ such that
\[
\PP \bigg[
\cR( \widehat\by^{\mathrm{spec}} , \by^{\star} ) \leq C_2 \bigg( \sigma^2 + \frac{d^2 \log^3 n}{n} \bigg)
\bigg] \geq 1 - n^{-C_1}.
\]
\end{theorem}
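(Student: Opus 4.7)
My first move is to exploit invariance of Algorithm~\ref{alg:spectral} under invertible linear transforms of the rows of $\bX$. If $\tilde\bX = \bX\bA^\top$ with $\bA$ non-singular, then $\tilde\bW\tilde\bW^\top = n\bX\bA^\top(\bA\bX^\top\bX\bA^\top)^{-1}\bA\bX^\top = n\bX(\bX^\top\bX)^{-1}\bX^\top = \bW\bW^\top$, and $\tilde\bW^\top\tilde\bW = n\bI_d = \bW^\top\bW$, which together force $\tilde\bW = \bW\bO$ for some orthogonal $\bO\in\RR^{d\times d}$. The rest of the algorithm is covariant under $\bO$: the weighted matrix becomes $\bO^\top\bS\bO$, its smallest eigenvector becomes $\bO^\top\bv$, and the final output $\tilde\bW(\bO^\top\bv) = \bW\bv$ is unchanged. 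Taking $\bA = (\bmu^\star\bmu^{\star\top}+\bSigma^\star)^{-1/2}$ reduces to the canonical model~\eqref{eqn-warmup-canonical-0} with i.i.d.\ samples $\bz_i$, whose mean vector $\bnu^\star$ satisfies $\|\bnu^\star\|_2^2 = 1-\sigma^2\geq\delta$ (using $\sigma<1-\delta$).

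\textbf{Spectral concentration.} I apply Lemma~\ref{lem-spec-kurtosis} to the ``clean'' statistic $\widehat\bS^{\mathrm{can}} := n^{-1}\sum_i(\|\bz_i\|_2^2-d)\bz_i\bz_i^\top$, obtaining $\EE\widehat\bS^{\mathrm{can}} = 2\bI_d - 2(1-\sigma^2)^2\bn\bn^\top$ with $\bn := \bnu^\star/\|\bnu^\star\|_2$, and $\|\widehat\bS^{\mathrm{can}}-\EE\widehat\bS^{\mathrm{can}}\|_2 \lesssim d\log^{3/2}n/\sqrt n$ with probability $\geq 1-n^{-C_1}$. I must then compare this to Algorithm~\ref{alg:spectral}'s matrix $\bS$, which replaces $\bz_i$ by $\bw_i = \bM\bz_i$, where $\bM := \sqrt n(\bZ^\top\bZ)^{-1/2}$. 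Standard sub-Gaussian covariance concentration gives $\|\bM - \bI_d\|_2 \lesssim \sqrt{d\log n/n}$. Expanding $\bS - \widehat\bS^{\mathrm{can}}$, and using the uniform tail bound $\max_i\|\bz_i\|_2^2\lesssim d+\log n$ together with matrix-Bernstein-type estimates for weighted rank-one sums $n^{-1}\sum_i c_i\bz_i\bz_i^\top$, I aim to show $\|\bS - \EE\widehat\bS^{\mathrm{can}}\|_2\lesssim d\log^{3/2}n/\sqrt n$ with high probability. This perturbation bookkeeping is the main technical obstacle, since crude bounds that pass through $\max_i\|\bz_i\|_2^4\lesssim d^2$ yield an extra factor of $d^{3/2}$ and do not reach the claimed rate; care is needed to exploit that the weights $\|\bw_i\|_2^2 - \|\bz_i\|_2^2$ are averages over all samples with cancellation.

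\textbf{Davis--Kahan alignment.} The spectrum of $\EE\widehat\bS^{\mathrm{can}}$ consists of $2$ with multiplicity $d-1$ and the isolated eigenvalue $2-2(1-\sigma^2)^2$ with eigenvector $\bn$, so the relevant eigenvalue gap equals $2(1-\sigma^2)^2 \geq 2\delta^2$, a positive constant. The Davis--Kahan $\sin\theta$ theorem then yields $\min_{s=\pm 1}\|\bv - s\bn\|_2 \lesssim d\log^{3/2}n/\sqrt n$. After fixing the sign and writing $\bv = \alpha\bn + \beta\bv_\perp$ with $\alpha^2+\beta^2=1$, we obtain $\beta^2 = 1-\alpha^2\lesssim d^2\log^3 n/n$, so in particular $\alpha^2\geq 1/2$ on the good event.

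\textbf{From eigenvector alignment to misclassification rate.} Writing $\bz_i = y_i^\star\bnu^\star + (\bI-\bnu^\star\bnu^{\star\top})^{1/2}\boldsymbol{\xi}_i$ with $\boldsymbol{\xi}_i\sim N(\zero,\bI_d)$ i.i.d., I compute $\bv^\top\bz_i = \alpha y_i^\star\|\bnu^\star\|_2 + e_i$ where $e_i := \bv^\top(\bI-\bnu^\star\bnu^{\star\top})^{1/2}\boldsymbol{\xi}_i$ is a residual. The empirical second moment of the residuals satisfies
\begin{align*}
n^{-1}\sum_i e_i^2 = \bv^\top(\bI-\bnu^\star\bnu^{\star\top})^{1/2}\bigl[n^{-1}\textstyle\sum_i\boldsymbol{\xi}_i\boldsymbol{\xi}_i^\top\bigr](\bI-\bnu^\star\bnu^{\star\top})^{1/2}\bv \leq (1+o(1))(\sigma^2+\beta^2),
\end{align*}
using Wishart concentration and the identity $\bv^\top(\bI-\bnu^\star\bnu^{\star\top})\bv = 1-\alpha^2\|\bnu^\star\|_2^2 \leq \sigma^2+\beta^2$. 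The discrepancy between the algorithm's inner product $\bv^\top\bw_i$ and $\bv^\top\bz_i$ is bounded by $\|\bM-\bI\|_2\|\bz_i\|_2$, a lower-order correction. Any misclassified sample must satisfy $|e_i|\gtrsim \alpha\|\bnu^\star\|_2\gtrsim \sqrt\delta$, so Markov's inequality applied to $\{e_i^2\}$ gives
\begin{align*}
\cR(\widehat\by^{\mathrm{spec}},\by^\star)\lesssim \frac{n^{-1}\sum_i e_i^2}{\alpha^2\|\bnu^\star\|_2^2}\lesssim \frac{\sigma^2+\beta^2}{\delta}\lesssim \sigma^2 + \frac{d^2\log^3 n}{n},
\end{align*}
on the same high-probability event, which is the asserted bound.
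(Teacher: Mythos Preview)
Your four-step outline---invariance reduction, concentration of $\bS$ around $\EE\widehat\bS^{\mathrm{can}}$, Davis--Kahan, and an $\ell_2$-to-Hamming conversion---is exactly the paper's route. Your Step~4 is phrased a little differently (signal-plus-residual decomposition of $\bv^\top\bz_i$ and Markov on the residuals), whereas the paper bounds $\min_s\|s\bW\bv-\by^\star\|_2/\sqrt n$ via the triangle-inequality chain $s\bW\bv\to\bW\be_1\to\bX\be_1\to\by^\star$; the two are equivalent and give the same rate.

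The one place where you leave a real gap is the perturbation $\|\bS-\widehat\bS^{\mathrm{can}}\|_2$. Your proposed tool---``matrix-Bernstein-type estimates for weighted rank-one sums $n^{-1}\sum_ic_i\bz_i\bz_i^\top$'' with $c_i=\|\bw_i\|_2^2-\|\bz_i\|_2^2$---does not apply directly, because the weights $c_i=\bz_i^\top(\bM^2-\bI)\bz_i$ depend on \emph{all} samples through $\bM$, so the summands are not independent. The paper's fix is different and simpler: write
\[
\bS-\widehat\bS^{\mathrm{can}}=\frac{1}{n}\sum_{i}c_i\,\bM\bz_i\bz_i^\top\bM+\bigl(\bM\widehat\bS^{\mathrm{can}}\bM-\widehat\bS^{\mathrm{can}}\bigr),
\]
bound the first term in operator norm by $\max_i|c_i|\cdot\bigl\|n^{-1}\sum_i\bM\bz_i\bz_i^\top\bM\bigr\|_2=\max_i|c_i|$, and then control $\max_i|c_i|$ via a \emph{leave-one-out} argument (Lemma~\ref{lem-cov-normalization}): split
\[
\bz_i^\top\bigl(n^{-1}\bZ^\top\bZ-\bI\bigr)\bz_i=\frac{\|\bz_i\|_2^2-1}{n}\|\bz_i\|_2^2+\frac{n-1}{n}\,\bz_i^\top\bigl(\widehat\bSigma^{(i)}-\bI\bigr)\bz_i,
\]
where $\widehat\bSigma^{(i)}$ is the leave-one-out covariance, independent of $\bz_i$. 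Conditioning on $\bz_i$ and applying sub-exponential concentration to the quadratic form yields $\max_i|c_i|/\|\bz_i\|_2^2=O_\PP\bigl(\tfrac{d\log n}{n}+\sqrt{\tfrac{\log n}{n}}\bigr)$, which beats the naive $\|\bM^2-\bI\|_2=O_\PP(\sqrt{d\log n/n})$ by the crucial factor $\sqrt{d/\log n}$. That is the ``cancellation'' you allude to, and it is the missing idea in your sketch.
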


Theorem \ref{thm-spec} asserts that when $\sigma$ is small and $n \gtrsim  d^2 \log^3 n$, the spectral estimator $\widehat{\by}^{\mathrm{spec}}$ returned by Algorithm \ref{alg:spectral} has a small error rate. Consequently, Algorithm \ref{alg-ppi} initialized at $\widehat{\by}^{\mathrm{spec}}$ enjoys the optimal statistical error rate. The quadratic (as opposed to linear) dependence on $d$ reflects the difficulty of estimating numerous fourth-order moments in $\bS$ simultaneously. We will come back to this point in \Cref{sec-gap}.

By combining  \Cref{thm-spec,thm-ppi} we immediately get the following result.

\begin{corollary}[Two-stage algorithm]\label{cor-spec}
Consider Model (\ref{eqn-joint-model-intro}) with $n / ( d^2 \log^3 n ) \to \infty$. Let $\widehat{\by}^{\mathrm{PPI}}$ be the output of Algorithm \ref{alg-ppi}, initialized by the output $\widehat\by^{\mathrm{spec}}$ of Algorithm \ref{alg:spectral}.
\begin{enumerate}
	\item If $1 \ll \mathrm{SNR} \leq C \log n$ for some constant $C$, then $\EE \cR ( \widehat{\by}^{\mathrm{PPI}}, \by^{\star} ) \leq e^{-\mathrm{SNR} / [ 2 + o(1) ] } $.
	\item If $\mathrm{SNR} \geq (2 + \varepsilon) \log n$ for some constant $\varepsilon$, then $\PP  ( \widehat{\by}^{\mathrm{PPI}} = \by^{\star} ) = 1 - o(1)$.
\end{enumerate}
\end{corollary}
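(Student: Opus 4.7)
The plan is to prove Corollary \ref{cor-spec} by a direct chaining of Theorem \ref{thm-spec} (which provides a warm start) into Theorem \ref{thm-ppi} (which converts any warm start into an optimal-rate estimator). The only thing to verify is that the high-probability error bound on $\widehat{\by}^{\mathrm{spec}}$ is small enough to trigger the local-convergence regime of Algorithm \ref{alg-ppi}, and that the failure probability of the spectral step is swamped by the target misclassification bound.

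First I would fix the absolute constant $c > 0$ from Theorem \ref{thm-ppi} (the same $c$ appears in both parts; if not, take the smaller of the two). Under the corollary's assumption $n / (d^2 \log^3 n) \to \infty$, the term $d^2 \log^3 n / n$ tends to $0$, and since $\mathrm{SNR} \to \infty$ (implied by $1 \ll \mathrm{SNR}$ in case 1, and trivially by $\mathrm{SNR} \geq (2+\varepsilon)\log n$ in case 2), we also have $\sigma^2 = 1/(\mathrm{SNR}+1) \to 0$. In particular, for all sufficiently large $n$, $\sigma < 1 - \delta$ for some constant $\delta \in (0,1)$, so Theorem \ref{thm-spec} applies. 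Picking a constant $C_1$ to be determined, Theorem \ref{thm-spec} yields a constant $C_2$ such that with probability at least $1 - n^{-C_1}$,
\[
\cR(\widehat{\by}^{\mathrm{spec}}, \by^{\star}) \leq C_2 \bigl( \sigma^2 + d^2 \log^3 n / n \bigr),
\]
and for $n$ large the right-hand side is smaller than $c$. Consequently, $\PP\bigl(\cR(\widehat{\by}^{\mathrm{spec}}, \by^{\star}) > c\bigr) \leq n^{-C_1}$.

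For case 1, I would feed $\by^0 = \widehat{\by}^{\mathrm{spec}}$ into the first conclusion of Theorem \ref{thm-ppi}, obtaining
\[
\EE \cR(\widehat{\by}^{\mathrm{PPI}}, \by^{\star}) \leq n^{-C_1} + e^{-\mathrm{SNR}/[2 + o(1)]}.
\]
Since $\mathrm{SNR} \leq C \log n$, we have $e^{-\mathrm{SNR}/[2+o(1)]} \geq n^{-C/2 - o(1)}$, so choosing $C_1 > C/2 + 1$ ensures $n^{-C_1} = o(e^{-\mathrm{SNR}/[2+o(1)]})$ and the whole sum is $e^{-\mathrm{SNR}/[2+o(1)]}$, absorbing the spectral failure probability into the $o(1)$ term in the exponent.

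For case 2, the same substitution in the second conclusion of Theorem \ref{thm-ppi} gives
\[
\PP(\widehat{\by}^{\mathrm{PPI}} \neq \by^{\star}) \leq n^{-C_1} + o(1) = o(1),
\]
yielding exact recovery with high probability. There is no genuine technical obstacle here: the heavy lifting is entirely inside Theorems \ref{thm-spec} and \ref{thm-ppi}. The only subtle point is calibrating $C_1$ in case 1 so the polynomial-in-$n$ spectral failure probability does not dominate the exponentially small target error; this is resolved by choosing $C_1$ large relative to the constant $C$ controlling $\mathrm{SNR}/\log n$.
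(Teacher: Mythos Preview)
Your proposal is correct and matches the paper's approach exactly: the paper simply says the corollary follows ``by combining \Cref{thm-spec,thm-ppi},'' and you have spelled out precisely that chaining, including the calibration of $C_1$ in case 1 so that the polynomial spectral-failure probability $n^{-C_1}$ is absorbed into $e^{-\mathrm{SNR}/[2+o(1)]}$. One trivial check you could make explicit is that $n/(d^2\log^3 n)\to\infty$ implies $n/(d\log n)\to\infty$, so the hypothesis of Theorem~\ref{thm-ppi} is indeed in force.
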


It is worth pointing out that the spectral estimator $\widehat\by^{\mathrm{spec}} = \widehat\by^{\mathrm{spec}}(\bX)$, as a function of $\bX$, is invariant under non-degenerate linear transforms of data; see \Cref{lem-spec-invariance} in the appendix. From a practical perspective, that is crucial for dealing with the arbitrary and unknown covariance matrix $\bSigma^{\star}$. From a technical perspective, to study $\widehat\by^{\mathrm{spec}} $ we can safely assume that the data come from the special model \eqref{eqn-warmup-canonical-0} rather than the general model (\ref{eqn-joint-model-intro}), which simplifies the theoretical analysis.

In the study of sparse dictionary learning, \cite{HSS16} presents a spectral algorithm for recovering a planted \textit{sparse} vector in a random subspace. Our goal is to estimate the label vector $\by^{\star} \in \{ \pm 1 \}^n$ under Model (\ref{eqn-joint-model-intro}), a planted \textit{dense} vector instead. As a result, the algorithm in \cite{HSS16} uses the eigenvector corresponding to the largest eigenvalue of $\bS$, while our Algorithm \ref{alg:spectral} uses the smallest eigenvalue. As we are finishing the paper, an independent work \cite{mao2021optimal} appears on arXiv. The authors also use Algorithm \ref{alg:spectral} to recover a planted dense vector.


\label{sec:spectral} 

\section{A possible statistical-computational gap}\label{sec-gap}

So far we have proved that with enough separation the clustering problem is statistically solvable as soon as $n = \tilde \Omega(d)$ (Theorem \ref{thm-warm-main-IP}), yet the only computationally efficient algorithm we devised requires $n =\tilde \Omega(d^2)$ to succeed (Corollary \ref{cor-spec}). We conjecture that when $S \lesssim  1 \ll \snr$, the clustering problem exhibits a statistical-computational gap, and any polynomial-time algorithm requires $n =\tilde \Omega(d^2)$ samples to beat random guessing. In this section, we collect empirical and theoretical evidence to support the conjecture.

We start by numerically comparing the Max-Cut integer problem \eqref{eqn-warmup-maxcut}, its semi-definite relaxation \eqref{eqn-maxcut-sdp}, the spectral method (Algorithm \ref{alg:spectral} followed by Algorithm \ref{alg-ppi}), and the EM algorithm \citep{DLR77}. We find that the sample complexities shown by the experiments demonstrate the sharpness of our theoretical results in previous sections.

Turning to rigorous evidence, we present lower bounds for two different algorithm classes. The first lower bound shows that a broad family of efficient methods based on low-degree polynomials cannot solve a detection variant of the clustering problem when $n = \tilde o(d^2)$. The second lower bound establishes a similar result for the Sum-of-Squares hierarchy of the Max-Cut problem in the regime $n = o(d^{3/2})$. Both results are a consequence of recently proved lower bounds in the literature~\cite{mao2021optimal, GJJ20}.

Finally, we provide evidence of hardness based on nonconvex landscape analysis. In particular, we show that many projection pursuit formulations for clustering~\cite{FTu74} are not amenable to existing saddle-point avoidance techniques, suggesting that first-order optimization algorithms are unlikely to work in the regime $n = \tilde\Omega(d)$.

\subsection{Numerical evidence}
In this subsection, we test the success rates of the four algorithms mentioned above for different values of $d$ and $n$. Define $n_j = \lfloor 2^{4+0.15 (j - 1)} \rfloor$ and $d_j = \lfloor 2^{1+0.15 (j-1)} \rfloor$ for $j \in [40]$. We consider the canonical model \eqref{eqn-warmup-canonical} with $n \in \{ n_i \}_{i=1}^{40}$, $d \in \{ d_j \}_{j=1}^{40}$ and $\mathrm{SNR} = 3 \log n$. To evaluate the performance of a method on a configuration $(n_i, d_j)$, we generate $10$ datasets independently and compute the average misclassification rate, i.e., the average of \eqref{eqn-err-rate}. For any configuration with $n < d$ we report the maximum error rate $0.5$. Some remarks are in order:

\begin{itemize}
\item For the semi-definite relaxation, the spectral method and the EM algorithm, we consider all configurations $(n_i, d_j)$ with $i,j\in [40]$. Here $n_i$ ranges from 16 to 922, and $d_j$ ranges from $2$ to $115$. For the semi-definite relaxation, we apply a Goemans-Williamson~\citep{GWi95} type strategy: we first compute an optimal solution to \eqref{eqn-maxcut-sdp}, then extract its leading eigenvector $\widehat{\bu}$ and finally output $\sgn(\widehat{\bu})$ as the estimator.
\item For the more costly integer program, we only consider $(n_i, d_j)$ with $i,j\in [27]$. Then $n_i$ ranges from 16 to 238, and $d_j$ ranges from 2 to 29. We run the default solver in Gurobi 9.1.2 \citep{Gur21} Python API on a MacBook Pro (2.6GHz 6-Core Intel Core i7, 16GB of memory). To avoid running out of memory, we take the solution obtained in 15 seconds, even if the optimality has not been achieved.
\end{itemize}

Figure \ref{fig:methods} displays the finite-sample performance of four algorithms. Lighter pixels indicate lower misclassification rates. Clearly, the integer program has the best statistical power. Light and dark areas in the left panel are roughly separated by the red line with slope $1$, passing through $(2.5, 4)$ and $(5, 6.5)$. This verifies the linear sample complexity, matching our results in \Cref{thm-warm-main-IP}.
On the other hand, phase transitions of the other three algorithms take place near red lines with slope $2$, passing through $(3, 4)$ and $(6, 10)$ for the semi-definite relaxation; $(2.5, 4)$ and $(5.5, 10)$ for the spectral method and the EM algorithm. These numerical experiments point to the quadratic sample complexity of polynomial-time algorithms.

\begin{figure}
	\centering
	\includegraphics[width=\textwidth]{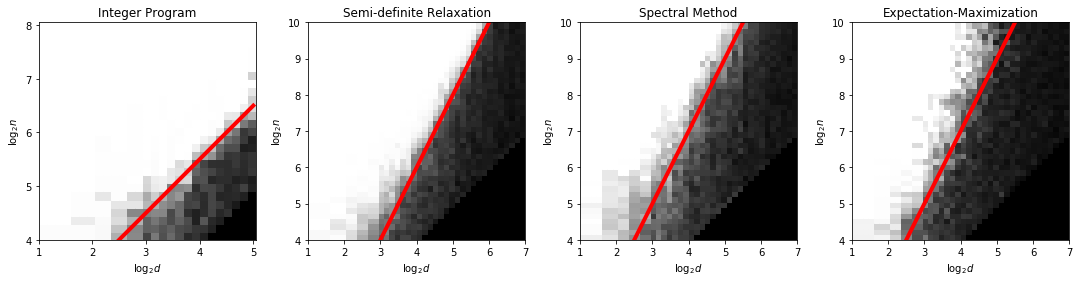}
	\caption{Phase transitions: integer program, semi-definite relaxation, spectral method and Expectation-Maximization. The horizontal and vertical axes correspond to $\log_2 d$ and $\log_2 n$, respectively. The four red lines with slopes 1, 2, 2 and 2 roughly show the boundaries between success (light) and failure (dark).}
	\label{fig:methods}
      \end{figure}


      \subsection{Lower bounds}
A growing body of research suggests that statistical-computational gaps arise in many statistical problems. There have been rigorous results through reductions from existing hard problems (e.g. planted clique) \cite{BRi13, brennan2020reducibility, brennan2019average}, lower bounds for the statistical query model \cite{10.1145/293347.293351, feldman2017statistical, diakonikolas2017statistical}, low-degree polynomial methods \cite{kunisky2019notes, luo2020tensor, Gamarnik2020LowDegreeHO} and sum-of-squares hierarchies \cite{schoenebeck2008linear, 10.1145/2746539.2746600,  deshpande2015improved}.
Below we present lower bounds based on the last two techniques.

\subsubsection{A lower bound for spectral methods}

We prove that consistent clustering under Model \eqref{eqn-joint-model-intro} is at least as hard as detection of a planted Boolean vector in a random subspace \citep{MRX20,GJJ20}. According to a recent work \cite{mao2021optimal}, a broad family of spectral algorithms fail on that detection problem when $n = \tilde o (d^2)$. This implies the hardness of our clustering problem.


\begin{problem}[Planted Boolean vector] \label{p:detection}
We observe $\bX \in \RR^{n\times d}$ and want to test the null hypothesis $H_0$ versus the alternative hypothesis $H_1$ below.
  \begin{itemize}
  \item $H_0$: $\bX = (\bg_1, \dots, \bg_d) \in \RR^{n \times d}$ where $\{\bg_i\}_{i=1}^d$ are i.i.d.~from $N(\bm{0}, \bI_n)$.
  \item $H_1$: $\bX = \tilde \bX \bQ$ for some unknown deterministic orthonormal matrix $\bQ \in \RR^{d\times d}$ and unknown matrix $\tilde \bX = (\by^\star, \bg_2, \dots, \bg_d)  \in \RR^{n \times d}$, where $\by^\star \in \{\pm 1\}^n$ has i.i.d.~Radamacher entries independent of i.i.d.~vectors $\{\bg_i\}_{i=2}^n$ from $N(\bm{0}, \bI_n)$.
  \end{itemize}
\end{problem}

Under $H_0$, $\Range(\bX)$ is a random subspace that is uniformly distributed with respect to the Haar measure. The rows of $\bX$ are i.i.d.~from Model \eqref{eqn-warmup-canonical-0} with $\bnu^{\star} = \bm{0}$ and $\snr = 0$.
Under $H_1$, $\Range(\bX)$ is a random subspace containing a Boolean vector $\by^\star$.
The rows of $\bX$ are i.i.d.~from Model \eqref{eqn-warmup-canonical-0} with $ \bnu^{\star} = \bQ \be_1 \in \SSS^{d-1}$ and $\snr = \infty$. Hence, Problem \ref{p:detection} amounts to distinguishing between $\mathrm{SNR} = 0$ and $\mathrm{SNR} = \infty$. Since a consistent estimate of $\by^{\star}$ leads to that of $\mathrm{SNR}$, clustering is by no means easier than the testing problem.

To rigorously state the reduction, let $\varphi:~\RR^{n\times d} \to \{ \pm 1 \}^n$ be any estimator for our clustering problem that maps an $n\times d$ data matrix to a label vector. Further, $\varphi$ is allowed to be random. For any $\varepsilon > 0$, define a randomized test $\psi_{\varepsilon} :~ \RR^{n\times d} \rightarrow \{ H_0, H_1 \}$ for Problem \ref{p:detection} through
\begin{align}
\psi_{\varepsilon} ( \bX ) = \begin{cases}
H_0, & \mbox{ if } \| \bH \varphi(\bX + \varepsilon \bZ ) \|_2^2 / n \leq \frac{2}{\pi} + 0.1 \\
H_1, & \mbox{ otherwise }
\end{cases}, \qquad\forall \bX \in \RR^{n\times d}.
\label{eqn-test}
\end{align}
Here $\bH = \bX (\bX^{\top} \bX )^{\dagger} \bX^{\top} $ denotes the projection onto $\Range(\bX)$, while matrix $\bZ \in \RR^{n\times d}$ has i.i.d.~$N(0,1)$ entries that are independent of $\varphi$. Given data matrix $\bX$, we estimate $\by^{\star}$ using the estimator $\varphi$ and then perform a test $\psi_{\varepsilon}$. A technical issue is that $H_1$ corresponds to $\mathrm{SNR} = \infty$ while we have only analyzed clustering in the finite $\mathrm{SNR}$ regime. As a remedy, in \eqref{eqn-test} we add noise to the data before estimating $\by^{\star}$. Under $H_1$, the data $\bX + \varepsilon \bZ$ has $\mathrm{SNR} = \varepsilon^{-2}$. Let us now briefly motivate the test $\psi_{\varepsilon}$. To that end, observe that
$$
\| \bH \varphi(\bX + \varepsilon \bZ ) \|_2^2 / n \leq \max_{\by \in \{ \pm 1 \}^n} \by^{\top} \bH  \by / n.
$$
When $H_0$ is true, $\bH$ is the projection to a uniformly random subspace. Thus, a standard analysis shows that the upper bound concentrates around $2 / \pi \approx 0.637$ when $n = \tilde\Omega (d)$. When $H_1$ is true, we have $\by^{\star} \in \Range(\bH)$ and $\| \bH \by^{\star} \|_2^2 = n$. Thus, a good estimator $\varphi$ yields $\varphi(\bX + \varepsilon \bZ ) \approx \by^{\star}$ and $\| \bH \varphi(\bX + \varepsilon \bZ ) \|_2^2 / n \approx 1$. Therefore, the test $\psi_{\varepsilon}$ clearly separates the two hypotheses, whenever $\varphi$ is a sufficiently good estimator.

We now present our main theoretical guarantee for $\psi_{\varepsilon}$. The proof appears in \Cref{sec-thm-testing-proof}.
\begin{theorem}[Clustering and testing]\label{thm-testing}
Define $\cR$ through \eqref{eqn-err-rate}, let $c > 0$ be a constant and $\{ d_n \}_{n=1}^{\infty} $ be a sequence of positive integers satisfying $d_n = o(n / \log n)$. Suppose that $\varphi$ is an estimator of $\by^{\star}$ such that
$\EE \cR [  \varphi( \bX ) , \by^{\star}  ] = e^{- \mathrm{SNR} / [2 + o(1)]} $
holds under Model \eqref{eqn-joint-model-intro} with $\mathrm{SNR} = c \log n$, $d = d_n$ and $n \to \infty$. Then, for Problem \ref{p:detection} with $d = d_n$, we have
\begin{align*}
& \PP \Big( \psi_{ 1 / \sqrt{ 2 c \log n } } (\bX) = H_1 \Big| H_0 \Big) + \PP \Big( \psi_{ 1 / \sqrt{ 2 c \log n } } (\bX) = H_0 \Big| H_1 \Big)
 \leq n^{-c + o(1) } .
\end{align*}
\end{theorem}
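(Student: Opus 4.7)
The plan is to bound the Type I and Type II errors of $\psi_\varepsilon$ separately with $\varepsilon = 1/\sqrt{2c\log n}$. For Type I, note that under $H_0$ the matrix $\bX$ has i.i.d.\ $N(0,1)$ entries, so $\bH$ is the projection onto a uniformly random $d$-dimensional subspace of $\RR^n$. Since $\varphi(\bX+\varepsilon\bZ) \in \{\pm 1\}^n$ always, the test rejects $H_0$ only if $\max_{\by \in \{\pm 1\}^n} \by^\top\bH\by/n > 2/\pi + 0.1$, so it suffices to control this maximum. For each fixed $\by$, rotational invariance gives $\by^\top\bH\by/n \sim \mathrm{Beta}(d/2,(n-d)/2)$ with mean $d/n$; a Chernoff bound yields a single-vector tail of order $\exp(-\Omega(n\log(n/d)))$ at the threshold $2/\pi + 0.1$, so a union bound over the $2^n$ sign vectors leaves probability $\exp(-\omega(n))$ provided $d = o(n/\log n)$. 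This is exactly the width estimate invoked in the planted-Boolean-vector literature~\cite{mao2021optimal, GJJ20}, and yields $\PP[\psi_\varepsilon(\bX) = H_1 \mid H_0] = n^{-\omega(1)}$.

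For Type II, I first compute the SNR of the noise-inflated data. Under $H_1$, $\by^\star$ lies in $\Range(\bX)$ (it is the first column of $\tilde\bX$), so $\bH\by^\star = \by^\star$ and $\|\bH\by^\star\|_2^2 = n$. A direct check shows that $\bX + \varepsilon\bZ$ fits Model~\eqref{eqn-joint-model-intro} with $\bmu^\star = \bnu^\star := \bQ^\top\be_1$ and $\bSigma^\star = (1+\varepsilon^2)\bI_d - \bnu^\star\bnu^{\star\top}$, and Sherman--Morrison then gives $\mathrm{SNR} = \bmu^{\star\top}(\bSigma^\star)^{-1}\bmu^\star = 1/\varepsilon^2 = 2c\log n$. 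Applying the assumed rate on $\varphi$ at this SNR scale,
\[
\EE\cR[\varphi(\bX+\varepsilon\bZ),\by^\star] \leq e^{-2c\log n/[2+o(1)]} = n^{-c+o(1)}.
\]
To turn this into a success probability for $\psi_\varepsilon$, align $\varphi(\bX+\varepsilon\bZ)$ with $\by^\star$ by a global sign flip (which leaves $\|\bH\varphi\|_2$ unchanged) and let $\rho$ denote the resulting misclassification proportion, so $\|\varphi - \by^\star\|_2 = 2\sqrt{\rho n}$; the triangle inequality then gives
\[
\|\bH\varphi(\bX+\varepsilon\bZ)\|_2/\sqrt{n} \;\geq\; \|\bH\by^\star\|_2/\sqrt{n} - \|\bH(\varphi-\by^\star)\|_2/\sqrt{n} \;\geq\; 1 - 2\sqrt{\rho},
\]
so $\|\bH\varphi\|_2^2/n > 2/\pi + 0.1$ whenever $\rho < \delta_0 := (1 - \sqrt{2/\pi + 0.1})^2/4 \approx 0.005$. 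Markov's inequality then yields $\PP(\rho \geq \delta_0) \leq \delta_0^{-1} n^{-c+o(1)} = n^{-c+o(1)}$, bounding the Type II error. Summing the two contributions gives $n^{-c+o(1)}$.

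The most delicate ingredient is the uniform upper bound on $\max_{\by \in \{\pm 1\}^n}\by^\top\bH\by/n$ under $H_0$: one needs the single-vector Beta tail to be sharp enough that the entropy cost $\log 2^n$ of the union bound is absorbed, which is exactly where the hypothesis $d = o(n/\log n)$ enters. A secondary subtlety is that the fresh Gaussian noise $\varepsilon\bZ$ drives the effective SNR to $2c\log n$ rather than $c\log n$---this is the reason for choosing $\varepsilon = 1/\sqrt{2c\log n}$ and for the exponent $-c$ in the final bound. I read the assumption on $\varphi$ as providing the rate $e^{-\mathrm{SNR}/[2+o(1)]}$ as a function of SNR in the logarithmic regime (standard for such optimality statements), which lets us instantiate it at $\mathrm{SNR} = 2c\log n$.
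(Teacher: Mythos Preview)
Your Type~II analysis is correct and essentially identical to the paper's. The gap is in the Type~I analysis.

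The claim that the single-vector tail $\PP\bigl(\mathrm{Beta}(d/2,(n-d)/2)>2/\pi+0.1\bigr)$ is of order $\exp(-\Omega(n\log(n/d)))$ is false. Writing $B=X/(X+Y)$ with $X\sim\chi^2_d$, $Y\sim\chi^2_{n-d}$ and optimizing the exponential tilt gives exactly
\[
\PP\bigl(\mathrm{Beta}(d/2,(n-d)/2)>t\bigr)\;\le\;\exp\!\Big(-\tfrac{n}{2}\,D_{\mathrm{KL}}\bigl(d/n\,\|\,t\bigr)\Big),
\]
and this is sharp up to sub-exponential factors. As $d/n\to 0$ one has $D_{\mathrm{KL}}(d/n\,\|\,t)\to -\log(1-t)$, so at $t=2/\pi+0.1\approx 0.737$ the exponent is $\tfrac{n}{2}\log(1/0.263)\approx 0.668\,n$, which is \emph{smaller} than $\log 2^n\approx 0.693\,n$. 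Hence the union bound over the $2^n$ sign vectors blows up; it only secures $\max_{\by}\by^\top\bH\by/n\le 3/4+o(1)$, strictly above the test's threshold. (You may be thinking of the Binomial/Sanov rate $D_{\mathrm{KL}}(t\,\|\,d/n)\asymp t\log(n/d)$, where the KL arguments are reversed; the Beta tail is governed by the other direction.)

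The paper avoids this by passing to the dual formulation. With $\bH=\bV\bV^\top$ and $\bV^\top\bV=\bI_d$,
\[
\max_{\by\in\{\pm1\}^n}\by^\top\bH\by\;=\;\Big(\sup_{\bx\in\SSS^{d-1}}\|\bV\bx\|_1\Big)^2,
\]
and for each fixed $\bx\in\SSS^{d-1}$ the vector $\bV\bx$ is uniform on $\SSS^{n-1}$, so $\|\bV\bx\|_1/\sqrt n$ concentrates around $\sqrt{2/\pi}$ with sub-Gaussian rate $\exp(-\Omega(n))$ at any fixed deviation. A $(1/n)$-net of $\SSS^{d-1}$ has cardinality $\exp(O(d\log n))=\exp(o(n))$ under the hypothesis $d=o(n/\log n)$, so \emph{this} union bound succeeds and yields Type~I error $e^{-\Omega(n)}$. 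The point is that the covering lives in $d$ dimensions rather than $n$. The width estimate you cite from the planted-Boolean-vector literature is established precisely via this $\ell_1$-on-the-sphere argument, not by a hypercube union bound.
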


According to \Cref{thm-testing}, a statistically optimal, polynomial-time clustering algorithm for Model~\eqref{eqn-joint-model-intro} yields a powerful, polynomial-time test for Problem~\ref{p:detection}. Clustering is therefore no easier than testing. 

Now it is worthwhile to understand the hardness of the testing problem itself. To that end, we use \Cref{thm-testing} to analyze the limitation of polynomial-time tests. Let $d = \lceil n / \log^2 n \rceil$ and denote by $\varphi(\bX) \in \argmax_{\by \in \{ \pm 1 \}^n } \by^{\top} \bH \by$ the estimator given by the Max-Cut program \eqref{eqn-warmup-maxcut}. Then, \Cref{thm-warm-main-IP} and \Cref{thm-testing} assert that $\psi_{ 1 / \sqrt{ 2 c \log n }}$ is a test for Problem \ref{p:detection} whose sum of type-I and type-II error probabilities is at most $n^{-c + o(1)}$. Here $c > 0$ is an \textit{arbitrary} constant.



On the other hand, for detection tasks like Problem \ref{p:detection}, a recent line of work \citep{HSS16,HKP17,mao2021optimal} investigates polynomial-time tests based on the spectra of matrices (i) of size at most $n^\ell \times n^{\ell}$ and (ii) whose entries are polynomials of degree at most $p$ in the data, where $\ell$ and $p$ are constants.
These spectral methods are known to be optimal among existing polynomial-time tests for several challenging statistical tasks \cite{mao2021optimal}.
However, they exhibit limited performance on Problem~\ref{p:detection}. In particular, \cite{mao2021optimal} recently shows that for Problem \ref{p:detection} with $d \geq \sqrt{ n } \log n$, any $\psi$ from the family of such spectral methods with constant parameters $\ell$ and $p$ satisfies
\begin{align*}
\PP \Big( \psi (\bX) = H_1 \Big| H_0 \Big) + \PP \Big( \psi  (\bX) = H_0 \Big| H_1 \Big)
\geq n^{- C + o(1)},
\end{align*}
where $C$ is a constant determined by $\ell$ and $p$. Consequently, any spectral test $\psi$ with fixed $\ell$ and $p$ is strictly less powerful than $\psi_{ 1 / \sqrt{ 4 C  \log n }}$, built upon the aforementioned Max-Cut estimator $\varphi$. This reveals the sub-optimality of those polynomial-time tests.





\subsubsection{A lower bound for Sum-of-Squares relaxations}

In this section, we investigate the power of semi-definite relaxations for solving the Max-Cut program (\ref{eqn-warmup-maxcut}). Our main conclusion is that a natural family of semi-definite relaxations produce trivial estimators of $\by^\star$ whenever $n = o(d^{3/2})$.

To motivate the relaxations, note that the Max-Cut program is equivalent to a linear program over the cut polytope
\begin{align*}
\max_{\bY \in \cC } \langle \bH , \bY \rangle \quad\text{where} \quad \cC = \mathrm{conv} ( \{ \by \by^{\top}:~ \by \in \{ \pm 1 \}^n   \} ),
\end{align*}
where $\mathrm{conv}$ refers to the convex hull. Despite convexity, this problem is hard to solve \citep[Section 4.4]{DLa09}. A common practice to overcome this issue is to replace the cut polytope with a ``relaxed'' convex set, solve the relaxed problem, and ``round'' its optimal solution to an element in $\cC$. For example, the Goemans-Willamson SDP (\ref{eqn-maxcut-sdp}) uses the elliptope
\begin{align*}
\cS_2 = \{ \bY \in \RR^{n \times n} :~ \bY \succeq 0,~ \diag(\bY) = \bm{1} \}
\end{align*}
as the relaxed feasible set. More generally, for any even integer $k$, the degree-$k$ Sum-of-Squares (SoS) relaxation \citep{parrilo2000structured,lasserre2001global} is given by
\begin{align}
\max_{\bY \in \cS_k } \langle \bH , \bY \rangle
\label{eqn-maxcut-sos}
\end{align}
for some set $\cS_k$ of $n\times n$ positive semi-definite matrices. See Appendix~\ref{sec:def-hierarchy} for its formal definition. These sets form a nested sequence $$\cS_{2} \supseteq \cS_{4} \supseteq \cdots \supseteq \cS_{m} = \cC,$$ where $m = n$ if $n$ is even and $m = n + 1$ otherwise. Moreover, for a fixed level $k$, Problem \eqref{eqn-maxcut-sos} can be cast as a semi-definite program where the number of varibles grows as $O( n^k )$. For constant $k$, these problems are solvable in polynomial time \cite{nesterov1994interior}.

The SoS hierarchy has proved to be useful for challenging statistical problems \cite{kothari2018robust, hopkins2020mean, Cherapanamjeri2020AlgorithmsFH}. Moreover, these relaxations are conjectured to be among the most powerful polynomial-time algorithms \cite{Barak2014SumofsquaresPA}. However, in the following theorem, we show that when $k$ is small and $n = o(d^{3/2})$, the convex program \eqref{eqn-maxcut-sos} has at least one spurious maximizer that is statistically independent of the target $\by^{\star}$. The proof is based on a recent obstruction derived in \cite{GJJ20}; see \Cref{sec-thm-sos-lower-proof} for its proof and \Cref{sec:gap-related} for additional context.

\begin{theorem}[Spurious maximizer of SoS]
	\label{thm-sos-lower}
Consider Model (\ref{eqn-joint-model-intro}) with $n^{2/3 + \varepsilon} \leq d \leq n$ for some constant $\varepsilon > 0$ and $n \to \infty$. Let $\bW = \bX (\bI - \bmu^{\star} \bmu^{\star \top} / \| \bmu^{\star} \|_2^2)$. There exists a universal constant $c$ that makes the followings hold: when $k = n^{\delta}$ for some $\delta \in (0,  c \varepsilon]$, there is a deterministic mapping $\bM:~\RR^{n\times d} \to \cS_k$ such that
\[
\lim\limits_{n \to \infty}
\PP \Big(  \langle \bH , \bM (\bW) \rangle = n \Big) = 1.
\]
Consequently, $\bM(\bW)  \in \argmax_{ \bY \in \cS_{ k } } \langle \bH , \bY \rangle$ holds with probability $1 - o(1)$.
\end{theorem}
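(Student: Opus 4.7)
The plan is to reduce the construction of the spurious SoS maximizer to the pseudo-calibration lower bound of~\cite{GJJ20} for Max-Cut on random projection matrices.

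First, I would exploit the definition of $\bW$ to produce a ``null'' (signal-free) random subspace inside $\Range(\bX)$. Write $\bW = \bX \bP$, where $\bP = \bI_d - \bmu^{\star}\bmu^{\star\top}/\|\bmu^{\star}\|_2^2$ is the orthogonal projection in $\RR^d$ onto $(\bmu^{\star})^{\perp}$. Let $\bU \in \RR^{d\times(d-1)}$ have orthonormal columns spanning $(\bmu^{\star})^{\perp}$, so that $\Range(\bW) = \Range(\bX\bU)$. Writing $\bx_i = y_i^{\star}\bmu^{\star} + \bvarepsilon_i$ with $\bvarepsilon_i \sim N(\bm{0},\bSigma^{\star})$ gives $\bU^{\top}\bx_i = \bU^{\top}\bvarepsilon_i$, which is Gaussian and independent of $\by^{\star}$. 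After whitening by $(\bU^{\top}\bSigma^{\star}\bU)^{-1/2}$, rotational invariance implies that $\Range(\bW)$ is distributed as a uniformly random $(d-1)$-dimensional subspace of $\RR^n$, statistically independent of $\by^{\star}$.

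Second, I would reduce the target inequality $\langle \bH,\bM(\bW)\rangle=n$ to a range-containment condition. Let $\bH_W$ denote the orthogonal projection onto $\Range(\bW)$. Since $\bW$ is a linear transform of $\bX$, we have $\Range(\bW)\subseteq\Range(\bX)$ and therefore $\bH_W \preceq \bH$. On the other hand, every $\bY \in \cS_k$ satisfies $\diag(\bY)=\bm 1$ and hence $\Tr(\bY)=n$, so $\langle \bH,\bY\rangle \leq \Tr(\bY)=n$ because $\bI-\bH\succeq 0$. Thus it suffices to produce $\bM(\bW)\in\cS_k$ with $\langle \bH_W,\bM(\bW)\rangle=n$: the two PSD factorizations $\bI-\bH_W\succeq 0$ and $\bM(\bW)\succeq 0$ then force $\Range(\bM(\bW))\subseteq \Range(\bH_W)\subseteq\Range(\bH)$, which gives $\langle \bH,\bM(\bW)\rangle=\Tr(\bM(\bW))=n$ for free.

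Third, I would invoke the obstruction of~\cite{GJJ20}. Their pseudo-calibration construction produces, for a uniformly random $r$-dimensional subspace $V\subseteq\RR^n$, a deterministic map $V\mapsto \bM(V)\in\cS_k$ that with probability $1-o(1)$ is feasible, has $\Range(\bM(V))\subseteq V$, and attains $\langle \bH_V,\bM(V)\rangle = n$, provided $n\ll r^{3/2}$ and $k\le n^{c\varepsilon}$ for a small universal constant $c$. The hypothesis $d\ge n^{2/3+\varepsilon}$ gives $n\ll (d-1)^{3/2}$ with polynomial slack, so applying this construction with $V=\Range(\bW)$ and $r=d-1$ produces the desired $\bM(\bW)$, and choosing $k=n^{\delta}$ with $\delta\in(0,c\varepsilon]$ matches the degree hypothesis.

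The main obstacle is justifying the pseudo-calibration step: one must verify that the GJJ20 moment matrix is PSD at degree $k=n^{\delta}$, that its row span lies in the ambient random subspace, and that all of this happens with high probability over the subspace. This is the technical heart of~\cite{GJJ20} and requires matrix concentration for graph-matrix-like objects. Once this is granted, the reduction above is essentially linear-algebraic and the verification that $d-1$ versus $d$ makes no difference (by rotational invariance and the looseness in the $n^{2/3+\varepsilon}$ hypothesis) is straightforward.
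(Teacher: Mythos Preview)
Your proposal is correct and follows essentially the same route as the paper: identify $\Range(\bW)$ as a uniformly random $(d-1)$-dimensional subspace independent of $\by^{\star}$, invoke the GJJ20 obstruction to obtain $\widehat{\bY}\in\cS_k$ with $\langle \bPi,\widehat{\bY}\rangle=n$ for $\bPi$ the projection onto $\Range(\bW)$, and then use $\bPi\preceq\bH$ together with $\widehat{\bY}\succeq 0$ and $\Tr(\widehat{\bY})=n$ to conclude $\langle\bH,\widehat{\bY}\rangle=n$. The only cosmetic point you omit is the paper's explicit definition of $\bM(\bW)=\bm{1}_n\bm{1}_n^{\top}$ on the failure event, which is needed to make $\bM$ a genuine deterministic map into $\cS_k$.
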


Under Model (\ref{eqn-joint-model-intro}), the vector $(\bI - \bmu^{\star} \bmu^{\star \top} / \| \bmu^{\star} \|_2^2) \bx_i$ is independent of $y_i^{\star}$. Hence $\bW = \bX (\bI - \bmu^{\star} \bmu^{\star \top} / \| \bmu^{\star} \|_2^2)$ is independent of the label vector $\by^{\star}$. When $n^{2/3 + \varepsilon} \leq d \leq n$, \Cref{thm-sos-lower} asserts that with high probability the SoS has a solution $\widehat{\bY} = \bM(\bW)$ that is determined by $\bW$ and thus statistically independent of $\by^{\star}$. Consequently, any rounding procedure $\varphi:~ \cS_k \to \{ \pm 1 \}^n$ that is statistically independent of $\bX$, such as
 the randomized rounding algorithm in \cite{GWi95}, or
applying the entrywise $\sgn$ function to the leading eigenvector of $\widehat{\bY}$, will yield an estimator that performs as poorly as random guessing.

From \Cref{thm-sos-lower} we see that the SoS with $k = n^{\delta}$ for $\delta \leq c \varepsilon$ is not tight and has at least one uninformative solution. Remarkably, there is no restriction on $\mathrm{SNR}$. It is not clear whether that solution is the unique one with high probability. Nor do we know if all of the optimal solutions are uninformative. While \Cref{thm-sos-lower} only studies the regime $n = o(d^{3/2})$, we believe that the results continue to hold up to $n = o(d^{2})$.

It is worth pointing out that \Cref{thm-sos-lower} concerns the limitations of SoS for Model (\ref{eqn-joint-model-intro}), which aims to cluster the data under the separation condition $\bmu^{\star \top} \bSigma^{\star -1} \bmu^{\star} \to \infty$ much weaker than the commonly-used one $\| \bmu^{\star} \|_2^2 / \| \bSigma^{\star} \|_2 \to \infty$. The $\tilde\Omega ( d^{3/2} )$ lower bound on the sample complexity indicates the price of generality. Since the SoS program \eqref{eqn-maxcut-sos} is invariant under non-degenerate linear transforms of the data, it does not incorporate prior knowledge of $\bmu^{\star}$ or $\bSigma^{\star}$.
When additional information such as $\| \bmu^{\star} \|_2^2 \gg \| \bSigma^{\star} \|_2$ is available, one may resort to other approaches \citep{LZh16,Roy17,MVW17,AFW20} and reduce the sample complexity to $\tilde\Omega(d)$.

To close this subsection, we list some related problems that (at least seemingly) have linear sample complexities, yet all existing polynomial-time algorithms have quadratic sample complexities. First, in the study of discriminative clustering, \cite{FPB17} proposes a convex program over semi-definite matrices. Under certain statistical models, the authors prove guarantees when $n = \tilde{\Omega}(d^2)$ and numerically show the phase transition at $n \asymp d^2$. However, the problem should be statistically solvable when $n = \tilde{\Omega}(d)$. Second, for the planted sparse vector problem where one aims to recover a sparse vector in a $d$-dimensional random subspace of $\RR^n$, SoS relaxation \citep{BKS14} and spectral methods \citep{HSS16} require $n = \widetilde{\Omega}(d^2)$. The ambient dimension $n$ serves as the sample size there. Similar results hold for the planted dense vector problem \citep{mao2021optimal}.
Third, for fourth-order tensor PCA, \cite{DHs20} proves a quadratic lower bound under the statistical query model. Fourth, in sparse PCA, detecting a sparse principal component with at most $s$ non-zero entries requires $\tilde\Omega(s)$ samples, but known polynomial-time methods require $\tilde\Omega(s^2)$ samples \citep{BRi13}.

\subsection{Landscape analysis of projection pursuit}

Despite the NP-hardness of non-convex programs in the worst case, simple first-order optimization algorithms are often successful in practice. A common explanation for this phenomenon is that non-convex loss functions arising in statistical applications exhibit benign geometry, e.g., every local minimum is nearly a global minimum, and all saddle points are strict, meaning the Hessian has a strictly negative eigenvalue. On such functions, stochastically perturbed gradient methods are known to bypass all strict saddle points and converge to a critical point that is nearly globally optimal~\cite{JGN17}. This observation has proved to be useful in several applications, including \cite{ge2016matrix,sun2015nonconvex, WYD20}.

In this section, we make a connection between the Max-Cut integer program and a continuous non-convex formulation of the projection pursuit method \citep{FTu74,Hub85,PPr01}. We then show that the formulation is not amenable to existing analyses of stochastically perturbed gradient methods, suggesting possible failure of first-order methods. To motivate the formulation, first consider a supervised classification problem where we observe both labels and data points $\{ (\bx_i , y_i^{\star} ) \}_{i=1}^n$ from Model (\ref{eqn-joint-model-intro}); we then seek a linear classifier that predicts the labels of future samples. A natural candidate classifier may be found through Fisher's linear discriminant analysis \citep{Fis36}: define $\widehat{\by} := \sgn( \widehat\bbeta^{\top}  \bx )$ where $\widehat\bbeta \in \RR^d$ is a plug-in estimate of the optimal projection vector $\bSigma^{\star -1} \bmu^{\star}$. It is known that $\widehat{\bbeta}$ solves a least squares problem
\begin{align*}
\min_{\bbeta \in \RR^d}  \sum_{i=1}^{n} ( \bbeta^{\top} \bx_i - y_i^{\star} )^2 ,
\end{align*}
see \citep{HTF09}. A natural strategy to adapt the above to the setting with unobserved labels 
is the \textit{uncoupled} linear regression: we jointly optimize the coefficient vector as well as the label configuration 
\begin{align}
\min_{ \bbeta \in \RR^d,~ \by \in \{ \pm 1\}^n  } \sum_{i=1}^{n} ( \bbeta^{\top} \bx_i - y_i )^2 .
\label{eqn-uncoupled-lr}
\end{align}
Clearly for any $\bbeta \in \RR^d$, the optimal $\by$ is $\sgn( \bX \bbeta )$.
Hence the program (\ref{eqn-uncoupled-lr}) is equivalent to
\begin{align}
\min_{ \bbeta \in \RR^d }
\bigg\{
\min_{ \by \in \{ \pm 1\}^n  }
\sum_{i=1}^{n} ( \bbeta^{\top} \bx_i - y_i )^2
\bigg\} =
\min_{ \bbeta \in \RR^d }
\sum_{i=1}^{n} ( |\bbeta^{\top} \bx_i| - 1 )^2
 .
\label{eqn-uncoupled-lr-beta}
\end{align}
This is a continuous program of the form $\min_{ \bbeta \in \RR^d } \sum_{i=1}^{n} f(\bbeta^{\top} \bx_i)$ over the feature domain $\RR^d$, with $f(x) = (|x| - 1)^2$. It is an instance of the projection pursuit method that looks for the most ``interesting'' projection of high-dimensional data, seeking a direction $\bbeta$ that transforms $\{ \bx_i \}_{i=1}^n \subseteq \RR^d$ to a one-dimensional point cloud concentrating near $\pm 1$.
Moreover, we have the following Lemma, whose proof is straightforward:
\begin{lemma}\label{lem-pp-1}
Let $\widehat{\bbeta}$ be an optimal solution of the minimization problem (\ref{eqn-uncoupled-lr-beta}), then $\pm \sgn ( \bX \widehat\bbeta )$ are optimal solutions of the Max-Cut program (\ref{eqn-warmup-maxcut}). Similarly, if $\widehat \by$ is a solution of (\ref{eqn-warmup-maxcut}), then $\pm (\bX^{\top} \bX)^{-1} \bX^{\top} \widehat \by$ are solutions of (\ref{eqn-uncoupled-lr-beta}).
\end{lemma}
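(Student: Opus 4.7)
The plan is to exhibit both (\ref{eqn-uncoupled-lr-beta}) and the Max-Cut program (\ref{eqn-warmup-maxcut}) as partial minimizations of the same joint bilinear program (\ref{eqn-uncoupled-lr}) and then read off the asserted correspondence between their optimizers. Introduce the shorthand $F(\bbeta,\by) := \|\bX\bbeta - \by\|_2^2 = \sum_{i=1}^{n}(\bbeta^\top \bx_i - y_i)^2$ and consider the joint problem $\min_{\bbeta \in \RR^d,\,\by \in \{\pm 1\}^n} F(\bbeta,\by)$, which agrees with (\ref{eqn-uncoupled-lr}).

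First, I would eliminate $\by$: for every fixed $\bbeta$ the objective decouples entrywise and $\min_{y_i \in \{\pm 1\}}(\bbeta^\top\bx_i - y_i)^2 = (|\bbeta^\top\bx_i|-1)^2$, attained at $y_i = \sgn(\bbeta^\top \bx_i)$. Partial minimization over $\by$ thus reproduces (\ref{eqn-uncoupled-lr-beta}), with companion minimizer $\by = \sgn(\bX\bbeta)$. Next I would eliminate $\bbeta$: for every fixed $\by$ the least-squares minimizer is $\bbeta = (\bX^\top\bX)^{-1}\bX^\top\by$ (using the Moore--Penrose pseudoinverse in the rank-deficient case), and back-substitution yields $\|(\bI-\bH)\by\|_2^2 = \by^\top(\bI-\bH)\by = n - \by^\top\bH\by$. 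Minimizing this over $\by\in\{\pm 1\}^n$ is precisely the Max-Cut program (\ref{eqn-warmup-maxcut}), with companion minimizer $\bbeta = (\bX^\top\bX)^{-1}\bX^\top\by$.

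Since the joint infimum can be evaluated in either order, any joint optimizer $(\widehat\bbeta,\widehat\by)$ of $F$ has $\widehat\bbeta$ optimal for (\ref{eqn-uncoupled-lr-beta}), $\widehat\by$ optimal for (\ref{eqn-warmup-maxcut}), and the compatibility relations $\widehat\by = \sgn(\bX\widehat\bbeta)$ and $\widehat\bbeta = (\bX^\top\bX)^{-1}\bX^\top\widehat\by$. Conversely, any optimizer of either marginal problem lifts to a joint optimizer via its companion formula, so $\sgn(\bX\widehat\bbeta)$ is Max-Cut optimal whenever $\widehat\bbeta$ solves (\ref{eqn-uncoupled-lr-beta}), and $(\bX^\top\bX)^{-1}\bX^\top\widehat\by$ solves (\ref{eqn-uncoupled-lr-beta}) whenever $\widehat\by$ is Max-Cut optimal. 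The remaining $\pm$ signs follow from the obvious sign-symmetries of each problem: $\by \mapsto -\by$ leaves $\by^\top\bH\by$ unchanged, and $\bbeta \mapsto -\bbeta$ leaves $\sum_i (|\bbeta^\top\bx_i|-1)^2$ unchanged. There is no serious obstacle here; the only mild subtlety is the possibility of ties at zero in $\sgn$, which occur with probability zero for Gaussian data and are in any event resolved by the paper's convention $\sgn(0)=1$.
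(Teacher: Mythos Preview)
Your proposal is correct and follows precisely the approach the paper has in mind: the text surrounding the lemma already displays the joint problem (\ref{eqn-uncoupled-lr}) and carries out the partial minimization over $\by$ to obtain (\ref{eqn-uncoupled-lr-beta}), and the paper declares the remaining step ``straightforward'' without writing it out. Your completion---partial minimization over $\bbeta$ via least squares to recover $n - \by^\top\bH\by$, together with the sign symmetries---is exactly that missing step.
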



Thus, the formulation~\eqref{eqn-uncoupled-lr-beta} is equivalent to the Max-Cut program~\eqref{eqn-warmup-maxcut}. \Cref{sec-lem-pp-proof} presents another equivalent formulation that maximizes the first absolute moment \citep{VAr17}.

We now provide evidence that \eqref{eqn-uncoupled-lr-beta} is not amenable to existing stochastically perturbed gradient methods when $n = \tilde{\Omega} (d)$. We focus in particular on the infinite-sample limit of (\ref{eqn-uncoupled-lr-beta}):
\begin{align}
F(\bbeta) = \EE_{\bx \sim \rho} ( |\bbeta^{\top} \bx| - 1 )^2,
\label{eqn-gap-lda}
\end{align}
where $\rho = \frac{1}{2} N(\bmu^{\star} , \bSigma^{\star}) +  \frac{1}{2} N( -\bmu^{\star} , \bSigma^{\star}) $. In the following theorem, we analyze the non-convex landscape of~\eqref{eqn-gap-lda} and related problems, showing they possess many spurious critical points, which are not strict saddles and are in addition uncorrelated with $\bmu^\star$. The proof appears in \Cref{sec-lem-gap-stein-proof}.

\begin{theorem}[Spurious critical point]\label{lem-gap-stein}
	Let $\bx \sim  \frac{1}{2} N(\bmu^{\star} , \bSigma^{\star}) +  \frac{1}{2} N( -\bmu^{\star} , \bSigma^{\star}) $ with some $\bmu^{\star} \in \RR^d$ and $\bSigma^{\star} \succ 0$. Let $f:\RR\to\RR$ be continuous, with $f(x) = f(-x)$ for all $x$. Assume that $f$ is twice continuously differentiable in $\RR \backslash \{ 0 \}$; $\liminf\limits_{x \to +\infty} f'(x) > 0$; $\EE  f(\bbeta^{\top} \bx)$, $\EE [\bx  f'(\bbeta^{\top} \bx) ]$ and $\EE [ \bx \bx^{\top}  f''(\bbeta^{\top} \bx)  ]$ are all well-defined. In addition, suppose that one of the following hold:
	\begin{enumerate}
		\item $\lim\limits_{x \to 0+} f'(x) < 0$;
		\item $f''(0)$ exists and $f''(0) < 0$.
	\end{enumerate}
	Define $F(\bbeta) = \EE f(\bbeta^{\top} \bx)$ for $\bbeta \in \RR^d$. For any $\bbeta \neq \bm{0}$ that satisfies $\langle \bbeta , \bmu^{\star} \rangle = 0$, there exists $t_0 > 0$ and $a \geq 0$ such that $\nabla F(t_0 \bbeta) = \bm{0}$ and $\nabla^2 F(t_0\bbeta) = a ( \bSigma^{\star} \bbeta )  ( \bSigma^{\star} \bbeta )^{\top} \succeq 0$.
\end{theorem}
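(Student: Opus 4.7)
The plan is to exploit the evenness of $f$ together with Gaussian symmetry to reduce $F$ to a function of just two scalars, and then to use the heat equation to obtain the Hessian cancellations that yield the claimed rank-one form. First I would write $\bx = \bmu^{\star}\epsilon + \bSigma^{\star 1/2}\bg$ with $\epsilon$ Rademacher independent of $\bg \sim N(\bm{0}, \bI_d)$; a direct computation, using that $f$ is even and $N(0, \sigma^2)$ is symmetric, collapses the dependence of $F$ on $\bbeta$ to
\[
F(\bbeta) = \tilde F(s, \tau), \qquad s := \bbeta^{\top}\bmu^{\star},\ \ \tau := \bbeta^{\top}\bSigma^{\star}\bbeta, \ \  \tilde F(s, \tau) := \EE\, f\bigl(s + \sqrt{\tau}\,Z\bigr),\ \ Z \sim N(0,1),
\]
and $\tilde F$ is even in $s$. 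Applying the chain rule to $F(\bbeta) = \tilde F(g_1(\bbeta), g_2(\bbeta))$ with $g_1 = \bbeta^{\top}\bmu^{\star}$, $g_2 = \bbeta^{\top}\bSigma^{\star}\bbeta$ gives
\[
\nabla F = \tilde F_s\,\bmu^{\star} + 2\tilde F_\tau\,\bSigma^{\star}\bbeta, \quad
\nabla^2 F = \tilde F_{ss}\bmu^{\star}\bmu^{\star\top} + 2\tilde F_{s\tau}\bigl[\bmu^{\star}(\bSigma^{\star}\bbeta)^{\top} + (\bSigma^{\star}\bbeta)\bmu^{\star\top}\bigr] + 4\tilde F_{\tau\tau}(\bSigma^{\star}\bbeta)(\bSigma^{\star}\bbeta)^{\top} + 2\tilde F_\tau\,\bSigma^{\star}.
\]
Along the ray $\{t\bbeta : t > 0\}$ with $\bbeta \perp \bmu^{\star}$ one has $s \equiv 0$, and evenness of $\tilde F$ in $s$ forces $\tilde F_s(0, \tau) = \tilde F_{s\tau}(0, \tau) = 0$. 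Hence $\nabla F(t_0\bbeta) = \bm 0$ is equivalent to $\tilde F_\tau(0, t_0^2\sigma^2) = 0$ with $\sigma^2 := \bbeta^{\top}\bSigma^{\star}\bbeta > 0$.

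Next I would locate a positive root by an intermediate-value argument. Defining $h(u) := \EE[Z f'(uZ)]$ yields $\tilde F_\tau(0, \tau) = h(\sqrt{\tau})/(2\sqrt{\tau})$, so the task is to find a positive zero of $h$. For small $u > 0$: in case~1, dominated convergence yields $h(u) \to \EE|Z|\cdot\lim_{x\to 0^+}f'(x) < 0$ (using that $Zf'(uZ) \to |Z|\lim_{x\to 0^+}f'(x)$ pointwise on $\{Z \neq 0\}$ by the oddness $f'(0^-) = -f'(0^+)$); in case~2, classical Stein gives $h(u) = u\,\EE[f''(uZ)] \sim f''(0)\,u < 0$. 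For large $u$: oddness of $f'$ on $\RR\setminus\{0\}$ combined with $\liminf_{x \to +\infty}f'(x) > 0$ implies $\liminf_{u \to \infty} Zf'(uZ) \geq c|Z|$ a.s.\ for some $c > 0$, so Fatou gives $\liminf_{u\to\infty} h(u) \geq c\,\EE|Z| > 0$. Continuity of $h$ together with IVT then yields $u_0 := \inf\{u > 0: h(u) = 0\} \in (0, \infty)$, with $h < 0$ on $(0, u_0)$; set $t_0 := u_0/\sigma$.

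The Hessian cancellation now follows from the \emph{heat equation} $\tilde F_\tau = \tfrac{1}{2}\tilde F_{ss}$, which holds for all $\tau > 0$ because $\tilde F(\cdot, \tau) = p_\tau * f$ with $p_\tau$ the centered Gaussian density of variance $\tau$, \emph{regardless of the regularity of $f$}. This identity elegantly bypasses the separate treatment of cases~1 and~2: at the critical point, $\tilde F_\tau(0, \tau^*) = 0$ automatically enforces $\tilde F_{ss}(0, \tau^*) = 0$, killing both the $\bmu^{\star}\bmu^{\star\top}$ and the $\bSigma^{\star}$ terms in the Hessian formula. What remains is
\[
\nabla^2 F(t_0\bbeta) = 4 t_0^2\,\tilde F_{\tau\tau}(0, \tau^*)\,(\bSigma^{\star}\bbeta)(\bSigma^{\star}\bbeta)^{\top}.
\]
Differentiating $\tilde F_\tau(0, \tau) = h(\sqrt{\tau})/(2\sqrt{\tau})$ in $\tau$ and using $h(u_0) = 0$ simplifies the coefficient to $a = h'(u_0)/\sigma^2$. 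Because $u_0$ is the smallest positive zero of the continuous function $h$ with $h < 0$ on $(0, u_0)$, the left-difference-quotient $(h(u_0) - h(u_0 - \delta))/\delta$ is strictly positive for small $\delta > 0$; differentiability of $h$ at $u_0$ (via dominated convergence under the standing integrability assumption on $\EE[\bx\bx^{\top}f''(\bbeta^{\top}\bx)]$) then yields $h'(u_0) \geq 0$, hence $a \geq 0$.

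The main technical obstacle I anticipate is the case~1 bookkeeping: the naive identity $\nabla^2 F(\bbeta) = \EE[\bx\bx^{\top}f''(\bbeta^{\top}\bx)]$ (classical $f''$) misses a Dirac contribution from the jump of $f'$ at $0$, so a direct component-wise computation would have to track boundary terms produced by integration by parts on either side of the origin. The heat-equation route sidesteps this because $\tilde F$ is smooth in $(s, \tau)$ for $\tau > 0$ and any integrable $f$. A subsidiary issue is ensuring existence of $h'(u_0)$ in both cases; this follows from dominated convergence combined with the stated integrability assumption.
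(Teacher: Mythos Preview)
Your proposal is correct and takes a genuinely different route from the paper's proof.

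The paper works component-wise: for $\bu \perp \bSigma^{\star}\bbeta$ it observes that $\bu^{\top}\bx$ and $\bbeta^{\top}\bx$ are independent (since $\bbeta^{\top}\bmu^{\star}=0$ makes $\bbeta^{\top}\bx$ depend only on $(\bSigma^{\star 1/2}\bbeta)^{\top}\bz$), deduces $\langle\bu,\nabla F(t_0\bbeta)\rangle=0$ and $\langle\bu,\nabla^2 F(t_0\bbeta)\bu\rangle=\EE(\bu^{\top}\bx)^2\cdot\EE f''(t_0\bbeta^{\top}\bx)$, and kills the second factor via \emph{Stein's lemma} $\EE f''(X)=\sigma^{-2}\EE[Xf'(X)]$ at the critical point. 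You instead compress $F$ into $\tilde F(s,\tau)$, read off gradient and Hessian by chain rule, and use the \emph{heat equation} $\tilde F_\tau=\tfrac12\tilde F_{ss}$ to get the cancellation $\tilde F_{ss}(0,\tau^*)=0$. The two identities are of course the same Gaussian integration-by-parts; the paper's packaging is perhaps more direct for readers who think in terms of the independence structure, while yours is more structural and makes the rank-one form of the Hessian pop out of a single formula. Your observation that the heat equation holds at the level of the smooth function $\tilde F$ and hence ``sidesteps'' the distributional-derivative bookkeeping in case~1 is a genuine advantage: the paper writes $\nabla^2 F(\bbeta)=\EE[\bx\bx^{\top}f''(\bbeta^{\top}\bx)]$ and then applies Stein with $g=f'$, both of which are delicate when $f'$ has a jump at $0$; your route avoids ever invoking classical $f''$ at that step. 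The $a\ge 0$ arguments are essentially the same (first-zero $\Rightarrow$ nonnegative one-sided derivative), just phrased through $h'(u_0)$ versus $\bbeta^{\top}\nabla^2 F(t_0\bbeta)\bbeta$.
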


\Cref{lem-gap-stein} asserts that bad critical points of $F(\bbeta) = \EE_{\bx \sim \rho} f(\bbeta^{\top} \bx)$ exist for quite general $f$'s.
For example, $f(x) = (|x| - 1)^2$ satisfies the first condition and $f(x) = (x^2 - 1)^2$ satisfies the second condition. For either of them, there exists $\bv \perp \bmu^{\star}$ such that $\nabla F( \bv ) = \bm{0}$ and $\nabla^2 F(\bv) \succeq 0$. Consequently, first-order algorithms such as the gradient descent or its perturbed variants \citep{JGN17} may get trapped near $\bv$. Such points provide trivial linear classifiers since they are orthogonal to $\bmu^\star$.
Moreover, escaping the spurious critical point $\bv$ would require higher-order information of the loss function, whose concentration would likely require more than $\tilde{\Omega} (d)$ samples.

\label{sec:gap}
\section{Multi-class $T_2$ mixture and a $k$-means algorithm}\label{sec-gmm-k}

In this section, we turn to general mixture models with multiple components and propose a new version of the $k$-means algorithm with consistency guarantees.

\subsection{From Gaussian MLE to $k$-means}

To begin with, we let $\{ ( \bx_i , y_i^{\star} ) \}_{i=1}^n \subseteq \RR^d \times [K]$ be i.i.d.~samples generated from the model
\begin{align}
\PP ( y_i^{\star} = j ) = \pi_j^\star \qquad\text{and}\qquad \bx_i | ( y_i^{\star} = j ) \sim N( \bmu^{\star}_j , \bSigma^{\star} ) .
\label{eqn-joint-model-k}
\end{align}
The marginal distribution of $\bx_i$ is $\sum_{j=1}^K \pi_j^{\star} N(\bmu^{\star}_j , \bSigma^{\star}) $, a mixture of $K$ Gaussians with the same covariance matrix. The mean vectors $\{ \bmu^{\star}_j \}_{j=1}^K \subseteq \RR^d$, covariance matrix $\bSigma^{\star}  \succ 0$ and mixing probabilities $\{ \pi_j^\star \}_{j=1}^K$ are unknown. Only $\{ \bx_i \}_{i=1}^n$ are observable. The goal of clustering is to recover the latent variables $\{ y_i^{\star} \}_{i=1}^n$ from the data $\{ \bx_i \}_{i=1}^n$.

Define $\bX = (\bx_1,\cdots,\bx_n)^{\top} \in \RR^{n \times d}$ to be the  data matrix and $\bY^{\star} \in \{ 0, 1 \}^{n\times K}$ to be the true class membership matrix with $y_{ij}^{\star} = \mathbf{1}_{ \{ y_i^{\star} = j \} } $. Similar to \eqref{eqn-likelihood}, the complete-data likelihood function is
\begin{align}
L (  \bM, \bSigma , \bpi ; \bX ,  \bY) = \prod_{i=1}^{n} \prod_{j=1}^{K} [ \pi_j \phi ( \bx_i, \bmu_j , \bSigma ) ]^{
	\mathbf{1}_{ \{ y_i = j \} } },
\label{eqn-likelihood-k}
\end{align}
where $\bY \in \{ 0, 1 \}^{n\times K},$ $\bM = ( \bmu_1 , \cdots , \bmu_K ) \in \RR^{d \times K}$, $\bSigma \in \RR^{d\times d}$ and
$\bpi = (\pi_1,\cdots,\pi_K )^{\top} \in \RR^K$ are variables. One may estimate $\bY^{\star}$ by maximizing the function
\begin{align}
\bY \mapsto \max _{
\substack{
	\bM \in \RR^{d\times K},~ \bSigma \succ 0 \\
 \bpi \in [0, 1]^n, ~\bpi^{\top} \bm{1}_n = 1  }
} \{ \log L (  \bM, \bSigma , \bpi ; \bX ,  \bY) \}
\label{eqn-nll}
\end{align}
over $\{ \bY \in \{ 0, 1 \}^{n\times K} :~ \bY \bm{1}_K = \bm{1}_n  \}$. We will simplify the above expression. To begin with, let $\bJ = \bI - n^{-1} \mathbf{1}_n \mathbf{1}_n^{\top}$ be the centering matrix. Then $\bJ \bX$ is the matrix of centered data $\{ \bx_i - \bar{\bx} \}_{i=1}^n$ with $\bar{\bx} = n^{-1} \sum_{i=1}^{n} \bx_i$, and
\begin{align}
\widetilde{\bSigma} = n^{-1} (\bJ\bX)^{\top} (\bJ \bX) = n^{-1} \bX^{\top} \bJ \bX
\label{eqn-sample-cov}
\end{align}
is the sample covariance matrix. When $\widetilde{\bSigma} $ is non-singular, we can define the whitened data
\begin{align}
\widehat{\bx}_i = \widetilde{\bSigma}^{-1/2} (\bx_i - \bar{\bx}) ,\qquad  i \in [n]
\label{eqn-sample-whitened}
\end{align}
and the associated data matrix $\widehat{\bX} = (\widehat{\bx}_1,\cdots,\widehat{\bx}_n)^{\top} \in \RR^{n\times d}$. In matrix form, $\widehat{\bX} = \bJ \bX \widetilde{\bSigma}^{-1/2}$, $\widehat{\bX}^{\top} \bm{1}_n = \bm{0}$ and $n^{-1} \widehat{\bX}^{\top} \widehat{\bX} = \bI_d$.
\Cref{lem-joint-mle} below presents a convenient expression of the function \eqref{eqn-nll}. See \Cref{lem-joint-mle-proof} for its proof.

\begin{lemma}\label{lem-joint-mle}
	For any $\bY \in \{ 0, 1 \}^{n\times K}$ satisfying $\bY \mathbf{1}_K = \mathbf{1}_n$, define $\widehat\bp = \bY^{\top} \mathbf{1}_n / n$ and $\bD =  \diag(n \widehat\bp )$. The function in \eqref{eqn-nll} is equal to
	\begin{align}
	 n \sum_{j=1}^{K} \widehat p_j \log \widehat p_j  - \frac{n}{2} \log\det
	( \bI - n^{-1} \widetilde\bX^{\top} \bY \bD^{\dagger} \bY^{\top}  \widetilde\bX ) + \mathrm{const} ,
	\label{eqn-lem-joint-mle}
	\end{align}
	where $\mathrm{const}$ does not depend on $\bY$.
\end{lemma}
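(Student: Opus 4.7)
The plan is to reduce the expression in~\eqref{eqn-nll} step by step by maximizing $\log L$ in the order $\bpi \to \bM \to \bSigma$. Write the log-likelihood as
\[
\log L(\bM,\bSigma,\bpi;\bX,\bY)
= \sum_{j=1}^K n_j \log \pi_j
 - \frac{n}{2}\log\det\bSigma - \frac{1}{2}\sum_{i=1}^n\sum_{j=1}^K y_{ij}(\bx_i-\bmu_j)^\top\bSigma^{-1}(\bx_i-\bmu_j) + \mathrm{const},
\]
where $n_j = (\bY^\top\mathbf{1}_n)_j = n\widehat p_j$. Maximizing over the simplex $\{\bpi:\bpi^\top\mathbf{1}=1\}$ via Lagrange multipliers yields $\widehat\pi_j = \widehat p_j$, contributing $\sum_j n_j\log\widehat p_j = n\sum_j \widehat p_j \log\widehat p_j$. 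Maximizing over $\bmu_j$ with $\bSigma$ fixed is a weighted least-squares, giving the cluster means $\widehat\bmu_j = \frac{1}{n_j}\sum_i y_{ij}\bx_i$, i.e., in matrix form $\widehat\bM = \bX^\top\bY\bD^\dagger$.

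Next I would collect the residual quadratic form into a single within-cluster scatter matrix
\[
\widehat\bSigma_w = \frac{1}{n}\sum_{i,j} y_{ij}(\bx_i-\widehat\bmu_j)(\bx_i-\widehat\bmu_j)^\top = \frac{1}{n}(\bX - \bY\widehat\bM^\top)^\top(\bX - \bY\widehat\bM^\top).
\]
The key algebraic observation is that $\bP_{\bY}:= \bY\bD^\dagger\bY^\top \in\RR^{n\times n}$ is the orthogonal projection onto the column space of $\bY$: indeed $\bY^\top\bY = \bD$ implies $\bP_{\bY}^2 = \bP_{\bY}$ and $\bP_{\bY}^\top = \bP_{\bY}$. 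Thus $\widehat\bSigma_w = \tfrac{1}{n}\bX^\top(\bI-\bP_{\bY})\bX$. Maximizing the remaining Gaussian log-likelihood $-\tfrac{n}{2}\log\det\bSigma - \tfrac{n}{2}\Tr(\bSigma^{-1}\widehat\bSigma_w)$ over $\bSigma\succ 0$ recovers the classical $\widehat\bSigma = \widehat\bSigma_w$, and the optimal value contributes $-\tfrac{n}{2}\log\det\widehat\bSigma_w$ up to an additive constant independent of $\bY$.

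The final step is the key rewriting that produces the whitened-data form in~\eqref{eqn-lem-joint-mle}. Since $\bY\mathbf{1}_K = \mathbf{1}_n$, we have $\bP_{\bY}\mathbf{1}_n = \mathbf{1}_n$, hence $(\bI-\bP_{\bY})\bJ = \bI - \bP_{\bY} = \bJ(\bI-\bP_{\bY})\bJ$. Therefore $\widehat\bSigma_w = \tfrac{1}{n}(\bJ\bX)^\top(\bI-\bP_{\bY})(\bJ\bX)$. Substituting $\bJ\bX = \widehat\bX\,\widetilde\bSigma^{1/2}$ (well-defined a.s.\ when $n>d$) and using $n^{-1}\widehat\bX^\top\widehat\bX = \bI_d$, I get
\[
\widehat\bSigma_w = \widetilde\bSigma^{1/2}\bigl(\bI_d - n^{-1}\widehat\bX^\top \bY\bD^\dagger\bY^\top\widehat\bX\bigr)\widetilde\bSigma^{1/2}.
\]
Taking $\log\det$ absorbs $\log\det\widetilde\bSigma$ into the $\bY$-free constant and gives~\eqref{eqn-lem-joint-mle}. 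I expect the only mildly delicate point to be the bookkeeping around degenerate cases (empty clusters, where some $n_j=0$ so $\bD$ is singular and $\bD^\dagger$ must be interpreted carefully), which is handled by restricting to the support of $\widehat\bp$ and noting the corresponding rows/columns of $\bY\bD^\dagger\bY^\top$ vanish so all identities above go through as stated.
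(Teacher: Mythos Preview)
Your proposal is correct and follows essentially the same route as the paper's own proof: maximize in $\bpi$, then $\bM$, then $\bSigma$; identify the within-cluster scatter as $n^{-1}\bX^\top(\bI-\bY\bD^\dagger\bY^\top)\bX$; use $\bY\bD^\dagger\bY^\top\mathbf{1}_n=\mathbf{1}_n$ to insert the centering matrix $\bJ$; and then whiten to factor out $\log\det\widetilde\bSigma$ as a $\bY$-independent constant. The paper's argument is line-for-line the same, though it writes $\widetilde\bX$ where you (consistently with the section's definitions) write $\widehat\bX$.
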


Instead of maximizing the function in \eqref{eqn-lem-joint-mle} directly, we will work on a simpler one derived from that. First, let's drop the negative entropy term $\sum_{j=1}^{K} \widehat p_j \log \widehat p_j $.
The remaining term
\begin{align}
-\log\det
( \bI - n^{-1} \widehat{\bX}^{\top} \bY \bD^{\dagger} \bY^{\top}  \widehat{\bX} )
\label{eqn-lem-joint-mle-1}
\end{align}
is invariant under non-degenerate affine transforms of the data. Since $- \log \det (\bI - \bA) \geq  \Tr (\bA)$ for any $\bA \prec \bI$, we propose to maximize a simple surrogate objective
\[
 \Tr ( n^{-1} \widehat{\bX}^{\top} \bY \bD^{\dagger} \bY^{\top} \widehat{\bX} )
\]
that lower bounds \eqref{eqn-lem-joint-mle-1}. Finally, we obtain an integer program
\begin{align}
\max_{
	\bY \in \cY_{n,K} }
\langle
\widehat{\bX} \widehat{\bX}^{\top}, \bY (\bY^{\top} \bY)^{\dagger} \bY^{\top}
\rangle  \quad \text{where} \quad \cY_{n,K} = \{ \bY \in \{ 0, 1 \}^{n \times K} :~ \bY \bm{1}_K = \bm{1}_n \} .
\label{eqn-kmeans-0}
\end{align}
It looks very similar to the Max-Cut program \eqref{eqn-warmup-maxcut}. We present their relation under Model (\ref{eqn-joint-model-intro}) in \Cref{sec-kmeans-maxcut}.

Next, we relate (\ref{eqn-kmeans-0}) to $k$-means clustering of \textit{whitened} data $\{ \widehat{\bx}_i \}_{i=1}^n$. See \Cref{sec-lem-kmeans-proof} for the proof.
\begin{lemma}\label{lem-kmeans}
The program (\ref{eqn-kmeans-0}) is equivalent to
\begin{align}
\min_{ \bY \in \cY_{n,K} }
\bigg\{
\sum_{i=1}^{n} \sum_{j=1}^{K} y_{ij} \bigg\| \widehat{\bx}_i -
\frac{
	\sum_{s=1}^{n} y_{sj} \widehat{\bx}_s
}{
	\sum_{s=1}^{n} y_{sj}
} \bigg\|_2^2
\bigg\} .
\label{eqn-kmeans}
\end{align}
\end{lemma}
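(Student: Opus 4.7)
The plan is to unpack both sides by working with cluster-indicator notation and observe that they differ only by a constant. Specifically, for any $\bY \in \cY_{n,K}$, denote the clusters by $C_j = \{ i \in [n]:~ y_{ij} = 1 \}$, their sizes by $n_j = |C_j|$, and the empty-cluster indices by $\mathcal{E} = \{ j :~ n_j = 0 \}$. Then $\bY^\top \bY = \diag( n_1, \ldots, n_K )$, so its Moore-Penrose pseudoinverse is the diagonal matrix with $j$-th entry $1 / n_j$ when $n_j > 0$ and $0$ when $j \in \mathcal{E}$.

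With this in hand, I would first compute the matrix $\bP := \bY (\bY^\top \bY)^\dagger \bY^\top$ entrywise: $P_{is} = n_{j(i)}^{-1} \mathbf{1}_{\{ y_i = y_s \}}$ whenever $n_{j(i)} > 0$, where $j(i)$ denotes the unique $j$ such that $y_{ij}=1$ (for $i \in C_j$ with $j \in \mathcal{E}$ the entries $P_{is}$ vanish by the zero-row structure of $\bY$). Using this, I would then expand
\begin{align*}
\langle \widehat{\bX} \widehat{\bX}^\top, \bP \rangle
= \sum_{j \notin \mathcal{E}} \frac{1}{n_j} \sum_{i, s \in C_j} \widehat{\bx}_i^\top \widehat{\bx}_s
= \sum_{j \notin \mathcal{E}} n_j \| \bar{\bx}_j \|_2^2,
\end{align*}
where $\bar{\bx}_j := n_j^{-1} \sum_{s \in C_j} \widehat{\bx}_s$ is the cluster centroid for $j \notin \mathcal{E}$.

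Next, I would expand the $k$-means cost in \eqref{eqn-kmeans}, interpreting the inner summand as $0$ whenever $\sum_{s=1}^n y_{sj} = 0$ (consistent with the $y_{ij}$ factor being $0$ for all $i$ in that case):
\begin{align*}
\sum_{j \notin \mathcal{E}} \sum_{i \in C_j} \| \widehat{\bx}_i - \bar{\bx}_j \|_2^2
= \sum_{i=1}^n \| \widehat{\bx}_i \|_2^2 - \sum_{j \notin \mathcal{E}} n_j \| \bar{\bx}_j \|_2^2
= \Tr( \widehat{\bX}^\top \widehat{\bX} ) - \langle \widehat{\bX} \widehat{\bX}^\top, \bP \rangle.
\end{align*}
Since the first term $\Tr( \widehat{\bX}^\top \widehat{\bX} )$ does not depend on $\bY$, maximizing $\langle \widehat{\bX} \widehat{\bX}^\top, \bY (\bY^\top \bY)^\dagger \bY^\top \rangle$ over $\cY_{n,K}$ is equivalent to minimizing the objective in \eqref{eqn-kmeans}, completing the proof.

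There is no real obstacle here; the only subtle point is to handle the possibly empty clusters consistently on both sides via the pseudoinverse convention. Everything else is a direct entrywise computation of $\bP$ plus the standard $k$-means bias-variance decomposition $\sum_{i \in C_j} \|\widehat{\bx}_i - \bar{\bx}_j\|_2^2 = \sum_{i \in C_j} \|\widehat{\bx}_i\|_2^2 - n_j \|\bar{\bx}_j\|_2^2$.
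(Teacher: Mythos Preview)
Your proposal is correct and takes essentially the same approach as the paper: both show that the $k$-means cost plus the objective of \eqref{eqn-kmeans-0} equals the constant $\Tr(\widehat{\bX}^\top \widehat{\bX})$. The paper's write-up is slightly more matrix-theoretic---it observes that $\bP = \bY(\bY^\top\bY)^\dagger\bY^\top$ is a projection and identifies the $k$-means cost directly as $\|(\bI-\bP)\widehat{\bX}\|_{\mathrm{F}}^2 = \|\widehat{\bX}\|_{\mathrm{F}}^2 - \langle \widehat{\bX}\widehat{\bX}^\top,\bP\rangle$---whereas you arrive at the same identity via an entrywise expansion and the cluster bias--variance decomposition; the content is the same.
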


We will recover $\bY^{\star}$ by solving \eqref{eqn-kmeans}. Equivalently, we could also solve
\begin{align}
\min_{ \bM \in \RR^{d \times K}   }
\bigg\{
\sum_{i=1}^{n} \min_{j \in [K]} \| \widehat{\bx}_i - \bmu_j \|_2^2
\bigg\} ,
\label{eqn-kmeans-1}
\end{align}
where $\bmu_j$ is the $j$-th column of $\bM$. Note that any optimal solution $ \widehat{\bM}$ of (\ref{eqn-kmeans-1}) corresponds to a optimal solution $\widehat{\bY}$ of (\ref{eqn-kmeans}) with $\widehat{y}_{ij} = 1$ if $j \in \argmin_{k \in [K]} \| \widehat{\bx}_i - \bmu_k \|_2^2$ (using any tie-breaking rule). Conversely, any optimal solution $\widehat{\bY}$ of (\ref{eqn-kmeans}) is associated with an optimal solution $\widehat{\bM} = \bX^{\top} \widehat{\bY} (\widehat{\bY}^{\top} \widehat{\bY})^{\dagger}$ of (\ref{eqn-kmeans-1}). The $j$-th column of $\widehat{\bM}$ is $\sum_{s=1}^{n} \widehat y_{ij} \widehat{\bx}_i  / \sum_{i=1}^{n} \widehat y_{ij}$.



\subsection{Mixture of $T_2$ distributions: when are clusters identifiable}\label{sec-t2}

From the Gaussian likelihood we have derived a $k$-means formulation for clustering. Now we extend beyond mixtures of Gaussians and introduce a broader family for theoretical analysis.

\begin{definition}[Mixture model]\label{defn-kmeans-model}
Suppose that $\bpi^{\star} \in [0, 1]^K$ with $\bm{1}_K^{\top} \bpi^{\star} = 1$, $\bM^{\star} = (\bmu_1^{\star},\cdots,\bmu_K^{\star}) \in \RR^{d \times K}$, $\bSigma^{\star} \in \RR^{d \times d}$ is positive semi-definite, and $\QQ \in \mathscr{P}(\RR^d)$ is zero-mean and isotropic.
We write $\bx \sim \mathrm{MM}(\bpi^{\star}, \bM^{\star}, \bSigma^{\star}, \QQ )$ if $\bx \in \RR^d$ is a random vector with stochastic decomposition
	\[
	\bx = \bmu^{\star}_{y^{\star}} + \bSigma^{\star 1/2} \bz
	\]
	for independent random elements $y^{\star} \in [K]$ and $\bz \in \RR^d$ satisfying $\PP (y^{\star} = j) = \pi_j^{\star}$, $\forall j \in [K]$ and $\bz \sim \QQ$.
\end{definition}

The Gaussian mixture model (\ref{eqn-joint-model-k}) is clearly a special case of the above, with $\QQ= N(\bm{0}, \bI_d)$. In general, the mixture distribution $\mathrm{MM}(\bpi^{\star}, \bM^{\star}, \bSigma^{\star}, \QQ )$ is the convolution of a discrete distribution $\sum_{j=1}^K \pi_j^{\star} \delta_{\bmu_j^{\star}}$ and the law of $\bSigma^{\star 1/2} \bz$ with $\bz \sim \QQ$.
Intuitively, the $K$ clusters are identifiable if
\begin{enumerate}
\item The centers $\{ \bmu_j^{\star} \}_{j=1}^K$ are well-separated;
\item The distribution $\QQ$ itself is not a mixture of well-separated distributions.
\end{enumerate}
In particular, things could break down when $\QQ$ is discrete. For instance, the Rademacher distribution $\frac{1}{2} \delta_{-1} + \frac{1}{2} \delta_{1}$ has at least two different representations:
\begin{enumerate}
\item $d = 1$, $K = 2$, $\bpi^{\star} = (1/2, 1/2)$, $\bM^{\star} = (1, -1)$, $\bSigma^{\star} = 0$, $\QQ = \frac{1}{2} \delta_{-1} + \frac{1}{2} \delta_{1}$;
\item $d = 1$, $K = 1$, $\bpi^{\star} = 1$, $\bM^{\star} = 0$, $\bSigma^{\star} = 1$, $\QQ = \frac{1}{2} \delta_{-1} + \frac{1}{2} \delta_{1}$.
\end{enumerate}
One cannot even uniquely identify the number of components. To bypass these pathological examples, we focus on the following class of distributions:

\begin{definition}[$T_2$ distributions]\label{defn-kmeans-t2}
For $d \in \ZZ_+$ and $\sigma > 0$, we define
	\[
	T_2(\sigma) = \{ \PP \in \mathscr{P}(\RR^d):~ W_2 ( \PP, \PP' ) \leq \sqrt{ 2 \sigma^2 D ( \PP' \| \PP ) },~~\forall \PP' \in \mathscr{P}(\RR^d) \}.
	\]
With slight abuse of notation, we write $\bz \in T_2(\sigma)$ if $\bz \in \RR^d$ is a random element whose distribution belongs to $T_2(\sigma)$.
\end{definition}

The inequality in the above definition is a \emph{transportation cost inequality} that is closely related to dimension-free concentration phenomena \citep{Tal96,Goz10}. The $T_2$ family includes many common distributions and is extensively studied in high-dimensional probability. \cite{OVi00} shows that any distribution satisfying the log-Sobolev inequality belongs to $T_2$. Consequently, any strongly log-concave distribution with density function $e^{-f(\bx)}$ for some strongly convex $f$ belongs to $T_2$. In particular, the standard normal distribution $N(\bm{0}, \bI_d)$ is $T_2(1)$. If a multivariate distribution has independent $T_2$ coordinates, then it also belongs to $T_2$. 
Since mixtures of Gaussians and mixtures of log-concave distributions have been widely used as test beds for clustering \citep{VWa04, AMc05, KSV08}, the $T_2$ distributions are naturally a broader family to study.

\begin{assumption}\label{as-kmeans-t2}
	$\QQ \in T_2(\sigma)$ for some constant $\sigma > 0$.
\end{assumption}

Next, we show the non-separability of $T_2$ distributions. Such distributions have non-trivial quantization error and thus cannot be a mixture of multiple well-separated distributions. Note that the quantity $\Tr ( \bA \bA^{\top} )$ measures the spread of $ \bmu + \bA \bz$ since $\bA \bA^{\top}$ resembles the covariance matrix. The quantization error increases as $\Tr(\bA \bA^{\top}) = \| \bA \|_{\mathrm{F}}^2$ grows. See Appendix \ref{proof-lem-kmeans-lower} for the proof.

	\begin{lemma}[Quantization error]\label{lem-kmeans-lower}
		For any $K \in \ZZ_+$, $d \in \ZZ_+$, $\sigma > 0$ and random vector $\bz \in \RR^d$ being zero-mean, isotropic and $T_2(\sigma)$, the inequality
		\begin{align*}
		\EE \Big( \min_{j \in [K]} \| \bmu + \bA \bz - \bmu_j \|_2^2 \Big)  \geq C K^{-5} \| \bA \|_{\mathrm{F}}^2
		\end{align*}
		holds for all $\bmu,\bmu_j \in \RR^d$ and $\bA \in \RR^{d \times d}$. Here $C>0$ is a constant determined by $\sigma$.
	\end{lemma}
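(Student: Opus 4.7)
The plan is to combine the transportation cost inequality for $\bz$ with the law of total variance, split on the ``effective rank'' of $\bA$, and reduce the remaining regime to a one-dimensional quantization lower bound. After translating, assume $\bmu = \bm{0}$ and set $X = \bA\bz$, so that $\Tr\cov(X) = \|\bA\|_{\mathrm{F}}^2$. Let $V_1,\ldots,V_K$ be the Voronoi cells of $\{\bmu_j\}_{j=1}^K$ and $p_j = \PP(X\in V_j)$. Since the conditional mean minimizes mean squared distance,
\begin{equation*}
\EE\bigl[ \min_j \|\bmu + \bA\bz - \bmu_j\|_2^2 \bigr] \;\geq\; \sum_j p_j\,\Tr\cov(X\mid V_j).
\end{equation*}

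The conditional law of $\bz$ given $X\in V_j$ has relative entropy $\log(1/p_j)$ with respect to the unconditional law; by $T_2(\sigma)$ its $W_2$ distance to the unconditional law is at most $\sigma\sqrt{2\log(1/p_j)}$. Pushing forward under the $\|\bA\|_2$-Lipschitz map $\bA$ and bounding the distance between means by $W_2$ gives $\|\bar X_j - \bar X\|_2^2 \leq 2\sigma^2\|\bA\|_2^2\log(1/p_j)$; summing and invoking the entropy bound $\sum_j p_j \log(1/p_j) \leq \log K$ yields $\sum_j p_j\|\bar X_j - \bar X\|_2^2 \leq 2\sigma^2\|\bA\|_2^2 \log K$. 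By the law of total variance,
\begin{equation*}
\EE\bigl[\min_j\|\bmu + \bA\bz - \bmu_j\|_2^2\bigr] \;\geq\; \|\bA\|_{\mathrm{F}}^2 - 2\sigma^2\|\bA\|_2^2 \log K.
\end{equation*}
If $\|\bA\|_{\mathrm{F}}^2 \geq 4\sigma^2\|\bA\|_2^2\log K$, this already exceeds $\|\bA\|_{\mathrm{F}}^2/2$ and dominates $CK^{-5}\|\bA\|_{\mathrm{F}}^2$.

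Otherwise, $\|\bA\|_2^2 \geq \|\bA\|_{\mathrm{F}}^2/(4\sigma^2\log K)$, and the variance concentrates along the top singular direction. Letting $\bu$ be the top left singular vector of $\bA$ and writing $\bu^\top\bA = a\bv^\top$ with $a = \|\bA\|_2$, projection yields
\begin{equation*}
\EE\bigl[\min_j\|\bmu + \bA\bz-\bmu_j\|_2^2\bigr] \;\geq\; \EE\bigl[\min_j(W - \bu^\top\bmu_j - \bu^\top\bmu)^2\bigr], \qquad W := a\,\bv^\top\bz.
\end{equation*}
Since $1$-Lipschitz pushforwards preserve the $T_2$ constant, $W$ is a one-dimensional $T_2(a\sigma)$ random variable of variance $a^2$. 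It then suffices to establish a 1D quantization lower bound: for any $T_2(\tau)$ random variable $U$ on $\RR$ with variance $s>0$ and any $K$ real centers, $\EE[\min_j(U-c_j)^2] \geq c(\tau/\sqrt{s})\,s/K^{4}$. Combined with $a^2 \geq \|\bA\|_{\mathrm{F}}^2/(4\sigma^2\log K)$, this produces $CK^{-5}\|\bA\|_{\mathrm{F}}^2$ after absorbing the $\log K$ factor into a lower power of $K$.

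The main obstacle is the 1D quantization bound, since a generic $T_2$ measure need not have a bounded density. The structural property to exploit is that $T_2$ rules out sharply multi-modal distributions: splitting mass between separated would-be atoms is cheap in relative entropy but expensive in $W_2$, violating the inequality. I would quantify this via a packing argument: if a union $E$ of $K$ intervals of half-length $\epsilon$ carried more than half of the mass of $U$, applying $T_2$ to the normalized restriction $U\mid E$ would force $W_2(U\mid E, U) \leq \tau\sqrt{2\log 2}$, which is inconsistent with the total variance $s$ of $U$ unless $K\epsilon$ is not too small relative to $\sqrt{s}$. A Markov step then converts this anti-covering statement into the required lower bound on $\EE[\min_j(U-c_j)^2]$.
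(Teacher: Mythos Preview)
Your reduction from $d$ dimensions to one dimension is correct and takes a genuinely different route from the paper. The paper splits on whether the span of the centers captures most of $\Tr(\bA\bA^\top)$: if not, projecting orthogonally to that span already gives $\tfrac12\|\bA\|_{\mathrm F}^2$; if so, it projects onto the leading eigenvector of $\bPi(\bA\bA^\top)\bPi$, losing a factor $1/K$. Your law-of-total-variance argument is cleaner: the $T_2$ bound on the conditional means together with $\sum_j p_j\log(1/p_j)\le\log K$ gives the between-cell variance bound $2\sigma^2\|\bA\|_2^2\log K$ directly, so in the ``high effective rank'' case you lose only $\log K$ rather than $K$. Both reductions land on the same one-dimensional statement you isolate, namely a $K^{-4}$ quantization lower bound for a zero-mean, unit-variance $T_2(\sigma)$ scalar.

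The gap is in your one-dimensional packing argument. From $\PP(U\in E)\ge\tfrac12$ and $T_2$ you correctly get $W_2(U\!\mid\!E,\,U)\le\tau\sqrt{2\log 2}$, but this is \emph{not} inconsistent with $\var(U)=s$ when $K\epsilon$ is tiny. The conditional law $U\!\mid\!E$ is supported on $K$ short intervals, yet those intervals may be spread across the whole range of $U$, so $U\!\mid\!E$ can have variance (and second moment) essentially equal to that of $U$ itself; the small $W_2$ distance then places no constraint on $\epsilon$. In other words, the global comparison ``$U\!\mid\!E$ versus $U$'' cannot see the widths of the intervals, only their aggregate placement.

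The paper extracts the needed one-dimensional bound via a \emph{local} $T_2$ argument instead. It shows (its ``continuity of $T_2$'' lemma) that whenever $\PP(Z\le t)$ is bounded away from $0$ and $1$, every interval $(t,t+r]$ carries mass at least $\min\{p^\star/4,\,r^2/(2c\sigma^2)\}$; the proof builds a perturbation that pushes a $\delta$-fraction of mass across that single gap, which costs $\Theta(\delta r^2)$ in $W_2^2$ but only $O(\delta^2)$ in KL, forcing $\delta\gtrsim r^2/\sigma^2$. An anti-concentration lemma (from sub-Gaussianity of $T_2$ variables) first pins down a bulk interval $[-\gamma,\gamma]$ with both tails of mass $\ge p^\star$; partitioning it into $K{+}1$ equal pieces, each has mass $\gtrsim K^{-2}$, and by pigeonhole one piece contains no center, contributing $\gtrsim(1/K)^2\cdot K^{-2}=K^{-4}$ to the quantization error. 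This is precisely the ``moving mass across a would-be gap is cheap in KL but expensive in $W_2$'' intuition you articulate, but applied to a single interval rather than to the whole set $E$; that localization is what makes the argument go through.
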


Consider a degenerate version of the mixture model $\mathrm{MM}(\bpi^{\star}, \bM^{\star}, \bSigma^{\star}, \QQ )$ with $K = 1$ cluster and hence $\bx = \bmu_1^{\star} + \bSigma^{\star 1/2} \bz$. According to \Cref{lem-kmeans-lower}, the $k$-means program
\begin{align*}
\min_{ \bM \in \RR^{d \times K}   }
\bigg\{
\sum_{i=1}^{n} \min_{j \in [K]} \| \bx_i - \bmu_j \|_2^2
\bigg\} .
\end{align*}
on i.i.d.~samples $\{ \bx_i \}_{i=1}^n$ incurs non-negligible loss when $\Tr(\bSigma^{\star})$ is bounded from below and Assumption \ref{as-kmeans-t2} holds. Such non-separability property makes it undesirable for $k$-means to partition a single cluster. Therefore, if one achieves a small loss on a mixture of multiple well-separated $T_2$ distributions, then the estimated labels are well-aligned with the truth.

We present useful properties of $T_2$, which directly follow from Theorem 3.4.7 in \cite{RSa13}.
\begin{lemma}\label{lem-kmeans-t2}
	Write $\bZ = (\bz_1,\cdots,\bz_n)^{\top} \in \RR^{n\times d}$ where $\{ \bz_i \}_{i=1}^n$ are i.i.d.~from $\QQ \in T_2(\sigma)$. There exists an absolute constant $C > 0$ such that the followings happen.
	\begin{enumerate}
		\item (Dimension-free concentration) For any function $f:~ \RR^{n \times d} \to \RR$ that is 1-Lipschitz with respect to the Frobenius norm,
		\begin{align*}
			\var [ f(\bZ) ] \leq C \sigma^2 \qquad\text{and}\qquad
			\PP \Big( | f( \bZ ) - \EE f(\bZ) | \geq t \Big) \leq C e^{ -t^2/(2 \sigma^2) } , \qquad \forall t \geq 0.
		\end{align*}
		\item (Sub-Gaussianity) $\| \bz_1 - \EE \bz_1 \|_{\psi_2} \leq C \sigma$.
		\item (One-dimensional projections) For any deterministic $\bu \in \SSS^{d - 1}$, $\langle \bu , \bz_1 \rangle$ is also $T_2(\sigma)$.
	\end{enumerate}
\end{lemma}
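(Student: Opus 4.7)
The plan is to reduce all three parts to the single reference, Theorem~3.4.7 of \cite{RSa13} (the Bobkov--Götze characterization of $T_2$), together with two standard structural facts about transportation-cost inequalities: tensorization and contraction under Lipschitz maps.

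First, I would establish tensorization: if $\QQ \in T_2(\sigma)$ on $\RR^d$, then the product law $\QQ^{\otimes n}$, viewed as a measure on $\RR^{n\times d}$ equipped with the Frobenius metric, also lies in $T_2(\sigma)$. This is the classical tensorization property of $T_2$ inequalities for the quadratic cost (Talagrand; see also the discussion around Theorem~3.4.7 in \cite{RSa13}), and the key point is that the Wasserstein-$2$ distance and the relative entropy both tensorize properly so that the constant $\sigma$ is preserved independent of $n$. With this in hand, the law of $\bZ$ is in $T_2(\sigma)$ on $(\RR^{n\times d},\|\cdot\|_{\mathrm{F}})$.

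Next, part (1) follows by applying Theorem~3.4.7 of \cite{RSa13} to this tensorized measure. That theorem gives the equivalence between the $T_2$ inequality and dimension-free sub-Gaussian concentration for $1$-Lipschitz functions, yielding $\PP(|f(\bZ) - \EE f(\bZ)| \geq t) \leq 2 e^{-t^2/(2\sigma^2)}$ for any $1$-Lipschitz $f$, which is the stated tail bound up to the constant $C$. The variance bound $\var[f(\bZ)] \leq C\sigma^2$ then follows by integrating the tail (or equivalently from the Poincaré inequality implied by $T_2$). For part (2), I would apply the concentration inequality from part (1) to the coordinate functions $\bZ \mapsto \langle \bu, \bz_1\rangle$ for arbitrary $\bu \in \SSS^{d-1}$, which are $1$-Lipschitz in the Frobenius metric. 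This directly gives $\|\langle \bu, \bz_1-\EE\bz_1\rangle\|_{\psi_2} \leq C\sigma$ uniformly in $\bu$, so $\|\bz_1 - \EE\bz_1\|_{\psi_2} \leq C\sigma$ by definition.

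For part (3), I would use the contraction principle for $T_2$ inequalities: if $\mathrm{Law}(\bz_1) \in T_2(\sigma)$ and $\Phi:\RR^d\to\RR^k$ is $1$-Lipschitz, then the pushforward $\Phi_\#\mathrm{Law}(\bz_1)$ is also in $T_2(\sigma)$. This follows because any coupling between measures on $\RR^k$ lifts (via the disintegration of couplings on $\RR^d$) to bound $W_2$ on the target by $W_2$ on the source, while the data-processing inequality controls the KL divergence in the reverse direction. Applied to the $1$-Lipschitz linear map $\bz \mapsto \langle \bu, \bz\rangle$ for $\bu \in \SSS^{d-1}$, this yields the claim.

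The only real subtlety is choosing the correct metric on $\RR^{n\times d}$ in the tensorization step so that a Frobenius-Lipschitz $f$ becomes Lipschitz on the product space with the right constant; once the tensor-product $T_2$ inequality on $(\RR^{n\times d},\|\cdot\|_{\mathrm{F}})$ is in place, everything else is a direct application of \cite{RSa13}. No additional estimates or probabilistic arguments are needed beyond what is cited.
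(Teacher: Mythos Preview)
Your proposal is correct and matches the paper's approach exactly: the paper simply states that all three parts ``directly follow from Theorem 3.4.7 in \cite{RSa13}'' without further elaboration, and your outline (tensorization of $T_2$, then the Bobkov--G\"otze equivalence for concentration, then the contraction principle for projections) is precisely the standard way to unpack that citation.
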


\subsection{Consistent recovery of clusters}\label{sec-kmeans-consistency}

To study the $k$-means program (\ref{eqn-kmeans}), we assume the data come from the mixture model $\mathrm{MM}(\bpi^{\star}, \bM^{\star}, \bSigma^{\star}, \QQ )$ in \Cref{defn-kmeans-model} with the $T_2$ condition in Assumption \ref{as-kmeans-t2}. Below we list other technical assumptions.

\begin{assumption}[Balancedness]\label{as-kmeans-balance}
	$K = O(1)$ and $\min_{j \in [K] } \pi_j^{\star} = \Omega(1)$.
\end{assumption}

\begin{assumption}[Signal strength]\label{as-kmeans-signal}
Define $\bar\bmu^{\star} = \sum_{j = 1}^K \pi_j^{\star} \bmu_j^{\star}$ and $\bV^{\star} = \bSigma^{\star -1/2} ( \bmu_1^{\star} - \bar\bmu^{\star}  , \cdots,  \bmu_K^{\star} - \bar\bmu^{\star}  )\in \RR^{d \times K}$. Assume that $\sigma_{K - 1} (\bV^{\star} ) \geq R > 0$.
\end{assumption}

Assumption \ref{as-kmeans-signal} ensures that the class centers are separated, which is necessary for any algorithm to achieve low misclassification rate. The following fact characterizes the separation, whose proof is in \Cref{proof-fact-kmeans-separation}.

\begin{fact}[Separation]\label{fact-kmeans-separation}
Under Assumption \ref{as-kmeans-signal}, $\| \bSigma^{\star -1/2}  ( \bmu_j^{\star} - \bmu_k^{\star} ) \|_2 \geq R$ holds for any $j \neq k$. This is sharp up to a constant factor: when $\bSigma^{\star} = \bI$ and $\pi^{\star}_j = 1/K$, $\bmu^{\star}_j = R \be_j$ for all $j \in [K]$, we have $\sigma_{K - 1} (\bV^{\star} ) = R$ and $\| \bSigma^{\star -1/2}  ( \bmu_j^{\star} - \bmu_k^{\star} ) \|_2 = \sqrt{2} R$ for all $j\neq k$.
\end{fact}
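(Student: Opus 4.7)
The plan is to exploit the fact that $\bpi^{\star}$ lies in the kernel of $\bV^{\star}$ and then invoke the singular-value condition on its orthogonal complement. First I would note
\[
\bV^{\star} \bpi^{\star} = \bSigma^{\star -1/2} \sum_{j=1}^K \pi_j^{\star} ( \bmu_j^{\star} - \bar\bmu^{\star} ) = \bm{0},
\]
directly from the definition of $\bar\bmu^{\star}$. Since Assumption~\ref{as-kmeans-signal} forces $\sigma_{K-1}(\bV^{\star}) \geq R > 0$, the kernel of $\bV^{\star}$ must be exactly one-dimensional and equal to $\mathrm{span}(\bpi^{\star})$. By the variational characterization of singular values, every $\bw \in \RR^K$ with $\bw \perp \bpi^{\star}$ satisfies $\| \bV^{\star} \bw \|_2 \geq R \| \bw \|_2$.

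Next I would apply this lower bound to $\be_j - \be_k$ for distinct $j,k \in [K]$. Decomposing $\be_j - \be_k = \bw + c \bpi^{\star}$ with $\bw \perp \bpi^{\star}$, the coefficient is
\[
c = \frac{\langle \be_j - \be_k, \bpi^{\star} \rangle}{\|\bpi^{\star}\|_2^2} = \frac{\pi_j^{\star} - \pi_k^{\star}}{\|\bpi^{\star}\|_2^2},
\]
so Pythagoras yields $\|\bw\|_2^2 = 2 - (\pi_j^{\star} - \pi_k^{\star})^2/\|\bpi^{\star}\|_2^2$. The elementary estimate $(\pi_j^{\star} - \pi_k^{\star})^2 \leq \pi_j^{\star 2} + \pi_k^{\star 2} \leq \|\bpi^{\star}\|_2^2$, which uses only non-negativity of the $\pi_i^{\star}$, then gives $\|\bw\|_2 \geq 1$. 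Combining the two steps, I conclude
\[
\| \bSigma^{\star -1/2}(\bmu_j^{\star} - \bmu_k^{\star}) \|_2 = \| \bV^{\star}(\be_j - \be_k) \|_2 = \| \bV^{\star} \bw \|_2 \geq R \| \bw \|_2 \geq R,
\]
which is the desired bound.

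For the sharpness claim, I would simply verify the worked example: with $\bSigma^{\star} = \bI$, $\bmu_j^{\star} = R \be_j$, and $\pi_j^{\star} = 1/K$, the centered means assemble into $\bV^{\star} = R \bigl( \bI_K - K^{-1} \bm{1}_K \bm{1}_K^{\top} \bigr)$, i.e.\ $R$ times the orthogonal projection onto $\bm{1}_K^{\perp}$. Its nonzero singular values all equal $R$, so $\sigma_{K-1}(\bV^{\star}) = R$, while $\|\bSigma^{\star -1/2}(\bmu_j^{\star} - \bmu_k^{\star})\|_2 = R \sqrt{2}$ for every $j \neq k$, showing that the constant $R$ in the lower bound cannot be improved by more than the factor $\sqrt{2}$. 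The argument is short and uses no deep machinery; the only real subtlety is confirming that $\bpi^{\star}$ spans the \emph{entire} kernel of $\bV^{\star}$, which is immediate from $\sigma_{K-1}(\bV^{\star}) > 0$.
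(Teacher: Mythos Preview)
Your proof is correct and follows essentially the same approach as the paper: both identify $\bpi^{\star}$ as spanning the kernel of $\bV^{\star}$, project $\be_j - \be_k$ onto its orthogonal complement, and show the projected vector has norm at least $1$. The only cosmetic difference is in that last step: the paper drops coordinates and uses a WLOG ordering $\pi_j^{\star} \leq \pi_k^{\star}$ to bound one surviving coordinate, whereas your Pythagoras-plus-$(\pi_j^{\star}-\pi_k^{\star})^2 \leq \pi_j^{\star 2}+\pi_k^{\star 2}$ argument is slightly cleaner and avoids the case distinction.
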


In addition, the lower bound on the singular value in Assumption \ref{as-kmeans-signal} forces the directions of $\{  \bSigma^{\star -1/2}  ( \bmu_j^{\star} - \bar\bmu^{\star}) \}_{j=1}^K$ to spread out. It makes sure that those $K$ vectors span a linear space of dimension $(K - 1)$, which is the largest possible as $\sum_{j=1}^{K} \pi^{\star}_j \bSigma^{\star -1/2}  ( \bmu_j^{\star} - \bar\bmu^{\star}) = \bm{0}$. Similar assumptions are commonly used in the study of latent variable models \citep{HKa13}.



To gauge the misclassification rate we extend the definition in \eqref{eqn-err-rate} to the multi-class case.

\begin{definition}[Misclassification error]
For $\bY^{(1)}, \bY^{(2)} \in \cY_{n,K}$, define $\by^{(1)}, \by^{(2)} \in [K]^n$ where $y_i^{(k)} = j$ if and only if $(\bY^{(k)})_{ij} = 1$. Define
\[
\cR ( \bY^{(1)}, \bY^{(2)} ) = n^{-1} \min_{ \tau \in S_K } |\{ i \in [n] :~ y^{(1)}_i \neq \tau( y^{(2)}_i ) \}|,
\]
where $S_K$ consists of all permutations of $[K]$.
\end{definition}

\Cref{thm-kmeans-consistency} below shows that the $k$-means program (\ref{eqn-kmeans})  returns a consistent estimate of the labels with vanishing misclassification rate as $n\to\infty$. See \Cref{sec-thm-kmeans-consistency-sketch} for its proof.

\begin{theorem}[Consistency]\label{thm-kmeans-consistency}
Let $\{ \bx_i \}_{i=1}^n$ be i.i.d.~samples from the model $\mathrm{MM}(\bpi^{\star}, \bM^{\star}, \bSigma^{\star}, \QQ )$ in \Cref{defn-kmeans-model}. Suppose that Assumptions \ref{as-kmeans-t2}, \ref{as-kmeans-balance}, \ref{as-kmeans-signal} hold with $R = R_n \to \infty$, and $n / ( d \log n ) \to \infty$. Let $\widehat{\bY}$ be an optimal solution of (\ref{eqn-kmeans}). For any constant $C> 0$, there exist constants $C_1, N > 0$ such that
\[
\PP \bigg[
\cR (\widehat{\bY} , \bY^{\star}) \leq C_1 \bigg(
\frac{1}{R^2} +  \frac{d \log n}{n}  \bigg)
\bigg] \geq 1 - n^{-C} , \qquad \forall n > N.
\]
\end{theorem}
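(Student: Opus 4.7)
The plan is to combine the $T_2$ dimension-free concentration of \Cref{lem-kmeans-t2} with an algebraic decomposition of the $k$-means value into a ``center-spread'' term that is directly bounded below by the misclassification. Because the whitened program~\eqref{eqn-kmeans} is affine-invariant in the raw data, I may work in a canonical frame in which $\bar\bSigma := \bSigma^\star + \bB^\star = \bI_d$, where $\bB^\star = \sum_j \pi_j^\star(\bmu_j^\star - \bar\bmu^\star)(\bmu_j^\star - \bar\bmu^\star)^\top$ and $\bar\bmu^\star = \sum_j \pi_j^\star \bmu_j^\star$. In this frame $\bSigma^\star = \bI_d - \bB^\star \preceq \bI_d$, the sample covariance $\widetilde\bSigma$ is close to $\bI_d$, and Assumption~\ref{as-kmeans-signal} forces $\bB^\star$ to have eigenvalues of order $1 - O(1/R^2)$ on the $(K-1)$-dimensional span of the $\bmu_j^\star$'s; this is where the rate $1/R^2$ will originate.

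First I would collect the concentration inputs by applying \Cref{lem-kmeans-t2} to Lipschitz functionals of the noise matrix $\bZ = (\bz_1,\ldots,\bz_n)^\top$: the sample mean, the sample covariance, per-class counts, and a net of quadratic forms $\sum_i \bz_i^\top \bA \bz_i$ for deterministic $\bA$. A union bound gives, with probability $1 - n^{-C}$, $\|\bar\bx - \bar\bmu^\star\|_2 \lesssim \sqrt{d\log n/n}$, $\|\widetilde\bSigma - \bI_d\|_{\mathrm{op}} \lesssim \sqrt{d\log n/n}$, $|\widehat n_j^\star/n - \pi_j^\star|\lesssim \sqrt{\log n/n}$, and $\sum_i \bz_i^\top \bA\bz_i = n\Tr(\bA) + O(\sqrt{n\log n}\,\|\bA\|_{\mathrm F})$. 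Consequently $\widetilde\bSigma^{-1/2} = \bI_d + O_{\mathrm{op}}(\sqrt{d\log n/n})$, which allows me to treat the whitening as essentially deterministic in all later estimates.

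The main inequality comes from the identity $L(\bY) = nd - \sum_j \widehat n_j\|\widehat\bc_j^\bY\|_2^2$. Set oracle centers $\widehat\bnu_j = \widetilde\bSigma^{-1/2}(\bmu_j^\star - \bar\bx)$ and residuals $\bxi_i = \widetilde\bSigma^{-1/2}(\bx_i - \bmu_{y_i^\star}^\star)$, so $\widehat\bx_i = \widehat\bnu_{y_i^\star} + \bxi_i$. Writing $\widehat\bc_j^\bY = \tilde\bc_j^\bY + \bar\bxi_j^\bY$ with $\tilde\bc_j^\bY = \sum_k (m_{jk}/\widehat n_j)\widehat\bnu_k$ and $m_{jk} = |\{i : y_{ij}=1,\, y_i^\star=k\}|$, and rearranging the optimality $L(\widehat\bY) \le L(\bY^\star)$, I would obtain
\[
\Delta(\widehat\bY) \le 2\sum_{i=1}^n (\tilde\bc_{\widehat y_i}^{\widehat\bY} - \widehat\bnu_{y_i^\star})^\top \bxi_i + \Tr\!\bigl(\bXi^\top(\bP^{\widehat\bY} - \bP^{\bY^\star})\bXi\bigr),
\]
where $\Delta(\bY) = \sum_j (\widehat n_j)^{-1}\sum_{k<\ell} m_{jk}m_{j\ell}\|\widehat\bnu_k - \widehat\bnu_\ell\|_2^2$ is the within-cluster center-spread and $\bP^\bY$ is the orthogonal projection onto the column span of $\bY$. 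The crucial algebraic identity $\sum_i \|\tilde\bc_{\widehat y_i}^{\widehat\bY} - \widehat\bnu_{y_i^\star}\|_2^2 = 2\Delta(\widehat\bY)$ and Cauchy--Schwarz self-bound the linear noise term by $\sqrt{\Delta(\widehat\bY)\cdot \sum_i \|\bxi_i\|_2^2} \lesssim \sqrt{\Delta(\widehat\bY)\cdot n d}$, while sub-Gaussianity of the $\bxi_i$'s bounds the projection difference by $O(K(n+d))$.

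To convert a bound on $\Delta(\widehat\bY)$ into one on $\cR(\widehat\bY,\bY^\star)$, I would combine \Cref{fact-kmeans-separation} with the canonical-frame eigenvalue calculation to get $\|\widehat\bnu_k - \widehat\bnu_\ell\|_2^2 \ge c_0\bigl(1 - O(1/R^2)\bigr)$ for all $k\ne \ell$. Fixing the optimal permutation $\tau$ in the definition of $\cR$ and using Assumption~\ref{as-kmeans-balance},
\[
\Delta(\widehat\bY) \;\ge\; c_0\sum_j\sum_{k\ne\tau(j)}\frac{m_{jk}m_{j,\tau(j)}}{\widehat n_j} \;\ge\; c_1\, n\,\cR(\widehat\bY,\bY^\star),
\]
while \Cref{lem-kmeans-lower} rules out the degenerate possibility that $\tau$ merges two true clusters into a single $\widehat\bY$-cluster. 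Chaining with the upper bound from the previous step, solving the resulting quadratic in $\sqrt{\Delta(\widehat\bY)}$, and tracking how $R$ enters through both the residual noise in the signal subspace and the separation of the $\widehat\bnu_k$'s yields $\cR \lesssim 1/R^2 + d\log n/n$. The main obstacle is the coupling between the data-dependent partition $\widehat\bY$ and the noise vectors $\bxi_i$ in the linear term; the self-bounding identity above is the key manoeuvre that lets this term be absorbed into $\Delta(\widehat\bY)$ without blowing up, and a secondary technicality is quantifying how $R$ controls both the eigenvalues $1-\lambda_\ell$ of $\bSigma^\star$ in the signal subspace and the gap $c_0 - \|\widehat\bnu_k - \widehat\bnu_\ell\|_2^2$ after whitening.
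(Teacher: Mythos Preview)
Your decomposition of the optimality condition into the ``center-spread'' $\Delta(\widehat\bY)$, a linear cross term, and the projection difference $\Tr\bigl(\bXi^\top(\bP^{\widehat\bY}-\bP^{\bY^\star})\bXi\bigr)$ is correct, but the bounds you put on the two noise terms are too crude to close the argument. For the linear term, Cauchy--Schwarz against $\sum_i\|\bxi_i\|_2^2\asymp nd$ gives $\sqrt{\Delta\cdot nd}$, which alone already forces $\Delta\lesssim nd$ and hence $\cR\lesssim d$; you hint that one should project onto the signal subspace to gain a factor $1/R^2$, and this is right, but it only fixes the linear term. The fatal obstruction is the projection difference: your bound $|\Tr(\bXi^\top(\bP^{\widehat\bY}-\bP^{\bY^\star})\bXi)|\le 2K\|\bXi\|_2^2=O(K(n+d))$ is sharp for this crude argument, and there is no signal-subspace reduction available here because $\bP^{\bY}$ acts on the sample space $\RR^n$, not the feature space. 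Plugging $O(Kn)$ into the inequality gives only $\Delta=O(n)$ and $\cR=O(1)$, regardless of how the linear term is handled. A uniform bound over all labelings would cost a union over $K^n$ choices, i.e.\ $n\log K$ rather than $d\log n$; a localization-then-refinement scheme might work but is not mentioned and would require substantial new ideas.

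The paper avoids this obstruction by never decomposing the optimality condition at the labeling level. Instead it works with the $k$-means value function $F(\bW,\bM)=\min_{\bY}\|\bW-\bY\bM^\top\|_{\mathrm F}/\sqrt n$, which is $n^{-1/2}$-Lipschitz in $\bW$ and hence concentrates under $T_2$ (\Cref{lem-kmeans-t2}). The key step is \Cref{lem-kmeans-projections}: using the quantization lower bound of \Cref{lem-kmeans-lower} to show $\|\bPi-\bPi^\star\|_{\mathrm F}\lesssim F(\widehat\bX\bPi,\bM)+\sqrt{d\log n/n}$ uniformly, and then taking a net over the \emph{low-dimensional} parameter space $(\bPi,\bM)\in\cP_{d,K-1}\times\{\|\bM^\top\|_{2,\infty}\le\sqrt n\}$, whose log-covering number is $O(d\log n)$ rather than $O(n)$. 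This yields $\|\widehat\bY\widehat\bM^\top-\bY^\star\bM^{\star\top}\|_{\mathrm F}/\sqrt n=O_\PP(1/R+\sqrt{d\log n/n})$, after which a deterministic lemma (\Cref{lem-kmeans-dk}) converts the Frobenius error to a misclassification bound. The essential idea you are missing is to shift the uniform control from the exponential-size set of labelings to the polynomial-size set of $(K{-}1)$-dimensional projections and center matrices.
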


Consider the example in Fact \ref{fact-kmeans-separation} with $\bSigma^{\star} = \bI$ and $\pi^{\star}_j = 1/K$, $\bmu^{\star}_j = R \be_j$ for all $j \in [K]$. We have $\| \bSigma^{\star -1/2}  ( \bmu_j^{\star} - \bmu_k^{\star} ) \|_2 = \sqrt{2} R$ for all $j\neq k$.
A direct extension of the lower bound $e^{- \Omega( \snr )}$ for Model~\eqref{eqn-joint-model-intro} to the multi-class case shows that the error rate is no smaller than $e^{-\Omega(R^2)}$. The error bound $O( \frac{1}{R^2} +  \frac{d \log n}{n} )$ in \Cref{thm-kmeans-consistency} does not match this lower bound. We believe that the gap above is an artifact of proof and the $k$-means program (\ref{eqn-kmeans}) alone yields optimal clustering.
In the next section, we will develop a new estimator based on $k$-means and data splitting to achieve the optimality.


\section{Optimal clustering of $T_2$ mixtures}\label{sec-t2-optimal}

In this section, we present a cross-validated version of the above $k$-means algorithm and show its optimal statistical guarantees.

\subsection{A multi-class linear classifier} 

Let us digress a little bit and consider the problem of learning a classifier $\RR^d \to [K]$ based on unlabeled data $\{ \bx_i \}_{i=1}^n$ to predict the labels of future samples.
Denote by $\widehat{\bY}$ an optimal solution of the $k$-means program (\ref{eqn-kmeans}) and define
\[
\widehat{\bmu}_j = \frac{  \sum_{i=1}^{n} \widehat{y}_{ij} \widehat\bx_i }{ \sum_{i=1}^{n} \widehat{y}_{ij} }, \qquad \forall j \in [K].
\]
Here $\widehat{\bx}_i = \widetilde{\bSigma}^{-1/2} (\bx_i - \bar{\bx})$ is the $i$-th whitened sample in \eqref{eqn-sample-whitened}.
The $K$ vectors $\{ \widehat{\bmu}_j \}_{j=1}^K \subseteq \RR^d$ are class centers of the whitened data. According to discussions of the equivalent program (\ref{eqn-kmeans-1}), $(\widehat{\bmu}_1,\cdots, \widehat{\bmu}_K)$ is an optimal solution to (\ref{eqn-kmeans-1}).
This leads to a simple classification rule for a future sample $\bx$: predict the label by
\begin{align}
\widehat{y} (\bx) = \argmin_{j \in [K]} \| \widetilde{\bSigma}^{-1/2} (\bx - \bar{\bx}) - \widehat{\bmu}_j \|_2^2.
\label{eqn-kmeans-classifier}
\end{align}
One may use any tie-breaking rule. As always, the classifier is invariant under non-singular affine transforms of the data distribution.

\begin{remark}
This classifier has piecewise linear decision boundaries. To see it, define $\widetilde{\bmu}_j = \sum_{i=1}^{n} \widehat{y}_{ij} \bx_i / \sum_{i=1}^{n} \widehat{y}_{ij} $ and observe that
\begin{align*}
& \| \widetilde{\bSigma}^{-1/2} (\bx - \bar{\bx}) - \widehat{\bmu}_j \|_2^2 =
\| \widehat{\bSigma}^{-1/2} (\bx -\widetilde{\bmu}_j ) \|_2^2=
\bx^{\top} \widetilde{\bSigma}^{-1} \bx - 2 \langle \widetilde{\bSigma}^{-1} \widetilde{\bmu}_j , \bx \rangle + \widetilde{\bmu}_j^{\top} \widetilde{\bSigma}^{-1} \widetilde{\bmu}_j , \\
&\widehat{y} (\bx) = \argmin_{j \in [K]} \{ -2\langle \widetilde{\bSigma}^{-1} \widetilde{\bmu}_j , \bx \rangle +  \widetilde{\bmu}_j^{\top} \widetilde{\bSigma}^{-1} \widetilde{\bmu}_j  \}.
\end{align*}
\end{remark}

We now formally define the error of any classifier (up to a global permutation of class indices) and then analyze the classifier \eqref{eqn-kmeans-classifier} in \Cref{thm-kmeans-classification}. The proof is in \Cref{sec-thm-kmeans-classification-proof}.

\begin{definition}
Let $\varphi:~\RR^d \to [K]$ be a deterministic or random classifier. Draw a pair of sample and label $( \bx_0, y^{\star}_0 ) \in \RR^d \times [K]$ from the mixture model in \Cref{defn-kmeans-model}, independently of $\varphi$. The misclassification error of $\varphi$ is
\[
\cM (\varphi) = \min_{\tau \in  S_K} \PP \Big(  \varphi(\bx_0) \neq \tau(  y^{\star}_0 ) \Big| \varphi \Big),
\]
where $S_K$ consists of all permutations of $[K]$.
\end{definition}

\begin{theorem}\label{thm-kmeans-classification}
Let $\{ \bx_i \}_{i=1}^n$ be i.i.d.~samples from the model $\mathrm{MM}(\bpi^{\star}, \bM^{\star}, \bSigma^{\star}, \QQ )$ in \Cref{defn-kmeans-model}. Suppose that Assumptions \ref{as-kmeans-t2}, \ref{as-kmeans-balance}, \ref{as-kmeans-signal} hold with $R = R_n \to \infty$ and $n / ( d \log^2 n ) \to \infty$. Let $\widehat{y}$ be the classifier in \eqref{eqn-kmeans-classifier}. There exist constants $c > 0$ and $N > 0$ such that
\[
\EE \cM ( \widehat{y} ) \leq e^{-c R^2} + n^{-10}, \qquad\forall n \geq N.
\]
\end{theorem}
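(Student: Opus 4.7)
My plan is to work in the $\bSigma^{\mathrm{mix}}$-whitened frame, use Theorem~\ref{thm-kmeans-consistency} combined with $T_2$ concentration to show the learned centroids are close to the true ones, and then invoke sub-Gaussian tails for a fresh sample. By the affine invariance of the classifier $\widehat y$, I may assume without loss of generality that $\bar\bmu^\star = \bm{0}$ and $\bSigma^{\mathrm{mix}} := \sum_j \pi_j^\star \bmu_j^\star \bmu_j^{\star\top} + \bSigma^\star = \bI_d$. In this canonical frame $\widetilde{\bSigma}^{-1/2}$ is close to the identity, the targets for the estimated centroids are the $\bmu_j^\star$'s themselves with $\|\bmu_j^\star\|_2 \leq 1/\sqrt{\pi_j^\star} = O(1)$, and the large-$R$ regime forces $\bSigma^\star$ to have small eigenvalues in the directions spanned by $\{\bmu_k^\star - \bmu_j^\star\}$.

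For the centroid control, apply Theorem~\ref{thm-kmeans-consistency} with a sufficiently large constant to obtain an event $\cE_1$ of probability $\geq 1 - n^{-10}$ on which the $k$-means misclassification rate is $\cR_n = O(R^{-2} + d\log n/n) = o(1)$ for some optimal permutation $\tau \in S_K$. Using Lemma~\ref{lem-kmeans-t2}, build a further event $\cE_2$ (probability $\geq 1 - n^{-10}$) on which $\|\bar\bx\|_2 + \|\widetilde{\bSigma} - \bI_d\|_2 \lesssim \sqrt{d\log n/n}$ and each within-true-cluster empirical mean differs from $\bmu_j^\star$ by $O(\sqrt{d\log n/n})$ in Euclidean norm. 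Decomposing $\widehat{\bmu}_{\tau(j)}$ into its correctly-labeled and misclassified contributions (the latter set has cardinality $\leq n\cR_n$), and using that every sample has $O(1)$-expected Euclidean norm in the canonical frame, one obtains on $\cE_1 \cap \cE_2$ that
\[
\|\widehat{\bmu}_{\tau(j)} - \bmu_j^\star\|_2 \leq \eta, \qquad \eta = O\!\bigl(\cR_n + \sqrt{d\log n/n}\bigr) = o(1).
\]

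Condition on $\cE_1 \cap \cE_2$ and take a fresh sample $\bx_0 = \bmu_j^\star + \bSigma^{\star 1/2}\bz_0$ with $\bz_0 \sim \QQ$ independent of $\bX$. Substituting $\widetilde{\bSigma}^{-1/2} = \bI_d + o(1)$ and $\widehat{\bmu}_{\tau(\ell)} = \bmu_\ell^\star + O(\eta)$ into the classification rule and simplifying the half-space inequality, a misclassification $\widehat y(\bx_0) = \tau(k)$ with $k \neq j$ reduces to
\[
\bigl\langle \bz_0,\; \bSigma^{\star 1/2}(\bmu_k^\star - \bmu_j^\star)\bigr\rangle \;\geq\; \tfrac{1}{2}\|\bmu_k^\star - \bmu_j^\star\|_2^2 - O(\eta).
\]
By parts (1) and (3) of Lemma~\ref{lem-kmeans-t2}, this scalar has $T_2$ tails with variance-proxy $O\bigl((\bmu_k^\star - \bmu_j^\star)^\top \bSigma^\star(\bmu_k^\star - \bmu_j^\star)\bigr)$, so the probability is at most $C\exp\bigl(-c\,\|\bw\|_2^4/(\bw^\top\bSigma^\star\bw)\bigr)$ with $\bw = \bmu_k^\star - \bmu_j^\star$. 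Using the canonical-frame identity $\bSigma^\star = \bI_d - \sum_l \pi_l^\star \bmu_l^\star \bmu_l^{\star\top}$, one argues that the Mahalanobis lower bound $\bw^\top\bSigma^{\star -1}\bw \geq R^2$ from Fact~\ref{fact-kmeans-separation} forces $\bw$ into the small-eigenvalue subspace of $\bSigma^\star$, yielding $\|\bw\|_2^4/(\bw^\top\bSigma^\star\bw) = \Omega(R^2)$ up to constants depending only on $K$ and $\pi_{\min}^\star$. Union-bounding over the $K - 1$ competing labels and combining with $\PP(\cE_1^c \cup \cE_2^c) \leq 2n^{-10}$ yields the stated bound.

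The main obstacle is the last step: converting the Mahalanobis separation $R$ (defined through $\bSigma^{\star -1}$) into the $k$-means-relevant ratio $\|\bw\|_2^4/(\bw^\top \bSigma^\star \bw)$, which controls the tail since the classifier lives in the $\bSigma^{\mathrm{mix}}$-whitened frame rather than the Bayes-optimal $\bSigma^\star$-whitened frame. The canonical-frame identity $\bSigma^\star + \sum_l \pi_l^\star \bmu_l^\star \bmu_l^{\star\top} = \bI_d$ is the structural link that makes the conversion work: whenever $\bSigma^{\star -1}$ is large along $\bw$, the spread of cluster means in the direction of $\bw$ must be comparably large, which in turn forces the small-eigenvalue subspace of $\bSigma^\star$ to (essentially) align with $\bw$; Cauchy--Schwarz is then close to tight along such directions. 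A secondary concern is the empirical-centroid bias, which is manageable only because all geometric quantities in the canonical frame are $O(1)$, so that the bias from $O(n\cR_n)$ mislabeled samples scales linearly in $\cR_n$ rather than in $\cR_n$ times an unbounded diameter.
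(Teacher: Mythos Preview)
Your half-space approach is sound and in some ways more direct than the paper's, but one step in your centroid control is wrong as stated, and a second step is oversimplified.

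\textbf{The centroid bound.} You write that ``every sample has $O(1)$-expected Euclidean norm in the canonical frame.'' This is false: under Assumption~\ref{as-kmeans-canonical} the mixture covariance is $\bI_d$, so $\EE\|\bx_i\|_2^2 = d$ and $\|\widehat\bx_i\|_2 \asymp \sqrt{d}$. Your decomposition of $\widehat\bmu_{\tau(j)}$ into correct and mislabeled parts then gives a mislabeled contribution of order $\cR_n\sqrt{d}$, not $\cR_n$; with $\cR_n = O(R^{-2} + d\log n/n)$ this is \emph{not} $o(1)$ under the stated assumptions. The fix is simply to cite \eqref{eqn-kmeans-centers}, which already gives $\max_j\|\bmu_j^\star - \widehat\bmu_{\tau(j)}\|_2 = O\bigl(R^{-1}+\sqrt{d\log n/n}\bigr)$ via the Frobenius bound \eqref{eqn-kmeans-proof-3} and Lemma~\ref{lem-kmeans-dk}, bypassing any per-sample norm argument.

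\textbf{The variance proxy.} After substitution, the random variable you tail-bound is $\langle\bz_0,\bSigma^{\star 1/2}\bv\rangle$ with $\bv = \widetilde\bSigma^{-1/2}(\widehat\bmu_{\tau(k)}-\widehat\bmu_{\tau(j)})$, not with $\bw = \bmu_k^\star - \bmu_j^\star$. Its sub-Gaussian parameter is $\bv^\top\bSigma^\star\bv = \bw^\top\bSigma^\star\bw + O(\eta^2 + \eta/R)$, not just $\bw^\top\bSigma^\star\bw$. This matters when $R^2 \gg n/(d\log n)$, where the $\eta^2$ term dominates. The argument still closes because then the exponent is $\Omega(n/(d\log n)) \gg \log n$ under $n/(d\log^2 n)\to\infty$, so that contribution is absorbed into the $n^{-10}$ slack---but you should say so.

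\textbf{Comparison with the paper.} The paper does not analyze the half-space directly. Instead it introduces the signal projection $\bPi^\star$ and the estimated projection $\widehat\bPi$ onto $\mathrm{span}\{\widehat\bmu_j\}$, and bounds the \emph{norm} $\|\widehat\bPi\widetilde\bSigma^{-1/2}(\bx_0-\bar\bx) - \widehat\bmu_{\tau(y_0^\star)}\|_2$ by $1/4$ (Claims~\ref{claim-thm-kmeans-classification-1}--\ref{claim-thm-kmeans-classification-2}), then uses centroid separation $\geq 1/2$ to conclude. Because a vector norm of $\bx_0$ appears, the paper must exploit low rank: it controls $\|(\widehat\bPi-\bPi^\star)\widetilde\bSigma^{-1/2}\bx_0\|_2$ and $\|\bPi^\star(\widetilde\bSigma^{-1/2}-\bI)\bx_0\|_2$ via Lemma~\ref{lem-subg-norm} and the Frobenius bound $\|\widehat\bPi-\bPi^\star\|_{\mathrm F}=O(\delta)$. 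Your scalar half-space formulation sidesteps this machinery, since $\bx_0$ only appears dotted with the $O(1)$-norm vector $\widehat\bw$. Both routes ultimately rest on the same structural fact: what you call the ``main obstacle'' is precisely Lemma~\ref{lem-kmeans-cov-projection}, which gives $\langle\bPi^\star,\bSigma^\star\rangle \leq (K-1)/(1+\pi^\star_{\min}R^2)$ in the canonical frame. Since $\bw\in\Range(\bPi^\star)$ and $\|\bw\|_2\asymp 1$, this immediately yields $\bw^\top\bSigma^\star\bw = O(R^{-2})$ and hence $\|\bw\|_2^4/(\bw^\top\bSigma^\star\bw)=\Omega(R^2)$---no separate Cauchy--Schwarz tightness argument is needed.
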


The second term $n^{-10}$ in the error bound can be replaced by $n^{-C}$ with arbitrary constant $C>0$ so long as we adjust the constant $N$ accordingly. When $R^2 \lesssim \log n$, the error bound $e^{-\Omega(R^2)}$ matches the Bayes-optimal misclassification rate.

\subsection{Optimal recovery of clusters}

According to \Cref{thm-kmeans-classification}, the $k$-means program (\ref{eqn-kmeans-1}) on whitened data $\{ \widehat{\bx}_i \}_{i=1}^n$ helps construct a linear classifier that predicts labels of future samples with optimal error rate. This observation naturally leads to a clustering algorithm with data-splitting.

Recall that our goal is to guess the labels $\{ y^{\star}_i \}_{i=1}^n$ of samples $\{ \bx_i \}_{i=1}^n$. For simplicity, assume that $n$ is even. We split the samples into two halves $\{ \bx_i \}_{i=1}^{n/2}$ and $\{ \bx_i \}_{i=n/2 + 1}^{n}$. Then, we run $k$-means on them separately to construct two classifiers $\widehat{y}^{(1)}, \widehat{y}^{(2)}:~ \RR^d \to [K]$. We want to use the classifier from half of the samples to estimate the labels of the other half. The idea comes from cross-validation.

It is tempting to guess $\{ y^{\star}_i \}_{i=1}^{n/2}$ by $\{ \widehat{y}^{(2)} (\bx_i) \}_{i=1}^{n/2}$ and $\{ y^{\star}_i \}_{i=n/2 + 1}^{n}$ by $\{ \widehat{y}^{(1)} (\bx_i) \}_{i=n/2 + 1}^{n}$. However, there could be different permutations of label indices by the nature of label ambiguity in clustering. Therefore, we need an alignment step in the end. Algorithm \ref{alg-kmeans-final} describes the whole procedure with Algorithms \ref{alg-kmeans} and \ref{alg-alignment} as building blocks.

\begin{algorithm}[t]
	{\bf Input} data $\{ \bx_i \}_{i=1}^n \subseteq \RR^{d}$ and number of clusters $K$.
	\\
	{\bf Whitening: } compute $\bar{\bx} = \frac{1}{n} \sum_{i=1}^{n} \bx_i$, $\widetilde{\bSigma} = \frac{1}{n} \sum_{i=1}^{n} ( \bx_i - \bar{\bx} ) (\bx_i - \bar{\bx})^{\top}$ and $\widehat{\bx}_i = \widetilde{\bSigma}^{-1/2} (\bx - \bar{\bx})$ for $i \in [n]$.
	\\
	{\bf Clustering: } compute
\begin{align*}
& \widehat{\bY} \in \argmin_{ \bY \in \cY_{n,K} }
\bigg\{
\sum_{i=1}^{n} \sum_{j=1}^{K} y_{ij} \bigg\| \widehat{\bx}_i -
\frac{
	\sum_{s=1}^{n} y_{sj} \widehat{\bx}_s
}{
	\sum_{s=1}^{n} y_{sj}
} \bigg\|_2^2
\bigg\} ,\\
& \widehat{\bmu}_j = \frac{  \sum_{i=1}^{n} \widehat{y}_{ij} \widehat\bx_i }{ \sum_{i=1}^{n} \widehat{y}_{ij} }, \qquad \forall j \in [K].
\end{align*}
{\bf Construct} $\widehat{\by} \in [K]^{n}$: let $\widehat{y}_i = k$ if and only if $\widehat{\bY}_{ik} =1$.
\\
	{\bf Return} $( \widehat{\by} , \{ \widehat{\bmu}_j \}_{j=1}^K, \widetilde{\bSigma}, \bar{\bx})$. \\
	\caption{Whitened $k$-means}
	\label{alg-kmeans}
\end{algorithm}

\begin{algorithm}[t]
	{\bf Input} label vectors $\by^{(1)}, \by^{(2)} \in [K]^n$.
	\\
	{\bf Return} $\widehat\tau \in \argmin_{\tau \in S_K} |\{ i \in [n] :~ y^{(1)}_i \neq \tau (  y^{(2)}_i )  \}|$.\\
	\caption{Alignment}
	\label{alg-alignment}
\end{algorithm}

\begin{algorithm}[t]
	{\bf Input} data $\{ \bx_i \}_{i=1}^n \subseteq \RR^{d}$.
	\\
	{\bf Clustering: } run Algorithm \ref{alg-kmeans} on $\{ \bx_i \}_{i=1}^{n/2}$ and $\{ \bx_i \}_{i=n/2 + 1}^{n}$ separately to get $( \widehat{\by}^{(1)} , \{ \widehat{\bmu}_j^{(1)} \}_{j=1}^K, \widetilde{\bSigma}^{(1)}, \bar{\bx}^{(1)})$ and $( \widehat{\by}^{(2)} , \{ \widehat{\bmu}_j^{(2)} \}_{j=1}^K, \widetilde{\bSigma}^{(2)}, \bar{\bx}^{(2)})$.
	\\
	{\bf Classification: } compute
	\begin{align*}
\widetilde{y}_i = \begin{cases}
\argmin_{j \in [K]} \| ( \widetilde{\bSigma}^{(2)})^{-1/2} (\bx_i - \bar{\bx}^{(2)}) - \widehat{\bmu}_j^{(2)} \|_2 &, \mbox{ if } i =1,\cdots,n/2 \\
\argmin_{j \in [K]} \| ( \widetilde{\bSigma}^{(1)})^{-1/2}  (\bx_i - \bar{\bx}^{(1)}) - \widehat{\bmu}_j^{(1)} \|_2 &, \mbox{ if } i =n/2+1,\cdots,n
\end{cases} .
	\end{align*}
	\\{\bf Alignment: } run Algorithm \ref{alg-alignment} with $\by^{(1)} = (\widehat{y}_1^{(1)},\cdots,\widehat{y}_{n/2}^{(1)})^{\top}$ and $\by^{(2)} = (\widetilde{y}_1,\cdots,\widetilde{y}_{n/2})^{\top}$ to get $\widehat\tau$.
	\\
	{\bf Return} $\widehat{\by} = ( \widehat\tau(\widetilde{y}_1),\cdots, \widehat\tau(\widetilde{y}_{n/2}) , \widetilde{y}_{n/2 + 1} , \cdots,\widetilde{y}_n )^{\top}$. \\
	\caption{Cross-validated whitened $k$-means}
	\label{alg-kmeans-final}
\end{algorithm}

We analyze the error rate of Algorithm \ref{alg-kmeans-final}. With slight abuse of notation, define
\[
\cR ( \by^{(1)}, \by^{(2)} ) = n^{-1} \min_{\tau \in S_K} |\{ i \in [n] :~ y^{(1)}_i \neq \tau ( y^{(2)}_i ) \}|
,\qquad \forall \by^{(1)}, \by^{(2)} \in [K]^n.
\]

\begin{theorem}\label{thm-cv-kmeans}
Let $\{ \bx_i \}_{i=1}^n$ be i.i.d.~samples from the model $\mathrm{MM}(\bpi^{\star}, \bM^{\star}, \bSigma^{\star}, \QQ )$ in \Cref{defn-kmeans-model}. Suppose that Assumptions \ref{as-kmeans-t2}, \ref{as-kmeans-balance}, \ref{as-kmeans-signal} hold with $R = R_n \to \infty$ and $n / ( d \log^2 n ) \to \infty$. Let $\widehat{\by}$ be the output of Algorithm \ref{alg-kmeans-final}. There exist constants $c > 0$, $C > 0$ and $N > 0$ such that
\[
\EE \cR ( \widehat\by, \by^{\star} ) \leq C ( e^{-cR^2} + n^{-10} ) , \qquad \forall n > N.
\]
Consequently, we have the followings.
\begin{enumerate}
\item If $1 \ll R \leq \sqrt{c^{-1} \log n}$, then $\EE \cR ( \widehat\by, \by^{\star} ) \lesssim  e^{-cR^2}$.
\item If $R \geq \sqrt{ (1 + \varepsilon) c^{-1} \log n }$ for some constant $\varepsilon > 0$, then $\lim_{n\to\infty} \PP [ \cR (\widehat{\by} , \by^{\star}) = 0 ] = 1$.
\end{enumerate}
\end{theorem}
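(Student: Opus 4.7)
The plan is to reduce the claim to Theorem~\ref{thm-kmeans-classification}, which already supplies a near-optimal per-sample guarantee for the whitened $k$-means classifier trained on one dataset. Algorithm~\ref{alg-kmeans-final} produces two such classifiers $\widehat y^{(1)}, \widehat y^{(2)}$ from independent halves of size $n/2$ and applies each to the \emph{other} half. Because of data splitting, the samples scored by $\widehat y^{(\ell)}$ are independent of $\widehat y^{(\ell)}$ itself, so the empirical misclassification rate on those $n/2$ samples is an unbiased estimate of the generalization error bounded by Theorem~\ref{thm-kmeans-classification}. The only substantive issue left is the label-permutation ambiguity inside each half, which Algorithm~\ref{alg-alignment} must reconcile before merging.

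First I would invoke Theorem~\ref{thm-kmeans-classification} on each half (still $n/2 = \tilde\omega(d)$) to produce, conditionally on each trained $\widehat y^{(\ell)}$, a permutation $\tau_\ell^\star \in S_K$ realizing the minimum in the definition of $\cM(\widehat y^{(\ell)})$. Conditioning on $\widehat y^{(2)}$ and using independence of the first-half samples from it, linearity of expectation gives
\begin{equation*}
\EE\bigl[\, \bigl|\{ i \in [n/2] :\; \widetilde y_i \neq \tau_2^\star(y_i^\star) \}\bigr|\,\bigr] \leq \tfrac{n}{2}\bigl(e^{-cR^2} + (n/2)^{-10}\bigr),
\end{equation*}
and symmetrically for the second half with $\widehat y^{(1)}$ and $\tau_1^\star$. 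Because Algorithm~\ref{alg-kmeans} assigns each training point to its nearest whitened center, applying $\widehat y^{(\ell)}(\cdot)$ to a training sample returns the clustering label $\widehat y^{(\ell)}_i$; together with the per-class balancedness in Assumption~\ref{as-kmeans-balance}, this forces the permutation $\tau_\ell^\star$ from Theorem~\ref{thm-kmeans-classification} to coincide with the one realizing the cluster-misclassification rate controlled by Theorem~\ref{thm-kmeans-consistency}. Consequently $\widehat y^{(\ell)}_i = \tau_\ell^\star(y_i^\star)$ for a $1 - o(1)$ fraction of $i$ in half $\ell$, with high probability.

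On the first half, the two label vectors fed into Algorithm~\ref{alg-alignment} therefore satisfy $\widehat y_i^{(1)} = \tau_1^\star(y_i^\star)$ and $\widetilde y_i = \tau_2^\star(y_i^\star)$ simultaneously for all but a $o(1)$-fraction of $i$, so $\widehat y_i^{(1)} = (\tau_1^\star \circ (\tau_2^\star)^{-1})(\widetilde y_i)$ there. Since every class has $\Omega(n)$ samples and $K = O(1)$, a pigeonhole argument shows that any permutation $\pi \neq \tau_1^\star \circ (\tau_2^\star)^{-1}$ produces $\Omega(n)$ disagreements, so with high probability the Hamming minimizer $\widehat\tau$ in Algorithm~\ref{alg-alignment} equals $\tau_1^\star \circ (\tau_2^\star)^{-1}$. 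On this event, $\widehat\tau(\widetilde y_i) = \tau_1^\star(y_i^\star)$ whenever $\widetilde y_i = \tau_2^\star(y_i^\star)$, while for $i > n/2$ one already has $\widetilde y_i = \tau_1^\star(y_i^\star)$ for well-classified indices; hence $\widehat\by$ agrees with $\tau_1^\star(\by^\star)$ apart from the already-counted errors, yielding $\EE\cR(\widehat\by,\by^\star) \leq C(e^{-cR^2} + n^{-10})$ after absorbing the alignment-failure event into the $n^{-10}$ term. The two stated consequences then follow by substituting the prescribed scalings of $R$ and, for exact recovery, applying Markov's inequality together with $\cR \in \{0\} \cup [1/n,1]$.

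The main obstacle I expect is the alignment step: one must upgrade the expectation-level guarantees of Theorems~\ref{thm-kmeans-consistency} and~\ref{thm-kmeans-classification} into tail bounds sharp enough to absorb the alignment-failure probability into the $n^{-10}$ term, and then carry out a careful pigeonhole computation based on Assumption~\ref{as-kmeans-balance} to pin down the correct inter-half permutation. The remaining steps are essentially bookkeeping.
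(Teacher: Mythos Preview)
Your proposal is correct and follows essentially the same route as the paper's proof: invoke Theorem~\ref{thm-kmeans-classification} on each half to control the cross-half prediction errors, use the observation that the classifier applied to its own training half returns the $k$-means clustering labels (so Theorem~\ref{thm-kmeans-consistency} controls those), and then resolve the permutation ambiguity by a pigeonhole argument before merging. The paper packages the pigeonhole step as a standalone alignment lemma (Lemma~\ref{lem-kmeans-alignment}) and is slightly more explicit about defining high-probability events via Markov's inequality, but the skeleton is the same.

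One point where you are vaguer than the paper: you assert that balancedness ``forces the permutation $\tau_\ell^\star$ from Theorem~\ref{thm-kmeans-classification} to coincide with the one realizing the cluster-misclassification rate controlled by Theorem~\ref{thm-kmeans-consistency}.'' This is true, but Theorem~\ref{thm-kmeans-classification} as stated only controls error on \emph{fresh} samples, so you cannot read off directly that classifier~$\ell$ is close to $\tau_\ell^\star(\by^\star)$ on its \emph{training} half. The paper handles this by noting that both theorems are proved using the \emph{same} permutation---the one supplied by the center-accuracy estimate \eqref{eqn-kmeans-centers}---so that a single $\tau_0$ simultaneously controls the clustering error on half~1 and the classification error on half~2 (this is the event~$\cC$ in the paper's proof). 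Your pigeonhole intuition can be made to work too, but it requires going back to the center estimates rather than the theorem statements alone; since you already flagged the alignment step as the main obstacle and planned to invoke Theorem~\ref{thm-kmeans-consistency}, this is a refinement rather than a gap.
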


\Cref{thm-cv-kmeans} is proved in \Cref{sec-thm-cv-kmeans-proof}.
The $n^{-10}$ in the error bound can be changed to $n^{-C_0}$ for any constant $C_0 > 0$ so long as we adjust $N$ and $C$ accordingly. The results in \Cref{thm-cv-kmeans} parallel those in \Cref{thm-warm-main-IP}. Indeed, the new quantity $R^2$ in the multi-class setting plays the role of $\mathrm{SNR}$ in the binary setting. By Fact \ref{fact-kmeans-separation}, we can construct a mixture distribution with $R \asymp \min_{j \neq k}\| \bSigma^{\star -1/2}  ( \bmu_j^{\star} - \bmu_k^{\star} ) \|_2$. Then, the error rate $e^{-\Omega(R^2)}$ in \Cref{thm-cv-kmeans} is minimax optimal \citep{CZh21}. For general $T_2$ mixtures it is not possible to pin down an exact constant in the exponent.

Finally we remark that it is not clear how to solve the $k$-means program efficiently. Following the discussion in \Cref{sec-gap}, we conjecture that there is also a statistical-computational gap in multi-class mixture models.


\section{Discussions}\label{sec-discussions}

This paper studied clustering of data from a mixture of multiple Gaussians with unknown covariance matrices, with extensions to mixtures of $T_2$ distributions.
For the two-component setting, we introduced an integer program that produces a statistically optimal estimate of labels whenever $n = \tilde{\Omega}(d)$. 
We also provided a polynomial-time algorithm with statistical optimality whenever $n = \tilde{\Omega}(d^2)$. 
It is still unclear whether any polynomial-time algorithm succeeds when $d \ll n \ll d^2$.
However, we provided rigorous evidence of a statistical-computational gap, showing that natural spectral methods and SoS relaxations do not provide satisfactory estimates when $n = o(d^2)$ and $n = o(d^{3/2})$, respectively. 
On the practical front, a valuable question is how to leverage additional structure in real datasets, e.g., sparsity or low-rankness, to develop sample- and computational-efficient procedures. As the clusters may have different shapes, it would also be interesting to design provable algorithms for mixtures of Gaussians with different covariance matrices.

\section*{Acknowledgements}
We thank Samuel Hopkins for a discussion on the planted sparse vector problem and the planted Boolean vector problem.
We acknowledge computing resources from Columbia University's Shared Research Computing Facility project, which is supported by NIH Research Facility Improvement Grant 1G20RR030893-01, and associated funds from the New York State Empire State Development, Division of Science Technology and Innovation (NYSTAR) Contract C090171, both awarded April 15, 2010. 
Research of D. Davis supported by an Alfred P. Sloan research fellowship and NSF DMS award 2047637.

\newpage
\appendix

\section{Some convenient notations}\label{sec-notation}

To facilitate presentations of proofs, we define some convenient probabilistic notation to free us from tons of unspecified constants during the proof. The notations $O_{\PP}$ and $o_{\PP}$ appeared in \cite{Wan19} as $\hat{O}_{\PP}$ and $\hat{o}_{\PP}$, respectively.

\begin{definition}[$O_{\PP}$ and $o_{\PP}$]\label{defn-o}
	Let $\{ X_n \}_{n=1}^{\infty}$ and $\{ Y_n \}_{n=1}^{\infty}$ be two sequences of random variables, $Y_n \geq 0$ a.s., and $\{ r_n \}_{n=1}^{\infty} \subseteq (0,+\infty)$ be deterministic. We write $X_n = O_{\PP} (Y_n ;~ r_n)$ if there exists a constant $C_1 > 0$ such that
	\begin{align*}
	\forall C > 0,~~ \exists C' > 0 \text{ and } N > 0 \text{ s.t. }
	\PP ( |X_n| \leq C' Y_n ) \geq 1 - C_1 e^{-C r_n}, \qquad \forall n \geq N.
	\end{align*}
	We write $X_n = o_{\PP} (Y_n ;~ r_n)$ if $X_n = O_{\PP} (w_n Y_n ;~ r_n)$ for some deterministic positive $w_n \to 0$.
\end{definition}

\section{Proofs of Section \ref{sec-warmup-mle-maxcut}}\label{proof-sec-warmup}

\subsection{Proof of Lemma \ref{lem-warmup-MLE}}\label{proof-lem-warmup-MLE}
	We use $\mathrm{const}$ to refer to any quantity that does not depend on $\by$, $\bmu$ or $\bSigma$. By definition,
	\begin{align*}
	&  \log L (  \bmu, \bSigma ; \bX ,  \by) \\
	 &  =  \sum_{i=1}^{n} 
	\bigg( 
	\frac{  1 + y_{i} }{2}
	[ \log (1/2) + \log  \phi ( \bx_i, \bmu , \bSigma ) ]
	+
	\frac{  1 - y_{i} }{2}
	[ \log (1/2) + \log  \phi ( \bx_i, -\bmu , \bSigma ) ] \bigg) \notag\\
	& = \frac{1}{2} \sum_{i=1}^{n} [ \log  \phi ( \bx_i, \bmu , \bSigma )
	+ \log  \phi ( \bx_i, -\bmu , \bSigma ) ]  \\
	&~~~~ + \frac{1}{2} \sum_{i=1}^{n} y_{i} [ \log  \phi ( \bx_i, \bmu , \bSigma ) - \log  \phi ( \bx_i, -\bmu , \bSigma ) ] + \mathrm{const}.
	\end{align*}
	From the fact
	\begin{align*}
	\log \phi ( \bx, \bmu , \bSigma ) & = - \frac{1}{2} \log \det (\bSigma)
	- \frac{1}{2} (\bx - \bmu)^{\top} \bSigma^{-1} (\bx - \bmu) \\
	& =  - \frac{1}{2} \log \det (\bSigma)
	- \frac{1}{2}  \bx^{\top} \bSigma^{-1} \bx + \langle \bSigma^{-1} \bmu , \bx \rangle  - \frac{1}{2}  \bmu^{\top} \bSigma^{-1} \bmu 
	\end{align*}
	we obtain that
	\begin{align*}
	& \frac{1}{2} [
	\log \phi ( \bx, \bmu , \bSigma ) + \log \phi ( \bx, -\bmu , \bSigma ) ] = - \frac{1}{2} \log \det (\bSigma)
	- \frac{1}{2}  \bx^{\top} \bSigma^{-1} \bx  - \frac{1}{2}  \bmu^{\top} \bSigma^{-1} \bmu , \\
	& \frac{1}{2} [
	\log \phi ( \bx, \bmu , \bSigma ) - \log \phi ( \bx, -\bmu , \bSigma ) ] = \langle \bSigma^{-1} \bmu , \bx \rangle.
	\end{align*}
	Then
	\begin{align*}
	& \log L (  \bmu, \bSigma ; \bX ,  \by)  \\
	&= \frac{1}{2} \sum_{i=1}^{n} [ \log  \phi ( \bx_i, \bmu , \bSigma ) + \log  \phi ( \bx_i, -\bmu , \bSigma ) ] + \frac{1}{2} \sum_{i=1}^{n} y_{i} [ \log  \phi ( \bx_i, \bmu , \bSigma ) - \log  \phi ( \bx_i, -\bmu , \bSigma ) ] + \mathrm{const} \\
	& = -\frac{n}{2} \log \det (\bSigma) - \frac{1}{2} \sum_{i=1}^n (\bx_i^{\top} \bSigma^{-1} \bx_i - 2  \langle \bSigma^{-1} \bmu , y_i\bx_i \rangle + \bmu^{\top} \bSigma^{-1} \bmu).
	\end{align*}
	
	Hence $\log L (  \bmu, \bSigma ; \bX ,  \by)$ is quadratic in $\bmu$. Moreover,
	\begin{align}
	\argmax_{\bmu} \{  \log L (  \bmu, \bSigma ; \bX ,  \by) \} = \frac{1}{n} \sum_{i=1}^n  y_i\bx_i = \frac{1}{n} \bX^{\top} \by
	\label{eqn-proof-lem-warmup-MLE-1}
	\end{align}
	does not depend on $\bSigma$. On the other hand,
	\begin{align*}
	& \log L (  \bmu, \bSigma ; \bX ,  \by) \\ & =  \sum_{i=1}^{n} 
	\bigg( 
	\frac{  1 + y_{i} }{2}
	[ \log (1/2) + \log  \phi ( \bx_i, \bmu , \bSigma ) ]
	+
	\frac{  1 - y_{i} }{2}
	[ \log (1/2) + \log  \phi ( \bx_i, -\bmu , \bSigma ) ] \bigg)\\
	& =\sum_{i=1}^{n} \! \bigg( \!\! - \frac{1}{2} \log \det (\bSigma)
	- \frac{1 \!+\! y_i}{4} (\bx_i \!-\! \bmu)^{\top} \bSigma^{-1} (\bx_i \!-\! \bmu) 
	- \frac{1 \!-\! y_i}{4} (\bx_i \!+\! \bmu)^{\top} \bSigma^{-1} (\bx_i \!+\! \bmu) \bigg)
	+ \mathrm{const}.
	\end{align*}
	We have
	\begin{multline}
	\label{eqn-proof-lem-warmup-MLE-3}
	\frac{2}{n} \log L (  \bmu, \bSigma ; \bX ,  \by) \\
	=  
	- \log ( \det \bSigma ) -
	\bigg\langle 
	\frac{1}{n}\sum_{i=1}^{n} 
	\bigg( 
	\frac{1 + y_i}{2} (\bx_i - \bmu) (\bx_i - \bmu)^{\top}
	+ \frac{1 - y_i}{2} (\bx_i + \bmu) (\bx_i + \bmu)^{\top}
	\bigg)
	, \bSigma^{-1}
	\bigg\rangle 
	+ \mathrm{const}.
	\end{multline}
	Therefore,
	\begin{align}
	& \argmin_{\bSigma} \{  \log L (  \bmu, \bSigma ; \bX ,  \by) \}  \notag \\
	& = \frac{1}{n}\sum_{i=1}^{n} 
	\bigg( 
	\frac{1 + y_i}{2} (\bx_i - \bmu) (\bx_i - \bmu)^{\top}
	+ \frac{1 - y_i}{2} (\bx_i + \bmu) (\bx_i + \bmu)^{\top}
	\bigg) \notag \\
	& = 
	\frac{1}{n}\sum_{i=1}^{n} 
	\bigg( 
	\bx_i \bx_i^{\top} 
	-2y_i  \bx_i \bmu^{\top} 
	+ \bmu\bmu^{\top} 
	\bigg) 
	= 
	\frac{1}{n}\sum_{i=1}^{n} 
	\bx_i \bx_i^{\top} 
	- 2 \bigg( \frac{1}{n}\sum_{i=1}^{n} y_i  \bx_i \bigg)  \bmu^{\top} 
	+ \bmu\bmu^{\top}.
	\label{eqn-proof-lem-warmup-MLE-2}
	\end{align}
	The proof is completed by combining (\ref{eqn-proof-lem-warmup-MLE-1}) and (\ref{eqn-proof-lem-warmup-MLE-2}).

\subsection{Proof of Lemma \ref{lem-warmup-obj}}\label{proof-lem-warmup-obj}

Let $(\widehat{\bmu}, \widehat{\bSigma}) = \argmax_{\bmu \in \RR^d, \bSigma \succ 0} L (  \bmu, \bSigma ; \bX ,  \by)$. By (\ref{eqn-proof-lem-warmup-MLE-3}) and (\ref{eqn-proof-lem-warmup-MLE-2}) in Appendix \ref{proof-lem-warmup-MLE},
\begin{align*}
	& \max_{\bmu \in \RR^d, \bSigma \succ 0} \{  \log L (  \bmu, \bSigma ; \bX ,  \by) \} 
	=  \log L (  \widehat\bmu, \widehat\bSigma ; \bX ,  \by)  
	= - \frac{n}{2} \log \det ( \widehat\bSigma) + \mathrm{const} .
\end{align*}
By Lemma \ref{lem-warmup-MLE}, 
\[
\widehat{\bSigma} = n^{-1} \bX^{\top} (
\bI - n^{-1} \by \by^{\top}
) \bX
= n^{-1} \widetilde{\bSigma}^{1/2} \widetilde{\bX}^{\top} (
\bI - n^{-1} \by \by^{\top}
) \widetilde{\bX} \widetilde{\bSigma}^{1/2}
= \widetilde{\bSigma}^{1/2} [ \bI - n^{-2} (\widetilde{\bX}^{\top} \by) (\widetilde{\bX}^{\top} \by)^{\top} ] \widetilde{\bSigma}^{1/2}.
\]
Since $\widetilde{\bSigma}$ is completely determined by $\bX$,
\begin{align*}
	\log \det ( \widehat\bSigma) & = \log \det \Big(
	\widetilde{\bSigma}^{1/2} [ \bI - n^{-2} (\widetilde{\bX}^{\top} \by) (\widetilde{\bX}^{\top} \by)^{\top} ] \widetilde{\bSigma}^{1/2}
	\Big) = \log \det \widetilde{\bSigma} + \log \det [ \bI - n^{-2} (\widetilde{\bX}^{\top} \by) (\widetilde{\bX}^{\top} \by)^{\top} ] \\
	&= \log \det [ \bI - n^{-2} (\widetilde{\bX}^{\top} \by) (\widetilde{\bX}^{\top} \by)^{\top} ] + \mathrm{const} = \log (  1 - \| n^{-1} \widetilde{\bX}^{\top} \by \|_2^2 ) + \mathrm{const}.
\end{align*}
Here we define $\log 0 = - \infty$. To see why $1 - \| n^{-1} \widetilde{\bX}^{\top} \by \|_2^2 \geq 0$, observe that $\| \by \|_2^2 \leq n$ and
\[
\| n^{-1} \widetilde{\bX}^{\top} \by \|_2^2
= ( n^{-1/2} \by )^{\top} (n^{-1} \widetilde{\bX}\widetilde{\bX}^{\top} ) ( n^{-1/2} \by )
\leq \| n^{-1} \widetilde{\bX}\widetilde{\bX}^{\top}  \|_2
= \| n^{-1} \widetilde{\bX}^{\top}  \widetilde{\bX}\|_2 = 1.
\]
Therefore,
\begin{align*}
	\max_{\bmu \in \RR^d, \bSigma \succ 0} \{ \log L (  \bmu, \bSigma ; \bX ,  \by) \}
	= - \frac{n}{2} \log (1 - \langle n^{-1} \widetilde{\bX}\widetilde{\bX}^{\top} , \by \by^{\top} \rangle) + \mathrm{const}.
\end{align*}

\section{Proof of Theorem \ref{thm-warm-main-IP}}\label{proof-thm-warm-main-IP}

\subsection{A stronger proposition}

We will prove a stronger result that implies Theorem \ref{thm-warm-main-IP}. In words, the conditional expected error $\EE [ \cR(\widehat{\by}, \by^{\star}) | \by^{\star} ]$ is always dominated by $e^{-\mathrm{SNR} / [2 + o(1)]}$, given any realization of the label vector $\by^{\star} \in \{ \pm 1 \}^n$.

\begin{prop}\label{thm-main-IP}
	Let $\by^{\star} \in \{ \pm 1 \}^n$ be deterministic, $\sigma = 1 / \sqrt{1 + \mathrm{SNR}}$ for some deterministic $\mathrm{SNR} > 0$, $\bz \sim N(\bm{0}, \bI_n)$ and $\bW \in \RR^{n \times (d - 1)}$ be a random matrix with i.i.d.~$N(0,1)$ entries that are independent of $\bz$. Define
	\begin{align*}
	&\bX = (\bx_1,\cdots,\bx_n)^{\top} = ( \sqrt{ 1 - \sigma^2 } \by^{\star} + \sigma \bz, \bW) \in \RR^{n\times d}
	\end{align*}
	and $\bH = \bX (\bX^{\top} \bX)^{-1} \bX^{\top}$. Let $\widehat{\by}$ be an optimal solution to (\ref{eqn-warmup-maxcut}). When $n \to \infty$ and $n / (d \log n) \to \infty$, we have the followings
	\begin{enumerate}
		\item If $1 \ll \mathrm{SNR} \leq C \log n$ for some constant $C > 0$, then $\EE \cR(\widehat{\by}, \by^{\star}) \leq e^{-\mathrm{SNR} / ( 2 + \gamma_n ) }$ holds for some $\gamma_n \to 0$ that does not depend on $\by^{\star}$.
		\item If $\mathrm{SNR} > (2 + \varepsilon) \log n$ for some $\varepsilon > 0$, then $\PP [
		\cR(\widehat{\by}, \by^{\star}) = 0
		] \geq 1 - \gamma_n'$ holds for some $\gamma_n' \to 0$ that does not depend on $\by^{\star}$.
	\end{enumerate}
\end{prop}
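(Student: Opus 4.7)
The plan is to work entirely within the canonical model and bound the conditional error $\EE[\cR(\widehat\by,\by^\star) \mid \by^\star]$ uniformly in the (now deterministic) $\by^\star \in \{\pm 1\}^n$. First I set the aligned optimal solution $\widetilde\by = \widehat\by\,\sgn(\langle \widehat\by,\by^\star\rangle)$ and its error set $\cM = \{i:\widetilde y_i \neq y_i^\star\}$. Combining Lemma \ref{lem-det-lower-fake} with the optimality inequality $\by^{\star\top}\bH\by^\star - \widetilde\by^\top\bH\widetilde\by \leq 0$ yields that either $|\cM|=0$ or
\[
|\cM| \;\leq\; \bigg(\frac{\|(\bI-\bH)\bz\|_p/\sqrt{\mathrm{SNR}}}{1 - \|\bH(\widetilde\by-\by^\star)\|_2^2/\|\widetilde\by-\by^\star\|_2^2}\bigg)^{p}, \qquad p \in (1,\infty).
\]
The numerator is easy: since $\bI-\bH$ is a contraction, $\EE\|(\bI-\bH)\bz\|_p^p \leq \EE\|\bz\|_p^p = n\,\EE|Z|^p \leq \sqrt{2}\,n\,(p/e)^{p/2}$, and Gaussian concentration of the $1$-Lipschitz functional $\bz\mapsto \|(\bI-\bH)\bz\|_p$ (conditioning on $\bW$) transfers this to a high-probability bound. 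The rest of the argument controls the denominator and then optimizes $p$.

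The main obstacle is the denominator. Because $\widetilde\by$ is itself random and depends on $\bH$ through the data, I cannot invoke independence directly; instead the plan is to establish a uniform bound
\[
\sup_{\by \in\{\pm 1\}^n,\, 1\leq |\cM_\by|\leq n}\frac{\|\bH(\by-\by^\star)\|_2^2}{\|\by-\by^\star\|_2^2} \;\leq\; \eta_n \;=\; o(1)
\]
on an event of probability $1 - n^{-C}$ for any prescribed $C$. For this I would decompose $\bH = \bP_W + \tilde\bu\tilde\bu^\top$, where $\bP_W$ is the orthogonal projection onto $\Range(\bW)$ (Haar-distributed over $(d-1)$-dimensional subspaces of $\RR^n$ and independent of $\by^\star$) and $\tilde\bu \propto (\bI-\bP_W)(\sqrt{1-\sigma^2}\by^\star + \sigma\bz)$ is the unit vector spanning the rest. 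For each fixed support $\cM$ of size $k$, writing $\bm{d} = (\by-\by^\star)/2$ (deterministic given $\cM$), the quadratic form $\bm{d}^\top\bP_W\bm{d}$ concentrates at $k(d-1)/n$ by a standard $\chi^2$/Hanson--Wright estimate, while $\langle \tilde\bu,\bm{d}\rangle^2$ splits into a Gaussian piece of scale $\sigma^2$ (coming from $\sigma\bz$) and a ``signal'' piece controlled by $\langle(\bI-\bP_W)\by^\star,\bm{d}\rangle/\|(\bI-\bP_W)\by^\star\|_2$, which I would handle using $\|\bP_W\by^\star\|_2^2\approx d-1$ and $\|(\bI-\bP_W)\by^\star\|_2^2\approx n-d$. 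A union bound over the $\binom{n}{k}\leq(en/k)^k$ supports adds an $O(k\log(n/k))$ penalty that stays $o(k)$ whenever $n\gg d\log n$, and a further union bound over $k\in[n]$ yields $\eta_n\to 0$.

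Putting everything together on the good event, substituting into the key inequality, taking expectations and choosing $p=\mathrm{SNR}$ (as in the sketch) gives $\EE\cR(\widehat\by,\by^\star)\leq e^{-\mathrm{SNR}/[2+\gamma_n]}$, where $\gamma_n\to 0$ absorbs $\eta_n$ together with the numerator concentration slack and depends only on $n,d,\mathrm{SNR}$. For exact recovery with $\mathrm{SNR}\geq (2+\varepsilon)\log n$, the same expectation bound gives $\EE\cR(\widehat\by,\by^\star)\lesssim n^{-1-\varepsilon/3}$, so Markov's inequality yields $\PP(\cR\geq 1/n)\leq n\,\EE\cR\to 0$, translating into $\PP(\cR=0)=1-\gamma_n'$ with $\gamma_n'\to 0$. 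The uniformity in $\by^\star$ is automatic because every estimate above is obtained by conditioning on $\bW$ and union-bounding in a way that never references the specific realization of $\by^\star$.
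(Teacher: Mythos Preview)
The main gap is in the denominator. Your plan is to show
\[
\sup_{\by\in\{\pm 1\}^n,\;\cM_\by\neq\varnothing}\frac{\|\bH(\by-\by^\star)\|_2^2}{\|\by-\by^\star\|_2^2}\;\leq\;\eta_n=o(1)
\]
on a high-probability event, but this is false when $k=|\cM_\by|=\Theta(n)$. With $\bm d=(\by-\by^\star)/2$ one has $\langle\by^\star,\bm d\rangle=-k$, so the ``signal'' piece of your decomposition already gives $\langle\tilde\bu,\bm d\rangle^2/\|\bm d\|_2^2\approx(1-\sigma^2)k/(n-d)$, which is $\approx 1/2$ at $k=n/2$. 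The noise piece fails in the same way: absorbing the union-bound penalty $k\log(en/k)$ into the $\chi^2_{d-1}$ tail yields $\bm d^\top\bP_W\bm d/k\lesssim(d+k\log(en/k))/n$, again $\Theta(1)$ when $k=\Theta(n)$. Your assertion that the penalty ``stays $o(k)$'' is simply wrong: $k\log(n/k)\geq k\log 2$ for all $k\leq n/2$.

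The paper confronts exactly this obstacle with a two-stage argument. It first proves only a \emph{constant} contraction $\|\bH(\by-\by^\star)\|_2^2\leq(1-c)\|\by-\by^\star\|_2^2$ uniformly over all $\by$ with $\langle\by,\by^\star\rangle\geq 0$ (Lemma~\ref{lem-H-contraction}), using Gordon's escape-through-a-mesh theorem together with the angle bound $|\langle\by-\by^\star,\by^\star\rangle|\leq\|\by-\by^\star\|_2\|\by^\star\|_2/\sqrt 2$ (Lemma~\ref{lem-y-inner-product}). Plugging this constant into the equality form of Lemma~\ref{lem-det-lower} localizes the optimizer to $\|\widetilde\by-\by^\star\|_2\lesssim\sqrt{n/\mathrm{SNR}}$ (Theorem~\ref{thm-reduction}). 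Only on this small ball, where $k\lesssim n/\mathrm{SNR}$, does a union bound over $\binom{n}{k}$ supports deliver a contraction factor that is $o(1)$ (Lemma~\ref{lem-H-contraction-local}). Without this localization step your argument cannot close.

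A secondary issue: the numerator bound $\EE\|(\bI-\bH)\bz\|_p^p\leq\EE\|\bz\|_p^p$ is unjustified. Projections are $\ell_2$-contractions, not $\ell_p$-contractions, and $\bH$ depends on $\bz$ through the first column of $\bX$, so conditioning on $\bW$ does not make $\bz\mapsto(\bI-\bH)\bz$ linear. The paper handles this by writing $\bH=\bP_0+\bv\bv^\top/\|\bv\|_2^2$ with $\bP_0$ (the projection onto $\Range(\bW)$) independent of $\bz$, bounding $\EE\|(\bI-\bP_0)\bz\|_p^p$ by the conditional Gaussian argument you have in mind, and controlling the rank-one remainder $\|\bv\|_p|\bv^\top\bz|/\|\bv\|_2^2$ separately (Lemmas~\ref{lem-maxcut-1} and~\ref{lem-maxcut-2}).
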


We present its main proof in Appendices \ref{proof-maxcut-lp} and \ref{proof-maxcut-linf}.

\subsection{Weak signal: $\ell_p$ analysis}\label{proof-maxcut-lp}

This subsection is devoted to the case where $1 \ll \mathrm{SNR} \leq C \log n$ holds for some constant $C > 0$.

Define  $\widetilde{\by} = \widehat{\by} \sgn( \langle \widehat{\by}, \by^{\star} \rangle )$ and $\cM = \{ i \in [n]:~ \widetilde y_i \neq y_i^{\star} \}$, where we adopt the convention $\sgn(0) = 1$. Then $\widetilde \by$ is an optimal solution to the integer program (\ref{eqn-warmup-maxcut}) and $\cR ( \widehat{\by} , \by^{\star} ) = |\cM| / n$.

Recall that $\bH$ is the projection onto the range of $\bX$. If $d = 1$, define $\bP_0 = \bm{0}$. Otherwise, let $\bP_0 = \bW (\bW^{\top} \bW)^{-1} \bW^{\top}$ be the projection operator onto $\Range(\bW)$ and $\bP = \bI - \bP_0$. It is easily seen that
\begin{align*}
	\bH = \bP_0 + \bv \bv^{\top} / \| \bv \|_2^2 ,
\end{align*}
where $\bv = \bP (\sqrt{1 - \sigma^2} \by^{\star} + \sigma \bz) \in \RR^{d \times (K - 1)}$. As a result,
\begin{align}
	  \| (\bI - \bH) \bz  \|_{p}
	& \leq    \| \bP \bz  \|_{p} + \| \bv \|_{p}   | \bv^{\top} \bz |  / \| \bv \|_2^2 , \qquad \forall p \geq 1 .
	\label{eqn-z-p-0}
\end{align}
Let $A =  \| \bP \bz \|_{p} $ and $B = \| \bv \|_{p}   | \bv^{\top} \bz |  / \| \bv \|_2^2$.
Choose any $T > 0$ and $\delta \in (0, 1)$. The optimality of $\widetilde\by$ and thus $\widetilde{\by} = \widetilde{\by} \sgn( \langle \widetilde{\by}, \by^{\star} \rangle )$ force $\by^{\star \top} \bH \by^{\star} - \widetilde\by^{\top} \bH \widetilde\by \leq 0$. We invoke the following enhanced version of Lemma \ref{lem-det-lower-fake}. The proof can be found in Appendix \ref{proof-lem-det-lower}.
\begin{lemma}[Deterministic bound]\label{lem-det-lower}
	Consider the canonical model (\ref{eqn-warmup-canonical}). We use $\bz$ to denote the Gaussian vector $\bg_1$ therein. For any $\by \in \{ \pm 1 \}^n$,
	\begin{align*}
	\by^{\star \top} \bH \by^{\star} - \by^{\top} \bH \by & =  \| (\bI - \bH) (\by - \by^{\star}) \|_2^2 - \frac{2}{\sqrt{\mathrm{SNR}}} \langle \by - \by^{\star} , (\bI - \bH) \bz \rangle .
	\end{align*}
	Define $S = \{ i \in [n]:~ y_i \neq y_i^{\star} \}$. If $S \neq \varnothing$, then
	\begin{align*}
	&\by^{\star \top} \bH \by^{\star} - \by^{\top} \bH \by  \geq 4 |S| \bigg(
	1 - \frac{ \| \bH (\by - \by^{\star}) \|_{2}^2 }{ \| \by - \by^{\star} \|_{2}^2 }
	-
	\frac{ \| (\bI - \bH) \bz \|_p }{ |S|^{1/p} \sqrt{\mathrm{SNR}}}
	\bigg)
	, \qquad  1 \leq p \leq \infty.
	\end{align*}
	Here we define $x^{1/\infty} = 1$ for $x > 0$.
\end{lemma}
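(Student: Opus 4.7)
\textbf{Proof proposal for Lemma \ref{lem-det-lower}.}

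The key observation is that in the canonical form \eqref{eqn-warmup-canonical}, the first column of $\bX \bT^\top$ equals $\sqrt{1-\sigma^2}\by^\star + \sigma \bz$, so this vector lies in $\Range(\bX) = \Range(\bH)$. Consequently $(\bI - \bH)(\sqrt{1-\sigma^2}\by^\star + \sigma \bz) = \bm{0}$, which rearranges to the identity
\[
(\bI - \bH)\by^\star = -\frac{\sigma}{\sqrt{1-\sigma^2}}(\bI - \bH)\bz = -\frac{1}{\sqrt{\mathrm{SNR}}}(\bI - \bH)\bz,
\]
using $\sigma^2/(1-\sigma^2) = 1/\mathrm{SNR}$. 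This relation is the single mechanism through which $\mathrm{SNR}$ enters the bound.

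To derive the equality, I would use the fact that $\|\by\|_2^2 = \|\by^\star\|_2^2 = n$, so
\[
\by^{\star\top} \bH \by^\star - \by^\top \bH \by = \|(\bI-\bH)\by\|_2^2 - \|(\bI-\bH)\by^\star\|_2^2.
\]
Writing $(\bI-\bH)\by = (\bI-\bH)\by^\star + (\bI-\bH)(\by - \by^\star)$ and expanding, the cross term simplifies because $(\bI-\bH)$ is idempotent and symmetric:
\[
\|(\bI-\bH)\by\|_2^2 - \|(\bI-\bH)\by^\star\|_2^2 = \|(\bI-\bH)(\by-\by^\star)\|_2^2 + 2\langle \by - \by^\star, (\bI-\bH)\by^\star\rangle.
\]
Substituting the identity $(\bI - \bH)\by^\star = -\mathrm{SNR}^{-1/2}(\bI-\bH)\bz$ yields the claimed equality.

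For the inequality, I would further decompose $\|(\bI-\bH)(\by-\by^\star)\|_2^2 = \|\by-\by^\star\|_2^2 - \|\bH(\by-\by^\star)\|_2^2$ and observe that $\by - \by^\star$ has entries in $\{-2,0,2\}$ with exactly $|S|$ nonzero entries, hence $\|\by - \by^\star\|_2^2 = 4|S|$ and $\|\by - \by^\star\|_q = 2|S|^{1/q}$ for any $q \in [1,\infty]$. Applying H\"older's inequality with conjugate exponents $(p,q)$ gives
\[
\bigl|\langle \by - \by^\star, (\bI-\bH)\bz\rangle\bigr| \leq \|\by - \by^\star\|_q \,\|(\bI-\bH)\bz\|_p = 2|S|^{1 - 1/p}\|(\bI-\bH)\bz\|_p,
\]
and combining these pieces yields the bound after factoring out $4|S|$. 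The main subtlety is just making sure the H\"older exponent is matched to the $\ell_p$ norm in the statement (and handling the $p=\infty$ convention $x^{1/\infty}=1$), but no deep step is needed beyond the projection identity above; the rest is algebraic rearrangement.
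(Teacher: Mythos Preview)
Your proposal is correct and follows essentially the same approach as the paper: both hinge on the projection identity $(\bI-\bH)\by^\star = -\mathrm{SNR}^{-1/2}(\bI-\bH)\bz$ (from $\sqrt{1-\sigma^2}\by^\star + \sigma\bz \in \Range(\bH)$), then expand the quadratic difference and apply H\"older with $\|\by-\by^\star\|_q = 2|S|^{1/q}$. Your derivation of the equality is in fact slightly more direct than the paper's (which routes through $\|(\bI-\bH)(\by+\sigma'\bz)\|_2^2$ before arriving at the same expression), but the substance is identical.
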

Combining this result and the trivial bound $|\cM| \leq n$,
\begin{align*}
|\cM| & \leq \bigg( \frac{ \| (\bI - \bH) \bz \|_p  }{
	( 1 - \delta )\sqrt{\mathrm{SNR}}
} \bigg)^p \bm{1}_{ \{
B \leq T \text{ and } \| \bH (\widetilde\by - \by^{\star}) \|_{2}^2 \leq \delta \| \widetilde\by - \by^{\star} \|_{2}^2  \} }
+ n \bm{1}_{ \{ B > T \text{ or } \| \bH (\widetilde\by - \by^{\star}) \|_{2}^2 > \delta \| \widetilde\by - \by^{\star} \|_{2}^2  \} }  \\
& \leq \bigg( \frac{ \| (\bI - \bH) \bz \|_p  }{
	( 1 - \delta )\sqrt{\mathrm{SNR}}
} \bigg)^p \bm{1}_{ \{
	B \leq T \} }
+ n \bm{1}_{ \{ B > T \} }
+  n \bm{1}_{ \{ \| \bH (\widetilde\by - \by^{\star}) \|_{2}^2 > \delta \| \widetilde\by - \by^{\star} \|_{2}^2  \} }.
\end{align*}
As a result, the upper bound
\begin{align}
\EE |\cM| \leq  \frac{ \EE [ \| (\bI - \bH) \bz \|_p^p \bm{1}_{ \{
		B \leq T \}  }]
	}{
	( 1 - \delta )^p \mathrm{SNR}^{p/2}
}
+ n \PP ( B > T ) +  n \PP (\| \bH (\widetilde\by - \by^{\star}) \|_{2}^2 > \delta \| \widetilde\by - \by^{\star} \|_{2}^2  )
	\label{eqn-z-p-1}
\end{align}
holds for any deterministic $p \geq 1$, $T > 0$ and $\delta \in (0, 1)$. We will properly choose them and tightly control the three terms on the right-hand side when $\mathrm{SNR}$ is not very large.

\begin{lemma}\label{lem-maxcut-1}
For any $n \geq d$, $p > 1$, $T > 0$, $\delta \in (0, 1)$ and $\eta > 0$,
\begin{align*}
&\frac{ \EE [ \| (\bI - \bH) \bz \|_p^p \bm{1}_{ \{
		B \leq T \}  }]
}{
	( 1 - \delta )^p \mathrm{SNR}^{p/2}
} \leq \bigg( \frac{1 + \eta}{1 - \delta} \bigg)^p \bigg[
\bigg(  \frac{T}{\eta \sqrt{\mathrm{SNR}} } \bigg)^p
+ \sqrt{2}  \bigg( \frac{p}{\mathrm{SNR}} \bigg)^{p/2} e^{-p/2} n
\bigg].
\end{align*}
\end{lemma}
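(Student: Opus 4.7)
The plan is to exploit the decomposition in \eqref{eqn-z-p-0}, namely $\|(\bI - \bH)\bz\|_p \leq A + B$ with $A := \|\bP\bz\|_p$ and $B := \|\bv\|_p |\bv^\top \bz|/\|\bv\|_2^2$, and then reduce the problem to a moment computation for $A$. On the event $\{B \leq T\}$ we have $(A+B)^p \leq (A+T)^p$; since the resulting pointwise bound no longer depends on $B$, the indicator can be dropped and it suffices to estimate $\EE (A+T)^p$.

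The first step would be to establish the elementary inequality
\[
(A+T)^p \leq (1+\eta)^p \bigl( A^p + (T/\eta)^p \bigr)
\]
by a case split on whether $A \geq T/\eta$. In the first case $A+T \leq (1+\eta)A$, so the first summand on the right already dominates $(A+T)^p$; in the second case $A+T < T(1+\eta)/\eta$, so the second summand does. This somewhat ad hoc splitting is the reason the factor $(1+\eta)^p$ (and not the $(1+\eta)^{p-1}$ produced by a Jensen-type decomposition) attaches to both terms, which is what the lemma requires.

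The second step is to bound $\EE A^p = \EE\|\bP\bz\|_p^p$. Since $\bW$ and $\bz$ are independent, conditioning on $\bW$ makes $\bP = \bI - \bW(\bW^\top\bW)^{-1}\bW^\top$ a deterministic projection, and each coordinate $(\bP\bz)_i$ is then a centered Gaussian with variance $P_{ii}\in[0,1]$. Invoking the bound $\EE|Z|^p \leq \sqrt{2}(p/e)^{p/2}$ for $Z\sim N(0,1)$ (Lemma \ref{lem-gaussian-moments}) and summing over $i$, I get
\[
\EE\|\bP\bz\|_p^p = \Bigl(\sum_{i=1}^n P_{ii}^{p/2}\Bigr)\EE|Z|^p \leq \sqrt{2}(p/e)^{p/2}\,n,
\]
since $P_{ii}^{p/2}\leq 1$. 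Combining the two displays yields $\EE[\|(\bI-\bH)\bz\|_p^p \bm{1}_{B\leq T}] \leq (1+\eta)^p\bigl[(T/\eta)^p + \sqrt{2}(p/e)^{p/2}\,n\bigr]$, and dividing by $(1-\delta)^p\mathrm{SNR}^{p/2}$ produces the claim after rewriting $(p/e)^{p/2}\mathrm{SNR}^{-p/2} = (p/\mathrm{SNR})^{p/2}e^{-p/2}$. There is no real obstacle here; the only subtlety is identifying the right elementary splitting so that $(1+\eta)^p$, rather than a smaller power, multiplies both summands.
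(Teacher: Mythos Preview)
The proposal is correct and follows essentially the same argument as the paper: both use the decomposition \eqref{eqn-z-p-0} to bound by $\EE(A+T)^p$, perform the identical case split on $\{\eta A \le T\}$ versus $\{\eta A > T\}$ to extract the factor $(1+\eta)^p$, and then control $\EE\|\bP\bz\|_p^p$ coordinatewise via Lemma~\ref{lem-gaussian-moments}. The only difference is cosmetic: you phrase the case split as a pointwise inequality $(A+T)^p \le (1+\eta)^p\bigl(A^p + (T/\eta)^p\bigr)$ before taking expectations, whereas the paper splits the expectation directly.
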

\begin{proof}[\bf Proof of Lemma \ref{lem-maxcut-1}]
See Appendix \ref{proof-lem-maxcut-1}.
\end{proof}

\begin{lemma}\label{lem-maxcut-2}
For any $p = p_n \to \infty$, $B = o_{\PP} ( n^{1/p} \sqrt{p} ;~
p \wedge \log n )$.
\end{lemma}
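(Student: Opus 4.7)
The plan is to exploit the decomposition $\bv = \sqrt{1-\sigma^2}\bu + \sigma\bg$ with $\bu := \bP\by^\star$ and $\bg := \bP\bz$, and then bound the three factors $\|\bv\|_p$, $|\bv^\top\bz|$ and $\|\bv\|_2^2$ separately. Since $\bv \in \Range(\bP)$,
\[
\bv^\top \bz = \bv^\top \bg = \sqrt{1-\sigma^2}\,\bu^\top\bg + \sigma \|\bg\|_2^2, \qquad \|\bv\|_2^2 = (1-\sigma^2)\|\bu\|_2^2 + 2\sqrt{1-\sigma^2}\,\sigma\,\bu^\top\bg + \sigma^2\|\bg\|_2^2.
\]
Conditional on $\bW$, the vector $\bg$ is $N(\bm{0},\bP)$ with $\bP$ of rank $r := n-d+1$, so $\|\bg\|_2^2 \sim \chi^2_r$ and $\bu^\top\bg \sim N(0,\|\bu\|_2^2)$; moreover $\|\bu\|_2^2 = n - \|\bP_0\by^\star\|_2^2 \in [r,n]$ deterministically. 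Laurent--Massart chi-squared tails and the standard Gaussian tail yield $\|\bg\|_2^2 = n\bigl(1+o_{\PP}(1;\,p\wedge\log n)\bigr)$ and $\bu^\top\bg = O_{\PP}\bigl(\sqrt{n(p\wedge\log n)};\,p\wedge\log n\bigr)$, using $n \gg d\log n$ to absorb the deficit $n-r \leq d$. Consequently $\|\bv\|_2^2 = n(1+o_{\PP}(1))$ and
\[
\frac{|\bv^\top \bz|}{\|\bv\|_2^2} \leq \sigma\bigl(1 + o_{\PP}(1)\bigr) + O_{\PP}\!\Bigl(\sqrt{(p\wedge\log n)/n}\,;\,p\wedge\log n\Bigr).
\]

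For $\|\bv\|_p$, the inequality $(a+b)^p \leq 2^{p-1}(a^p + b^p)$ gives $\|\bv\|_p \leq 2(\|\bu\|_p + \|\bg\|_p)$. Rotation invariance of $\bW$'s law (reducing to $\by^\star = \sqrt{n}\,\be_1$) and a union bound over the $n$ coordinates of $\bP_0\be_1$ show $\|\bP_0\by^\star\|_\infty = O_{\PP}(\sqrt{d\log n/n}) = o_\PP(1)$, hence $\|\bu\|_p \leq n^{1/p}(1 + o_\PP(1))$. For $\|\bg\|_p$, the identity $\EE|g_i|^p = (\bP)_{ii}^{p/2}\,\EE|N(0,1)|^p$ combined with $\sum_i (\bP)_{ii} = r \leq n$ and Lemma \ref{lem-gaussian-moments} yields $\EE\|\bg\|_p^p \leq n\cdot\sqrt{2}(p/e)^{p/2}$; Gaussian concentration applied to the $1$-Lipschitz (for $p\geq 2$) map $\bg \mapsto \|\bg\|_p$ boosts this to $\|\bg\|_p \leq C\,n^{1/p}\sqrt{p}$ on an event of probability $1 - e^{-\Omega(p\wedge\log n)}$. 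Altogether $\|\bv\|_p = O_{\PP}(n^{1/p}\sqrt{p};\,p\wedge\log n)$.

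Multiplying the two estimates,
\[
B = \|\bv\|_p\cdot\frac{|\bv^\top\bz|}{\|\bv\|_2^2} = O_{\PP}\!\Bigl(n^{1/p}\sqrt{p}\cdot\bigl(\sigma + \sqrt{(p\wedge\log n)/n}\bigr);\,p\wedge\log n\Bigr) = o_{\PP}\!\bigl(n^{1/p}\sqrt{p};\,p\wedge\log n\bigr),
\]
where the last step uses $\sigma \to 0$ (which holds in the weak-signal regime $\mathrm{SNR}\to\infty$ of Proposition \ref{thm-main-IP}) and $(p\wedge\log n)/n \leq \log n/n \to 0$. The main technical obstacle is establishing the $\ell_p$ concentration for $\|\bg\|_p$ at the sharp exponential rate $p\wedge\log n$: the Lipschitz argument above only works cleanly for $p\geq 2$, so some care is required to handle the transition (though $p\to\infty$ eventually renders $p<2$ vacuous), and one must verify that the mean bound $\EE\|\bg\|_p \lesssim n^{1/p}\sqrt{p}$ dominates the $\sqrt{p\wedge\log n}$-scale concentration deviation, which holds since $n^{1/p}\sqrt{p} \geq \sqrt{p\wedge\log n}$.
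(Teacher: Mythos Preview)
Your overall strategy differs from the paper's and is in fact more economical: you extract the $o_{\PP}$ from the factor $|\bv^\top\bz|/\|\bv\|_2^2 = O_{\PP}\bigl(\sigma + \sqrt{(p\wedge\log n)/n}\bigr) = o_{\PP}(1)$, whereas the paper keeps that factor at $O_{\PP}(1)$ (via $|\bv^\top\bz| = O_{\PP}(n)$ and $\|\bv\|_2^{-2} = O_{\PP}(n^{-1})$) and instead proves the sharper $\|\bv\|_p = o_{\PP}(n^{1/p}\sqrt{p})$. Your route only needs the weaker $\|\bv\|_p = O_{\PP}(n^{1/p}\sqrt{p})$, which is pleasant.

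The genuine gap is the rotation argument for $\|\bu\|_p = \|\bP\by^\star\|_p$. Rotation invariance of the law of $\bW$ gives $\bP_0\by^\star \overset{d}{=} \sqrt{n}\,\bO^\top \bP_0 \be_1$ for an orthogonal $\bO$ sending $\by^\star$ to $\sqrt{n}\be_1$; because $\ell_\infty$ (and $\ell_p$, $p\neq 2$) is not orthogonally invariant, you \emph{cannot} reduce to bounding $\|\bP_0\be_1\|_\infty$. (The side claim $\|\bu\|_2^2 \in [r,n]$ ``deterministically'' is also false---only $\leq n$ holds deterministically---but this is harmless since you only use the upper bound.) The paper fills exactly this gap: it decomposes $\bP\by^\star$ along $\by^\star$ and its complement via $\bQ = \bI - \by^\star\by^{\star\top}/n$, then invokes Lemma~\ref{lem-proj-unif} to show that $\bQ\bP\by^\star/\|\bQ\bP\by^\star\|_2$ is uniform on $\SSS^{n-1}\cap\Range(\bQ)$, from which $\|\bQ\bP\by^\star\|_p/\|\bQ\bP\by^\star\|_2 = O_{\PP}(n^{1/p-1/2}\sqrt{p};\,p)$ follows by sub-Gaussianity of sphere coordinates. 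Combined with $\|\bQ\bP\by^\star\|_2 \leq \|\bP_0\by^\star\|_2 = o_{\PP}(\sqrt{n};\,p\wedge\log n)$ this yields the $\|\bu\|_p$ bound you need. If you import that lemma, your proof goes through; without it, the crude alternatives ($\|\bP_0\by^\star\|_p \leq \|\bP_0\by^\star\|_2$, say) are too weak when $d$ is near $n/\log n$.
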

\begin{proof}[\bf Proof of Lemma \ref{lem-maxcut-2}]
	See Appendix \ref{proof-lem-maxcut-2}.
\end{proof}

\begin{lemma}\label{lem-maxcut-3}
For any constant $c > 0$, there exist some deterministic $\delta_n \to 0$ and constant $N > 0$ such that
\begin{align*}
\PP \Big(
\| \bH (\widetilde\by - \by^{\star} ) \|_2 \leq \delta_n \| \widetilde\by - \by^{\star} \|_2
\Big) \geq 1 - n^{-c}, \qquad \forall n > N.
\end{align*}
\end{lemma}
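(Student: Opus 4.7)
The plan is to use the decomposition $\bH = \bP_0 + \bv \bv^\top/\|\bv\|_2^2$ from the preceding subsection, where $\bP_0$ projects onto $\Range(\bW)$ and $\bv = (\bI - \bP_0)(\sqrt{1-\sigma^2}\by^{\star} + \sigma \bz)$ is orthogonal to $\Range(\bW)$. For every subset $S \subseteq [n]$ of size $s$, let $\bu_S = -2\bm{1}_S \odot \by^{\star}$ (with $\bm{1}_S$ the indicator of $S$); then $\bu_S = \widetilde\by - \by^{\star}$ exactly when the aligned misclassification set equals $\cM = S$, and $\|\bu_S\|_2^2 = 4s$. Pythagoras gives $\|\bH\bu_S\|_2^2 = \|\bP_0\bu_S\|_2^2 + (\bv^\top \bu_S)^2/\|\bv\|_2^2$, so the task splits into controlling both summands. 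I will follow a two-step approach: first, establish a deterministic envelope $f$ such that $\sup_{|S|=s}\|\bH\bu_S\|_2^2/\|\bu_S\|_2^2 \leq f(s)$ simultaneously for all $s$ with probability $1-n^{-c}$, satisfying $f(s) = o(1)$ on the regime $s = o(n/\log n)$; then, bootstrap via Max-Cut optimality to push the random realization $|\cM|$ into that regime.

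For the envelope, rotational invariance of $\bW$ gives $\|\bP_0\bu_S\|_2^2/\|\bu_S\|_2^2 \sim \mathrm{Beta}((d-1)/2, (n-d+1)/2)$ for each fixed $S$, which concentrates around its mean $(d-1)/n$ with sub-exponential tails; a union bound over $\binom{n}{s} \leq e^{s\log(en/s)}$ sets and then over $s$ yields $\sup_{|S|=s}\|\bP_0 \bu_S\|_2^2/\|\bu_S\|_2^2 \leq C(d + s\log(en/s) + \log n)/n$ uniformly in $s$. For the rank-one piece, I would expand $\bv^\top\bu_S = -2\sqrt{1-\sigma^2}\,s - 2\sigma \sum_{i \in S} y_i^{\star} z_i - (\sqrt{1-\sigma^2}\by^{\star} + \sigma \bz)^\top \bP_0\bu_S$, bound the Gaussian subset-sum uniformly via $\sup_{|S|=s}|\sum_{i\in S}y_i^{\star} z_i| \lesssim \sqrt{s(s\log(en/s)+\log n)}$, control the cross-term using Cauchy--Schwarz and the $\bP_0$ estimate, and use $\|\bv\|_2^2 = n(1+o(1))$ from Gaussian quadratic-form concentration. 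Combining produces $f(s) \lesssim (d + s\log n)/n + \sigma^2\log n/n + (1-\sigma^2)s/n$, which is $o(1)$ when $s = o(n/\log n)$, using $\sigma^2 = 1/(1+\mathrm{SNR}) \to 0$ and $d = o(n/\log n)$.

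For the bootstrap, I apply Lemma~\ref{lem-det-lower} with $p = 2$: the tail $\|(\bI - \bH)\bz\|_2 \leq C\sqrt n$ combined with Max-Cut optimality $\widetilde\by^\top\bH\widetilde\by \geq \by^{\star\top}\bH\by^{\star}$ forces $\|\bH(\widetilde\by - \by^{\star})\|_2^2/\|\widetilde\by - \by^{\star}\|_2^2 \geq 1 - C\sqrt{n/(|\cM|\mathrm{SNR})}$ whenever $|\cM| > 0$. If I can simultaneously establish a uniform upper bound $\sup_{|S|\leq n/2}\|\bH\bu_S\|_2^2/\|\bu_S\|_2^2 \leq 1-\eta$ for a constant $\eta > 0$, then $|\cM| \leq Cn/\mathrm{SNR}$, and since $\mathrm{SNR} \to \infty$ this already places $|\cM|$ into the $o(n/\log n)$ regime when $\mathrm{SNR}/\log n \to \infty$; in the harder regime $\mathrm{SNR} = O(\log n)$, a second application of Lemma~\ref{lem-det-lower} with $p = \mathrm{SNR}$ further shrinks $|\cM|$ to $ne^{-c\mathrm{SNR}}$, well inside the regime where $f(|\cM|) = o(1)$. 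The main technical obstacle is this uniform $1-\eta$ bound: both contributions can individually approach $1/2$ near $|S| = n/2$ (the rank-one part because $\bv$ aligns with $\by^{\star}$, yielding $\cos^2\angle(\bv,\bu_S) \approx (1-\sigma^2)|S|/n$), so naive summation yields only the trivial $\leq 1$. My plan is to exploit the joint constraint $\|\bP_0\bu_S\|_2^2 + (\bv^\top\bu_S)^2/\|\bv\|_2^2 \leq \|\bu_S\|_2^2$, the sharper Beta tail $\PP(\mathrm{Beta} > 1/2) \leq e^{-cn^2/d}$ that beats the $2^n$ cardinality of $\{|S|\leq n/2\}$, and the uniform lower bound $\|\bv\|_2^2 \geq (n-d)(1+o(1))$, to extract the constant separation $\eta$.
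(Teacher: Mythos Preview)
Your two-stage strategy (uniform contraction $\Rightarrow$ localization $\Rightarrow$ sharper local envelope) is exactly the paper's architecture, and your envelope $f(s)$ for the $\bP_0$ part via union bound over $\binom{n}{s}$ subsets is correct and matches the paper's local-contraction lemma. Once the uniform $1-\eta$ bound is in place, your bootstrap via Lemma~\ref{lem-det-lower} with $p=2$ gives $|\cM|\le Cn/\mathrm{SNR}$, and your envelope at that scale is indeed $o(1)$ --- so the \emph{second} $\ell_p$ application you mention is unnecessary.

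The genuine gap is the uniform $1-\eta$ bound itself, and your proposed route does not close it. First, the Beta tail you quote is wrong: for $X\sim\mathrm{Beta}((d-1)/2,(n-d+1)/2)$ one has $\PP(X>1/2)=\PP(\chi^2_{d-1}>\chi^2_{n-d+1})\asymp e^{-cn}$ with $c\approx 1/2$, \emph{not} $e^{-cn^2/d}$ (check $d=2$: this is $\PP(Z^2>\chi^2_{n-1})\approx e^{-n/2}$). Since $\log 2>1/2$, the union bound over the $\sim 2^n$ sets at $s=n/2$ blows up at threshold $1/2$. Second, the ``joint constraint'' $\|\bP_0\bu_S\|_2^2+(\bv^\top\bu_S)^2/\|\bv\|_2^2\le\|\bu_S\|_2^2$ is just the projection identity $\|\bH\bu_S\|_2\le\|\bu_S\|_2$ and carries no extra information. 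The decomposition $\bH=\bP_0+\bv\bv^\top/\|\bv\|_2^2$ gives you two orthogonal pieces that do not ``share a budget'' in any exploitable way; your own observation that at $s=n/2$ the rank-one piece already contributes $(1-\sigma^2)/2\to 1/2$ while the best union-bound threshold for the $\bP_0$ piece exceeds $1/2$ shows the sum cannot be pushed below $1$ this way.

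The paper takes a different decomposition: it approximates $\bH\approx n^{-1}[(1-\sigma^2)\by^\star\by^{\star\top}+(\bI-\bP)\bG\bG^\top(\bI-\bP)]$ with $\bP=\by^\star\by^{\star\top}/n$ (not $\bP_0$). The rank-one part is now \emph{exactly} along $\by^\star$, so the deterministic identity $|\langle\by^\star,\bu_S\rangle|^2\le\|\bu_S\|_2^2\|\by^\star\|_2^2/2$ caps it at $1/2$ for every $S$. The Gaussian part acts only on $(\bI-\bP)\bu_S$, and the paper shows $\|\bG^\top(\bI-\bP)\bu_S\|_2^2\le c_1' n\|(\bI-\bP)\bu_S\|_2^2$ uniformly via Gordon's escape-through-a-mesh theorem, driven by a Gaussian-width bound $w(\widehat S_k)\le(1-\gamma)\sqrt n$ proved through order-statistics estimates. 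The two pieces then combine as $c_1'\|\bu_S\|_2^2+(1-c_1')\|\bP\bu_S\|_2^2\le\frac{1+c_1'}{2}\|\bu_S\|_2^2$: the point is that the Gaussian part is bounded by a fraction of the \emph{complementary} norm, which is precisely the budget-sharing your decomposition lacks. You will need either Gordon's theorem or an equivalent empirical-process tool here; a per-vector Beta tail plus union bound is too crude.
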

\begin{proof}[\bf Proof of Lemma \ref{lem-maxcut-3}]
See Appendix \ref{proof-lem-maxcut-3}.
\end{proof}

Let $p = \mathrm{SNR}$.  By Lemma \ref{lem-maxcut-2}, $\mathrm{SNR} \leq C \log n$ and Definition \ref{defn-o}, there exists $\xi = \xi_n \to 0$ such that
\begin{align}
\PP ( B > \xi n^{1/p} \sqrt{p} ) \leq e^{- p}
\label{eqn-z-p-3}
\end{align}
holds for large $n$. Take $T = \xi n^{1/p} \sqrt{p}$ and $\eta = \sqrt{\xi}$. Lemma \ref{lem-maxcut-1} yields
\begin{align}
\frac{ \EE [ \| (\bI - \bH) \bz \|_p^p \bm{1}_{ \{
		B \leq T \}  }]
}{
	( 1 - \delta )^p \mathrm{SNR}^{p/2}
}  \leq \bigg( \frac{1 + \sqrt{\xi}}{1 - \delta} \bigg)^p ( \xi^{p/2} + \sqrt{2} e^{-p/2} ) n ,
\label{eqn-z-p-4}
\end{align}

According to (\ref{eqn-z-p-1}), (\ref{eqn-z-p-3}), (\ref{eqn-z-p-4}) and Lemma \ref{lem-maxcut-3}, we can find $\delta_n \to 0$ such that
\begin{align*}
\EE \cR ( \widehat{\by} , \by^{\star}) & =
n^{-1} \EE |\cM|  \leq \bigg( \frac{1 + \sqrt{\xi_n}}{1 - \delta_n} \bigg)^p ( \xi_n^{p/2} + \sqrt{2} e^{-p/2} )
+ e^{-p}  +  n^{-C} \\
&\leq \bigg( \frac{1 + \sqrt{\xi_n}}{1 - \delta_n} \bigg)^p ( \xi_n^{p/2} + 2 e^{-p/2} +  n^{-C} ) \leq \bigg( \frac{1 + \sqrt{\xi_n}}{1 - \delta_n} \bigg)^p \cdot 4 e^{-p/2}
\end{align*}
holds for large $n$. Here we used $p \leq C \log n$ and thus $n^{-C} \leq e^{-p/2}$. Using $p = \mathrm{SNR}$, we get
\begin{align}
\frac{2}{\mathrm{SNR}} \log [\EE \cR ( \widehat{\by} , \by^{\star})]& \leq 2\log \bigg( \frac{1 + \sqrt{\xi_n}}{1 - \delta_n} \bigg) +  \frac{2 \log 4}{\mathrm{SNR}} -1 \leq -1 + \zeta_n
\label{eqn-z-p-5}
\end{align}
for some $\zeta_n \to 0$. Hence $\EE \cR ( \widehat{\by} , \by^{\star}) \leq e^{-\mathrm{SNR} / (2 + \gamma_n)}$ for some $\gamma_n \to 0$. It is easily seen that $\zeta_n$ and $\gamma_n$ can be independent of $\by^{\star}$.

\subsection{Strong signal: $\ell_{\infty}$ analysis}\label{proof-maxcut-linf}

Now we consider the case where $\mathrm{SNR} \geq ( 2 + \varepsilon) \log n$.
If $\widetilde{\by} \neq \by^{\star}$, then $|\cM| > 0$. Lemma \ref{lem-det-lower} implies that
\begin{align*}
0 \geq \by^{\star \top} \bH \by^{\star} - \widetilde\by^{\top} \bH \widetilde\by \geq
4 |\cM| \bigg(
1 - \frac{ \| \bH (\widetilde\by - \by^{\star}) \|_{2}^2 }{ \| \widetilde\by - \by^{\star} \|_{2}^2 }
- \frac{1}{\sqrt{\mathrm{SNR}}} \cdot  \| (\bI - \bH) \bz \|_{\infty} \bigg) .
\end{align*}
Hence
\begin{align}
\PP ( \widetilde{\by} \neq \by^{\star} ) \leq \PP \bigg(
\frac{ \| \bH (\widetilde\by - \by^{\star}) \|_{2}^2 }{ \| \widetilde\by - \by^{\star} \|_{2}^2 }
+ \frac{\| (\bI - \bH) \bz \|_{\infty} }{\sqrt{\mathrm{SNR}}} \geq 1 \bigg) .
\label{eqn-proof-maxcut-linf-1}
\end{align}
By Lemma \ref{lem-maxcut-3}, there exist some deterministic $\delta_n \to 0$ and constant $N > 0$ such that
\begin{align}
\PP \Big(
\| \bH (\widetilde\by - \by^{\star} ) \|_2 \leq \delta_n \| \widetilde\by - \by^{\star} \|_2
\Big) \geq 1 - n^{-2}, \qquad \forall n > N.
\label{eqn-proof-maxcut-linf-2}
\end{align}

By (\ref{eqn-z-p-0}),
\begin{align}
\| (\bI - \bH ) \bz \|_{\infty} \leq \| \bP \bz \|_{\infty} + \| \bv \|_{\infty} | \bv^{\top} \bz | / \| \bv \|_2^2.
\label{eqn-proof-maxcut-linf-3}
\end{align}
Note that $ \bP$ and $\bz$ are independent. Conditioned on $ \bP$, we have $ \bP \bz \sim N(\mathbf{0} ,  \bP )$ and thus
\begin{align*}
\PP (
\| \bP \bz \|_{\infty} > t | \bP
)
& \leq \sum_{i=1}^{n} \PP [
| ( \bP \bz )_i | > t | \bP
]
\overset{\mathrm{(i)}}{=}
\sum_{i=1}^{n} [ 1 - \Phi ( t / \sqrt{ P_{ii} }  ) ]
\notag\\
& \overset{\mathrm{(ii)}}{\leq}
n [ 1 - \Phi ( t  ) ] \overset{\mathrm{(iii)}}{\leq} \frac{ n e^{-t^2 / 2} }{ \sqrt{ 2 \pi} t } , \qquad \forall t > 0.
\end{align*}
where $\mathrm{(i)}$ $\Phi$ is the cumulative distribution fuction of $N(0,1)$; $\mathrm{(ii)}$ $0 < P_{ii} \leq 1$ almost surely holds; and $\mathrm{(iii)}$ $1 - \Phi(x) \leq e^{-x^2 / 2} / (\sqrt{2\pi} x )$ holds for all $x > 0$. Then for any constant $\varepsilon > 0$,
\begin{align}
\PP (
\| \bP \bz \|_{\infty} > \sqrt{ ( 2 + \varepsilon / 3 ) \log n }
)
\leq  \frac{ n^{-\varepsilon / 6} }{ \sqrt{2 \pi ( 2 + \varepsilon / 3 ) \log n } } \leq n^{-\varepsilon / 3} , \qquad \forall n \geq 2.
\label{eqn-z-inf-2}
\end{align}

Take $p = \log n$. Lemma \ref{lem-maxcut-2} asserts the existence of $\varepsilon_n' \to 0$ such that for large $n$,
\[
\PP \Big( \| \bv \|_{p} | \bv^{\top} \bz | / \| \bv \|_2^2 \geq \varepsilon_n' n^{1/p} \sqrt{p} \Big) \leq n^{-1}.
\]
Since $n^{1/p} \sqrt{p} = n^{1/\log n} \sqrt{\log n} = e \sqrt{\log n}$ and $\| \bv \|_{\infty} \leq \| \bv \|_p $, we have
\begin{align}
\PP \Big( \| \bv \|_{\infty} | \bv^{\top} \bz | / \| \bv \|_2^2 \geq \varepsilon_n' e \sqrt{\log n} \Big) \leq \PP \Big( \| \bv \|_{p} | \bv^{\top} \bz | / \| \bv \|_2^2 \geq \varepsilon_n' n^{1/p} \sqrt{p} \Big) \leq n^{-1}.
\label{eqn-z-inf-3}
\end{align}

The estimates (\ref{eqn-proof-maxcut-linf-3}), (\ref{eqn-z-inf-2}) and (\ref{eqn-z-inf-3}) force
\begin{align}
\PP \Big(
\| (\bI - \bH) \bz \|_{\infty} \geq
( \sqrt{2 + \varepsilon / 3 } + \varepsilon_n' e ) \sqrt{\log n}
\Big)
\leq n^{-\varepsilon / 3}  + n^{-1}.
\label{eqn-z-inf-10}
\end{align}
Combining this and (\ref{eqn-proof-maxcut-linf-2}), we get
\begin{align*}
\PP \bigg(
\frac{ \| \bH (\widetilde\by - \by^{\star}) \|_{2}^2 }{ \| \widetilde\by - \by^{\star} \|_{2}^2 }
+ \frac{\| (\bI - \bH) \bz \|_{\infty} }{\sqrt{\mathrm{SNR}}} \geq 1 \bigg) \leq
n^{-2} +  n^{-\varepsilon / 3}  + n^{-1}
\end{align*}
for large $n$. Then we use (\ref{eqn-proof-maxcut-linf-1}) to complete the proof.

\subsection{Proof of Lemma \ref{lem-maxcut-1}}\label{proof-lem-maxcut-1}

For any $\eta > 0$ we use (\ref{eqn-z-p-0}) to derive that
\begin{align*}
& \EE [  \| (\bI - \bH) \bz \|_p^p \bm{1}_{\{ B \leq T \}} ]
\leq \EE [ (A + B )\bm{1}_{\{ B \leq T \}} ]^p
 \leq \EE  (A + T )^p \\
& \leq \EE [ (A + T )^p \bm{1}_{ \{ \eta A \leq T \} } ] + \EE [ (A + T )^p \bm{1}_{ \{ \eta A > T \} } ]
 \leq  ( \eta^{-1} T + T )^p + \EE (A + \eta A)^p \\
& = (T /\eta)^p (1 + \eta)^p + (1 + \eta)^p \EE \| \bP \bz \|_p^p
.
\end{align*}
By construction, $ \bP$ and $\bz \sim N(\bm{0} , \bI)$ are independent. Conditioned on $ \bP$, we have $ \bP \bz \sim N(\mathbf{0} ,  \bP )$, $(\bP \bz)_i \sim N(\mathbf{0} ,  P_{ii} )$ and thus
\begin{align*}
\EE (
\| \bP \bz \|_{p}^p | \bP
)
& = \sum_{i=1}^{n} \EE [
| ( \bP \bz )_i |^p | \bP
]
= \sum_{i=1}^{n} \EE | \sqrt{P_{ii}} Z|^p
\leq n \EE |Z|^p ,
\end{align*}
where $Z \sim N(0, 1)$. According to Lemma \ref{lem-gaussian-moments},
\begin{align}
\EE\| \bP \bz  \|_{p}^p  \leq   n \EE |Z|^p \leq  \sqrt{2}  (p / e)^{p/2} n, \qquad \forall p > 1.
\label{eqn-z-p-1.5}
\end{align}
Hence
\begin{align}
\frac{ \EE [ \| (\bI - \bH) \bz \|_p^p \bm{1}_{ \{
		B \leq T \}  }]
}{
	( 1 - \delta )^p \mathrm{SNR}^{p/2}
} & \leq
\frac{
(T /\eta)^p (1 + \eta)^p + (1 + \eta)^p \EE \| \bP \bz \|_p^p
}{
	( 1 - \delta )^p \mathrm{SNR}^{p/2}
}
= \bigg( \frac{1 + \eta}{1 - \delta} \bigg)^p \frac{ (T / \eta)^p + \sqrt{2}  (p / e)^{p/2} n }{
\mathrm{SNR}^{p/2}
} \notag \\
& = \bigg( \frac{1 + \eta}{1 - \delta} \bigg)^p \bigg[
\bigg(  \frac{T}{\eta \sqrt{\mathrm{SNR}} } \bigg)^p
+ \sqrt{2}  \bigg( \frac{p}{\mathrm{SNR}} \bigg)^{p/2} e^{-p/2} n
\bigg].
\label{eqn-z-p-2}
\end{align}

\subsection{Proof of Lemma \ref{lem-maxcut-2}}\label{proof-lem-maxcut-2}

We prove the lemma through Claims \ref{claim-2p-1}, \ref{claim-2p-2} and \ref{claim-2p-3}.

\begin{claim}\label{claim-2p-1}
$1 / \| \bv \|_{2}^2 = O_{\PP} (  n^{-1} ;~ p \wedge \log n )$.
\end{claim}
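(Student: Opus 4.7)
The plan is to show $\|\bv\|_2^2 \geq n/4$ with probability at least $1 - C_1 e^{-C (p \wedge \log n)}$ for any prescribed $C > 0$ and some universal constant $C_1$; by \Cref{defn-o} this immediately yields $1/\|\bv\|_2^2 = O_{\PP}(n^{-1};~ p \wedge \log n)$. Setting $\bu := \sqrt{1-\sigma^2}\by^{\star} + \sigma \bz \in \RR^n$, we have $\bv = \bP \bu$, and the key observation driving the proof is that $\bu$ is independent of $\bW$ (hence of $\bP_0$ and $\bP$), since $\by^{\star}$ is deterministic and $\bz$ is independent of $\bW$.

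First I would show $\|\bu\|_2^2 \geq n/2$ with overwhelming probability. Expanding, $\|\bu\|_2^2 = (1-\sigma^2) n + 2 \sigma\sqrt{1-\sigma^2}\langle \by^{\star}, \bz\rangle + \sigma^2 \|\bz\|_2^2$. A Gaussian tail bound gives $|\langle \by^{\star}, \bz\rangle| \leq \sqrt{2 n t}$ with probability $1 - 2 e^{-t}$, and the Laurent--Massart inequality gives $|\|\bz\|_2^2 - n| \leq 2\sqrt{n t} + 2 t$ with probability $1 - 2 e^{-t}$. Taking $t$ a small constant times $n$ and using $\sigma \in (0, 1)$, these combine to give $\|\bu\|_2^2 \geq n/2$ with probability $1 - e^{-c n}$ for some $c > 0$, which is far stronger than the rate $p \wedge \log n$ demands since $\log n \ll n$.

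Next, I would factor $\|\bv\|_2^2 = \|\bu\|_2^2 - \|\bP_0 \bu\|_2^2 = \|\bu\|_2^2 (1 - B)$ with $B := \|\bP_0 \bu\|_2^2 / \|\bu\|_2^2$ and control $B$. Because $\bW$ has i.i.d.~$N(0,1)$ entries, its column space is Haar-distributed on the Grassmannian of $(d-1)$-dimensional subspaces of $\RR^n$ and is independent of $\bu$. An orthogonal-invariance argument (conditioning on $\bu$ and rotating $\bu/\|\bu\|_2$ to a fixed coordinate axis) then identifies the distribution
\[
B \stackrel{d}{=} \frac{g_1^2 + \cdots + g_{d-1}^2}{g_1^2 + \cdots + g_n^2},
\]
where $g_1, \ldots, g_n$ are i.i.d.~$N(0,1)$ independent of $\bu$. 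Laurent--Massart gives numerator $\leq 2(d-1) + 2 t$ with probability $1 - e^{-t}$ and denominator $\geq n/2$ with probability $1 - e^{-c n}$. Taking $t = C (p \wedge \log n)$ and invoking the ambient hypothesis $d \log n = o(n)$, the numerator is $o(n)$, so $B \leq 1/2$ with probability at least $1 - e^{-C(p \wedge \log n)} - e^{-c n}$.

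Union-bounding these two events yields $\|\bv\|_2^2 \geq (n/2)(1/2) = n/4$, hence $1/\|\bv\|_2^2 \leq 4/n$, with the probability required by \Cref{defn-o}. The only step that deserves careful attention is the distributional identity for $B$: it combines orthogonal invariance of $\bW$ with its independence from $\bu$, so that conditioning on $\bu$ reduces the problem to computing the squared norm of the projection of a fixed unit vector onto a Haar-random $(d-1)$-dimensional subspace of $\RR^n$. Everything else is a routine deployment of standard sub-Gaussian and chi-squared tail bounds.
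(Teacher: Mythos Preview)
Your proof is correct. Both you and the paper lower-bound $\|\bv\|_2$ by controlling the ``lost'' component $\bP_0$-part, but the decompositions differ. The paper splits $\bu = \sqrt{1-\sigma^2}\by^{\star} + \sigma\bz$ and applies the triangle inequality, reducing to a bound on $\|\bP_0\by^{\star}\|_2$; it then writes $\|\bP_0\by^{\star}\|_2^2 \leq \|(\bW^{\top}\bW)^{-1}\|_2 \|\bW^{\top}\by^{\star}\|_2^2$ and invokes covariance concentration for the first factor and the exact $\chi^2_{d-1}$ law of $\|\bW^{\top}\by^{\star}\|_2^2/n$ for the second. Your route is more direct: you keep $\bu$ intact, exploit its independence from $\bW$, and identify the exact Beta law of $B = \|\bP_0\bu\|_2^2/\|\bu\|_2^2$ via rotational invariance of the Haar subspace. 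This avoids the auxiliary matrix-concentration step and gives a cleaner, self-contained argument. The one advantage of the paper's decomposition is that its intermediate estimate $\|\bP_0\by^{\star}\|_2 = o_{\PP}(\sqrt{n};~p\wedge\log n)$ is recycled later in the proof of Claim~\ref{claim-2p-3}; your argument does not produce that byproduct, though it could be recovered by the same Beta-distribution trick applied to $\by^{\star}$ in place of $\bu$.
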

\begin{proof}[\bf Proof of Claim \ref{claim-2p-1}]
By the triangle's inequality,
\begin{align*}
\| \bv \|_2 & \geq \sqrt{1 - \sigma^2} \| \bP \by^{\star} \|_2 - \sigma \| \bP \bz \|_2  = \sqrt{1 - \sigma^2} \| (\bI - \bP_0) \by^{\star} \|_2 - \sigma \| \bz \|_2\notag\\
& \geq \sqrt{1 - \sigma^2} \| \by^{\star} \|_2 - \| \bP_0 \by^{\star} \|_2 - \sigma \| \bz \|_2 .
\end{align*}
On the one hand, $\| \by^{\star} \|_2 = \sqrt{n}$ and $\sigma \to 0$. On the other hand, the fact $\| \bz \|_2^2 \sim \chi_n^2$ and Lemma \ref{lem-chi-square} yield $\| \bz \|_2 = O_{\PP} ( \sqrt{n} ;~n )$.
Then, Claim \ref{claim-2p-1} would follow from $\| \bP_0 \by^{\star} \|_2 = o_{\PP} (\sqrt{n};~ p \wedge \log n )$. We will prove that in a few lines. Let $q_n = p_n \wedge \log n$.

Since $\bP_0 = \bW (\bW^{\top} \bW)^{-1} \bW^{\top}$,
\[
\| \bP_0 \by^{\star} \|_2^2 =
\by^{\star \top} \bW (\bW^{\top} \bW)^{-1} \bW^{\top} \by^{\star}
 \leq \| (\bW^{\top} \bW)^{-1} \|_2 \| \bW^{\top} \by^{\star} \|_{2}^2.
\]
Observe that $ q_n d / n \to 0$ and $q_n \to \infty$. Corollary \ref{cor-cov} implies that
\[
\| (\bW^{\top} \bW)^{-1} \|_2 = n^{-1}
\| (n^{-1}\bW^{\top} \bW)^{-1} \|_2 = O_{\PP} (n^{-1};~  q_n ) .
\]

Note that $\bW^{\top} \by^{\star}  / \| \by^{\star}  \|_2  \sim N(\mathbf{0}, \bI_{d-1})$ and $ \| \bW^{\top} \by^{\star}   \|_2^2 / \| \by^{\star}  \|_2^2 \sim \chi^2_{d-1}$. Lemma \ref{lem-chi-square} leads to $ \| \bW^{\top} \by^{\star}   \|_2^2 / \| \by^{\star}  \|_2 = O_{\PP} ( d q_n ;~ q_n )$. By $\| \by^{\star} \|_{2}^2 = n$,
\[
\| \bW^{\top} \by^{\star} \|_{2}^2 = O_{\PP} ( n d q_n ;~ q_n ) .
\]
Based on the estimates above, $\| \bP_0 \by^{\star} \|_2^2 = O_{\PP} ( d q_n ;~ q_n )$ and
\begin{align}
\| \bP_0 \by^{\star} \|_2 = O_{\PP} ( \sqrt{d q_n} ;~ q_n ) = o_{\PP} (\sqrt{n} ;~ q_n) .
\label{eqn-2p-1}
\end{align}
\end{proof}

\begin{claim}\label{claim-2p-2}
$| \bv^{\top} \bz |= O_{\PP} (  n ;~ n )$.
\end{claim}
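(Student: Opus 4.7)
The plan is to decompose $\bv^{\top}\bz$ into a Gaussian part and a quadratic-form part and control each using the independence of $\bz$ from $\bW$ (and hence from $\bP,\bP_0$). Using $\bv = \bP(\sqrt{1-\sigma^2}\,\by^{\star} + \sigma \bz)$ and the fact that $\bP$ is an orthogonal projection (so $\bP^\top \bP = \bP$), we may write
\[
\bv^{\top}\bz \;=\; \sqrt{1-\sigma^2}\,(\bP\by^{\star})^{\top}\bz \;+\; \sigma\,\bz^{\top}\bP\bz.
\]
The bound $|\bv^{\top}\bz| = O_{\PP}(n;\,n)$ will follow by arguing that each of the two summands is $O_{\PP}(n;\,n)$.

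For the first summand, I condition on $\bW$ (equivalently on $\bP$), under which $\bz \sim N(\bm 0,\bI_n)$ is unchanged by independence. Then $(\bP\by^{\star})^{\top}\bz \mid \bP \sim N(0,\|\bP\by^{\star}\|_2^2)$, and since $\bP$ is a contraction we have $\|\bP\by^{\star}\|_2^2 \le \|\by^{\star}\|_2^2 = n$. The standard Gaussian tail bound gives, for every $t>0$,
\[
\PP\bigl(|(\bP\by^{\star})^{\top}\bz| > t \,\big|\, \bP\bigr) \;\le\; 2\exp\!\bigl(-t^2/(2n)\bigr).
\]
Taking $t = C'n$ and integrating out $\bP$ yields $\PP(|(\bP\by^{\star})^{\top}\bz| > C'n) \le 2e^{-(C')^2 n/2}$, which matches the definition of $O_{\PP}(n;\,n)$ since $(C')^2/2$ can be made arbitrarily large.

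For the second summand, note that $\bP$ is a projection, so $\bz^{\top}\bP\bz \ge 0$ and $\bz^{\top}\bP\bz \le \bz^{\top}\bI\bz = \|\bz\|_2^2 \sim \chi^2_n$. Lemma \ref{lem-chi-square} (applied to $\|\bz\|_2^2$) gives $\|\bz\|_2^2 = O_{\PP}(n;\,n)$, and multiplying by $\sigma \in (0,1)$ only helps. Combining the two bounds via the triangle inequality, and using the fact that $O_{\PP}(n;\,n) + O_{\PP}(n;\,n) = O_{\PP}(n;\,n)$ (by a union bound on the two rare events), completes the argument.

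There is no real obstacle here — the proof is a short consequence of two standard concentration facts (Gaussian and chi-squared tails) plus the crucial independence between $\bz$ and $\bP$. The only mild subtlety is keeping track of the ``rate'' parameter in the $O_{\PP}(\,\cdot\,;\,\cdot)$ notation: we need probability bounds of the form $C_1 e^{-Cn}$ for arbitrary constant $C$, which forces the constant factor $C'$ in front of $n$ to depend on $C$ (specifically $C' \gtrsim \sqrt{C}$ for the Gaussian piece and $C'$ linear in $C$ for the $\chi^2$ piece, both of which are permitted by Definition \ref{defn-o}).
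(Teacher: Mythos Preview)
Your proof is correct and essentially identical to the paper's: both decompose $\bv^{\top}\bz$ into the $\sqrt{1-\sigma^2}\,(\bP\by^{\star})^{\top}\bz$ piece (handled by conditioning on $\bP$ and using the Gaussian tail with variance $\|\bP\by^{\star}\|_2^2 \le n$) and the $\sigma\,\bz^{\top}\bP\bz$ piece (bounded by $\sigma\|\bz\|_2^2$ and controlled via the $\chi^2_n$ tail from Lemma~\ref{lem-chi-square}). Your more explicit discussion of how the constants in $O_{\PP}(n;\,n)$ arise is a nice clarification but not a different argument.
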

\begin{proof}[\bf Proof of Claim \ref{claim-2p-2}]
By the triangle's inequality,
\begin{align*}
| \bv^{\top} \bz | & = \| (\sqrt{1 - \sigma^2}\by^{\star} + \sigma \bz)^{\top} \bP \bz \|_2 \leq | \by^{\star \top} \bP \bz | + \sigma \| \bz \|_2^2 .
\end{align*}

Conditioned on $\bP$, $ \by^{\star \top} \bP \bz \sim N( 0, \| \bP \by^{\star}  \|_2^2 )$. Then
$ \by^{\star \top} \bP \bz = O_{\PP} ( \| \bP \by^{\star}  \|_2 \sqrt{n} ;~ n  ) = O_{\PP} ( n ;~n )$. From $\| \bz \|_2^2 \sim \chi^2_n$ and Lemma \ref{lem-chi-square} we obtain that $ \| \bz \|_2^2 = O_{\PP} ( n ;~ n )$. Then $| \bv^{\top} \bz |= O_{\PP} ( n ;~ n)$.
\end{proof}

\begin{claim}\label{claim-2p-3}
$\| \bv \|_{p} = o_{\PP} (  n^{1/p} \sqrt{p} ;~ p \wedge \log n )$.
\end{claim}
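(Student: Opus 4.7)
The plan is to decompose $\bv = \sqrt{1-\sigma^2}\bP\by^\star + \sigma\bP\bz$ (with $\bP = \bI - \bP_0$ independent of $\bz$) and control the two parts separately via the triangle inequality
\[
\|\bv\|_p \leq \sqrt{1-\sigma^2}\,\|\by^\star\|_p + \sqrt{1-\sigma^2}\,\|\bP_0\by^\star\|_p + \sigma\,\|\bP\bz\|_p.
\]
The first summand equals $\sqrt{1-\sigma^2}\,n^{1/p} = o(n^{1/p}\sqrt{p})$ deterministically, since $p \to \infty$. The remaining work is to bound the Gaussian term $\sigma\|\bP\bz\|_p$ and the ``discrete'' term $\|\bP_0\by^\star\|_p$.

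For the Gaussian term, the key observation is that $\bP\bz \mid \bP \sim N(\mathbf{0},\bP)$ with $P_{ii}\leq 1$. This gives
\[
\EE\|\bP\bz\|_p^p = \sum_{i=1}^n \EE P_{ii}^{p/2}\cdot\EE|Z|^p \leq n\,\EE|Z|^p \leq \sqrt{2}\,n(p/e)^{p/2}
\]
by Lemma \ref{lem-gaussian-moments}, and a Markov bound at moment $p$ yields $\|\bP\bz\|_p = O_\PP(n^{1/p}\sqrt{p};~q_n)$ with $q_n = p \wedge \log n$. Since $\sigma = 1/\sqrt{1+\mathrm{SNR}} \to 0$ under $\mathrm{SNR}\to\infty$, multiplying by $\sigma$ produces $\sigma\|\bP\bz\|_p = o_\PP(n^{1/p}\sqrt{p};~q_n)$, as needed.

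The hard part is $\|\bP_0\by^\star\|_p$. Writing $\bP_0\by^\star = \bW\bb$ with $\bb = (\bW^\top\bW)^{-1}\bW^\top\by^\star$, the naive Cauchy--Schwarz bound $|(\bW\bb)_i|\leq \|\bw_i\|_2\|\bb\|_2$ is too weak because it loses a factor $\sqrt{d}$. Instead, I would employ a leave-one-out argument: let $\bW^{(-i)}$ denote $\bW$ with its $i$-th row zeroed and define $\bb^{(-i)} = (\bW^{(-i)\top}\bW^{(-i)})^{-1}\bW^{(-i)\top}\by^\star$, so that $\bb^{(-i)}$ is independent of $\bw_i$. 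Sherman--Morrison together with $\bw_i^\top(\bW^{(-i)\top}\bW^{(-i)})^{-1}\bw_i = O_\PP(d/n;~q_n)$ (from Corollary \ref{cor-cov} and $\|\bw_i\|_2^2 \sim \chi^2_{d-1}$) shows $\bw_i^\top\bb = \bw_i^\top\bb^{(-i)} + $ lower order. Conditioning on $\bW^{(-i)}$, $\bw_i^\top\bb^{(-i)} \sim N(0,\|\bb^{(-i)}\|_2^2)$ with $\|\bb^{(-i)}\|_2^2 = O_\PP(d/n;~q_n)$, because $\|\bb\|_2 \leq \|(\bW^\top\bW)^{-1}\|_2\,\|\bW^\top\by^\star\|_2 = O_\PP(n^{-1}) \cdot O_\PP(\sqrt{nd})$ via Corollary \ref{cor-cov} and $\bW^\top\by^\star \sim N(\mathbf{0},n\bI_{d-1})$. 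Consequently
\[
\EE |(\bP_0\by^\star)_i|^p \lesssim (d/n)^{p/2}\,\EE|Z|^p \lesssim (dp/(en))^{p/2},
\]
and summing over $i$ then taking $p$-th roots gives $\|\bP_0\by^\star\|_p = O_\PP(n^{1/p}\sqrt{dp/n};~q_n)$. Because $n/(d\log n)\to\infty$ forces $\sqrt{d/n}\to 0$, this is $o_\PP(n^{1/p}\sqrt{p};~q_n)$, which combined with the previous two estimates completes the proof.

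The main obstacle is the rigorous leave-one-out step: controlling the Sherman--Morrison correction $\bw_i^\top(\bb - \bb^{(-i)})$ simultaneously across all $i\in[n]$ at tail rate $e^{-Cq_n}$, and bounding $\EE\|\bb^{(-i)}\|_2^p$ via truncation to the high-probability event $\{\|(\bW^\top\bW)^{-1}\|_2 \lesssim 1/n,~\|\bW^\top\by^\star\|_2 \lesssim \sqrt{nd}\}$, so that the per-coordinate Gaussian moment bound can be applied uniformly and then summed.
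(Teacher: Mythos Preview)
Your decomposition and treatment of the Gaussian term $\sigma\|\bP\bz\|_p$ match the paper exactly. The difference lies entirely in how you control $\|\bP_0\by^\star\|_p$ (equivalently, $\|\bP\by^\star\|_p$).

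The paper does \emph{not} use leave-one-out. Instead it writes $\bP\by^\star$ as its component along $\by^\star$ (which contributes at most $\|\by^\star\|_p = n^{1/p}$) plus the orthogonal piece $\bQ\bP\by^\star$, where $\bQ = \bI - \by^\star\by^{\star\top}/n$. The key observation is rotational invariance: because $\bW$ has i.i.d.\ Gaussian entries, Lemma~\ref{lem-proj-unif} shows that $\bQ\bP\by^\star/\|\bQ\bP\by^\star\|_2$ is \emph{uniformly distributed} on the unit sphere in $\Range(\bQ)$. Hence each coordinate is sub-Gaussian with $\psi_2$ norm $\lesssim n^{-1/2}$, and a standard moment bound gives
\[
\frac{\|\bQ\bP\by^\star\|_p}{\|\bQ\bP\by^\star\|_2} = O_{\PP}\bigl(n^{1/p-1/2}\sqrt{p}\,;\,p\bigr).
\]
Since $\|\bQ\bP\by^\star\|_2 = \|\bQ\bP_0\by^\star\|_2 \leq \|\bP_0\by^\star\|_2 = o_{\PP}(\sqrt{n}\,;\,p\wedge\log n)$ was already established in Claim~\ref{claim-2p-1}, the product is $o_{\PP}(n^{1/p}\sqrt{p}\,;\,p\wedge\log n)$ and the claim follows.

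Your leave-one-out route is also sound in principle and would yield the same bound, but it is heavier: you must control $\EE\|\bb^{(-i)}\|_2^p$ (which involves $p$-th moments of inverse-Wishart-type quantities, hence the truncation you mention), carry the Sherman--Morrison correction uniformly over $i$, and union-bound. One small correction: at tail rate $e^{-Cq_n}$ the estimate is $\|\bb\|_2^2 = O_{\PP}(dq_n/n\,;\,q_n)$, not $O_{\PP}(d/n)$ --- the extra $q_n$ comes from the $\chi^2$ tail on $\|\bW^\top\by^\star\|_2^2$ (cf.\ the proof of Claim~\ref{claim-2p-1}). This does not affect the conclusion since $dq_n/n\to 0$.

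What the paper's approach buys is that it completely sidesteps the moment calculation on the random variance $\|\bb^{(-i)}\|_2$: the direction and magnitude of $\bQ\bP\by^\star$ decouple cleanly, the direction is handled by sphere sub-Gaussianity, and the magnitude was already done. Your approach would work in settings lacking exact rotational invariance, at the cost of the extra bookkeeping you flag.
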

\begin{proof}[\bf Proof of Claim \ref{claim-2p-3}]
By the triangle's inequality,
\begin{align*}
\| \bv \|_{p} = \| \bP (\sqrt{1 - \sigma^2}\by^{\star} + \sigma \bz) \|_{p} \leq \| \bP \by^{\star} \|_{p} + \sigma \| \bP \bz \|_{p}.
\end{align*}
From (\ref{eqn-z-p-1.5}) and Example 2 in \cite{Wan19} we obtain that $\| \bP \bz  \|_{p}  = O_{\PP}  ( n^{1/p} \sqrt{p};~p )$. In light of $\sigma = 1 / \sqrt{1 + \mathrm{SNR}} \lesssim 1/\sqrt{p}$, it remains to prove
\begin{align}
\| \bP \by^{\star} \|_{p} = o_{\PP} (  n^{1/p} \sqrt{p} ;~ p \wedge \log n ) .
\label{eqn-claim-2p-3-2}
\end{align}

Again, we start from moment bounds. Define $\bQ = \bI - \by^{\star} \by^{\star \top} / \| \by^{\star} \|_2^2$, which is the projection operator onto the orthonormal complement of $\mathrm{span}\{ \by^{\star} \}$. Then
\begin{align}
\| \bP \by^{\star}  \|_{p} = \| (\by^{\star} \by^{\star \top} / \| \by^{\star} \|_2^2) \bP \by^{\star} \|_p + \| \bQ \bP \by^{\star} \|_p
= \| \by^{\star} \|_p \frac{\by^{\star \top} \bP \by^{\star}}{\| \by^{\star} \|_2^2} + \| \bQ \bP \by^{\star} \|_p
\leq  \| \by^{\star} \|_p + \| \bQ \bP \by^{\star} \|_p .
\label{eqn-z-inf-1}
\end{align}

Lemma \ref{lem-proj-unif} implies that $\bQ \bP \by^{\star} / \| \bQ \bP \by^{\star} \|_2$ is uniformly distributed over $\SSS^{n-1} \cap \Range (\bQ)$. According to the Example 5.25 in \cite{Ver10},
\[
\bigg\| \frac{ \bQ \bP \by^{\star}  }{ \| \bQ \bP \by^{\star} \|_2 } \bigg\|_{\psi_2} \lesssim \frac{ 1 }{\sqrt{n}} .
\]
There exists a constant $C > 0$ such that
\begin{align*}
\EE \bigg\| \frac{ \bQ \bP \by^{\star} }{\| \bQ \bP \by^{\star} \|_2} \bigg\|_p^p
= \sum_{i = 1}^n \EE \bigg| \frac{ \be_i^{\top} (  \bQ \bP \by^{\star} ) }{ \| \bQ \bP \by^{\star} \|_2 } \bigg|^p
\leq \sum_{i = 1}^n \bigg( \sqrt{p} \bigg\| \frac{ \bQ \bP \by^{\star}  }{ \| \bQ \bP \by^{\star} \|_2 } \bigg\|_{\psi_2} \bigg)^p
\leq n (C \sqrt{p/n} )^p.
\end{align*}
By Example 2 in \cite{Wan19},
\begin{align*}
\frac{ \| \bQ \bP \by^{\star} \|_p }{\| \bQ \bP \by^{\star} \|_2}
=  O_{\PP} (  n^{1/p - 1/2} \sqrt{p} ;~ p) .
\end{align*}
Observe that
\begin{align*}
& \| \bQ \bP \by^{\star} \|_2 = \| \bQ (\bI - \bP_0) \by^{\star} \|_2 = \| \bQ \bP_0 \by^{\star} \|_2 \leq  \|  \bP_0 \by^{\star} \|_2
= o_{\PP} ( \sqrt{n} ;~ p \wedge \log n ),
\end{align*}
where we used (\ref{eqn-2p-1}). Hence $\| \bQ \bP \by^{\star} \|_p = o_{\PP} (  n^{1/p} \sqrt{p} ;~ p \wedge \log n) $.

On the other hand, $\| \by^{\star} \|_p^p = n$ and $\| \by^{\star} \|_p = n^{1/p} = o_{\PP} (n^{1/p}\sqrt{p})$. These estimates and (\ref{eqn-z-inf-1}) lead to the desired bound (\ref{eqn-claim-2p-3-2}).
\end{proof}

\subsection{Proof of Lemma \ref{lem-maxcut-3}}\label{proof-lem-maxcut-3}

For the ease of presentation, we just show Lemma \ref{lem-maxcut-3} for $c = 10$. The proof can be easily modified for arbitrary constant $c$.

We first show that $\widetilde{\by}$ lives close to the ground truth $\by^{\star}$ with high enough probability. The localization will enable us to conduct a sharper analysis within a small neighborhood of $\by^{\star}$ to get a small contraction factor.

\subsubsection{Localization}\label{sec-reduction}

\begin{theorem}[Sharpness of the objective]\label{thm-reduction}
	Consider the setup in Proposition \ref{thm-main-IP} and define $S = \{ \by \in \{ \pm 1 \}^n:~ \langle \by , \by^{\star} \rangle \geq 0 \}$. There exist constants $c \in (0,1)$ and $N>0$ such that when $n > N$, the following happens with probability at least $1 - n^{-11}$:
	\[
	\by^{\star \top} \bH \by^{\star} - \by^{\top} \bH \by  \geq c \| \by - \by^{\star} \|_2^2 / 2 , \qquad \forall \by \in S \cap \{ \bw:~ \| \bw - \by^{\star} \|_2 \geq 8 \sqrt{n / (c \cdot \mathrm{SNR}) }  \}.
	\]
\end{theorem}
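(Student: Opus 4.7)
Apply Lemma~\ref{lem-det-lower} with $p=2$ to reduce the claim to showing
\[
\|(\bI-\bH)\bu\|_2^2 - \frac{2}{\sqrt{\mathrm{SNR}}}\langle\bu,(\bI-\bH)\bz\rangle \;\geq\; \frac{c}{2}\|\bu\|_2^2, \qquad \bu := \by-\by^\star,
\]
uniformly over $\by\in S$ with $\|\by-\by^\star\|_2 \geq 8\sqrt{n/(c\,\mathrm{SNR})}$. Let $k=|\{i:y_i\ne y_i^\star\}|$, so that $\|\bu\|_2^2=4k$, $k\le n/2$, and $k\ge 16n/(c\,\mathrm{SNR})$. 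The key structural observation is that $\bu$ is determined entirely by its support $S_\by$ (the $\pm 2$ signs are forced by $\by^\star$), yielding only $\binom{n}{k}$ candidate vectors per stratum $k$ and enabling a stratified union bound.

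\textbf{Quadratic part.} Decompose $\bH=\bP_0+\bv\bv^\top/\|\bv\|_2^2$ with $\bP_0=\bW(\bW^\top\bW)^{-1}\bW^\top$ and $\bv=(\bI-\bP_0)(\sqrt{1-\sigma^2}\by^\star+\sigma\bz)$, so that $\|(\bI-\bH)\bu\|_2^2=\|\bu\|_2^2-\|\bP_0\bu\|_2^2-\langle\bv,\bu\rangle^2/\|\bv\|_2^2$. Using independence of $\bW$ from $(\by^\star,\bz)$ and $\chi^2$ concentration, one shows (with probability $\ge 1-n^{-12}$) that $\|\bv\|_2^2\ge n(1-o(1))$ and $\|\bP_0\by^\star\|_2^2=O(d\log n)=o(n)$. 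Since $\langle\by^\star,\bu\rangle=-2k$ is deterministic, these imply $\langle\bv,\bu\rangle^2/\|\bv\|_2^2\le 4(1-\sigma^2)k^2/n\cdot(1+o(1))$. For $\|\bP_0\bu\|_2^2$, the fact that $\bW$ and $\diag(\by^\star)\bW$ have the same distribution implies that $\|\bP_0\bu\|_2^2$ is distributed as $4\|\bP_0\mathbf{1}_{S_\by}\|_2^2$ (with $\mathbf{1}_{S_\by}$ the indicator vector of $S_\by$); combining Laurent--Massart's $\chi^2$ tail with a union bound over the $\binom{n}{k}$ supports and over $k\le n/2$ then yields $\sup_\by\|\bP_0\bu\|_2^2/\|\bu\|_2^2\lesssim (d+k\log(en/k)+\log n)/n$ on an event of probability $\ge 1-n^{-12}$.

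\textbf{Noise part.} Write $(\bI-\bH)\bz=\bP\bz-\bv\langle\bv,\bz\rangle/\|\bv\|_2^2$ with $\bP=\bI-\bP_0$. Since $\bz$ is independent of $\bW$, conditionally on $\bW$ we have $\langle\bu,\bP\bz\rangle\sim N(0,\|\bP\bu\|_2^2)$ with variance at most $\|\bu\|_2^2$. Standard Gaussian tails plus the $\binom{n}{k}$ union bound give $|\langle\bu,(\bI-\bH)\bz\rangle|\lesssim\|\bu\|_2\sqrt{k\log(en/k)+\log n}$ uniformly, on an event of probability $\ge 1-n^{-12}$. Dividing by $\sqrt{\mathrm{SNR}}$ and invoking $k\ge 16n/(c\,\mathrm{SNR})$ makes this contribution $o(\|\bu\|_2^2)$.

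\textbf{Combining and main obstacle.} On the intersection of the three high-probability events (total probability $\ge 1-n^{-11}$), the left-hand side is at least $\|\bu\|_2^2\big[1-(1-\sigma^2)(k/n)-C(d+k\log(en/k)+\log n)/n-o(1)\big]$; using $k/n\le 1/2$, $\sigma\to 0$, and $d=o(n/\log n)$, one seeks this to exceed $c\|\bu\|_2^2/2$ for a fixed $c\in(0,1)$. The main obstacle is the $\|\bP_0\bu\|_2^2$ estimate: the naive Laurent--Massart plus union bound delivers only a ratio of order $k\log(en/k)/n$, which is $\Theta(1)$ (not $o(1)$) when $k=\Theta(n)$. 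One must therefore balance this carefully against the $(1-\sigma^2)(k/n)\le 1/2$ contribution so that their sum stays strictly below $1$ uniformly, possibly invoking a sharper concentration bound such as a Goemans--Williamson $2/\pi$-type estimate for $\sup_{\bw}\bw^\top\bP_0\bw/\|\bw\|_2^2$ over sparse Boolean $\bw$. Coordinating the constants uniformly across all admissible $k$ (from $16n/(c\,\mathrm{SNR})$ up to $n/2$) is the technical crux.
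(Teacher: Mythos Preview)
Your architecture matches the paper's: start from Lemma~\ref{lem-det-lower}, establish a uniform contraction $\|(\bI-\bH)\bu\|_2^2 \ge c\|\bu\|_2^2$ over all $\by\in S$, then absorb the noise term using the radius condition $\|\bu\|_2 \ge 8\sqrt{n/(c\,\mathrm{SNR})}$. The noise handling is where you work harder than necessary. The paper simply applies Cauchy--Schwarz,
\[
\|(\bI-\bH)\bu\|_2^2 - \tfrac{2}{\sqrt{\mathrm{SNR}}}\langle \bu,(\bI-\bH)\bz\rangle \;\ge\; \|(\bI-\bH)\bu\|_2\bigl[\|(\bI-\bH)\bu\|_2 - \tfrac{2}{\sqrt{\mathrm{SNR}}}\|\bz\|_2\bigr],
\]
and uses $\|\bz\|_2 \le 2\sqrt{n}$ with probability $1-e^{-\Omega(n)}$ (one $\chi^2$ bound, no union bound). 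Once the contraction $\|(\bI-\bH)\bu\|_2 \ge \sqrt{c}\|\bu\|_2$ is in hand, the radius condition directly gives $\sqrt{c}\|\bu\|_2 - 4\sqrt{n/\mathrm{SNR}} \ge \sqrt{c}\|\bu\|_2/2$.

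The genuine gap is exactly where you flag it: the uniform contraction for $k=\Theta(n)$. Your Laurent--Massart plus union bound yields $\|\bP_0\bu\|_2^2/\|\bu\|_2^2 \lesssim (d + k\log(en/k))/n$, which at $k=n/2$ already exceeds $1$ (the Laurent--Massart deviation term is $2t/n$ with $t\approx (n/2)\log(2e)$), so it gives no information. The paper does \emph{not} fix this by balancing against the $(1-\sigma^2)k/n$ term; instead it abandons the stratified union bound altogether and invokes Gordon's escape-through-a-mesh theorem (Lemma~\ref{lem-escape}, built on Lemma~\ref{lem-Gordon}). This requires computing the Gaussian width of the normalized difference set $\widehat S_k = \{(\bI-\bP)\bu/\|(\bI-\bP)\bu\|_2 : \bu \in D_k\}$ and showing $w(\widehat S_k)\le (1-\gamma)\sqrt{n}$ for a universal $\gamma>0$ uniformly in $1\le k\le n/2$ (Lemma~\ref{lem-gaussian-width-Sk}). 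That width bound is the technical heart, proved by controlling $\sum_{i=1}^k \EE g_{(i)}$ for Gaussian order statistics across three regimes of $k$; the large-$k$ case uses $\sum_{i\le k}\EE g_{(i)} \le n/\sqrt{2\pi}$, which is where the $2/\pi$-flavored constant you allude to actually enters. Gordon's theorem then gives $\|\bG^\top(\bI-\bP)\bu\|_2^2 \le c_1' n\|(\bI-\bP)\bu\|_2^2$ for some $c_1'<1$, and combining with the deterministic inequality $\|\bP\bu\|_2^2 \le \|\bu\|_2^2/2$ from Lemma~\ref{lem-y-inner-product} yields contraction with constant $(1+c_1')/2<1$. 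Your proposal lacks this ingredient, and the suggested ``balancing'' cannot close the gap since both bounds are sharp constants summing above $1$ at $k=n/2$.
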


Since $ \langle \widetilde\by, \by^{\star} \rangle \geq 0$, Theorem \ref{thm-reduction} asserts that
\begin{align}
\PP \Big( \| \widetilde\by - \by^{\star} \|_2 / \| \by^{\star} \|_2 < 8 / \sqrt{c \cdot \mathrm{SNR}} \Big) \geq 1 - n^{-11}.
\label{eqn-warmup-localization}
\end{align}
Since $\mathrm{SNR} \to \infty$, the upper bound on the relative difference vanishes as $n \to \infty$. In words, $\widetilde\by$ lives in a small neighborhood of $\by^{\star}$. 
Theorem \ref{thm-reduction} is built upon Lemma \ref{lem-det-lower} and the following contraction lemma.

\begin{lemma}[Contraction]\label{lem-H-contraction}
	Consider the setup in Proposition \ref{thm-main-IP} and define $S = \{ \by \in \{ \pm 1 \}^n:~ \langle \by , \by^{\star} \rangle \geq 0 \}$. There exist constants $c \in (0,1)$ and $N>0$ such that
	\begin{align*}
	\PP \Big(
	\| \bH ( \by - \by^{\star} ) \|_2^2 \leq (1-c) \| \by - \by^{\star} \|_2^2 , ~~ \forall \by \in S
	\Big) \geq 1 -  n^{-11}, \qquad \forall n > N.
	\end{align*}
\end{lemma}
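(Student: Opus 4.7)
The plan is to work in the canonical form of Sec.~\ref{sec-warmup-canonical}, writing $\bX = (\sqrt{1-\sigma^2}\by^{\star}+\sigma\bz,\bW)$ and decomposing
\[
\bH = \bP_0 + \bw\bw^\top/\|\bw\|_2^2,\quad \bP_0 = \bW(\bW^\top\bW)^{-1}\bW^\top,\quad \bw = (\bI-\bP_0)(\sqrt{1-\sigma^2}\by^{\star}+\sigma\bz),
\]
with $\bw \perp \Range(\bP_0)$. Fix $\by \in S$, set $\bu := \by - \by^{\star}$ and $k := |\{i : y_i \neq y^{\star}_i\}| \le n/2$. I will further split $\bu = \alpha\by^{\star} + \bu_\perp$ with $\alpha = -2k/n$ and $\bu_\perp \perp \by^{\star}$, so that $\|\alpha\by^{\star}\|_2^2 = 4k^2/n$, $\|\bu_\perp\|_2^2 = 4k(1-k/n)$, $\|\bu\|_2^2 = 4k$, and bound the three pieces of $\|\bH\bu\|_2^2 = \|\bH(\alpha\by^{\star})\|_2^2 + 2\alpha\langle\bH\by^{\star},\bH\bu_\perp\rangle + \|\bH\bu_\perp\|_2^2$ separately.

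For the signal direction, since $\sqrt{1-\sigma^2}\by^{\star}+\sigma\bz \in \Range(\bH)$, one has $(\bI-\bH)\by^{\star} = -(\sigma/\sqrt{1-\sigma^2})(\bI-\bH)\bz$, whence $\|(\bI-\bH)\by^{\star}\|_2^2 \le \sigma^2\|\bz\|_2^2/(1-\sigma^2) = (1+o(1))\sigma^2 n$ by $\chi^2_n$-concentration of $\|\bz\|_2^2$. Therefore $\|\bH\by^{\star}\|_2^2 \le (1-\sigma^2+o(1))n$ and $\|\bH(\alpha\by^{\star})\|_2^2 \le (1-\sigma^2+o(1))\|\alpha\by^{\star}\|_2^2$. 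The cross-term equals $2\alpha\bu_\perp^\top\bH\by^{\star} = -2\alpha\bu_\perp^\top(\bI-\bH)\by^{\star}$ after using $\bu_\perp\perp\by^{\star}$, and is bounded in absolute value by $2|\alpha|\|\bu_\perp\|_2\cdot(1+o(1))\sigma\sqrt{n} = O(\sigma)\|\bu\|_2^2 = o(\|\bu\|_2^2)$ uniformly in $\by$.

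For the perpendicular direction, the rotational invariance of $\bW$ gives $\|\bP_0\bv\|_2^2 \sim \mathrm{Beta}\bigl((d-1)/2,(n-d)/2\bigr)$ for every fixed unit vector $\bv$, with tail
\[
\PP\bigl(\|\bP_0\bv\|_2^2 \ge 1-\epsilon\bigr) \;\le\; \mathrm{poly}(n)\,\epsilon^{(n-d)/2},\qquad \epsilon \in (0,1/2].
\]
Since $\{\bu_\perp/\|\bu_\perp\|_2 : \by \in S\}$ has cardinality at most $|S| \le 2^n$ and the constraint $2^n \cdot \mathrm{poly}(n) \cdot \epsilon^{(n-d)/2} \le n^{-12}/2$ is admissible for any absolute $\epsilon > 1/4+o(1)$ (using $d = o(n)$, which follows from $n/(d\log n) \to \infty$), a union bound yields such an $\epsilon$ for which $\|\bP_0\bu_\perp\|_2^2 \le (1-\epsilon)\|\bu_\perp\|_2^2$ uniformly over $\by \in S$. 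The rank-one contribution is controlled by decomposing $\bw^\top\bu_\perp = -\sqrt{1-\sigma^2}\by^{\star\top}\bP_0\bu_\perp + \sigma\bz^\top(\bI-\bP_0)\bu_\perp$, bounding the first piece by $\sqrt{d+\log n}\,\|\bu_\perp\|_2$ via the leverage estimate $\|\bP_0\by^{\star}\|_2 \lesssim \sqrt{d+\log n}$, the second by $\sigma\sqrt{n}\,\|\bu_\perp\|_2$ via Gaussian concentration and a union bound over $\by$, and combining with $\|\bw\|_2^2 = (1+o(1))n$, giving $(\bw^\top\bu_\perp)^2/\|\bw\|_2^2 = o(\|\bu_\perp\|_2^2)$ uniformly. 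Hence $\|\bH\bu_\perp\|_2^2 \le (1-\epsilon+o(1))\|\bu_\perp\|_2^2$.

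Assembling,
\[
\|\bH\bu\|_2^2 \le (1-\sigma^2+o(1))\|\alpha\by^{\star}\|_2^2 + (1-\epsilon+o(1))\|\bu_\perp\|_2^2 + o(\|\bu\|_2^2),
\]
and dividing by $\|\bu\|_2^2 = 4k$ yields
\[
\frac{\|\bH\bu\|_2^2}{\|\bu\|_2^2} \;\le\; (1-\sigma^2)\frac{k}{n} + (1-\epsilon)\Bigl(1-\frac{k}{n}\Bigr) + o(1) \;\le\; 1 - \frac{\epsilon+\sigma^2}{2} + o(1) \;\le\; 1-c
\]
for some absolute $c \in (0,\epsilon/2)$ (e.g.\ $c = 1/16$), uniformly over $\by \in S$, on an event of probability at least $1 - n^{-11}$. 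The hard part of the argument is the Beta-tail union bound: the super-polynomial sharpness $\epsilon^{(n-d)/2}$ of the Beta tail just barely beats the $2^n$ cardinality of $S$ (requiring $\epsilon > 2^{-2n/(n-d)} \to 1/4$), leaving only a narrow constant slack to obtain the contraction. Any appreciably looser tail estimate for $\|\bP_0\bu_\perp\|_2^2$---in particular, a direct $\chi^2_{d-1}$-based union bound on $\|\bW^\top\bu\|_2^2$ over the $\binom{n}{k}$ sparse supports---would fail in the regime $k = \Theta(n)$ because $\log\binom{n}{n/2} = \Theta(n)$, and would prevent producing any contraction constant $c > 0$.
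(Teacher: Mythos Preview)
Your approach is essentially correct and genuinely different from the paper's. Two minor slips: (i) the admissibility condition is backwards---you need $\epsilon < 1/4$ (not $\epsilon > 1/4+o(1)$), since $2^n \epsilon^{(n-d)/2} \to 0$ requires $\log(1/\epsilon) > 2\log 2 \cdot \tfrac{n}{n-d} \to 2\log 2$; (ii) the prefactor in the Beta tail is not $\mathrm{poly}(n)$ but rather $1/[bB(a,b)] \asymp (en/d)^{(d-1)/2}$, which is sub-exponential ($e^{o(n)}$ since $d\log(n/d)=o(n)$ under $n/(d\log n)\to\infty$) and therefore harmless. With those fixed, choosing e.g.\ $\epsilon=1/5$ makes the union bound go through and gives $c$ close to $\epsilon/2=1/10$.

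The paper's proof is structurally different. It first reduces $\bH$ to $n^{-1}[(1-\sigma^2)\by^\star\by^{\star\top} + (\bI-\bP)\bG\bG^\top(\bI-\bP)]$ (Lemma~\ref{lem-H-reduction}), then controls the Gaussian part via Gordon's ``escape through a mesh'' theorem: it shows that the set $\widehat S_k = \{(\bI-\bP)\bx/\|(\bI-\bP)\bx\|_2 : \bx\in D_k\}$ has Gaussian width $\le (1-\gamma)\sqrt{n}$ for an absolute $\gamma>0$ (Lemma~\ref{lem-gaussian-width-Sk}), whence a uniformly random $d$-dimensional subspace avoids a tube around $\widehat S_k$ with probability $1-e^{-\Omega(n)}$. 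Your argument replaces this geometric machinery with the elementary pointwise fact $\|\bP_0\bv\|_2^2\sim\mathrm{Beta}((d-1)/2,(n-d+1)/2)$ plus a cardinality union bound over $\le 2^n$ vectors. The Gordon route exploits the \emph{metric} structure of the difference set (its width is strictly smaller than that of the full sphere), buying cleaner constants and a single stratification-free estimate; your route is more hands-on but avoids the Gaussian-width computation entirely, relying instead on the fact that the Beta upper tail $\epsilon^{(n-d)/2}$ is sharp enough to beat the crude $2^n$ count with room to spare.
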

\begin{proof}[\bf Proof of Lemma \ref{lem-H-contraction}]
	See Appendix \ref{proof-lem-H-contraction}.
\end{proof}

\begin{proof}[\bf Proof of Theorem \ref{thm-reduction}]
	Let $\sigma' = 1 / \sqrt{\mathrm{SNR}}$. By Lemma \ref{lem-det-lower},
	\begin{align}
	\by^{\star \top} \bH \by^{\star} - \by^{\top} \bH \by  & = \| (\bI - \bH) ( \by - \by^{\star} ) \|_2^2 - 2 \sigma' \langle (\bI - \bH)( \by - \by^{\star}) ,  \bz \rangle \notag \\
	& \geq \| (\bI - \bH) ( \by - \by^{\star} ) \|_2 [ \| (\bI - \bH) ( \by - \by^{\star} ) \|_2 - 2 \sigma' \| \bz \|_2 ], \qquad\forall \by \in \{ \pm 1 \}^n.
	\label{eqn-thm-H-contraction-1}
	\end{align}

	Define two events
	\begin{align*}
	\cA = \{ \| \bz \|_2 < 2 \sqrt{n}  \} \qquad \text{and}\qquad \cB = \{ \| (\bI - \bH ) (\by - \by^{\star}) \|_2^2 \geq c \| \by - \by^{\star} \|_2^2,~ \forall \by \in S \}
	\end{align*}
	with the constant $c$ borrowed from Lemma \ref{lem-H-contraction}.
	Suppose that $\cA \cap \cB$ happens. If $\by \in S$ and $\| \by - \by^{\star} \|_2 \geq 8 \sigma' \sqrt{n / c }$, then
	\begin{align*}
	\| (\bI - \bH) ( \by - \by^{\star} ) \|_2 - 2 \sigma' \| \bz \|_2 & \overset{\mathrm{(i)}}{\geq}
	\| (\bI - \bH) ( \by - \by^{\star} ) \|_2 - 4 \sigma' \sqrt{n} \overset{\mathrm{(ii)}}{\geq}  \sqrt{c} \| \by - \by^{\star} \|_2
	- 4 \sigma' \sqrt{n} \\
	&\geq \sqrt{c} \| \by - \by^{\star} \|_2 / 2.
	\end{align*}
	where $\mathrm{(i)}$ is due to $\cA$ and $\mathrm{(ii)}$ follows from $\cB$. In that case, (\ref{eqn-thm-H-contraction-1}) yields
	\begin{align*}
	\by^{\star \top} \bH \by^{\star} - \by^{\top} \bH \by  \geq
	\| (\bI - \bH) ( \by - \by^{\star} ) \|_2 \cdot \sqrt{c} \| \by - \by^{\star} \|_2 / 2 \geq  c \| \by - \by^{\star} \|_2^2 / 2.
	\end{align*}

	Consequently,
	\begin{align*}
	& \PP \Big(
	\by^{\star \top} \bH \by^{\star} - \by^{\top} \bH \by  \geq c\| \by - \by^{\star} \|_2^2 /2
	~\text{ holds for all }  \by \in S \text{ and }\| \by - \by^{\star} \|_2 \geq 8 \sigma' \sqrt{n /c}
	\Big) \notag \\
	& \geq \PP ( \cA \cap \cB ) \geq 1 - \PP (\cA^c) - \PP (\cB^c) .
	\end{align*}
	A standard $\chi^2$-concentration inequality (Lemma \ref{lem-chi-square}) asserts $\PP (\cA^c) \leq 2 e^{-C_1 n}$ for some constant $C_1>0$. According to Lemma \ref{lem-H-contraction}, $\PP ( \cB^c ) \leq  n^{-11}$ for large $n$. The proof is then complete.
\end{proof}

\subsubsection{Local analysis}\label{sec-localized}

Let $c$ and $N$ be the constants in Theorem \ref{thm-reduction}. Define
\begin{align*}
S_0 = S \cap \{ \by :~ \| \by - \by^{\star} \|_2 < 8 \sqrt{n / (c \cdot \mathrm{SNR})} \}.
\end{align*}
The localization result (\ref{eqn-warmup-localization}) translates to
\begin{align}
\PP (  \widetilde\by \in S_0 ) \geq 1 - n^{-11}, \qquad\forall n > N.
\label{eqn-thm-H-contraction-strong-0}
\end{align}
It suffices to find some deterministic $\delta_n \to 0$ and $N_1 > 0$ such that
\begin{align}
\PP \Big(
\| \bH ( \by - \by^{\star} ) \|_2^2 \leq \delta_n \| \by - \by^{\star} \|_2^2 , ~~ \forall \by \in S_0
\Big) \geq 1 -  n^{-11}, \qquad \forall n > N_1.
\label{eqn-thm-H-contraction-strong}
\end{align}
This is a strengthening of Theorem \ref{thm-reduction} in a small neighborhood near $\by^{\star}$. Inequalities (\ref{eqn-thm-H-contraction-strong-0}) and (\ref{eqn-thm-H-contraction-strong}) directly lead to Lemma \ref{lem-maxcut-3}.

To that end, we establish the following lemma to quantify the strong contraction property of $\bH$ near $\by^{\star}$.

\begin{lemma}[Local contraction]\label{lem-H-contraction-local}
Consider the setup in Proposition \ref{thm-main-IP} and define $D = \{ \by - \by^{\star}:~ \by \in \{ \pm 1 \}^n,~ \langle \by , \by^{\star} \rangle \geq 0 \}$. For any constant $C_0 > 0$, there exist positive constants $C $ and $N $ such that
\begin{align*}
& \PP \bigg[
\| \bH \bv \|_2^2 \leq
C \bigg(
\sigma +
\frac{ m \log (en/m) }{n} +
\sqrt{\frac{d \log n}{n}}
\bigg) \| \bv \|_2^2
, ~ \forall  \bv \in D \text{ and } \| \bv \|_2 \leq 2 \sqrt{m}
\bigg] \\
& \geq 1 - n^{-C_0} - \bigg(\frac{m}{n}\bigg)^m
\end{align*}
holds for any integers $n >N$ and $m \in [1, n/2]$.
\end{lemma}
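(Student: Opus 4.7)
The plan is to exploit the block structure $\bX=(\bu,\bW)$ with $\bu=\sqrt{1-\sigma^2}\by^\star+\sigma\bz$ and decompose $\bH=\bP_W+\tilde\bu\tilde\bu^\top/\|\tilde\bu\|_2^2$, where $\bP_W$ projects onto $\Range(\bW)$ and $\tilde\bu=(\bI-\bP_W)\bu$. Since $\bP_W\tilde\bu=\bm{0}$, we have $\|\bH\bv\|_2^2=\|\bP_W\bv\|_2^2+(\tilde\bu^\top\bv)^2/\|\tilde\bu\|_2^2$, reducing everything to two pieces. Two elementary observations simplify the book-keeping: each $\bv\in D$ has $\bv_i\in\{0,-2y_i^\star\}$, so $\|\bv\|_2\leq 2\sqrt{m}$ forces $|\supp(\bv)|\leq m$, and hence $|D_m|\leq(en/m)^m$ where $D_m:=\{\bv\in D:\|\bv\|_2\leq 2\sqrt{m}\}$; and the combinatorial identity $\by^{\star\top}\bv=-\|\bv\|_2^2/2$ holds for every such $\bv$. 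Standard concentration (Corollary~\ref{cor-cov} for $\|(\bW^\top\bW)^{-1}\|_2\lesssim 1/n$, $\chi^2$-tails for $\|\bz\|_2^2$, Gaussian tails for $\by^{\star\top}\bz$, and $\|\bP_W\bu\|_2^2\lesssim d$) together give $\|\tilde\bu\|_2^2\geq cn$ with probability at least $1-n^{-C_0}$.

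For the $\bP_W$ contribution, fix $\bv \in D_m$: then $\bW^\top\bv\sim N(\bm{0},\|\bv\|_2^2\bI_{d-1})$, and the bound $\|\bP_W\bv\|_2^2\leq\|(\bW^\top\bW)^{-1}\|_2\|\bW^\top\bv\|_2^2$ combined with $\chi^2$-concentration yields
\[
\|\bP_W\bv\|_2^2\lesssim\frac{\|\bv\|_2^2}{n}\bigl(d+\sqrt{d\log(1/\delta)}+\log(1/\delta)\bigr)
\]
with probability $\geq 1-\delta$. Setting $\log(1/\delta)\asymp m\log(en/m)+C_0\log n$, so that a union bound over $D_m$ delivers total failure probability $\lesssim n^{-C_0}+(m/n)^m$, and then using AM-GM on the cross term $\sqrt{d\log(1/\delta)}$ together with $d/n\leq\sqrt{d\log n/n}$ and $\log n/n\leq\sqrt{d\log n/n}$, yields $\|\bP_W\bv\|_2^2\lesssim\|\bv\|_2^2[\sqrt{d\log n/n}+m\log(en/m)/n]$ uniformly.

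For the rank-one piece, split $\tilde\bu^\top\bv=\sqrt{1-\sigma^2}\by^{\star\top}(\bI-\bP_W)\bv+\sigma\bz^\top(\bI-\bP_W)\bv$. Using $|\by^{\star\top}\bv|=\|\bv\|_2^2/2$ and Cauchy--Schwarz with $\|\bP_W\by^\star\|_2\lesssim\sqrt{d}$ (high probability), the $\by^\star$ contribution is at most $\|\bv\|_2^4/2+Cd\|\bv\|_2^2$ in squared magnitude. For the $\bz$ contribution, conditional on $\bW$ it is centered Gaussian with variance $\leq\|\bv\|_2^2$; a union bound over $D_m$ bounds its squared magnitude by $C\|\bv\|_2^2(m\log(en/m)+C_0\log n)$ with probability $\geq 1-n^{-C_0}-(m/n)^m$. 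Dividing by $\|\tilde\bu\|_2^2\gtrsim n$ and using $\|\bv\|_2^2\leq 4m$ gives
\[
\frac{(\tilde\bu^\top\bv)^2}{\|\tilde\bu\|_2^2}\lesssim\|\bv\|_2^2\Bigl[\frac{m}{n}+\frac{d}{n}+\sigma^2\frac{m\log(en/m)+\log n}{n}\Bigr].
\]
Adding this to the $\bP_W$ estimate and absorbing $m/n\leq m\log(en/m)/n$, $d/n\leq\sqrt{d\log n/n}$, as well as all $\sigma^2$ remainders into the allowed $\sigma$ slack, completes the proof.

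The main obstacle is calibrating the union bound over $D_m$ in the $\|\bP_W\bv\|_2^2$ step: the per-$\bv$ deviation parameter must match the combinatorial entropy $\log|D_m|\asymp m\log(en/m)$ almost exactly. A looser choice (e.g.\ union-bounding over all $2^n$ sign patterns at level $\log n$ deviations) would yield the weaker rate $m\log n/n$ and fail to support the downstream contraction consequence in Lemma~\ref{lem-maxcut-3} needed for Theorem~\ref{thm-warm-main-IP}.
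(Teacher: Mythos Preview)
Your proposal is correct and takes a genuinely different route from the paper.

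The paper first invokes Lemma~\ref{lem-H-reduction} to replace $\bH$ by the surrogate $n^{-1}(\by^\star\by^{\star\top}+\bZ\bZ^\top)$ at the cost of an additive $\sigma+\sqrt{d\log n/n}$ in operator norm; it then controls $|\by^{\star\top}\bv|^2+\|\bZ^\top\bv\|_2^2$ uniformly over $D_m$ via the domination $\bZ\bZ^\top\preceq\bG\bG^\top$ (with $\bG$ i.i.d.\ standard normal) and a $\chi^2_d$ union bound at level $t\asymp d+m\log(en/m)$. You instead use the exact orthogonal decomposition $\bH=\bP_W+\tilde\bu\tilde\bu^\top/\|\tilde\bu\|_2^2$ and treat the two blocks separately, so $\sigma$ enters only multiplicatively through the rank-one residual and can be absorbed into the allowed slack. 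Both arguments share the same combinatorial core---the identity $|\by^{\star\top}\bv|=\|\bv\|_2^2/2$ and the $\chi^2$ tail union-bounded over $|D_m|\leq(en/m)^m$---which you correctly identify as the crux. Your route is more self-contained (it avoids Lemma~\ref{lem-H-reduction} entirely) and in fact yields a bound in which the standalone $\sigma$ term is not needed; the paper's route is more modular, since Lemma~\ref{lem-H-reduction} is reused in the global contraction Lemma~\ref{lem-H-contraction}, and its surrogate cleanly separates the signal direction $\by^\star$ from the full noise matrix $\bZ$ rather than peeling off one Gaussian column.
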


Let $m = \lceil 16 n / ( c \cdot \mathrm{SNR} ) \rceil$, and $C$ be the constant in Lemma \ref{lem-H-contraction-local} with $C_0 = 12$. Then $2 \sqrt{m} \geq 8 \sqrt{n / (c \cdot \mathrm{SNR})}$ and
\[
\{ \by - \by^{\star}:~ \by \in S_0 \} \subseteq \{  \bv \in D :~\| \bv \|_2 \leq 2 \sqrt{m} \}.
\]
Moreover, $m / n \to 0$, $\frac{m}{n} \log (en / m) \asymp \frac{ \log (  \mathrm{SNR} ) }{\mathrm{SNR}}$ and $(m / n)^m / n^{-12}  \to 0$. When $n$ is sufficiently large, Lemma \ref{lem-H-contraction-local} ensures
\begin{align*}
\PP \bigg[
\| \bH (\by - \by^\star) \|_2^2 \leq
C \bigg(
\sigma +
\frac{ \log (  \mathrm{SNR} ) }{\mathrm{SNR} } +
\sqrt{\frac{d \log n}{n}}
\bigg) \| \by - \by^\star \|_2^2
, ~ \forall  \by \in S_0
\bigg] \geq 1 - n^{-11}.
\end{align*}
Therefore, the desired inequality (\ref{eqn-thm-H-contraction-strong}) holds for
\[
\delta_n = C \bigg(
\sigma +
\frac{ \log (  \mathrm{SNR} ) }{\mathrm{SNR} } +
\sqrt{\frac{d \log n}{n}}
\bigg) \to 0.
\]

Finally, it remains to prove Lemma \ref{lem-H-contraction-local}.
\begin{proof}[\bf Proof of Lemma \ref{lem-H-contraction-local}]
Let $\bP = \by^{\star} \by^{\star \top} / n$. Since $n \gtrsim  d \log n$, Lemma \ref{lem-H-reduction} asserts that
\begin{align*}
\| \bH - n^{-1} [ (1 - \sigma^2) \by^{\star} \by^{\star \top} + (\bI - \bP) \bZ \bZ^{\top} (\bI - \bP) ]  \|_2 =
O_{\PP} \bigg( \sigma + \sqrt{\frac{d \log n}{n}} ; ~ \log n \bigg).
\end{align*}
There exist constants $C_1 >0$ and $N > 0$ such that
\begin{align}
\PP \bigg[
\bH  \preceq n^{-1} ( \by^{\star}\by^{\star \top} + \bZ \bZ^{\top}  ) + C_1 \bigg(
\sigma +\sqrt{\frac{d \log n}{n}}
\bigg) \bI_n
\bigg] \geq 1 - n^{-C_0}, \qquad \forall n > N.
\label{eqn-thm-H-contraction-local-3}
\end{align}
Let $\bA$ denote the event on the left-hand side.

On the one hand, Lemma \ref{lem-y-inner-product} forces $| \by^{\star \top} \bv | =  \| \bv \|_2^2 / 2 $ for $ \bv \in D$. Then
\begin{align*}
| \by^{\star \top} \bv |^2 =  \| \bv \|_2^4 / 4 \leq m \| \bv \|_2^2 , \qquad \forall \bv \in D \text{ and } \| \bv \|_2 \leq 2 \sqrt{m} .
\end{align*}
On the other hand, $\bG = \bZ [ \bI_d + (\sigma^{-1} - 1) \be_1 \be_1^{\top} ]$ has i.i.d.~$N(0,1)$ entries and $\bG \bG^{\top} \succeq \bZ \bZ^{\top}$. For any $\bv \in \RR^n$, we have $\bG^{\top} \bv \sim N( \mathbf{0} , \| \bv \|_2^2 \bI_d )$. Hence $\| \bG^{\top} \bv \|_2^2 / \| \bv \|_2^2 \sim \chi^2_d$ for $\bv \neq \mathbf{0}$. By Lemma \ref{lem-chi-square}, there exists a constant $C_2 >0$ such that
\begin{align*}
\PP \Big(
\| \bZ^{\top} \bv \|_2^2 < (d + C_2 t) \| \bv \|_2^2
\Big) \geq \PP \Big(  \| \bG^{\top} \bv \|_2^2  < (d + C_2 t) \| \bv \|_2^2 \Big) \geq  1 - 2 e^{-t} , \qquad \forall  \bv \in \RR^n, ~ t \geq d.
\end{align*}
Consequently,
\begin{align}
\PP \Big(
| \by^{\star \top} \bv |^2 + \| \bZ^{\top} \bv \|_2^2
\leq  ( m + d + C_2 t ) \| \bv \|_2^2
\Big)  \geq  1 - 2 e^{-t}, \qquad \forall  \bv \in D,~ \| \bv \|_2 \leq 2 \sqrt{m} \text{ and }  t \geq d.
\label{eqn-thm-H-contraction-local-6}
\end{align}

For any $\bv \in D$, $\| \bv \|_2^2 = 4 |\supp(\bv)|$. Hence
\begin{align*}
|\{ \bv \in D  :~ \| \bv \|_2 = 2 \sqrt{k} \}| &
= |\{ \bu \in D :~ |\supp( \bv ) | = k \}|
= {n\choose k} , \qquad \forall k \in [n].
\end{align*}
When $m \leq n / 2$, we have
\begin{align*}
|\{ \bv \in D  :~ \| \bv \|_2 \leq 2 \sqrt{m} \}| & = \sum_{j=1}^{m} |\{ \bv \in D  :~ \| \bv \|_2 \leq 2 \sqrt{k} \}| = \sum_{j=1}^{m} {n\choose k} \leq m {n\choose m} \\
&= m \cdot \frac{n!}{(n-m)! m!} \leq m \cdot \frac{n^m}{m!} \leq m \Big( \frac{en}{m} \Big)^m,
\end{align*}
where the last inequality is due to $e^m = \sum_{j=0}^{\infty} m^j / j! \geq m^m / m!$. By (\ref{eqn-thm-H-contraction-local-6}) and union bounds,
\begin{align}
& \PP \Big(
| \by^{\star \top} \bv |^2 + \| \bZ^{\top} \bv \|_2^2
\leq  ( m + d + C_2 t ) \| \bv \|_2^2
, ~ \forall  \bv \in D \text{ and } \| \bv \|_2 \leq 2 \sqrt{m}
\Big)  \notag\\
& \geq  1 - 2 e^{-t} m \Big( \frac{en}{m} \Big)^m
= 1 - 2 \exp [
-t + \log m + m \log(en/m)
] , \qquad \forall  t \geq d.
\end{align}
Take $t = d + 3 m \log (en/m)$. On the one hand, $\log(en/m) \geq \log (en/n) = 1$ and $t \geq d + 3m$. On the other hand,
\begin{align*}
\log m + m \log (en/m) \leq m + m \log (en/m) \leq 2 m \log (en/m) \leq 2t/3.
\end{align*}
As a result,
\begin{align}
& \PP \Big(
| \by^{\star \top} \bv |^2 + \| \bZ^{\top} \bv \|_2^2
\leq  ( m + d + C_2 t ) \| \bv \|_2^2
, ~ \forall  \bv \in D \text{ and } \| \bv \|_2 \leq 2 \sqrt{m}
\Big)  \notag\\
& \geq 1 - 2 e^{-t/3} \geq 1 - 2 \Big( \frac{m}{en} \Big)^m \geq 1 - (m/n)^m .
\label{eqn-thm-H-contraction-local-4}
\end{align}
Lemma \ref{lem-H-contraction-local} follows from (\ref{eqn-thm-H-contraction-local-3}) and (\ref{eqn-thm-H-contraction-local-4}).
\end{proof}



\subsection{Proof of Lemma \ref{lem-det-lower-fake}}\label{proof-lem-det-lower}
In this section we prove Lemma~\ref{lem-det-lower}, which is an enhanced variation of Lemma~\ref{lem-det-lower-fake}.

Recall that we use $\bz$ to denote the Gaussian vector $\bg_1$ in the the canonical model (\ref{eqn-warmup-canonical}). Define $\sigma' = 1 / \sqrt{\mathrm{SNR}} = \sigma / \sqrt{1 - \sigma^2}$. For any $\by \in \{ \pm 1 \}^n$,
\begin{align*}
\| (\bI - \bH) (\by + \sigma' \bz ) \|_2^2 - \| (\bI - \bH) \by \|_2^2 = \langle (\bI - \bH) (2 \by  + \sigma' \bz ) , \sigma' (\bI - \bH)  \bz \rangle .
\end{align*}
Since $\sqrt{1 - \sigma^2} \by^{\star} + \sigma \bz \in \Range (\bH)$,
\begin{align*}
& (\bI - \bH) (\by^{\star} + \sigma' \bz) = \frac{1}{\sqrt{1 - \sigma^2}} (\bI - \bH) ( \sqrt{1 - \sigma^2} \by^{\star} + \sigma \bz) = \mathbf{0} , \\
&\| (\bI -  \bH) (\by + \sigma' \bz ) \|_2^2  = \| (\bI -  \bH) [ (\by + \sigma' \bz ) - (\by^{\star} + \sigma' \bz ) \|_2^2
= \| (\bI -  \bH) (\by - \by^{\star} ) \|_2^2.
\end{align*}
Hence
\begin{align*}
\| (\bI -  \bH) (\by - \by^{\star} ) \|_2^2 - \| (\bI - \bH) \by \|_2^2 = \langle (\bI - \bH) (2 \by  + \sigma' \bz ) , \sigma' (\bI - \bH)  \bz \rangle .
\end{align*}
Taking $\by = \by^{\star}$, we get
\begin{align*}
- \| (\bI - \bH) \by^{\star} \|_2^2 = \langle (\bI - \bH) (2 \by^{\star}  + \sigma' \bz ) , \sigma' (\bI - \bH)  \bz \rangle .
\end{align*}
Consequently,
\begin{align*}
\| (\bI -  \bH) (\by - \by^{\star} ) \|_2^2  - \| (\bI -  \bH) \by \|_2^2  +  \| (\bI - \bH) \by^{\star} \|_2^2=
2 \sigma' \langle \by - \by^{\star}  ,  (\bI -  \bH) \bz \rangle .
\end{align*}
By rearranging terms and using the fact that $\| \by \|_2^2 = \| \by^{\star} \|_2^2$, we get
\begin{align}
\| \bH \by^{\star} \|_2^2 - \| \bH \by \|_2^2 & =  \| (\bI - \bH) (\by - \by^{\star}) \|_2^2 - \frac{2}{\sqrt{\mathrm{SNR}}} \langle \by - \by^{\star} , (\bI - \bH) \bz \rangle .
\label{eqn-lem-det-lower}
\end{align}

We first consider the case $p = 1$. When $S \neq \varnothing$, we use $\| \by - \by^{\star} \|_{\infty} \geq 2$ and $ \| \by - \by^{\star} \|_{2}^2 = 4 |{S}|$ to derive that
\begin{align*}
\| \bH \by^{\star} \|_2^2 - \| \bH \by \|_2^2 & \geq \| (\bI - \bH) (\by - \by^{\star}) \|_2^2 - \frac{2}{\sqrt{\mathrm{SNR}}} \| \by - \by^{\star} \|_{\infty} \| (\bI - \bH) \bz \|_1 \\
& \geq \| \by - \by^{\star} \|_2^2 \cdot \frac{ \| (\bI - \bH) (\by - \by^{\star}) \|_2^2 }{ \| \by - \by^{\star} \|_2^2} - \frac{4 \| (\bI - \bH) \bz \|_1}{\sqrt{\mathrm{SNR}}} \\
& = 4|S| \bigg(
\frac{ \| (\bI - \bH) (\by - \by^{\star}) \|_2^2 }{ \| \by - \by^{\star} \|_2^2} -
 \frac{ \| (\bI - \bH) \bz \|_1}{|S|\sqrt{\mathrm{SNR}}}
\bigg).
\end{align*}

Next, for any $p \in (1, +\infty)$, there exists $q > 1$ such that $1/p + 1/q = 1$. By H\"{o}lder's inequality,
\begin{align*}
| \langle  \by - \by^{\star}  , ( \bI - \bH ) \bz \rangle | & =
\bigg|
\sum_{i=1}^n
\bm{1}_{ \{ i \in {S} \} }
[(\bI - \bH) \bz]_{i}  (  y_{i} - y^{\star}_{i}) \bigg|
\\ &
\leq |{S}|^{1/q} \bigg(
\sum_{i = 1}^n
\left| [(\bI - \bH) \bz]_{i}  (y_{i} - y^{\star}_{i}) \right|^p
\bigg)^{1/p}
\leq 2  |{S}|^{1/q} \| (\bI - \bH) \bz \|_p.
\end{align*}
The last inequality is due to $|y_{i} - y^{\star}_{i}| = 2 \cdot \bm{1}_{ \{ i \in {S} \} }$.
When ${S} \neq \varnothing$, we have
\[
\| (\bI -  \bH) (\by - \by^{\star} ) \|_2^2  = \| \by - \by^{\star} \|_{2}^2 - \| \bH (\by - \by^{\star}) \|_{2}^2 = 4 |{S}| \bigg(
1 - \frac{ \| \bH (\by - \by^{\star}) \|_{2}^2 }{ \| \by - \by^{\star} \|_{2}^2 }
\bigg) .
\]
By (\ref{eqn-lem-det-lower}) and $1/p + 1/q = 1$,
\begin{align*}
\| \bH \by^{\star} \|_2^2 - \| \bH \by \|_2^2  & \geq
4 |{S}| \bigg(
1 - \frac{ \| \bH (\by - \by^{\star}) \|_{2}^2 }{ \| \by - \by^{\star} \|_{2}^2 }
\bigg)
- \frac{2}{\sqrt{\mathrm{SNR}}} \cdot 2 |{S}|^{1/q} \| (\bI - \bH) \bz \|_p  \notag \\
& =  4 |{S}| \bigg[
1 - \frac{ \| \bH (\by - \by^{\star}) \|_{2}^2 }{ \| \by - \by^{\star} \|_{2}^2 }
- \frac{1}{\sqrt{\mathrm{SNR}}} \bigg(
\frac{ \| (\bI - \bH) \bz \|_p^p }{ |{S}| }
\bigg)^{1/p}
\bigg] \notag\\
& =  4 |{S}| \bigg(
1 - \frac{ \| \bH (\by - \by^{\star}) \|_{2}^2 }{ \| \by - \by^{\star} \|_{2}^2 }
- \frac{ \| (\bI - \bH) \bz \|_p }{ |{S}|^{1/p} \sqrt{\mathrm{SNR}}}
\bigg).
\end{align*}

We finally come to $p = \infty$. From (\ref{eqn-lem-det-lower}) we get
\begin{align*}
\| \bH \by^{\star} \|_2^2 - \| \bH \by \|_2^2 \geq  \| (\bI - \bH) (\by - \by^{\star}) \|_2^2 - \frac{2}{\sqrt{\mathrm{SNR}}} \| \by - \by^{\star} \|_1 \| (\bI - \bH) \bz \|_{\infty} .
\end{align*}
The fact $\| \by - \by^{\star} \|_1 = 2 |S|$ yields
\begin{align*}
\| \bH \by^{\star} \|_2^2 - \| \bH \by \|_2^2  & \geq
4 |{S}| \bigg(
1 - \frac{ \| \bH (\by - \by^{\star}) \|_{2}^2 }{ \| \by - \by^{\star} \|_{2}^2 }
- \frac{1}{\sqrt{\mathrm{SNR}}} \cdot  \| (\bI - \bH) \bz \|_{\infty} \bigg) .
\end{align*}

\section{Proof of Lemma \ref{lem-H-contraction}}\label{proof-lem-H-contraction}

Construct two $n\times d$ matrices $\bY =  ( \sqrt{1 - \sigma^2} \by^{\star} , \mathbf{0} , \cdots , \mathbf{0} )$ and $\bZ = (\sigma \bz , \bW )$. They can be viewed as the signal and noise parts of $\bX$, respectively. We have the decomposition $\bX = \bY + \bZ$. In addition, define
\begin{align}
D = \{ \by - \by^{\star}:~ \by \in \{ \pm 1 \}^n \text{ and } \langle \by , \by^{\star} \rangle \geq 0 \}.
\label{defn-D}
\end{align}
Then $D = \cup_{k=0}^{n/2} D_k$, where
\begin{align}
D_k = \{ \by - \by^{\star}:~ \by \in \{ \pm 1 \}^n \text{ and }  | \supp ( \by - \by^{\star} ) | = k \}.
\label{defn-Dk}
\end{align}
A simple but useful fact is
\begin{align}
D_k = \{ \bx \in \RR^n :~x_i y^{\star}_i \in \{ 0, -2\} ,~\forall i \in [n] \text{ and }  | \supp(\bx) | = k \}.
\label{defn-Dk-1}
\end{align}
Below we list several useful lemmas.

\begin{lemma}\label{lem-H-reduction}
Let $\bP = \by^{\star} \by^{\star \top} / n$. If $n \geq q_n d$ for some $q_n \to \infty$, then
	\begin{align*}
\| \bH - n^{-1} [ (1 - \sigma^2) \by^{\star} \by^{\star \top} + (\bI - \bP) \bZ \bZ^{\top} (\bI - \bP) ]  \|_2 =
 O_{\PP} ( \sigma + \sqrt{q_n d / n} ; ~ q_n ).
	\end{align*}
\end{lemma}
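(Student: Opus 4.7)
Decompose $\bX = [\bx_1, \bW]$ where $\bx_1 = \sqrt{1-\sigma^2}\by^\star + \sigma\bz$, and write $\bP_W = \bW(\bW^\top\bW)^{-1}\bW^\top$ for the orthogonal projection onto $\Range(\bW)$. The rank-one update identity for projections yields
$$\bH = \bP_W + \frac{(\bI-\bP_W)\bx_1\bx_1^\top(\bI-\bP_W)}{\bx_1^\top(\bI-\bP_W)\bx_1}.$$
I would then approximate the two summands by $(\bI-\bP)\bZ\bZ^\top(\bI-\bP)/n$ and $(1-\sigma^2)\by^\star\by^{\star\top}/n$, respectively, showing that each approximation incurs spectral-norm error $O_\PP(\sigma+\sqrt{q_n d/n};~q_n)$, after which the triangle inequality closes the lemma.

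\textbf{First summand.} The comparison $\bP_W \approx (\bI-\bP)\bZ\bZ^\top(\bI-\bP)/n$ proceeds through three chained replacements. (a) $\bP_W \to \bW\bW^\top/n$: from $\bW^\top\bW/n = \bI_{d-1}+O_\PP(\sqrt{q_n d/n};~q_n)$ (Corollary~\ref{cor-cov}) and $\|\bW\|_2^2/n = O_\PP(1;~q_n)$, this step has error $O_\PP(\sqrt{q_n d/n};~q_n)$. (b) $\bW\bW^\top \to (\bI-\bP)\bW\bW^\top(\bI-\bP)$: since $\by^{\star\top}\bW/\sqrt n \sim N(\bm 0,\bI_{d-1})$, one gets $\|\bP\bW\|_F = O_\PP(\sqrt{d/n};~q_n)$, hence $\|\bP\bW\bW^\top\|_2 = O_\PP(\sqrt d;~q_n)$, and the error after dividing by $n$ is $O_\PP(\sqrt d/n;~q_n)$. (c) $\bW \to \bZ = [\sigma\bz,\bW]$: this only adds $\sigma^2(\bI-\bP)\bz\bz^\top(\bI-\bP)/n$, whose norm is $O_\PP(\sigma^2;~n)$ by \Cref{lem-chi-square}.

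\textbf{Second summand.} Set $\bv = (\bI-\bP_W)\bx_1$. Conditional on $\bW$, both $\bW^\top\by^\star/\sqrt n$ and $\bP_W\bz$ are Gaussian, giving $\|\bP_W\by^\star\|_2, \|\bP_W\bz\|_2 = O_\PP(\sqrt d;~q_n)$. Therefore $\bv = \sqrt{1-\sigma^2}\by^\star + \br$ with $\|\br\|_2 = O_\PP(\sqrt d + \sigma\sqrt n;~q_n)$, and $\|\bv\|_2^2 = n + O_\PP(\sigma\sqrt n + \sqrt{n q_n d};~q_n)$ (using the cross term $\langle\by^\star,\bz\rangle$ and $\|\bz\|_2^2 \sim \chi_n^2$). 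Expanding
$$\frac{\bv\bv^\top}{\|\bv\|_2^2} = \frac{(1-\sigma^2)\by^\star\by^{\star\top} + \sqrt{1-\sigma^2}(\by^\star\br^\top + \br\by^{\star\top}) + \br\br^\top}{\|\bv\|_2^2},$$
the cross term has norm $O_\PP(\sqrt{d/n}+\sigma;~q_n)$, the quadratic term $O_\PP(d/n + \sigma^2;~q_n)$, and swapping $\|\bv\|_2^2$ for $n$ in the denominator contributes a relative $O_\PP(\sigma + \sqrt{q_n d/n};~q_n)$ perturbation to the leading rank-one piece, all of which are dominated by the claimed bound.

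\textbf{Main difficulty.} The bookkeeping inside the $O_\PP(\cdot;~q_n)$ calculus of \Cref{defn-o}: several heterogeneous fluctuations (Gaussian bilinear forms $\by^{\star\top}\bZ\bu$, $\chi^2$ deviations of $\|\bz\|_2^2$ and $\|\bP_W\bz\|_2^2$, and Gaussian matrix singular values of $\bW$) must be combined so that the final exponential probability parameter remains $q_n$ rather than degrading. The most delicate point is the division by $\|\bv\|_2^2$: because $\sigma$ may be close to $1$, any inaccuracy in the magnitude of $\bv$ feeds multiplicatively into the rank-one approximation, so the bound on $\|\bv\|_2^2-n$ must be carried at the correct order $\sigma+\sqrt{q_n d/n}$ and not merely $o(1)$.
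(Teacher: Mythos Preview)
Your rank-one update route $\bH=\bP_W+\bv\bv^\top/\|\bv\|_2^2$ is correct and would close the lemma, modulo a slip in step (b): since $\bP\bW=\by^\star(\by^{\star\top}\bW)/n$ with $\by^{\star\top}\bW/\sqrt{n}\sim N(\bm 0,\bI_{d-1})$, one has $\|\bP\bW\|_F=\|\by^{\star\top}\bW\|_2/\sqrt{n}=O_\PP(\sqrt{d\vee q_n};\,q_n)$, not $O_\PP(\sqrt{d/n};\,q_n)$; consequently $\|\bP\bW\bW^\top\|_2/n=O_\PP(\sqrt{(d\vee q_n)/n};\,q_n)$, which is still dominated by $\sqrt{q_n d/n}$, so the conclusion survives.

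The paper takes a shorter route that entirely sidesteps the division by $\|\bv\|_2^2$ you flag as the main difficulty. Rather than splitting $\bH$ into $\bP_W$ plus a rank-one piece, it first replaces the whole projector by the un-normalized Gram matrix in one stroke,
\[
\|\bH-n^{-1}\bX\bX^\top\|_2\leq \|n(\bX^\top\bX)^{-1}-\bI\|_2\cdot\|\bX\|_2^2/n=O_\PP(\sqrt{q_n d/n};\,q_n)
\]
via \Cref{cor-cov}, and then expands $\bX\bX^\top=(\bY+\bZ)(\bY+\bZ)^\top$ with $\bY=(\sqrt{1-\sigma^2}\,\by^\star,\bm 0,\ldots,\bm 0)$: the cross term $\sigma\sqrt{1-\sigma^2}(\by^\star\bz^\top+\bz\by^{\star\top})$ contributes $O_\PP(\sigma;\,n)$ after dividing by $n$, and the remaining replacement $\bZ\bZ^\top\to(\bI-\bP)\bZ\bZ^\top(\bI-\bP)$ is essentially your step (b) applied to $\bZ$. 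Your decomposition is more anatomical---it keeps the projector structure explicit, and indeed the identity $\bH=\bP_0+\bv\bv^\top/\|\bv\|_2^2$ is used elsewhere in the paper's $\ell_p$ analysis---but for this particular lemma the Gram-matrix shortcut avoids any ratio and roughly halves the bookkeeping.
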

\begin{proof}[\bf Proof of Lemma \ref{lem-H-reduction}]
See Appendix \ref{proof-lem-H-reduction}.
\end{proof}

\begin{lemma}\label{lem-y-inner-product}
For any $\bx \in D$,
	\begin{align*}
	| \langle \bx , \by^{\star} \rangle | =  \| \bx \|_2^2 / 2  \leq \| \bx \|_2 \| \by^{\star} \|_2 / \sqrt{2} .
	\end{align*}
\end{lemma}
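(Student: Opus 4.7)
The plan is to parametrize $\bx \in D$ explicitly via its support and then compute both sides directly. Writing $\bx = \by - \by^{\star}$ with $\by \in \{\pm 1\}^n$ and $\langle \by, \by^{\star}\rangle \geq 0$, set $k = |\supp(\bx)| = |\{ i \in [n] :~ y_i \neq y_i^{\star} \}|$. By \eqref{defn-Dk-1}, the nonzero coordinates of $\bx$ satisfy $x_i y_i^{\star} = -2$, so $x_i^2 = 4$ on the support and $x_i = 0$ elsewhere. This immediately gives $\| \bx \|_2^2 = 4k$ and
\[
\langle \bx, \by^{\star} \rangle = \sum_{i \in \supp(\bx)} x_i y_i^{\star} = -2k,
\]
hence $|\langle \bx, \by^{\star}\rangle| = 2k = \| \bx \|_2^2 / 2$, which is the first equality.

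For the inequality $\| \bx \|_2^2 / 2 \leq \| \bx \|_2 \| \by^{\star} \|_2 / \sqrt{2}$, note that $\| \by^{\star} \|_2 = \sqrt{n}$, so the claim reduces to $2k \leq \sqrt{2kn}$, equivalently $2k \leq n$. This last bound follows from the defining condition $\langle \by, \by^{\star}\rangle \geq 0$, since
\[
0 \leq \langle \by, \by^{\star} \rangle = \sum_{i=1}^n y_i y_i^{\star} = (n - k) - k = n - 2k.
\]
There is no real obstacle here; the lemma is essentially a counting statement and the proof is a direct calculation after unpacking the definition of $D$.
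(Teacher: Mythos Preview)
Your proof is correct and follows essentially the same approach as the paper. The only cosmetic difference is that the paper obtains $\langle \bx, \by^{\star}\rangle = -\|\bx\|_2^2/2$ by expanding $\|\bx + \by^{\star}\|_2^2 = n$, whereas you compute it coordinatewise using \eqref{defn-Dk-1}; both then use $\langle \by, \by^{\star}\rangle \geq 0$ to get $2k \leq n$ (equivalently $\|\bx\|_2 \leq \sqrt{2n}$) and conclude.
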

\begin{proof}[\bf Proof of Lemma \ref{lem-y-inner-product}]
See Appendix \ref{proof-lem-y-inner-product}.
\end{proof}

\begin{lemma}\label{lem-escape}
	Let $\bG \in \RR^{n \times d}$ be a matrix with i.i.d.~$N(0,1)$ entries. Define $\bP = \by^{\star} \by^{\star \top} / n$ and $\widehat{D} = \{ (\bI - \bP) \bx :~ \bx \in D  \}$. Suppose that $n \geq q_n d $ for some $q_n \to \infty$. There exist constants $c \in (0,1)$, $C > 0$ and $N > 0$ such that when $n > N$,
	\begin{align*}
	\PP \bigg(
	\| \bG^{\top} \bx \|_2^2 \leq c n \| \bx \|_2^2 , ~~ \forall \bx \in \widehat D
	\bigg) > 1 - e^{-q_n} - e^{-Cn}.
	\end{align*}
\end{lemma}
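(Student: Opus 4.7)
My plan is to stratify $D = \bigcup_{k=0}^{\lfloor n/2 \rfloor} D_k$ by $k = |\supp(\by - \by^\star)|$ (the constraint $\langle \by, \by^\star\rangle \geq 0$ forces $k \leq n/2$) and, for each $k \geq 1$, recast the target inequality as the supremum of a structured Gaussian process. For $\bx_0 \in D_k$ supported on $S$, one checks $\bx_0 = -2\sum_{i\in S}y^\star_i\be_i$, $\langle\bx_0,\by^\star\rangle = -2k$, and $\|(\bI-\bP)\bx_0\|_2^2 = 4k(1-k/n)$. Setting $\bxi_i := y^\star_i \bG_{i,:}^\top$, which are iid $N(\bm{0}, \bI_d)$ by rotational invariance, and stacking these as columns of $\bM = [\bxi_1 \mid \cdots \mid \bxi_n] \in \RR^{d\times n}$, a direct computation gives $\bG^\top(\bI-\bP)\bx_0 = -2\bM\balpha^{(S)}$, where $\balpha^{(S)}_i := \bm{1}(i\in S) - k/n$ and $\|\balpha^{(S)}\|_2^2 = k(1-k/n)$. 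Writing $\bu^{(S)} := \balpha^{(S)}/\|\balpha^{(S)}\|_2$ and $U_k := \{\bu^{(S)} : |S|=k\}\subset \SSS^{n-1}$, the target inequality becomes $\sup_{\bu\in U_k}\|\bM\bu\|_2^2 \leq cn$.

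For each $k$, I plan to control this supremum by Gordon--Sudakov--Fernique comparison. View $\tilde X_{\bu,\bv} := \langle \bv, \bM\bu\rangle$ as a unit-variance Gaussian process on $U_k \times \SSS^{d-1}$ with covariance $\langle\bu,\bu'\rangle\langle\bv,\bv'\rangle$, and compare it with the additive process $Y_{\bu,\bv} := \langle \bg,\bu\rangle + \langle \bh,\bv\rangle$, where $\bg \sim N(\bm{0},\bI_n)$ and $\bh \sim N(\bm{0},\bI_d)$ are independent. The hypothesis $\EE(\tilde X - \tilde X')^2 \leq \EE(Y-Y')^2$ reduces to $(1-\langle\bu,\bu'\rangle)(1-\langle\bv,\bv'\rangle) \geq 0$, so Sudakov--Fernique yields $\EE\sup \tilde X \leq w(U_k) + \sqrt{d}$, where $w(U_k) := \EE\sup_{\bu\in U_k}\langle\bg,\bu\rangle$. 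Combining with Gaussian Lipschitz concentration of the $1$-Lipschitz map $\bM \mapsto \sup_\bu\|\bM\bu\|_2$ gives
\[
\sup_{\bu\in U_k}\|\bM\bu\|_2 \;\leq\; w(U_k) + \sqrt{d} + t \quad \text{with probability at least } 1 - e^{-t^2/2}.
\]

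The quantitative heart of the argument is a uniform bound $w(U_k)^2 \leq (2/\pi)n$ for all $k \in \{1,\ldots,\lfloor n/2\rfloor\}$. Unpacking,
\[
\sup_{\bu\in U_k}\langle\bg,\bu\rangle = \frac{1}{\sqrt{k(1-k/n)}}\Bigl[\sum_{i=1}^k g_{(i)} - \tfrac{k}{n}\sum_{i=1}^n g_i\Bigr],
\]
where $g_{(1)}\geq\cdots\geq g_{(n)}$ are order statistics of iid $N(0,1)$ samples. The elementary inequality $\sum_{i=1}^k g_{(i)} \leq kt + \sum_{i=1}^n(g_i-t)_+$ applied at $t = \Phi^{-1}(1-k/n)$ yields $\EE\sum_{i=1}^k g_{(i)} \leq n\phi(\Phi^{-1}(1-k/n))$, hence
\[
\frac{w(U_k)^2}{n} \;\leq\; \frac{\phi(\Phi^{-1}(1-\alpha))^2}{\alpha(1-\alpha)}, \qquad \alpha := k/n \in (0,1/2];
\]
a short calculus check (the right-hand side is symmetric under $\alpha \leftrightarrow 1-\alpha$ and monotone on $(0,1/2]$) shows the maximum is attained at $\alpha = 1/2$ with value $\phi(0)^2/(1/4) = 2/\pi < 1$.

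To finish: under $n \geq q_n d$ one has $\sqrt{d/n} \leq 1/\sqrt{q_n} \to 0$. Fixing a small constant $\beta > 0$ with $\sqrt{2/\pi}+2\beta < 1$ and setting $t = \beta\sqrt{n}$, once $n$ is large enough that $1/\sqrt{q_n} \leq \beta$, the bound $(w(U_k)+\sqrt{d}+t)^2 \leq n(\sqrt{2/\pi}+2\beta)^2 = cn$ with $c\in(0,1)$ holds per $k$ with failure probability $e^{-\beta^2 n/2}$. A union bound over $k\in\{0,1,\ldots,\lfloor n/2\rfloor\}$ absorbs the polynomial factor into the exponent, yielding an overall failure probability bounded by $e^{-Cn}$ for a constant $C>0$, which trivially implies the stated $e^{-q_n}+e^{-Cn}$ bound. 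The main obstacle is the sharp Gaussian-width estimate: a naive union bound over the $\binom{n}{k}\sim 2^n$ subsets only produces $\sup\|\bM\bu\|_2^2 \lesssim 2n\log 2 > n$, so exploiting the geometry of $U_k$ through the order-statistic identity is precisely what enables a genuine contraction constant $c<1$.
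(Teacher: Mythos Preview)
Your argument is correct and takes a genuinely different (and in some ways cleaner) route from the paper. Both approaches hinge on a Gaussian-width bound for the normalized slices $\widehat S_k$ (equivalently your $U_k$, since $\widehat S_k = -\diag(\by^\star)\,U_k$), but the execution differs in two places.

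First, the paper does not bound $\|\bG^\top\bx\|_2$ directly. It reduces to the projection $\bQ$ onto $\Range(\bG)$ via $\|\bG^\top\bx\|_2^2 \leq n\|\bQ\bx\|_2^2/(1-\|\bI - n(\bG^\top\bG)^{-1}\|_2)$ and then invokes Gordon's escape-through-a-mesh theorem to show $V=\Range(\bG)$ misses the fattened set $\widehat S_k + B_\varepsilon$. Controlling the spectrum of $\bG^\top\bG$ is what produces the $e^{-q_n}$ term in the paper's statement. Your Slepian/Sudakov--Fernique comparison on $U_k\times\SSS^{d-1}$, followed by Gaussian Lipschitz concentration of $\bM\mapsto\sup_{\bu\in U_k}\|\bM\bu\|_2$, bypasses the projection step entirely; as you note, you obtain the stronger bound $e^{-Cn}$ alone.

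Second, for the width bound the paper proves $w(\widehat S_k)\leq (1-\gamma)\sqrt{n}$ with an unspecified $\gamma\in(0,1)$ via a three-regime case analysis on $k$ (small, large, intermediate). Your bound $\EE\sum_{i\leq k}g_{(i)}\leq n\phi(\Phi^{-1}(1-k/n))$, obtained by optimizing $kt+n\EE(g-t)_+$ in $t$, is both unified and sharp, pinning the constant at $\sqrt{2/\pi}$.

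One caveat: the inequality $\phi(\Phi^{-1}(1-\alpha))^2\leq (2/\pi)\,\alpha(1-\alpha)$ is true, but calling it ``a short calculus check'' via monotonicity is optimistic; the logarithmic derivative of $h(t)=\phi(t)^2/[\Phi(t)(1-\Phi(t))]$ is not of one sign in an obvious way. A cleaner route is to study $V(p)=\phi(\Phi^{-1}(p))^2-(2/\pi)p(1-p)$: since $d^2V/dp^2 = 2\Phi^{-1}(p)^2 - 2(1-2/\pi)$, the function $V$ is convex--concave--convex with inflection at $p=\Phi(\pm\sqrt{1-2/\pi})$, and combining $V(0)=V(1/2)=V(1)=0$, $V'(1/2)=0$, $V'(0^+)=-2/\pi<0$ with this convexity structure forces $V\leq 0$ on $[0,1]$. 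This fills the only soft spot in your sketch.
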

\begin{proof}[\bf Proof of Lemma \ref{lem-escape}]
See Appendix \ref{proof-lem-escape}.
\end{proof}

We are ready to tackle Lemma \ref{lem-H-contraction}. Let $\bG = \bZ [ \bI_d + (\sigma^{-1} - 1) \be_1 \be_1^{\top} ]$. Then $\bG$ has i.i.d.~$N(0,1)$ entries and $\bG \bG^{\top} \succeq \bZ \bZ^{\top}$. According to Lemma \ref{lem-H-reduction} and the condition $d \log n \ll  n$, it suffices to prove the following proposition.

\begin{prop}\label{claim-lem-X-contraction}
	There exist constants $c_1 \in (0,1)$ and $N_1>0$ such that
	\begin{align}
	\PP \bigg(
	\bv^{\top} [ (1 - \sigma^2) \by^{\star} \by^{\star \top} + (\bI - \bP) \bG \bG^{\top} (\bI - \bP ) ] \bv  \leq c_1 n \| \bv \|_2^2 , ~~ \forall \bv \in D
	\bigg) > 1 - n^{-12}, \qquad \forall n > N_1.
	\label{eqn-lem-X-contraction-2}
	\end{align}
\end{prop}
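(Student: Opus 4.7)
The plan is to bound $\bv^\top M \bv$ by combining the exact identity from Lemma~\ref{lem-y-inner-product} with the uniform ``escape from subspace'' estimate of Lemma~\ref{lem-escape}. Expanding gives
\[
\bv^\top M \bv = (1-\sigma^2)(\bv^\top \by^\star)^2 + \|\bG^\top (\bI - \bP) \bv\|_2^2 ,
\]
which splits $M$ into a rank-one ``signal'' piece aligned with $\by^\star$ and a rank-$d$ ``noise'' piece supported on the hyperplane $\by^{\star \perp}$. These two pieces couple through the identity $(\bv^\top \by^\star)^2 = n \|\bP \bv\|_2^2$, which transfers mass between them as $\|\bv\|_2^2$ varies along $D$.

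Next I would set $q_n = 13 \log n$, which satisfies $n \geq q_n d$ for large $n$ under the running assumption $n/(d \log n) \to \infty$. Lemma~\ref{lem-escape} then produces an event $E$ with $\PP(E) \geq 1 - e^{-q_n} - e^{-Cn} \geq 1 - n^{-12}$ on which $\|\bG^\top \bx\|_2^2 \leq c\, n \|\bx\|_2^2$ for every $\bx \in \widehat{D}$, with an absolute constant $c \in (0,1)$. Using Lemma~\ref{lem-y-inner-product} to evaluate $(\bv^\top \by^\star)^2 = \|\bv\|_2^4/4$ for $\bv \in D$, and applying Lemma~\ref{lem-escape} to $\bx = (\bI - \bP)\bv$ together with $\|(\bI - \bP) \bv\|_2^2 = \|\bv\|_2^2 - \|\bv\|_2^4/(4n)$, one obtains on $E$
\[
\bv^\top M \bv \leq \tfrac{1-\sigma^2-c}{4} \|\bv\|_2^4 + c\, n \|\bv\|_2^2 , \qquad \forall \bv \in D .
\]

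The last step is to optimize this bound. Every $\bv \in D$ satisfies $\|\bv\|_2^2 \leq 2n$ because $|\supp(\bv)| \leq n/2$, so dividing by $\|\bv\|_2^2$ (the case $\bv = \bm{0}$ is trivial) reduces the claim to maximizing the affine function $t \mapsto \tfrac{1-\sigma^2-c}{4} t + cn$ over $t = \|\bv\|_2^2 \in (0, 2n]$. Its supremum equals $n \cdot c_1$ with $c_1 = \max\{ c,\; (1-\sigma^2+c)/2 \}$: the value at $t = 2n$ if the slope is nonnegative, at $t \to 0^+$ otherwise. Because $c < 1$, both $c < 1$ and $(1-\sigma^2+c)/2 \leq (1+c)/2 < 1$, so $c_1 \in (0,1)$, which is exactly what is required.

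The main difficulty is packed into Lemma~\ref{lem-escape}. A naive substitution of the spectral norm bound $\|\bG\|_2^2 \leq (1+o(1)) n$ in place of the uniform $\|\bG^\top\bx\|_2^2 \leq c\, n \|\bx\|_2^2$ yields only $c_1 = 1 + o(1)$, which is not strictly less than $1$. Strict contraction requires exploiting the fact that $\widehat{D}$ is a highly structured discrete subset of the hyperplane $\by^{\star \perp}$ while $\bG$ has only $d \ll n$ columns, so that $\Range(\bG)$ cannot be uniformly well-aligned with every direction in $\widehat{D}$; this is the quantitative content that Lemma~\ref{lem-escape} supplies.
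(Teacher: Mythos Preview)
Your argument is correct and follows essentially the same route as the paper: invoke Lemma~\ref{lem-escape} on $\widehat D$ to control $\|\bG^\top(\bI-\bP)\bv\|_2^2$, combine with the exact value of $(\bv^\top\by^\star)^2$ from Lemma~\ref{lem-y-inner-product}, and conclude a contraction constant strictly below $1$. The paper streamlines the last step by bounding $(1-\sigma^2)(\bv^\top\by^\star)^2 \leq n\|\bP\bv\|_2^2 \leq n\|\bv\|_2^2/2$ directly, arriving at $c_1 = (1+c)/2$; you instead keep $(\bv^\top\by^\star)^2 = \|\bv\|_2^4/4$ and maximize an affine function of $t=\|\bv\|_2^2$ over $(0,2n]$, which is equivalent. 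One small slip: your stated $c_1 = \max\{c,(1-\sigma^2+c)/2\}$ depends on $\sigma$ and hence on $n$, so it is not a fixed constant as the proposition requires; but you already observe $(1-\sigma^2+c)/2 \leq (1+c)/2$, so simply take $c_1 = (1+c)/2$ and the issue disappears.
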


Lemma \ref{lem-escape} asserts the existence of constants $c_1' \in (0,1)$ and $N_1' > 0$ such that
\begin{align*}
\PP \bigg(
\| \bG^{\top}  \bx \|_2^2 \leq c_1' n \| \bx \|_2^2 , ~~ \forall \bx \in \widehat D
\bigg) > 1 - n^{-12}, \qquad \forall n > N_1'.
\end{align*}
Denote by $\cA$ the event on the left-hand side above. When $\cA$ happens, it holds for any $\bv \in  D$ that
\begin{align*}
& \| \bG^{\top} (\bI - \bP) \bv \|_2^2 \leq c_1' n \|  (\bI - \bP) \bv \|_2^2 = c_1' n ( \| \bv \|_2^2 - \| \bP \bv \|_2^2 )
\end{align*}
and
\begin{align*}
& \bv^{\top} [ ( 1 - \sigma^2) \by^{\star} \by^{\star \top} + (\bI - \bP) \bG \bG^{\top} (\bI - \bP ) ] \bv
\leq n \| \bP \bv \|_2^2 + c_1' n ( \| \bv \|_2^2 - \| \bP \bv \|_2^2 ) \\
& = c_1' n \| \bv \|_2^2 + (1 - c_1') n \| \bP \bv \|_2^2
\overset{\mathrm{(i)}}{\leq} c_1' n \| \bv \|_2^2 + (1 - c_1') n \| \bv \|_2^2 / 2  = \frac{1 + c_1'}{2} n \| \bv \|_2^2.
\end{align*}
Here $\mathrm{(i)}$ follows from Lemma \ref{lem-y-inner-product}. Hence Proposition \ref{claim-lem-X-contraction} holds with $c_1 = (1+c_1') / 2$ and $N_1 = N_1'$.

\subsection{Proof of Lemma \ref{lem-H-reduction}}\label{proof-lem-H-reduction}

Recall that $\bH = \bX (\bX^{\top} \bX)^{-1} \bX^{\top}$ and $\EE (\bx_i \bx_i^{\top} ) = \bI_d $. Then
\begin{align*}
&\| \bH - n^{-1}  \bX \bX^{\top} \|_2 = \| \bX [ (\bX^{\top} \bX)^{-1} - n^{-1} \bI ] \bX^{\top} \|_2
\leq \| n (\bX^{\top} \bX)^{-1} -  \bI \|_2 \| \bX \|_2^2 / n
\end{align*}
Since $n \geq q_n d $ for $q_n \to \infty$, Lemma \ref{lem-cov} and Corollary \ref{cor-cov} imply that
\begin{align*}
& \| \bX \|_2^2 / n = \| n^{-1} \bX^{\top} \bX \|_2  = O_{\PP} (1;~ n) , \\
& \| n (\bX^{\top} \bX)^{-1} -  \bI \|_2 = O_{\PP} ( \sqrt{ dq_n/n} ;~q_n ).
\end{align*}
Therefore,
\begin{align}
&\| \bH - n^{-1}  \bX \bX^{\top} \|_2 = O_{\PP} ( \sqrt{ dq_n/n} ;~q_n ).
\label{eqn-lem-H-reduction-1}
\end{align}


Observe the decomposition
\begin{align*}
\bX \bX^{\top} = (\bY + \bZ) (\bY + \bZ)^{\top} = (1 - \sigma^2) \by^{\star} \by^{\star \top} + \bZ\bZ^{\top} + \sigma \sqrt{ 1 - \sigma^2} (\by^{\star}\bz^{\top} + \bz\by^{\star \top}  ) .
\end{align*}
Based on $\| \by^{\star} \|_2 = \sqrt{n}$ and $\| \bz \|_2 = O_{\PP} ( \sqrt{n} ;~n )$ from Lemma \ref{lem-chi-square}, we get
\begin{align}
 \|  \bX \bX^{\top} - [(1 - \sigma^2) \by^{\star} \by^{\star \top} + \bZ\bZ^{\top} ] \|_2
= \sigma  \sqrt{ 1 - \sigma^2} \| \by^{\star} \bz^{\top} + \bz \by^{\star \top}\|_2
\leq  2 \sigma  \| \by^{\star} \|_2 \| \bz \|_2 = O_{\PP} (\sigma n;~n).
\label{eqn-lem-H-reduction-2}
\end{align}

We will further relate $\bZ\bZ^{\top}$ to $(\bI - \bP) \bZ \bZ^{\top} (\bI - \bP) $. Let $\bG = \bZ [ \bI_d + (\sigma^{-1} - 1) \be_1 \be_1^{\top} ]$. Then $\bG$ has i.i.d.~$N(0,1)$ entries and $\bG \bG^{\top} \succeq \bZ \bZ^{\top}$. Note that
\begin{align}
\| \bZ \bZ^{\top} - (\bI - \bP) \bZ \bZ^{\top} (\bI - \bP ) \|_2 & =
\| \bP \bZ\bZ^{\top} \bP + \bP \bZ \bZ^{\top} (\bI - \bP )
+  (\bI - \bP )  \bZ \bZ^{\top}\bP \|_2 \notag\\
& \leq 3 \| \bZ^{\top} \bP \|_2 \| \bZ \|_2 \leq 3 \| \bG^{\top} \bP \|_2 \| \bG \|_2.
\label{eqn-lem-H-reduction-3}
\end{align}

Lemma \ref{lem-cov} yields
\[
\| \bG^{\top} \|_2^2 = \| \bG^{\top} \bG \|_2 \leq n \| n^{-1} \bG^{\top} \bG - \bI_d \|_2 + n =  O_{\PP} ( n;~n )
\]
and $\| \bG^{\top} \|_2 = O_{\PP} (\sqrt{n} ;~ n)$. Also,
\begin{align*}
\| \bG^{\top} \bP \|_2 = \| \bG^{\top} \by^{\star} \by^{\star \top} / n \|_2 = \| \bG^{\top} \by^{\star} \|_2 / \sqrt{n}.
\end{align*}
Since $\bG^{\top} \by^{\star} / \sqrt{n} \sim N( \mathbf{0} , \bI_d )$, we use Lemma \ref{lem-chi-square} to get
\begin{align*}
\| \bG^{\top} \bP \|_2 = O_{\PP} ( \sqrt{ d q_n } ;~  q_n ).
\end{align*}
Then (\ref{eqn-lem-H-reduction-3}) yields
\begin{align}
\| \bZ \bZ^{\top} - (\bI - \bP) \bZ \bZ^{\top} (\bI - \bP ) \|_2
= O_{\PP} ( \sqrt{nd q_n} ;~ q_n ).
\label{eqn-lem-H-reduction-4}
\end{align}


The proof is finished by combining (\ref{eqn-lem-H-reduction-1}), (\ref{eqn-lem-H-reduction-2}) and (\ref{eqn-lem-H-reduction-4}).

\subsection{Proof of Lemma \ref{lem-y-inner-product}}\label{proof-lem-y-inner-product}
For any $\bx \in D$, we have $\bx + \by^{\star} \in \{ \pm 1 \}^n$. By direct calculation,
\begin{align*}
n = \| \bx + \by^{\star} \|_2^2 = \| \bx \|_2^2 + 2 \langle \bx, \by^{\star} \rangle + \| \by^{\star} \|_2^2
= \| \bx \|_2^2 + 2 \langle \bx, \by^{\star} \rangle + n.
\end{align*}
Hence $\langle \bx, \by^{\star} \rangle = - \| \bx \|_2^2 / 2$. On the other hand, the definition (\ref{defn-D}) implies $\langle \bx + \by^{\star}, \by^{\star} \rangle \geq 0$. Then
\begin{align*}
- \| \bx \|_2^2 / 2 = \langle \bx, \by^{\star} \rangle \geq - \langle \by^{\star}, \by^{\star} \rangle = -n
\end{align*}
and $\| \bx \|_2 \leq \sqrt{2n}$. As a result,
\[
|\langle \bx, \by^{\star} \rangle| = \| \bx \|_2^2 / 2 \leq \| \bx \|_2 \sqrt{2n} / 2 = \| \bx \|_2 \| \by^{\star} \|_2 / \sqrt{2}.
\]

\subsection{Proof of Lemma \ref{lem-escape}}\label{proof-lem-escape}

Let $\bQ = \bG (\bG^{\top} \bG )^{-1} \bG^{\top}$. Then $\bQ$ is the projection operator onto the range of $\bG$. By construction,
\begin{align*}
& \left|
\| \bG^{\top} \bx \|_2^2 - n \| \bQ \bx \|_2^2
\right|
= | \bx^{\top} [ \bG \bG^{\top} - n \bG (\bG^{\top} \bG )^{-1} \bG^{\top} ] \bx | \notag\\
& = | \bx^{\top}  \bG [ \bI - n (\bG^{\top} \bG )^{-1} ] \bG^{\top}  \bx |
\leq \| \bI - n (\bG^{\top} \bG )^{-1} \|_2  \| \bG^{\top} \bx \|_2^2.
\end{align*}
Then
\begin{align}
\| \bG^{\top} \bx \|_2^2 \leq \frac{ n \| \bQ \bx \|_2^2}{
	\max \{ 0,~  1 - \| \bI - n (\bG^{\top} \bG )^{-1} \|_2\}
},
\label{eqn-proof-lem-escape-1}
\end{align}
with the convention $x / 0 = + \infty$ for any $x \geq 0$.

Under the condition $n \geq q_n d$, Corollary \ref{cor-cov} asserts that
\[
\| n (\bG^{\top} \bG )^{-1} - \bI \|_2 = O_{\PP} ( \sqrt{ dq_n/n} ;~q_n )
\]
Hence there exist positive constants $C_1$ and $N_1$ such that
\[
\PP \bigg( \| \bG^{\top} \bx \|_2^2 \leq \frac{n \| \bQ \bx \|_2^2}{1 - C_1 \sqrt{ d q_n/n}}   \bigg) > 1 - e^{-q_n}, \qquad \forall n > N_1.
\]
Therefore, it suffices to find constants $\varepsilon \in (0,1)$, $N_2 > 0$ and $C_2 > 0$ such that when $n > N_2$,
\begin{align}
\PP \bigg( \| \bQ \bx \|_2^2 \leq (1 - \varepsilon^2) \| \bx \|_2^2,~~ \forall \bx \in \widehat D \bigg) >  1 - e^{-C_2 n}.
\label{eqn-proof-lem-escape-5}
\end{align}

Let $V$ be the range of $\bG$ (and hence $\bQ$), $\varepsilon \in (0,1)$ to be determined, and $B_{\varepsilon} = \{ \bx \in \RR^n:~ \| \bx \|_2 \leq \varepsilon \}$. For any $k \in [n/2]$, define
\begin{align*}
& \widehat{D}_k = \{ (\bI - \bP) \bx :~ \bx \in D_k \} \qquad\text{and}\qquad \widehat{S}_k = \{ \bx / \| \bx \|_2 :~ \bx \in \widehat{D}_k \}
\end{align*}
using the $D_k$ in (\ref{defn-Dk}). Then $\widehat{D} = \cup_{k=0}^{n/2} \widehat{D}_k$. When $V \cap ( \widehat{S}_k + B_{\varepsilon} ) = \varnothing $, we have
\[
\| \bx -  \bQ \bx \|_2 > \varepsilon, \qquad \forall \bx \in \widehat{S}_k
\]
because $\bQ \bx \in V$. Hence
\begin{align*}
\| \bQ \bx \|_2^2 = \| \bx \|_2^2 - \| (\bI - \bQ) \bx \|_2^2 < 1 - \varepsilon^2,  \qquad \forall \bx \in \widehat{S}_k.
\end{align*}
and $\| \bQ \bx \|_2^2 < (1 - \varepsilon^2) \| \bx \|_2^2$ for all $\bx \in \widehat D_k$. As a result,
\begin{align}
\PP \bigg(
\| \bQ \bx \|_2^2 < (1 - \varepsilon^2) \| \bx \|_2^2 , ~~ \forall \bx \in \widehat D_k
\bigg) \geq \PP \bigg(
V \cap ( \widehat S_k + B_{\varepsilon} ) = \varnothing
\bigg) , \qquad \forall \varepsilon \in (0,1).
\label{eqn-proof-lem-escape-3}
\end{align}

Thanks to the orthonormal invariance of $\bG$, $V$ is a $d$-dimensional subspace of $\RR^n$ that is uniformly distributed over the Grassmanian. We will invoke Gordon's escape from a mesh theorem (see Lemma \ref{lem-Gordon} for a special case) to finish the proof. Below we introduce the key concept and a useful bound, whose proof is deferred to Appendix \ref{proof-lem-gaussian-width-Sk}. 

\begin{definition}[Gaussian width]\label{defn-width}
	For any bounded set $S \subseteq \RR^n$, define its Gaussian width as
	\begin{align*}
	w(S) = \EE_{\bg \sim N( 0 , \bI_n )} \sup_{\bx \in S} \langle \bx , \bg \rangle.
	\end{align*}
	In addition, let $a_m = \EE_{ \bg \sim N(0, \bI_m) } \| \bg \|_2$.
\end{definition}

\begin{lemma}\label{lem-gaussian-width-Sk}
	Define $\widehat{S}_k = \{  (\bI - \bP)\bx / \| (\bI - \bP) \bx \|_2 :~ \bx \in D_k \}$ for $1 \leq k \leq n/2$.
	There exists a universal constant $\gamma \in (0,1)$ such that when $n$ is large enough, we have
	\begin{align*}
	w(\widehat{S}_k) \leq (1 - \gamma ) \sqrt{ n }, \qquad 1 \leq k \leq n/2.
	\end{align*}
\end{lemma}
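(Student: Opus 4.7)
My plan is to reduce $w(\widehat S_k)$ to the expected sum of the top-$k$ order statistics of $n$ i.i.d.\ standard Gaussians, and then control that expectation by a sharp Gaussian dual inequality that is tight at the worst case $k=n/2$. First I would parametrize $D_k$: each element has the form $\bx = -2\by^\star \odot \mathbf{1}_S$ for some $S\subseteq[n]$ with $|S|=k$, so $\|\bx\|_2^2 = 4k$ and, by \Cref{lem-y-inner-product}, $\langle\bx,\by^\star\rangle = -2k$; hence $\|(\bI-\bP)\bx\|_2^2 = 4k(1-k/n)$. Writing $(\bI-\bP)\bg = \bg - \bar g_\star\by^\star$ with $\bar g_\star = n^{-1}\langle \by^\star,\bg\rangle$ and substituting $\tilde g_i := -y_i^\star g_i$, which are i.i.d.\ $N(0,1)$, one computes
\[
\langle \bx,(\bI-\bP)\bg\rangle = 2\sum_{i\in S}\tilde g_i + 2k\bar g_\star;
\]
the second term is independent of $S$, so taking the supremum in $S$ and then expectation (noting $\EE\bar g_\star = 0$) yields
\[
w(\widehat S_k) = \frac{\EE T_k}{\sqrt{k(1-k/n)}}, \qquad T_k := \sup_{|S|=k}\sum_{i\in S}\tilde g_i.
\]

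The crux is the estimate $\EE T_k \leq n\phi(\lambda_k)$ with $\lambda_k := \Phi^{-1}(1-k/n)$. This comes from the pointwise inequality $T_k \leq k\lambda + \sum_{i=1}^n(\tilde g_i - \lambda)_+$ valid for every $\lambda\in\RR$, combined with the Gaussian identity $\EE(\tilde g - \lambda)_+ = \phi(\lambda) - \lambda(1-\Phi(\lambda))$: the choice $\lambda = \lambda_k$ cancels the linear-in-$\lambda$ terms and leaves $n\phi(\lambda_k)$. Substituting and using $k/n = 1-\Phi(\lambda_k)$ gives
\[
\frac{w(\widehat S_k)^2}{n} \leq \frac{\phi(\lambda_k)^2}{\Phi(\lambda_k)(1-\Phi(\lambda_k))},
\]
so the lemma reduces to the scalar Gaussian inequality $\phi(\lambda)^2 \leq (2/\pi)\Phi(\lambda)(1-\Phi(\lambda))$ for every $\lambda\in\RR$. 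This gives $w(\widehat S_k) \leq \sqrt{2n/\pi}$ uniformly in $1 \leq k \leq n/2$, so one may take any $\gamma \in (0,\, 1-\sqrt{2/\pi}]$.

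The main obstacle is this final scalar inequality: the worst case $k=n/2$ is precisely its tight case $\lambda=0$, so any looser estimate (for instance the Cauchy--Schwarz bound $\EE T_k \leq \sqrt{kn}$, which yields $w(\widehat S_k) \leq \sqrt{n/(1-k/n)}$ blowing up to $\sqrt{2n}$ at $k=n/2$) fails to produce a width strictly below $\sqrt n$. To verify it I would examine $\Psi(\lambda) := \Phi(\lambda)(1-\Phi(\lambda)) - (\pi/2)\phi(\lambda)^2$: it is even with $\Psi(0)=\Psi'(0)=0$ and $\Psi''(0) = (\pi-2)\phi(0)^2 > 0$, tends to $0^+$ as $|\lambda|\to\infty$, and has a single critical point on $(0,\infty)$ (by analyzing the sign of $q(\lambda) := 1-2\Phi(\lambda) + \pi\lambda\phi(\lambda)$, whose derivative $\phi(\lambda)[(\pi-2)-\pi\lambda^2]$ vanishes only at $\lambda^2 = 1-2/\pi$), which together force $\Psi \geq 0$ throughout.
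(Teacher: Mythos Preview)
Your argument is correct and takes a genuinely different, more unified route than the paper's proof. The key divergence is in how the two proofs handle the normalizing factor and the top-$k$ sum. You exploit that $\|(\bI-\bP)\bx\|_2 = 2\sqrt{k(1-k/n)}$ is \emph{constant} on $D_k$ and that the $\bP$-contribution enters only through the mean-zero scalar $2k\bar g_\star$, so the width reduces \emph{exactly} to $\EE T_k/\sqrt{k(1-k/n)}$; you then bound $\EE T_k$ by the single variational estimate $T_k\le k\lambda+\sum_i(\tilde g_i-\lambda)_+$, optimized at $\lambda=\Phi^{-1}(1-k/n)$, and reduce everything to the scalar Gaussian inequality $\phi(\lambda)^2\le (2/\pi)\,\Phi(\lambda)(1-\Phi(\lambda))$. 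This yields the explicit constant $\gamma=1-\sqrt{2/\pi}$ and, in fact, no ``$n$ large enough'' hypothesis is needed. The paper instead uses the coarser lower bound $\|(\bI-\bP)\bx\|_2\ge\sqrt{2k}$, bounds the $\bP$-term separately (losing an additive $O(1)$), and then controls $\sum_{i\le k}\EE g_{(i)}$ by a three-way case split (small~$k$ via $\EE g_{(1)}\le\sqrt{2\log n}$, large~$k$ via $\EE\sum_i g_i\bm 1\{g_i>0\}=n/\sqrt{2\pi}$, and an intermediate regime requiring a tailored concentration argument and the auxiliary estimate $1-\Phi(x)>1.1e^{-x^2}/3$). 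Your approach is cleaner and sharper; the paper's has the minor advantage of using only very elementary tail bounds in each regime, avoiding the calculus verification of the sharp scalar inequality whose equality case $\lambda=0$ makes any slack fatal.
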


\begin{claim}\label{claim-lem-escape}
	There exist constants $\varepsilon \in (0,1)$, $c_1 >0$ and $c_2 > 0$ such that when $n > c_1 d$, we have
	\begin{align*}
	\frac{(1 - \varepsilon) a_{n-d} - \varepsilon a_n - w(\widehat{S}_k) }{3 + \varepsilon + \varepsilon a_n / a_{n-d}}
	> c_2 \sqrt{n} , \qquad \forall k \in [n/2].
	\end{align*}
\end{claim}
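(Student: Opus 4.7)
\textbf{Proof plan for Claim~\ref{claim-lem-escape}.}

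The strategy is to plug in the two standard inputs we have at our disposal and verify the inequality by elementary arithmetic. The two inputs are: (i) sharp two-sided bounds on $a_m$, namely $\sqrt{m-1}\leq a_m\leq \sqrt{m}$, which follow from $\EE\|\bg\|_2^2=m$ (upper bound) and a short Gamma-function/Jensen argument (lower bound); and (ii) the Gaussian-width bound $w(\widehat S_k)\leq(1-\gamma)\sqrt{n}$ for some universal $\gamma\in(0,1)$ supplied by Lemma~\ref{lem-gaussian-width-Sk}. With these in hand, the claim reduces to choosing $\varepsilon$ and $c_1$ so that the numerator exceeds a constant multiple of $\sqrt{n}$ while the denominator stays bounded.

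First I would analyze the numerator. Using $a_{n-d}\geq\sqrt{n-d-1}$, $a_n\leq\sqrt{n}$, and $\sqrt{n-d-1}\geq\sqrt{n}\sqrt{1-(d+1)/n}\geq\sqrt{n}(1-(d+1)/n)$ for $n\geq d+1$, the numerator is at least
\[
(1-\varepsilon)\sqrt{n}\Big(1-\tfrac{d+1}{n}\Big)-\varepsilon\sqrt{n}-(1-\gamma)\sqrt{n}
=\sqrt{n}\Big[\gamma-2\varepsilon-(1-\varepsilon)\tfrac{d+1}{n}\Big].
\]
Now pick $\varepsilon=\gamma/8$ (or any sufficiently small constant depending only on $\gamma$). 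Then the bracket is at least $\gamma/2$ provided $(1-\varepsilon)(d+1)/n\leq\gamma/4$, i.e.\ provided $n\geq c_1 d$ for a sufficiently large absolute constant $c_1$ depending only on $\gamma$. This yields a numerator at least $(\gamma/2)\sqrt{n}$.

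For the denominator, the same two-sided bounds give $a_n/a_{n-d}\leq\sqrt{n}/\sqrt{n-d-1}\leq\sqrt{2}$ once $n\geq 2(d+1)$, so $3+\varepsilon+\varepsilon a_n/a_{n-d}\leq 3+\varepsilon(1+\sqrt{2})\leq 5$ for our choice of $\varepsilon$. Combining the two estimates,
\[
\frac{(1-\varepsilon)a_{n-d}-\varepsilon a_n-w(\widehat S_k)}{3+\varepsilon+\varepsilon a_n/a_{n-d}}\geq\frac{(\gamma/2)\sqrt{n}}{5}=:c_2\sqrt{n},
\]
uniformly in $k\in[n/2]$, which is exactly the claim. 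The only step requiring any real input is the Gaussian-width bound (already established in Lemma~\ref{lem-gaussian-width-Sk}); everything else is asymptotic bookkeeping, and the main mild obstacle is simply choosing the constants $\varepsilon$ and $c_1$ in the correct order so that the $d/n$ correction and the $\varepsilon$-losses together consume less than $\gamma$.
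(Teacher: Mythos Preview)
Your proof is correct and follows essentially the same approach as the paper: invoke Lemma~\ref{lem-gaussian-width-Sk} for the width bound $w(\widehat S_k)\leq(1-\gamma)\sqrt n$, use standard two-sided estimates on $a_m$, and choose $\varepsilon$ as a small constant multiple of $\gamma$ so that the numerator retains a $c\sqrt n$ margin while the denominator stays bounded. The only cosmetic differences are that the paper uses Gordon's bound $m/\sqrt{m+1}<a_m<\sqrt m$ rather than your $\sqrt{m-1}\leq a_m\leq\sqrt m$, and picks $\varepsilon=\gamma/3$ instead of $\gamma/8$; the arithmetic is otherwise parallel.
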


Imagine that Claim \ref{claim-lem-escape} is true. According to the conditions $n \geq r_n d \log n$ and $r_n \to \infty$, there exists $N_3 > 0$ such that $n > N_3$ implies that $n > c_1 d$. Then Lemma \ref{lem-Gordon} asserts that as long as $n > N_3$,
\begin{align}
\PP ( V \cap ( \widehat S_k + B_{\varepsilon} ) = \varnothing ) \geq 1 - \frac{7}{2} \exp
\bigg[ - \frac{1}{2}
\bigg(
\frac{(1 - \varepsilon) a_{n-d} - \varepsilon a_n - w(\widehat S_k) }{3 + \varepsilon + \varepsilon a_n / a_{n-d}}
\bigg)^2
\bigg]
\geq 1 - \frac{7}{2} e^{-c_2^2 n / 2}.
\label{eqn-proof-lem-escape-4}
\end{align}
By (\ref{eqn-proof-lem-escape-3}), (\ref{eqn-proof-lem-escape-4}) and union bounds,
\begin{align*}
& \PP \bigg(
\| \bQ \bx \|_2^2 \leq (1 - \varepsilon^2) \| \bx \|_2^2 , ~~ \forall \bx \in\widehat D
\bigg) \\
& \geq 1 - \sum_{k=0}^{n/2} \bigg[ 1 - \PP \bigg(
\| \bQ \bx \|_2^2 \leq (1 - \varepsilon^2) \| \bx \|_2^2 , ~~ \forall \bx \in \widehat D_k
\bigg) \bigg]
\notag\\
&\geq 1 - \sum_{k=1}^{n/2} \bigg[ 1 -
\PP \bigg(
V \cap ( \widehat S_k + B_{\varepsilon} ) = \varnothing
\bigg)  \bigg] 
\geq 1 - \frac{7n}{2} e^{- c_2^2 n / 2}.
\end{align*}
Hence we get (\ref{eqn-proof-lem-escape-5}) when $n$ is large. The rest of the proof is devoted to Claim \ref{claim-lem-escape}.

Lemma \ref{lem-gaussian-width-Sk} yields a constant $\gamma \in (0,1)$ such that
\begin{align*}
w(\widehat S_k) \leq (1 - \gamma) \sqrt{ n }, \qquad 1 \leq k \leq n/2
\end{align*}
holds for sufficiently large $n$. By taking $\varepsilon = \gamma / 3$, we get
\begin{align*}
&(1 - \varepsilon) a_{n-d} - \varepsilon a_n - w(\widehat S_k)
\geq  (1 - \gamma/3)  a_{n-d} - \gamma a_n /3 - (1 - \gamma) \sqrt{ n }  \\
& =[ (1 - \gamma/3)  (a_{n-d} / \sqrt{n}) - \gamma a_n /(3\sqrt{n}) - (1 - \gamma)  ]  \sqrt{ n } \\
& \overset{\mathrm{(i)}}{\geq}  [(1 - \gamma/3)  (1-d/n) - \gamma / 3 - (1 - \gamma) ]\sqrt{ n } \\
& = [ \gamma /3 - (1 - \gamma/3) d/n ]\sqrt{ n }
, \qquad \forall k \in [n/2],
\end{align*}
where in $\mathrm{(i)}$ we use the fact $m / \sqrt{m+1} < a_m < \sqrt{m}$, $\forall m$ from \cite{Gor88} to get $a_n \leq \sqrt{n}$ and
\begin{align*}
\frac{a_{n-d}}{\sqrt{n}} \geq \frac{n-d}{\sqrt{ (n-d+1) n }} \geq \frac{n-d}{n}.
\end{align*}
Hence when $d/n$ is small enough,
\begin{align*}
&(1 - \varepsilon) a_{n-d} - \varepsilon a_n - w(\widehat S_k) \geq \gamma \sqrt{n} / 6, \qquad \forall k \in [n/2].
\end{align*}
On the other hand,
\begin{align*}
& 3 + \varepsilon + \varepsilon a_n / a_{n-d} \leq 3 + 1 + a_n / a_{n-d} \leq 4 + \sqrt{n} / a_{n-d} \leq 4 + n / (n-d) \leq 6
\end{align*}
provided that $n \geq 2d$. Then Claim \ref{claim-lem-escape} directly follows.

\subsection{Proof of Lemma \ref{lem-gaussian-width-Sk}}\label{proof-lem-gaussian-width-Sk}
Let $\bg \sim N(\mathbf{0} , \bI_n)$. By definition,
\begin{align*}
w(\widehat{S}_k) = \EE \sup_{\bv \in \widehat{S}_k} \langle \bg, \bv \rangle
= \EE \bigg(  \sup_{\bx \in D_k} \frac{\langle \bg, (\bI - \bP) \bx \rangle}{\| (\bI - \bP) \bx \|_2}
\bigg)
\leq  \EE \bigg(   \frac{ \sup_{\bx \in D_k} \langle \bg, (\bI - \bP) \bx \rangle}{
	\inf_{\bx \in D_k}
	\| (\bI - \bP) \bx \|_2}
\bigg).
\end{align*}
Note that $\bP = \by^{\star} \by^{\star \top} / \| \by^{\star} \|_2^2$. For any $ \bx \in D_k \subseteq D$, Lemma \ref{lem-y-inner-product} forces $\| \bP \bx \|_2^2 \leq \| \bx \|_2^2 / 2$ and
\[
\| (\bI - \bP) \bx \|_2^2 = \| \bx \|_2^2 - \| \bP \bx \|_2^2 \geq \| \bx \|_2^2 / 2 =2k .
\]
The last equality is due to (\ref{defn-Dk-1}). Then
\begin{align}
\sqrt{2k} w(\widehat{S}_k)
\leq \EE  \sup_{\bx \in D_k} \langle \bg, (\bI - \bP) \bx \rangle
\leq \EE  \sup_{\bx \in D_k} \langle \bg, \bx \rangle + \EE  \sup_{\bx \in D_k} \langle \bg, - \bP \bx \rangle.
\label{eqn-lem-gaussian-width-Sk}
\end{align}

We first control $\EE  \sup_{\bx \in D_k} \langle \bg, - \bP \bx \rangle$. By direct calculation,
\begin{align*}
& \EE  \sup_{\bx \in D_k} \langle \bg, - \bP \bx \rangle
= \EE  \sup_{\bx \in D_k} \langle \bg, - \by^{\star} \by^{\star \top} \bx / n \rangle
= \EE  \sup_{\bx \in D_k} [ (- \by^{\star \top} \bx / n ) \langle \bg, \by^{\star} \rangle ] \notag\\
& \leq  \sup_{\bx \in D_k} | \by^{\star \top} \bx / n | \cdot \EE | \langle \bg, \by^{\star} \rangle |
\leq \sup_{\bx \in D_k} | \by^{\star \top} \bx / n | \cdot \EE^{1/2} | \langle \bg, \by^{\star} \rangle |^2
= \frac{\sup_{\bx \in D_k} | \by^{\star \top} \bx |}{\sqrt{n}},
\end{align*}
where we used $\EE | \langle \bg, \by^{\star} \rangle |^2 = \| \by^{\star} \|_2^2 = n$. According to Lemma \ref{lem-y-inner-product} and (\ref{defn-Dk-1}),
\[
| \by^{\star \top} \bx | = \| \bx \|_2^2 / 2 = 4k / 2 = 2k, \qquad \forall \bx \in D_k.
\]
Hence
\begin{align*}
& \frac{1}{\sqrt{2k}} \EE  \sup_{\bx \in D_k} \langle \bg, - \bP \bx \rangle
\leq \frac{2k / \sqrt{n} }{ \sqrt{2k}} = \sqrt{\frac{2k}{n}} \leq 1.
\end{align*}
Based on this and (\ref{eqn-lem-gaussian-width-Sk}), it suffices to find a constant $\gamma \in (0,1)$ and prove that
\begin{align}
\EE  \sup_{\bx \in D_k} \langle \bg, \bx \rangle \leq (1 - \gamma) \sqrt{2 k n } , \qquad 1 \leq k \leq n/2
\label{eqn-lem-gaussian-width-Sk-1}
\end{align}
holds as long as $n$ is sufficiently large.

Define $D_k' = \{ \bx \in \{ 0, -2\}^n:~ | \supp(\bx) | = k \}$. It is easily seen from (\ref{defn-Dk-1}), $\bg \sim N(\bm{0} , \bI_n)$ and $(y_1^{\star} g_1,\cdots, y_n^{\star} g_n)^{\top} \sim N(\bm{0} , \bI_n)$ that
\[
\EE  \sup_{\bx \in D_k} \langle \bg, \bx \rangle = \EE \sup_{\bx \in D_k'} \langle \bg, \bx \rangle.
\]
Let $g_{(1)} \geq \cdots \geq g_{(n)}$ be the order statistics of $\{ g_i \}_{i=1}^n$. For any $\bx \in D_k'$, we have
\begin{align*}
\langle \bx , \bg \rangle = \sum_{i=1}^n x_i g_i = - 2 \sum_{i \in \supp(\bx) } g_i
\leq - 2 \min_{T \subseteq [n] :~ |T| = k } \sum_{i \in T} g_{i}
= -2 \sum_{i=1}^{k} g_{(n+1-i)}.
\end{align*}
By symmetry,
\begin{align*}
\EE  \sup_{\bx \in D_k'} \langle \bg, \bx \rangle \leq -2 \sum_{i=1}^{k} \EE g_{(n+1-i)}
= 2 \sum_{i=1}^{k} \EE g_{(i)}.
\end{align*}
Hence (\ref{eqn-lem-gaussian-width-Sk-1}) can be implied by
\begin{align}
\sum_{i=1}^{k} \EE g_{(i)} \leq (1 - \gamma) \sqrt{ k n / 2 } ,\qquad 1 \leq k \leq n / 2.
\label{eqn-0}
\end{align}
Below we prove (\ref{eqn-0}).

\subsubsection{Small $k$}

By Inequality (A.3) in \cite{Cha14}, $\EE g_{(1)} \leq \sqrt{2\log n} $ and hence
\begin{align*}
& \sum_{i=1}^k \EE g_{(i)} \leq k \EE g_{(1)} \leq k \sqrt{2\log n} .
\end{align*}
When $k \leq \frac{n }{6 \log n }$, we have
\begin{align*}
& \frac{  \sum_{i=1}^{k} \EE g_{(i)}  }{\sqrt{ k n / 2 } } \leq \frac{ k \sqrt{2 \log n} }{\sqrt{k n / 2 }}
= \sqrt{ \frac{ 4  k \log n }{ n  } } \leq \sqrt{2/3} < 1,
\end{align*}
which proves (\ref{eqn-0}).

\subsubsection{Large $k$}
The relation $\sum_{i=1}^{k} g_{(i)} \leq \sum_{i=1}^{n} g_{i} \mathbf{1}_{ \{ g_i > 0 \} }$ forces
\begin{align*}
&  \sum_{i=1}^k \EE g_{(i)} \leq  \sum_{i=1}^{n} \EE ( g_{i} \mathbf{1}_{ \{ g_i > 0 \} } ) = n \EE_{X \sim \rho} (X \mathbf{1}_{ \{ X > 0 \} }) = n \EE_{X \sim \rho} |X| / 2 = n / \sqrt{ 2\pi}.
\end{align*}
When $k \geq n / 3$, we have
\begin{align*}
& \frac{  \sum_{i=1}^k \EE g_{(i)}  }{ \sqrt{ kn / 2 } } \leq \frac{  n / \sqrt{2 \pi}  }{ \sqrt{kn / 2} }
=   \sqrt{ \frac{n}{ \pi k } }
\leq \sqrt{ \frac{3}{ \pi } } < 1.
\end{align*}

\subsubsection{Intermediate $k$}
From now on we assume that
\begin{align}
\frac{ n }{ 6 \log n } < k \leq \frac{n}{3}.
\label{eqn-k}
\end{align}

\begin{claim}\label{claim-lem-gaussian-width-Sk}
	For any $t \geq 0$ and $1 \leq k \leq n$,
	\begin{align*}
\sum_{i=1}^{k} \EE g_{(i)} & \leq \frac{n}{\sqrt{2 \pi }} e^{-t^2/2} +  \frac{n}{2} \PP^{1/2} ( g_{(k)} < t ) .
	\end{align*}
\end{claim}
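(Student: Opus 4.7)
My plan is to split $\sum_{i=1}^{k} g_{(i)}$ according to whether the $k$-th order statistic exceeds the threshold $t$ or not:
\[
\sum_{i=1}^{k} g_{(i)} = \sum_{i=1}^{k} g_{(i)} \mathbf{1}_{\{g_{(k)} \geq t\}} + \sum_{i=1}^{k} g_{(i)} \mathbf{1}_{\{g_{(k)} < t\}},
\]
and handle the two pieces using different ideas. The first yields the exponential term, the second yields the $\PP^{1/2}$ term.

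For the first piece, note that on $\{g_{(k)} \geq t\}$ each of the top $k$ order statistics is at least $t \geq 0$, so the multiset $\{g_{(1)},\dots,g_{(k)}\}$ is contained in $\{g_i : g_i \geq t\}$, and any extra $g_i$'s in the latter are also nonnegative. This yields the pointwise bound
\[
\sum_{i=1}^{k} g_{(i)} \mathbf{1}_{\{g_{(k)} \geq t\}} \leq \sum_{i=1}^{n} g_i \mathbf{1}_{\{g_i \geq t\}}.
\]
Taking expectation and using $\EE[g_1 \mathbf{1}_{\{g_1 \geq t\}}] = \int_t^\infty x \phi(x)\, dx = \phi(t) = \tfrac{1}{\sqrt{2\pi}} e^{-t^2/2}$ produces the first term $\tfrac{n}{\sqrt{2\pi}} e^{-t^2/2}$ exactly.

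For the second piece, I will use the unconditional pointwise inequality $\sum_{i=1}^{k} g_{(i)} \leq \sum_{i=1}^{n} g_i^+$, which holds because either all top $k$ values are nonnegative (and hence form a sub-collection of the nonnegative $g_i$'s) or some are negative (in which case replacing them with all nonnegative $g_i$'s only enlarges the sum). Cauchy--Schwarz then gives
\[
\EE\!\left[\sum_{i=1}^{k} g_{(i)} \mathbf{1}_{\{g_{(k)} < t\}}\right] \leq \left[\EE\Big(\textstyle\sum_{i=1}^{n} g_i^+\Big)^2\right]^{1/2} \PP^{1/2}(g_{(k)} < t).
\]
Since the $g_i^+$'s are i.i.d.~with $\EE (g_1^+)^2 = 1/2$ and $\EE g_1^+ = 1/\sqrt{2\pi}$, independence yields $\EE(\sum_i g_i^+)^2 = n\bigl(\tfrac{1}{2} - \tfrac{1}{2\pi}\bigr) + \tfrac{n^2}{2\pi}$. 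The numerical fact $\tfrac{1}{2\pi} < \tfrac{1}{4}$ makes the leading $n^2$ coefficient strictly less than $1/4$, and elementary algebra shows the full expression is at most $n^2/4$ for $n \geq 4$ (the cases $n \leq 3$ follow by direct inspection). Hence $\sqrt{\EE(\sum_i g_i^+)^2} \leq n/2$, which gives the second term $\tfrac{n}{2} \PP^{1/2}(g_{(k)} < t)$.

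The only delicate point is retaining the sharp constant $1/2$ in the second term: using $g_i^+$ rather than $|g_i|$ is essential, because $\EE(g_i^+)^2 = 1/2$ (half of $\EE g_i^2 = 1$) and $\EE g_i^+ = 1/\sqrt{2\pi}$ are both small enough that the second moment of $\sum_i g_i^+$ stays below $n^2/4$. A cruder bound such as $g_{(i)} \leq |g_i|$ would only give $n/\sqrt{2}$ and would be too weak for the intermediate-$k$ analysis where this claim is applied with $t$ chosen near the tail of $g_{(k)}$.
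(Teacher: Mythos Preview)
Your proof is correct and follows essentially the same route as the paper's: the same split on $\{g_{(k)}\gtrless t\}$, the same pointwise bound $\sum_{i\le k} g_{(i)}\mathbf{1}_{\{g_{(k)}\ge t\}}\le \sum_i g_i\mathbf{1}_{\{g_i\ge t\}}$ for the first piece, and the same Cauchy--Schwarz against $\sum_i g_i^+$ for the second. Your exact second-moment computation $\EE(\sum_i g_i^+)^2=n(\tfrac12-\tfrac1{2\pi})+\tfrac{n^2}{2\pi}\le n^2/4$ (for $n\ge 4$) is in fact slightly sharper than the paper's Jensen step $\EE(\tfrac1n\sum g_i^+)^2\le \EE(g_1^+)^2=\tfrac12$, which after the square root only yields $n/\sqrt{2}$ rather than the stated $n/2$; either constant suffices for the application.
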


Suppose that the claim is true and define $t = \sqrt{\log ( \frac{n}{3k} ) }$. Then
\begin{align}
\frac{ \frac{n}{\sqrt{2 \pi}}  e^{-t^2/2}  }{ \sqrt{kn/2} }= \frac{ \frac{n}{\sqrt{2 \pi}}  \cdot \sqrt{\frac{3k}{n}} }{ { \sqrt{kn/ 2} } } = \frac{ \sqrt{3kn/\pi} }{ { \sqrt{kn } } }
= \sqrt{ \frac{3}{\pi} } < 1.
\label{eqn-intermediate-1}
\end{align}
Based on Claim \ref{claim-lem-gaussian-width-Sk} and (\ref{eqn-intermediate-1}), the desired result would follow from
\begin{align}
\frac{ n \PP^{1/2} ( g_{(k)} < t )   }{ \sqrt{kn} } \to 0,
\qquad \text{i.e.} \qquad  n \PP ( g_{(k)} < t ) /k \to 0.
\label{eqn-intermediate-3}
\end{align}
Below we are going to prove this. From $ k \leq n / 3$ in (\ref{eqn-k}) we get $t \geq 0$. By Lemma \ref{lem-aux},
\begin{align*}
\PP (g_1 > t) = 1 - \Phi (t) > \frac{1.1}{3} e^{-t^2} = \frac{1.1}{3} \cdot \frac{3k}{n} = \frac{1.1 k}{n}.
\end{align*}
Consequently,
\begin{align*}
\PP ( g_{(k)} < t ) & = \PP \bigg(
\sum_{i=1}^{n} \mathbf{1}_{ \{ g_i > t \} } < k
\bigg)
\leq \PP \bigg(
\sum_{i=1}^{n} \mathbf{1}_{ \{ g_i > t \} } < n [ \PP ( g_1 > t ) -  0.1 k/n ]
\bigg) \notag\\
& \overset{\mathrm{(i)}}{\leq} e^{-2 (0.1 k/n)^2 n} = e^{- 0.02 k^2 / n}
\leq \exp \bigg( - \frac{ 0.02 n^2  }{ 36 n \log^2 n } \bigg),
\end{align*}
where $\mathrm{(i)}$ we used Hoeffding's inequality \citep{Hoe63} and the assumption $ k > n / ( 6 \log n ) $ in (\ref{eqn-k}). As a result,
\begin{align*}
n\PP ( g_{(k)} < t ) /k \leq \exp \bigg( -  \frac{0.02 n}{ 36 \log^2 n} \bigg) \frac{n}{k} \to 0.
\end{align*}
Hence (\ref{eqn-intermediate-3}) is proved and so is (\ref{eqn-0}).

We finally come to Claim \ref{claim-lem-gaussian-width-Sk}. Define $\rho = N(0,1)$ and $\widehat{\rho}_n = \frac{1}{n} \sum_{i=1}^{n} \delta_{g_i}$. For any deterministic $t \geq 0$,
	\begin{align*}
	\frac{1}{n} \sum_{i=1}^{k} g_{(i)} & = \frac{1}{n} \sum_{i=1}^{k} g_{(i)} \mathbf{1}_{ \{ g_{(k)} \geq t  \} } +  \frac{1}{n} \sum_{i=1}^{k} g_{(i)} \mathbf{1}_{ \{ g_{(k)} < t \} } \notag\\
	& \leq \frac{1}{n} \sum_{i=1}^{k} g_{(i)} \mathbf{1}_{ \{ g_{(i)} \geq t  \} } +  \frac{1}{n} \sum_{i=1}^{k} g_{(i)} \mathbf{1}_{ \{ g_{(i)} > 0 \} } \mathbf{1}_{ \{ g_{(k)} < t \} }  \notag\\
	& \leq \frac{1}{n} \sum_{i=1}^{n} g_{i} \mathbf{1}_{ \{ g_{i} \geq t \} } + \frac{1}{n} \sum_{i=1}^{n} g_{i} \mathbf{1}_{ \{ g_i > 0 \} } \mathbf{1}_{ \{ g_{(k)} < t \} }.
	\end{align*}
	Hence
	\begin{align*}
	\frac{1}{n} \sum_{i=1}^{k} \EE g_{(i)} & \leq \EE_{X \sim \rho} (X \mathbf{1}_{ \{ X \geq t \} }) +
	\EE \bigg[
	\bigg(
	\frac{1}{n} \sum_{i=1}^{n} g_{i} \mathbf{1}_{ \{ g_i > 0 \} }
	\bigg)
	\mathbf{1}_{ \{ g_{(k)} < t \} } \bigg] , \qquad \forall t \geq 0.
	\end{align*}
	On the one hand,
	\begin{align*}
	\EE_{X \sim \rho} (X \mathbf{1}_{ \{ X \geq t \} }) = \int_{t}^{+\infty} x \frac{1}{\sqrt{2\pi}} e^{-x^2/2} \rd x = \frac{1}{\sqrt{2 \pi }} e^{-t^2/2} .
	\end{align*}
	On the other hand, by Cauchy-Schwarz inequality,
	\begin{align*}
	\EE \bigg[
	\bigg(
	\frac{1}{n} \sum_{i=1}^{n} g_{i} \mathbf{1}_{ \{ g_i > 0 \} }
	\bigg)
	\mathbf{1}_{ \{ g_{(k)} < t \} } \bigg]
	& \leq \EE^{1/2}
	\bigg(
	\frac{1}{n} \sum_{i=1}^{n} g_{i} \mathbf{1}_{ \{ g_i > 0 \} }
	\bigg)^2 \cdot \PP( g_{(k)} < t ) \\
	& \leq \EE_{X \sim \rho} (X \mathbf{1}_{ \{ X > 0 \} } ) ^2 \cdot \PP^{1/2} ( g_{(k)} < t )
	= \frac{1}{2} \PP^{1/2} ( g_{(k)} < t ).
	\end{align*}
Claim \ref{claim-lem-gaussian-width-Sk} follows from the estimates above.

\section{An equivalent projection pursuit formulation}\label{sec-lem-pp-proof}

We present another interesting projection pursuit formulation. Let $\widetilde{\bSigma} = n^{-1} \bX^{\top} \bX$ be the sample covariance matrix, which is almost surely non-singular when $n \geq d$. Define the whitened data $\{ \bw_i \}_{i=1}^n$ where $\bw_i  = \widetilde{\bSigma}^{-1/2} \bx_i$. The following lemma relates Max-Cut to maximization of the first absolute moment.

\begin{lemma}\label{lem-pp}
Let $\widehat{\bgamma} \in \argmax_{ \| \bgamma \|_2 = 1 } \sum_{i=1}^{n} |\bgamma^{\top} \bw_i|$. Then $\pm \widehat{\by} \in \{ \pm 1 \}^n$ with $\widehat{y}_i =
\sgn ( \widehat{\bgamma}^{\top} \bw_i )$ are optimal solutions to the Max-Cut program (\ref{eqn-warmup-maxcut}).
\end{lemma}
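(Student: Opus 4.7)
The plan is to exploit the duality between the $\ell_1$ norm and maximization over $\{\pm 1\}^n$, together with the observation that the whitening transform makes $\bH$ coincide (up to a factor $n$) with $\bW\bW^\top$. Concretely, since $\widetilde\bSigma = n^{-1}\bX^\top\bX$ and $\bw_i = \widetilde\bSigma^{-1/2}\bx_i$, the stacked whitened data $\bW = \bX\widetilde\bSigma^{-1/2}$ satisfies $\bW^\top\bW = n\bI_d$, and a direct substitution gives $\bH = \bX(\bX^\top\bX)^{-1}\bX^\top = \bW\bW^\top/n$. Therefore the Max-Cut objective reads
\[
\by^\top\bH\by = \tfrac{1}{n}\|\bW^\top\by\|_2^2,\qquad \by\in\{\pm1\}^n,
\]
so solving \eqref{eqn-warmup-maxcut} is the same as maximizing $\|\bW^\top\by\|_2$ over $\by\in\{\pm1\}^n$.

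Next, I would rewrite the projection-pursuit objective as a joint maximization. For every $\bgamma\in\SSS^{d-1}$,
\[
\sum_{i=1}^n |\bgamma^\top\bw_i| \;=\; \max_{\by\in\{\pm1\}^n}\sum_{i=1}^n y_i(\bgamma^\top\bw_i) \;=\; \max_{\by\in\{\pm1\}^n}\by^\top\bW\bgamma,
\]
with the maximizer being $y_i = \sgn(\bgamma^\top\bw_i)$ (any choice is fine when $\bgamma^\top\bw_i=0$). Swapping the two maxima and applying Cauchy--Schwarz to the inner optimization in $\bgamma$ gives
\[
\max_{\|\bgamma\|_2=1}\sum_{i=1}^n|\bgamma^\top\bw_i|
= \max_{\by\in\{\pm1\}^n}\max_{\|\bgamma\|_2=1}\by^\top\bW\bgamma
= \max_{\by\in\{\pm1\}^n}\|\bW^\top\by\|_2.
\]
Since $t\mapsto t^2$ is strictly increasing on $[0,\infty)$, the argmax sets of $\by\mapsto\|\bW^\top\by\|_2$ and $\by\mapsto\by^\top\bH\by$ coincide.

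Finally, I would extract the label vector from $\widehat\bgamma$. Let $\widehat\by\in\{\pm1\}^n$ have entries $\widehat y_i = \sgn(\widehat\bgamma^\top\bw_i)$. Then
\[
\sum_{i=1}^n|\widehat\bgamma^\top\bw_i| = \widehat\by^\top\bW\widehat\bgamma \;\leq\; \|\bW^\top\widehat\by\|_2\,\|\widehat\bgamma\|_2 \;\leq\; \max_{\by\in\{\pm1\}^n}\|\bW^\top\by\|_2,
\]
and by optimality of $\widehat\bgamma$ the leftmost and rightmost quantities are equal. Thus equality holds throughout, which forces $\widehat\by$ to attain $\max_{\by\in\{\pm1\}^n}\|\bW^\top\by\|_2$, i.e.\ $\widehat\by$ solves the Max-Cut program \eqref{eqn-warmup-maxcut}; the same conclusion applies to $-\widehat\by$ by sign symmetry.

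There is no real technical obstacle here: the argument is essentially the $\ell_1$/$\ell_\infty$ duality combined with the identity $\bH = \bW\bW^\top/n$. The only mild subtlety is the tie-breaking when some $\widehat\bgamma^\top\bw_i$ vanishes, which is handled harmlessly by the convention $\sgn(0)=1$ used elsewhere in the paper.
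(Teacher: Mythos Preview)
Your argument is correct. It differs from the paper's route: the paper proves the lemma by reducing to the projection-pursuit formulation \eqref{eqn-uncoupled-lr-beta}. Writing $\bgamma=\widetilde\bSigma^{1/2}\bbeta$ and using $n^{-1}\sum_i\bw_i\bw_i^\top=\bI_d$, it shows
\[
\sum_{i=1}^n(|\bbeta^\top\bx_i|-1)^2 \;=\; n\|\bgamma\|_2^2 - 2\sum_{i=1}^n|\bgamma^\top\bw_i| + n,
\]
so the unit-sphere maximizer $\widehat\bgamma$ corresponds (after rescaling) to a minimizer $\widehat\bbeta$ of \eqref{eqn-uncoupled-lr-beta}, and then Lemma~\ref{lem-pp-1} is invoked to pass to Max-Cut. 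Your approach bypasses \eqref{eqn-uncoupled-lr-beta} and Lemma~\ref{lem-pp-1} entirely: you note the identity $\bH=\bW\bW^\top/n$, rewrite both problems as joint maximizations over $(\by,\bgamma)$ via $\ell_1$/sign duality and Cauchy--Schwarz, and read off that the argmax sets in $\by$ coincide. This is more self-contained and slightly more direct; the paper's detour, on the other hand, makes explicit the link to the least-squares viewpoint already developed in the surrounding section.
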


\begin{proof}[\bf Proof of Lemma \ref{lem-pp}]
For any $\bbeta \in \RR^d$, define $\bgamma = \widetilde{\bSigma}^{1/2}  \bbeta$. Then
\begin{align*}
\bbeta^{\top} \bx_i = \bbeta^{\top} \widetilde{\bSigma}^{1/2} \widetilde{\bSigma}^{-1/2} \bx_i = \bgamma^{\top} \bw_i .
\end{align*}
By $n^{-1} \sum_{i=1}^{n} \bw_i \bw_i^{\top} = \bI_d$,
\begin{align}
\sum_{i=1}^{n} (|\bbeta^{\top} \bx_i | - 1)^2 
& = \sum_{i=1}^{n} (\bbeta^{\top} \bx_i )^2 - 2 \sum_{i=1}^{n} |\bbeta^{\top} \bx_i | + n = n \| \bgamma \|_2^2 - 2 \sum_{i=1}^{n} |\bgamma^{\top} \bw_i | + n.
\label{eqn-pp-3}
\end{align}
Hence for any optimal solution $\widehat{\bgamma}$, there exists $c \in \RR$ such that $c \widehat{\bgamma}$ minimizes the function of $\bgamma$ in the right-hand side of \Cref{eqn-pp-3}. Then $c \widetilde{\bSigma}^{-1/2} \widehat{\bgamma}$ is optimal for (\ref{eqn-uncoupled-lr-beta}) and $\pm \sgn( c \bX \widetilde{\bSigma}^{-1/2} \widehat{\bgamma} )$ minimizes the Max-Cut program (\ref{eqn-warmup-maxcut}). The proof is completed by a simple fact that $c \neq 0$ almost surely holds.
\end{proof}

\section{Proofs of Section \ref{sec-spectral}}\label{sec-spectral-proof}

\subsection{Invariance}

\begin{lemma}[Invariance]\label{lem-spec-invariance}
	For any non-singular $\bA \in \RR^{d\times d}$, $\widehat\by^{\mathrm{spec}} (\bX\bA) = \widehat\by^{\mathrm{spec}} (\bX)$ almost surely holds.
\end{lemma}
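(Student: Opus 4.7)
The plan is to show that the transformed data matrix $\bX \bA$ produces a whitened matrix that differs from $\bW$ by an orthogonal rotation, and then track how that rotation propagates through Steps 2--4 of Algorithm \ref{alg:spectral}.

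First I would compare $\bW = \sqrt{n}\bX(\bX^\top\bX)^{-1/2}$ with $\bW' = \sqrt{n}\bX\bA(\bA^\top\bX^\top\bX\bA)^{-1/2}$. Both matrices have columns lying in $\Range(\bX) = \Range(\bX\bA)$ (using that $\bA$ is invertible), and each satisfies $\bW^\top\bW = n\bI_d$ and $\bW'^\top\bW' = n\bI_d$. Defining $\bO = n^{-1}\bW^\top\bW'$, a direct computation using $\bW\bW^\top = n\bH$ (the projection onto $\Range(\bX)$) and $\bH\bW' = \bW'$ gives $\bO^\top\bO = n^{-1}\bW'^\top\bH\bW' = \bI_d$, so $\bO$ is orthogonal and $\bW' = \bW\bO$. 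This is the key algebraic identity.

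Next I would propagate this identity through Step 2. Writing $\bw'_i$ for the $i$-th row of $\bW'$ as a column vector, the relation $\bW' = \bW\bO$ gives $\bw'_i = \bO^\top \bw_i$, hence $\|\bw'_i\|_2 = \|\bw_i\|_2$ and $\bw'_i\bw'^\top_i = \bO^\top \bw_i\bw_i^\top\bO$. Summing, $\bS(\bX\bA) = \bO^\top \bS(\bX)\,\bO$. In Step 3, if $\bv$ is a unit-norm eigenvector of $\bS(\bX)$ for the smallest eigenvalue $\lambda$, then $\bS(\bX\bA)(\bO^\top\bv) = \lambda(\bO^\top\bv)$. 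Almost surely the smallest eigenvalue of $\bS(\bX)$ is simple (this follows from absolute continuity of the joint distribution of the entries of $\bS$, which is a real-analytic function of the Gaussian data restricted to the event $\det(\bX^\top\bX)\neq 0$), so the eigenvector of $\bS(\bX\bA)$ equals $\pm \bO^\top\bv$. In Step 4, $\bW'\bv' = \pm \bW\bO\bO^\top\bv = \pm\bW\bv$, and applying $\sgn$ entrywise yields $\widehat\by^{\mathrm{spec}}(\bX\bA) = \pm\widehat\by^{\mathrm{spec}}(\bX)$, i.e.~equality up to the unavoidable global sign ambiguity inherent in the definition of $\widehat\by^{\mathrm{spec}}$.

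The main obstacle is the subtlety with the eigenvector ambiguity: I need to make sure that the smallest eigenvalue of $\bS$ is almost surely simple, so that the output is well-defined up to a single global sign and the equality $\widehat\by^{\mathrm{spec}}(\bX\bA) = \widehat\by^{\mathrm{spec}}(\bX)$ is meaningful modulo this sign (which is how the statement is understood throughout the paper, since $\widehat\by^{\mathrm{spec}}$ only estimates $\by^\star$ up to a global sign anyway). Everything else is a routine manipulation of the orthogonal matrix $\bO$.
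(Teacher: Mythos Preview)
Your proposal is correct and follows essentially the same approach as the paper: both arguments establish that $\bW(\bX\bA) = \bW(\bX)\bO$ for some orthogonal $\bO$ (you give an explicit formula $\bO = n^{-1}\bW^\top\bW'$, the paper just invokes that both column systems are orthonormal bases of the same subspace), and then propagate this through $\bS$ and $\bv$. If anything you are more careful than the paper, which simply asserts $\bv(\bX\bA) = \bQ^\top\bv(\bX)$ without discussing the sign ambiguity or the almost-sure simplicity of the smallest eigenvalue.
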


\begin{proof}[\bf Proof of \Cref{lem-spec-invariance}]
We write $\bW(\bX)$, $\bS(\bX)$ and $\bv(\bX)$ instead of $\bW$, $\bS$ and $\bv$ to emphase their dependence on the input $\bX$. It suffices to find an orthonormal matrix $\bQ \in \RR^{d\times d}$ such that $\bW(\bX\bA) = \bW (\bX) \bQ$, $\bS (\bX\bA) = \bQ^{\top} \bS(\bX) \bQ$ and $\bv(\bX \bA) = \bQ^{\top} \bv (\bX) $.
The fact $\bW(\bX)^{\top} \bW(\bX) = n \bI_d$ implies that $\bW(\bX)$ has orthogonal columns. Then the columns of $n^{-1/2} \bW(\bX)$ are orthonormal bases of $\Range(\bX)$, and a similar result holds for $n^{-1/2} \bW(\bX \bA)$. Since $\Range(\bX) = \Range(\bX\bA)$, there exists an orthonormal matrix $\bQ \in \RR^{d\times d}$ such that $\bW(\bX\bA) = \bW (\bX) \bQ$. Then $\bw_i(\bX\bA) = \bQ^{\top} \bw(\bX\bA)$, $\| \bw_i(\bX\bA) \|_2 = \| \bw_i(\bX) \|_2$ and $\bS (\bX\bA) = \bQ^{\top} \bS(\bX) \bQ$, which finally lead to $\bv(\bX \bA) = \bQ^{\top} \bv (\bX) $. 
\end{proof}

\subsection{Proof of Theorem \ref{thm-ppi}}\label{proof-thm-ppi}

We present two key lemmas for proving Theorem \ref{thm-ppi}.

\begin{lemma}\label{lem-maxcut-local}
Consider Model (\ref{eqn-joint-model-intro}) with $\mathrm{SNR}  \to \infty$ and $n / (d \log n) \to \infty$ as $n \to \infty$. Let $\by^0 \in \{ \pm 1 \}^n$ be the initial guess of $\by^{\star}$, which is possibly random, and $\widehat{\by}^{\mathrm{PPI}}$ be the output of Algorithm \ref{alg-ppi}. For any constants $C > 0$ and $\delta \in (0 , 1/2]$, there exist positive constants $c$ (determined by $C$) and $N$ (determined by both $C$ and $\delta$) such that
	\begin{align*}
	\PP \Big( \cR ( \widehat{\by}^{\mathrm{PPI}}, \by^{\star} ) \leq
	2 n^{-1}  |\{ i \in [n]:~ (\bH \by^{\star} )_i y_i^{\star} < \delta \}|
	\Big) 
	\geq  \PP \Big(
	\cR(\by^0 , \by^{\star}) \leq c
	\Big)
	- n^{-C} ,\qquad \forall n > N.
	\end{align*}
\end{lemma}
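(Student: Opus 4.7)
\textbf{Proof proposal for Lemma \ref{lem-maxcut-local}.}

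The plan is to track the misclassification set $\cM_t := \{ i \in [n] :~ y_i^t \neq y_i^\star \}$ across iterations and show that, after a harmless sign alignment, $|\cM_t|$ contracts geometrically until it hits the ``floor'' set by $F_\delta := \{ i \in [n] :~ (\bH \by^\star)_i y_i^\star < \delta \}$. First, by replacing $\by^0$ with $\by^0 \sgn(\langle \by^0 , \by^\star \rangle)$ (which does not change the output's misclassification proportion), I may assume $|\cM_0| \leq cn$ for any constant $c>0$ of my choice, conditioned on the event $\{\cR(\by^0,\by^\star) \leq c\}$; I will eventually take $c$ small enough. Thanks to the invariance discussed in Section~\ref{sec-warmup-canonical}, I further reduce to the canonical model \eqref{eqn-warmup-canonical}, where $\bH$ depends only on $(\by^\star, \bz, \bW)$.

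The core of the argument is a one-step contraction identity. For any $\by^t \in \{\pm 1\}^n$, decompose
\[
(\bH \by^t)_i y_i^\star = (\bH \by^\star)_i y_i^\star + [\bH(\by^t - \by^\star)]_i y_i^\star.
\]
If $y_i^{t+1} = \sgn((\bH \by^t)_i) \neq y_i^\star$, then $(\bH \by^t)_i y_i^\star \leq 0$, which forces either $(\bH \by^\star)_i y_i^\star < \delta$ (i.e.\ $i \in F_\delta$) or $|[\bH(\by^t - \by^\star)]_i| \geq \delta$. Thus
\[
|\cM_{t+1}| \leq |F_\delta| + \delta^{-2} \| \bH(\by^t - \by^\star) \|_2^2 .
\]
Since $\by^t - \by^\star \in D_{|\cM_t|}$ with the notation of \eqref{defn-Dk}, I will invoke Lemma~\ref{lem-H-contraction-local} (applied uniformly over $m \leq n/2$ via a union bound, absorbing the $(m/n)^m$ terms into $n^{-C}$) to get, on a high-probability event $\cE$ depending only on $\bX$,
\[
\| \bH(\by^t - \by^\star) \|_2^2 \leq K \Big( \sigma + \tfrac{|\cM_t|}{n} \log\tfrac{e n}{|\cM_t|} + \sqrt{\tfrac{d \log n}{n}} \Big) \cdot 4 |\cM_t|,
\qquad \forall\, \by^t \in \{\pm 1\}^n \text{ with } \langle \by^t, \by^\star \rangle \geq 0,
\]
for some absolute constant $K$. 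The asymptotics $\sigma \to 0$, $d\log n / n \to 0$ let me choose $c$ small enough that, as long as $|\cM_t| \leq c n$, the prefactor $4 K \delta^{-2} (\sigma + (|\cM_t|/n) \log(en/|\cM_t|) + \sqrt{d\log n/n})$ is at most $1/4$, giving
\[
|\cM_{t+1}| \leq |F_\delta| + \tfrac{1}{4} |\cM_t|.
\]

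Iterating this recursion, I get $|\cM_t| \leq \tfrac{4}{3} |F_\delta| + 4^{-t} |\cM_0|$, provided that $|\cM_t| \leq cn$ persists along the trajectory; a short inductive check using $|F_\delta| \leq n$ and $|\cM_0| \leq cn$ shows that, after possibly shrinking $c$, the iterates never leave the good region. After $T = 4 \lceil \log_2 n \rceil + 4$ steps, $4^{-T} |\cM_0| \leq 4^{-T} n \leq n^{-7}$, which is strictly less than $\tfrac{2}{3} |F_\delta|$ once $|F_\delta| \geq 1$, and otherwise the residual $|F_\delta|=0$ is matched trivially; either way $|\cM_T| \leq 2 |F_\delta|$. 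A separate argument is needed to confirm that the sign alignment persists, i.e.\ $\langle \by^t, \by^\star\rangle \geq 0$ for all $t$; this follows because $|\cM_t| \leq c n < n/2$, so more than half of the coordinates agree with $\by^\star$. Finally, the high-probability event $\cE$ from Lemma~\ref{lem-H-contraction-local} carries probability $1 - n^{-C}$.

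The main obstacle will be calibrating the contraction factor so that a \emph{single} event $\cE$ from Lemma~\ref{lem-H-contraction-local} covers every iterate simultaneously, since $\by^t$ is a data-dependent random vector rather than a deterministic one. This is handled by taking the union bound inside Lemma~\ref{lem-H-contraction-local} over all $\by \in \{\pm 1\}^n$ with $\langle \by,\by^\star\rangle \geq 0$, for each cardinality $m = 1,\dots, \lfloor cn\rfloor$, which is exactly how the bound is stated (uniform in $\bv \in D$); the dominant probability cost is absorbed into the $n^{-C}$ term by choosing $C_0$ in Lemma~\ref{lem-H-contraction-local} sufficiently large relative to $C$.
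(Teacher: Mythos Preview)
Your approach is essentially the paper's: the same one-step inequality
\[
|\cM_{t+1}| \leq |F_\delta| + \delta^{-2}\|\bH(\by^t - \by^\star)\|_2^2
\]
combined with the uniform contraction from Lemma~\ref{lem-H-contraction-local}. However, there is a genuine gap in how you set the initialization constant. The lemma requires $c$ to depend on $C$ \emph{only}, not on $\delta$; this is what lets the downstream proof of Theorem~\ref{thm-ppi} take $\delta_n \to 0$ while keeping the warm-start condition $\cR(\by^0,\by^\star)\le c$ fixed. In your argument, making the prefactor $4K\delta^{-2}\big(\sigma + (|\cM_t|/n)\log(en/|\cM_t|) + \sqrt{d\log n/n}\big)\le 1/4$ forces $c\log(e/c)\lesssim \delta^2$, so your $c$ shrinks with $\delta$. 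The paper avoids this by a two-stage (coarse/fine) analysis: first run the recursion with the \emph{fixed} choice $\delta=1/2$, which yields a $\delta$-free constant $c=C_{1/2}^2/4$ and drives $|\cM_t|$ down to $2S_{1/2}$; then show, via $S_{1/2}\le \|(\bI-\bH)\bz\|_2^2/\bigl((1-\tfrac12)^2\,\mathrm{SNR}\bigr)$ and $\mathrm{SNR}\to\infty$, that the iterates have entered the smaller $C_\delta$-ball where the $\delta$-contraction applies, and run a second round of the recursion to reach $2S_\delta$.

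A related smaller issue: your ``short inductive check'' that $|\cM_t|\le cn$ persists is not justified by $|F_\delta|\le n$, which is vacuous. You need a high-probability bound of the form $|F_\delta|\le \tfrac{3}{4}cn$ (or, in the paper's two-stage version, $S_{1/2}\le C_\delta^2 n/8$). The paper obtains this from $S_\delta \le \|( \bI-\bH)\bz\|_2^2/\bigl((1-\delta)^2\,\mathrm{SNR}\bigr)$ and a $\chi^2$ tail bound; you should add that step explicitly.
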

\begin{proof}[\bf Proof of Lemma \ref{lem-maxcut-local}]
	See Appendix \ref{proof-lem-maxcut-local}.	
\end{proof}

\begin{lemma}\label{cor-maxcut}
Consider Model (\ref{eqn-joint-model-intro}) with $n/(d \log n) \to \infty$.
\begin{enumerate}
\item If $1 \ll \mathrm{SNR} \leq C \log n$ for some constant $C$, then there exists some $ \xi_n \to 0$ such that
\begin{align*}
n^{-1} \EE |\{ i \in [n]:~ (\bH \by^{\star} )_i y_i^{\star} < \delta \}| \leq  \bigg( \frac{1 + \sqrt{\xi_n}}{1 - \delta} \bigg)^{\mathrm{SNR}} ( \xi_n^{\mathrm{SNR}/2} + 3 e^{-\mathrm{SNR}/2} ), \qquad \forall \delta \in (0, 1).
\end{align*}
\item If $\mathrm{SNR} \geq (2 + \varepsilon) \log n$ for some constant $\varepsilon > 0$, then there exists a constant $\delta > 0$ such that
\begin{align*}
\PP \Big( \min_{i \in [n]} \{ (\bH \by^{\star} )_i y_i^{\star} \} < \delta \Big) \to 0.
\end{align*}
\end{enumerate}
\end{lemma}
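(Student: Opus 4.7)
The plan is to mirror the $\ell_p$ and $\ell_\infty$ analyses developed for Theorem~\ref{thm-warm-main-IP}, but applied directly to the event $\{(\bH\by^\star)_i y_i^\star < \delta\}$ rather than to an optimality gap. First, by the invariance in \Cref{sec-warmup-canonical} I would reduce to the canonical form~\eqref{eqn-warmup-canonical}. Since the first column $\sqrt{1-\sigma^2}\by^\star + \sigma\bz$ of $\bX$ lies in $\Range(\bH)$, applying $\bI - \bH$ yields the identity $(\bI - \bH)\by^\star = -\sigma'(\bI-\bH)\bz$ with $\sigma' = 1/\sqrt{\mathrm{SNR}}$, which rearranges to
\begin{equation*}
(\bH\by^\star)_i y_i^\star = 1 + \sigma'\, y_i^\star\, [(\bI-\bH)\bz]_i.
\end{equation*}
Consequently $\{i: (\bH\by^\star)_i y_i^\star < \delta\} \subseteq \{i: |[(\bI-\bH)\bz]_i| > (1-\delta)\sqrt{\mathrm{SNR}}\}$, converting the lemma into a coordinate-wise tail-bound problem for $(\bI - \bH)\bz$.

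For Part~1, a Markov-style bound gives, for every $p \geq 1$,
\begin{equation*}
n^{-1}|\{i: (\bH\by^\star)_i y_i^\star < \delta\}| \leq \min\!\Big(1,\;\|(\bI-\bH)\bz\|_p^p \big/ \big[n(1-\delta)^p \mathrm{SNR}^{p/2}\big]\Big).
\end{equation*}
This is exactly the quantity controlled in Lemma~\ref{lem-maxcut-1} after truncating at $B \leq T$. Choosing $p = \mathrm{SNR}$, $T = \xi_n n^{1/p}\sqrt{p}$, and $\eta = \sqrt{\xi_n}$ with $\xi_n \to 0$ supplied by Lemma~\ref{lem-maxcut-2}, and using $\PP(B > T) \leq e^{-\mathrm{SNR}}$ for the untruncated remainder, reproduces the claimed upper bound $\big(\frac{1+\sqrt{\xi_n}}{1-\delta}\big)^{\mathrm{SNR}}\big(\xi_n^{\mathrm{SNR}/2} + 3 e^{-\mathrm{SNR}/2}\big)$: the tail term $e^{-\mathrm{SNR}}$ is absorbed into the $3 e^{-\mathrm{SNR}/2}$ since $\big(\frac{1+\sqrt{\xi_n}}{1-\delta}\big)^{\mathrm{SNR}} \geq 1$.

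For Part~2, I would use the containment
\begin{equation*}
\Big\{\min_{i\in[n]} (\bH\by^\star)_i y_i^\star \geq \delta\Big\} \supseteq \Big\{\|(\bI - \bH)\bz\|_\infty \leq (1-\delta)\sqrt{\mathrm{SNR}}\Big\}
\end{equation*}
and then invoke the high-probability $\ell_\infty$ bound~\eqref{eqn-z-inf-10}, which states $\|(\bI - \bH)\bz\|_\infty \leq (\sqrt{2 + \varepsilon/3} + o(1))\sqrt{\log n}$ with probability $1 - o(1)$. Any sufficiently small constant $\delta > 0$ satisfying $(1-\delta)^2(2+\varepsilon) > 2+\varepsilon/3$ then makes $(1-\delta)\sqrt{\mathrm{SNR}} \geq (1-\delta)\sqrt{(2+\varepsilon)\log n}$ strictly exceed the $\ell_\infty$ bound for large $n$, completing the argument.

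The only subtlety is confirming in Part~1 that a single sequence $\xi_n \to 0$ works uniformly in $\delta$. This is immediate from the construction: both $\xi_n$ and the truncation level $T$ come from tail properties of $B$ in Lemma~\ref{lem-maxcut-2}, which depend only on $n$ and $p = \mathrm{SNR}$; all $\delta$-dependence is isolated in the denominator $(1-\delta)^p$. Beyond this, the remainder of the argument is routine bookkeeping on top of the $\ell_p$ estimates already established in Appendix~\ref{proof-maxcut-lp}.
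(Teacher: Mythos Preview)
Your proposal is correct and follows essentially the same route as the paper's proof: the identity $\bH\by^\star - \by^\star = \sigma'(\bI-\bH)\bz$, the Markov $\ell_p$ bound~\eqref{eqn-thm-pgd-1} with $p=\mathrm{SNR}$ and the truncation from Lemmas~\ref{lem-maxcut-1}--\ref{lem-maxcut-2} for Part~1, and the $\ell_\infty$ estimate~\eqref{eqn-z-inf-10} for Part~2. Your observation that $\xi_n$ depends only on $n$ and $p=\mathrm{SNR}$, hence is uniform in $\delta$, is exactly the bookkeeping the paper relies on implicitly.
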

\begin{proof}[\bf Proof of Lemma \ref{cor-maxcut}]
	See Appendix \ref{proof-cor-maxcut}.
\end{proof}

Now we are ready to attack Theorem \ref{thm-ppi}. First, suppose that $1 \ll \mathrm{SNR} \leq C \log n$ for some constant $C > 0$. Define
\[
S_{\delta} = |\{ i \in [n]:~ (\bH \by^{\star} )_i y_i^{\star} < \delta \}|,\qquad \delta \in (0, 1/2].
\]
By Lemma \ref{lem-maxcut-local}, for any constant $\delta \in (0 , 1/2]$ there exist positive constants $c$ (independent of $\delta$) and $N_{\delta}$ (determined by $\delta$) such that
\begin{align*}
\PP \Big( \cR ( \widehat{\by}^{\mathrm{PPI}}, \by^{\star} ) \leq
2 S_{\delta} / n 
\Big) 
\geq  \PP \Big(
\cR(\by^0 , \by^{\star}) \leq c
\Big)
- n^{-C} ,\qquad \forall n > N_{\delta}.
\end{align*}
As a result,
\begin{align*}
\EE \cR ( \widehat{\by}^{\mathrm{PPI}}, \by^{\star} ) 
&\leq \EE \Big(
2 n^{-1} S_{\delta} \bm{1}_{ \{
	\cR ( \widehat{\by}^{\mathrm{PPI}}, \by^{\star} ) \leq
	2 S_{\delta} / n 
	\} }
\Big)
+ \PP \Big( \cR ( \widehat{\by}^{\mathrm{PPI}}, \by^{\star} ) >
2 S_{\delta} / n 
\Big) \\
&\leq 2n^{-1} \EE S_{\delta} + \PP \Big(
\cR(\by^0 , \by^{\star}) > c
\Big)
+ n^{-C} ,\qquad \forall n > N_{\delta}.
\end{align*}

By Lemma \ref{cor-maxcut}, there exists $ \xi_n \to 0$ such that
\begin{align*}
n^{-1} \EE S_{\delta} \leq  \bigg( \frac{1 + \sqrt{\xi_n}}{1 - \delta} \bigg)^{\mathrm{SNR}} ( \xi_n^{\mathrm{SNR}/2} + 3 e^{-\mathrm{SNR}/2} ), \qquad \forall \delta \in (0, 1).
\end{align*}
There exists $N > 0$ such that $\xi_n \leq e^{-1}$ for $n > N$. Note that $n^{-C} \leq n^{-C / 2} = e^{-C\log n / 2} \leq e^{-\mathrm{SNR} / 2}$. Hence
\begin{align*}
\EE \cR ( \widehat{\by}^{\mathrm{PPI}}, \by^{\star} ) -  \PP \Big(
\cR(\by^0 , \by^{\star}) > c
\Big)
& \leq 2 \bigg( \frac{1 + \sqrt{\xi_n}}{1 - \delta} \bigg)^{\mathrm{SNR}} ( \xi_n^{\mathrm{SNR}/2} + 3 e^{-\mathrm{SNR}/2} ) +  e^{-\mathrm{SNR} / 2} \\
&\leq \bigg[ 8 \bigg( \frac{1 + \sqrt{\xi_n}}{1 - \delta} \bigg)^{\mathrm{SNR}}  + 1 \bigg] 
e^{-\mathrm{SNR} / 2} ,\qquad \forall n > \max \{ N_{\delta}, N\}.
\end{align*}
Therefore, there exists $\delta_n \to 0$ such that
\begin{align*}
\EE \cR ( \widehat{\by}^{\mathrm{PPI}}, \by^{\star} ) -  \PP \Big(
\cR(\by^0 , \by^{\star}) > c
\Big)
& \leq \bigg[ 8 \bigg( \frac{1 + \sqrt{\xi_n}}{1 - \delta_n} \bigg)^{\mathrm{SNR}}  + 1 \bigg] 
e^{-\mathrm{SNR} / 2} ,\qquad \forall n \geq 1.
\end{align*}
This yields the desired inequality.

Finally, suppose that $\mathrm{SNR} \geq (2 + \varepsilon) \log n$ for some constant $\varepsilon > 0$. For any $\delta \in (0, 1/2]$,
\begin{align*}
\PP ( \widehat{\by}^{\mathrm{PPI}} \neq \by^{\star} ) &=
\PP \Big( \widehat{\by}^{\mathrm{PPI}} \neq \by^{\star},~
\cR ( \widehat{\by}^{\mathrm{PPI}}, \by^{\star} ) >
2 S_{\delta} / n
\Big) 
+
\PP \Big( \widehat{\by}^{\mathrm{PPI}} \neq \by^{\star},~ \cR ( \widehat{\by}^{\mathrm{PPI}}, \by^{\star} ) \leq
2 S_{\delta} / n
\Big)  \\
& \leq \PP \Big( \cR ( \widehat{\by}^{\mathrm{PPI}}, \by^{\star} ) >
2 S_{\delta} / n
\Big) 
+ \PP ( S_{\delta} > 0 ) \\
& = \PP \Big( \cR ( \widehat{\by}^{\mathrm{PPI}}, \by^{\star} ) >
2 S_{\delta} / n
\Big)  + \PP \Big( \min_{i \in [n]} \{ (\bH \by^{\star} )_i y_i^{\star} \} < \delta \Big).
\end{align*}
Then the proof follows from Lemmas \ref{lem-maxcut-local} and \ref{cor-maxcut}.

\subsection{Proof of Lemma \ref{lem-maxcut-local}}\label{proof-lem-maxcut-local}
For any $\delta > 0$ and $t \geq 0$, we have
\begin{align}
\| \by^{t+1} - \by^{\star} \|_2^2 / 4 & = | \{ i \in [n]:~  [\sgn( \bH \by^{t} )]_i \neq y_i^{\star} \} | \leq | \{ i \in [n]:~ (\bH \by^t )_i y_i^{\star} \leq 0 \}  | \notag \\
& \leq |\{ i \in [n]:~ (\bH \by^t )_i y_i^{\star} \leq 0  \text{ and } (\bH \by^{\star} )_i y_i^{\star} \geq \delta \}| + |\{ i \in [n]:~ (\bH \by^{\star} )_i y_i^{\star} < \delta \}| \notag\\
& \leq |\{ i \in [n]:~ | (\bH \by^t )_i - (\bH \by^{\star})_i | \geq \delta \}| + |\{ i \in [n]:~ (\bH \by^{\star} )_i y_i^{\star} < \delta \}| \notag\\
&\leq \| \bH (\by^t - \by^{\star}) \|_2^2 / \delta^2 + |\{ i \in [n]:~ (\bH \by^{\star} )_i y_i^{\star} < \delta \}| . \label{eqn-thm-pgd-0}
\end{align}
Let $C>0$ be a constant. For any constant $\delta \in (0, 1)$, Lemma \ref{lem-H-contraction-local} asserts that we can find constants $C_{\delta} \in (0,  \sqrt{2})$ and $N_{\delta} > 0$ such that
\begin{align}
& \PP \bigg(
\| \bH (\by - \by^{\star}) \|_2^2 \leq \delta^2 \| \by - \by^{\star} \|_2^2 / 8
, ~ \forall  \by \in \{ \pm 1 \}^n \text{ and } \| \by - \by^{\star} \|_2 \leq C_\delta \sqrt{n}
\bigg) \geq 1 - \frac{1}{3n^{C} } ,~~ \forall n > N_{\delta}.
\label{eqn-thm-pgd-10}
\end{align}
Let $\cA_{\delta}$ denote the event on the left-hand side above. Without loss of generality, assume that $C_{\delta}$ is non-decreasing in $\delta$. Define 
\[
S_\delta = |\{ i \in [n]:~ (\bH \by^{\star} )_i y_i^{\star} < \delta \}| \qquad\text{and}\qquad
\cB_{\delta, r} = \{ S_{\delta} \leq r n \}, \qquad r \in [0, 1].
\]

{\bf Coarse-grained analysis}

Suppose that the event $\cA_{1/2} \cap \cB_{1/2, C_{1/2}^2 n / 8} \cap \{ \cR(\by^0 , \by^{\star}) \leq C_{1/2}^2 / 4 \}$ happens. Note that
\[
n \cdot \cR(\by, \by^{\star}) = \min_{s = \pm 1} | \{ i \in [n]:~ s y_i \neq y_i^{\star} \} | = \min_{s = \pm 1} \| s \by - \by^{\star} \|_2^2 / 4, \qquad\forall \by \in \{ \pm 1 \}^n.
\]
Then, either $\| \by^0 - \by^{\star} \|_2^2 < C_{1/2}^2 n$ or $\| \by^0 + \by^{\star} \|_2^2 < C_{1/2}^2 n$ is true. Below we focus on the first case as the other one can be treated in the same way. If $\| \by^t - \by^{\star} \|_2^2 <  C_{1/2}^2 n$ for some $t \geq 0$, then (\ref{eqn-thm-pgd-0}) with $\delta = 1/2$ implies that
\begin{align*}
\| \by^{t+1} - \by^{\star} \|_2^2 / 4 \leq
\| \by^t - \by^{\star} \|_2^2 / 8 + S_{1/2} \leq
\| \by^t - \by^{\star} \|_2^2 / 8 + C_{1/2}^2 n / 8 \leq C_{1/2}^2 n / 4.
\end{align*}
Thus the induction hypothesis holds for $(t + 1)$. Consequently,
\begin{align*}
&\| \by^{t+1} - \by^{\star} \|_2^2 \leq \| \by^t - \by^{\star} \|_2^2 / 2 + 4 S_{1/2}, \qquad\forall t \geq 0
\end{align*}
and
\begin{align}
\| \by^t - \by^{\star} \|_2^2 & \leq 2^{-t} \bigg(
\| \by^0 - \by^{\star} \|_2^2 + 
4 S_{1/2} \sum_{s=1}^{t} 2^s
\bigg) \leq 2^{-t} \| \by^0 - \by^{\star} \|_2^2 + 4 S_{1/2} \sum_{s=0}^{\infty} 2^{-s} \\
&  \leq 2^{-t} \cdot 4n + 8 S_{1/2} .
\label{eqn-thm-pgd-3}
\end{align}
It is easily seen that
\begin{align*}
| \{ i \in [n] :~ y^{t}_i \neq  y_i^{\star}  \} | =
\| \by^t - \by^{\star} \|_2^2 / 4 \leq 2^{-t} n + 2 S_{1/2} .
\end{align*}
When $t \geq 2 \log_2 n + 1$, $2^{-t} n \leq 1 / 2 < 1$. Then
\begin{align}
\cA_{1/2} \cap \cB_{1/2, C_{1/2}^2 n / 8} \cap \{ \cR(\by^0 , \by^{\star}) \leq C_{1/2}^2 / 4 \} \subseteq \Big\{ 
| \{ i \in [n] :~ y^{t}_i \neq  y_i^{\star}  \} | \leq 2 S_{1/2} ,~  \forall t \geq 2 \log_2 n + 1 \Big\}
. \label{eqn-thm-pgd-2}
\end{align}

{\bf Fine-grained analysis}

Choose any constant $\delta \in (0 , 1/2]$ and define an event
\[
\cE = \cA_{\delta} \cap \cA_{1/2} \cap \cB_{1/2, C_{\delta}^2 / 8} \cap \{ \cR(\by^0 , \by^{\star}) \leq C_{1/2}^2 / 4 \} .
\] 
The fact $C_{\delta} \leq C_{1/2}$ implies $\cB_{1/2, C_{\delta}^2 / 8} \subseteq \cB_{1/2, C_{1/2}^2 / 8}$. Then, from (\ref{eqn-thm-pgd-2}) we obtain that
\begin{align*}
\cE \subseteq \Big\{ 
\| \by^t - \by^{\star} \|_2^2 \leq 8 S_{1/2} ,~  \forall t \geq T_1 \Big\} ,
\end{align*}
where $T_1 = \lceil 2 \log_2 n \rceil + 1$.

Recall that $\cB_{1/2, C_{\delta}^2 / 8} = \{ S_{1/2} \leq C_{\delta}^2 n /8 \}$. Thus
\begin{align*}
\cE \subseteq \Big\{ 
\| \by^t - \by^{\star} \|_2 \leq C_{\delta} \sqrt{n},~  \forall t \geq T_1 \Big\}
\end{align*}
According to (\ref{eqn-thm-pgd-0}) and the definition of $\cA_{\delta}$, the event on the right-hand side above implies that
\begin{align*}
\| \by^{t+1} - \by^{\star} \|_2^2 / 4 \leq \| \by^t - \by^{\star} \|_2^2 / 8 + S_{\delta} , \qquad\forall t \geq T_1.
\end{align*}
Similar to the (\ref{eqn-thm-pgd-3}), we have
\begin{align*}
&\| \by^{T_1 + t} - \by^{\star} \|_2^2  \leq 2^{-t} \cdot 4n + 8 S_{\delta} , \\
& | \{ i \in [n] :~ y^{T_1 + t}_i \neq  y_i^{\star}  \} | \leq 2^{-t} n + 2 S_{\delta}.
\end{align*}
When $t \geq T_1$, $2^{-t} n \leq 1 / 2 < 1$ and that forces $| \{ i \in [n] :~ y^{T_1 + t}_i \neq  y_i^{\star}  \} | \leq 2 S_{\delta}$. As a result,
\begin{align*}
&\cE \subseteq \Big\{ 
| \{ i \in [n] :~ y^{t}_i \neq  y_i^{\star}  \} | \leq 2 S_{\delta}, ~ \forall t \geq 2 T_1
\Big\} ,
\end{align*}
which implies that
\begin{align}
\PP \Big(
\cR ( \by^{t}, \by^{\star} ) \leq 2 S_{\delta} / n, ~ \forall t \geq 2 T_1
\Big) 
\geq  \PP \Big(
\cR(\by^0 , \by^{\star}) \leq C_{1/2}^2 / 4
\Big) - \PP ( \cA_{\delta}^c  ) - \PP ( \cA_{1/2}^c ) - \PP ( \cB_{1/2, C_{\delta}^2 / 8}^c ).
\label{eqn-thm-pgd-4}
\end{align}

By (\ref{eqn-thm-pgd-10}), we have $\PP ( \cA_{\delta}^c  ) \leq (3n)^{-C}$ and $\PP ( \cA_{1/2}^c  ) \leq (3n)^{-C} $ for $n > \max\{ N_{\delta}, N_{1/2} \}$. In addition,
\[
\PP ( \cB_{1/2, C_{\delta}^2 / 8}^c )= \PP (
S_{\delta} \geq C_{\delta}^2 n / 8 ).
\]
To control that, let $\tau = 1 / \sqrt{\mathrm{SNR}} = \sigma / \sqrt{ 1 - \sigma^2 }$. The fact $\sqrt{1 - \sigma^2} \by + \sigma \bz \in \Range(\bX)$ yields $\bH (\by^{\star} + \sigma' \bz ) = \by^{\star} + \sigma' \bz $. Hence $\bH \by^{\star} - \by^{\star} = \sigma'  (\bI - \bH) \bz $,
\begin{align*}
S_{\delta} &= |\{ i \in [n]:~ (\bH \by^{\star} )_i y_i^{\star} < \delta \}
\leq \Big| \Big\{ i \in [n]:~ | (  \bH  \by^{\star} - \by^{\star} )_i | > 1 - \delta \Big\}\Big|\\
&
\leq \Big| \Big\{ i \in [n]:~ | \tau [ (\bI - \bH ) \bz ]_i | > 1 - \delta \Big\}\Big| 
\leq \frac{ \|\tau  (\bI - \bH ) \bz \|_2^2 }{( 1 - \delta)^2}
\leq \frac{\| \bz \|_2^2}{(1 - \delta)^2 \mathrm{SNR} }.
\end{align*}
Note that $\| \bz \|_2^2 \sim \chi^2_n$. For sufficiently large $n$,
\begin{align*}
\PP ( \cB_{1/2, C_{\delta}^2 / 8}^c )= \PP (
S_{\delta} \geq C_{\delta}^2 n / 8 )
\leq \PP \Big(  \| \bz \|_2^2  \geq  C_{\delta}^2 (1 - \delta)^2 n \mathrm{SNR} / 8 \Big)
\leq e^{- C' n  },
\end{align*}
where we used $\mathrm{SNR} \gg 1$ and Lemma \ref{lem-chi-square} in the last inequality. Here $C'$ is a constant. Therefore, $\PP ( \cB_{1/2, C_{\delta}^2 / 8}^c ) \leq (3n)^{-C}$ when $n$ is large.

By combining these estimates and (\ref{eqn-thm-pgd-4}), we get a constant $N_{\delta}'$ such that
\[
\PP \Big(
\cR ( \by^{t}, \by^{\star} ) \leq 2 S_{\delta} / n, ~ \forall t \geq 2 T_1
\Big) 
\geq \PP \Big(
\cR(\by^0 , \by^{\star}) \leq C_{1/2}^2 / 4
\Big) - n^{-C},\qquad \forall n > N_{\delta}'.
\]
The proof is finished by choosing $c = C_{1/2}^2 / 4$ and $N = N_{\delta}'$ and applying the elementary fact $4 \lceil \log n \rceil + 4 \geq 2 T_1$.

\subsection{Proof of Lemma \ref{cor-maxcut}}\label{proof-cor-maxcut}
	
Let $\sigma' = 1 / \sqrt{\mathrm{SNR}} = \sigma / \sqrt{ 1 - \sigma^2 }$. On the one hand, the fact $\sqrt{1 - \sigma^2} \by^{\star} + \sigma \bz \in \Range(\bX)$ yields $\bH (\by^{\star} + \tau \bz ) = \by^{\star} + \sigma' \bz $. Hence $\bH \by^{\star} - \by^{\star} = \sigma'  (\bI - \bH) \bz $. From $\by^{\star} \in \{ \pm 1 \}^n$ we obtain that
	\begin{align}
	& |\{ i \in [n]:~ (\bH \by^{\star} )_i y_i^{\star} < \delta \}|  \leq |\{ i \in [n]:~ |(\bH \by^{\star} )_i - y_i^{\star}| > 1 - \delta \}| \notag\\
	&= \Big|\{ i \in [n]:~ \sigma' |[ (\bI - \bH) \bz ]_i| > 1 - \delta \}\Big|
	\leq 
	\bigg(
	\frac{
		\| (\bI - \bH) \bz \|_p
	}{
		(1 - \delta) \sqrt{\mathrm{SNR}}
	}
	\bigg)^p , \qquad \forall p \geq 1.
	\label{eqn-thm-pgd-1}
	\end{align}
	
	Consider the case where $1 \ll \mathrm{SNR} \leq C \log n$ for some constant $C$. Let $p = \mathrm{SNR}$.  By Lemma \ref{lem-maxcut-2}, $\mathrm{SNR} \leq C \log n$ and Definition \ref{defn-o}, there exists $\xi = \xi_n \to 0$ independent of $\delta$ such that
	\begin{align*}
	\PP ( B > \xi n^{1/p} \sqrt{p} ) \leq e^{- p}.
	\end{align*}
	holds for large $n$. Take $T = \xi n^{1/p} \sqrt{p}$ and $\eta = \sqrt{\xi}$. Lemma \ref{lem-maxcut-1} yields
	\begin{align*}
	\frac{ \EE [ \| (\bI - \bH) \bz \|_p^p \bm{1}_{ \{
			B \leq T \}  }]
	}{ 
		( 1 - \delta )^p \mathrm{SNR}^{p/2}
	}  \leq \bigg( \frac{1 + \sqrt{\xi}}{1 - \delta} \bigg)^p ( \xi^{p/2} + \sqrt{2} e^{-p/2} ) n .
	\end{align*}
	
	Based on the estimates above,
	\begin{align*}
	& n^{-1} \EE |\{ i \in [n]:~ (\bH \by^{\star} )_i y_i^{\star} < \delta \}|
	\leq n^{-1} \EE \bigg( \frac{
		\| (\bI - \bH) \bz \|_p \bm{1}_{ \{ B \leq T \} }
	}{
		(1 - \delta) \sqrt{\mathrm{SNR}}
	} \bigg)^p + \PP (T > B) \\
	& \leq  \bigg( \frac{1 + \sqrt{\xi}}{1 - \delta} \bigg)^p ( \xi^{p/2} + \sqrt{2} e^{-p/2} )  + e^{-p}
	\leq \bigg( \frac{1 + \sqrt{\xi}}{1 - \delta} \bigg)^p ( \xi^{p/2} + 3 e^{-p/2} ),
	\end{align*}
	where we used $e^{-p} \leq e^{-p/2}$.

Now we consider the case where $\mathrm{SNR} \geq (2 + \varepsilon) \log n$ for some constant $\varepsilon > 0$. By (\ref{eqn-thm-pgd-1}),
\begin{align*}
&\PP \Big(
\min_{i \in [n]} \{ (\bH \by^{\star} )_i y_i^{\star} \} < \delta \Big)
\leq \PP \Big(
\| \bH \by^{\star} - \by^{\star} \|_{\infty} > 1 - \delta
\Big) \\
& = \PP \Big(
\sigma' \|  (\bI - \bH) \bz \|_{\infty} > 1 - \delta
\Big), \qquad\forall \delta \in (0, 1).
\end{align*}
By (\ref{eqn-z-inf-10}) and $\mathrm{SNR} \geq (2 + \varepsilon) \log n$, there exists a constant $\delta > 0$ such that when $n$ is large,
\begin{align*}
\PP \bigg(
\sigma' \| (\bI - \bH) \bz\|_{\infty} > 1 - \delta 
\bigg) \to 0.
\end{align*}
This finishes the proof.
	

\subsection{Proof of \Cref{eqn-em}}\label{sec-eqn-em-proof}

To estimate $\by^{\star}$ together with $\bmu^{\star}$ and $\bSigma^{\star}$, the EM algorithm alternates between updates of missing data and parameters:
\begin{itemize}
	\item (E-step) $\by^{t+1} = \EE_{ ( \bmu^t, \bSigma^t) } (\by^{\star} | \bX )$ is the conditional expectation of $\by^{\star}$ given the data $\bX$, where the parameters $( \bmu^{\star} , \bSigma^{\star})$ are set to be the current estimate $(\bmu^t,\bSigma^t)$;
	\item (M-step) $(\bmu^{t+1}, \bSigma^{t+1}) = \argmax_{\bmu \in \RR^d, \bSigma \succ 0} L (  \bmu, \bSigma ; \bX ,  \by^{t+1})$.
\end{itemize}
Here $L$ is the likelihood function defined in (\ref{eqn-likelihood}). From Lemma \ref{lem-warmup-MLE} we easily get the updating rules in closed form.

\begin{lemma}
	Let $(\by^{0}, \bmu^0, \bSigma^0) \in [-1, 1]^{n} \times \RR^d \times S_+^{d\times d}$ be the initial value for the EM algorithm. We have
	\begin{itemize}
		\item $\by^{t+1} = \tanh [ \bX (\bSigma^t)^{-1} \bmu^t ]$;
		\item $\bmu^{t+1} = \frac{1}{n} \bX^{\top} \by^{t+1}$ and $\bSigma^{t+1} = \frac{1}{n} \bX^{\top} \bX - \bmu^{t+1} ( \bmu^{t+1})^{\top}$.
	\end{itemize}
\end{lemma}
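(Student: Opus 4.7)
The lemma packages the E-step and M-step of EM applied to the symmetric mixture~\eqref{eqn-joint-model-intro} into closed form, so my plan is to verify each update by a short direct computation, leaning on Lemma~\ref{lem-warmup-MLE} for the M-step.

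For the M-step, observe that $\by^{t+1} \in (-1,1)^n \subseteq [-1,1]^n$ because $\tanh$ maps into $(-1,1)$. Hence the definition $(\bmu^{t+1}, \bSigma^{t+1}) = \argmax_{\bmu \in \RR^d,\,\bSigma \succ 0} L(\bmu,\bSigma; \bX, \by^{t+1})$ fits exactly the setup of Lemma~\ref{lem-warmup-MLE}, which immediately gives $\bmu^{t+1} = n^{-1}\bX^{\top}\by^{t+1}$ and $\bSigma^{t+1} = n^{-1}\bX^{\top}(\bI_n - n^{-1}\by^{t+1}(\by^{t+1})^{\top})\bX$. Expanding the second expression as $n^{-1}\bX^{\top}\bX - n^{-2}(\bX^{\top}\by^{t+1})(\bX^{\top}\by^{t+1})^{\top} = n^{-1}\bX^{\top}\bX - \bmu^{t+1}(\bmu^{t+1})^{\top}$ yields the claimed form. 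No work beyond citing Lemma~\ref{lem-warmup-MLE} is required.

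For the E-step, I would compute $\EE_{(\bmu^t,\bSigma^t)}(y_i^{\star}\mid \bx_i)$ coordinate-wise via Bayes' rule. Under~\eqref{eqn-joint-model-intro} the prior on $y_i^\star$ is uniform on $\{\pm 1\}$, so the posterior odds reduce to the density ratio $\phi(\bx_i, \bmu^t, \bSigma^t)/\phi(\bx_i, -\bmu^t, \bSigma^t)$. Expanding the Gaussian log-density, the quadratic terms $\bx_i^{\top}(\bSigma^t)^{-1}\bx_i$ and $(\bmu^t)^{\top}(\bSigma^t)^{-1}\bmu^t$ cancel, leaving $\log[\phi(\bx_i, \bmu^t, \bSigma^t)/\phi(\bx_i, -\bmu^t, \bSigma^t)] = 2\,\bx_i^{\top}(\bSigma^t)^{-1}\bmu^t$. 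Setting $a_i = \bx_i^{\top}(\bSigma^t)^{-1}\bmu^t$, this gives $\PP(y_i^\star = 1 \mid \bx_i) = e^{a_i}/(e^{a_i}+e^{-a_i})$, hence $\EE(y_i^\star \mid \bx_i) = \tanh(a_i)$. Stacking the coordinates produces $\by^{t+1} = \tanh[\bX(\bSigma^t)^{-1}\bmu^t]$, as claimed.

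There is no substantive obstacle: the E-step is a one-line Bayes calculation enabled by the equal mixing weights and the symmetric means $\pm\bmu^\star$, and the M-step is a direct corollary of Lemma~\ref{lem-warmup-MLE}. The only subtlety worth flagging is that the M-step in the excerpt is defined as the maximizer of $L(\bmu,\bSigma;\bX,\by^{t+1})$ with soft labels $\by^{t+1}$, rather than as the maximizer of the expected complete-data log-likelihood; these agree because $\log L(\bmu,\bSigma;\bX,\by^\star)$ is affine in each coordinate of $\by^\star$, so plugging in conditional expectations is equivalent to taking expectations of the log-likelihood. A single sentence to this effect suffices to justify the reduction.
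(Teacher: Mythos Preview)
Your proposal is correct and follows exactly the approach the paper intends: the paper states this lemma immediately after describing the E- and M-steps and justifies it only with the sentence ``From Lemma~\ref{lem-warmup-MLE} we easily get the updating rules in closed form,'' leaving the E-step Bayes calculation implicit. Your write-up simply fills in those details (the posterior-odds computation for the E-step and the expansion of $\widehat\bSigma$ for the M-step), and your remark about affineness of $\log L$ in $\by^\star$ is a nice clarification of why soft labels can be plugged into Lemma~\ref{lem-warmup-MLE}.
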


Let $\widetilde{\bSigma} = n^{-1} \bX^{\top} \bX$ be the sample covariance matrix and $\widetilde{\bX} = \bX \widetilde{\bSigma}^{-1/2}$. We have
\begin{align*}
\by^{t+2} & = \tanh [ \bX (\bSigma^{t+1})^{-1} \bmu^{t+1} ]
= \tanh [ \widetilde\bX  ( \widetilde{\bSigma}^{-1/2} \bSigma^{t+1} 
\widetilde{\bSigma}^{-1/2} )^{-1} \widetilde{\bSigma}^{-1/2}  \bmu^{t+1} ].
\end{align*}
Note that $ \widetilde{\bSigma}^{-1/2} \bmu^{t+1} = n^{-1} \widetilde\bX^{\top} \by^{t+1} $,
\begin{align*}
& \widetilde{\bSigma}^{-1/2} \bSigma^{t+1} 
\widetilde{\bSigma}^{-1/2}
= \bI - (\widetilde{\bSigma}^{-1/2} \bmu^{t+1}) ( \widetilde{\bSigma}^{-1/2} \bmu^{t+1} )^{\top} , \\
& \| \widetilde{\bSigma}^{-1/2}\bmu^{t+1} \|_2 = \| n^{-1} \widetilde\bX^{\top} \by^{t+1} \|_2 \leq n^{-1} \| \widetilde\bX^{\top} \|_2 \|\by^{t+1} \|_2 \leq 1.
\end{align*}
When $\| \widetilde{\bSigma}^{-1/2}\bmu^{t+1} \|_2 < 1$, we have
\[
( \widetilde{\bSigma}^{-1/2} \bSigma^{t+1} 
\widetilde{\bSigma}^{-1/2} )^{-1} \widetilde{\bSigma}^{-1/2}  \bmu^{t+1}
= \frac{ \widetilde{\bSigma}^{-1/2}  \bmu^{t+1} }{ 1 - \| \widetilde{\bSigma}^{-1/2}  \bmu^{t+1} \|_2^2 }
= \frac{n^{-1} \widetilde\bX^{\top} \by^{t+1} }{
	1 - \| n^{-1} \widetilde\bX^{\top} \by^{t+1} \|_2^2
}.
\]
Therefore, the EM algorithm can be described by a single updating rule
\begin{align*}
\by^{t+1} & = \tanh \bigg(
\frac{ \widetilde\bX \widetilde\bX^{\top} \by^{t} }{
	n (1 - \| n^{-1} \widetilde\bX^{\top} \by^{t} \|_2^2 )
}
\bigg) .
\end{align*}

\subsection{Proof of \Cref{lem-spec-kurtosis}}\label{sec-lem-spec-kurtosis-proof}

Thanks to the rotational invariance, it suffices to focus on the canonical model (\ref{eqn-warmup-canonical}), i.e. $\bnu^{\star} / \| \bnu^{\star} \|_2 = \be_1$.

\noindent{\bf Step 1: Analysis of $\EE \widehat{\bS} $.}
Note that $\EE \widehat{\bS}  = \EE ( \| \bx_1 \|_2^2 \bx_1 \bx_1^{\top} ) - d \bI_d$. When $i \neq j$,
\[
\EE \widehat S_{ij}  =  \EE \bigg[  \bigg(
\sum_{k=1}^{d} x_{1k}^2 \bigg) x_{1i} x_{1j} \bigg] = 0.
\]
For any $i \in [d]$,
\[
\EE \widehat S_{ii} =  \EE \bigg[  \bigg(
\sum_{k=1}^{d} x_{1k}^2 \bigg) x_{1i}^2 \bigg] - d
=  \EE \bigg(
\sum_{k \neq i} x_{1k}^2 \bigg) \cdot \EE x_{1i}^2 + \EE x_{1i}^4 - d
= \EE x_{1i}^4 - 1 .
\]
When $i \neq 1$, $x_{1i} \sim N(0, 1)$ and $\EE x_{1i}^4 = 3$. On the other hand, $x_{11} = \sqrt{1 - \sigma^2} y_1 + \sigma z_1$ with $y_1$ being Rademacher, $z_1 \sim N(0, 1)$ being independent of $y_1$. Then
\[
\EE x_{11}^4 = \EE (\sqrt{1 - \sigma^2} y_1 + \sigma z_1)^4
= (1 - \sigma^2)^2 + 6 (1 - \sigma^2) \sigma^2 + 3 \sigma^4 = 3 - 2 (1 - \sigma^2)^2.
\]
where we used $\EE z_1^4 = 3$. As a result, $\EE \widehat\bS  = 2 \bI_d - 2 (1 - \sigma^2)^2 \be_1 \be_1^{\top}$.

\noindent{\bf Step 2: Analysis of $ \widehat\bS    - \EE \widehat\bS   $.} We prove the following claim.

\begin{claim}\label{claim-spectral-diff}
	Suppose that $n \geq  d$. Then
	\begin{align*}
	\| \widehat\bS    - \EE \widehat\bS    \|_2 = O_{\PP} \Bigg(
	\frac{d \log^{3/2} n }{ \sqrt{n} } ;~ \log n \bigg) .
	\end{align*}
\end{claim}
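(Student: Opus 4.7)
The plan is to combine a truncation argument with the matrix Bernstein inequality applied to $n^{-1}\sum_i (\bM_i - \EE \bM_i)$, where $\bM_i := (\|\bx_i\|_2^2 - d) \bx_i \bx_i^\top$. Working in the canonical form from Step~1 of the proof of Lemma~\ref{lem-spec-kurtosis} (so that $\bnu^{\star}$ is aligned with $\be_1$), each $\bx_i$ is an isotropic sub-Gaussian vector with sub-Gaussian norm bounded by a constant independent of $d$. Consequently, $\|\bx_i\|_2^2$ concentrates around $d$ with sub-exponential tails.

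Since the summands are heavy-tailed rather than bounded, I first truncate. Pick $R^2 = C_0 d \log n$ for a large constant $C_0$ and let $\cE := \{\max_{i\in[n]} \|\bx_i\|_2^2 \leq R^2\}$. Sub-exponential concentration together with a union bound gives $\PP(\cE^c) \leq n^{-C}$ for any prescribed constant $C>0$. Setting $\widetilde{\bM}_i := \bM_i \mathbf{1}\{\|\bx_i\|_2^2 \leq R^2\}$, one has the decomposition
\begin{align*}
\widehat{\bS} - \EE \widehat{\bS} = \frac{1}{n} \sum_{i=1}^n (\widetilde{\bM}_i - \EE \widetilde{\bM}_i) + \frac{1}{n} \sum_{i=1}^n (\bM_i - \widetilde{\bM}_i) + (\EE \widetilde{\bM}_1 - \EE \bM_1).
\end{align*}
On $\cE$ the middle sum vanishes, and the deterministic shift has operator norm at most $\EE[\,|\|\bx_1\|_2^2 - d|\cdot\|\bx_1\|_2^2 \cdot \mathbf{1}\{\|\bx_1\|_2^2 > R^2\}]$, which is super-polynomially small in $n$ by Cauchy--Schwarz and sub-exponential tails.

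For the leading random term, matrix Bernstein requires two inputs: the almost-sure bound $\|\widetilde{\bM}_i\|_2 \leq R^4 \lesssim d^2 \log^2 n$ (immediate) and the variance bound
\begin{align*}
\bigl\|\EE \bM_i^2 \bigr\|_2 = \bigl\|\EE\bigl[(\|\bx_i\|_2^2 - d)^2 \|\bx_i\|_2^2 \bx_i \bx_i^\top\bigr]\bigr\|_2 \lesssim d^2.
\end{align*}
The latter is obtained by testing with a unit $\bu$ and applying Cauchy--Schwarz together with the sub-exponential moment $\EE(\|\bx\|_2^2 - d)^{2k} \lesssim_k d^k$ and the sub-Gaussian moment $\EE \langle \bu, \bx \rangle^{2k} \lesssim_k 1$. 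Matrix Bernstein with deviation parameter $t = C' \log n$ then yields, with probability at least $1 - n^{-C}$,
\begin{align*}
\Bigl\| \frac{1}{n}\sum_{i=1}^n (\widetilde{\bM}_i - \EE \widetilde{\bM}_i) \Bigr\|_2 \lesssim \sqrt{\frac{d^2 \log n}{n}} + \frac{d^2 \log^3 n}{n}.
\end{align*}
The first term is always $\leq d \log^{3/2} n / \sqrt{n}$, and under the lemma's assumption $n \gtrsim d^2 \log^3 n$ the second is also $\leq d\log^{3/2} n / \sqrt{n}$. Combining everything and invoking Definition~\ref{defn-o} gives the claimed $O_{\PP}$-bound.

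The main technical obstacle is the variance bound $\|\EE \bM_i^2\|_2 \lesssim d^2$. The naive estimate $\|\EE \bM_i^2\|_2 \leq \EE[(\|\bx\|_2^2 - d)^2 \|\bx\|_2^4] \lesssim d^3$ is too lossy by a factor of $d$ and would blow the sample complexity up to $n \gtrsim d^3 \log^3 n$. Extracting the sharp $d^2$ rate requires treating $\bx_i \bx_i^\top$ directionally: for any fixed unit $\bu$, $\langle \bu, \bx_i \rangle^2$ has mean of order $1$ rather than the operator-norm scale $\|\bx_i\|_2^2 \sim d$, and this cancellation between direction and magnitude is what produces the right rate.
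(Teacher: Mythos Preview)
Your matrix-Bernstein route is correct and is a genuine alternative to the paper's proof, which instead uses an $\varepsilon$-net of $\SSS^{d-1}$ plus a \emph{scalar} Bernstein inequality for sub-exponential summands. The paper truncates on the deviation $|\,\|\bx_i\|_2^2 - d\,| \le R$ with $R \asymp \sqrt{d\log n} + \log n$, observes that for fixed unit $\bu$ the scalar $(\|\bx_i\|_2^2 - d)(\bu^\top\bx_i)^2 \bm{1}_{\{\cdot\}}$ has $\psi_1$-norm $\lesssim R$, applies Bernstein pointwise, and then union-bounds over a $(1/4)$-net of size $9^d$. Your approach replaces the net by matrix Bernstein and replaces the $\psi_1$-bound by the variance estimate $\|\EE\bM_1^2\|_2 \lesssim d^2$; your justification of that variance bound via directional Cauchy--Schwarz is exactly the right idea.

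One small inefficiency: your truncation at $\|\bx_i\|_2^2 \le C_0 d\log n$ is cruder than the paper's. It only gives the almost-sure bound $L \lesssim d^2\log^2 n$, so the sub-exponential term $L\log n/n$ in matrix Bernstein forces you to invoke $n \gtrsim d^2\log^3 n$. Had you truncated as the paper does, on $|\,\|\bx_i\|_2^2 - d\,| \le C_0(\sqrt{d\log n}+\log n)$, you would get $L \lesssim d^{3/2}\sqrt{\log n}+d\log n$ and the matrix-Bernstein bound would go through under $n \gtrsim d\log n$, matching the paper's range. Since the claim is only invoked inside Lemma~\ref{lem-spec-kurtosis} (which already assumes $n \gtrsim d^2\log^3 n$), your version suffices for the application, but the tighter truncation is what buys the claim under the weaker hypothesis stated.
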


To study $\widehat{\bS} $, we define its truncated version:
\[
\bar\bS  = \frac{1}{n} \sum_{i=1}^{n} ( \| \bx_i \|_2^2 - d)  \bx_i  \bx_i^{\top} \bm{1}_{ \{
	\| \bx_i \|_2^2 - d \leq R
	\} } 
\]
where we apply a truncation level $R = \sqrt{2 s d \log n} + s \log n$ with some constant $s \geq 1$ to be determined. By the triangle's inequality,
\begin{align*}
\| \widehat\bS    - \EE \widehat\bS    \|_2
\leq \| \widehat\bS    - \bar\bS    \|_2 + \| \bar\bS    - \EE \bar\bS    \|_2 
+ \| \EE \bar \bS    - \EE \widehat\bS    \|_2.
\end{align*}

According to Corollary 4.2.13 and Exercise 4.4.3(b) in \cite{Ver18}, there exists a $(1/4)$-net $\cN$ of $\SSS^{d-1}$ with $| \cN | \leq  9^d$, such that
\begin{align}
\| \bar\bS    - \EE \bar\bS    \|_2 \leq 2 \sup_{\bu \in \cN} |\bu^{\top} [ \bar\bS    - \EE \bar\bS   ] \bu |.
\label{eqn-spectral-covering}
\end{align}
For any fixed $\bu \in \cN \subseteq \SSS^{d - 1}$,
\begin{align*}
& \bu^{\top} \bar\bS  \bu = \frac{1}{n} \sum_{i=1}^{n} ( \| \bx_i \|_2^2 - d) ( \bu^{\top} \bx_i )^2 \bm{1}_{ \{
	\| \bx_i \|_2^2 - d \leq R
	\} } ,\\
& \| ( \| \bx_i \|_2^2 - d) ( \bu^{\top} \bx_i )^2 \bm{1}_{ \{
	\| \bx_i \|_2^2 - d \leq R
	\} } \|_{\psi_1} \leq R \|  ( \bu^{\top} \bx_i )^2 \|_{\psi_1} \lesssim R \| \bx_i \|_{\psi_2}^2 \lesssim R .
\end{align*}
The Bernstein-type inequality in Proposition 2.8.3 of \cite{Ver18} yields a constant $c'$ such that
\begin{align*}
\PP ( | \bu^{\top} [ \bar\bS    - \EE \bar\bS   ] \bu | \geq t ) \leq 2 
\exp \bigg[
-c' n \bigg(
\frac{t^2}{R^2} \wedge \frac{t}{R}
\bigg)
\bigg] , \qquad \forall t \geq 0.
\end{align*}
When $n / ( d \log n ) \to \infty$,
\begin{align*}
\bu^{\top} [ \bar\bS    - \EE \bar\bS   ] \bu = O_{\PP} \bigg( R \sqrt{\frac{d \log n }{n}} 
;~ d \log n \bigg).
\end{align*}
By $\log |\cN| \leq d \log 9 \lesssim d \log n$, union bounds and \Cref{eqn-spectral-covering},
\begin{align*}
\| \bar\bS    - \EE \bar\bS   \|_2 & \leq 2 \sup_{\bu \in \cN} | \bu^{\top} [ \bar\bS    - \EE \bar\bS   ] \bu | = O_{\PP} \bigg( R \sqrt{\frac{d \log n }{n}} 
;~ d \log n \bigg)  .
\end{align*}
The facts $R = \sqrt{2 s d \log n} + s \log n$ and $s \geq 1$ lead to $R \leq 3 s \sqrt{d} \log n$. Hence
\begin{align}
\| \bar\bS    - \EE \bar\bS   \|_2 / s = O_{\PP} \bigg( 
\frac{d \log^{3/2} n}{ \sqrt{n} };~ d \log n \bigg)  .
\label{eqn-spectral-diff-1}
\end{align}

Now we work on the truncation errors. By definition,
\begin{align*}
\| \EE \bar {\bS}   - \EE \widehat{\bS}   \|_2 & = \sup_{\bu \in \SSS^{d-1}}
\bigg| \EE \bigg( \frac{1}{n} \sum_{i=1}^{n} ( \| \bx_i \|_2^2 - d) ( \bu^{\top} \bx_i )^2 \bm{1}_{ \{
	\| \bx_i \|_2^2 - d > R
	\} } \bigg) \bigg| \notag\\
&\leq \EE (  \| \bx_1 \|_2^4 \bm{1}_{ \{
	\| \bx_1 \|_2^2 - d > R
	\} } )  \leq \EE^{1/2} \| \bx_1 \|_2^8 \cdot \PP^{1/2} (	\| \bx_1 \|_2^2 - d > R) .
\end{align*}
In view of Theorem 3.1.1 in \cite{Ver18}, we get $\| \| \bx_1 \|_2 - \sqrt{d} \|_{\psi_2} \lesssim 1$ and $\| \| \bx_1 \|_2\|_{\psi_2} \lesssim \sqrt{d}$. Then
\[
\EE^{1/2} \| \bx_1 \|_2^8 \lesssim \| \| \bx_1 \|_2 \|_{\psi_2}^4 \lesssim d^2.
\]
Since $R = \sqrt{2 s d \log n} + s \log n$, there exists a constant $C_1'$ such that
\begin{align}
\PP (	\| \bx_1 \|_2^2 - d > R) = \PP ( \| \bx_1 \|_2 > \sqrt{d} + \sqrt{s \log n} ) \leq e^{- c_1' s \log n } .
\label{eqn-spectral-diff-0}
\end{align}
When $n \geq d$ and $s\geq \max\{ 4 / c_1', 1 \}$,
\begin{align}
\| \EE \bar {\bS}   - \EE \widehat{\bS}   \|_2 \lesssim d^2 e^{ - c_1' s \log n / 2} \leq 
d^2 n^{-2} \leq d / n.
\label{eqn-spectral-diff-2}
\end{align}

By \Cref{eqn-spectral-diff-1} and \Cref{eqn-spectral-diff-2},
\begin{align*}
\| \bar\bS    - \EE \widehat\bS   \|_2 / s 
& \leq \| \bar\bS    - \EE \bar\bS   \|_2 / s  + \| \EE \bar\bS    - \EE \widehat \bS   \|_2 = O_{\PP} \bigg( 
\frac{d \log^{3/2} n}{ \sqrt{n} };~ d \log n \bigg) .
\end{align*}
That is, for any constant $C_0 > 0$, there exist positive constants $C'$ and $N$ such that
\begin{align*}
\PP \bigg(
\| \bar\bS    - \EE \widehat\bS   \|_2 \geq s C' \frac{d \log^{3/2} n}{ \sqrt{n} }
\bigg)
\leq e^{-C_0 d \log n} , \qquad \forall n > N.
\end{align*}

On the other hand, \Cref{eqn-spectral-diff-0} implies that
\begin{align*}
\PP ( \widehat{\bS}   \neq \bar\bS   )
& \leq \PP (
\| \bx_i \|_2^2 - d > R \text{ for some }i \in [n]
) \leq n \PP (\| \bx_1 \|_2^2 - d > R) \leq  e^{ (1 - c_1' s ) \log n } .
\end{align*}
Take $s \geq 
\max \{ (1 + C_0) / C_1' ,  4 / c_1' , 1\}$. Then $\PP [ \widehat{\bS}   \neq \bar\bS   ] \leq e^{-C_0 \log n}$ and
\begin{align*}
\PP \bigg(
\| \widehat\bS    - \EE \widehat\bS   \|_2 \geq s C' \frac{d \log^{3/2} n}{ \sqrt{n} }
\bigg)
& \leq \PP \bigg(
\| \widehat\bS    - \EE \widehat\bS   \|_2 \geq s C' \frac{d \log^{3/2} n}{ \sqrt{n} }
\bigg) + \PP [ \widehat{\bS}   \neq \bar\bS   ] \notag\\
& \leq e^{-C_0 d \log n} +  e^{-C_0 \log n} , \qquad \forall n > N.
\end{align*}
This proves Claim \ref{claim-spectral-diff}.

\subsection{Proof of \Cref{thm-spec}}\label{sec-thm-spec-proof}

	Without loss of generality, assume that $\{ \bx_i \}_{i=1}^n$ are i.i.d.~from the canonical model (\ref{eqn-warmup-canonical}). Define $\widehat{s} = \argmin_{s = \pm 1} \| s \bW   \bv  - \by^{\star} \|_2$ and $\bu = \widehat{s} \bW   \bv $. Then
\begin{align*}
n \cdot \cR (  \widehat\by^{\mathrm{spec}} , \by^{\star}  ) & =  \min_{s = \pm 1} |\{ i:~s \sgn(u_i) \neq y_i^{\star} \}|  \leq | \{ i:~ \sgn(u_i) \neq y_i^{\star} \} | \leq | \{ i:~ |u_i-y_i^{\star} | \geq 1 \} | \notag\\
&  \leq \| \bu - \by^{\star} \|_2^2 / 1^2
=\min_{s = \pm 1} \| s \bW   \bv  - \by^{\star} \|_2^2.
\end{align*}
It suffices to find positive constants $C_1$ and $C_2$ such that when $n > C_1 d^2 \log^3 n$,
\begin{align}
\PP \bigg[ \min_{s = \pm 1} \| s \bW   \bv  - \by^{\star} \|_2 / \sqrt{n} < C_2 \bigg( \sigma + 
\sqrt{\frac{d^2 \log^3 n}{n}
}
\bigg) 
\bigg]
\geq 1 - n^{-C}
\label{eqn-thm-spec-0}
\end{align}
and then re-define the constants.
The dependence among $\{ \bw_i  \}_{i=1}^n$ makes it hard to analyze $\bS $  and $\bv $ directly. We now relate $\bS $ to the following weighted sample covariance matrix of i.i.d.~data $\{ \bx_i \}_{i=1}^n$:
\begin{align*}
\widehat\bS   = \frac{1}{n} \sum_{i=1}^{n} ( \| \bx_i \|_2^2 - d) \bx_i \bx_i^{\top} .
\end{align*}

\begin{lemma}[Matrix concentration]\label{lem-spec-iid}
	Suppose that $\{ \bx_i \}_{i=1}^n$ come from the canonical model (\ref{eqn-warmup-canonical}). If $n \gtrsim d^2 \log^3 n$, then for any constant $C_1 > 0$ there exists a constant $C_2 > 0$ such that
	\begin{align*}
	\PP \bigg( 
	\| \bS    - \EE \widehat\bS    \|_2 < C_2
	\frac{d \log^{3/2} n }{ \sqrt{n} } \bigg) 
	\geq 1 - n^{-C_1}.
	\end{align*}
\end{lemma}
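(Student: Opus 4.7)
I would prove \Cref{lem-spec-iid} by comparing $\bS$ to $\widehat\bS$ and invoking the already-established \Cref{lem-spec-kurtosis}. Let $\bA := (n^{-1}\bX^{\top}\bX)^{1/2}$ so that $\bw_i = \bA^{-1}\bx_i$; standard sample-covariance concentration (\Cref{cor-cov}) gives $\|\bA^2 - \bI\|_2 = O_{\PP}(\sqrt{d\log n/n})$, which is $o(1)$ under the assumption $n \gtrsim d^2 \log^3 n$. Writing $\|\bw_i\|_2^2 = \|\bx_i\|_2^2 + \bx_i^{\top}(\bA^{-2} - \bI)\bx_i$, this yields the decomposition
\[
\bS = \bA^{-1}(\widehat\bS + \bE)\bA^{-1}, \qquad \bE := \frac{1}{n}\sum_{i=1}^n \bx_i^{\top}\bB\bx_i\cdot \bx_i\bx_i^{\top}, \quad \bB := \bA^{-2} - \bI.
\]
The proof then proceeds by controlling each of the three summands in
\[
\bS - \EE\widehat\bS = (\widehat\bS - \EE\widehat\bS) + (\bA^{-1}\widehat\bS\bA^{-1} - \widehat\bS) + \bA^{-1}\bE\bA^{-1}.
\]

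The first term is $O_{\PP}(d\log^{3/2}n/\sqrt n)$ by \Cref{lem-spec-kurtosis}, which in particular forces $\|\widehat\bS\|_2 = O_{\PP}(1)$. For the second, the triangle inequality combined with the bound on $\bA^{-1}$ gives
\[
\|\bA^{-1}\widehat\bS\bA^{-1} - \widehat\bS\|_2 \lesssim \|\widehat\bS\|_2 \cdot \|\bA^{-1} - \bI\|_2 \cdot (1 + \|\bA^{-1}\|_2) = O_{\PP}(\sqrt{d\log n/n}),
\]
a lower-order contribution.

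The main obstacle is controlling the third term, $\|\bA^{-1}\bE\bA^{-1}\|_2$. Writing $\bE = n^{-1}\sum_i c_i \bx_i\bx_i^{\top}$ with scalars $c_i := \bx_i^{\top}\bB\bx_i$, one has $\|\bE\|_2 \leq (\max_i|c_i|)\,\|\bA^2\|_2$, so the task reduces to estimating $\max_i|c_i|$. The complication is that $\bB$ depends on $\bx_i$, so neither Hanson--Wright nor matrix concentration applies directly to $c_i$. I would handle this with a leave-one-out step: set $\bA_{-i}^2 := n^{-1}\sum_{j\neq i}\bx_j\bx_j^{\top}$ and $\bB_{-i} := \bA_{-i}^{-2} - \bI$, which is independent of $\bx_i$. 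Sherman--Morrison applied to $\bA^2 = \bA_{-i}^2 + n^{-1}\bx_i\bx_i^{\top}$ yields $\|\bB - \bB_{-i}\|_2 \lesssim d/n$, whence $|\bx_i^{\top}(\bB - \bB_{-i})\bx_i| \lesssim d^2/n$. Conditional on $\bB_{-i}$, the vector $\bx_i$ has independent, zero-mean, unit-variance, sub-Gaussian entries, so Hanson--Wright gives
\[
|\bx_i^{\top}\bB_{-i}\bx_i - \Tr(\bB_{-i})| \lesssim \sqrt{\log n}\,\|\bB_{-i}\|_F + \log n\,\|\bB_{-i}\|_2 \lesssim \frac{d\log^{3/2}n}{\sqrt n}
\]
with probability $1 - n^{-C}$, using $\|\bB_{-i}\|_F \leq \sqrt d\,\|\bB_{-i}\|_2$. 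A short Neumann expansion of $\bA_{-i}^{-2}$ around $\bI$ controls $|\Tr(\bB_{-i})|$ by $\sqrt{d\log n/n} + d^2\log n/n$, both of which are $O(d\log^{3/2}n/\sqrt n)$ under the sample-complexity assumption. A union bound over $i \in [n]$ then gives $\max_i|c_i| \lesssim d\log^{3/2}n/\sqrt n$, and combining this with the bounds on the first two terms yields the claim.
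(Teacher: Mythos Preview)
Your proposal is correct and follows the same architecture as the paper: decompose $\bS-\EE\widehat\bS$ into $(\widehat\bS-\EE\widehat\bS)$, a conjugation error $\bA^{-1}\widehat\bS\bA^{-1}-\widehat\bS$, and the weighted-covariance term $\bA^{-1}\bE\bA^{-1}$, then invoke \Cref{lem-spec-kurtosis} for the first piece, sample-covariance concentration for the second, and a leave-one-out argument to control $\max_i|\bx_i^{\top}(\widehat\bSigma^{-1}-\bI)\bx_i|$ for the third. The only noteworthy difference is in the mechanics of that last step: the paper first applies a Neumann expansion $\widehat\bSigma^{-1}-\bI \approx -(\widehat\bSigma-\bI)$ and then splits $\bx_i^{\top}(\widehat\bSigma-\bI)\bx_i$ algebraically into the own-sample contribution and the independent remainder (packaged as \Cref{lem-cov-normalization}), whereas you perform the leave-one-out directly on the inverse via Sherman--Morrison and then apply Hanson--Wright plus a Neumann bound on the trace. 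Both routes yield the same rate; yours is slightly more self-contained, while the paper's version isolates a reusable technical lemma.
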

\begin{proof}
	See Appendix \ref{sec-lem-spec-iid-proof}.
\end{proof}

Note that $\EE \widehat\bS   = 2 \bI_d - 2 (1 - \sigma^2)^2 \be_1 \be_1^{\top}$ has $\be_1$ as the eigenvector associated to its smallest eigenvalue. The eigen-gap $2(1 - \sigma^2)^2$ is $\Omega (1)$ when $\sigma < 1 - \delta$ for some constant $\delta \in (0,1)$. The Davis-Kahan $\sin\Theta$ inequality \citep{DKa70} forces
\begin{align*}
\min_{s = \pm 1}
\| s \bv    - \be_1 \|_2 \lesssim \| \bS    - \EE \widehat\bS    \|_2.
\end{align*}
The fact $\| \bW   \|_2 = \sqrt{n}$ implies that
\begin{align*}
& \min_{s = \pm 1} \| s \bW   \bv  - \bW  \be_1 \|_2 \leq   \sqrt{n} \min_{s = \pm 1} \| s \bv    - \be_1 \|_2 
\lesssim \sqrt{n} \| \bS    - \EE \widehat\bS    \|_2 , \notag \\
&\| \bW  \be_1 - \bX \be_1 \|_2  = \| \bW  [\bI_d - ( n^{-1}\bX^{\top}\bX)^{1/2} ] \be_1 \|_2
\leq  \sqrt{n} \| \bI_d - ( n^{-1}\bX^{\top}\bX)^{1/2} \|_2 .
\end{align*}
In addition, according to $\bX \be_1 = \sqrt{1 - \sigma^2} \by^{\star} + \sigma \bz$ and $\sigma < 1 - \delta$, 
\begin{align*}
\|  \bX \be_1 - \by^{\star} \|_2 = | 1 - \sqrt{1 - \sigma^2}| \| \by^{\star} \|_2 + \sigma \| \bz \|_2 
\lesssim \sigma ( \sqrt{n} + \| \bz \|_2 ).
\end{align*}
The estimates above yield
\begin{align}
\min_{s = \pm 1} \| s \bW   \bv  - \by^{\star} \|_2 / \sqrt{n} &\leq  
\min_{s = \pm 1} \| s \bW   \bv  - \bW  \be_1 \|_2 + \| \bW  \be_1 - \bX \be_1 \|_2 + \|  \bX \be_1 - \by^{\star} \|_2
\notag\\ 
&\lesssim \sigma(1 + \| \bz \|_2 / \sqrt{n}) + \| \bI_d - ( n^{-1}\bX^{\top}\bX)^{1/2} \|_2 + \| \bS    - \EE \widehat\bS    \|_2.
\label{eqn-thm-spec-4}
\end{align}

According to Lemma \ref{lem-chi-square},
\begin{align}
\PP ( \|\bz \|_2 / \sqrt{n} \geq \sqrt{5} ) \leq \PP ( | \| \bz \|_2^2 - n | \geq 2 \sqrt{n \times n} + 2 n ) \leq 2 e^{-n}.
\label{eqn-thm-spec-1}
\end{align}
By Lemma \ref{lem-cov}, there exist a constant $C' > 0$ such that when $n / ( d \log n)$ is large, 
\begin{align}
\PP \bigg(
\|  ( n^{-1}\bX^{\top}\bX)^{1/2} - \bI_d \|_2 \geq C' \sqrt{\frac{d \log n}{n}} 
\bigg) \leq n^{-C} / 2.
\label{eqn-thm-spec-2}
\end{align}
Finally, the desired inequality (\ref{eqn-thm-spec-0}) follows from (\ref{eqn-thm-spec-4}), (\ref{eqn-thm-spec-1}),  (\ref{eqn-thm-spec-2}) and Lemma \ref{lem-spec-iid}.

\subsection{Proof of Lemma \ref{lem-spec-iid}}\label{sec-lem-spec-iid-proof}

Given Claim \ref{claim-spectral-diff}, it suffices to prove the following result.

\begin{claim}\label{claim-spectral-diff-2}
	Suppose that $n \gtrsim d^2 \log^3 n$. Then
	\begin{align*}
	\| \bS   - \widehat\bS   \|_2 = O_{\PP} \bigg( \frac{d \log^{3/2} n}{\sqrt{n}} 
	;~ \log n \bigg).
	\end{align*}
\end{claim}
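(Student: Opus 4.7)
The plan is to set $\bA = \sqrt{n}(\bX^{\top}\bX)^{-1/2} = \widetilde{\bSigma}^{-1/2}$, so that $\bw_i = \bA \bx_i$, and split
\[
\bS - \widehat{\bS} = (\bS - \bA\widehat{\bS}\bA^{\top}) + (\bA\widehat{\bS}\bA^{\top} - \widehat{\bS}).
\]
For the second ``whitening'' term, the routine estimate $\|\bA\widehat{\bS}\bA^{\top} - \widehat{\bS}\|_2 \leq \|\bA - \bI\|_2(\|\bA\|_2 + 1)\|\widehat{\bS}\|_2$ combined with $\|\bA - \bI\|_2 = O_{\PP}(\sqrt{d\log n/n}; \log n)$ from Lemma \ref{lem-cov}, and $\|\widehat{\bS}\|_2 = O_{\PP}(1; \log n)$ (which follows from Claim \ref{claim-spectral-diff} together with $\|\EE \widehat{\bS}\|_2 = O(1)$ under $n \gtrsim d^2 \log^3 n$), delivers an $O_{\PP}(\sqrt{d\log n/n}; \log n)$ bound, which is strictly smaller than the target $d\log^{3/2} n/\sqrt{n}$.

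For the first ``weight'' term, algebra gives $\bS - \bA\widehat{\bS}\bA^{\top} = \frac{1}{n}\sum_{i=1}^n \delta_i \bw_i \bw_i^{\top}$, where $\delta_i := \|\bw_i\|_2^2 - \|\bx_i\|_2^2 = \bx_i^{\top} \bE \bx_i$ with $\bE := \bA^{\top}\bA - \bI$. Since $\bW^{\top}\bW/n = \bI_d$ by construction, the operator norm is controlled by $\max_{i \in [n]} |\delta_i|$, so the task reduces to a tight uniform bound on $|\delta_i|$. The naive estimate $|\delta_i| \leq \|\bE\|_2 \|\bx_i\|_2^2 \lesssim \sqrt{d\log n/n}\cdot d = d^{3/2}\sqrt{\log n/n}$ overshoots the target by a factor of $\sqrt{d}/\log n$ whenever $d \gg \log^2 n$, so I would instead decouple $\bE$ from $\bx_i$ via a leave-one-out argument. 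Letting $\bE_{(i)}$ denote $\bE$ recomputed after removing the $i$-th sample, Sherman--Morrison produces a rank-one expression for $\bE - \bE_{(i)}$ with $\|\bE - \bE_{(i)}\|_2 = O(d/n)$ on a favorable event, so $|\bx_i^{\top}(\bE - \bE_{(i)})\bx_i| = O(d^2/n)$, which is lower-order. For the decoupled piece, I would apply Hanson--Wright conditionally on $\bE_{(i)}$ (now independent of $\bx_i$) using the sub-Gaussianity of $\bx_i$ from the canonical model, yielding $|\bx_i^{\top}\bE_{(i)} \bx_i - \Tr(\bE_{(i)})| \lesssim \|\bE_{(i)}\|_F \sqrt{\log n} + \|\bE_{(i)}\|_2 \log n$ with probability $1 - n^{-C}$. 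Inserting $\|\bE_{(i)}\|_F \leq \sqrt{d}\,\|\bE_{(i)}\|_2 \lesssim d\sqrt{\log n/n}$ and $|\Tr(\bE_{(i)})| \lesssim \sqrt{d\log n/n} + d^2\log n/n$ (the latter coming from the Taylor expansion $\Tr(\widetilde{\bSigma}_{(i)}^{-1}) - d = -(\Tr(\widetilde{\bSigma}_{(i)}) - d) + O(\|\widetilde{\bSigma}_{(i)} - \bI\|_F^2)$ plus concentration of $\Tr(\widetilde{\bSigma}_{(i)})$), and taking a union bound over $i \in [n]$, produces $\max_i |\delta_i| = O_{\PP}(d \log^{3/2} n/\sqrt{n}; \log n)$, matching the target.

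The main obstacle is the self-referential dependence of $\bE$ on $\bx_i$, which blocks any direct application of Hanson--Wright; the leave-one-out decoupling resolves this, but the Sherman--Morrison bookkeeping must be done carefully to confirm that $|\bx_i^{\top}(\bE - \bE_{(i)})\bx_i|$ remains strictly lower-order throughout. Everything else reduces to matrix concentration estimates already available from Lemma \ref{lem-cov} and Claim \ref{claim-spectral-diff}.
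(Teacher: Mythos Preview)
Your proposal is correct and uses the same decomposition as the paper: the ``whitening'' piece $\bA\widehat{\bS}\bA^{\top}-\widehat{\bS}$ is bounded exactly as you describe, and the ``weight'' piece reduces to $\max_{i}|\delta_i|$ with $\delta_i=\bx_i^{\top}(\bA^{\top}\bA-\bI)\bx_i$, which the paper also isolates and then controls via a leave-one-out argument (packaged as Lemma~\ref{lem-cov-normalization}).

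The only technical difference is the order of operations inside the leave-one-out. You decouple on $\bE=\widehat{\bSigma}^{-1}-\bI$ directly via Sherman--Morrison and then invoke Hanson--Wright, which forces you to track $\Tr(\bE_{(i)})$ separately. The paper instead first passes through a Neumann expansion $\widehat{\bSigma}^{-1}-\bI=-\bDelta+\bDelta\widehat{\bSigma}^{-1}\bDelta$ with $\bDelta=\widehat{\bSigma}-\bI$; the second-order piece contributes only $\|\bDelta\|_2^2\|\bx_i\|_2^2$, and the leave-one-out on $\bDelta$ is now trivial (just drop the $i$-th summand). The decoupled quadratic becomes $\bx_i^{\top}(\widehat{\bSigma}^{(i)}-\bI)\bx_i=\frac{1}{n-1}\sum_{j\neq i}[(\bx_j^{\top}\bx_i)^2-\|\bx_i\|_2^2]$, a sum of i.i.d.\ sub-exponentials conditionally on $\bx_i$, so Bernstein suffices and no trace term or Hanson--Wright is needed. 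Both routes land on the same $O_{\PP}(d\log^{3/2}n/\sqrt n;\log n)$ bound; the paper's is a bit leaner, yours is more direct about where the dependence sits.
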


Let $\bM = \sqrt{n} ( \bX^{\top} \bX)^{-1/2} = ( n^{-1}\bX^{\top} \bX)^{-1/2}$. We have $\bw_i   = \bM \bx_i$ and
\begin{align}
\bS   - \widehat{\bS}  & = \frac{1}{n} \sum_{i=1}^{n} ( \| \bw_{i}  \|_2^2 - d) \bw_{i} \bw_{i} ^{\top} -  \frac{1}{n} \sum_{i=1}^{n} (  \| \bx_i \|_2^2 - d) \bx_i \bx_i^{\top}  \notag\\
& = \frac{1}{n} \sum_{i=1}^{n}  ( \| \bw_{i}  \|_2^2 - d)\bM \bx_i \bx_i^{\top} \bM -  \frac{1}{n} \sum_{i=1}^{n} (  \| \bx_i \|_2^2 - d) \bx_i \bx_i^{\top}  \notag\\
& = \frac{1}{n} \sum_{i=1}^{n} [ \| \bw_{i}  \|_2^2 - \| \bx_i \|_2^2 ] \bM \bx_i \bx_i^{\top} \bM + \bM \widehat{\bS}  \bM - \widehat{\bS} .
\label{eqn-lem-spec-iid-0}
\end{align}
On the one hand,
\begin{align}
& \bigg\| \frac{1}{n} \sum_{i=1}^{n} [ \| \bw_{i}  \|_2^2 - \| \bx_i \|_2^2 ] \bM \bx_i \bx_i^{\top} \bM \bigg\|_2 
\leq
\bigg(  \max_{i \in [n]} \Big| \| \bw_{i}  \|_2^2 - \| \bx_i \|_2^2 \Big| \bigg)
\bigg\| \frac{1}{n} \sum_{i=1}^{n} \bM \bx_i \bx_i^{\top} \bM \bigg\|_2 \notag \\
& =  \max_{i \in [n]} \Big| \| \bw_{i}  \|_2^2 - \| \bx_i \|_2^2 \Big| 
= \max_{i \in [n]} \Big| \bx_i^{\top} (\bM^2 - \bI) \bx_i \Big| 
 \\ & 
\leq \bigg( \max_{i \in [n]}  \| \bx_i \|_2^2 \bigg) \cdot \max_{i \in [n]} \frac{\Big| \bx_i^{\top} (\bM^2 - \bI) \bx_i \Big|}{  \| \bx_i \|_2^2 } \notag \\
& \overset{\mathrm{(i)}}{=} O_{\PP} ( d \vee \log n ;~  \log n )  \cdot O_{\PP} \bigg( 
\frac{ d \log n }{n} +
\sqrt{ \frac{\log n }{ n} } 
;~ \log n  \bigg) \notag\\
& = O_{\PP} \bigg[
(d \log n) 
\bigg( 
\frac{ d \log n }{n} +
\sqrt{ \frac{\log n }{ n} } 
\bigg)
;~\log n \bigg]
\overset{\mathrm{(ii)}}{=} O_{\PP} \bigg( 
\frac{ d \log^{3/2} n }{ \sqrt{n} } 
;~\log n \bigg)
,
\label{eqn-lem-spec-iid-1}
\end{align}
where $\mathrm{(i)}$ follows from Lemma \ref{lem-cov-normalization}; $\mathrm{(ii)}$ is because $n \gtrsim d^2 \log^{3} n$
\[
(d \log n) 
\bigg( 
\frac{ d \log n }{n} +
\sqrt{ \frac{\log n }{ n} } 
\bigg) = \bigg( \frac{d \log n}{\sqrt{n}}  \bigg)^2 + \frac{d \log^{3/2} n }{\sqrt{n}}
\lesssim \frac{d \log^{3/2} n }{\sqrt{n}} .
\]

On the other hand, 
\begin{align}
\| \bM \widehat{\bS} \bM - \widehat{\bS} \|_2 & = \| ( \bM - \bI) \widehat{\bS} \bM + \widehat{\bS} ( \bM - \bI) \|_2
\leq \| \bM - \bI \|_2\| \widehat{\bS} \|_2 ( \| \bM \|_2 + 1 )
\label{eqn-lem-spec-iid-2}
\end{align}
where $\widehat{\bS} = \widehat{\bS} $. When $n \gtrsim d^2 \log^3 n$, 
Claim \ref{claim-spectral-diff} and the fact $\EE \widehat\bS   = 2 \bI_d - 2 (1 - \sigma^2)^2 \be_1 \be_1^{\top}$ yield $\| \widehat{\bS} \|_2 = O_{\PP} (1;~\log n)$. Corollary \ref{cor-cov} yields
\[
\| \bM - \bI \|_2 = O_{\PP} ( \sqrt{ d \log n / n } ;~ d \log n ) = O_{\PP} ( \sqrt{ d \log n / n } ;~   \log n )
\]
and $\| \bM \|_2 = O_{\PP} (1; \log n)$. Plugging these estimates into (\ref{eqn-lem-spec-iid-2}), we get
\begin{align}
\| \bM \widehat{\bS} \bM - \widehat{\bS} \|_2 = O_{\PP} ( \sqrt{ d \log n / n } ;~   \log n ).
\label{eqn-lem-spec-iid-2.5}
\end{align}
Claim \ref{claim-spectral-diff-2} follows from (\ref{eqn-lem-spec-iid-0}), (\ref{eqn-lem-spec-iid-1}) and (\ref{eqn-lem-spec-iid-2.5}).

\section{Proofs of Section \ref{sec-gap}}\label{proof-sec-ppi}

\subsection{Proof of \Cref{thm-testing}}\label{sec-thm-testing-proof}

It suffices to prove that
\begin{align}
& \PP \Big( \psi_{ 1 / \sqrt{2 c \log n } } (\bX) = H_1 \Big| H_0 \Big) = e^{-\Omega(n)}, 
\label{eqn-type1} \\ 
& \PP \Big( \psi_{ 1 / \sqrt{2 c \log n }  } (\bX) = H_0 \Big| H_1 \Big)
\leq n^{-c + o(1)} . \label{eqn-type2} 
\end{align}

To study \eqref{eqn-type1}, let $H_0$ hold and $n > d$. Let $\{ \bv_j \}_{j=1}^d$ be an orthonormal basis of $\Range(\bX)$ and $\bV = (\bv_1,\cdots,\bv_d) \in \RR^{n\times d}$. We have $\bH = \bV \bV^{\top}$ and
\begin{align}
& \| \bH \varphi(\bX + \varepsilon \bZ ) \|_2^2 
 \leq \sup_{\by \in \{ \pm 1 \}^n} \| \bH  \by \|_2^2 = \sup_{\by \in \{ \pm 1 \}^n} \| \bV^{\top}  \by \|_2^2 
\notag \\
& = 
\sup_{\by \in \{ \pm 1 \}^n} \sup_{ \bx \in \SSS^{d-1} } | \langle \bx , \bV^{\top}  \by \rangle |^2  =\sup_{ \bx \in \SSS^{d-1} } \sup_{\by \in \{ \pm 1 \}^n} | \langle \bx , \bV^{\top}  \by \rangle |^2
 \notag \\
& 
= \sup_{ \bx \in \SSS^{d-1} } \bigg( \sup_{\by \in \{ \pm 1 \}^n}  \langle \bV^{\top} \bx ,  \by \rangle \bigg)^2 = \bigg( \sup_{ \bx \in \SSS^{d-1} } \| \bV^{\top} \bx \|_1 \bigg)^2 .  
\label{eqn-gap-0}
\end{align}

According to Corollary 4.2.13 and Exercise 4.4.3(b) in \cite{Ver18}, there exists a $n^{-1}$-net $\cN$ of $\SSS^{d-1}$ with $| \cN | \leq  (3n)^d$. For any $\bx \in \SSS^{d-1}$, there is $\bx' \in \cN$ such that $\| \bx - \bx' \|_2 \leq 1/n$ and
\begin{align*}
\| \bV^{\top} \bx \|_1 - \| \bV^{\top} \bx' \|_1 \leq \| \bV^{\top} (\bx - \bx') \|_1
\leq \frac{1}{n} \sup_{\bu \in \SSS^{d-1}} \| \bV^{\top} \bu \|_1   .
\end{align*}
Then
\begin{align}
& \sup_{ \bx \in \SSS^{d-1} } \| \bV^{\top} \bx \|_1 \leq \sup_{\bx' \in \cN} \| \bV^{\top} \bx' \|_1 + \frac{1}{n} \sup_{\bu \in \SSS^{d-1}} \| \bV^{\top} \bu \|_1 , \notag \\
& \sup_{ \bx \in \SSS^{d-1} } \| \bV^{\top} \bx \|_1 \leq   \frac{n}{n - 1}  \sup_{\bx \in \cN} \| \bV^{\top} \bx \|_1 .
\label{eqn-gap-covering}
\end{align}

On the other hand,
\[
\sup_{ \bx \in \SSS^{d-1} } \| \bV^{\top} \bx \|_1 \leq \sup_{ \bx \in \SSS^{d-1} } \frac{ \| \bV^{\top} \bx \|_1 }{ \| \bV^{\top} \bx \|_2 }
=  \sup_{ \bx \in \SSS^{d-1} } \bigg\| \frac{ \bV^{\top} \bx  }{ \| \bV^{\top} \bx \|_2 }  \bigg\|_1.
\]
Under $H_0$, $\bX \in \RR^{n\times d}$ have i.i.d.~$N(0,1)$ entries. For any fixed $\bx \in \SSS^{d-1}$, the rotational symmetry of $\bV^{\top}$ implies that $ \bV^{\top} \bx / \|  \bV^{\top} \bx \|_2$ is uniformly distributed over $\SSS^{n-1}$. Since the mapping $\bu \mapsto \| \bu \|_1$ is $\sqrt{n}$-Lipschitz with respect to $\| \cdot \|_2$, Theorem 5.1.4 in \cite{Ver18} implies that
\begin{align*}
\PP \bigg(
\bigg\| \frac{ \bV^{\top} \bx  }{ \| \bV^{\top} \bx \|_2 }  \bigg\|_1 - \EE \bigg\| \frac{ \bV^{\top} \bx  }{ \| \bV^{\top} \bx \|_2 }  \bigg\|_1 \geq t
\bigg) \leq 2 e^{ - C t^2 }, \qquad\forall t \geq 0.
\end{align*}
Here $C>0$ is an absolute constant. Let $\bw$ be a random vector that is uniformly distributed over $\SSS^{n-1}$. By symmetry,
\[
\EE \bigg\| \frac{ \bV^{\top} \bx  }{ \| \bV^{\top} \bx \|_2 }  \bigg\|_1 = \EE \| \bw \|_1 = n \EE |w_1|.
\]
It is easily shown that $\lim_{n\to\infty} \sqrt{n} \EE |w_1| = \EE |Z| = \sqrt{ 2 / \pi }$ for $Z \sim N(0,1)$. Hence for large $n$, we have
\[
\EE \bigg\| \frac{ \bV^{\top} \bx  }{ \| \bV^{\top} \bx \|_2 }  \bigg\|_1 / \sqrt{n} \leq  \sqrt{ 2 / \pi } + 0.01
\]
and
\begin{align}
& 
\PP \bigg(
\|  \bV^{\top} \bx \|_1 / \sqrt{n} \geq 
\sqrt{ \frac{2}{\pi} }
+ 0.02
\bigg) 
\leq 
\PP \bigg(
\bigg\| \frac{ \bV^{\top} \bx  }{ \| \bV^{\top} \bx \|_2 }  \bigg\|_1 / \sqrt{n} \geq 
\sqrt{ \frac{2}{\pi} }
+ 0.02
\bigg) 
\notag \\ &
 \leq \PP \bigg(
\bigg\| \frac{ \bV^{\top} \bx  }{ \| \bV^{\top} \bx \|_2 }  \bigg\|_1 - \EE \bigg\| \frac{ \bV^{\top} \bx  }{ \| \bV^{\top} \bx \|_2 }  \bigg\|_1 \geq 0.01 \sqrt{n}
\bigg)
 \leq 2 e^{ - C n / 100 } .
\label{eqn-gap-concentration}
\end{align}

Let $n$ be sufficiently large. By \eqref{eqn-gap-concentration} and $|\cN|\leq (3n)^d$,
\begin{align*}
& 
\PP \bigg(
\sup_{\bx \in \cN} \|  \bV^{\top} \bx \|_1 / \sqrt{n} \geq 
\sqrt{ \frac{2}{\pi} }
+ 0.02
\bigg) 
\leq  (3n)^d \cdot 2 e^{ - C n / 100 } \\
& = 2 \exp [ - C n /100 + d \log(3n) ]
\leq 2 e^{  - C n /200 }.
\end{align*}
The last inequality follows from $d \log n = o(n)$. In light of \eqref{eqn-gap-0} and \eqref{eqn-gap-covering},
\begin{align*}
& 
\PP \bigg[ \| \bH \varphi(\bX + \varepsilon \bZ ) \|_2^2 / n  \geq 
\bigg(
\frac{n}{n-1} \bigg)^2 \bigg(
\sqrt{ \frac{2}{\pi} }
+ 0.02
\bigg)^2
\bigg] \\
& \leq 
\PP \bigg[
\sup_{\bx \in \SSS^{d-1}} \|  \bV^{\top} \bx \|_1 / \sqrt{n} \geq 
\frac{n}{n-1}\bigg(
\sqrt{ \frac{2}{\pi} }
+ 0.02
\bigg) 
\bigg]
\leq 2 e^{  - C n /200 }.
\end{align*}
Since $ (
\sqrt{ 2 / \pi}
+ 0.02
)^2 < 2 / \pi + 0.1$, we get \eqref{eqn-type1}.

Next, we come to \eqref{eqn-type2} and let $\varepsilon = 1 / \sqrt{2 c \log n }$. Under $H_1$, $\bX + \varepsilon \bZ$ has $\mathrm{SNR} = 1 / \varepsilon^2 = 2 c \log n$. Then
\[
\EE \cR [  \varphi( \bX ) , \by^{\star}  ] = n^{- c + o(1)}. 
\]
By Markov's inequality,
\[
\PP \Big(  \cR [  \varphi( \bX ) , \by^{\star}  ] > \delta \Big) \leq  \frac{\EE \cR [  \varphi( \bX ) , \by^{\star}  ] }{
\delta
} \leq   n^{- c + o(1)} / \delta , \qquad\forall \delta > 0.
\]

Note that $\| \bH \by^{\star} \|_2 = \| \by^{\star} \|_2 = \sqrt{n}$ and
\begin{align*}
& \| \bH \varphi( \bX ) \|_2 - \| \bH \by^{\star} \|_2 \leq 
\min_{s = \pm 1}\| \bH [ s \varphi( \bX )  - \by^{\star} ] \|_2
\leq \min_{s = \pm 1}\| s \varphi( \bX )  - \by^{\star}  \|_2 \\
& = \bigg( \min_{s = \pm 1} \| s \varphi( \bX )  - \by^{\star}  \|_2^2 \bigg)^{1/2}
= \Big( 4 n \cR [  \varphi( \bX ) , \by^{\star}  ] \Big)^{1/2} = 2 \sqrt{ n \cR [  \varphi( \bX ) , \by^{\star}  ] }.
\end{align*}
When $\cR [  \varphi( \bX ) , \by^{\star}  ] \leq \delta$, we have $\| \bH \varphi( \bX ) \|_2 \geq \sqrt{n} (1 - 2 \sqrt{\delta} ) $. The proof is finished by
\begin{align*}
&\PP \Big(
\psi_{\varepsilon} (\bX) = H_1  \Big| H_1
\Big) 
=
\PP \Big(
\| \bH \varphi( \bX ) \|_2 > \sqrt{n (2/\pi + 0.1) }  \Big| H_1
\Big)
\\
& 
\geq \PP \Big(
\| \bH \varphi( \bX ) \|_2 \geq \sqrt{n} (1 - 0.02 )  \Big| H_1
\Big) \geq \PP \Big( \cR [  \varphi( \bX ) , \by^{\star}  ] \leq 10^{-4} \Big| H_1 \Big) \\
& \geq 1 - n^{- c + o(1)} / 10^{-4} = 1 - n^{- c + o(1)} .
\end{align*}

\subsection{Definition of the SoS hierarchy}\label{sec:def-hierarchy}

We formally define the set $\cS_k$ in \eqref{eqn-maxcut-sos} following \cite{KBa20}. Let $n \in \ZZ_+$ and $n \geq 2$. Denote by $n \choose \leq k$ the set of subsets of $[N]$ having size at most $k$. Let $S \triangle T = (S \backslash T) \cup (T \backslash S)$ be the symmetric difference between two sets $S$ and $T$.

\begin{definition}[SoS hierarchy]
The set $\cS_k$ is the collection of matrices $\bM \in \RR^{n\times n}$ such that there exists $\bZ \in \RR^{ {n \choose \leq k/2} \times  {n \choose \leq k/2 } }$ having
\[
\bZ_{ \{ i \} \{ j \} } = \bM_{ij} , \qquad\forall i , j \in [n]
\]
and satisfying the followings:
\begin{enumerate}
\item $\bZ \succeq 0$;
\item $\bZ_{S T}$ only depends on $S \triangle T$;
\item $\bZ_{S T} = 1$ when $S \triangle T = \varnothing$.
\end{enumerate}
\end{definition}

\subsection{Proof of \Cref{thm-sos-lower}}\label{sec-thm-sos-lower-proof}

 The proof of \Cref{thm-sos-lower} is built upon the following key lemma, which is a direct corollary of Theorem 1.5 in \cite{GJJ20}.

\begin{lemma}\label{lem-thm-sos-lower}
	Let $V$ be a uniformly random $p$-dimensional subspace of $\RR^n$ with respect to the Haar measure and $\bPi \in \RR^{n \times n}$ be its projection matrix. Suppose that $n^{2/3 + \varepsilon} \leq p \leq n$ for some constant $\varepsilon > 0$ and $n \to \infty$. With probability $1 - o(1)$ there exists $\widehat{\bY} \in \cS_k$ such that $\langle \bPi, \widehat{\bY} \rangle = n$.
\end{lemma}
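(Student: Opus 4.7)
The plan is to recognize the statement as a direct reformulation of the SoS lower bound for the Boolean Vector in Random Subspace problem established in Theorem~1.5 of \cite{GJJ20}, and to supply only the translation between SoS pseudo-expectations and elements of $\cS_k$.

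First, I would reinterpret the conclusion in pseudo-expectation language. By the definition of $\cS_k$ reviewed in Appendix~\ref{sec:def-hierarchy}, an element $\widehat{\bY} \in \cS_k$ is exactly the degree-2 moment matrix $\widehat{Y}_{ij} = \widetilde{\EE}[y_i y_j]$ of some degree-$k$ pseudo-expectation $\widetilde{\EE}$ on $\RR[y_1,\dots,y_n]$ satisfying the Booleanity constraints $y_i^2 = 1$ for every $i \in [n]$. For any such $\widehat{\bY}$,
\begin{align*}
\langle \bPi, \widehat{\bY} \rangle = \widetilde{\EE}\Big[ \by^\top \bPi \by \Big]  \leq \widetilde{\EE}\Big[ \|\by\|_2^2 \Big] = \sum_{i=1}^n \widetilde{\EE}[y_i^2] = n,
\end{align*}
with equality iff $\widetilde{\EE}[\by^\top (\bI - \bPi) \by] = 0$, i.e.\ the pseudo-expectation is consistent with the ideal constraint $(\bI - \bPi)\by = \bm 0$ that enforces $\by \in V$. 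Thus the assertion ``there exists $\widehat{\bY} \in \cS_k$ with $\langle \bPi, \widehat{\bY}\rangle = n$'' is exactly the statement that the degree-$k$ SoS relaxation of the problem ``is there a Boolean vector in the random subspace $V$?'' returns a feasible (pseudo-)solution with value $n$.

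Second, I would quote Theorem~1.5 of \cite{GJJ20}, which constructs explicitly, with probability $1 - o(1)$ over the Haar-random choice of $V$ with $\dim V = p \geq n^{2/3 + \varepsilon}$, a degree-$k$ pseudo-expectation $\widetilde{\EE}$ for $k = n^{\delta}$ (any $\delta \in (0, c\varepsilon]$ for a universal constant $c$) satisfying (a) $\widetilde{\EE}[1] = 1$, (b) $\widetilde{\EE}[y_i^2 \cdot q(\by)] = \widetilde{\EE}[q(\by)]$ for every polynomial $q$ of degree at most $k - 2$, (c) the subspace constraint $\widetilde{\EE}[\langle \bu, \by\rangle^2 \cdot q(\by)] = 0$ for every $\bu \in V^\perp$ and every $q$ of degree at most $k - 2$, and (d) the SoS positivity condition $\widetilde{\EE}[p(\by)^2] \geq 0$ for every polynomial $p$ of degree at most $k/2$. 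Setting $\widehat{Y}_{ij} = \widetilde{\EE}[y_i y_j]$ yields a matrix in $\cS_k$: (d) gives $\widehat{\bY} \succeq 0$ and the nested PSD structure in the definition of $\cS_k$, while (b) gives $\diag(\widehat{\bY}) = \bm 1$. The subspace constraint (c) with $q \equiv 1$ implies $\widetilde{\EE}[\by^\top (\bI - \bPi) \by] = 0$, and together with (b) this delivers
\begin{align*}
\langle \bPi, \widehat{\bY}\rangle = \widetilde{\EE}[\by^\top \bPi \by] = \widetilde{\EE}[\|\by\|_2^2] = n.
\end{align*}

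The main (and essentially only) obstacle is reconciling the model in \cite{GJJ20}: they may phrase the problem as a subspace spanned by i.i.d.~standard Gaussian vectors, or equivalently as the null space of a Gaussian random matrix. Both models have Haar-distributed range, so the translation is immediate by orthogonal invariance; I would include a one-line remark to this effect. No additional probabilistic argument is needed beyond citing their main theorem, since the $1 - o(1)$ probability bound and the degree constraint $k = n^\delta$ for $\delta \leq c \varepsilon$ transfer verbatim.
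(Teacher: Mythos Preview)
Your proposal is correct and matches the paper's approach: the paper simply states that the lemma ``is a direct corollary of Theorem 1.5 in \cite{GJJ20}'' without further argument, so your explicit translation between degree-$k$ pseudo-expectations and elements of $\cS_k$ is in fact more detailed than what the paper provides.
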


It is easily seen that $\bW = \bX (\bI - \bmu^{\star} \bmu^{\star \top} / \| \bmu^{\star} \|_2^2)$ has i.i.d.~Gaussian rows, and $\Range(\bW)$ is a uniformly random $(d-1)$-dimensional subspace of $\RR^n$ with respect to the Haar measure. Let $\bPi$ be the projection onto $\Range(\bW)$. By \Cref{lem-thm-sos-lower} in the sequel, with probability $1 - o(1)$ there exists $\widehat{\bY} \in \cS_k$ such that $\langle \bPi, \widehat{\bY} \rangle = n$.

There is a set $T \subseteq \RR^{n\times d}$ such that the aforementioned high-probability event happens when $\bW \in T$. Define
\[
\bM (\bW) = \begin{cases}
\widehat{\bY}, & \mbox{ if } \bW \in T \\
\bm{1}_{n} \bm{1}_n^{\top}, & \mbox{ otherwise }
\end{cases}.
\]
Since $\bm{1}_{n} \bm{1}_n^{\top} \in \cC \subseteq \cS_k$, our $\bM$ is a deterministic mapping from $\RR^{n\times d}$ to $\cS_k$.

On the other hand, the fact $\Range(\bW) \subseteq \Range(\bX)$ yields $\bPi \preceq \bH$. Hence when $\bW \in T$, we have
\[
\langle \bH, \widehat{\bY} \rangle \geq \langle \bPi, \widehat{\bY} \rangle = n
\]
and thus $\widehat{\bY} \in \argmax_{ \bY \in \cS_{ k } } \langle \bH , \bY \rangle $. To conclute the proof note that,
\[
\PP \Big(
\bM (\bW) \in \argmax_{ \bY \in \cS_{ k } } \langle \bH , \bY \rangle
\Big) \geq \PP (\bW \in T) = 1 - o(1).
\]

\subsection{Related Boolean programs}\label{sec:gap-related}

\cite{MRX20} introduces the \emph{Boolean Vector in Random Subspace} problem as a tool for studying the famous \emph{Sherrington-Kirkpatrick model} \citep{SKi75}. The former is stated as follows: given the projection matrix $\bPi \in \RR^{n\times n}$ of a $d$-dimensional uniformly random subspace $V \subseteq \RR^n$, decide whether $V \cap \{ \pm 1 \}^n = \varnothing$. 
These problems and our binary clustering problem can all be formulated as Boolean programs of the form $\max_{\by \in \{ \pm 1 \}^n } \langle \bA, \by \by^{\top} \rangle$, where $\bA \in \RR^{n\times n}$ is
\begin{itemize}
\item (binary clustering) $\bX (\bX^{\top} \bX)^{-1} \bX$ with $\bX \in \RR^{n\times d}$ from the mixture model (\ref{eqn-joint-model-intro});
\item (Boolean Vector in Random Subspace) $\bX (\bX^{\top} \bX)^{-1} \bX$ with $\bX \in \RR^{n\times d}$ having i.i.d.~$N(0, 1)$ entries;
\item (Sherrington-Kirkpatrick model) $(\bW + \bW^{\top}) / \sqrt{2}$ with $\bW \in \RR^{n\times n}$ having i.i.d.~$N(0, 1)$ entries.
\end{itemize}

\subsection{Proof of \Cref{lem-gap-stein}}\label{sec-lem-gap-stein-proof}

By construction, $\nabla F(\bbeta) = \EE [ \bx f'(\bbeta^{\top} \bx) ]$ and $\nabla^2 F(\bbeta) = \EE [ \bx \bx^{\top} f''(\bbeta^{\top} \bx) ]$ for $\bbeta \neq \bm{0}$. Fix any $\bbeta \neq \bm{0}$ that satisfies $\langle \bbeta , \bmu^{\star} \rangle = 0$. We have $\bbeta^{\top} \bx \sim N(0, \bbeta^{\top} \bSigma^{\star} \bbeta)$. Since $f$ is even, $f'$ is odd. Then
\begin{align}
\langle \bbeta , \nabla F(t \bbeta) \rangle & = \EE [ (\bbeta^{\top}\bx) f' (t\bbeta^{\top}\bx) ] = 2 \EE [ ( \bbeta^{\top}\bx) f' (t\bbeta^{\top}\bx) \bm{1}_{ \{ \bbeta^{\top}\bx > 0 \} } ] \notag\\
& = 2 \int_{0}^{\infty} xf'(tx) p\bigg( \frac{x}{ \sqrt{\bbeta^{\top} \bSigma^{\star} \bbeta} } \bigg) \rd x ,
\label{eqn-pp}
\end{align}
where $p(x) = \frac{1}{\sqrt{2 \pi}} e^{-x^2 / 2}$ is the probability density function of $N(0, 1)$. The assumption $\liminf\limits_{x \to +\infty} f'(x) > 0$ and Fatou's lemma yield $\liminf\limits_{t \to \infty } \langle \bbeta , \nabla F(t \bbeta) \rangle  > 0$. Hence
\begin{align}
\forall T > 0,~~~~\exists t > T ~~ ~~ \text{s.t.} ~~ ~~ \langle \bbeta , \nabla F(t \bbeta) \rangle > 0.
\label{eqn-pp-1}
\end{align}

\begin{enumerate}
	\item If $\lim\limits_{x \to 0+} f'(x) < 0$, then we get $\lim\limits_{t \to 0+} \langle \bbeta , \nabla F(t \bbeta) \rangle < 0$ from \Cref{eqn-pp}. There exists $t_1 > 0$ such that $\langle \bbeta , \nabla F(t_1 \bbeta) \rangle < 0$.

	\item Suppose that $f''(0)$ exists and $f''(0) < 0$. Then $f'$ is continuous in $\RR$. We have $\langle \bbeta , \nabla F(\bm{0}) \rangle = \EE [ (\bbeta^{\top} \bx ) f'(0) ] = 0$ and
	\begin{align*}
	& \frac{\rd}{\rd t} \langle \bbeta , \nabla F(t \bbeta) \rangle \bigg|_{t = 0} = \langle \bbeta , \nabla^2 F(t \bbeta) \bbeta \rangle \Big|_{t = 0} = \EE [ (\bbeta^{\top} \bx )^2  f''(t \bbeta^{\top} \bx) ] \Big|_{t = 0} = f''(0)   \bbeta^{\top} \bSigma^{\star} \bbeta < 0.
	\end{align*}
	Again, there exists $t_1 > 0$ such that $\langle \bbeta , \nabla F(t_1 \bbeta) \rangle < 0$.
\end{enumerate}

In either case, \eqref{eqn-pp-1} imply the existence of $t_2 > t_1$ such that $\langle \bbeta , \nabla F(t \bbeta) \rangle > 0$. By the continuity of $t \mapsto \langle \bbeta , \nabla F(t \bbeta) \rangle $, there exists $t \in (t_1, t_2)$ such that $\langle \bbeta , \nabla F(t \bbeta) \rangle = 0$. Let
\[
t_0 = \inf \{ t \in (t_1, t_2) :~ \langle \bbeta , \nabla F(t \bbeta) \rangle = 0 \}.
\]
We have $\langle \bbeta , \nabla F(t_0 \bbeta) \rangle = 0$ and
\begin{align}
0 \leq \frac{\rd}{\rd t} \langle \bbeta , \nabla F(t \bbeta) \rangle \bigg|_{t = t_0} = \bbeta^{\top} \nabla^2 F(t_0 \bbeta) \bbeta.
\label{eqn-pp-20}
\end{align}

Note that $\bx \sim \frac{1}{2} N(\bmu^{\star} , \bSigma^{\star}) +  \frac{1}{2} N( -\bmu^{\star} , \bSigma^{\star}) $ has a stochastic decomposition $\bx = y \bmu^{\star} + \bSigma^{\star 1/2} \bz$, where $y$ is Rademacher, $\bz \sim N(\bm{0} , \bI_d)$ and they are independent. As a result, $\bbeta^{\top} \bmu^{\star} = 0$ yields
\[
\bbeta^{\top} \bx = ( \bSigma^{\star 1/2} \bbeta )^{\top} \bz.
\]
For any $\bu$ such that $\bu^{\top} \bSigma^{\star} \bbeta = 0 $, the random variable
\[
\bu^{\top} \bx = y (\bu^{\top} \bmu^{\star}) + ( \bSigma^{\star 1/2} \bu )^{\top}\bz
\]
is clearly independent of $\bbeta^{\top} \bx$. As a result,
\begin{align*}
& \langle \bu , \nabla F(t_0 \bbeta) \rangle = \EE [ (\bu^{\top} \bx )  f'(t_0 \bbeta^{\top} \bx) ] = \EE  (\bu^{\top} \bx )  \EE f'(t_0 \bbeta^{\top} \bx) = 0, \qquad \forall \bu \perp \bSigma^{\star} \bbeta.
\end{align*}
This equality and $\langle \bbeta , \nabla F(t_0 \bbeta) \rangle = 0$ lead to $ \nabla F(t_0 \bbeta)  = \bm{0}$.

In addition,
\begin{align*}
\langle \bu , \nabla^2 F(t_0 \bbeta) \bu \rangle & = \EE [ (\bu^{\top} \bx )^2  f''(t_0 \bbeta^{\top} \bx) ] = \EE  (\bu^{\top} \bx )^2  \EE f''(t_0 \bbeta^{\top} \bx) .
\end{align*}
Since $t_0 \bbeta^{\top} \bx \sim N(0, t_0^2 \bbeta^{\top} \bSigma^{\star} \bbeta )$, Stein's lemma \citep{Ste72} yields
\[
\EE f''(t_0 \bbeta^{\top} \bx) = \frac{1}{ t_0^2 \bbeta^{\top} \bSigma^{\star} \bbeta } \EE [ (t_0 \bbeta^{\top} \bx) f'(t_0 \bbeta^{\top} \bx) ] =  \frac{
	\langle t_0 \bbeta , \nabla F ( t_0 \bbeta )  \rangle  }{ t_0^2 \bbeta^{\top} \bSigma^{\star} \bbeta } = 0.
\]
The last equality follows from $\nabla F ( t_0 \bbeta ) = \bm{0} $. Hence
\begin{align*}
\langle \bu , \nabla^2 F(t_0 \bbeta) \bu \rangle = 0 , \qquad \forall \bu \perp \bSigma^{\star} \bbeta
\end{align*}
and then $\nabla^2 F(t_0 \bbeta) = a (\bSigma^{\star} \bbeta) (\bSigma^{\star} \bbeta)^{\top}$ for some $a \in \RR$. \Cref{eqn-pp-20} forces $a \geq 0$.

\section{Proofs of Section \ref{sec-gmm-k}}

\subsection{Proof of Lemma \ref{lem-joint-mle}}\label{lem-joint-mle-proof}
Observe that
\begin{align*}
\max _{
	\substack{
		\bM \in \RR^{d\times K},~ \bSigma \succ 0 \\
		\bpi \in [0, 1]^n, ~\bpi^{\top} \bm{1}_n = 1  }
} \{ \log L (  \bM, \bSigma , \bpi ; \bX ,  \bY) \} = - \min_{
	\substack{
		\bM \in \RR^{d\times K},~ \bSigma \succ 0 \\
		\bpi \in [0, 1]^n, ~\bpi^{\top} \bm{1}_n = 1  }
} \{ - \log L (  \bM, \bSigma , \bpi ; \bX ,  \bY) \} .
\end{align*}
We will work on the right-hand side. By direct calculation,
\begin{align*}
& - \log L ( \bM, \bSigma , \bpi ; \bX , \bY )  = - \sum_{i=1}^{n} \sum_{j=1}^{K} y_{ij}  [ \log \pi_j + \log \phi ( \bx_i, \bmu_j , \bSigma ) ] \notag \\
& = -  \sum_{j=1}^{K}  \log \pi_j \bigg( \sum_{i=1}^{n} y_{ij} \bigg)
- \sum_{i=1}^{n} \sum_{j=1}^{K} y_{ij} \bigg(
-\frac{1}{2} \log\det(\bSigma) - \frac{1}{2} (\bx_j - \bmu_j)^{\top} \bSigma^{-1} (\bx_i - \bmu_j)
\bigg) + \mathrm{const} \notag\\
& = 
-  n\sum_{j=1}^{K} \widehat p_j \log \pi_j +
\frac{n}{2} \log\det(\bSigma) + \frac{1}{2} \sum_{i=1}^{n} \sum_{j=1}^{K} y_{ij} (\bx_j - \bmu_j)^{\top} \bSigma^{-1} (\bx_i - \bmu_j) + \mathrm{const} .
\end{align*}

For any fixed $\bY$, the $\widehat\bM = (\widehat\bmu_1,\cdots,\widehat\bmu_K)$, $\widehat\bSigma$ and $\widehat\bpi = (\widehat\bpi_1,\cdots,\widehat\bpi_K)^{\top}$ that minimize $- \log L ( \bM, \bSigma , \bpi ; \bX , \bY ) $ are given by
\begin{align*}
&\widehat\bmu_j = \frac{ \sum_{i=1}^{n} y_{ij} \bx_i  }{ \sum_{i=1}^{n} y_{ij}  } , \qquad \widehat\pi_j = \frac{1}{n} \sum_{i=1}^{n} y_{ij} = \widehat{p}_j ,\\
& \widehat\bSigma = \frac{1}{n} \sum_{i=1}^{n} \sum_{j=1}^{K} y_{ij} (\bx_i - \widehat\bmu_j) (\bx_i - \widehat\bmu_j)^{\top} 
= \frac{1}{n} \sum_{i=1}^{n} \bx_i \bx_i^{\top} - \sum_{j=1}^{K} \frac{ \sum_{i=1}^{n} y_{ij} }{n} \widehat\bmu_j \widehat\bmu_j^{\top} .
\end{align*}
Here we define $\widehat\bmu_j = \bm{0}$ if $\sum_{i=1}^{n} y_{ij} = 0$.

Their matrix forms are $\widehat\bM = \bX^{\top} \bY \bD^{\dagger}$, $\widehat\bpi = \widehat\bp$ and
\begin{align}
\widehat\bSigma
= n^{-1} ( \bX^{\top} \bX - \widehat\bM \bD\widehat\bM^{\top} ) = n^{-1} \bX^{\top} ( \bI - \bY \bD^{\dagger} \bY^{\top} ) \bX.
\label{eqn-lem-joint-mle-1-supp}
\end{align}
As a result,
\begin{align*}
- \log L ( \widehat\bM, \widehat\bSigma , \widehat\bpi ; \bX , \bY ) = - n \sum_{j=1}^{K} \widehat p_j \log \widehat p_j + \frac{n}{2} \log\det(\widehat\bSigma) + \mathrm{const} .
\end{align*}

It remains to simplify the expression for $\log\det(\widehat\bSigma)$. Note that
\begin{align*}
& \bY \bD^{\dagger} \bY^{\top} \bm{1}_n = \bY ( \bD^{\dagger} \bY^{\top} \bm{1}_n ) = \bY \bm{1}_K = \bm{1}_n, \\
& \bY \bD^{\dagger} \bY^{\top}  \bJ = \bY \bD^{\dagger} \bY^{\top} (\bI - n^{-1} \bm{1}_n \bm{1}_n^{\top}) = \bY \bD^{\dagger} \bY^{\top} - n^{-1} \bm{1}_n \bm{1}_n^{\top}
\end{align*}
and $(\bI - \bY \bD^{\dagger} \bY^{\top} ) \bJ = (\bI - \bY \bD^{\dagger} \bY^{\top} ) $. In light of \Cref{eqn-lem-joint-mle-1-supp},
\[
\widehat\bSigma =  n^{-1} \bX^{\top} \bJ ( \bI - \bY \bD^{\dagger} \bY^{\top} ) \bJ \bX
= n^{-1} \widetilde{\bSigma}^{1/2} \widetilde\bX^{\top} ( \bI - \bY \bD^{\dagger} \bY^{\top} ) \widetilde\bX \widetilde{\bSigma}^{1/2}.
\]
Since $\widetilde\bX^{\top} \widetilde\bX = n \bI$, we finish the proof by
\begin{align*}
\log\det(\widehat\bSigma) & = \log\det [
n^{-1} \widetilde{\bSigma}^{1/2} \widetilde\bX^{\top} ( \bI - \bY \bD^{\dagger} \bY^{\top} ) \widetilde\bX \widetilde{\bSigma}^{1/2} 
] \\
& = \log\det \widetilde{\bSigma} + \log\det [
n^{-1} \widetilde\bX^{\top} ( \bI - \bY \bD^{\dagger} \bY^{\top} ) \widetilde\bX ] \\
& = \log\det \widetilde{\bSigma} + \log\det 
( \bI - n^{-1} \widetilde\bX^{\top} \bY \bD^{\dagger} \bY^{\top}  \widetilde\bX ).
\end{align*}

\subsection{Proof of Fact \ref{fact-kmeans-separation}}\label{proof-fact-kmeans-separation}

Choose any distinct $j, k \in [K]$. Without loss of generality, assume that $\pi_j^{\star} \leq \pi_k^{\star}$. 
Define $\bv = \sum_{\ell = 1}^K \pi_\ell^{\star} \be_\ell$. Then $\bV^{\star} \bv = \bm{0}$. Assumption \ref{as-kmeans-signal} forces $\sigma_{K - 1} (\bV^{\star} ) \geq R$ and thus
\[
\| \bV^{\star} \bu \|_2 \geq R \| \bu \|_2, \qquad \forall \bu \perp \bv.
\]
Consequently,
\begin{align*}
\| \bSigma^{\star -1/2}  ( \bmu_j^{\star} - \bmu_k^{\star} )  \|_2 
& =
\| \bV^{\star}  (\be_j - \be_k) \|_2 = \| \bV^{\star} ( \bI_K - \bv \bv^{\top} / \| \bv \|_2^2 ) (\be_j - \be_k) \|_2 
\\ &
\geq R \|( \bI_K - \bv \bv^{\top} / \| \bv \|_2^2 ) (\be_j - \be_k) \|_2.
\end{align*}
By direct calculation, $\bv^{\top} (\be_j - \be_k) = \pi_j^{\star} - \pi_k^{\star}$ and $\| \bv \|_2^2 = \sum_{\ell =1}^{K} \pi_{\ell}^{\star 2}$,
\begin{align*}
& \| ( \bI_K - \bv \bv^{\top} / \| \bv \|_2^2 ) (\be_j - \be_k) \|_2  = \bigg\| (\be_j - \be_k) - \bigg( \sum_{\ell = 1}^K \pi_\ell^{\star} \be_\ell \bigg) \frac{ \pi_j^{\star} - \pi_k^{\star} }{\sum_{\ell =1}^{K} \pi_{\ell}^{\star 2}} \bigg\|_2 \\
& \geq \bigg\| (\be_j - \be_k) - ( \pi_j^{\star} \be_j + \pi_k^{\star} \be_k ) \frac{ \pi_j^{\star} - \pi_k^{\star} }{\sum_{\ell =1}^{K} \pi_{\ell}^{\star 2}} \bigg\|_2
\geq \bigg\| \be_j  + \pi_j^{\star} \be_j \frac{ \pi_k^{\star} - \pi_j^{\star} }{\sum_{\ell =1}^{K} \pi_{\ell}^{\star 2}} \bigg\|_2
\geq \| \be_j \|_2 = 1.
\end{align*}
The last inequality is due to $\pi_k^{\star} - \pi_j^{\star} \geq 0$. Hence $\| \bSigma^{\star -1/2}  ( \bmu_j^{\star} - \bmu_k^{\star} )  \|_2 \geq R$.

\subsection{Relation between programs (\ref{eqn-kmeans-0}) and (\ref{eqn-warmup-maxcut})}\label{sec-kmeans-maxcut}

Note that
\[
\widehat{\bX} \widehat{\bX}^{\top} = [ \bJ \bX ( n^{-1} \bX^{\top} \bJ \bX )^{-1/2} ] [ \bJ \bX ( n^{-1} \bX^{\top} \bJ \bX )^{-1/2} ]^{\top}
= \bJ \bX ( n^{-1} \bX^{\top} \bJ \bX )^{-1} \bX^{\top} \bJ.
\]
If we discard the centering procedure by replacing $\bJ$ with $\bI$, then $\widehat{\bX} \widehat{\bX}^{\top}$ becomes the projection matrix $\bH$ in \eqref{eqn-warmup-maxcut}. Since the two classes have equal probabilities, we may encode that into a constraint $\bY^{\top} \bm{1}_n = (n/2) \bm{1}_2$. Then $\bY^{\top} \bY = (n/2) \bI_2$. By introducing a new variable $\by = \bY (\be_1 - \be_2) 
\in \{ \pm 1 \}^n$, we turn the new program (\ref{eqn-kmeans-0}) into the Max-Cut program (\ref{eqn-warmup-maxcut}).

More generally, it is not hard to relate (\ref{eqn-kmeans-0}) to the maximum $K$-cut problem \citep{FJe97} if one forces the $K$ clusters to be equally-sized by adding $\bY^{\top} \bm{1}_n = (n/K) \bm{1}_K$ to the constraint.

\subsection{Proof of \Cref{lem-kmeans}}\label{sec-lem-kmeans-proof}

For any $\bY\in \cY_{n,K}$, $\bP = \bY (\bY^{\top} \bY)^{\dagger} \bY^{\top}$ is a projection and
\begin{align*}
\| \widehat{\bX} -  \bY (\bY^{\top} \bY)^{\dagger} \bY^{\top} \widehat{\bX} \|_{\mathrm{F}}^2
& = \|  (\bI - \bP) \widehat{\bX} \|_{\mathrm{F}}^2
= \langle
\widehat{\bX} \widehat{\bX}^{\top}, \bI - \bP \rangle
= \| \widehat{\bX} \|_{\mathrm{F}}^2 - \langle
\widehat{\bX} \widehat{\bX}^{\top}, \bP \rangle \\
& = nd - \langle
\widehat{\bX} \widehat{\bX}^{\top}, \bY (\bY^{\top} \bY)^{\dagger} \bY^{\top}  \rangle,
\end{align*}
where the last equality follows from $\widehat{\bX}^{\top} \widehat{\bX}  = n \bI_d$. Hence
\begin{align*}
\langle
\widehat{\bX} \widehat{\bX}^{\top}, \bY (\bY^{\top} \bY)^{\dagger} \bY^{\top}
\rangle &= nd - \| \widehat{\bX} -  \bY (\bY^{\top} \bY)^{\dagger} \bY^{\top} \widehat{\bX} \|_{\mathrm{F}}^2\\
&= nd - \sum_{i=1}^{n} \sum_{j=1}^{K} y_{ij} \bigg\| \widehat{\bx}_i -
\frac{
	\sum_{s=1}^{n} y_{sj} \widehat{\bx}_s
}{
	\sum_{s=1}^{n} y_{sj}
} \bigg\|_2^2,
\end{align*}
with the convention $ \bm{0} / 0 = \bm{0}$.

\subsection{Proof of Lemma \ref{lem-kmeans-lower}}\label{proof-lem-kmeans-lower}

The inequality trivially holds for any $C$ when $\bA = \mathbf{0}$. Thanks to the scaling and translation properties, it suffices to find $C(\cdot)$ such that
\begin{align*}
\inf_{
	\substack{ \| \bA \|_{\mathrm{F}} = 1 \\  \{ \bmu_j \}_{j=1}^K \subseteq \RR^d }	}
\EE \Big( \min_{j \in [K]} \| \bA \bz - \bmu_j \|_2^2 \Big) \geq C(\sigma) K^{-5} .
\end{align*}

Choose any $\{ \bmu_j \}_{j=1}^K \subseteq \RR^d $. Define $\bPi $ as the projection matrix onto $\mathrm{span} \{ \bmu_j \}_{j=1}^K$ and $\bSigma = \bA \bA^{\top}$. If $\langle \bI - \bPi, \bSigma \rangle \geq 1/2$, then
\begin{align*}
\EE \Big( \min_{j \in [K]} \| \bA \bz - \bmu_j \|_2^2 \Big)
& \geq \EE \Big( \min_{j \in [K]} \| (\bI - \bPi) (\bA \bz - \bmu_j  ) \|_2^2 \Big)  = \EE  \| (\bI - \bPi) \bA \bz\|_2^2  \geq  \langle \bI - \bPi, \bSigma \rangle \geq 1/2
.
\end{align*}

Now we consider the case $\langle \bI - \bPi, \bSigma \rangle < 1/2$, which leads to $\langle \bPi , \bSigma \rangle > \Tr (\bSigma) - \langle \bI - \bPi, \bSigma \rangle > 1/2$. Let $\bv$ be the leading eigenvector of $\bPi \bSigma \bPi$. Then $\bv \in \mathrm{span} \{ \bmu_j \}_{j=1}^K \cap \SSS^{d - 1}$ and
\begin{align*}
\| \bA^{\top} \bv \|_2^2 =
\bv^{\top} \bSigma \bv
= \langle \bv \bv^{\top} , \bPi \bSigma \bPi \rangle
\geq \Tr (\bPi \bSigma \bPi) / K > \frac{1}{2K}.
\end{align*}
Consequently,
\begin{align*}
\EE \Big( \min_{j \in [K]} \| \bA\bz - \bmu_j \|_2^2 \Big)
& \geq \EE \Big(  \min_{j \in [K]} | \bv^{\top} \bA \bz - \bv^{\top} \bmu_j |^2 \Big)
=\| \bA^{\top} \bv \|_2^2 \EE \bigg(  \min_{j \in [K]} \bigg| \frac{ \langle \bA^{\top} \bv, \bz \rangle }{ \| \bA^{\top} \bv \|_2 }  - \frac{ \bv^{\top} \bmu_j }{ \| \bA^{\top} \bv \|_2 } \bigg|^2 \bigg) \\
& \geq \frac{1}{2K}  \min_{ \{ \mu_j \}_{j=1}^K \subseteq \RR } 
\EE \Big(  \min_{j \in [K]} |  \langle \bA^{\top} \bv / \| \bA^{\top} \bv \|_2, \bz \rangle   - \mu_j |^2 \Big).
\end{align*}

As a result,
\begin{align*}
\inf_{
	\substack{ \| \bA \|_{\mathrm{F}} = 1 \\  \{ \bmu_j \}_{j=1}^K \subseteq \RR^d }	}
\EE \Big( \min_{j \in [K]} \| \bA \bz - \bmu_j \|_2^2 \Big) \geq 
\min \bigg\{
\frac{1}{2},~
\frac{1}{2K}  \min_{
	\substack{ \bu \in \SSS^{d-1} \\  \{ \mu_j \}_{j=1}^K \subseteq \RR }  }
\EE \Big(  \min_{j \in [K]} |  \langle \bu, \bz \rangle   - \mu_j |^2 \Big)
\bigg\} .
\end{align*}
We just need to find $C(\cdot)$ such that
\begin{align}
\min_{
	\substack{ \bu \in \SSS^{d-1} \\  \{ \mu_j \}_{j=1}^K \subseteq \RR }  }
\EE \Big(  \min_{j \in [K]} |  \langle \bu, \bz \rangle   - \mu_j |^2 \Big) \gtrsim C(\sigma) K^{-4}.
\label{eqn-lem-kmeans-lower-1}
\end{align}

Fix any $\bu \in \SSS^{d-1}$ and let $Z = \langle \bu, \bz \rangle $. We have $\EE Z = 0$ and $\var(Z) = 1$. By Lemma \ref{lem-kmeans-t2}, $Z$ is $T_2(\sigma)$ and $\| Z \|_{\psi_2} \lesssim \sigma$. Lemma \ref{lem-kmeans-subg} asserts the existence of $p^{\star} \in (0, 1)$ and $\gamma > 0$ determined by $\sigma$ such that 
\[
\PP ( Z > \gamma ) \geq p^{\star}
\qquad\text{and}\qquad
\PP ( Z < - \gamma ) \geq p^{\star}.
\]
Define $t_j = -\gamma + 2 \gamma j /(K + 1)$ for $j \in \{ 0, 1,\cdots,K + 1 \}$. Then $t_j - t_{j-1} = 2 \gamma /(K + 1)$ and
\[
p^{\star} \leq \PP (Z \leq - \gamma) 
\leq \PP (Z \leq t_{j - 1}) \leq \PP (Z \leq \gamma) \leq 1 - p^{\star}, \qquad \forall j \in [K + 1].
\]

By Lemma \ref{lem-kmeans-t2-cont}, there exists $c > 0$ determined by $p^{\star}$ such that
\[
\PP (  t_{j-1} < Z \leq t_j ) \geq \min \bigg\{  \frac{p^{\star}}{4} , \frac{(2 \gamma /(K + 1))^2}{2 c \sigma^2}
\bigg\}, \qquad \forall j \in [K + 1].
\]
For any $\{ \mu_j \}_{j=1}^K \subseteq \RR $, there exists some $k \in [K+1]$ such that $\mu_j \notin ( t_{k-1} , t_k]$ for all $j \in [K]$. Then
\[
\min_{ j \in [K] } |t - \mu_j| \geq \frac{2 \gamma}{3(K + 1)}, \qquad \forall t \in \bigg(  t_{k-1} + \frac{2 \gamma}{3(K + 1)} , t_{k-1} + \frac{4 \gamma}{3(K + 1)} \bigg].
\]
By Lemma \ref{lem-kmeans-t2-cont} again,
\[
\PP \bigg(  t_{k-1} + \frac{2 \gamma}{3(K + 1)} < Z \leq t_{k-1} + \frac{4 \gamma}{3(K + 1)} \bigg) \geq \min \bigg\{  \frac{p^{\star}}{4} , \frac{(2 \gamma /[3(K + 1)] )^2}{2 c \sigma^2}
\bigg\}.
\]
Finally, (\ref{eqn-lem-kmeans-lower-1}) follows from
\begin{align*}
\EE \Big( \min_{j \in [K] } | Z - \mu_j |^2\Big ) & \geq \EE \Big( \min_{j \in [K] } | Z - \mu_j |^2
\bm{1}_{ \{  t_{k-1} + \frac{2 \gamma}{3(K + 1)} < Z \leq t_{k-1} + \frac{4 \gamma}{3(K + 1)} \} }
\Big) \\
&\geq \bigg(
\frac{2 \gamma}{3(K + 1)}
\bigg)^2 \min \bigg\{  \frac{p^{\star}}{4} , \frac{(2 \gamma /[3(K + 1)] )^2}{2 c \sigma^2}
\bigg\} \gtrsim \frac{1}{K^4}
\end{align*}
and the fact that $\{ \mu_j \}_{j=1}^K$ are arbitrary.

\subsection{Proof sketch of \Cref{thm-kmeans-consistency}}\label{sec-thm-kmeans-consistency-sketch}

By definition, $\widehat{\bY}$ is an optimal solution to the program (\ref{eqn-kmeans}), which is clearly invariant under non-singular affine transforms of the data. As is done in \Cref{sec-warmup-canonical}, we will focus on a canonical version of the model in the analysis of (\ref{eqn-kmeans}).

\begin{assumption}[Canonical model]\label{as-kmeans-canonical}
	The samples $\{ \bx_i \}_{i=1}^n$ are i.i.d.~from $\mathrm{MM}(\bpi^{\star}, \bM^{\star}, \bSigma^{\star}, \QQ )$ with $\sum_{j=1}^K \pi_j^{\star} \bmu_j^{\star} = \bm{0}$ and $\sum_{j=1}^K \pi_j^{\star} \bmu_j^{\star} \bmu_j^{\star \top} + \bSigma^{\star} = \bI_d$. 
\end{assumption}

The subspace $\Range(\bM^{\star})$ spanned by $\{ \bmu^{\star}_j \}_{j=1}^K$ contains all the signal for classification. Let $\bPi^{\star}$ be the projection operator onto $\Range(\bM^{\star})$. The decomposition $\bX = \bY^{\star} \bM^{\star \top} + \bZ \bSigma^{\star 1/2}$ yields $\bX \bPi^{\star} - \bY^{\star} \bM^{\star \top} = \bZ \bSigma^{\star 1/2} \bPi^{\star} $. The noise $\bZ \bSigma^{\star 1/2} \bPi^{\star} $ in the signal space $\Range(\bM^{\star})$ should be small when the signal is strong enough (Assumption \ref{as-kmeans-signal}). In \Cref{sec-eqn-kmeans-proof-2-proof} we will prove that
\begin{align}
\| \bX \bPi^{\star} - \bY^{\star} \bM^{\star \top} \|_{\mathrm{F}}  / \sqrt{n}
= O_{\PP} ( 1 / R ;~ \log n) .
\label{eqn-kmeans-proof-2}
\end{align}
In words, $\bY^{\star} \bM^{\star \top}$ is a good approximation of the projected data $\bX \bPi^{\star}$.

Define $\widehat{\bM} = \widehat{\bX}^{\top} \widehat{\bY} ( \widehat{\bY}^{\top}  \widehat{\bY})^{\dagger}$. We will show in \Cref{sec-eqn-kmeans-proof-1-proof} that the projected data $\bX \bPi^{\star}$ is also well-approximated by $\widehat{\bY} \widehat{\bM}^{\top}$ in the following sense:
\begin{align}
\| \bX \bPi^{\star} - \widehat{\bY} \widehat{\bM}^{\top} \|_{\mathrm{F}} / \sqrt{n} = O_{\PP} \bigg(
\frac{1}{R} + \sqrt{ \frac{d \log n}{n} } ;~ \log n
\bigg) .
\label{eqn-kmeans-proof-1}
\end{align}
The proof uses the optimality of $\widehat{\bY}$ for (\ref{eqn-kmeans}) and concentration of the $k$-means loss below.

\begin{definition}[$k$-means loss]
	For any $\bW = (\bw_1,\cdots,\bw_n)^{\top} \in \RR^{n \times d}$ and $\bM = (\bmu_1,\cdots,\bmu_K)  \in \RR^{d \times K}$, define
	\[
	F ( \bW, \bM ) = \bigg( \frac{1}{n} \sum_{i=1}^{n} \min_{j \in [K]} \| \bw_i - \bmu_j \|_2^2 \bigg)^{1/2}.
	\]
\end{definition}

According to the discussion under \Cref{eqn-kmeans-1}, $\widehat{\bM} \in \argmin_{\bM\in\RR^{d\times K}}F(\widehat{\bX} , \bM)$. Under the canonical model (Assumption \ref{as-kmeans-canonical}), $\widehat{\bX} \approx \bX$ and $F ( \widehat\bX, \cdot )\approx F (\bX, \cdot)$. Hence $\widehat{\bM}$ is near-optimal for the program $\min_{\bM} F(\bX, \bM)$.

On the other hand, $F ( \bW,\bM ) = \min_{ \bY \in \cY_{n,K} } \| \bW - \bY \bM \|_{\mathrm{F}} / \sqrt{n}$. Then for any fixed $\bM \in \RR^{d\times K}$, the function $F ( \cdot, \bM )$ is $n^{-1/2}$-Lipschitz. Under Assumption \ref{as-kmeans-t2}, \Cref{lem-kmeans-t2} ensures that $F ( \bX, \bM )$ concentrates well around $\EE F(\bX, \bM)$. As a result, $\widehat{\bM}$ is near-optimal for the population $k$-means loss $\min_{\bM} \EE F(\bX, \bM)$. That characterization of $\widehat{\bM}$ is crucial for the proof of \Cref{eqn-kmeans-proof-1}.

By the triangle's inequality, \Cref{eqn-kmeans-proof-2,eqn-kmeans-proof-1},
\begin{align}
\| \widehat{\bY} \widehat{\bM}^{\top} - \bY^{\star} \bM^{\star \top} \|_{\mathrm{F}} / \sqrt{n} = O_{\PP} \bigg(
\frac{1}{R} + \sqrt{ \frac{d \log n}{n} } ;~ \log n
\bigg) .
\label{eqn-kmeans-proof-3}
\end{align}

Finally, in \Cref{sec-thm-kmeans-consistency-final} we translate this to a bound on the mismatch $\cR ( \widehat{\bY}, \bY^{\star} )$. In fact, we will prove a stronger result: given any constant $C>0$, there exists another constant $C_1 > 0$ such that
\begin{align}
\PP \bigg( 
\exists \tau \in S_K \text{ s.t. }
|\{ i:~ \widehat y_i \neq \tau (y^{\star}_i) \}| \leq C_1 \delta^2,~
\max_{j \in [K]} \| \bmu_{j}^{\star} - \widehat\bmu_{\tau(j)} \|_2 \leq C_1 \delta
\bigg) \geq 1 - n^{-C}.
\label{eqn-kmeans-centers}
\end{align}
Here $\delta = \frac{1}{R} + \sqrt{ \frac{d \log n}{n} } $, $\widehat{\bmu}_j = \sum_{i=1}^{n} \widehat{y}_{ij} \widehat{\bx}_i / \sum_{i=1}^{n} \widehat{y}_{ij}$, $\widehat{y_i}=j$ if and only if $\widehat{y}_{ij} = 1$. The strengthened bound will be used later.

\subsection{Proof of \Cref{thm-kmeans-consistency}}\label{sec-thm-kmeans-consistency-proof}

\subsubsection{Supporting lemmas}

\begin{lemma}\label{lem-kmeans-psi2}
	Under Assumptions \ref{as-kmeans-canonical}, \ref{as-kmeans-t2} and \ref{as-kmeans-balance}, we have $\| \bx_i \|_{\psi_2} \lesssim 1$.
\end{lemma}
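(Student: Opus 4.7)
The plan is to decompose $\bx_i = \bmu^{\star}_{y_i^{\star}} + \bSigma^{\star 1/2} \bz_i$ and bound the sub-Gaussian norm of each piece separately, then use the triangle inequality for $\|\cdot\|_{\psi_2}$.

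First, for the random-center part $\bmu^{\star}_{y_i^{\star}}$: under Assumption~\ref{as-kmeans-canonical} we have $\sum_{j=1}^K \pi_j^{\star} \bmu_j^{\star} \bmu_j^{\star\top} \preceq \bI_d$ since $\bSigma^{\star} \succeq 0$, which forces $\pi_j^{\star} \|\bmu_j^{\star}\|_2^2 \leq 1$ for every $j$. Combined with the lower bound $\min_j \pi_j^{\star} = \Omega(1)$ from Assumption~\ref{as-kmeans-balance}, this gives $\max_j \|\bmu_j^{\star}\|_2 \lesssim 1$. Hence $\|\bmu_{y_i^{\star}}^{\star}\|_2$ is almost surely $O(1)$, so in particular $\|\bmu_{y_i^{\star}}^{\star}\|_{\psi_2} \lesssim 1$ (a bounded random vector is trivially sub-Gaussian with the same order of magnitude).

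Second, for the noise part $\bSigma^{\star 1/2} \bz_i$: by Definition~\ref{defn-kmeans-model}, $\bz_i$ is zero-mean, isotropic, and $T_2(\sigma)$ with $\sigma = O(1)$ under Assumption~\ref{as-kmeans-t2}. Then Lemma~\ref{lem-kmeans-t2}(2) yields $\|\bz_i\|_{\psi_2} = \|\bz_i - \EE \bz_i\|_{\psi_2} \lesssim \sigma \lesssim 1$. For any $\bu \in \SSS^{d-1}$,
\[
\|\langle \bu, \bSigma^{\star 1/2} \bz_i\rangle\|_{\psi_2} = \|\langle \bSigma^{\star 1/2} \bu, \bz_i\rangle\|_{\psi_2} \leq \|\bSigma^{\star 1/2}\bu\|_2 \cdot \|\bz_i\|_{\psi_2}.
\]
Since $\bSigma^{\star} = \bI_d - \sum_j \pi_j^{\star} \bmu_j^{\star}\bmu_j^{\star\top} \preceq \bI_d$ under Assumption~\ref{as-kmeans-canonical}, $\|\bSigma^{\star 1/2}\bu\|_2 \leq 1$, so $\|\bSigma^{\star 1/2}\bz_i\|_{\psi_2} \lesssim 1$.

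Combining the two bounds via the triangle inequality for $\|\cdot\|_{\psi_2}$ yields $\|\bx_i\|_{\psi_2} \lesssim 1$. There is no real obstacle here: the whole argument rests on the observation that the canonical normalization $\sum_j \pi_j^{\star} \bmu_j^{\star}\bmu_j^{\star\top} + \bSigma^{\star} = \bI_d$ bounds both the center magnitudes (through balancedness) and the covariance operator norm, while the $T_2$ assumption transfers to sub-Gaussianity via Lemma~\ref{lem-kmeans-t2}.
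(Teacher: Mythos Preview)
Your proposal is correct and follows essentially the same route as the paper's proof: decompose $\bx_i = \bmu^{\star}_{y_i^{\star}} + \bSigma^{\star 1/2}\bz_i$, bound $\max_j\|\bmu_j^{\star}\|_2$ via the canonical normalization together with balancedness, bound $\|\bz_i\|_{\psi_2}$ via Lemma~\ref{lem-kmeans-t2}, and combine by the triangle inequality. Your argument is in fact slightly more explicit than the paper's, since you spell out the role of $\|\bSigma^{\star 1/2}\|_2 \le 1$ in passing from $\|\bz_i\|_{\psi_2}$ to $\|\bSigma^{\star 1/2}\bz_i\|_{\psi_2}$.
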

\begin{proof}[\bf Proof of \Cref{lem-kmeans-psi2}]
	
	By definition, we have $\EE ( \bx_i \bx_i^{\top} ) = \bI$ and
	\[
	\| \bx_i \|_{\psi_2} = \| \bmu_{y_i^{\star}} + \bSigma^{\star 1/2} \bz_i \|_{\psi_2} 
	\leq \max_{j \in [K]} \| \bmu_j^{\star} \|_2 + \| \bz_i \|_{\psi_2} .
	\]
	The normalization condition $\bI = \sum_{j=1}^{K} \pi_j^{\star} \bmu_j^{\star} \bmu_j^{\star \top} + \bSigma^{\star}$, \Cref{lem-kmeans-t2} and Assumption \ref{as-kmeans-balance} yield
	$\max_{j \in [K]} \| \bmu_j^{\star} \|_2 \lesssim 1 $ and $\| \bz_i \|_{\psi_2} \lesssim 1$, respectively. Hence $\| \bx_i \|_{\psi_2} \lesssim 1 $. 
\end{proof}

\begin{lemma}[Lipschitz continuity]\label{lem-kmeans-lipschitz}
	For any $\bX \in \RR^{n\times d}$, $\bX' \in \RR^{n\times d}$, $\bM \in \RR^{d \times K} $ and $\bM' \in \RR^{d \times K} $,
	\begin{align*}
	&| F ( \bX , \bM ) - F ( \bX' , \bM ) | \leq \| \bX - \bX' \|_{\mathrm{F}} / \sqrt{n}, \\
	&| F ( \bX , \bM ) - F ( \bX , \bM' ) | \leq \| \bM - \bM' \|_{\mathrm{F}} .
	\end{align*}
\end{lemma}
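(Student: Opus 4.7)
\medskip
\noindent\textbf{Proof proposal.} Both bounds will follow from the observation, already noted in the paper, that
\[
F(\bW,\bM) \;=\; \min_{\bY \in \cY_{n,K}} \frac{1}{\sqrt{n}} \bigl\| \bW - \bY \bM^{\top} \bigr\|_{\mathrm{F}},
\]
since picking $\bY \in \cY_{n,K}$ amounts to assigning each row of $\bW$ to some $\bmu_j$, and the unconstrained minimizer of $\sum_i \|\bw_i - \bmu_{y_i}\|_2^2$ over $y_i \in [K]$ is exactly the nearest-center assignment. Thus $F$ is the minimum over a finite family $\{\bY\}$ of functions $(\bX,\bM)\mapsto \|\bX - \bY\bM^{\top}\|_{\mathrm{F}}/\sqrt{n}$. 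Since a pointwise minimum of $L$-Lipschitz functions is $L$-Lipschitz, it suffices to prove each bound for every fixed $\bY \in \cY_{n,K}$.

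With $\bY$ fixed, the first bound is immediate: for any $\bX, \bX'$,
\[
\Bigl| \| \bX - \bY\bM^{\top} \|_{\mathrm{F}} - \| \bX' - \bY\bM^{\top} \|_{\mathrm{F}} \Bigr| \;\leq\; \| \bX - \bX' \|_{\mathrm{F}}
\]
by the reverse triangle inequality in the Frobenius norm, and dividing by $\sqrt{n}$ gives the claim. For the second bound, the analogous reverse triangle inequality yields
\[
\Bigl| \| \bX - \bY\bM^{\top} \|_{\mathrm{F}} - \| \bX - \bY{\bM'}^{\top} \|_{\mathrm{F}} \Bigr| \;\leq\; \| \bY(\bM - \bM')^{\top} \|_{\mathrm{F}},
\]
so I just need to check that $\|\bY \bDelta^{\top}\|_{\mathrm{F}} \leq \sqrt{n}\,\|\bDelta\|_{\mathrm{F}}$ for every $\bDelta = \bM - \bM' \in \RR^{d\times K}$ and every $\bY \in \cY_{n,K}$. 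This is a one-line computation: $\bY^{\top}\bY = \mathrm{diag}(n_1,\ldots,n_K)$ with $n_j$ the $j$-th cluster size, so
\[
\| \bY \bDelta^{\top} \|_{\mathrm{F}}^2 \;=\; \operatorname{Tr}\bigl( \bDelta \bY^{\top}\bY \bDelta^{\top} \bigr) \;=\; \sum_{j=1}^K n_j \| \bmu_j - \bmu_j' \|_2^2 \;\leq\; n \,\| \bDelta \|_{\mathrm{F}}^2.
\]
Dividing by $\sqrt{n}$ then yields the second bound.

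There is no real obstacle here; the only point that requires a moment's thought is the bound $\|\bY\bDelta^{\top}\|_{\mathrm{F}}\leq \sqrt{n}\,\|\bDelta\|_{\mathrm{F}}$, which uses the fact that the rows of $\bY$ are standard basis vectors. If one prefers, both inequalities can be proven directly, bypassing the $\bY$-parametrization, by noting that $\phi_\bM(\bx) := \min_{j\in[K]}\|\bx-\bmu_j\|_2$ is $1$-Lipschitz in $\bx$ (as a minimum of $1$-Lipschitz functions) and that $\bM\mapsto \phi_{\bM}(\bx)$ satisfies $|\phi_{\bM}(\bx)-\phi_{\bM'}(\bx)|\leq \max_{j}\|\bmu_j-\bmu_j'\|_2\leq \|\bM-\bM'\|_{\mathrm{F}}$, and then applying the reverse triangle inequality in $\RR^n$ to $F(\bX,\bM) = n^{-1/2}\|(\phi_{\bM}(\bx_1),\ldots,\phi_{\bM}(\bx_n))\|_2$.
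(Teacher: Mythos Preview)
Your proof is correct and follows essentially the same route as the paper: both use the representation $F(\bX,\bM)=\min_{\bY\in\cY_{n,K}} n^{-1/2}\|\bX-\bY\bM^{\top}\|_{\mathrm{F}}$ and then check Lipschitz continuity for each fixed $\bY$. The only cosmetic difference is that the paper bounds $\|\bY\bDelta^{\top}\|_{\mathrm{F}}\leq \|\bY\|_2\|\bDelta\|_{\mathrm{F}}$ via $\|\bY\|_2\leq\sqrt{n}$, whereas you compute $\|\bY\bDelta^{\top}\|_{\mathrm{F}}^2=\sum_j n_j\|\bmu_j-\bmu_j'\|_2^2$ directly; these are equivalent.
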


\begin{proof}[\bf Proof of \Cref{lem-kmeans-lipschitz}]
	Recall that
	\begin{align}
	F ( \bX, \bM ) = \min_{\bY \in \{ 0, 1 \}^{n\times K}, ~ \bY \bm{1}_K = \bm{1}_n  } 
	n^{-1/2} \| \bX - \bY \bM^{\top} \|_{\mathrm{F}}.
	\label{eqn-proof-lem-kmeans-lipschitz}
	\end{align}
	For any fixed feasible $\bY$ and $\bM \in \RR^{K\times d}$, $	n^{-1/2} \| \bX - \bY \bM^{\top} \|_{\mathrm{F}}$ is clearly $n^{-1/2}$-Lipschitz in $\bX$ with respect to the Frobenius norm $\| \cdot \|_{\mathrm{F}}$. Equation (\ref{eqn-proof-lem-kmeans-lipschitz}) then yields the Lipschitz continuity of $F ( \bX, \bM)$ in $\bX$.
	
	In light of $\| \bY \|_2 \leq \sqrt{n}$, $	n^{-1/2} \| \bX - \bY \bM^{\top} \|_{\mathrm{F}}$ is $1$-Lipschitz in $\bM$. We then get the same property of $F ( \bX, \bM)$ from Equation (\ref{eqn-proof-lem-kmeans-lipschitz}).
\end{proof}

\subsubsection{Proof of \Cref{eqn-kmeans-proof-2}}\label{sec-eqn-kmeans-proof-2-proof}

By $\bX = \bY^{\star} \bM^{\star \top} + \bZ \bSigma^{\star 1/2}$ and $\bPi^{\star} \bM^{\star}  = \bM^{\star}$,
\begin{align*}
\| \bX \bPi^{\star} - \bY^{\star} \bM^{\star \top} \|_{\mathrm{F}} = \| \bZ \bSigma^{\star 1/2} \bPi^{\star} \|_{\mathrm{F}} \leq \| \bZ \|_2 \| \bSigma^{\star 1/2} \bPi^{\star} \|_{\mathrm{F}} = \sqrt{n} \cdot \sqrt{ \| n^{-1} \bZ^{\top} \bZ \|_2 \cdot \langle \bPi^{\star}, \bSigma^{\star} \rangle  }.
\end{align*}
Note that $\bZ^{\top} \bZ = \sum_{i=1}^{n} \bz_i \bz_i^{\top}$. By Assumption \ref{as-kmeans-t2} and \Cref{lem-kmeans-t2}, $\| \bz_i \|_{\psi_2} \lesssim 1$. \Cref{cor-cov} implies that
\[
\|  n^{-1} \bZ^{\top} \bZ \|_2 \leq \|  n^{-1} \bZ^{\top} \bZ - \bI \|_2 + 1 = O_{\PP} ( 1;~\log n ).
\]
We invoke the following lemma to get $\langle \bPi^{\star}, \bSigma^{\star} \rangle \lesssim 1 / R^2$ and finish the proof.

\begin{lemma}\label{lem-kmeans-cov-projection}
	Let $\bS^{\star} = \sum_{j = 1}^K \pi_j^{\star} (\bmu_j^{\star} - \bar\bmu^{\star}) (\bmu_j^{\star} - \bar\bmu^{\star})^{\top} + \bSigma^{\star}$, $\bPi^{\star}$ be the projection operator onto $\mathrm{span} \{ \bS^{\star -1/2} (\bmu_j^{\star} - \bar\bmu^{\star}) \}_{j=1}^K$, and $\pi_{\min}^{\star} = \min_{j \in [K] } \pi_j^{\star}$. Under Assumption \ref{as-kmeans-signal}, we have
	\begin{align*}
	& \min_{j \neq k } \| \bS^{\star -1/2} (  \bmu_j^{\star} - \bmu_k^{\star} ) \|_2
	\geq \frac{ 1 / \sqrt{\pi_{\max}^{\star}} }{\sqrt{ 1 + 1 / ( \pi_{\min}^{\star}  R^2 ) }	} , \\
	& \langle \bPi^{\star} , \bS^{\star -1/2} \bSigma^{\star} \bS^{\star -1/2} \rangle \leq 
	\frac{K - 1}{1 + \pi_{\min}^{\star} R^2 } .
	\end{align*}
\end{lemma}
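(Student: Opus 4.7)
The plan is to reduce both bounds to eigenvalue statements for the $K \times K$ symmetric matrix $\bGamma_D := \bD^{1/2} \bV^{\star\top} \bV^{\star} \bD^{1/2}$, where $\bD := \diag(\bpi^{\star})$. I start from the factorization $\bS^{\star} = \bSigma^{\star 1/2}(\bI + \bV^{\star} \bD \bV^{\star\top}) \bSigma^{\star 1/2}$ and invoke the Sherman--Morrison--Woodbury identity
\[
(\bI + \bV^{\star}\bD\bV^{\star\top})^{-1} = \bI - \bV^{\star}(\bD^{-1} + \bGamma)^{-1}\bV^{\star\top}, \qquad \bGamma := \bV^{\star\top}\bV^{\star},
\]
which is valid since $\bD^{-1} + \bGamma \succeq \bD^{-1} \succ 0$. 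A short algebraic manipulation using $\bD^{-1} + \bGamma = \bD^{-1/2}(\bI + \bGamma_D)\bD^{-1/2}$ then rewrites the inner $K\times K$ factor as $\bD^{-1} - \bD^{-1/2}(\bI + \bGamma_D)^{-1}\bD^{-1/2}$. The key structural observation is that $\bGamma_D$ is singular with nullspace exactly $\mathrm{span}\{\bpi^{\star 1/2}\}$, since $\bV^{\star}\bpi^{\star} = \bSigma^{\star -1/2}\sum_j \pi_j^{\star}(\bmu_j^{\star} - \bar\bmu^{\star}) = \bm{0}$; meanwhile every \emph{nonzero} eigenvalue of $\bGamma_D$ is at least $\pi_{\min}^{\star} R^2$, because $\sigma_i(\bV^{\star}\bD^{1/2}) \geq \sqrt{\pi_{\min}^{\star}}\,\sigma_i(\bV^{\star})$ and Assumption~\ref{as-kmeans-signal} gives $\sigma_{K-1}(\bV^{\star}) \geq R$.

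For the separation bound, set $\bv := \bD^{-1/2}(\be_j - \be_k) = \be_j/\sqrt{\pi_j^{\star}} - \be_k/\sqrt{\pi_k^{\star}}$. Writing $\bmu_j^{\star} - \bmu_k^{\star} = \bSigma^{\star 1/2}\bV^{\star}(\be_j - \be_k)$ and plugging into the preceding Woodbury rewrite yields the identity
\[
\|\bS^{\star -1/2}(\bmu_j^{\star} - \bmu_k^{\star})\|_2^2 = \Bigl(\tfrac{1}{\pi_j^{\star}} + \tfrac{1}{\pi_k^{\star}}\Bigr) - \bv^{\top}(\bI + \bGamma_D)^{-1}\bv.
\]
A direct calculation gives $\langle \bv, \bpi^{\star 1/2}\rangle = \langle \be_j - \be_k, \bm{1}_K\rangle = 0$, so $\bv$ lies in the orthogonal complement of the nullspace of $\bGamma_D$, on which $(\bI + \bGamma_D)^{-1} \preceq (1 + \pi_{\min}^{\star} R^2)^{-1}\bI$. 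Hence $\|\bS^{\star -1/2}(\bmu_j^{\star} - \bmu_k^{\star})\|_2^2 \geq (1/\pi_j^{\star} + 1/\pi_k^{\star})\,\pi_{\min}^{\star} R^2/(1 + \pi_{\min}^{\star} R^2)$, and the claimed bound follows from $1/\pi_j^{\star} + 1/\pi_k^{\star} \geq 1/\pi_{\max}^{\star}$.

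For the projection bound, substitute $\bSigma^{\star} = \bS^{\star} - \bSigma^{\star 1/2}\bV^{\star}\bD\bV^{\star\top}\bSigma^{\star 1/2}$ to obtain $\bS^{\star -1/2}\bSigma^{\star}\bS^{\star -1/2} = \bI - \widetilde\bV^{\star}\bD\widetilde\bV^{\star\top}$, where $\widetilde\bV^{\star} := \bS^{\star -1/2}\bSigma^{\star 1/2}\bV^{\star}$ is precisely the matrix whose column span defines $\bPi^{\star}$. Since $\bPi^{\star}\widetilde\bV^{\star} = \widetilde\bV^{\star}$ and $\bPi^{\star}$ has rank $K-1$, cyclicity of the trace gives
\[
\langle \bPi^{\star}, \bS^{\star -1/2}\bSigma^{\star}\bS^{\star -1/2}\rangle = (K-1) - \Tr(\widetilde\bV^{\star\top}\widetilde\bV^{\star}\bD).
\]
Applying the same Woodbury rewrite to $\widetilde\bV^{\star\top}\widetilde\bV^{\star} = \bV^{\star\top}(\bI + \bV^{\star}\bD\bV^{\star\top})^{-1}\bV^{\star}$ and simplifying reduces this trace to $K - \Tr((\bI + \bGamma_D)^{-1})$; the null eigenvalue of $\bGamma_D$ contributes exactly $1$ to $\Tr((\bI + \bGamma_D)^{-1})$, while the remaining $K-1$ eigenvalues contribute at most $(K-1)/(1 + \pi_{\min}^{\star}R^2)$. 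Collecting terms yields $\langle \bPi^{\star}, \bS^{\star -1/2}\bSigma^{\star}\bS^{\star -1/2}\rangle \leq (K-1)/(1 + \pi_{\min}^{\star}R^2)$, as claimed.

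The main work is careful bookkeeping around the rank-$(K-1)$ degeneracy of $\bV^{\star}$: several natural formulas appear to invert the singular matrix $\bGamma$, so each such step must be rerouted through $\bD^{-1} + \bGamma$ or $\bI + \bGamma_D$, both of which are strictly positive definite. Once the degenerate direction $\bpi^{\star 1/2}$ is isolated---orthogonal to the test vector in the first bound, and contributing a single known term to the trace in the second---the effective lower bound $\pi_{\min}^{\star}R^2$ on the remaining eigenvalues of $\bGamma_D$ supplies both inequalities with no further work.
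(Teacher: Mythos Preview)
Your proof is correct and rests on the same spectral identity as the paper's: both factor $\bS^{\star} = \bSigma^{\star 1/2}(\bI + \bV^{\star}\bD\bV^{\star\top})\bSigma^{\star 1/2}$ and reduce the two bounds to statements about the nonzero eigenvalues of the $K\times K$ matrix $\bGamma_D = (\bV^{\star}\bD^{1/2})^{\top}(\bV^{\star}\bD^{1/2})$ (the paper phrases this via the singular values of $\bSigma^{\star -1/2}\bU^{\star} = \bV^{\star}\bD^{1/2}$, which is equivalent), together with the lower bound $\sigma_{K-1}(\bV^{\star}\bD^{1/2}) \geq \sqrt{\pi_{\min}^{\star}}\,R$.

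The one place your route differs is the separation bound. The paper first controls $\sigma_{K-1}$ of the full matrix $\bW^{\star} = \bS^{\star -1/2}(\bmu_1^{\star}-\bar\bmu^{\star},\ldots,\bmu_K^{\star}-\bar\bmu^{\star})$ and then invokes the argument behind Fact~\ref{fact-kmeans-separation} to pass from a singular-value bound to a pairwise-distance bound. You instead write down the exact quadratic form $\|\bS^{\star -1/2}(\bmu_j^{\star}-\bmu_k^{\star})\|_2^2 = \|\bv\|_2^2 - \bv^{\top}(\bI+\bGamma_D)^{-1}\bv$ and observe that $\bv = \bD^{-1/2}(\be_j-\be_k)$ is orthogonal to the null direction $\bpi^{\star 1/2}$ of $\bGamma_D$, so the eigenvalue bound applies directly. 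This is a bit more self-contained since it bypasses the Fact~\ref{fact-kmeans-separation} detour, and in fact yields the slightly stronger constant $2/\pi_{\max}^{\star}$ in place of $1/\pi_{\max}^{\star}$ (you discard the factor of $2$ at the end). For the projection bound the two arguments are line-for-line equivalent once one matches $\Tr((\bI+\bGamma_D)^{-1})$ with the paper's $\sum_j 1/(1+\sigma_j^2(\bSigma^{\star -1/2}\bU^{\star}))$.
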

\begin{proof}[\bf Proof of Lemma \ref{lem-kmeans-cov-projection}]
	See Appendix \ref{proof-lem-kmeans-cov-projection}.
\end{proof}

\subsubsection{Proof of \Cref{eqn-kmeans-proof-1}}\label{sec-eqn-kmeans-proof-1-proof}

\begin{lemma}\label{lem-kmeans-reduction}
	Under Assumptions \ref{as-kmeans-canonical}, \ref{as-kmeans-t2} and \ref{as-kmeans-balance},	
	\[
	\sup_{ \bPi \in \cP_{d, K - 1},  \bM \in \RR^{d \times K} } | F ( \widehat{\bX} \bPi , \bM ) - F ( \bX \bPi , \bM ) | 
	\leq  \sqrt{ \frac{K - 1}{n} }\| \widehat{\bX} - \bX \|_2 
	= O_{\PP} \bigg(
	\sqrt{\frac{ d \log n }{n}} ;~   \log n
	\bigg).
	\]
\end{lemma}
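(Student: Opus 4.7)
The plan has two clean pieces: (i) a deterministic Lipschitz bound that yields the first inequality uniformly over $(\bPi, \bM)$, and (ii) a probabilistic bound on the spectral distance $\|\widehat\bX - \bX\|_2$ using sub-Gaussian concentration of the empirical mean and covariance.

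\textbf{Step 1 (deterministic Lipschitz reduction).} I first fix $\bPi \in \cP_{d, K-1}$ and $\bM \in \RR^{d \times K}$. By Lemma~\ref{lem-kmeans-lipschitz} applied in the first argument,
\[
|F(\widehat\bX \bPi, \bM) - F(\bX \bPi, \bM)| \le \frac{1}{\sqrt n}\,\|(\widehat\bX - \bX)\bPi\|_{\mathrm F}.
\]
Since $\bPi$ has rank $K-1$, for any matrix $\bA \in \RR^{n\times d}$ one has $\|\bA\bPi\|_{\mathrm F}^2 = \Tr(\bPi \bA^\top \bA \bPi) \le (K-1)\|\bA\|_2^2$. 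Applying this with $\bA = \widehat\bX - \bX$ gives
\[
|F(\widehat\bX \bPi, \bM) - F(\bX \bPi, \bM)| \le \sqrt{\tfrac{K-1}{n}}\,\|\widehat\bX - \bX\|_2,
\]
which is already uniform in $(\bPi, \bM)$. This proves the first inequality of the lemma.

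\textbf{Step 2 (bounding $\|\widehat\bX - \bX\|_2$).} Recall $\widehat\bX = \bJ \bX \widetilde\bSigma^{-1/2}$ with $\widetilde\bSigma = n^{-1}\bX^\top \bJ \bX$ and $\bar\bx = n^{-1} \bX^\top \bm 1_n$. I decompose
\[
\widehat\bX - \bX = \bX(\widetilde\bSigma^{-1/2} - \bI) - \bm 1_n \bar\bx^\top \widetilde\bSigma^{-1/2}.
\]
Under Assumption~\ref{as-kmeans-canonical}, $\EE \bx_i = \bm 0$ and $\EE(\bx_i \bx_i^\top) = \bI_d$, and by Lemma~\ref{lem-kmeans-psi2} we have $\|\bx_i\|_{\psi_2} \lesssim 1$. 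Standard sub-Gaussian concentration (of the form in Lemma~\ref{lem-cov} / Corollary~\ref{cor-cov}) then yields, under the hypothesis $n \gg d \log n$,
\[
\|\widetilde\bSigma - \bI\|_2 = O_\PP\!\left(\sqrt{\tfrac{d \log n}{n}};~\log n\right),\qquad \|\bar\bx\|_2 = O_\PP\!\left(\sqrt{\tfrac{d \log n}{n}};~\log n\right),
\]
and consequently $\|\widetilde\bSigma^{-1/2} - \bI\|_2 = O_\PP(\sqrt{d \log n / n};~\log n)$ and $\|\widetilde\bSigma^{-1/2}\|_2 = O_\PP(1;~\log n)$. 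In addition, $\|\bX\|_2 \le \|\widetilde\bSigma^{1/2}\|_2 \sqrt n + \sqrt n \|\bar\bx\|_2^{1/2}$-type bounds give $\|\bX\|_2 = O_\PP(\sqrt n;~\log n)$, and $\|\bm 1_n\|_2 = \sqrt n$. Combining,
\[
\|\widehat\bX - \bX\|_2 \le \|\bX\|_2 \|\widetilde\bSigma^{-1/2} - \bI\|_2 + \sqrt n\,\|\bar\bx\|_2\,\|\widetilde\bSigma^{-1/2}\|_2 = O_\PP\!\left(\sqrt{d \log n};~\log n\right).
\]
Plugging into Step~1 and using $K = O(1)$ (Assumption~\ref{as-kmeans-balance}) produces the claimed $O_\PP(\sqrt{d \log n / n};~\log n)$ bound.

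\textbf{Main obstacle.} The only nontrivial point is keeping the bound uniform in $(\bPi, \bM)$ while avoiding any dimension-dependent covering argument; this is precisely what the rank-$(K-1)$ inequality $\|\bA\bPi\|_{\mathrm F} \le \sqrt{K-1}\|\bA\|_2$ buys us, letting us pass from the Frobenius to the spectral norm at the cost of only a $\sqrt{K-1}$ factor. Everything else is bookkeeping with the $O_\PP$ calculus from Definition~\ref{defn-o} and the already-proved covariance/mean concentration for sub-Gaussian rows.
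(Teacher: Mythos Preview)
Your proposal is correct and follows essentially the same approach as the paper. Step~1 is identical to the paper's argument. In Step~2 the paper uses a slightly different but equivalent decomposition: it keeps $\bJ\bX$ together and exploits the exact identity $n^{-1}\|\bJ\bX(\widetilde\bSigma^{-1/2}-\bI)\|_2^2 = \|\bI - \widetilde\bSigma^{1/2}\|_2^2$ (from $\bJ^\top\bJ=\bJ$ and $n^{-1}\bX^\top\bJ\bX=\widetilde\bSigma$), whereas you expand $\bJ$ first and bound $\|\bX\|_2$ separately; both routes land on the same $O_\PP$ estimate. Your one garbled line about ``$\|\bX\|_2 \le \|\widetilde\bSigma^{1/2}\|_2\sqrt n + \sqrt n\|\bar\bx\|_2^{1/2}$-type bounds'' should simply read $\|\bX\|_2^2/n = \|\widetilde\bSigma + \bar\bx\bar\bx^\top\|_2 = O_\PP(1;\log n)$, but the conclusion is right.
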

\begin{proof}[\bf Proof of \Cref{lem-kmeans-reduction}]
	See \Cref{proof-lem-kmeans-reduction}.
\end{proof}

We invoke the fact $\| \bPi^{\star}  \|_{\mathrm{F}}^2 = K - 1$ and \Cref{lem-kmeans-reduction} to get
\begin{align}
& \| \bX \bPi^{\star} - \widehat{\bY} \widehat{\bM}^{\top} \|_{\mathrm{F}}   \leq 
\| \widehat\bX \bPi^{\star} - \widehat{\bY} \widehat{\bM}^{\top} \|_{\mathrm{F}} + \| ( \widehat\bX - \bX ) \bPi^{\star}  \|_{\mathrm{F}} \notag \\
& \leq \| \widehat\bX \bPi^{\star} - \widehat{\bY} \widehat{\bM}^{\top} \|_{\mathrm{F}} + \sqrt{K - 1} \|  \widehat\bX - \bX \|_2 
= \| \widehat\bX \bPi^{\star} - \widehat{\bY} \widehat{\bM}^{\top} \|_{\mathrm{F}} + O_{\PP} (\sqrt{d \log n} ;~ \log n).
\label{eqn-kmeans-proof-1-1}
\end{align}
Below we control $ \| \widehat\bX \bPi^{\star} - \widehat{\bY} \widehat{\bM}^{\top} \|_{\mathrm{F}} $. From $\widehat{\bM} = \widehat{\bX}^{\top} \widehat{\bY} ( \widehat{\bY}^{\top}  \widehat{\bY})^{\dagger}$ we obtain that
\begin{align*}
& \| \widehat{\bM}^{\top} \|_{2,\infty} \leq \| \widehat{\bX} \|_{2,\infty} \leq \| \widehat\bX \|_2 = \sqrt{n}, \\
& \widehat{\bM} \diag( \widehat\bY^{\top} \widehat\bY ) = \widehat{\bX}^{\top} \bm{1}_n = \bm{0},
\end{align*}
and $\mathrm{rank} (\widehat\bM) \leq K - 1$. Choose any projection matrix $\widehat\bPi \in \cP_{d, K - 1}$ such that $\mathrm{range}(\widehat\bM) \subseteq \mathrm{range}(\widehat\bPi)$. We use $\widehat\bX^{\top} \widehat\bX = n \bI$ to get
\begin{align}
\| \widehat\bX \bPi^{\star} - \widehat{\bY} \widehat{\bM}^{\top} \|_{\mathrm{F}} 
\leq  \| \widehat\bX \widehat{\bPi} - \widehat{\bY} \widehat{\bM}^{\top} \|_{\mathrm{F}} 
+ \| \widehat\bX  (\widehat{\bPi} - \bPi^{\star} ) \|_{\mathrm{F}} 
\leq \| \widehat\bX \widehat{\bPi} - \widehat{\bY} \widehat{\bM}^{\top} \|_{\mathrm{F}} +
\sqrt{n} \| \widehat{\bPi} - \bPi^{\star}  \|_{\mathrm{F}} .
\label{eqn-kmeans-proof-1-2}
\end{align}

We invoke a lemma to control $\| \widehat{\bPi} - \bPi^{\star}  \|_{\mathrm{F}} $.

\begin{lemma}[From $k$-means loss to subspace error]\label{lem-kmeans-projections}
	Under Assumptions \ref{as-kmeans-canonical}, \ref{as-kmeans-t2} and \ref{as-kmeans-balance}, we have
	\begin{align*}
	\sup_{ \bPi \in \cP_{d, K - 1}, ~ \| \bM^{\top} \|_{2,\infty} \leq \sqrt{n} } 
	\frac{ 
		\| \bPi - \bPi^{\star} \|_{\mathrm{F}}
	}{
		F ( \widehat\bX \bPi , \bM )  + \sqrt{ \frac{   d \log n   }{n} }
	}
	= O_{\PP} ( 1 ;~  \log n ).
	\end{align*}		
\end{lemma}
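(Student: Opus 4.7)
My plan is to reduce the projection-estimation error $\|\bPi-\bPi^\star\|_\mathrm{F}$ to the $k$-means loss at the population level, then pass to the empirical loss via concentration and back to $\widehat\bX$ via Lemma \ref{lem-kmeans-reduction}. First, Lemma \ref{lem-kmeans-reduction} lets me replace $\widehat\bX$ by $\bX$ up to a uniform error $O_\PP(\sqrt{d\log n/n};\log n)$, so it suffices to prove $\|\bPi-\bPi^\star\|_\mathrm{F} \lesssim F(\bX\bPi,\bM) + \sqrt{d\log n/n}$ uniformly over $(\bPi,\bM)$ in the stated sets.

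For the population lower bound, I would condition on $y_1^\star = k$, giving $\bPi\bx_1 = \bPi\bmu_k^\star + \bPi\bSigma^{\star 1/2}\bz_1$ with $\bz_1\sim\QQ\in T_2(\sigma)$. Applying Lemma \ref{lem-kmeans-lower} with shift $\bPi\bmu_k^\star$ and scale $\bA = \bPi\bSigma^{\star 1/2}$, then averaging over $k$, yields
\[
\EE F(\bX\bPi,\bM)^2 \;\geq\; C K^{-5}\|\bPi\bSigma^{\star 1/2}\|_\mathrm{F}^2
\]
uniformly in $\bM$. The canonical-model identity $\bSigma^\star = \bI - \bM^\star\diag(\bpi^\star)\bM^{\star\top}$ combined with $\Range(\bM^\star)\subseteq\Range(\bPi^\star)$ gives $\bSigma^\star - (\bI-\bPi^\star) = \bPi^\star\bSigma^\star\bPi^\star \succeq 0$, whence
\[
\|\bPi\bSigma^{\star 1/2}\|_\mathrm{F}^2 = \Tr(\bPi\bSigma^\star) \;\geq\; \Tr\bigl(\bPi(\bI-\bPi^\star)\bigr) = \tfrac{1}{2}\|\bPi-\bPi^\star\|_\mathrm{F}^2.
\]
Combined with Assumption \ref{as-kmeans-balance} ($K=O(1)$), this delivers $\EE F(\bX\bPi,\bM)^2 \gtrsim \|\bPi-\bPi^\star\|_\mathrm{F}^2$.

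For uniform concentration of $F(\bX\bPi,\bM)$ around $\EE F(\bX\bPi,\bM)$, Lemma \ref{lem-kmeans-lipschitz} shows $F(\cdot,\bM)$ is $n^{-1/2}$-Lipschitz in the Frobenius norm. Writing $\bX = \bY^\star\bM^{\star\top}+\bZ\bSigma^{\star 1/2}$ with $\|\bSigma^{\star 1/2}\|_2\leq 1$ makes $\bZ\mapsto F(\bX\bPi,\bM)$ an $n^{-1/2}$-Lipschitz function of $\bZ$, so Lemma \ref{lem-kmeans-t2}(1) gives sub-Gaussian deviations at scale $1/\sqrt{n}$ pointwise. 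A covering argument over $\cP_{d,K-1}$ (of metric entropy $O(dK)$) together with a net on $\bM$ then produces the uniform concentration rate $\sqrt{d\log n/n}$ with probability $1-n^{-C}$. Since $\var(F)=O(1/n)$, the dichotomy $(\EE F)^2 = \EE F^2 - \var(F)$ either yields $\EE F \gtrsim \|\bPi-\bPi^\star\|_\mathrm{F}$ (in the regime $\|\bPi-\bPi^\star\|_\mathrm{F} \gtrsim 1/\sqrt{n}$) or forces $\|\bPi-\bPi^\star\|_\mathrm{F} \lesssim 1/\sqrt{n}$ directly. Combining with the uniform concentration bound and Step 1 closes the argument.

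The principal obstacle I anticipate is accommodating the loose constraint $\|\bM^\top\|_{2,\infty}\leq\sqrt{n}$ in the covering step: a direct $\varepsilon$-net of a Frobenius ball of radius $\sqrt{n}$ has cardinality $(\sqrt{n}/\varepsilon)^{dK}$, which is too large at the target resolution $\varepsilon \sim \sqrt{d/n}$. I would bypass this by a preliminary dichotomy---if $F(\widehat\bX\bPi,\bM)\geq c$ for a universal constant $c$, the target bound is vacuous since $\|\bPi-\bPi^\star\|_\mathrm{F}\leq\sqrt{2(K-1)}=O(1)$; otherwise, each nonempty-cluster centroid $\bmu_j$ appearing in the $k$-means objective must lie within $O(\max_i\|\bx_i\|_2)=O(\sqrt{d\log n})$ of the data (via $\|\bx_i\|_{\psi_2}\lesssim 1$ from Lemma \ref{lem-kmeans-psi2}), so the effective net radius drops from $\sqrt{n}$ to $O(\sqrt{d\log n})$ and the covering number becomes $e^{O(dK\log n)}$, matching the sub-Gaussian tail budget.
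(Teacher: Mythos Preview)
Your proposal is essentially correct and tracks the paper's proof closely: the reduction to $F(\bX\bPi,\bM)$ via Lemma~\ref{lem-kmeans-reduction}, the population lower bound $\EE[F^2\mid\bY^\star]\gtrsim\langle\bPi,\bSigma^\star\rangle$ via Lemma~\ref{lem-kmeans-lower}, the key inequality $\langle\bPi,\bSigma^\star\rangle\geq\tfrac12\|\bPi-\bPi^\star\|_{\mathrm F}^2$ (the paper derives it by the same route, writing $(\bI-\bPi^\star)\bSigma^\star(\bI-\bPi^\star)=\bI-\bPi^\star$), the passage from $\EE F^2$ to $(\EE F)^2$ via the variance bound $\var(F\mid\bY^\star)\lesssim 1/n$, and finally uniform concentration by covering $\cP_{d,K-1}\times\{\bM\}$.

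Your ``principal obstacle'' is not a real obstacle, and the workaround you sketch is unnecessary. A direct $\varepsilon$-net of the Frobenius ball $\{\|\bM\|_{\mathrm F}\leq\sqrt{Kn}\}$ at resolution $\varepsilon\asymp 1/\sqrt{n}$ has log-cardinality $Kd\log(1+2\sqrt{Kn}/\varepsilon)=O(d\log n)$ since $K=O(1)$. The pointwise sub-Gaussian tail $\exp(-nt^2/(2\sigma^2))$ at $t=C'\sqrt{d\log n/n}$ gives exponent $\gtrsim d\log n$, which absorbs the union bound with room to spare. This is exactly what the paper does (see the construction of $\widehat{\cS}_2$ and $\widehat\bTheta$ in the proof of \eqref{eqn-kmeans-part2-0}); your arithmetic $(\sqrt{n}/\varepsilon)^{dK}$ at $\varepsilon\sim\sqrt{d/n}$ already gives $\log$-cardinality $\lesssim d\log n$, so the phrase ``too large'' is simply a miscalculation. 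You can drop the dichotomy on the size of $F$ and the appeal to $\max_i\|\bx_i\|_2$ entirely.
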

\begin{proof}[\bf Proof of \Cref{lem-kmeans-projections}]
	See \Cref{sec-lem-kmeans-projections-proof}.
\end{proof}

\Cref{lem-kmeans-projections} asserts that
\begin{align} 
\| \widehat\bPi - \bPi^{\star} \|_{\mathrm{F}}
= O_{\PP} \bigg( 	F ( \widehat\bX \widehat\bPi , \widehat\bM )  + \sqrt{ \frac{   d \log n   }{n} } ;~  \log n \bigg).
\label{eqn-kmeans-proof-1-3}
\end{align}
To study $\| \widehat\bX \widehat{\bPi} - \widehat{\bY} \widehat{\bM}^{\top} \|_{\mathrm{F}} $ in \Cref{eqn-kmeans-proof-1-2}, we invoke the following lemma.

\begin{lemma}[$k$-means on projected data]\label{lem-kmeans-proj}
	Let $\bW \in \RR^{n \times d}$ and $\bM \in \RR^{d \times K}$. If $\bPi \in \RR^{d \times d}$ is a projection operator such that $\Range(\bM) \subseteq \Range(\bPi)$, then
	\begin{align*}
	& \argmin_{\bY \in \cY_{n,K}} \| \bW - \bY \bM^{\top} \|_{\mathrm{F}} =  \argmin_{\bY \in \cY_{n,K}} \| \bW \bPi - \bY \bM^{\top} \|_{\mathrm{F}} ,\\
	& F^2 (\bW, \bM) = F^2 ( \bW \bPi , \bM) + \| \bW (\bI - \bPi  )
	\|_{\mathrm{F}}^2 / n .
	\end{align*}
\end{lemma}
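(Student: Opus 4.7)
The plan is to prove both assertions simultaneously via an orthogonal (Pythagorean) decomposition of the residual $\bW - \bY \bM^\top$ with respect to $\bPi$ and $\bI - \bPi$. Since $\bPi$ is an orthogonal projection with $\Range(\bM) \subseteq \Range(\bPi)$, each column of $\bM$ is fixed by $\bPi$, so $\bPi \bM = \bM$ and equivalently $\bM^\top \bPi = \bM^\top$. I will use this as the single algebraic input.

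First I would write
\[
\bW - \bY \bM^\top \;=\; \bigl(\bW \bPi - \bY \bM^\top\bigr) + \bW(\bI - \bPi),
\]
which is valid because $\bY \bM^\top (\bI - \bPi) = \bY (\bM^\top - \bM^\top \bPi) = \bm{0}$. Next I would verify that the two summands are orthogonal in the Frobenius inner product. Expanding
\[
\bigl\langle \bW \bPi - \bY \bM^\top,\; \bW(\bI - \bPi)\bigr\rangle_{\mathrm{F}} = \Tr\bigl[\bPi \bW^\top \bW (\bI - \bPi)\bigr] - \Tr\bigl[\bM \bY^\top \bW (\bI - \bPi)\bigr],
\]
the first trace vanishes because $(\bI - \bPi)\bPi = \bm{0}$, and the second vanishes because $(\bI - \bPi)\bM = \bm{0}$ by the key identity above.

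Applying the Pythagorean identity then yields
\[
\|\bW - \bY \bM^\top\|_{\mathrm{F}}^2 \;=\; \|\bW \bPi - \bY \bM^\top\|_{\mathrm{F}}^2 + \|\bW(\bI - \bPi)\|_{\mathrm{F}}^2
\]
for every $\bY \in \cY_{n,K}$. Since the second term on the right is independent of $\bY$, minimizing over $\cY_{n,K}$ gives both conclusions at once: the two argmin sets coincide, and dividing by $n$ and taking the minimum yields $F^2(\bW, \bM) = F^2(\bW \bPi, \bM) + \|\bW(\bI - \bPi)\|_{\mathrm{F}}^2 / n$. There is no real obstacle here; the only point to watch is ensuring the reader sees that $\Range(\bM) \subseteq \Range(\bPi)$ together with $\bPi$ being an orthogonal projection is precisely what kills the cross term, so I would state the identity $\bPi \bM = \bM$ explicitly at the start.
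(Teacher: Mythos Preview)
Your proof is correct and follows essentially the same route as the paper: both exploit $\bPi\bM=\bM$ to obtain the Pythagorean split $\|\bW-\bY\bM^\top\|_{\mathrm F}^2=\|\bW\bPi-\bY\bM^\top\|_{\mathrm F}^2+\|\bW(\bI-\bPi)\|_{\mathrm F}^2$, from which both conclusions are immediate. The only cosmetic difference is that the paper writes the decomposition row-by-row ($\|\bw_i-\bmu_j\|_2^2=\|\bPi\bw_i-\bmu_j\|_2^2+\|(\bI-\bPi)\bw_i\|_2^2$) whereas you do it once at the matrix level; your version is in fact slightly cleaner.
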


\begin{proof}[\bf Proof of \Cref{lem-kmeans-proj}]
	See \Cref{sec-lem-kmeans-proj-proof}.
\end{proof}

The fact $\widehat{\bY}  \in  \argmin_{\bY \in \cY_{n,K}} \| \widehat{\bX} - \bY \widehat{\bM}^{\top} \|_{\mathrm{F}}$ and \Cref{lem-kmeans-proj} lead to $\widehat{\bY}  \in  \argmin_{\bY \in \cY_{n,K}} \| \widehat{\bX} \widehat{\bPi} - \bY \widehat{\bM}^{\top} \|_{\mathrm{F}}$ and then
\begin{align*}
\| \widehat\bX \widehat{\bPi} - \widehat{\bY} \widehat{\bM}^{\top} \|_{\mathrm{F}}  = \sqrt{n}  F (  \widehat\bX \widehat{\bPi} , \widehat{\bM} ).
\end{align*}
Based on \Cref{eqn-kmeans-proof-1-1,eqn-kmeans-proof-1-2,eqn-kmeans-proof-1-3} and the bound above,
\begin{align}
& \| \bX \bPi^{\star} - \widehat{\bY} \widehat{\bM}^{\top} \|_{\mathrm{F}}  / \sqrt{n}
= O_{\PP} \bigg( 	F ( \widehat\bX \widehat\bPi , \widehat\bM )  + \sqrt{\frac{d \log n}{n}} ;~ \log n \bigg).
\label{eqn-kmeans-proof-1-4}
\end{align}

Below we investigate $F ( \widehat\bX \widehat\bPi , \widehat\bM ) $. According to the discussion under \Cref{eqn-kmeans-1}, $\widehat{\bM} = \widehat{\bX}^{\top} \widehat{\bY} ( \widehat{\bY}^{\top}  \widehat{\bY})^{\dagger}$ is optimal for (\ref{eqn-kmeans-1}). Hence $\widehat{\bM} \in \argmin_{\bM \in \RR^{d \times K}} F(\widehat{\bX} , \bM)$ and $F(\widehat{\bX} , \widehat{\bM}) \leq F(\widehat{\bX} , \bM^{\star})$. 
\Cref{lem-kmeans-proj} asserts that
\begin{align*}
& F^2 ( \widehat{\bX} , \widehat{\bM} ) = 
F^2 ( \widehat{\bX} \widehat\bPi , \widehat{\bM} ) + 
\| \widehat{\bX} (\bI - \bPi  )
\|_{\mathrm{F}}^2 / n,\\
&  F^2 ( \widehat{\bX} , \bM^{\star} )  = F^2 ( \widehat{\bX} \bPi^{\star} , \bM^{\star} ) 
+ \| \widehat{\bX} (\bI - \bPi^{\star}  )
\|_{\mathrm{F}}^2 / n.
\end{align*}
By the fact that $\widehat{\bX}^{\top} \widehat{\bX} = n \bI_d$,
$\| \widehat{\bX} (\bI - \bPi  )
\|_{\mathrm{F}}^2 = \| \widehat{\bX} (\bI - \bPi^{\star}  )
\|_{\mathrm{F}}^2 = n (d - K + 1)$.
Then $F(\widehat{\bX} , \widehat{\bM}) \leq F(\widehat{\bX} , \bM^{\star})$ and \Cref{lem-kmeans-reduction} imply that
\begin{align}
F ( \widehat{\bX} \widehat\bPi , \widehat{\bM} )  \leq F ( \widehat{\bX} \bPi^{\star} , \bM^{\star} ) = F ( \bX \bPi^{\star} , \bM^{\star} ) + O_{\PP} \bigg(
\sqrt{\frac{d \log n}{n}} ;~ \log n
\bigg).
\label{eqn-kmeans-proof-1-5}
\end{align} 
By \Cref{eqn-kmeans-proof-2},
\begin{align}
F ( \bX \bPi^{\star} , \bM^{\star} ) = \min_{\bY \in \cY_{n,K}} \|\bX \bPi^{\star} - \bY \bM^{\star \top} \|_{\mathrm{F}} / \sqrt{n} \leq \|\bX \bPi^{\star} - \bY^{\star} \bM^{\star \top} \|_{\mathrm{F}} / \sqrt{n} = O_{\PP} (1 / R;~\log n).
\label{eqn-kmeans-proof-1-6}
\end{align} 
Finally, \Cref{eqn-kmeans-proof-1} directly follows from \Cref{eqn-kmeans-proof-1-4,eqn-kmeans-proof-1-5,eqn-kmeans-proof-1-6}.

\subsubsection{Final steps}\label{sec-thm-kmeans-consistency-final}

The following deterministic lemma plays a crucial role.

\begin{lemma}\label{lem-kmeans-dk}
	Suppose that $\{ \bmu_j^{\star} \}_{j=1}^K, \{ \widehat\bmu_j \}_{j=1}^K \subseteq \RR^d$ and $\{ y_i^{\star} \}_{i=1}^n, \{ \widehat y_i \}_{i=1}^n \subseteq [K]$. Define $\bM^{\star} = (\bmu_1^{\star} , \cdots, \bmu_K^{\star})^{\top}$, $\widehat\bM = (\widehat\bmu_1 , \cdots, \widehat\bmu_K )^{\top}$, $\bY^{\star} = ( \be_{y_1^{\star}} , \cdots, \be_{y_n^{\star}} )^{\top}$, $\widehat\bY = ( \be_{\widehat y_1 } , \cdots, \be_{\widehat y_n } )^{\top}$, $n_{\min} =  \min_{j \in [K]}  |\{ i :~ y_i^{\star} = j \}|$ and $\Delta = \min_{j \neq k} \| \bmu_j^{\star} - \bmu_k^{\star} \|_2$.
	If
	\[
	\| \widehat{\bY} \widehat{\bM} - \bY^{\star} \bM^{\star} \|_{\mathrm{F}}^2 \leq \frac{ n_{\min} \Delta^2}{16 K},
	\]
	then there exists a permutation $\tau:~[K] \to [K]$ such that
	\begin{align*}
	& \sum_{j = 1}^K \| \bmu_j^{\star} - \widehat\bmu_{\tau(j)} \|_2^2 \leq \frac{K}{n_{\min}}  \| \widehat{\bY} \widehat{\bM} - \bY^{\star} \bM^{\star} \|_{\mathrm{F}}^2 , \\
	&|\{ i \in [n] :~   \widehat y_i  \neq \tau( y_i^{\star} ) \}| 
	\leq \frac{16}{\Delta^2}
	\| \widehat{\bY} \widehat{\bM} - \bY^{\star} \bM^{\star} \|_{\mathrm{F}}^2
	.
	\end{align*}
\end{lemma}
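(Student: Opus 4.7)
The plan is to define the alignment $\tau$ by plurality vote on each true cluster, prove it is a bijection using the separation hypothesis, and then deduce both bounds by elementary accounting. Write $S_j=\{i\in[n]:y_i^\star=j\}$, so $|S_j|\ge n_{\min}$, and set $n_{jk}=|\{i\in S_j:\widehat y_i=k\}|$; the Frobenius error decomposes cleanly as
\[
E:=\|\widehat\bY\widehat\bM-\bY^\star\bM^\star\|_{\mathrm{F}}^2=\sum_{j\in[K]}\sum_{k\in[K]}n_{jk}\|\widehat\bmu_k-\bmu_j^\star\|_2^2.
\]
Take $\tau(j)\in\argmax_{k\in[K]}n_{jk}$; since this is the plurality winner among $|S_j|$ votes cast into $K$ bins, $n_{j,\tau(j)}\ge |S_j|/K\ge n_{\min}/K$. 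Isolating a single $(j,\tau(j))$ term of $E$ gives the pointwise estimate
\[
\|\widehat\bmu_{\tau(j)}-\bmu_j^\star\|_2^2\le \frac{E}{n_{j,\tau(j)}}\le \frac{KE}{n_{\min}}\le \frac{\Delta^2}{16},
\]
where the last step uses the hypothesis $E\le n_{\min}\Delta^2/(16K)$.

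The only non-routine step is verifying that $\tau$ is a bijection. Suppose instead $\tau(j_1)=\tau(j_2)=k$ for some $j_1\ne j_2$. Applying the pointwise estimate at both $j_1$ and $j_2$ and using the triangle inequality yields
\[
\|\bmu_{j_1}^\star-\bmu_{j_2}^\star\|_2\le \|\widehat\bmu_k-\bmu_{j_1}^\star\|_2+\|\widehat\bmu_k-\bmu_{j_2}^\star\|_2\le 2\sqrt{KE/n_{\min}}\le \Delta/2,
\]
contradicting $\min_{j\ne k}\|\bmu_j^\star-\bmu_k^\star\|_2\ge \Delta$. Hence $\tau$ is injective, and since it maps $[K]$ to $[K]$ it is a permutation with a well-defined inverse.

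Both conclusions now follow mechanically. Summing $n_{j,\tau(j)}\|\widehat\bmu_{\tau(j)}-\bmu_j^\star\|_2^2\le \sum_k n_{jk}\|\widehat\bmu_k-\bmu_j^\star\|_2^2$ over $j$ gives $\sum_j n_{j,\tau(j)}\|\widehat\bmu_{\tau(j)}-\bmu_j^\star\|_2^2\le E$, and applying $n_{j,\tau(j)}\ge n_{\min}/K$ delivers $\sum_j\|\bmu_j^\star-\widehat\bmu_{\tau(j)}\|_2^2\le KE/n_{\min}$. For the miscount, fix any $i\in S_j$ with $\widehat y_i=k\ne\tau(j)$ and let $j'=\tau^{-1}(k)$; since $\tau$ is a bijection and $k\ne\tau(j)$, we have $j'\ne j$, so the pointwise bound $\|\widehat\bmu_k-\bmu_{j'}^\star\|_2\le \Delta/4$ combined with $\Delta$-separation yields
\[
\|\widehat\bmu_k-\bmu_j^\star\|_2\ge \|\bmu_j^\star-\bmu_{j'}^\star\|_2-\|\widehat\bmu_k-\bmu_{j'}^\star\|_2\ge 3\Delta/4.
\]
Every misclassified index therefore contributes at least $9\Delta^2/16\ge \Delta^2/16$ to $E$, giving $|\{i:\widehat y_i\ne\tau(y_i^\star)\}|\le 16E/\Delta^2$. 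The conceptual obstacle is the bijection step, which is where the separation assumption and the smallness of $E$ combine; once $\tau$ is known to be a permutation, everything else reduces to the triangle inequality.
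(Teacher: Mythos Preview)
Your proof is correct and follows essentially the same approach as the paper: define $\tau$ by plurality vote, establish $n_{j,\tau(j)}\ge n_{\min}/K$, use the decomposition $E=\sum_{j,k}n_{jk}\|\widehat\bmu_k-\bmu_j^\star\|_2^2$, prove bijectivity by contradiction via the triangle inequality, and then deduce both bounds. The only tactical differences are that in the miscount step you route the triangle inequality through $\bmu_{\tau^{-1}(k)}^\star$ (obtaining the tighter lower bound $3\Delta/4$) whereas the paper first shows $\|\widehat\bmu_j-\widehat\bmu_k\|_2\ge\Delta/2$ and routes through $\widehat\bmu_{\tau(j)}$ (obtaining $\Delta/4$); both reach the stated constant $16/\Delta^2$.
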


\begin{proof}[\bf Proof of \Cref{lem-kmeans-dk}]
	See \Cref{proof-lem-kmeans-dk}.
\end{proof}

Let $\Delta = \min_{j \neq k} \| \bmu_j^{\star} - \bmu_k^{\star} \|_2$ and $n_{\min} = \min_{j \in [K]}  |\{ i :~ y_i^{\star} = j \}| $. Under Assumptions \ref{as-kmeans-balance} and \ref{as-kmeans-signal}, \Cref{lem-kmeans-cov-projection} yields $\Delta \gtrsim 1$. Also, Assumption \ref{as-kmeans-balance} and Hoeffding's inequality \citep{Hoe63} imply that $n / n_{\min} = O_{\PP} (1;~n)$. The desired bound (\ref{eqn-kmeans-centers}) follows from \Cref{eqn-kmeans-proof-3} and \Cref{lem-kmeans-dk}.

\subsection{Proof of Lemma \ref{lem-kmeans-cov-projection}}\label{proof-lem-kmeans-cov-projection}
Define $\bU^{\star} = ( \sqrt{\pi_1^{\star}} ( \bmu_1^{\star} - \bar\bmu^{\star} ) , \cdots, \sqrt{\pi_k^{\star}} ( \bmu_k^{\star} - \bar\bmu^{\star} )  ) \in \RR^{d \times K}$. It is easily seen that
\[
\bS^{\star} = \bU^{\star} \bU^{\star\top} + \bSigma^{\star} = \bSigma^{\star 1/2} ( \bSigma^{\star -1/2}  \bU^{\star} \bU^{\star \top} \bSigma^{\star -1/2} + \bI_d ) \bSigma^{\star 1/2} .
\]
Then
\begin{align}
& \bU^{\star \top} \bS^{\star - 1}  \bU^{\star} 
= ( \bSigma^{\star -1/2}  \bU^{\star} )^{\top} ( \bSigma^{\star -1/2}  \bU^{\star} \bU^{\star \top} \bSigma^{\star -1/2} + \bI_d )^{-1}  ( \bSigma^{\star -1/2}  \bU^{\star} ) . \notag \\
& \lambda_j (\bU^{\star \top} \bS^{\star - 1}  \bU^{\star} ) = \frac{ \sigma_j^2 (\bSigma^{\star -1/2} \bU^{\star} ) }{1 + \sigma_j^2 (\bSigma^{\star -1/2} \bU^{\star} )}
= \frac{ 1 }{1 + 1 / \sigma_j^2 (\bSigma^{\star -1/2} \bU^{\star} )}, \qquad \forall j \in [K].
\label{eqn-lem-kmeans-cov-projection-0}
\end{align}
Let $\bW^{\star} = \bS^{\star -1/2} ( \bmu_1^{\star} - \bar\bmu^{\star} , \cdots,  \bmu_k^{\star} - \bar\bmu^{\star} )$. Then
\begin{align*}
\sigma_{K - 1} (\bW^{\star})
& \geq 
\sigma_{K - 1} ( \bS^{\star -1/2}  \bU^{\star} ) = \frac{ 1 / \sqrt{ \pi_{\max}^{\star} } }{\sqrt{ 1 + 1 / \sigma_{K - 1}^2 (\bSigma^{\star -1/2} \bU^{\star} )
}	} .
\end{align*}
Assumption \ref{as-kmeans-signal} yields
\begin{align}
& \sigma_{K - 1} (\bSigma^{\star -1/2} \bU^{\star} ) \geq \sqrt{\pi_{\min}^{\star}}  \sigma_{K - 1} ( \bV^{\star} ) = \sqrt{\pi_{\min}^{\star}}  R , \label{eqn-lem-kmeans-cov-projection-1} \\
& \sigma_{K - 1} (\bW^{\star})
\geq \frac{ 1 / \sqrt{\pi_{\max}^{\star}} }{\sqrt{ 1 + 1 / ( \pi_{\min}^{\star}  R^2 ) }	} .  \notag
\end{align}
Similar to the proof of Fact \ref{fact-kmeans-separation}, we can show that
\[
\min_{j \neq k }
\| \bS^{\star -1/2} (  \bmu_j^{\star} - \bmu_k^{\star} ) \|_2 \geq 
\sigma_{K - 1} (\bW^{\star})
\geq \frac{ 1 / \sqrt{\pi_{\max}^{\star}} }{\sqrt{ 1 + 1 / ( \pi_{\min}^{\star}  R^2 ) }	} .
\]

To control $\langle \bPi^{\star} , \bS^{\star -1/2} \bSigma^{\star} \bS^{\star -1/2} \rangle$, we start from $\bS^{\star} = \bU^{\star} \bU^{\star\top} + \bSigma^{\star} $ and
\begin{align*}
&( \bS^{\star - 1/2}\bU^{\star} )
( \bS^{\star - 1/2}\bU^{\star} )^{\top}  + \bS^{\star - 1/2} \bSigma^{\star} \bS^{\star - 1/2} = \bI_d .
\end{align*}
By definition, $\bPi^{\star} \bS^{\star - 1/2} \bU^{\star}  = \bS^{\star - 1/2} \bU^{\star} $. By Assumption \ref{as-kmeans-signal}, $\mathrm{rank} (\bS^{\star}) = K - 1$. Then
\begin{align*}
&\langle  \bS^{\star - 1/2}\bU^{\star} , \bS^{\star - 1/2}\bU^{\star} \rangle + \langle \bPi^{\star},  \bS^{\star - 1/2} \bSigma^{\star} \bS^{\star - 1/2} \rangle  = \langle \bPi^{\star} , \bI_d \rangle = K - 1
\end{align*}
and
\begin{align*}
& \langle \bPi^{\star},  \bS^{\star - 1/2} \bSigma^{\star} \bS^{\star - 1/2} \rangle 
= K - 1 - 
\langle  \bS^{\star - 1} , \bU^{\star} \bU^{\star \top} \rangle 
\overset{\mathrm{(i)}}{=} K - 1 - \sum_{j = 1}^{K - 1} \frac{ \sigma_j^2 (\bSigma^{\star -1/2} \bU^{\star} ) }{1 + \sigma_j^2 (\bSigma^{\star -1/2} \bU^{\star} )} \\
& = \sum_{j = 1}^{K - 1} \frac{ 1 }{1 + \sigma_j^2 (\bSigma^{\star -1/2} \bU^{\star} )}
\leq \frac{ K - 1 }{1 + \sigma_{K - 1}^2 (\bSigma^{\star -1/2} \bU^{\star} )}
\overset{\mathrm{(ii)}}{\leq} \frac{K - 1}{1 + \pi_{\min}^{\star} R^2 }.
\end{align*}
where $\mathrm{(i)}$ and $\mathrm{(ii)}$ follow from (\ref{eqn-lem-kmeans-cov-projection-0}) and (\ref{eqn-lem-kmeans-cov-projection-1}), respectively.

\subsection{Proof of Lemma \ref{lem-kmeans-reduction}}\label{proof-lem-kmeans-reduction}

By the Lipschitz continuity in Lemma \ref{lem-kmeans-lipschitz} and $\bPi \in \cP_{d, K - 1}$,
\begin{align*}
& | F ( \widehat{\bX} \bPi , \bM ) - F ( \bX \bPi , \bM ) |
\leq n^{-1/2} \| (\widehat{\bX} - \bX) \bPi \|_{\mathrm{F}}
\leq \sqrt{(K - 1) / n}\| \widehat{\bX} - \bX \|_2 
\\
&
= \sqrt{(K - 1) / n} \| \bJ \bX \widetilde{\bSigma}^{-1/2} - \bX \|_2  
\leq \sqrt{K / n} \Big(  \| \bJ \bX (\widetilde{\bSigma}^{-1/2} - \bI ) \|_2 +
\| ( \bJ - \bI ) \bX \|_{2} \Big).
\end{align*}

On the one hand, by $\bJ^{\top}  \bJ = \bJ$ and $n^{-1} \bX^{\top} \bJ \bX = \widetilde{\bSigma}$,
\begin{align*}
& n^{-1} \| \bJ \bX (\widetilde{\bSigma}^{-1/2} - \bI ) \|_2^2
= \| (\widetilde{\bSigma}^{-1/2} - \bI)  \bX^{\top} \bJ^{\top}  \bJ \bX (\widetilde{\bSigma}^{-1/2} - \bI)  \|_2 \\
& = \| (\widetilde{\bSigma}^{-1/2} - \bI) \widetilde{\bSigma} (\widetilde{\bSigma}^{-1/2} - \bI) \|_2 = \| (\bI - \widetilde{\bSigma}^{1/2} )^2 \|_2 
= \| \bI - \widetilde{\bSigma}^{1/2} \|_2^2.
\end{align*}
On the other hand, 
\begin{align*}
n^{-1/2}
\| ( \bJ - \bI ) \bX  \|_{2} =
n^{-1/2}
\| n^{-1} \bm{1}_n \bm{1}_n^{\top} \bX  \|_{2} = n^{-1/2} \| \bm{1}_n \|_2 \| n^{-1} \bm{1}_n^{\top} \bX  \|_{2} = \| \bar\bx \|_2,
\end{align*}
where $\bar{\bx} = \frac{1}{n} \sum_{i=1}^{n} \bx_i$. As a result,
\begin{align}
\sup_{ \bPi \in \cP_{d, K - 1},  \bM \in \RR^{K \times d} } | F ( \widehat{\bX} \bPi , \bM ) - F ( \bX \bPi , \bM ) | \leq \sqrt{K} (  \| \bI - \widetilde{\bSigma}^{1/2} \|_2 + \| \bar\bx \|_2 ).
\label{eqn-lem-kmeans-reduction-1}
\end{align}
By \Cref{lem-kmeans-psi2}, $\| \bx_i \|_{\psi_2} \lesssim 1 $ and $\| \bar{\bx} \|_{\psi_2} \lesssim n^{-1/2}  \| \bx_i \|_{\psi_2} \lesssim 1 / \sqrt{n}$. Lemma \ref{lem-subg-norm} yields
\begin{align}
\| \bar{\bx} \|_2 = O_{\PP} \bigg(
\sqrt{\frac{ d \log n }{  n}} ;~   \log n
\bigg).
\label{eqn-lem-kmeans-reduction-2}
\end{align}

Note that $\bI - \widetilde{\bSigma}^{1/2} = (\bI - \widetilde{\bSigma}) (\bI + \widetilde{\bSigma}^{1/2} )^{-1}$. Also,
\begin{align*}
\widetilde{\bSigma} = \frac{1}{n} \sum_{i=1}^{n} (\bx_i - \bar{\bx}) (\bx_i - \bar{\bx})^{\top} = \frac{1}{n} \sum_{i=1}^{n} \bx_i \bx_i^{\top} - \bar{\bx} \bar{\bx}^{\top}.
\end{align*}
Corollary \ref{cor-cov} and $\EE(\bx_i \bx_i^{\top}) = \bI$ under Assumption \ref{as-kmeans-canonical} lead to
\begin{align*}
\bigg\| \frac{1}{n} \sum_{i=1}^{n} \bx_i \bx_i^{\top} - \bI \bigg\|_2 = O_{\PP} \bigg(
\sqrt{\frac{ d \log n }{ n}} ;~  d \log n
\bigg).
\end{align*}
From this and (\ref{eqn-lem-kmeans-reduction-1}), we obtain that $\| \bI - \widetilde{\bSigma} \|_2 = O_{\PP} (
\sqrt{\frac{ d \log n }{n}} ;~  \log n
)$ and
\begin{align}
\| \bI - \widetilde{\bSigma}^{1/2} \|_2 \leq  \| \bI - \widetilde{\bSigma} \|_2 \| ( \bI + \widetilde{\bSigma}^{1/2} )^{-1} \|_2 = O_{\PP} \bigg(
\sqrt{\frac{ d \log n }{n}} ;~  d \log n
\bigg).
\label{eqn-lem-kmeans-reduction-3}
\end{align}
The proof is then finished by combining the estimates (\ref{eqn-lem-kmeans-reduction-1}), (\ref{eqn-lem-kmeans-reduction-2}) and (\ref{eqn-lem-kmeans-reduction-3}).

\subsection{Proof of Lemma \ref{lem-kmeans-projections}}\label{sec-lem-kmeans-projections-proof}
\subsubsection{Reduction}

By Lemma \ref{lem-kmeans-reduction},
\begin{align*}
& \sup_{ \bPi \in \cP_{d, K - 1}, ~ \bM \in \RR^{d \times K} } | F ( \widehat\bX \bPi , \bM ) - F ( \bX \bPi , \bM ) |
=
O_{\PP} \bigg(
\sqrt{\frac{ d \log n }{n}} ;~   \log n
\bigg) .
\end{align*}
To get Lemma \ref{lem-kmeans-projections}, we just need to prove that
\begin{align*}
\sup_{ \bPi \in \cP_{d, K - 1}, ~ \| \bM^{\top} \|_{2,\infty} \leq \sqrt{n} } 
\frac{ 
	\| \bPi - \bPi^{\star} \|_{\mathrm{F}}
}{
	F ( \bX \bPi , \bM )  + \sqrt{ \frac{   d \log n  }{n} }
}
= O_{\PP} ( 1 ;~  \log n ).
\end{align*}
We will establish the above by showing
\begin{align}
& \| \bPi - \bPi^{\star} \|_{\mathrm{F}}^2 \leq c  \Big(  \EE^2 [ F ( \bX \bPi , \bM ) | \bY^{\star} ] + \frac{1}{n} \Big) ~~ \text{a.s.}, \qquad \forall \bPi \in \cP_{d, K - 1},~~ \bM \in \RR^{d \times K} \label{eqn-kmeans-part1-0}
\end{align}
for some constant $c$, and
\begin{align}
&\sup_{
	\bPi \in \cP_{d, K - 1}, ~\| \bM^{\top} \|_{2,\infty} \leq \sqrt{n} 
} \Big| \EE [ F ( \bX \bPi , \bM ) | \bY^{\star} ] 
- F ( \bX \bPi , \bM )
\Big| =
O_{\PP} \bigg(
\sqrt{ \frac{ d \log n  }{n} } ;~  \log n
\bigg).
\label{eqn-kmeans-part2-0}
\end{align}

\subsubsection{Proof of \Cref{eqn-kmeans-part1-0}}

For any $\bPi  \in \cP_{d, K - 1}$,
\[
\| \bPi  - \bPi^{\star} \|_{\mathrm{F}}^2 = \| \bPi  \|_{\mathrm{F}}^2 -
2 \langle \bPi  , \bPi^{\star} \rangle + \| \bPi^{\star} \|_{\mathrm{F}}^2
= 2 [(K - 1) - \langle \bPi  , \bPi^{\star} \rangle].
\]
In light of $(\bI - \bPi^{\star}) \bSigma^{\star} (\bI - \bPi^{\star}) = \bI - \bPi^{\star}$,
\[
\langle \bPi  , \bSigma^{\star} \rangle
\geq \langle \bPi  , (\bI - \bPi^{\star}) \bSigma^{\star} (\bI - \bPi^{\star}) \rangle
= \langle \bPi  , \bI - \bPi^{\star} \rangle
= (K - 1) - \langle \bPi  , \bPi^{\star} \rangle.
\]
Hence
\begin{align}
\| \bPi  - \bPi^{\star} \|_{\mathrm{F}}^2 \leq 2 \langle \bPi  , \bSigma^{\star} \rangle .
\label{eqn-kmeans-part1-1}
\end{align}

For $\bY \in \{ 0, 1 \}^{n \times K}$ with $\bY \bm{1}_K = \bm{1}_n$, $\bPi \in \cP_{d, K - 1}$ and $\bM \in \RR^{d \times K}$, define
\begin{align*}
& E_1 (\bY , \bPi , \bM) = \EE [ F ( \bX \bPi , \bM ) | \bY^{\star}  = \bY]
= \EE F ( \bY  \bM^{\star \top} \bPi + \bZ \bPi , \bM ) , \\
&E_2 (\bY , \bPi , \bM) = \EE [ F^2 ( \bX \bPi , \bM ) | \bY^{\star}  = \bY]
= \EE F^2 ( \bY  \bM^{\star \top} \bPi + \bZ \bPi , \bM ) .
\end{align*}
For any $( \bY , \bPi,\bM)$ in the domain of $E_2$ we have
\begin{align*}
E_2 (\bY , \bPi , \bM) & = \EE \bigg( \frac{1}{n} \sum_{i=1}^{n}  \min_{j \in [K]} \| \bPi ( \bx_i - \bmu_j ) \|_2^2 
\bigg| \bY^{\star} = \bY  \bigg) \\
& = \frac{1}{n} \sum_{i=1}^{n} \EE \Big( \min_{j \in [K]} \| \bPi ( \bx_i - \bmu_j ) \|_2^2 
\Big| \bY_{i,:}^{\star} = \bY_{i,:} \Big).
\end{align*}
Fix $i \in [n]$ and let $\{ \be_k \}_{j=1}^K$ be the canonical bases in $\RR^K$. When $\bY_{i,:}^{\star} = \be_k^{\top}$, we have $\bx_i = \bmu_k^{\star} +\bSigma^{\star 1/2} \bz_i$. Thus
\begin{align*}
\EE \Big( \min_{j \in [K]} \| \bPi ( \bx_i - \bmu_j ) \|_2^2 
\Big| \bY_{i,:}^{\star} = \be_k^{\top} \Big) = \EE \Big( \min_{j \in [K]} \| \bPi ( \bmu_k^{\star} + \bSigma^{\star 1/2} \bz_i)  - \bPi \bmu_j  \|_2^2 
\Big).
\end{align*}
By $ \bz_i \in T_2(\sigma)$
and Lemma \ref{lem-kmeans-lower}, there exists a constant $C > 0$ such that the right-hand side above is lower bounded by $C K^{-5} \langle \bPi , \bSigma^{\star} \rangle$. Hence
\begin{align*}
E_2 (\bY , \bPi , \bM) \geq  C K^{-5} \langle \bPi , \bSigma^{\star} \rangle ,\qquad \forall \bY,~ \bPi,~ \bM.
\end{align*}
Consequently, the relation (\ref{eqn-kmeans-part1-1}) yields
\begin{align*}
&\| \bPi  - \bPi^{\star} \|_{\mathrm{F}}^2 \leq 2C^{-1} K^{5} E_2 (\bY^{\star} , \bPi  , \bM) \\
&= 2C^{-1} K^{5}  \Big(
\EE^2 [ F ( \bX \bPi , \bM ) | \bY^{\star} ] + \var  [ F ( \bX \bPi , \bM ) | \bY^{\star} ]
\Big)
, \qquad \forall \bPi ,~\bM.
\end{align*}
Since $K = O(1)$ (Assumption \ref{as-kmeans-balance}) to get \Cref{eqn-kmeans-part1-0}, it remains to find some constant $c'$ such that
\begin{align}
\var  [ F ( \bX \bPi , \bM ) | \bY^{\star} ] \leq \frac{c'  }{n}  ~~\text{a.s.}, \qquad  \forall \bPi \in \cP_{d, K - 1},~~ \bM \in \RR^{d \times K}.
\label{eqn-kmeans-part1-2}
\end{align}

Recall that $\bX = \bY^{\star} \bM^{\star \top} + \bZ \bSigma^{\star 1/2} $.
Then $F  (\bX  \bPi , \bM ) = F [ (\bY^{\star} \bM^{\star \top} + \bZ \bSigma^{\star 1/2})  \bPi , \bM ]$.
It follows from Lemma \ref{lem-kmeans-lipschitz} and $\bSigma^{\star} \preceq \bI$ that for any deterministic $\bY  \in \{ 0, 1 \}^{n\times K}$ with $\bY  \bm{1}_K = \bm{1}_n$,
\[
F [ (\bY  \bM^{\star \top} + \bZ \bSigma^{\star 1/2} )  \bPi , \bM ] = F [ \bY  \bM^{\star \top} \bPi + 
\bZ \bSigma^{\star 1/2} \bPi , \bM ]
\]
is an $n^{-1/2}$-Lipschitz function of $\bZ$ with respect to the Frobenius norm $\| \cdot \|_{\mathrm{F}}$. Since $\bZ$ has i.i.d.~$T_2(\sigma)$ rows, \Cref{lem-kmeans-t2} implies the existence of a constant $c_0$ such that
\begin{align}
\var \Big( F  [ (\bY  \bM^{\star} + \bZ\bSigma^{\star 1/2} )  \bPi , \bM ] \Big) \leq c_0 \sigma^2 / n
\label{eqn-kmeans-part1-3}
\end{align}
hold for any deterministic $\bPi \in \cP_{d, K - 1}$, $\bM \in \RR^{K \times d}$ and $\bY  \in \{ 0, 1 \}^{n\times K}$ with $\bY  \bm{1}_K = \bm{1}_n$. 

By definition, $\bY^{\star}$ and $\bZ$ are independent. Since $\bY \mapsto  F  [ (\bY  \bM^{\star} + \bZ\bSigma^{\star 1/2} )  \bPi , \bM ]$ is continuous, we get \Cref{eqn-kmeans-part1-2} from \Cref{eqn-kmeans-part1-3}.

\subsubsection{Proof of \Cref{eqn-kmeans-part2-0}}

Similar to the derivation of \Cref{eqn-kmeans-part1-3}, we apply \Cref{lem-kmeans-t2} to get
\begin{align*}
&\PP \Big(
\Big|  F [ (\bY  \bM^{\star} + \bZ \bSigma^{\star 1/2} )  \bPi , \bM ]  - \EE 
F [ (\bY  \bM^{\star} + \bZ \bSigma^{\star 1/2} )  \bPi , \bM ] 
\Big|
\geq t
\Big)
\leq c_0 e^{-nt^2 / (2 \sigma^2)}, \qquad \forall t \geq 0 .
\end{align*}
for any deterministic $\bPi \in \cP_{d, K - 1}$, $\bM \in \RR^{K \times d}$ and $\bY  \in \{ 0, 1 \}^{n\times K}$ with $\bY  \bm{1}_K = \bm{1}_n$. Then
\begin{align}
&\PP \Big(
\Big|
F ( \bX \bPi , \bM )  -
\EE [ F ( \bX  \bPi , \bM ) | \bY^{\star} ]
\Big|
\geq t
\Big)
\leq c_1 e^{-nt^2 / (2 \sigma^2)} ~~ \text{a.s.}, \qquad \forall t \geq 0 .
\label{ineq-kmeans-part2-1}
\end{align}
Below we use adopt a covering argument to prove \Cref{eqn-kmeans-part2-0}.

For any $\bPi, \bPi' \in \cP_{d, K - 1}$ and $\bM, \bM' \in \RR^{d \times K}$,
\begin{align*}
| F ( \bX \bPi , \bM ) - F ( \bX \bPi' , \bM' ) | & \leq | F ( \bX \bPi , \bM ) - F ( \bX \bPi' , \bM ) | + | F ( \bX \bPi' , \bM ) - F ( \bX \bPi' , \bM' ) | \notag\\
& \leq n^{-1/2} \| \bX \|_{2} \| \bPi - \bPi' \|_{\mathrm{F}} + \| \bM - \bM' \|_{\mathrm{F}}.
\end{align*}

By \Cref{lem-kmeans-psi2}, $\| \bx_i \|_{\psi_2} \lesssim 1 $. 
By Lemma \ref{lem-cov} and $n \geq  d \log n$, there exists a constant $c_1 > 0$ such that
\begin{align*}
\PP ( n^{-1/2} \| \bX \|_2 \geq 2  ) & = \PP ( \| n^{-1} \bX^{\top} \bX \|_2 \geq 4  ) \leq \PP ( \| n^{-1} \bX^{\top} \bX - \bI \|_2 \geq 3)
\leq e^{-c_1 n}.
\end{align*}
Consequently,
\begin{align}
\PP \bigg(
\sup_{ \bPi, \bPi' \in \cP_{d, K - 1} \text{ and } \bM, \bM' \in \RR^{d\times K} }
\frac{ | F ( \bX \bPi , \bM ) - F ( \bX \bPi' , \bM' ) | }
{ \| \bPi - \bPi' \|_{\mathrm{F}} + \| \bM - \bM' \|_{\mathrm{F}} } \geq 2
\bigg) \leq e^{-c_1 n}.
\label{ineq-kmeans-part2-3}
\end{align}

In addition,
\begin{align*}
& \Big| \EE [ F ( \bX \bPi , \bM ) |\bY^{\star}] - \EE [ F ( \bX \bPi' , \bM' ) | \bY^{\star} ]  \Big|  \leq 
\EE \Big[ |F ( \bX \bPi , \bM ) - F ( \bX \bPi' , \bM' ) | \Big|\bY^{\star} \Big] \notag\\
&\leq  n^{-1/2} \EE ( \| \bX \|_{2} | \bY^{\star} ) \| \bPi - \bPi' \|_{\mathrm{F}} + \| \bM - \bM' \|_{\mathrm{F}} .
\end{align*}
By $\bX = \bY^{\star} \bM^{\star \top} + \bZ \bSigma^{\star 1/2}$ and $\bSigma^{\star} \preceq \bI$,
\begin{align*}
\EE ( \| \bX \|_{2} | \bY^{\star} ) \leq \| \bY^{\star} \bM^{\star \top} \|_2 + \EE \| \bZ \|_2
\overset{\mathrm{(i)}}{\leq} \sqrt{n} \| \bM^{\star} \|_2 + \sqrt{n} \EE^{1/2} \bigg\| \frac{1}{n} \sum_{i=1}^{n} \bz_i \bz_i^{\top}  \bigg\|_2,
\end{align*}
where we used $\mathrm{(i)}$ $\| \bY \|_2 \leq \| \bY \|_{\mathrm{F}} = \sqrt{n}$.
Since $\{ \bz_i \}_{i=1}^n$ are i.i.d.~$T_2(\sigma)$ and thus sub-Gaussian (\Cref{lem-kmeans-t2}), \Cref{lem-cov} implies that $\EE \| \frac{1}{n} \sum_{i=1}^{n} \bz_i \bz_i^{\top}  \|_2 \lesssim 1$ so long as $n \geq d \log n $. Also, the normalization condition
\[
\bI = \sum_{j=1}^{K} \pi_j^{\star} \bmu_j^{\star} \bmu_j^{\star \top} + \bSigma^{\star}
\]
for the canonical model forces that $\| \bM^{\star} \|_2 \leq 1 / \sqrt{\pi_{\min}^{\star}} \lesssim 1$. Hence, there exists a constant $c_2>1$ such that $\EE ( \| \bX \|_{2} | \bY^{\star} ) \leq c_2\sqrt{n  }$ a.s. and
\begin{align}
& \Big| \EE [ F ( \bX \bPi , \bM ) |\bY^{\star}] - \EE [ F ( \bX \bPi' , \bM' ) | \bY^{\star} ]  \Big|  \leq   c_2  ( \| \bPi - \bPi' \|_{\mathrm{F}} + \| \bM - \bM' \|_{\mathrm{F}} ).
\label{ineq-kmeans-part2-4}
\end{align}

Define
\[
\bTheta = \{ ( \bPi, \bM ):~ \bPi \in \cP_{d, K - 1}, ~~\bM \in \RR^{d \times K},~~\| \bM \|_{2,\infty} \leq \sqrt{n} \} .
\]
Observe that $\bTheta \subseteq \{ ( \bPi, \bM ) \in \RR^{d\times d} \times \RR^{d \times K}:~ \bPi \in \cP_{d, K - 1}, ~~\| \bM \|_{\mathrm{F}} \leq \sqrt{Kn} \}$. We are going to construct a discretization of $\bTheta$.

Choose any $\delta \in (0, 1)$. Lemma 4.5 in \cite{RFP10} shows that there exists a set $\widehat{\cS}_1 \subseteq  \cP_{d, K - 1}$ with the following properties:
\begin{itemize}
	\item for any $ \bPi \in \cP_{d, K - 1}$, there exists $ \bPi' \in \widehat\cS_1 $ such that $\| \bPi' - \bPi \|_{\mathrm{F}} \leq \delta$;
	\item $|\widehat\cS_1| \leq (4 \pi \sqrt{K - 1} / \delta)^{(K - 1)[ d + d - (K - 1) ]}$, which yields $\log |\widehat{\cS}_1| \lesssim d\log (n/\delta)$.
\end{itemize}

Lemma 5.2 in \cite{Ver10} asserts the existence of a set $\widehat{\cS}_2 \subseteq  \{ \bM \in \RR^{d \times K}:~ \| \bM \|_{\mathrm{F}} \leq \sqrt{Kn} \}$ with the following properties:
\begin{itemize}
	\item for any $ \bM \in \RR^{d \times K}$ with $\| \bM \|_{\mathrm{F}} \leq \sqrt{Kn}$, there exists $ \bM' \in \widehat\cS_2 $ such that $\| \bM' - \bM \|_{\mathrm{F}} \leq \delta $;
	\item $|\widehat\cS_2| \leq (1 + 2 \sqrt{Kn} / \delta)^{ K d }$, which yields $\log |\widehat{\cS}_2| \lesssim d\log (n / \delta)$.
\end{itemize}

On top of these, there exists a set $\widehat\bTheta \subseteq \bTheta$ such that
\begin{itemize}
	\item for any $( \bPi, \bM ) \in \bTheta$, there exists $( \bPi', \bM' ) \in \widehat{\bTheta} $ such that $\| \bPi' - \bPi \|_{\mathrm{F}} + \| \bM' - \bM \|_{\mathrm{F}} \leq 1 / \sqrt{  n }$;
	\item $\log |\widehat\bTheta| \lesssim d \log  n  $.
\end{itemize}

The first property, Inequality (\ref{ineq-kmeans-part2-3}) and Inequality (\ref{ineq-kmeans-part2-4}) imply that the inequality
\begin{align*}
& \sup_{(\bPi, \bM) \in \bTheta} \Big| \EE [ F ( \bX \bPi , \bM ) | \bY ] 
- F ( \bX \bPi , \bM )
\Big| 
\\&  \leq \sup_{(\bPi, \bM) \in \widehat\bTheta} \Big| \EE [ F ( \bX \bPi , \bM ) | \bY ] 
- F ( \bX \bPi , \bM )
\Big| +  \frac{c_3}{ \sqrt{n  } } .
\end{align*}
holds with probability at least $1 - e^{-c_4 n}$. Here $c_3, c_4$ are positive constants.

By the second property $\log |\widehat\bTheta| \lesssim d \log n $, Inequality (\ref{ineq-kmeans-part2-1}) and union bounds,
\begin{align*}
\sup_{(\bPi, \bM) \in \widehat\bTheta} \Big| \EE [ F ( \bX \bPi , \bM ) | \bY ] 
- F ( \bX \bPi , \bM )
\Big|
= O_{\PP} \bigg(
\sqrt{ \frac{ d \log n}{n} } ;~ d \log n
\bigg)
.
\end{align*}
The estimates above lead to \Cref{eqn-kmeans-part2-0}.

\subsection{Proof of \Cref{lem-kmeans-proj}}\label{sec-lem-kmeans-proj-proof}

For any $i \in [n]$ and $j \in [K]$, we use $\bPi \bmu_j = \bmu_j$ to get
\[
\| \bw_i - \bmu_j \|_2^2 = \| \bPi ( \bw_i - \bmu_j ) + (\bI - \bPi) \bw_i  \|_2^2 + \| \bPi \bw_i - \bmu_j \|_2^2 + \| (\bI - \bPi) \bw_i  \|_2^2.
\]
Hence $\argmin_{j \in [K]} \| \bw_i - \bmu_j \|_2 = \argmin_{j \in [K]} \| \bPi \bw_i - \bmu_j \|_2$ and that leads to
\[
\argmin_{\bY \in \cY_{n,K}} \| \bW - \bY \bM^{\top} \|_{\mathrm{F}} =  \argmin_{\bY \in \cY_{n,K}} \| \bW \bPi - \bY \bM^{\top} \|_{\mathrm{F}}.
\]

It is easily seen that $F (\bW, \bM) = n^{-1/2} \| \bW - \bY \bM \|_{\mathrm{F}}$ holds for some
\[
\bY\in \argmin_{\bY \in \cY_{n,K}} \| \bW - \bY \bM^{\top} \|_{\mathrm{F}} .
\]
 Then
\begin{align}
n \cdot F^2 ( \bW , \bM  ) & = \| \bW - \bY \bM^{ \top} \|_{\mathrm{F}}^2 = \| ( \bW - \bY \bM^{ \top} ) \bPi + \bW (\bI - \bPi ) \|_{\mathrm{F}}^2 \notag \\
& = \| \bW \bPi - \bY  \bM^{ \top} \|_{\mathrm{F}}^2 + \| \bW (\bI - \bPi  )
\|_{\mathrm{F}}^2.
\label{eqn-lem-kmeans-proj-1}
\end{align}
In addition, we have $\bY \in \argmin_{\bY \in \cY_{n,K}} \| \bW \bPi - \bY \bM^{\top} \|_{\mathrm{F}}$ and
\begin{align}
\| \bW \bPi - \bY  \bM^{ \top} \|_{\mathrm{F}}^2 = n \cdot  F( \bW \bPi, \bM ).
\label{eqn-lem-kmeans-proj-2}
\end{align}
The proof is finished by (\ref{eqn-lem-kmeans-proj-1}) and (\ref{eqn-lem-kmeans-proj-2}).

\subsection{Proof of \Cref{lem-kmeans-dk}}\label{proof-lem-kmeans-dk}
For any $j,k \in [K]$, define $S_{jk} = |\{ i \in [n] :~ y_i^{\star} = j,~ \widehat y_i = k \}|$ and $\tau(j) = \argmax_{k \in [K]} S_{jk}$ (break any tie by selecting the smallest index). We have $S_{j \tau(j)} \geq n_{\min} / K$ for all $j$ and
\begin{align}
\| \widehat{\bY} \widehat{\bM}^{\top} - \bY^{\star} \bM^{\star \top} \|_{\mathrm{F}}^2
= \sum_{j, k \in [K]} S_{jk} \| \bmu_j^{\star} - \widehat\bmu_k \|_2^2.
\label{eqn-lem-kmeans-dk-0}
\end{align}

We first prove by contradiction that $\tau:~[K] \to [K]$ must be a permutation (bijection). Suppose there exist distinct $j$ and $k$ such that $\tau(j) = \tau(k) = \ell$. By the triangle's inequality,
\begin{align*}
& \| \bmu_j^{\star} - \widehat\bmu_{\ell} \|_2 + \| \widehat\bmu_{\ell}- \bmu_k^{\star} \|_2 \geq \| \bmu_j^{\star} - \bmu_k^{\star} \|_2 \geq \Delta ,\\
&  \| \bmu_j^{\star} - \widehat\bmu_{\ell} \|_2^2 + \| \widehat\bmu_{\ell}- \bmu_k^{\star} \|_2^2 \geq \Delta^2 / 4.
\end{align*}
By (\ref{eqn-lem-kmeans-dk-0}) and the facts that $S_{j\ell} \geq n_{\min} / K$ and $S_{k\ell} \geq n_{\min} / K$,
\begin{align*}
\| \widehat{\bY} \widehat{\bM}^{\top} - \bY^{\star} \bM^{\star \top} \|_{\mathrm{F}}^2 \geq
S_{j \ell} \| \bmu_j^{\star} - \widehat\bmu_{\ell} \|_2^2
+ S_{k \ell} \| \bmu_k^{\star} - \widehat\bmu_{\ell} \|_2^2
\geq \frac{n_{\min} \Delta^2}{4K} >  \frac{n_{\min} \Delta^2}{16K},
\end{align*}
which leads to contradiction. Now that $\tau$ is a permutation, we derive from (\ref{eqn-lem-kmeans-dk-0}) that
\begin{align}
& \| \widehat{\bY} \widehat{\bM}^{\top} - \bY^{\star} \bM^{ \star\top} \|_{\mathrm{F}}^2
\geq \sum_{j = 1}^K S_{j\tau(j)} \| \bmu_j^{\star} - \widehat\bmu_{\tau(j)} \|_2^2
\geq \frac{n_{\min}}{K} \sum_{j = 1}^K \| \bmu_j^{\star} - \widehat\bmu_{\tau(j)} \|_2^2 , \notag \\
&\sum_{j = 1}^K \| \bmu_j^{\star} - \widehat\bmu_{\tau(j)} \|_2^2 \leq \frac{K}{n_{\min}}  \| \widehat{\bY} \widehat{\bM}^{\top} - \bY^{\star} \bM^{\star \top} \|_{\mathrm{F}} \leq \frac{\Delta^2}{16} .
\label{eqn-lem-kmeans-dk-1}
\end{align}

Next, we show that $\{ \widehat{\bmu}_j \}_{j=1}^K$ are separated. For any $j \neq k$,
\begin{align}
& \| \bmu_{\tau^{-1}(j)}^{\star} - \widehat\bmu_j \|_2
+ \| \widehat{\bmu}_j - \widehat{\bmu}_k \|_2 
+ \| \widehat\bmu_k - \bmu_{\tau^{-1}(k)}^{\star} \|_2 
\geq \| \bmu_{\tau^{-1}(j)}^{\star} - \bmu_{\tau^{-1}(k)}^{\star} \|_2 \geq \Delta , \notag \\
& \| \bmu_{\tau^{-1}(j)}^{\star} - \widehat\bmu_j \|_2
+ \| \widehat\bmu_k - \bmu_{\tau^{-1}(k)}^{\star} \|_2 
\leq \sqrt{\Delta^2 / 16} + \sqrt{\Delta^2 / 16} = \Delta / 2 , \notag \\
& \| \widehat{\bmu}_j - \widehat{\bmu}_k \|_2 \geq \Delta -  \Delta / 2 = \Delta / 2.
\label{eqn-lem-kmeans-dk-2}
\end{align}
Finally we control the discrepancy between $\{ y_i^{\star} \}_{i=1}^n$ and $\{ \widehat y_i \}_{i=1}^n$. By (\ref{eqn-lem-kmeans-dk-0}),
\begin{align*}
& |\{ i \in [n] :~ \widehat y_i  \neq \tau( y_i^{\star} ) \}| 
= \sum_{j =1}^K \sum_{k \neq \tau(j)} S_{jk}
\leq \frac{
	\| \widehat{\bY} \widehat{\bM}^{\top} - \bY^{\star} \bM^{\star \top} \|_{\mathrm{F}}^2
}{\min_{j \in [K], ~k \neq \tau(j) } \| \bmu_j^{\star} - \widehat\bmu_k \|_2^2}
.
\end{align*}
For any $k \neq \tau(j)$, (\ref{eqn-lem-kmeans-dk-1}) and (\ref{eqn-lem-kmeans-dk-2}) imply that
\begin{align*}
& \| \bmu_j^{\star} - \widehat\bmu_k \|_2 \geq - \| \bmu_j^{\star} - \widehat\bmu_{\tau(j)} \|_2 + \| \widehat\bmu_{\tau(j)} - \widehat{\bmu}_k \|_2 \geq - \Delta / 4 + \Delta / 2 = \Delta / 4.
\end{align*}
Then
\begin{align*}
|\{ i \in [n] :~   \widehat y_i   \neq\tau(  y_i^{\star} ) \}| 
\leq \frac{16}{\Delta^2}
\| \widehat{\bY} \widehat{\bM}^{\top} - \bY^{\star} \bM^{ \star \top} \|_{\mathrm{F}}^2
.
\end{align*}

\section{Proof of Section \ref{sec-t2-optimal}}

\subsection{Proof of \Cref{thm-kmeans-classification}}\label{sec-thm-kmeans-classification-proof}

It suffices to prove the theorem for the canonical model in Assumption \ref{as-kmeans-canonical}. Let $\widehat\bPi$ be the projection onto $\mathrm{span} \{ \widehat{\bmu}_j \}_{j=1}^K$. For any $j \in [K]$ we have
\begin{align*}
\| \widetilde{\bSigma}^{-1/2} (\bx_0 - \bar{\bx}) - \widehat{\bmu}_j \|_2^2 &= 
\| \widehat{\bPi} [ \widetilde{\bSigma}^{-1/2} (\bx_0 - \bar{\bx}) - \widehat{\bmu}_j ] \|_2^2
+ \| (\bI - \widehat{\bPi} ) [ \widetilde{\bSigma}^{-1/2} (\bx_0 - \bar{\bx}) - \widehat{\bmu}_j ] \|_2^2 \notag \\
& = \| \widehat{\bPi} \widetilde{\bSigma}^{-1/2} (\bx_0 - \bar{\bx}) - \widehat{\bmu}_j \|_2^2 + \| (\bI - \widehat{\bPi} ) \widetilde{\bSigma}^{-1/2} (\bx_0 - \bar{\bx})  \|_2^2.
\end{align*}
Therefore,
\begin{align}
\widehat{y}(\bx_0) = \argmin_{j \in [K]}  \| \widehat{\bPi} \widetilde{\bSigma}^{-1/2} (\bx_0 - \bar{\bx}) - \widehat{\bmu}_j \|_2.
\label{eqn-thm-kmeans-classification-0}
\end{align}
Choose any $\tau \in S_K$ that minimizes $ \max_{j \in [K]} \| \bmu_j^{\star} - \widehat\bmu_{\tau(j)} \|_2$. By (\ref{eqn-kmeans-centers}),
\begin{align}
\max_{j \in [K]} \| \bmu_j^{\star} - \widehat\bmu_{\tau(j)} \|_2
= O_{\PP} \bigg(
\frac{1}{R} + \sqrt{ \frac{d \log n}{n} } ;~ \log n
\bigg).
\label{eqn-thm-kmeans-classification-1}
\end{align}
By the triangle's inequality,
\begin{align}
\| \widehat{\bPi} \widetilde{\bSigma}^{-1/2} (\bx_0 - \bar{\bx}) - \widehat{\bmu}_{\tau(y_0^{\star})} \|_2
& \leq  \| \widehat{\bPi} \widetilde{\bSigma}^{-1/2} (\bx_0 - \bar{\bx}) -\bPi^{\star} \bx_0\|_2 + \| \bPi^{\star} \bx_0 - \bmu^{\star}_{y_0^{\star}} \|_2 + \| \bmu^{\star}_{y_0^{\star}}  - \widehat{\bmu}_{\tau(y_0^{\star})} \|_2.
\label{eqn-thm-kmeans-classification-2}
\end{align}

\begin{claim}\label{claim-thm-kmeans-classification-1}
	There exists a constant $C_1 > 0$ such that
	\[
	\| \widehat{\bPi} \widetilde{\bSigma}^{-1/2} (\bx_0 - \bar{\bx}) -\bPi^{\star} \bx_0\|_2 
	= O_{\PP} \bigg(
	\sqrt{\frac{ d \log^2 n }{  n}} + \frac{ \sqrt{\log n}}{R} ;~\log n
	\bigg).
	\]
\end{claim}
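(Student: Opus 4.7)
By the affine invariance of the classifier in \eqref{eqn-kmeans-classifier}, I would reduce to the canonical model of \Cref{as-kmeans-canonical}, where $\EE \bx_0 = \bm{0}$, $\EE(\bx_0 \bx_0^{\top}) = \bI_d$, and $\bPi^{\star}$ is the rank-$(K-1)$ projection onto $\Range(\bM^{\star})$. The plan is to use the elementary decomposition
\begin{align*}
\widehat{\bPi} \widetilde{\bSigma}^{-1/2}(\bx_0 - \bar{\bx}) - \bPi^{\star} \bx_0
= (\widehat{\bPi} - \bPi^{\star}) \bx_0 + \widehat{\bPi}(\widetilde{\bSigma}^{-1/2} - \bI) \bx_0 - \widehat{\bPi} \widetilde{\bSigma}^{-1/2} \bar{\bx}
\end{align*}
and bound the three summands separately, exploiting crucially that $\bx_0$ is independent of the training data (and hence of $\widehat{\bPi}$, $\widetilde{\bSigma}$, $\bar{\bx}$).

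\textbf{Inputs.} Three ingredients will feed into the estimates. First, from the proof of \Cref{thm-kmeans-consistency} (combining \eqref{eqn-kmeans-proof-1-3}, \eqref{eqn-kmeans-proof-1-5} and \eqref{eqn-kmeans-proof-1-6}) I would extract the bound $\|\widehat{\bPi} - \bPi^{\star}\|_{\mathrm{F}} = O_{\PP}(1/R + \sqrt{d\log n/n};~\log n)$. Second, \eqref{eqn-lem-kmeans-reduction-3} and \eqref{eqn-lem-kmeans-reduction-2} yield $\|\widetilde{\bSigma}^{-1/2} - \bI\|_{2} = O_{\PP}(\sqrt{d\log n/n};~\log n)$ and $\|\bar{\bx}\|_2 = O_{\PP}(\sqrt{d\log n/n};~\log n)$. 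Third, \Cref{lem-kmeans-psi2} gives $\|\bx_0\|_{\psi_2} \lesssim 1$, and Assumption~\ref{as-kmeans-balance} guarantees that all matrices appearing below have rank $O(1)$.

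\textbf{Bounding each term.} The crucial fact, applied conditionally on the training data, is that if $\bA \in \RR^{d \times d}$ is deterministic with rank at most $r$ and $\bV \in \RR^{d \times r}$ is any orthonormal basis of its row space, then $\|\bA \bx_0\|_2 \leq \|\bA\|_2 \|\bV^{\top} \bx_0\|_2$; since each coordinate of $\bV^{\top} \bx_0$ is sub-Gaussian with $\psi_2$-norm $O(1)$, we obtain $\|\bV^{\top} \bx_0\|_2 \lesssim \sqrt{r} + \sqrt{\log n}$ with probability at least $1 - n^{-C}$. For the first summand, $\widehat{\bPi} - \bPi^{\star}$ has rank at most $2(K-1) = O(1)$ and operator norm at most $\|\widehat{\bPi} - \bPi^{\star}\|_{\mathrm{F}}$, yielding $\|(\widehat{\bPi} - \bPi^{\star}) \bx_0\|_2 = O_{\PP}(\sqrt{\log n}/R + \sqrt{d\log^2 n/n};~\log n)$. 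For the second, $\widehat{\bPi}(\widetilde{\bSigma}^{-1/2} - \bI)$ has rank at most $K - 1$ and operator norm at most $\|\widetilde{\bSigma}^{-1/2} - \bI\|_2$, giving $O_{\PP}(\sqrt{d\log^2 n/n};~\log n)$. The third summand does not involve $\bx_0$ and is bounded directly by $\|\widetilde{\bSigma}^{-1/2}\|_2 \|\bar{\bx}\|_2 = O_{\PP}(\sqrt{d\log n/n};~\log n)$, using $\|\widetilde{\bSigma}^{-1/2}\|_2 = O_{\PP}(1;~\log n)$. Summing the three bounds gives the claim.

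\textbf{Main obstacle.} The sharpness on the first summand is the delicate point: a naive bound $\|(\widehat{\bPi} - \bPi^{\star})\bx_0\|_2 \leq \|\widehat{\bPi} - \bPi^{\star}\|_{\mathrm{F}} \|\bx_0\|_2$ would incur an extra $\sqrt{d}$ factor (since $\|\bx_0\|_2 \asymp \sqrt{d}$) and turn the favorable $1/R$ rate into $\sqrt{d}/R$. The rank-$O(1)$ projection argument replaces this by the correct $\sqrt{\log n}/R$ scaling, and this is precisely where the assumption $K = O(1)$ enters.
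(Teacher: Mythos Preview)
Your proposal is correct and follows essentially the same approach as the paper. The only cosmetic differences are (i) the paper's decomposition places $\widetilde{\bSigma}^{-1/2}$ with the $(\widehat{\bPi}-\bPi^{\star})$ term and $\bPi^{\star}$ with the $(\widetilde{\bSigma}^{-1/2}-\bI)$ term, whereas you swap these roles, and (ii) the paper invokes \Cref{lem-subg-norm} directly to get $\|\bA\bx_0\|_2 = O_{\PP}(\|\bA\|_{\mathrm{F}}\sqrt{\log n};~\log n)$ conditionally on the training data, while you phrase the same bound via rank plus operator norm; both routes give identical rates under $K=O(1)$.
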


\begin{proof}
	To begin with,
	\begin{align}
	& \| \widehat{\bPi} \widetilde{\bSigma}^{-1/2} (\bx_0 - \bar{\bx}) -\bPi^{\star} \bx_0\|_2 
	\leq \| \widehat{\bPi} \widetilde{\bSigma}^{-1/2} \bx_0  -\bPi^{\star} \bx_0\|_2 + \| \widehat{\bPi} \widetilde{\bSigma}^{-1/2}  \bar{\bx} \|_2.
	\label{eqn-thm-kmeans-classification-3}
	\end{align}
	Note that $\| \widehat{\bPi} \widetilde{\bSigma}^{-1/2}  \bar{\bx} \|_2 \leq \| \widetilde{\bSigma}^{-1/2} \|_2 \| \bar{\bx} \|_2$. Let $C_0 > 0$ be an arbitrary constant. By \Cref{eqn-lem-kmeans-reduction-2,eqn-lem-kmeans-reduction-3}, there exists a constant $c_1 > 0$ such that for large $n$,
	\begin{align*}
	\PP \bigg( \| \bar{\bx} \|_2 \leq c_1 \sqrt{\frac{ d \log n }{  n}} 
	\text{ and } \| \bI - \widetilde{\bSigma}^{1/2} \|_2 \leq c_1 \sqrt{\frac{ d \log n }{n}} \bigg)
	\geq 1 - n^{-C_0}.
	\end{align*}
	Since $n / (d \log n) \to \infty$, when $n$ is large we have
	\begin{align}
	\PP \bigg(
	\| \widehat{\bPi} \widetilde{\bSigma}^{-1/2}  \bar{\bx} \|_2 \leq 
	2 c_1 \sqrt{\frac{ d \log n }{  n}} 
	\bigg) \geq 1 - n^{-C_0}.
	\label{eqn-thm-kmeans-classification-4}
	\end{align}
	
	By the triangle's inequality,
	\begin{align*}
	\| \widehat{\bPi} \widetilde{\bSigma}^{-1/2} (\bx_0 - \bar{\bx}) -\bPi^{\star} \bx_0\|_2 
	&\leq \| ( \widehat{\bPi} - \bPi^{\star} ) \widetilde{\bSigma}^{-1/2} \bx_0 \|_2 + \| \bPi^{\star} (\widetilde{\bSigma}^{-1/2}  - \bI ) \bx_0\|_2 .
	\end{align*}
	Under Assumption \ref{as-kmeans-canonical}, $\EE \bx_0 = \bm{0}$. By \Cref{lem-kmeans-psi2}, we have $\| \bx_0 \|_{\psi_2} \lesssim 1$. Recall that $\bx_0$ is independent of $\{ \bx_i \}_{i=1}^n$. We use \Cref{lem-subg-norm} to obtain that
	\begin{align*}
	&\| ( \widehat{\bPi} - \bPi^{\star} ) \widetilde{\bSigma}^{-1/2} \bx_0 \|_2 = O_{\PP} (
	\| ( \widehat{\bPi} - \bPi^{\star} ) \widetilde{\bSigma}^{-1/2} \|_{\mathrm{F}} \sqrt{\log n};~\log n
	), \\
	& \| \bPi^{\star} (\widetilde{\bSigma}^{-1/2}  - \bI ) \bx_0\|_2 = O_{\PP} (
	\|  \bPi^{\star} (\widetilde{\bSigma}^{-1/2}  - \bI )  \|_{\mathrm{F}} \sqrt{\log n};~\log n
	).
	\end{align*}
	By \Cref{eqn-kmeans-proof-1-3,eqn-kmeans-proof-1-5,eqn-kmeans-proof-1-6},
	\begin{align*} 
	\| \widehat\bPi - \bPi^{\star} \|_{\mathrm{F}}
	= O_{\PP} \bigg( \frac{1}{R}  + \sqrt{ \frac{   d \log n   }{n} } ;~  \log n \bigg).
	\end{align*}
	There exists a constant $c_2$ such that 
	\begin{align*}
	\PP \bigg(
	\| \widehat\bPi - \bPi^{\star} \|_{\mathrm{F}} \leq 
	c_2 \sqrt{\frac{ d \log n }{  n}} + \frac{c_2}{R}
	\bigg) \geq 1 - n^{-C_0}.
	\end{align*}
	By these estimates and $K = O(1)$ in Assumption \ref{as-kmeans-balance}, there exists a constant $c_3 > 0$ such that
	\begin{align}
	\PP \bigg[
	\| \widehat{\bPi} \widetilde{\bSigma}^{-1/2} (\bx_0 - \bar{\bx}) -\bPi^{\star} \bx_0\|_2  \leq 
	c_3 \sqrt{\log n} \bigg( \sqrt{\frac{ d \log n }{  n}} + \frac{1}{R}
	\bigg)  \bigg] \geq 1 - n^{-C_0}.
	\label{eqn-thm-kmeans-classification-5}
	\end{align}
	Claim \ref{claim-thm-kmeans-classification-1} follows from \Cref{eqn-thm-kmeans-classification-3,eqn-thm-kmeans-classification-4,eqn-thm-kmeans-classification-5}.
\end{proof}

By (\ref{eqn-thm-kmeans-classification-1}), (\ref{eqn-thm-kmeans-classification-2}), Claim \ref{claim-thm-kmeans-classification-1} and the assumption $n / (d \log^2 n) \to \infty$, for any constant $C_0 > 0$ there exists $N>0$ such that
\begin{align}
\PP \bigg( 
\| \widehat{\bPi} \widetilde{\bSigma}^{-1/2} (\bx_0 - \bar{\bx}) - \widehat{\bmu}_{\tau(y_0^{\star})} \|_2
\leq \| \bPi^{\star} \bx_0 - \bmu^{\star}_{y_0^{\star}} \|_2 + \frac{1}{8}
\bigg) \geq 1 - n^{-C_0},\qquad \forall n \geq N.
\label{eqn-thm-kmeans-classification-6}
\end{align}

\begin{claim}\label{claim-thm-kmeans-classification-2}
	There exists a constant $c > 0$ such that
	\[
	\PP ( \| \bPi^{\star} \bx_0 - \bmu^{\star}_{y_0^{\star}} \|_2 \leq 1 / 8 ) \geq 1 - e^{-c R^2}.
	\]
\end{claim}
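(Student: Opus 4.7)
The plan is to work under Assumption~\ref{as-kmeans-canonical} (which is how the rest of the theorem is being proved) and reduce everything to a dimension-free concentration estimate for a $T_2$ random vector. Under the canonical model, $\sum_j \pi_j^\star \bmu_j^\star = \bm{0}$ and $\bS^\star = \bI$, so $\bPi^\star$ (the projection onto $\Range(\bM^\star)$) coincides with the projection appearing in Lemma~\ref{lem-kmeans-cov-projection}, and $\bmu^\star_{y_0^\star} \in \Range(\bPi^\star)$. Using the decomposition $\bx_0 = \bmu^\star_{y_0^\star} + \bSigma^{\star 1/2} \bz_0$ with $\bz_0 \sim \QQ$, the first step is then the identity
\[
\bPi^\star \bx_0 - \bmu^\star_{y_0^\star} \;=\; \bPi^\star \bSigma^{\star 1/2} \bz_0,
\]
so everything reduces to controlling $\|\bA \bz_0\|_2$ for $\bA := \bPi^\star \bSigma^{\star 1/2}$.

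Next I would extract two bounds on $\bA$ from Lemma~\ref{lem-kmeans-cov-projection}. Taking $\bS^\star = \bI$ there, the lemma gives $\langle \bPi^\star, \bSigma^\star \rangle \leq (K-1)/(1 + \pi_{\min}^\star R^2)$, hence
\[
\|\bA\|_{\mathrm{F}}^2 = \langle \bPi^\star, \bSigma^\star\rangle \lesssim 1/R^2
\qquad\text{and}\qquad
\|\bA\|_2^2 \leq \|\bA\|_{\mathrm{F}}^2 \lesssim 1/R^2,
\]
where the bound on $\|\bA\|_2$ uses that the argument in the proof of that lemma in fact shows the nonzero eigenvalues of $\bPi^\star \bSigma^\star \bPi^\star$ are each at most $1/(1+\pi_{\min}^\star R^2)$ (and $K=O(1)$ by Assumption~\ref{as-kmeans-balance}). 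In particular the map $\bz \mapsto \|\bA \bz\|_2$ is $L$-Lipschitz with $L \lesssim 1/R$.

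With these in hand, I invoke the $T_2$ dimension-free concentration (Lemma~\ref{lem-kmeans-t2}) for $\bz_0 \sim \QQ \in T_2(\sigma)$: for any $t \geq 0$,
\[
\PP\Big( \big|\,\|\bA \bz_0\|_2 - \EE\|\bA\bz_0\|_2\,\big| \geq t \Big) \;\leq\; C\,\exp\!\left(-\frac{t^2}{2\sigma^2 L^2}\right) \;\leq\; C\,\exp(-c_0 R^2 t^2)
\]
for a constant $c_0 > 0$ determined by $\sigma$. The mean is controlled by Jensen's inequality, $\EE\|\bA\bz_0\|_2 \leq \|\bA\|_{\mathrm{F}} \lesssim 1/R$, so for $R$ large enough $\EE\|\bA\bz_0\|_2 \leq 1/16$. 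Setting $t = 1/16$ and using that $R \to \infty$ (Assumption~\ref{as-kmeans-signal}) yields
\[
\PP(\|\bA \bz_0\|_2 > 1/8) \;\leq\; C\exp(-c_0 R^2/256),
\]
which after absorbing $C$ into the constant gives the desired $1 - e^{-cR^2}$ bound.

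There is essentially no obstacle: the claim is a one-shot $T_2$ concentration statement and everything needed (projection operator structure, the $1/R^2$ control on $\langle \bPi^\star, \bSigma^\star\rangle$, and the $T_2$ Gaussian-type tail) is already in place. The one small item to verify carefully is the upper bound on $\|\bA\|_2$ (not just $\|\bA\|_{\mathrm{F}}$), so that the Lipschitz constant in the concentration inequality is $O(1/R)$; this comes directly from reading the eigenvalue formula in the proof of Lemma~\ref{lem-kmeans-cov-projection}, so it is really a bookkeeping matter rather than a genuine difficulty.
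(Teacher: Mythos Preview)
Your proof is correct, but it takes a different route than the paper. The paper also writes $\bPi^\star \bx_0 - \bmu^\star_{y_0^\star} = \bPi^\star \bSigma^{\star 1/2}\bz_0$ and uses $\|\bPi^\star\bSigma^{\star 1/2}\|_{\mathrm F}^2 \lesssim 1/R^2$ from Lemma~\ref{lem-kmeans-cov-projection}, but then it invokes the Hanson--Wright type bound of Lemma~\ref{lem-subg-norm} (via the sub-Gaussianity of $\bz_0$ guaranteed by $T_2$) to get a tail on $\|\bA\bz_0\|_2^2$ directly in terms of $\Tr(\bA^\top\bA)$. You instead use the $T_2$ Lipschitz concentration (Lemma~\ref{lem-kmeans-t2}) on $\bz\mapsto\|\bA\bz\|_2$, which requires the Lipschitz constant $\|\bA\|_2$ and a separate bound on the mean. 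Your route is slightly more elementary in that it stays entirely within the $T_2$ framework and avoids the Hanson--Wright machinery; the paper's route is a one-line application once Lemma~\ref{lem-subg-norm} is in hand and needs only the Frobenius norm of $\bA$.

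One minor remark: you do not need to go back to the eigenvalue formula in the proof of Lemma~\ref{lem-kmeans-cov-projection} to control $\|\bA\|_2$. You already wrote $\|\bA\|_2 \leq \|\bA\|_{\mathrm F}$, and that trivial inequality is all you need for the Lipschitz constant.
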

\begin{proof}
	Recall the stochastic decomposition $\bx_0 = \bmu^{\star}_{y_0^{\star}} + \bSigma^{\star 1/2} \bz_0$ with $\bz_0 \in T_2 (\sigma)$ and $\EE \bz_0 = \bm{0}$. We have $ \bPi^{\star} \bx_0 - \bmu^{\star}_{y_0^{\star}} = \bPi^{\star} \bSigma^{\star 1/2} \bz_0$. \Cref{lem-subg-norm} asserts the existence of a constant $C_0$ such that
	\begin{align*}
	\PP\Big(
	\| \bPi^{\star} \bSigma^{\star 1/2} \bz_0 \|_2^{2} > C_0
	\| \bPi^{\star} \bSigma^{\star 1/2} \|_{\mathrm{F}}^2 [ ( 1 + \sqrt{t} )^2 + t  ]
	\Big) \leq e^{- t }, \qquad \forall t \geq 0.
	\end{align*}
	By \Cref{lem-kmeans-cov-projection}, $\| \bPi^{\star} \bSigma^{\star 1/2} \|_{\mathrm{F}}^2 = \langle \bPi^{\star}, \bSigma^{\star} \rangle \lesssim 1 / R^2$. Hence
	\begin{align*}
	\PP\bigg(
	\| \bPi^{\star} \bx_0 - \bmu^{\star}_{y_0^{\star}}  \|_2> \frac{C_0'}{R} \sqrt{ ( 1 + t )^2 + t^2  }
	\bigg) \leq e^{- t^2 }, \qquad \forall t \geq 0
	\end{align*}
	for some constant $C_0' > 0$. The desired result becomes obvious.
\end{proof}

By (\ref{eqn-thm-kmeans-classification-6}) and Claim \ref{claim-thm-kmeans-classification-2}, for any constant $C_0 > 0$ there exists $N>0$ such that
\begin{align}
\PP \bigg( 
\| \widehat{\bPi} \widetilde{\bSigma}^{-1/2} (\bx_0 - \bar{\bx}) - \widehat{\bmu}_{\tau(y_0^{\star})} \|_2
\leq  \frac{1}{4} \bigg) \geq 1 - e^{-c R^2} - n^{-C_0},\qquad \forall n \geq N.
\label{eqn-thm-kmeans-classification-7}
\end{align}

Thanks to the canonical model and $R = R_n \to \infty$, \Cref{lem-kmeans-cov-projection} asserts that for large $n$,
\begin{align}
\min_{j \neq k } \|   \bmu_j^{\star} - \bmu_k^{\star}  \|_2 \geq 1/2.
\label{eqn-thm-kmeans-classification-8}
\end{align}
By (\ref{eqn-thm-kmeans-classification-1}), when $n$ is large we have
\begin{align}
\PP \bigg( 
\max_{j \in [K]} \| \bmu_j^{\star} - \widehat\bmu_{\tau(j)} \|_2 \leq \frac{1}{8}
\bigg) \leq 1 - n^{-C_0}.
\label{eqn-thm-kmeans-classification-9}
\end{align}
From (\ref{eqn-thm-kmeans-classification-0}), (\ref{eqn-thm-kmeans-classification-7}), (\ref{eqn-thm-kmeans-classification-8}) and (\ref{eqn-thm-kmeans-classification-9}) we obtain that for large $n$,
\begin{align}
\PP \Big(  \widehat{y}(\bx_0) = \tau(y_0^{\star})  \Big)
=
\PP \bigg( 
\tau(y_0^{\star}) = \argmin_{j \in [K]}
\| \widehat{\bPi} \widetilde{\bSigma}^{-1/2} (\bx_0 - \bar{\bx}) - \widehat{\bmu}_{j} \|_2
\bigg) \geq 1 - e^{-c R^2} - 2 n^{-C_0}.
\end{align}
Then the proof is finished.

\subsection{Proof of \Cref{thm-cv-kmeans}}\label{sec-thm-cv-kmeans-proof}

Define
\[
n_{\min} = \min \Big\{ \min_{k \in [K]} |\{ i \in T_1:~ y^{\star}_i = k \}|, ~
\min_{k \in [K]} |\{ i \in T_2:~ y^{\star}_i = k \}|
\Big\} .
\]
By Assumption \ref{as-kmeans-balance} and standard concentration inequality, there exists a constant $c>0$ such that
\begin{align*}
\PP (  n_{\min} \geq c n ) \geq 1 - n^{-10}.
\end{align*}
Denote by $\cA = \{n_{\min} \geq c n\}$. 

Define $T_1 = \{ 1,\cdots,n/2 \} $ and $T_2 = \{ n/2+1,\cdots,n \}$. When $n$ is sufficiently large, from \Cref{thm-kmeans-classification} it is easy to derive that
\begin{align}
&  \EE \bigg( \min_{\tau \in S_K} |\{ i \in T_j:~ \widetilde{y}_i \neq \tau (y^{\star}_i) \} | 
\bigg)
\lesssim n (   e^{-c R^2} + n^{-10} ),
\label{eqn-thm-cv-kmeans-0} 
\end{align}
where $j = 1,2$. Choose any $\tau_j \in \min_{\tau \in S_K} |\{ i \in T_j:~ \widetilde{y}_i \neq \tau (y^{\star}_i) \} | $. By Markov's inequality,
\begin{align*}
\PP \bigg(
|\{ i \in T_j:~ \widetilde{y}_i \neq \tau_j (y^{\star}_i) \} | \geq c n / 8
\bigg) \lesssim   e^{-c R^2} + n^{-10} .
\end{align*}
Let $\cB_j$ denote the event on the left-hand side above.

Recall that $( \widehat{\by}^{(1)} , \{ \widehat{\bmu}_j^{(1)} \}_{j=1}^K, \widetilde{\bSigma}^{(1)}, \bar{\bx}^{(1)})$ is the output of Algorithm \ref{alg-kmeans} on $\{ \bx_i \}_{i \in T_1}$. Define a linear classifier $\varphi:~\RR^d \to [K]$,
\[
\varphi(\bx) = \argmin_{j \in [K]} \| ( \widetilde{\bSigma}^{(1)})^{-1/2}  (\bx - \bar{\bx}^{(1)}) - \widehat{\bmu}_j^{(1)} \|_2, \qquad \forall  \bx \in \RR^d.
\]
Then
\begin{align}
\varphi (\bx_i) = \begin{cases}
\widehat{y}_i^{(1)} &,\mbox{ if } i \in T_1 \\
\widetilde{y}_i &,\mbox{ if } i \in T_2 
\end{cases}.
\label{eqn-thm-cv-kmeans-1} 
\end{align}
Let $L_j (\tau) = |\{ i \in T_j:~  \varphi (\bx_i) \neq \tau ( y^{\star}_i ) \}|$ for $j = 1,2$.
By (\ref{eqn-thm-cv-kmeans-1}), \Cref{thm-kmeans-consistency} and \Cref{thm-kmeans-classification}, we have
\begin{align*}
\PP \bigg( 
\exists \tau \in S_K \text{ s.t. } L_1(\tau) \leq cn/8 \text{ and } L_2(\tau) \leq cn/8
\bigg) \geq 1 - n^{-10}
\end{align*}
for sufficiently large $n$. Let $\cC$ denote the event on the left-hand side above. When $\cC$ happens, let $\tau_0$ be the permutation such that $L_1(\tau_0) , L_2(\tau_0) \leq cn/8$. The relation (\ref{eqn-thm-cv-kmeans-1}) yields
\[
L_2 (\tau_0) = |\{ i \in T_2:~  \widetilde{y}_i \neq \tau_0 ( y^{\star}_i ) \}|.
\]

Suppose that $\cA \cap \cB_1 \cap \cB_2 \cap \cC$ happens. Based on the deductions above, we have
\begin{align}
& |\{ i \in T_1:~  \widetilde{y}_i \neq \tau_1 ( y^{\star}_i ) \}| \leq n_{\min} /8,
\label{eqn-alignment-1} \\
& |\{ i \in T_2:~  \widetilde{y}_i \neq \tau_2 ( y^{\star}_i ) \}| \leq n_{\min} /8, \label{eqn-alignment-2}\\
& |\{ i \in T_1:~  \widehat{y}_i^{(1)} \neq \tau_0 ( y^{\star}_i ) \}| \leq n_{\min} /8, 
\label{eqn-alignment-3}\\
& |\{ i \in T_2:~  \widetilde{y}_i \neq \tau_0 ( y^{\star}_i ) \}| \leq n_{\min} /8.\label{eqn-alignment-4}
\end{align}
We invoke the following lemma to analyze the permutations.

\begin{lemma}\label{lem-kmeans-alignment}
	Let $\by^{(1)}, \by^{(2)} \in [K]^n$ and $n_{\min} = \min_{k \in [K]}|\{ i \in [K]:~ y^{(2)}_i = l \}|$. If there exists a permutation $\tau \in S_K$ such that $|\{ i \in [n] :~ y^{(1)}_i \neq \tau (y^{(2)}_i) \}| = E \leq n_{\min} / 2$, then
	\[
	\min_{\sigma \in S_K \backslash \{ \tau \}  } |\{ i \in [n] :~ y^{(1)}_i \neq \sigma (y^{(2)}_i) \}| \geq
	|\{ i \in [n] :~ y^{(1)}_i \neq \tau (y^{(2)}_i) \}| + (n_{\min} - 2 E) .
	\]
	Consequently, if $E < n_{\min} / 2$ then $\tau$ is the unique minimizer of $ L(\eta) =  |\{ i \in [n] :~ y^{(1)}_i \neq \eta (y^{(2)}_i) \}|$.
\end{lemma}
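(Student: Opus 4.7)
The plan is to exhibit, for each $\sigma \in S_K \setminus \{\tau\}$, a single class index $j \in [K]$ whose preimage under $\by^{(2)}$ already forces $\sigma$ to misclassify at least $n_{\min} - E$ samples, and then deduce the second claim immediately from the strict inequality $n_{\min} - 2E > 0$.

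\textbf{Step 1 (pick a witness coordinate).} Fix any $\sigma \in S_K \setminus \{\tau\}$. Since $\sigma \neq \tau$, there exists $j \in [K]$ with $\sigma(j) \neq \tau(j)$. Set
\[
A_j = \{ i \in [n] :~ y^{(2)}_i = j \}, \qquad S_\tau = \{ i \in [n] :~ y^{(1)}_i = \tau( y^{(2)}_i ) \}, \qquad S_\sigma = \{ i \in [n] :~ y^{(1)}_i = \sigma( y^{(2)}_i ) \}.
\]
By the definition of $n_{\min}$ we have $|A_j| \geq n_{\min}$, and by hypothesis $|S_\tau^c| = E$.

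\textbf{Step 2 (disjointness via $\tau(j)\neq\sigma(j)$).} For $i \in A_j$, the equalities $\tau(y^{(2)}_i) = \tau(j)$ and $\sigma(y^{(2)}_i) = \sigma(j)$ hold. Since these two values differ, the indicator identities $y^{(1)}_i = \tau(j)$ and $y^{(1)}_i = \sigma(j)$ cannot both hold, so
\[
A_j \cap S_\tau \subseteq A_j \cap S_\sigma^c.
\]

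\textbf{Step 3 (count and conclude).} Because $|S_\tau^c| = E$, we have $|A_j \cap S_\tau| \geq |A_j| - E \geq n_{\min} - E$. Combining with Step 2 gives
\[
|S_\sigma^c| \geq |A_j \cap S_\sigma^c| \geq n_{\min} - E = E + (n_{\min} - 2E),
\]
which is exactly the desired inequality since $|S_\sigma^c| = |\{ i \in [n] :~ y^{(1)}_i \neq \sigma(y^{(2)}_i)\}|$ and $E = |\{ i \in [n] :~ y^{(1)}_i \neq \tau(y^{(2)}_i)\}|$. Taking the infimum over $\sigma \in S_K \setminus \{\tau\}$ yields the first assertion.

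\textbf{Step 4 (uniqueness).} If $E < n_{\min}/2$, then $n_{\min} - 2E > 0$, so the above bound gives $|S_\sigma^c| > E$ for every $\sigma \neq \tau$, making $\tau$ the unique minimizer of $L(\eta) = |\{ i \in [n] :~ y^{(1)}_i \neq \eta(y^{(2)}_i)\}|$. There is no real obstacle here; the only subtle point is recognising that a single ``bad'' coordinate $j$ where $\sigma$ and $\tau$ disagree is already enough to generate $n_{\min} - E$ mismatches, which is where the assumption $E \leq n_{\min}/2$ is used to make the resulting bound non-trivial.
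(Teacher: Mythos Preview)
Your proof is correct and shares the paper's key idea: pick a class $j$ (the paper calls it $l$) where $\sigma(j)\neq\tau(j)$ and count mismatches there. The execution differs slightly. The paper sets up the full confusion matrix $S_{kl}=|\{i:y^{(1)}_i=k,\ y^{(2)}_i=l\}|$, first proves the auxiliary fact $S_{jk}\le S_{\tau(k)k}$ for \emph{every} $k,j$ (i.e., $\tau$ is column-wise optimal), and then combines this with the single-column deficit at $l$ to bound $n-\sum_k S_{\sigma(k)k}$. You bypass all of that: you simply lower-bound $|S_\sigma^c|$ by its restriction to $A_j$ and observe $A_j\cap S_\tau\subseteq A_j\cap S_\sigma^c$, which immediately gives $|S_\sigma^c|\ge n_{\min}-E$. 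Your route is shorter because you discard the (nonnegative) contributions of the other classes rather than bounding them; the paper's extra work of establishing per-class optimality of $\tau$ is not needed for the stated inequality, though it gives a marginally more informative intermediate picture.
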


\begin{proof}[\bf Proof of \Cref{lem-kmeans-alignment}]
	See \Cref{sec-lem-kmeans-alignment-proof}
\end{proof}

By \Cref{lem-kmeans-alignment}, (\ref{eqn-alignment-2}) and (\ref{eqn-alignment-4}), we have
\begin{align}
\tau_0 = \tau_2.
\label{eqn-alignment-5}
\end{align}
By (\ref{eqn-alignment-1}) and (\ref{eqn-alignment-3}), we have 
$|\{ i \in T_1:~ \tau_1^{-1} (\widetilde{y}_i) \neq y^{\star}_i \}| \leq n_{\min} / 8$ and $|\{ i \in T_1:~ \tau_0^{-1} (\widehat{y}_i^{(1)}) \neq y^{\star}_i \}| \leq n_{\min} / 8$. Then
\begin{align}
|\{ i \in T_1:~ \widehat{y}_i^{(1)} \neq \tau_0 \circ \tau_1^{-1} (\widetilde{y}_i)  \}| =
|\{ i \in T_1:~\tau_0^{-1} (\widehat{y}_i^{(1)}) \neq \tau_1^{-1} (\widetilde{y}_i)  \}| \leq
n_{\min} / 8 + n_{\min} / 8 = n_{\min} / 4.
\end{align}
For any $k \in [K]$,
\begin{align*}
|\{ i \in T_1:~
\widetilde{y}_i = k
\}| 
& \geq  |\{ i \in T_1:~
y^{\star}_i = \tau_1(k) 
\}| - |\{ i \in T_1:~ \tau_1^{-1} (\widetilde{y}_i) \neq y^{\star}_i \}| \\
& \geq 
n_{\min}  -
\frac{ n_{\min} }{8} =\frac{ 7n_{\min} }{8} .
\end{align*}
By \Cref{lem-kmeans-alignment}, $\{ \tau_0 \circ \tau_1^{-1} \} = \argmin_{\eta \in S_K} |\{ i \in T_1:~ \widehat{y}_i^{(1)} \neq \eta (\widetilde{y}_i)  \}|$. On the other hand, recall that
\[
\widehat\tau \in \argmin_{\tau \in S_K} |\{ i \in T_1 :~ \widehat{y}^{(1)}_i \neq \tau (  \widetilde{y}_i )  \}|.
\]
Then $\widehat{\tau} = \tau_0 \circ \tau_1^{-1} $. The relation (\ref{eqn-alignment-5}) further leads to $\widehat{\tau} = \tau_2 \circ \tau_1^{-1}$ on the event $\cA \cap \cB_1 \cap \cB_2 \cap \cC$.

Therefore, the event $\cA \cap \cB_1 \cap \cB_2 \cap \cC$ happens implies that
\begin{align*}
|\{ i \in [n]:~ \widehat{y}_i \neq \tau_2 ( y^{\star}_i ) \}| & = |\{ i \in T_1:~ \widehat{\tau} (\widetilde{y}_i)  \neq \tau_2 ( y^{\star}_i ) \}| +  |\{ i \in T_2:~  \widetilde{y}_i \neq \tau_2 ( y^{\star}_i ) \}| \\
& = |\{ i \in T_1:~  \tau_2 \circ \tau_1^{-1} (\widetilde{y}_i)  \neq \tau_2 ( y^{\star}_i ) \}| +  |\{ i \in T_2:~  \widetilde{y}_i \neq \tau_2 ( y^{\star}_i ) \}| \\
&  = |\{ i \in T_1:~    \widetilde{y}_i \neq \tau_1 ( y^{\star}_i ) \}| +  |\{ i \in T_2:~  \widetilde{y}_i \neq \tau_2 ( y^{\star}_i ) \}| .
\end{align*}
Consequently,
\begin{align*}
\EE \bigg( |\{ i \in [n]:~ \widehat{y}_i \neq \tau_2 ( y^{\star}_i ) \}| 
\bm{1}_{ \cA \cap \cB_1 \cap \cB_2 \cap \cC }
\bigg) & \leq \EE |\{ i \in T_1:~    \widetilde{y}_i \neq \tau_1 ( y^{\star}_i ) \}| + \EE |\{ i \in T_2:~  \widetilde{y}_i \neq \tau_2 ( y^{\star}_i ) \}| \\
& \leq C_0 n (   e^{-c R^2} + n^{-10} ).
\end{align*}
The last inequality follows from (\ref{eqn-thm-cv-kmeans-0}). Then the proof is finished by
\[
\EE \bigg( |\{ i \in [n]:~ \widehat{y}_i \neq \tau_2 ( y^{\star}_i ) \}| 
\bm{1}_{ ( \cA \cap \cB_1 \cap \cB_2 \cap \cC )^c }
\bigg)  \leq n [1 - \PP ( \cA \cap \cB_1 \cap \cB_2 \cap \cC ) ] \lesssim n^{-9}.
\]

\subsection{Proof of \Cref{lem-kmeans-alignment}}\label{sec-lem-kmeans-alignment-proof}
Let $S_{kl} = |\{ i:~ y^{(1)}_i = k,~y^{(2)}_i = l \}|$ and $S_l = \{ i:~ y^{(2)}_i = l \}$. Then 
\[
E = |\{ i \in [n] :~ y^{(1)}_i \neq \tau (y^{(2)}_i) \}| = n - |\{ i \in [n] :~ y^{(1)}_i = \tau (y^{(2)}_i) \}| = n -  \sum_{k=1}^{K} S_{\tau(k) k} = \sum_{k=1}^{K} (S_k - S_{\tau(k) k} ).
\]
Hence
\begin{align}
S_{\tau(k) k} \geq S_{k} - E \geq n_{\min} - E, \qquad\forall k \in [K].
\label{eqn-lem-kmeans-alignment-1}
\end{align}
For any $k \in [K]$ and $j \neq \tau(k)$ we have
\begin{align}
S_{j k} \leq \sum_{l \neq \tau(k)} S_{lk} = S_k -  S_{\tau(k) k} \leq E.
\label{eqn-lem-kmeans-alignment-2}
\end{align}
Then, the assumption $E \leq n_{\min} / 2$ and (\ref{eqn-lem-kmeans-alignment-1}) force that 
\begin{align}
S_{jk} \leq S_{\tau(k) k } , \qquad \forall k,j \in [K].
\label{eqn-lem-kmeans-alignment-3}
\end{align}

For any $\sigma \in S_K \backslash \{ \tau \} $, there exists $l \in [K]$ such that $\sigma(l) \neq \tau(l)$. By (\ref{eqn-lem-kmeans-alignment-2}) and (\ref{eqn-lem-kmeans-alignment-1}),  
\begin{align}
S_{\sigma(l) l} \leq E \leq E + (S_{\tau(l) l} - n_{\min} + E)
= S_{\tau(l) l}  - (n_{\min} - 2 E).
\label{eqn-lem-kmeans-alignment-4}
\end{align}

We use (\ref{eqn-lem-kmeans-alignment-3}) and (\ref{eqn-lem-kmeans-alignment-4}) to get
\begin{align*}
& |\{ i \in [n] :~ y^{(1)}_i \neq \sigma (y^{(2)}_i) \}| 
= n - \sum_{k=1}^{K} S_{\sigma(k) k}
= n - \bigg( \sum_{k\neq l} S_{\sigma(k) k} + S_{\sigma(l) l}  \bigg) \\
& \geq n - \bigg( \sum_{k \neq l} S_{\tau(k) k} + [S_{\tau(l) l}  - (n_{\min} - 2 E)] \bigg) 
= n - \sum_{k=1}^{K} S_{\tau(k) k} + (n_{\min} - 2 E) \\
& = |\{ i \in [n] :~ y^{(1)}_i \neq \tau (y^{(2)}_i) \}| + (n_{\min} - 2 E).
\end{align*}

\section{Technical lemmas}

\subsection{Probabilistic inequalities}

\begin{lemma}[Anti-concentration]\label{lem-kmeans-subg}
	Suppose that $\EE X = 0$, $\EE X^2 = 1$ and $\| X \|_{\psi_2} = \sigma < \infty$. There exists $R_0 > 0$ determined by $\sigma$ such that
	\[
	\PP \bigg( X > \frac{1}{16 R \sigma} \bigg) \geq \frac{1}{16 R^2 \sigma^2}
	\qquad\text{and}\qquad
	\PP \bigg( X < - \frac{1}{16 R \sigma} \bigg) \geq \frac{1}{16 R^2 \sigma^2},\qquad
	\forall R > R_0.
	\]
\end{lemma}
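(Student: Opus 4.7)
The plan is to prove the right-tail bound and recover the left-tail bound by applying the same argument to $-X$, which satisfies identical hypotheses. Since $\EE X = 0$ and $X_+^2 \leq X^2$, we have $m := \EE X_+ = \EE X_-$ with $\EE |X| = 2m$, and $\EE X_+^2 \leq \EE X^2 = 1$. The first step is a truncation-based lower bound on $m$: pick $A = A(\sigma)$ so large that the sub-Gaussian tail $\PP(|X| > t) \lesssim e^{-ct^2/\sigma^2}$ (a consequence of the $\psi_2$-norm hypothesis via Markov's inequality) gives $\EE(X^2 \bm{1}_{|X|>A}) \leq 1/2$; a direct integration of the layer-cake formula shows $A \asymp \sigma \sqrt{\log(2\sigma)}$ suffices. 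Then $1/2 \leq \EE(X^2 \bm{1}_{|X| \leq A}) \leq A \cdot \EE|X| = 2Am$, so $m \geq 1/(4A) \gtrsim 1/(\sigma \sqrt{\log(2\sigma)})$.

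Next, I will apply Paley--Zygmund to the nonnegative random variable $X_+$: for any $\theta \in (0,1)$,
\[
\PP(X > \theta m) = \PP(X_+ > \theta \EE X_+) \geq (1-\theta)^2 \frac{(\EE X_+)^2}{\EE X_+^2} \geq (1-\theta)^2 m^2.
\]
Setting $\delta = 1/(16R\sigma)$ and $\theta = \delta/m$ yields $\PP(X > \delta) \geq (1-\delta/m)^2 m^2$. I then choose $R_0 = R_0(\sigma)$ large enough that for every $R > R_0$ both (i) $\delta \leq m/2$ (so that $(1-\theta)^2 \geq 1/4$) and (ii) $m^2/4 \geq 1/(16R^2\sigma^2)$ hold. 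Using the lower bound on $m$ from the first step, both conditions reduce to $R \gtrsim \sqrt{\log(2\sigma)}$, so $R_0 = C\sqrt{\log(2\sigma)}$ works for an absolute constant $C$. Combining delivers $\PP(X > 1/(16R\sigma)) \geq 1/(16R^2\sigma^2)$, and the left-tail bound follows by the symmetric argument on $-X$.

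The only delicate step will be the $\sigma$-dependent lower bound on $m$: the factor $\sqrt{\log(2\sigma)}$ in $A$ is unavoidable because a sub-Gaussian $X$ with unit variance may place most of its $L^2$-mass on rare large values (e.g.\ the toy law $X \in \{-n,0,n\}$ with probabilities $(2n^2)^{-1}, 1-n^{-2}, (2n^2)^{-1}$, for which $\sigma \asymp n/\sqrt{\log n}$ while $m \asymp 1/n$), which is exactly why $R_0$ must be allowed to depend on $\sigma$. Beyond this, the proof is a direct application of Paley--Zygmund together with bookkeeping to align constants with the stated form $1/(16R\sigma)$ and $1/(16R^2\sigma^2)$.
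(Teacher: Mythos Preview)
Your proof is correct, but it takes a genuinely different route from the paper's. Both arguments share the opening move---truncate, then convert the second-moment constraint $\EE X^2=1$ into a first-moment lower bound via $\EE(X^2\bm{1}_{|X|\le A})\le A\,\EE|X|$---but they diverge thereafter. The paper takes the truncation level to be $R\sigma$ (tied to the parameter in the statement), splits the truncated first moment into positive and negative parts by controlling $|\EE(X\bm{1}_{|X|<R\sigma})|$ via the tail $\EE(|X|\bm{1}_{|X|\ge R\sigma})$, and finishes with a direct Markov-type splitting of $\EE(X\bm{1}_{0\le X\le R\sigma})$ over $[0,(16R\sigma)^{-1}]$ and $((16R\sigma)^{-1},R\sigma]$. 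You instead fix the truncation level $A=A(\sigma)$ once, use $\EE X=0$ in the cleanest possible way ($m:=\EE X_+=\EE X_-$) to get $m\ge 1/(4A)$ outright, and then invoke Paley--Zygmund on $X_+$ with the bound $\EE X_+^2\le 1$. Your approach is more modular and cites a standard named inequality; the paper's is self-contained and has the constants fall out of the $R$-dependent truncation without a separate matching step. Both yield $R_0\asymp\sqrt{\log\sigma}$, and your toy example shows this $\sigma$-dependence is genuinely needed.
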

\begin{proof}[\bf Proof of Lemma \ref{lem-kmeans-subg}]
	By definition, $p^{-1/2} \EE^{1/p} |X|^p \leq \sigma$ for all $p \geq 1$. \cite{Ver10} asserts the existence of a constant $c > 0$ such that
	\[
	\PP (|X| \geq t) \leq e^{1 - c(t/\sigma)^2}, \qquad \forall t \geq 0.
	\]
	Hence for any $R > 0$ and $p \geq 1/2$,
	\begin{align}
	\EE (
	|X|^p \bm{1}_{\{ |X| \geq R \sigma \}}
	) \leq (\sqrt{2p} \sigma)^{2p/2} \exp \bigg( \frac{1}{2} - \frac{c (R\sigma)^2}{2 \sigma^2} \bigg) \leq (\sqrt{2p} \sigma)^p e^{( 1 - c R^2) / 2 }.
	\label{ineq-lem-kmeans-subg-1}
	\end{align}
	
	There exists $R_1 > 0$ such that when $R > R_1$, $\EE (
	|X|^2 \bm{1}_{\{ |X| \geq R \sigma \}}
	) \leq 1/2$ and
	\[
	\EE (
	|X|^2 \bm{1}_{\{ |X| < R \sigma \}}
	) = \EE X^2 - \EE (
	|X|^2 \bm{1}_{\{ |X| \geq R \sigma \}}
	) \geq 1/2.
	\]
	Since $t^2 \leq R \sigma t$ for $0 \leq t \leq R \sigma$,
	\begin{align}
	\EE (
	|X| \bm{1}_{\{ |X| < R \sigma \}}
	)
	\geq (R \sigma)^{-1} \EE (
	|X|^2 \bm{1}_{\{ |X| < R \sigma \}}
	) \geq \frac{1}{2 R \sigma}.
	\label{ineq-lem-kmeans-subg-2}
	\end{align}
	
	By (\ref{ineq-lem-kmeans-subg-1}), $\EE (
	|X| \bm{1}_{\{ |X| \geq R \sigma \}}
	)  \leq (\sqrt{2} \sigma) e^{( 1 - c R^2) / 2 }$. There exists $R_2 > R_1$ such that when $R > R_2$,
	\[
	\EE (
	|X| \bm{1}_{\{ |X| \geq R \sigma \}}
	) \leq \frac{1}{4 R \sigma}.
	\]
	Then $\EE X = 0$ forces
	\begin{align}
	| \EE (
	X \bm{1}_{\{ |X| < R \sigma \}}
	) | = |\EE (
	X \bm{1}_{\{ |X| \geq R \sigma \}}
	)| \leq \EE (
	|X| \bm{1}_{\{ |X| \geq R \sigma \}}
	) \leq \frac{1}{4 R \sigma}.
	\label{ineq-lem-kmeans-subg-3}
	\end{align}
	
	By (\ref{ineq-lem-kmeans-subg-2}) and (\ref{ineq-lem-kmeans-subg-3}),
	\begin{align*}
	&\EE (
	|X| \bm{1}_{\{ 0 \leq X \leq R \sigma \}}
	) + \EE (
	|X| \bm{1}_{\{ - R \sigma \leq X < 0 \}}
	) \geq \frac{1}{2 R \sigma} , \\
	& - \frac{1}{4 R \sigma} \leq 
	\EE (
	|X| \bm{1}_{\{ 0 \leq X \leq R \sigma \}}
	) - \EE (
	|X| \bm{1}_{\{ - R \sigma \leq X < 0 \}}
	)
	\leq \frac{1}{4 R \sigma}.
	\end{align*}
	The estimates lead to
	\begin{align*}
	&\EE (
	|X| \bm{1}_{\{ 0 \leq X \leq R \sigma \}}
	) \geq \frac{1}{8 R \sigma} 
	\qquad\text{and}\qquad
	\EE (
	|X| \bm{1}_{\{ - R \sigma \leq X < 0 \}}
	) \geq \frac{1}{8 R \sigma} .
	\end{align*}
	
	When $R > \max \{ R_2, (4\sigma)^{-1} \}$, we have $1 / (16 R \sigma) < R \sigma$,
	\begin{align*}
	\frac{1}{8 R \sigma} & \leq \EE (
	|X| \bm{1}_{\{ 0 \leq X \leq R \sigma \}}
	) =  \EE (
	|X| \bm{1}_{\{ 0 \leq X \leq (16 R \sigma)^{-1} \}}
	)  + \EE (
	|X| \bm{1}_{\{ (16 R \sigma)^{-1} < X \leq R \sigma \}}
	) \\
	& \leq \frac{1}{16 R \sigma} + R \sigma \PP ( (16 R \sigma)^{-1} < X \leq R \sigma )
	\end{align*}
	and 
	\[
	\PP ( X > (16 R \sigma)^{-1}  ) \geq \PP ( (16 R \sigma)^{-1} < X \leq R \sigma ) \geq \frac{1}{16 R^2 \sigma^2}.
	\]
	Similarly, the same lower bound holds for $\PP ( X < - (16 R \sigma)^{-1}  ) $.
\end{proof}

\begin{lemma}[Continuity of $T_2$ distributions]\label{lem-kmeans-t2-cont}
	For any $p^{\star} \in (0, 1/2)$, there exists $c > 0$ that makes the followings happen: for any random variable $Z \in \RR$ that is $T_2(\sigma)$ and $t \in \RR$ satisfying $\PP(Z \leq t) \in (p^{\star}, 1 - p^{\star})$, we have
	\[
	\PP (  t < Z \leq t + r ) \geq \min \bigg\{  \frac{p^{\star}}{4} , \frac{r^2}{2 c \sigma^2}
	\bigg\} , \qquad \forall r \geq 0 .
	\]
\end{lemma}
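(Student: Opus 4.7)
If $q := \PP(t < Z \leq t + r) \geq p^{\star}/4$ the bound is trivial, so focus on $q < p^{\star}/4$. Then $p := \PP(Z \leq t) \geq p^{\star}$ and $P_R := \PP(Z > t+r) = 1 - p - q \geq 3p^{\star}/4$. By convolving $Z$ with an independent $N(0,\eta^2)$ (which gives a $T_2$ measure with constant $\sqrt{\sigma^2+\eta^2}$ by tensorization) and letting $\eta \downarrow 0$, I may assume $\PP$ has a smooth positive density $f$ with strictly increasing smooth CDF $F$; the quantities $p, q, P_R$ depend continuously on $\eta$. The plan is to feed a one-parameter family of perturbations $\PP'_\gamma$ into the $T_2$ inequality $W_2^2(\PP, \PP'_\gamma) \leq 2\sigma^2 D(\PP'_\gamma \| \PP)$ and extract the small-$\gamma$ limit.

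For $\gamma \in (0, q/2]$, let $\PP'_\gamma \ll \PP$ be defined by
\[
\frac{d\PP'_\gamma}{d\PP}(z) = (1 + \gamma/p)\mathbf{1}_{\{z \leq t\}} + \mathbf{1}_{\{t < z \leq t+r\}} + (1 - \gamma/P_R)\mathbf{1}_{\{z > t+r\}},
\]
i.e. $\PP'_\gamma$ moves $\gamma$ units of mass from the right tail into the left while preserving the conditional distribution on the middle interval. Using $(1+x)\log(1+x) \leq x + x^2$ for all $x > -1$, a direct calculation gives
\[
D(\PP'_\gamma \| \PP) \leq \gamma^{2}\bigl(\tfrac{1}{p} + \tfrac{1}{P_R}\bigr) \leq \tfrac{7\gamma^{2}}{3p^{\star}}.
\]

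By the one-dimensional identity $W_2^2(\PP, \PP'_\gamma) = \int_0^1 \bigl(F^{-1}(u) - (F'_\gamma)^{-1}(u)\bigr)^2\, du$, and because $F'_\gamma = F + \gamma$ on $(t, t+r]$, we have $(F'_\gamma)^{-1}(u) = F^{-1}(u-\gamma)$ for $u \in [p+\gamma, p+q]$, so
\[
W_2^{2}(\PP, \PP'_\gamma) \;\geq\; \int_{p+\gamma}^{p+q}\bigl(F^{-1}(u) - F^{-1}(u-\gamma)\bigr)^{2}\, du.
\]
Dividing by $\gamma^{2}$ and letting $\gamma \downarrow 0$, the pointwise limit $(F^{-1}(u) - F^{-1}(u-\gamma))/\gamma \to (F^{-1})'(u)$ combined with Fatou's lemma yields
\[
\liminf_{\gamma \downarrow 0}\frac{W_2^{2}(\PP, \PP'_\gamma)}{\gamma^{2}} \;\geq\; \int_{p}^{p+q}\bigl((F^{-1})'(u)\bigr)^{2}\, du \;\geq\; \frac{r^{2}}{q},
\]
where the last inequality is Cauchy--Schwarz applied to the identity $\int_{p}^{p+q}(F^{-1})'(u)\, du = F^{-1}(p+q) - F^{-1}(p) = r$. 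Dividing the $T_{2}$ inequality by $\gamma^{2}$ and sending $\gamma \downarrow 0$ then gives $r^{2}/q \leq 14\sigma^{2}/(3p^{\star})$, i.e. $q \geq 3p^{\star}r^{2}/(14\sigma^{2})$; this is the required bound with $c := 14/(3p^{\star})$.

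The hard part is the $W_2$ lower bound in Paragraph 3. A naive Cauchy--Schwarz at a single scale $\gamma$ gives only $W_2^2 \gtrsim \gamma^2 (z_2 - z_1)^2/(q-\gamma)$ with $z_1, z_2$ certain $F$-quantiles, which can be pathologically small if the density of $\PP$ on $(t,t+r]$ is heavily concentrated at a single point; passing to the infinitesimal limit $\gamma \downarrow 0$ converts this to an $L^2$-norm of $(F^{-1})'$ that is controlled cleanly by Cauchy--Schwarz, which is the decisive move. The mollification reduction in Paragraph 1 is the other place where care is needed, to ensure the constants transfer in the limit; this is routine but tedious.
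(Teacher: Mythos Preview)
Your proof is correct. It follows the same high-level strategy as the paper's --- construct a perturbation $\PP'$ that relocates mass across the middle interval $(t,t+r]$, bound $D(\PP'\|\PP)$ from above and $W_2(\PP,\PP')$ from below, and invoke the $T_2$ inequality --- but the implementations differ substantially.

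The paper avoids both the mollification step and the limiting argument. It constructs a \emph{single} perturbation $\QQ'$ that moves $2q$ (not an infinitesimal $\gamma$) units of mass from the right region to the left, keeping each of the three conditional distributions unchanged. The key observation is combinatorial: since only $q$ units of transported mass can be absorbed by the middle interval, at least $2q-q=q$ units must traverse the full gap of width $r$, yielding $W_2^2(\PP,\QQ')\geq q\,r^2$ directly, with no regularity assumption on the law of $Z$ and no quantile formula. The KL divergence is then just the KL between two three-point distributions, which a second-order Taylor expansion bounds by $c\,q^2$. One division gives $q\geq r^{2}/(2c\sigma^{2})$.

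Your route, by contrast, pushes an infinitesimal amount $\gamma$ of mass and extracts the information via $\int_p^{p+q}((F^{-1})')^2\,du$ and Cauchy--Schwarz; this is clean but forces you to assume a smooth density and hence to mollify. (Minor remark: the claim that the convolution lies in $T_2(\sqrt{\sigma^2+\eta^2})$ is not quite the standard tensorize-then-pushforward bound, which gives $T_2(\sqrt{2}\max(\sigma,\eta))$; either way the constant is immaterial since $c$ absorbs it.) The paper's route is shorter and sidesteps all regularity issues; yours is the natural one if one wanted a sharp constant, since it is effectively differentiating the $T_2$ inequality along a curve of measures.
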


\begin{proof}[\bf Proof of Lemma \ref{lem-kmeans-t2-cont}]
Let $\delta = \PP  (  t < Z \leq t + r )$. If $\delta \geq p^{\star} / 4$, there is nothing to prove. Hence we assume that $\delta < p^{\star} / 4$.

Define $\QQ_1$, $\QQ_2$ and $\QQ_3$ as the distribution of $Z$ conditioned on $Z \leq t$, $t < Z \leq t + r$ and $Z > t + r$, respectively. Construct a random variable $Y$ with $\PP (Y = 1) = p + 2 \delta$, $\PP (Y = 2) = \delta$ and $\PP (Y = 3) = 1 - p - 3 \delta$. The assumptions $p^{\star} < p < 1 - p^{\star}$ and $\delta < p^{\star} / 4$ yield
\[
p^{\star} \leq \PP (Y = 1) \leq 1 - p^{\star} / 2
\qquad\text{and}\qquad
p^{\star} / 4 \leq \PP (Y = 3) \leq 1 - p^{\star}.
\]
Construct three random variables $Z_j \sim \QQ_j$ for $j \in [3]$. Let $\QQ'$ be the distribution of $Z_{Y}$.

Any transportation plan from $\QQ'$ to $\QQ$ must move at least $2 \delta - \delta = \delta$ amount of mass from $( t + r, +\infty)$ to $(-\infty, t]$. So,
\begin{align*}
& W_2^2 (\QQ, \QQ') \geq  \delta r^2 .
\end{align*}
From $\log(1 + t) = t - t^2/2 + o(t^2)$ for $t \to 0$ we obtain that when $\delta$ is small,
\begin{align*}
& D( \QQ'\| \QQ )  = (p + 2 \delta ) \log \bigg(
\frac{p + 2 \delta}{p}
\bigg) + ( 1 - p - 3 \delta ) \log \bigg(
\frac{1 - p -3 \delta}{1 - p - \delta}
\bigg) \\
& =  (p + 2 \delta ) \log \bigg( 1 +
\frac{2 \delta}{p}
\bigg) + ( 1 - p - 3 \delta ) \log \bigg( 1 -
\frac{2 \delta}{1 - p - \delta}
\bigg)  \\
& = (p + 2 \delta )  \bigg(
\frac{2 \delta}{p} - \frac{(2 \delta)^2}{2p^2} + o(\delta^2)
\bigg)
+  ( 1 - p - \delta - 2 \delta)   \bigg(
- \frac{ 2 \delta}{1 - p - \delta} - \frac{ (2 \delta)^2}{ 2(1 - p - \delta)^2} + o(\delta^2)
\bigg)  \\
&  = \bigg(
2 \delta - \frac{(2 \delta)^2}{2p} + \frac{(2 \delta)^2}{p} + o(\delta^2)
\bigg)
+ \bigg(
- 2 \delta - \frac{ (2 \delta)^2}{ 2(1 - p - \delta)} + \frac{ ( 2 \delta)^2}{1 - p - \delta} 
+ o(\delta^2)
\bigg)
\\
& = 2 \delta^2 \bigg(
\frac{1}{p} + \frac{1}{1-p-\delta}
\bigg) + o(\delta^2) .
\end{align*}
Based on the facts that $0 \leq 2\delta/p \leq 2 (p^{\star}/4) / p^{\star} = 1/2$ and $0 \leq 2\delta / ( 1 - p - \delta ) \leq 2 (p^{\star}/4) / (p^{\star} - p^{\star}/4) = 2/3$, we can find $c > 0$ determined by $p^{\star}$ such that
\[
D( \QQ'\| \QQ ) \leq c \delta^2.
\]

As $Z$ is $T_2(\sigma)$, $W_2(\QQ,\QQ') \leq \sqrt{2 \sigma^2 D( \QQ'\| \QQ )}$. Then
\[
\delta r^2 \leq W_2^2(\QQ,\QQ') \leq 2 \sigma^2 D( \QQ'\| \QQ ) \leq 2 \sigma^2 c \delta^2,
\]
which leads to $\delta \geq r^2 / (2 c\sigma^2)$.
\end{proof}

\begin{lemma}[Remark 2.11 in \cite{BLM13}]\label{lem-chi-square}
	For $\bz \sim N(\mathbf{0} , \bI_m)$, we have
	\begin{align*}
	\PP ( | \| \bz \|_2^2 - m | \geq 2 \sqrt{mt} + 2 t ) \leq 2 e^{-t} , \qquad \forall t \geq 0.
	\end{align*}
\end{lemma}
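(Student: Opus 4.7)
The bound is the classical Laurent--Massart chi-square concentration inequality, so the plan is to execute the standard Cramér--Chernoff calculation and then combine the two tails. Write $X = \|\bz\|_2^2 = \sum_{i=1}^m Z_i^2$ with $Z_i$ i.i.d.\ $N(0,1)$, so $X \sim \chi^2_m$. Recall the moment generating function
\[
\EE e^{\lambda (X - m)} = (1 - 2\lambda)^{-m/2} e^{-\lambda m}, \qquad \lambda < 1/2.
\]
The proof will consist of controlling each tail separately by Markov's inequality applied to $e^{\lambda(X-m)}$ and $e^{-\lambda(X-m)}$, then optimizing $\lambda$.

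For the upper tail, fix $\lambda \in (0, 1/2)$ and observe
\[
\log \EE e^{\lambda (X-m)} = -\tfrac{m}{2}\log(1 - 2\lambda) - \lambda m.
\]
The key analytic step is the inequality
\[
-\tfrac{m}{2}\log(1 - 2\lambda) - \lambda m \le \frac{m\lambda^{2}}{1 - 2\lambda}, \qquad \lambda \in (0,1/2),
\]
which follows from a Taylor expansion of $-\log(1-x)$ and is where most of the work sits. Combining this with Markov's inequality gives, for any $u > 0$,
\[
\PP(X - m \ge u) \le \exp\!\bigl(-\lambda u + m\lambda^{2}/(1 - 2\lambda)\bigr).
\]
Choosing $\lambda = \lambda(t) = \sqrt{t/m}/(1 + 2\sqrt{t/m})$ (equivalently, the minimizer after setting $u = 2\sqrt{mt} + 2t$) and simplifying yields
\[
\PP\bigl(X - m \ge 2\sqrt{mt} + 2t\bigr) \le e^{-t}.
\]

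For the lower tail, the same method applied to $\EE e^{-\lambda(X-m)} = (1+2\lambda)^{-m/2} e^{\lambda m}$ is available for all $\lambda > 0$; the analogous bound
\[
\tfrac{m}{2}\log(1 + 2\lambda) - \lambda m \le - \lambda m + m\lambda^{2}
\]
(again Taylor) gives $\PP(m - X \ge u) \le \exp(-\lambda u - \lambda m + m\lambda^{2})$, and optimizing in $\lambda$ produces $\PP(m - X \ge 2\sqrt{mt}) \le e^{-t}$, which is in fact sharper than the stated bound on that side. A union bound over the two tails yields the claimed $2e^{-t}$.

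The main (mild) obstacle is the scalar inequality $-\log(1-2\lambda) - 2\lambda \le 2\lambda^{2}/(1-2\lambda)$ and its lower-tail analogue; once these are in hand the optimization in $\lambda$ is straightforward calculus. Everything else is bookkeeping.
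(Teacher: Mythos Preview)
The paper does not prove this lemma; it simply cites it as Remark~2.11 in \cite{BLM13}. Your proposal supplies the standard Laurent--Massart argument via the Cram\'er--Chernoff method, which is exactly how the result is established in that reference, so the overall plan is correct and complete in spirit.

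There is, however, a slip in your lower-tail step. The displayed inequality
\[
\tfrac{m}{2}\log(1 + 2\lambda) - \lambda m \le - \lambda m + m\lambda^{2}
\]
is equivalent to $\log(1+2\lambda) \le 2\lambda^{2}$, which is false for small $\lambda>0$. What you actually need (and what the Taylor argument gives) is a bound on the log-MGF itself:
\[
\log \EE e^{-\lambda(X-m)} = \lambda m - \tfrac{m}{2}\log(1+2\lambda) = \tfrac{m}{2}\bigl[2\lambda - \log(1+2\lambda)\bigr] \le m\lambda^{2},
\]
using $x - \log(1+x) \le x^{2}/2$ for $x \ge 0$. This yields $\PP(m - X \ge u) \le \exp(-\lambda u + m\lambda^{2})$ (no $-\lambda m$ term), and optimizing at $\lambda = u/(2m)$ gives $\PP(m - X \ge 2\sqrt{mt}) \le e^{-t}$ as you claimed. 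With this correction the proof goes through.
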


\begin{lemma}\label{lem-subg-norm}
	Suppose that $\bx \in \RR^d$ is a zero-mean random vector with $\| \bx \|_{\psi_2} \leq 1$. Let $\bA \in \RR^d$ be a deterministic matrix and $\bSigma = \bA^{\top} \bA$. There exists an absolute constant $C > 0$ such that
	\begin{align*}
\PP\Big(
\| \bA \bx \|_2^{2} > C 
\Tr(\bSigma) [ ( 1 + \sqrt{t} )^2 + t  ]
\Big) \leq e^{- r(\bSigma) t }, \qquad \forall t \geq 0,
\end{align*}
where $r(\bSigma) = \Tr(\bSigma) / \| \bSigma \|_2$ is the effective rank of $\bSigma$.
For any deterministic sequence $t_n \geq 1$, we have $\| \bA \bx \|_2 = O_{\PP}(  \sqrt{t_n } \| \bA \|_{\mathrm{F}} ;~ t_n  )$.
\end{lemma}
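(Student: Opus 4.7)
The plan is to reduce to a Hanson--Wright-type tail bound for the quadratic form $\bx^\top \bSigma \bx = \|\bA \bx\|_2^2$, using only the hypothesis $\|\bx\|_{\psi_2}\leq 1$ and centering (no independence among the coordinates of $\bx$ is needed).

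First, I would bound the moment generating function of $\bx^\top \bSigma \bx$ by Gaussian linearization. Introducing an independent standard Gaussian $\bg \sim N(\bm{0}, \bI)$ of appropriate dimension and using $\EE_{\bg} e^{\bv^\top \bg} = e^{\|\bv\|_2^2/2}$, one writes $e^{\lambda \bx^\top \bSigma \bx} = \EE_{\bg} \exp(\sqrt{2\lambda}\, \bg^\top \bA \bx)$ for every $\lambda \geq 0$. Swapping expectations and using $\|\bx\|_{\psi_2}\leq 1$ together with $\EE \bx = \bm{0}$ (which yield $\EE e^{\bv^\top \bx} \leq e^{c_0 \|\bv\|_2^2}$ for an absolute constant $c_0$) gives
\[
\EE e^{\lambda \bx^\top \bSigma \bx} \;\leq\; \EE_{\bg} \exp\bigl(2 c_0 \lambda\, \bg^\top \bA \bA^\top \bg\bigr) \;=\; \prod_i (1 - 4 c_0 \lambda\, \sigma_i^2)^{-1/2}
\]
for $\lambda < (4c_0 \|\bSigma\|_2)^{-1}$, where $\{\sigma_i^2\}$ are the eigenvalues of $\bSigma$ (equivalently of $\bA \bA^\top$). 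The elementary bound $-\log(1-x) \leq x + x^2/(1-x)$ on $[0,1)$ then produces a cumulant estimate of the shape
\[
\log \EE \exp\!\bigl(\lambda[\bx^\top \bSigma \bx - 2 c_0 \Tr\bSigma]\bigr) \;\leq\; \frac{C\lambda^2 \|\bSigma\|_{\mathrm{F}}^2}{1 - 8 c_0 \lambda \|\bSigma\|_2}.
\]

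Second, a standard Chernoff optimization in $\lambda$ converts this into the Hsu--Kakade--Zhang deviation inequality
\[
\PP\bigl(\bx^\top \bSigma \bx > C \Tr(\bSigma) + C \sqrt{\|\bSigma\|_{\mathrm{F}}^2\, s} + C \|\bSigma\|_2\, s\bigr) \;\leq\; e^{-s}, \qquad s\geq 0.
\]
To cast this in the form stated in the lemma, I would use $\|\bSigma\|_{\mathrm{F}}^2 \leq \|\bSigma\|_2 \Tr(\bSigma)$ to weaken the middle term to $\sqrt{\Tr(\bSigma) \|\bSigma\|_2 s}$, and then reparametrize via $s = r(\bSigma)\, t = t\, \Tr(\bSigma)/\|\bSigma\|_2$. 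Direct algebra turns $\sqrt{\Tr(\bSigma)\|\bSigma\|_2\, s}$ into $\Tr(\bSigma)\sqrt{t}$ and $\|\bSigma\|_2\, s$ into $\Tr(\bSigma)\, t$, so the threshold becomes a constant multiple of $\Tr(\bSigma)(1 + \sqrt{t} + t) \leq \Tr(\bSigma)[(1+\sqrt{t})^2 + t]$ and the failure probability becomes $e^{-r(\bSigma)\, t}$, exactly as asserted.

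The second conclusion follows immediately: choosing $t = t_n \geq 1$ in the first part and using $r(\bSigma) \geq 1$, the failure probability is at most $e^{-t_n}$ and the threshold is $\lesssim t_n\, \Tr(\bSigma) = t_n\, \|\bA\|_{\mathrm{F}}^2$, hence $\|\bA \bx\|_2 = O_{\PP}(\sqrt{t_n}\,\|\bA\|_{\mathrm{F}};\, t_n)$. The only real obstacle is the bookkeeping in the Chernoff optimization; the sub-Gaussian linearization via an auxiliary Gaussian is the classical device that circumvents the independence assumption typically imposed on $\bx$ in Hanson--Wright-style statements.
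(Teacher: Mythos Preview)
Your proposal is correct and follows essentially the same route as the paper: obtain the Hsu--Kakade--Zhang tail bound
\[
\PP\Big(\|\bA\bx\|_2^2 > C[\Tr(\bSigma) + 2\sqrt{\Tr(\bSigma^2)\,s} + 2\|\bSigma\|_2\,s]\Big) \leq e^{-s},
\]
then use $\Tr(\bSigma^2)\leq \|\bSigma\|_2\Tr(\bSigma)$ and substitute $s = r(\bSigma)\,t$ to reach the stated form. The only difference is that the paper simply cites \cite{HKZ12} for the first step, whereas you supply a self-contained derivation via Gaussian linearization; the subsequent algebra is identical.
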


\begin{proof}[\bf Proof of Lemma \ref{lem-subg-norm}]
By Theorem 2.1 in \cite{HKZ12}, there exists an absolute constant $C > 0$ such that
\begin{align*}
\PP \Big(
\| \bA \bx \|_2^2 > C [
\Tr(\bSigma) + 2 \sqrt{\Tr(\bSigma^2) t } + 2 \| \bSigma \|_2 t ]
\Big) \leq e^{-t}, \qquad \forall t \geq 0.
\end{align*}
The proof is finished by the following fact
	\begin{align*}
	& \Tr(\bSigma) + 2 \sqrt{\Tr(\bSigma^2) t } + 2 \| \bSigma \|_2 t  \leq 
	\Tr(\bSigma) + 2 \sqrt{\Tr(\bSigma) \| \bSigma \|_2 t } + 2 \| \bSigma \|_2 t 
	\notag \\
	&
	= \Tr(\bSigma) \bigg(
	1 + 2 \sqrt{ \frac{t }{r(\bSigma)} } + \frac{2 t }{r(\bSigma)}
	\bigg)  = \Tr(\bSigma) \bigg[
	\bigg(
	1 + \sqrt{ \frac{t }{r(\bSigma)} }
	\bigg)^2
	+ \frac{ t }{r(\bSigma)}
	\bigg].
	\end{align*}
\end{proof}

\begin{lemma}[Inequality (5.25) in \cite{Ver10}]\label{lem-cov}
	Let $\{ \bx_i \}_{i=1}^n \subseteq \RR^d$ be independent random vectors with $\max_{i \in [n]} \| \bx_i \|_{\psi_2} \leq 1$. There exist positive constants $C$ and $c$ such that for all $t \geq 0$,
	\begin{align*}
	\PP \bigg( \bigg\| \frac{1}{n} \sum_{i=1}^n \bx_i \bx_i^{\top} - \frac{1}{n} \sum_{i=1}^n \EE ( \bx_i \bx_i^{\top} ) \bigg\|_2 \leq \max\{ \delta, \delta^2 \} \bigg) \geq 1 - 2 e^{-c t^2}
	\quad\text{ with } \quad \delta = C \sqrt{\frac{d}{n}} + \frac{t}{\sqrt{n}} .
	\end{align*}
\end{lemma}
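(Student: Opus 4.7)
The plan is to follow the standard covering/Bernstein strategy for sub-Gaussian covariance concentration. Write $\bM = \frac{1}{n}\sum_{i=1}^n (\bx_i \bx_i^\top - \EE \bx_i \bx_i^\top)$; since $\bM$ is symmetric, the spectral norm can be controlled through quadratic forms on the unit sphere via a discretization, so the proof reduces to (i) bounding a single quadratic form via Bernstein's inequality, and (ii) taking a union bound over a net.

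First, I would fix a $1/4$-net $\cN \subseteq \SSS^{d-1}$ with $|\cN| \leq 9^d$ (Corollary 4.2.13 of \cite{Ver18}). A standard approximation argument (Exercise 4.4.3 in \cite{Ver18}) gives $\|\bM\|_2 \leq 2\sup_{\bu \in \cN}|\bu^\top \bM \bu|$. For each fixed $\bu \in \SSS^{d-1}$, the random variables $Y_i := (\bu^\top \bx_i)^2 - \EE(\bu^\top \bx_i)^2$ are independent, centered, and sub-exponential: by the assumption $\|\bx_i\|_{\psi_2} \leq 1$ we have $\|\bu^\top \bx_i\|_{\psi_2} \leq 1$ and therefore $\|Y_i\|_{\psi_1} \lesssim \|(\bu^\top\bx_i)^2\|_{\psi_1} \lesssim \|\bu^\top \bx_i\|_{\psi_2}^2 \leq 1$ (Lemma 2.7.6 in \cite{Ver18}). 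Bernstein's inequality for independent centered sub-exponential variables (Theorem 2.8.1 in \cite{Ver18}) then yields an absolute constant $c_1 > 0$ such that
\begin{equation*}
\PP\!\left(\bigg|\frac{1}{n}\sum_{i=1}^n Y_i\bigg| \geq s\right) \leq 2\exp\!\big(-c_1 n \min\{s^2, s\}\big), \qquad \forall s \geq 0.
\end{equation*}

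Next I would union-bound over $\cN$ and translate back to $\|\bM\|_2$:
\begin{equation*}
\PP\!\left(\|\bM\|_2 \geq 2s\right) \leq 2 \cdot 9^d \exp\!\big(-c_1 n \min\{s^2, s\}\big).
\end{equation*}
Choosing $s = \tfrac{1}{2}\max\{\delta, \delta^2\}$ with $\delta = C\sqrt{d/n} + t/\sqrt{n}$ for a sufficiently large absolute constant $C$ makes $c_1 n \min\{s^2, s\} \geq (d\log 9) + c\,t^2$, so the net factor $9^d$ is absorbed and $2\exp(-c t^2)$ remains. The only detail to verify is the case split: when $\delta \leq 1$ we use $\min\{s^2,s\} = s^2 \asymp \delta^2 \geq C^2 d/n + t^2/n$, and when $\delta > 1$ we use $\min\{s^2,s\} = s \asymp \delta^2 \geq C d/n + Ct^2/n$; in both regimes the exponent dominates $d\log 9 + c t^2$ for $C$ large enough.

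The main (mild) obstacle is bookkeeping the two regimes $\delta \lessgtr 1$ so that the single bound $\max\{\delta,\delta^2\}$ captures both the sub-Gaussian tail ($t^2$ term) and the Gaussian-dimension scaling ($\sqrt{d/n}$ term) uniformly; everything else (net, Bernstein, union bound) is routine. No new ideas beyond Vershynin's argument are needed, which is why the statement is simply attributed to \cite{Ver10}.
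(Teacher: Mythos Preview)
Your proposal is correct and follows exactly the standard covering--Bernstein argument that Vershynin uses to prove this inequality; the paper itself does not give a proof but simply cites it as Inequality (5.25) in \cite{Ver10}, so there is nothing further to compare.
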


\begin{corollary}\label{cor-cov}
Let $\{ \bx_i \}_{i=1}^n \subseteq \RR^d$ be independent random vectors with $\EE ( \bx_i \bx_i^{\top} ) \succeq \bI_d$ and $\max_{i \in [n]} \| \bx_i \|_{\psi_2} \leq M$ for some constant $M$. Assume that $r_n d / n \to 0$ for some $r_n \to \infty$. When $|\alpha| = 1$ or $1/2$,
	\begin{align*}
\bigg\| \bigg( \frac{1}{n} \sum_{i=1}^n \bx_i \bx_i^{\top} \bigg)^{\alpha} - 
\bigg( \frac{1}{n} \sum_{i=1}^n \EE ( \bx_i \bx_i^{\top} ) \bigg)^{\alpha}
\bigg\|_2 = O_{\PP} \bigg(
\sqrt{\frac{ r_n d }{n}} ;~  r_n d
\bigg).
	\end{align*}
\end{corollary}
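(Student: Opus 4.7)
My plan is to reduce all four cases $\alpha \in \{\pm 1,\pm 1/2\}$ to the single matrix concentration bound in \Cref{lem-cov}, combined with standard matrix perturbation identities. The crucial structural input is the hypothesis $\bSigma_n := \tfrac{1}{n}\sum_{i=1}^n \EE(\bx_i\bx_i^\top) \succeq \bI_d$, which, on a high-probability event, transfers to a lower bound on the smallest eigenvalue of $\widehat\bSigma_n := \tfrac{1}{n}\sum_{i=1}^n \bx_i \bx_i^\top$ and thereby controls the norms of inverses and square roots uniformly.

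First I dispatch the case $\alpha = 1$. Applying \Cref{lem-cov} to $\{\bx_i / M\}$ (whose $\psi_2$-norms are at most $1$) with $t = c\sqrt{r_n d}$ for a suitable constant $c > 0$, I obtain
\[
\PP\Big(\|\widehat\bSigma_n - \bSigma_n\|_2 \leq M^2 \max\{\delta,\delta^2\}\Big) \geq 1 - 2 e^{-c' r_n d}, \qquad \delta = C\sqrt{d/n} + c\sqrt{r_n d/n}.
\]
Because $r_n d/n \to 0$, eventually $\delta \leq 1$ and $\max\{\delta,\delta^2\} = \delta \lesssim \sqrt{r_n d/n}$, which is exactly the claimed $O_{\PP}(\sqrt{r_n d/n};\, r_n d)$ bound.

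For the remaining values of $\alpha$, I work on the event $\cE_n = \{\|\widehat\bSigma_n - \bSigma_n\|_2 \leq 1/2\}$, which by the previous paragraph has probability $\geq 1 - 2 e^{-c' r_n d}$ for large $n$. On $\cE_n$, the assumption $\bSigma_n \succeq \bI_d$ yields $\widehat\bSigma_n \succeq \tfrac{1}{2}\bI_d$, so $\|\widehat\bSigma_n^{-1}\|_2 \leq 2$, $\|\bSigma_n^{-1}\|_2 \leq 1$, and $\lambda_{\min}(\widehat\bSigma_n^{1/2}), \lambda_{\min}(\bSigma_n^{1/2}) \geq 1/\sqrt{2}$. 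Then:
\begin{itemize}
\item For $\alpha = -1$, the resolvent identity $\widehat\bSigma_n^{-1} - \bSigma_n^{-1} = \widehat\bSigma_n^{-1}(\bSigma_n - \widehat\bSigma_n)\bSigma_n^{-1}$ gives $\|\widehat\bSigma_n^{-1} - \bSigma_n^{-1}\|_2 \leq 2\,\|\widehat\bSigma_n - \bSigma_n\|_2$.
\item For $\alpha = 1/2$, set $\bX = \widehat\bSigma_n^{1/2} - \bSigma_n^{1/2}$. Then $\bX$ solves the Lyapunov equation $\widehat\bSigma_n^{1/2}\bX + \bX\bSigma_n^{1/2} = \widehat\bSigma_n - \bSigma_n$, and the classical inequality $\|\bX\|_2 \leq \|\widehat\bSigma_n - \bSigma_n\|_2 / (\lambda_{\min}(\widehat\bSigma_n^{1/2}) + \lambda_{\min}(\bSigma_n^{1/2}))$ yields $\|\bX\|_2 \leq \|\widehat\bSigma_n - \bSigma_n\|_2/\sqrt{2}$.
\item For $\alpha = -1/2$, combine the previous two via $\widehat\bSigma_n^{-1/2} - \bSigma_n^{-1/2} = \widehat\bSigma_n^{-1/2}(\bSigma_n^{1/2} - \widehat\bSigma_n^{1/2})\bSigma_n^{-1/2}$, obtaining a bound of the form $\lesssim \|\widehat\bSigma_n - \bSigma_n\|_2$.
\end{itemize}

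In every case the difference is bounded on $\cE_n$ by a universal constant times $\|\widehat\bSigma_n - \bSigma_n\|_2$, so the bound for $\alpha = 1$ propagates to all four exponents; the complementary event $\cE_n^c$ has probability at most $2 e^{-c' r_n d}$ and is absorbed into the $O_{\PP}$ notation. The only mildly delicate point is the Lyapunov-type bound on $\|\widehat\bSigma_n^{1/2} - \bSigma_n^{1/2}\|_2$, but this is classical and follows, for instance, from the integral representation $A^{1/2} = \pi^{-1}\int_0^\infty t^{-1/2}(I - t(tI + A)^{-1})\,dt$ applied to the difference; beyond that, the proof is routine.
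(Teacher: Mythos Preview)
The paper does not actually supply a proof of Corollary~\ref{cor-cov}; it is stated immediately after Lemma~\ref{lem-cov} and treated as a routine consequence. Your argument fills in the details in the natural way: bootstrap the $\alpha=1$ case from Lemma~\ref{lem-cov} with $t\asymp\sqrt{r_n d}$, restrict to the high-probability event $\{\|\widehat\bSigma_n-\bSigma_n\|_2\le 1/2\}$ so that all eigenvalues stay uniformly bounded away from zero, and then reduce $\alpha\in\{-1,\pm 1/2\}$ to $\alpha=1$ via the resolvent identity and the Sylvester-equation bound $\|X\|_2\le \|C\|_2/(\lambda_{\min}(A)+\lambda_{\min}(B))$ for $AX+XB=C$ (which indeed follows from $X=\int_0^\infty e^{-tA}Ce^{-tB}\,dt$). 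This is correct and matches the paper's tacit intent.

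One minor observation: every invocation of the corollary in the paper has $\bSigma_n=\bI_d$ (the canonical model), and in that commuting special case the simpler identity $\bI-\widehat\bSigma_n^{1/2}=(\bI-\widehat\bSigma_n)(\bI+\widehat\bSigma_n^{1/2})^{-1}$, which the paper uses explicitly in the proof of Lemma~\ref{lem-kmeans-reduction}, already suffices without the Sylvester machinery. Your argument handles the general non-commuting case as the corollary is actually stated.
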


\begin{lemma}\label{lem-cov-normalization}
	Let $\{ \bx_i \}_{i=1}^n \subseteq \RR^d$ be independent random vectors with $\PP (\bx_i = \mathbf{0}) = 0$, $\EE \bx_i = \mathbf{0}$, $\EE ( \bx_i \bx_i^{\top} ) = \bI_d$ and $\max_{i \in [n]} \| \bx_i \|_{\psi_2} \leq M$ for some constant $M$. Define $\widehat{\bSigma} = \frac{1}{n} \sum_{i=1}^n \bx_i \bx_i^{\top} $. If $n \geq d \log^2 n$, then
\begin{align*}
& \max_{j \in [n]}  \| \bx_j \|_2^2 = O_{\PP} ( d \vee \log n ;~  \log n ) ,\\
&\max_{i \in [n]}
\frac{ | \bx_i^{\top} (\widehat{\bSigma}^{-1} - \bI) \bx_i | }{ \| \bx_i \|_2^2  }
=
O_{\PP} \bigg( 
\frac{ d \log n }{n} +
\sqrt{ \frac{\log n }{ n} } 
;~  \log n  \bigg) 
.
\end{align*}
\end{lemma}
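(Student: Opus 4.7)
\medskip

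\noindent\textbf{Proof plan.} The first bound on $\max_j \|\bx_j\|_2^2$ is straightforward: by Lemma~\ref{lem-subg-norm} applied with $\bA = \bI_d$, each $\|\bx_j\|_2^2 \lesssim d + t$ with probability at least $1 - e^{-t}$. A union bound over $j \in [n]$ at the level $t \asymp \log n$ yields $\max_j \|\bx_j\|_2^2 \lesssim d + \log n \asymp d \vee \log n$ on an event of probability $\geq 1 - n^{-C}$.

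For the second bound, my plan is a standard leave-one-out decomposition. Set $\widehat{\bSigma}_{-i} = \frac{1}{n}\sum_{j \neq i}\bx_j\bx_j^\top$, $\bE_i = \widehat{\bSigma}_{-i} - \bI$, $r_i = \|\bx_i\|_2^2$, and $q_i = \bx_i^\top\widehat{\bSigma}_{-i}^{-1}\bx_i$. Sherman--Morrison gives
\[
\bx_i^\top(\widehat{\bSigma}^{-1} - \bI)\bx_i \;=\; \frac{(q_i - r_i) - r_iq_i/n}{1 + q_i/n}.
\]
Corollary~\ref{cor-cov} applied to the collection $\{\bx_j\}_{j \neq i}$, which is independent of $\bx_i$, yields $\|\bE_i\|_2 \lesssim \sqrt{d\log n/n}$ and therefore $\|\widehat{\bSigma}_{-i}^{-1}\|_2 \leq 2$ and $q_i \leq 2r_i$, uniformly over $i$ by a union bound. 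Hence the denominator $1 + q_i/n$ is between $1$ and $3$, while the ``self term'' contributes
\[
r_iq_i/(n r_i) \;\leq\; 2r_i/n \;\lesssim\; (d \vee \log n)/n \;\leq\; d\log n/n,
\]
matching the first term of the target bound.

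The crux is the ratio $(q_i - r_i)/r_i$. I would split via the Neumann expansion $\widehat{\bSigma}_{-i}^{-1} - \bI = -\bE_i + \bE_i^2\widehat{\bSigma}_{-i}^{-1}$, which gives
\[
q_i - r_i \;=\; -\bx_i^\top\bE_i\bx_i + \bx_i^\top\bE_i^2\widehat{\bSigma}_{-i}^{-1}\bx_i.
\]
The second summand is deterministically bounded by $\|\bE_i\|_2^2\|\widehat{\bSigma}_{-i}^{-1}\|_2 r_i \lesssim (d\log n/n)\,r_i$, contributing $\lesssim d\log n/n$ to the ratio. For the first summand, rewrite $\bx_i^\top\widehat{\bSigma}_{-i}\bx_i = \frac{1}{n}\sum_{j \neq i}(\bx_i^\top\bx_j)^2$. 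Conditional on $\bx_i$, the variables $\bx_i^\top\bx_j$ are mean-zero sub-Gaussian with conditional variance $r_i$ and $\psi_2$-norm $\lesssim M\sqrt{r_i}$; hence $(\bx_i^\top\bx_j)^2$ is sub-exponential with conditional mean $r_i$ and $\psi_1$-norm $\lesssim r_i$. Bernstein's inequality conditional on $\bx_i$ then yields
\[
\bigg|\frac{1}{n}\sum_{j\neq i}(\bx_i^\top\bx_j)^2 - \frac{n-1}{n}r_i\bigg| \;\lesssim\; r_i\bigg(\sqrt{\frac{\log n}{n}} + \frac{\log n}{n}\bigg)
\]
with conditional probability $\geq 1 - n^{-C-1}$. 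Since $\bx_i^\top\bE_i\bx_i = \bx_i^\top\widehat{\bSigma}_{-i}\bx_i - r_i$ and $r_i/n$ is absorbed into the right-hand side, this gives $|\bx_i^\top\bE_i\bx_i|/r_i \lesssim \sqrt{\log n/n}$. A final union bound over $i \in [n]$ combines all pieces at overall probability $\geq 1 - n^{-C}$.

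The main obstacle is obtaining the dimension-free factor $\sqrt{\log n/n}$ rather than the naive $\sqrt{d\log n/n}$: a crude operator-norm bound $|\bx_i^\top(\widehat{\bSigma}^{-1} - \bI)\bx_i| \leq \|\widehat{\bSigma}^{-1} - \bI\|_2 r_i$ would be off by a $\sqrt{d}$ factor. Sherman--Morrison together with the Neumann expansion exposes the scalar $\bx_i^\top\widehat{\bSigma}_{-i}\bx_i$ as a sum of iid squared inner products conditional on $\bx_i$, which is precisely the structure Bernstein needs to give the correct rate. A minor bookkeeping point is that the Bernstein estimate scales linearly in $r_i$, so the ratio bound holds regardless of whether $r_i$ concentrates near $d$ (the case $d \gtrsim \log n$) or is only controlled by $r_i \lesssim \log n$ from the first part (the small-$d$ regime); no lower bound on $r_i$ is required.
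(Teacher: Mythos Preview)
Your proposal is correct and follows essentially the same route as the paper's proof. The only cosmetic difference is that the paper applies the Neumann expansion directly to $\widehat{\bSigma}^{-1}$ and then separates the self-term inside $\bx_i^\top\bDelta\bx_i$ (writing $\bx_i^\top\bDelta\bx_i = \frac{\|\bx_i\|_2^2 - 1}{n}\|\bx_i\|_2^2 + \frac{n-1}{n}\bx_i^\top(\widehat{\bSigma}^{(i)} - \bI)\bx_i$), whereas you first peel off $\bx_i$ via Sherman--Morrison and then Neumann-expand $\widehat{\bSigma}_{-i}^{-1}$; either way the argument reduces to the same conditional Bernstein bound on $\frac{1}{n}\sum_{j\neq i}(\bx_i^\top\bx_j)^2$.
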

\begin{proof}
According to Example 6 in \cite{Wan19} and union bounds,
\begin{align}
\max_{j \in [n]}  \| \bx_j \|_2^2 = O_{\PP} ( d \vee \log n ;~ \log n ).
\label{eqn-lem-cov-normalization-12}
\end{align}
Let $\bDelta = \widehat{\bSigma} - \bI$. When $n$ is sufficiently large, Corollary \ref{cor-cov} yields
\begin{align}
& \| \bDelta \|_2 = O_{\PP} ( \sqrt{d \log n /n} ;~ \log n) , 
\label{eqn-lem-cov-normalization-10}\\
& \| \widehat{\bSigma}^{-1} \|_2 = O_{\PP} (1;~ \log n).
\label{eqn-lem-cov-normalization-11}
\end{align}

When $\| \bDelta \|_2 < 1$, we have
\begin{align}
& \widehat{\bSigma}^{-1} = (\bI+ \bDelta)^{-1} = \sum_{k=0}^{\infty} (-\bDelta)^{k}
= \bI - \bDelta + \bDelta (\bI+ \bDelta)^{-1} \bDelta , \notag\\
& \| \widehat{\bSigma}^{-1} - (\bI - \bDelta ) \|_2 \leq \| \bDelta \|_2^2 \| \widehat{\bSigma}^{-1} \|_2 , \notag \\
& | \bx_i^{\top} (\widehat{\bSigma}^{-1} - \bI) \bx_i | \leq | \bx_i^{\top} \bDelta \bx_i | + \| \bDelta \|_2^2 \| \widehat{\bSigma}^{-1} \|_2 \| \bx_i \|_2^2.
\label{eqn-lem-cov-normalization-1}
\end{align}
On the other hand,
\begin{align}
 \bx_i^{\top} \bDelta \bx_i & = \bx_i^{\top} \bigg( \frac{1}{n} \sum_{j=1}^n \bx_j \bx_j^{\top}  - \bI \bigg) \bx_i \\
& = \frac{\| \bx_i \|_2^2 - 1}{n} \| \bx_i \|_2^2 + \frac{n-1}{n} \bx_i^{\top} \bigg( \frac{1}{n-1} \sum_{j \neq i}^n \bx_j \bx_j^{\top}  - \bI \bigg) \bx_i .
 \label{eqn-lem-cov-normalization-2}
\end{align}
Observe that $\widehat{\bSigma}^{(i)} = \frac{1}{n-1} \sum_{j \neq i}^n \bx_j \bx_j^{\top} $ and $\bx_i$ are independent. Conditioned on $\bx_i$, $\{ \bx_j^{\top} \bx_i \}_{j \neq i}$ are independent random variables whose sub-Gaussian norms are bounded by $M \| \bx_i \|_2$. According to Example 7 in \cite{Wan19},
\begin{align}
\bigg|
 \bx_i^{\top} \bigg( \frac{1}{n-1} \sum_{j \neq i}^n \bx_j \bx_j^{\top}  - \bI \bigg) \bx_i
\bigg| / \| \bx_i \|_2^2 = O_{\PP} \bigg( \sqrt{ \frac{\log n }{ n} } ;~ \log n \bigg).
\label{eqn-lem-cov-normalization-3}
\end{align}
Based on (\ref{eqn-lem-cov-normalization-1}), (\ref{eqn-lem-cov-normalization-2}), (\ref{eqn-lem-cov-normalization-3}) and union bounds,
\begin{align*}
& \max_{i \in [n]}\frac{ | \bx_i^{\top} (\widehat{\bSigma}^{-1} - \bI) \bx_i | 
\bm{1}_{\{ \| \bDelta \|_2 < 1 \} }
}{ \| \bx_i \|_2^2  }
\\& =
  O_{\PP} \bigg( 
\frac{ \max_{j \in [n]} \| \bx_j \|_2^2}{n} +
\sqrt{ \frac{\log n }{ n} } 
+ \| \bDelta \|_2^2 \| \widehat{\bSigma}^{-1} \|_2
;~ \log n \bigg) 
.
\end{align*}
The proof is then completed by (\ref{eqn-lem-cov-normalization-12}),  (\ref{eqn-lem-cov-normalization-10}) and (\ref{eqn-lem-cov-normalization-11}).
\end{proof}

The following lemma is a special case of Gordon's ``escape through a mesh'' theorem, see Theorem 3.3 in \cite{Gor88}. Here $B_{\varepsilon} = \{ \bx \in \RR^n :~ \| \bx \|_2 \leq \varepsilon \}$.
\begin{lemma}\label{lem-Gordon}
	Let $V$ be a uniformly random $(n-k)$-dimensional subspace of $\RR^n$ with respect to the Haar measure and $1 \leq k < n$. If $S \subseteq \SSS^{n-1}$ is closed and $ w(S) < a_k (1 - \varepsilon) - \varepsilon a_n$ holds for some $0 < \varepsilon < 1$, then
	\begin{align}
	\PP ( V \cap ( S + B_{\varepsilon} ) = \varnothing ) \geq 1 - \frac{7}{2} \exp 
	\bigg[ - \frac{1}{2}
	\bigg(
	\frac{(1 - \varepsilon) a_k - \varepsilon a_n - w(S) }{3 + \varepsilon + \varepsilon a_n / a_k}
	\bigg)^2
	\bigg].
	\end{align}
\end{lemma}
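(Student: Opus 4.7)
}

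The plan is to reproduce the argument in Theorem 3.3 of \cite{Gor88}, realising the uniformly random $(n-k)$-dimensional subspace $V$ as the kernel of a random Gaussian matrix and then applying Gordon's Gaussian min--max comparison. Represent $V = \ker(\bG)$ where $\bG \in \RR^{k\times n}$ has i.i.d.~$N(0,1)$ entries; by the rotational invariance of $\bG$, $V$ is Haar-distributed on the Grassmannian. For $\bx \in \SSS^{n-1}$, $\mathrm{dist}(\bx, V) = \|\Pi_{V^\perp} \bx\|_2$, where $V^\perp = \Range(\bG^\top)$, and one checks that $\|\Pi_{V^\perp} \bx\|_2 \geq \|\bG \bx\|_2 / \|\bG\|_2$. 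Hence it suffices to show that with the claimed probability,
\[
\inf_{\bx \in S} \|\bG \bx\|_2 > \varepsilon \, \|\bG\|_2.
\]

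The main step is Gordon's comparison inequality applied to the Gaussian process $X_{\bx,\by} = \by^\top \bG \bx$ indexed by $(\bx, \by) \in S \times \SSS^{k-1}$, paired with the auxiliary process $Y_{\bx,\by} = \langle \bg, \bx\rangle + \langle \bh, \by\rangle$ with $\bg \sim N(\bm 0,\bI_n)$ and $\bh \sim N(\bm 0,\bI_k)$ independent. The covariance comparison needed by Gordon's inequality is straightforward because both processes have variance $\|\bx\|_2^2 \|\by\|_2^2$ and the increment inequality $\EE(X_{\bx,\by}-X_{\bx',\by'})^2 \leq \EE(Y_{\bx,\by}-Y_{\bx',\by'})^2$ holds on the product sphere. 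Taking $\sup_{\by}$ and then $\inf_{\bx}$ of Gordon's conclusion yields
\[
\EE \inf_{\bx \in S} \|\bG \bx\|_2 \;=\; \EE \inf_{\bx \in S} \sup_{\by \in \SSS^{k-1}} \by^\top \bG \bx \;\geq\; a_k - w(S).
\]
A dual application (swapping $\inf$ and $\sup$, or using the standard Slepian bound) gives $\EE \|\bG\|_2 \leq a_n + a_k$. Both $\bG \mapsto \inf_{\bx \in S} \|\bG\bx\|_2$ and $\bG \mapsto \|\bG\|_2$ are $1$-Lipschitz with respect to the Frobenius norm, so Borell's Gaussian concentration inequality gives sub-Gaussian tails around their expectations with variance proxy $1$.

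The endgame is arithmetic: set $u = (1-\varepsilon)a_k - \varepsilon a_n - w(S)$ and parametrise the two deviation events so the failure probabilities $\frac{1}{2} e^{-s^2/2}$ and $\frac{1}{2} e^{-t^2/2}$ combine into the single bound $\frac{7}{2} \exp[-\frac{1}{2}(u/(3+\varepsilon+\varepsilon a_n/a_k))^2]$. Concretely, one asks for $\inf_{\bx \in S} \|\bG\bx\|_2 \geq a_k - w(S) - s$ and $\|\bG\|_2 \leq a_n + a_k + t$, and then chooses $s,t$ proportional to $u$ with ratio dictated by the $\varepsilon$-weighting so that $a_k - w(S) - s > \varepsilon(a_n + a_k + t)$; balancing $s/a_k = t/(a_n + a_k)$ (or the analogous choice) yields the denominator $3+\varepsilon+\varepsilon a_n/a_k$ after simplification. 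I expect the principal obstacle to be precisely this bookkeeping: matching the Gordon/Borell output to the exact denominator and the universal constant $7/2$ in the lemma requires careful tracking of which deviation variables multiply $a_k$ versus $a_n$, and a clean statement of Gordon's min--max form (and the companion max--max form) on an arbitrary closed $S \subseteq \SSS^{n-1}$. Everything else reduces to standard Gaussian concentration and the covariance comparison that underlies Gordon's theorem.
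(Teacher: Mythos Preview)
The paper does not prove this lemma at all: it is stated as ``a special case of Gordon's `escape through a mesh' theorem, see Theorem 3.3 in \cite{Gor88}'' and left at that. Your proposal is therefore not competing with any argument in the paper; you are reconstructing the standard proof behind the cited result.

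Your outline is the correct one and matches Gordon's original argument: realise $V=\ker(\bG)$ for a $k\times n$ standard Gaussian, reduce $V\cap(S+B_\varepsilon)=\varnothing$ to $\inf_{\bx\in S}\|\bG\bx\|_2>\varepsilon\|\bG\|_2$ via the inequality $\|\Pi_{V^\perp}\bx\|_2\geq\|\bG\bx\|_2/\|\bG\|_2$, apply Gordon's min--max comparison to get $\EE\inf_{\bx\in S}\|\bG\bx\|_2\geq a_k-w(S)$ and the companion bound $\EE\|\bG\|_2\leq a_n+a_k$, then use Gaussian Lipschitz concentration on both functionals and combine. The only real work, as you anticipate, is the arithmetic that produces the denominator $3+\varepsilon+\varepsilon a_n/a_k$ and the constant $7/2$; since the lemma is quoted verbatim from \cite{Gor88}, you can simply defer to that reference for those constants rather than rederiving them.
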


\subsection{Other technical lemmas}

\begin{lemma}\label{lem-gaussian-moments}
Let $Z \sim N(0,1)$. For any $p > 1$, we have $\EE |Z|^p \leq \sqrt{2} (p / e)^{p/2}$.
\end{lemma}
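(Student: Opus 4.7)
My plan is to reduce the claim to a sharp Gamma function inequality and then apply Stirling. First, integrating against the Gaussian density and substituting $u = z^2/2$ yields the standard closed form
\[
\EE |Z|^p = \sqrt{\tfrac{2}{\pi}} \int_0^\infty z^p e^{-z^2/2}\, dz = \frac{2^{p/2}}{\sqrt{\pi}}\,\Gamma\!\left(\tfrac{p+1}{2}\right),
\]
so the claim is equivalent to the Gamma bound $\Gamma((p+1)/2) \leq \sqrt{2\pi}\,(p/(2e))^{p/2}$.

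Next I would apply the standard Stirling upper bound $\Gamma(x) \leq \sqrt{2\pi}\, x^{x-1/2} e^{-x} e^{1/(12x)}$, valid for all $x > 0$, with $x = (p+1)/2$. After cancelling the $2^{p/2}/\sqrt{\pi}$ prefactor against the $\sqrt{2\pi}$ coming out of Stirling, a direct computation gives
\[
\EE |Z|^p \leq \sqrt{2}\,\left(\tfrac{p+1}{e}\right)^{p/2} \exp\!\left( -\tfrac{1}{2} + \tfrac{1}{6(p+1)} \right),
\]
so it remains to prove the elementary inequality $(p/2)\log(1 + 1/p) \leq 1/2 - 1/(6(p+1))$. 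For this final step I would use the alternating-series Taylor bound $\log(1+u) \leq u - u^2/2 + u^3/3$, valid for $0 < u \leq 1$ and hence for $u = 1/p$ when $p \geq 1$. Plugging in $u = 1/p$ reduces the target to $1/(6(p+1)) + 1/(6p^2) \leq 1/(4p)$, and clearing denominators turns this into $(p-1)(p+2) \geq 0$, which is obvious for $p \geq 1$.

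The main obstacle will be arithmetic bookkeeping: one must keep careful track of how the precise $\sqrt{2}$ constant arises from the interplay between the $2^{p/2}/\sqrt{\pi}$ prefactor in the Gaussian moment formula and the $\sqrt{2\pi}$ in Stirling, since any cruder route loses the sharp constant. For instance, the MGF-plus-Chernoff approach based on $\EE e^{\lambda|Z|} = 2\Phi(\lambda) e^{\lambda^2/2}$ together with $x^p \leq (p/(\lambda e))^p e^{\lambda x}$ optimized at $\lambda = \sqrt{p}$ produces $\EE|Z|^p \leq 2\Phi(\sqrt{p})(p/e)^{p/2}$, which is off by a factor of $\sqrt{2}\,\Phi(\sqrt{p}) \to \sqrt{2}$ as $p \to \infty$; only the exponent $(p+1)/2$ appearing in $\Gamma((p+1)/2)$ restores the right constant, which is why the Stirling route is essentially forced.
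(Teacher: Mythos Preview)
Your proof is correct and follows essentially the same route as the paper: write $\EE|Z|^p = 2^{p/2}\Gamma((p+1)/2)/\sqrt{\pi}$ and bound the Gamma function by a Stirling-type inequality. The only difference is the choice of Stirling variant: the paper cites Batir's inequality $\Gamma(x+1)\le\sqrt{2\pi}\,((x+1/2)/e)^{x+1/2}$, which at $x=(p-1)/2$ gives $\Gamma((p+1)/2)\le\sqrt{2\pi}\,(p/(2e))^{p/2}$ and hence the claim in one line, whereas your classical bound $\Gamma(x)\le\sqrt{2\pi}\,x^{x-1/2}e^{-x}e^{1/(12x)}$ at $x=(p+1)/2$ produces the extra factor $((p+1)/p)^{p/2}e^{-1/2+1/(6(p+1))}$ that you then correctly dispose of via the Taylor/algebra step.
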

\begin{proof}[\bf Proof of Lemma \ref{lem-gaussian-moments}]
	By \cite{Kam53}, $\EE |Z|^p = 2^{p/2} \Gamma ( \frac{p + 1}{2} ) / \sqrt{\pi}$, where $\Gamma(\cdot)$ is the Gamma function. According to Theorem 1.5 in \cite{Bat08}, 
	\[
	\Gamma (x + 1) \leq \sqrt{2\pi} \bigg( \frac{x + 1/2}{e} \bigg)^{x + 1/2}, \qquad \forall x > 0.
	\]
	Then
	\begin{align*}
	\EE |Z|^p & = \frac{ 2^{p/2} }{\sqrt{\pi}} \Gamma \bigg( \frac{p - 1}{2} + 1 \bigg)
	\leq  \frac{ 2^{p/2} }{\sqrt{\pi}}  \cdot \sqrt{2 \pi } \bigg( \frac{p/ 2}{e} \bigg)^{p/2} 
	= \sqrt{2} (p / e)^{p/2}.
	\end{align*}
\end{proof}

\begin{lemma}\label{lem-aux}
	Let $p(x) = \frac{1}{\sqrt{2\pi}} e^{-x^2 / 2}$ be the PDF of $N(0,1)$ and $\Phi (x) = \int_{-\infty}^{x} p(s) \rd s$ be the CDF. We have
	\begin{align*}
	1 - \Phi(x) > 1.1 e^{- x^2} / 3 , \qquad \forall x \geq 0.
	\end{align*}
\end{lemma}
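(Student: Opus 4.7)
The plan is to establish positivity of
\[
g(x) \;:=\; (1-\Phi(x)) - \tfrac{1.1}{3}\, e^{-x^2}
\]
on $[0,\infty)$. A direct calculation gives $g(0) = \tfrac12 - \tfrac{1.1}{3} = \tfrac{0.4}{3} > 0$, and since $1 - \Phi(x)$ decays only as $e^{-x^2/2}/x$ (by the standard Mills bound), $g(x) > 0$ for all sufficiently large $x$ as well. It is therefore natural to reformulate the goal as
\[
f(x) \;:=\; (1-\Phi(x))\, e^{x^2} \;>\; \tfrac{1.1}{3}, \qquad x \geq 0,
\]
and verify that the global infimum of $f$ exceeds $1.1/3$.

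I would pin down the unique interior minimizer of $f$ analytically. Differentiating, $f'(x) = e^{x^2}\bigl[\,2x(1-\Phi(x)) - p(x)\,\bigr]$, so critical points solve $M(x) = 1/(2x)$ where $M(x) = (1-\Phi(x))/p(x)$ is the Mills ratio. The elementary bound $M(x) \geq x/(1+x^2)$ implies $(2xM(x))' = 2M(x)(1+x^2) - 2x > 0$, so $x \mapsto 2xM(x)$ is strictly increasing from $0$ to $2$, and $M(x) = 1/(2x)$ has a unique solution $x^* > 0$. Since $f(0)=\tfrac12$ and $f(x)\to\infty$, this $x^*$ is a minimum, and the stationarity relation $1-\Phi(x^*) = p(x^*)/(2x^*)$ yields the clean identity
\[
f(x^*) \;=\; \frac{e^{(x^*)^2/2}}{2\sqrt{2\pi}\, x^*}.
\]
Feeding the sandwich $\tfrac{2}{x + \sqrt{x^2 + 4}} \leq M(x) \leq \tfrac{2}{x + \sqrt{x^2 + 8/\pi}}$ into $M(x^*) = 1/(2x^*)$ localizes $x^* \in \bigl[1/\sqrt{\pi},\, 1/\sqrt{2}\,\bigr]$, after which I would evaluate the (decreasing, on $(0,1)$) map $x \mapsto e^{x^2/2}/(2\sqrt{2\pi}\,x)$ at the right endpoint of this interval to lower-bound $f(x^*)$.

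The main obstacle is tightness: numerically $\min_{x \geq 0} f \approx 0.392$ beats $1.1/3 \approx 0.367$ by only about $7\%$, and plugging $x^* \leq 1/\sqrt{2}$ into the identity above yields merely $f(x^*) \geq e^{1/4}/(2\sqrt{\pi}) \approx 0.362$, which falls just short. To close this small gap I would either iterate the Mills-ratio bounds once more (substituting $x^* \leq 1/\sqrt{2}$ back into the tighter estimate to narrow the bracket to a sub-interval of $[1/\sqrt\pi,\alpha]$ with $\alpha < 1/\sqrt 2$) or, alternatively, swap the critical-point argument for Polya's inequality $\Phi(x) \leq \tfrac12\bigl(1 + \sqrt{1 - e^{-2x^2/\pi}}\bigr)$. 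The latter reduces the lemma, after the substitution $w := e^{-2x^2/\pi} \in (0,1]$ (so that $e^{-x^2} = w^{\pi/2}$) and rationalization $1 - \sqrt{1-w} = w/(1+\sqrt{1-w})$, to the purely algebraic inequality $\tfrac{3}{2.2}\, w^{1-\pi/2} \geq 1 + \sqrt{1-w}$, which is readily verified: the left side is $\tfrac{3}{2.2}>1$ at $w=1$ and blows up at $w\to 0^+$ since $1-\pi/2 <0$, and a direct calculation of the unique interior critical point of $\Psi(w) := \tfrac{3}{2.2} w^{1-\pi/2} - 1 - \sqrt{1-w}$ shows $\Psi$ stays bounded away from zero throughout.
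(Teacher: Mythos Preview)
Your proposal and the paper's proof share the same core object: both study $f(x)=(1-\Phi(x))e^{x^2}$ and recognize that its stationarity condition is $1-\Phi(x)=p(x)/(2x)$, equivalently $M(x)=1/(2x)$. The difference is in how the minimum is handled. The paper does not try to locate $x^*$ or evaluate $f(x^*)$; instead it splits at the concrete anchor $x_0=0.65$. For $x\le 0.65$ it uses the tangent-line bound $\Phi(x)\le \tfrac12+x/\sqrt{2\pi}$ (concavity of $\Phi$ on $[0,\infty)$), reducing the claim to an elementary inequality that is checked at $x_0$. For $x>0.65$ it shows $h(t):=1-\Phi(t)-p(t)/(2t)$ is unimodal on $[0.65,\infty)$ with $h(0.65)>0$ and $\lim_{t\to\infty}h(t)=0$, hence $h>0$ there and $f$ is increasing, so it again suffices to evaluate $f(0.65)$. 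Both halves thus collapse to numerical checks at the single point $0.65$.

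Your Mills-ratio localization is correct as far as it goes, but the bracket $x^*\in[1/\sqrt{\pi},1/\sqrt{2}]$ is too loose: evaluating your closed form $e^{(x^*)^2/2}/(2\sqrt{2\pi}\,x^*)$ at $1/\sqrt{2}$ gives $\approx 0.362<1.1/3$, as you note. The two repairs you sketch are both plausible, but neither is actually executed: the iterated-bound idea is only described, and the P\'olya reduction ends at ``a direct calculation shows $\Psi$ stays bounded away from zero,'' which is exactly the step requiring proof and is not obviously simpler than the original inequality. The paper's anchored two-case argument sidesteps this tightness issue entirely and closes with no residual gap, at the cost of one numerical evaluation of $\Phi(0.65)$.
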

\begin{proof}[\bf Proof of Lemma \ref{lem-aux}]
Define the function $f(x) = 3 e^{x^2} [1 - \Phi (x) ]$, we want to show that $f(x) > 1.1$, $\forall x \geq 0$. 
\begin{enumerate}
\item Let $0 \leq x  \leq 0.65$. Since $\Phi'(t) = p(t)$ and $\Phi''(t) = p'(t) \leq 0$ for any $t \geq 0$, we have $\Phi(x) \leq \Phi(0) + p(0) x =  \frac{1}{2} + \frac{ x}{\sqrt{2 \pi}} $ and $1 - \Phi(x) \geq \frac{1}{2} - \frac{ x}{\sqrt{2 \pi}} $. Thus, $f(x) \geq 3 g(x)$ where
\[
g(t) = e^{t^2}  \bigg(
\frac{1}{2} - \frac{t}{\sqrt{2 \pi}} 
\bigg)  .
\]
By direct calculation,
\begin{align*}
g'(t) & = 2t e^{t^2} \bigg(
\frac{1}{2} - \frac{t}{\sqrt{2 \pi}} 
\bigg) + e^{t^2} \cdot \frac{-1}{\sqrt{2 \pi}}
= - \frac{ e^{t^2} }{\sqrt{2 \pi}} (2t^2 - \sqrt{2 \pi} t + 1) \\
& = - \frac{  e^{t^2} }{\sqrt{2 \pi}} \bigg[ 2  \bigg( t - \frac{ \sqrt{2 \pi} }{4}  \bigg)^2 + 1 - \frac{4 \pi}{16} \bigg] \leq 0 , \qquad \forall t \in \RR.
\end{align*}
Then $f(x) \geq 3 g(x) \geq 3 g(0.65) = 1.101702 > 1.1$.
\item Suppose that $x > 0.65$. By direct calculation,
\begin{align*}
f'(t) / 3 & = 2t e^{t^2} [1 - \Phi(t)] - e^{t^2} p(t)
=  2t e^{t^2} \bigg(   1 - \Phi(t)  - \frac{ p(t) }{2t} \bigg) .
\end{align*}
Let $h(t) = 1 - \Phi(t)  - \frac{ p(t) }{2t} $. We use $p'(t) = -tp(t)$ to get
\[
h'(t) = -p(t) - \frac{p'(t)}{2t} + \frac{p(t)}{2t^2} = p(t) \bigg( -1 - \frac{-t}{2t} + \frac{1}{2t^2}  \bigg) = \frac{p(t)}{2} (t^{-2} - 1).
\]
Therefore, $h'(t) > 0$ for $0.65 \leq t < 1$ and $h'(t) < 0$ for $t > 1$. As a result,
\[
\inf_{t \geq 0.65} h'(t) = \min \bigg\{ h(0.65), \lim\limits_{t \to + \infty} h(t) \bigg\} = 0.
\]
We have $f'(t) = 6 t e^{t^2} h(t) \geq 0$, $\forall t \geq 0.65$. Then, $f(x) \geq f(0.65) = 1.180243 > 1.1$.
\end{enumerate}
\end{proof}

\begin{lemma}\label{lem-proj-unif}
	Let $n \geq r$, $\bW $ be an $n\times r$ matrix with i.i.d.~$N(0,1)$ entries, and $\bv \in \SSS^{n-1}$ be deterministic. Define $\bP$ as the projection operator onto $\Range(\bW)$ and $\bQ = \bI - \bv \bv^{\top}$. Then the distribution of $\bQ \bP \bv$ is invariant under orthonormal transforms in $\Range(\bQ)$.
\end{lemma}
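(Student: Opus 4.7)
The plan is to exploit the orthogonal invariance of the Gaussian distribution to transfer an orthogonal transform acting on $\Range(\bQ)$ to an orthogonal transform acting on the data matrix $\bW$. Let $\bU$ be an orthogonal transform of $\Range(\bQ)$, that is, an isometry of the $(n-1)$-dimensional subspace $\bv^{\perp}$. I would extend $\bU$ to an orthogonal matrix $\widetilde{\bU} \in \RR^{n \times n}$ on all of $\RR^n$ by declaring $\widetilde{\bU} \bv = \bv$ and $\widetilde{\bU}|_{\bv^{\perp}} = \bU$. Since $\bQ \bP \bv$ lies in $\Range(\bQ)$, showing $\widetilde{\bU} (\bQ \bP \bv) \stackrel{d}{=} \bQ \bP \bv$ is equivalent to the claim.

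The key algebraic observations will be: (i) $\widetilde{\bU}$ commutes with $\bQ = \bI - \bv\bv^{\top}$, since $\widetilde{\bU}$ fixes $\bv$ and therefore $\widetilde{\bU} \bv \bv^{\top} = \bv \bv^{\top} = \bv \bv^{\top} \widetilde{\bU}$; (ii) $\widetilde{\bU} \bP \widetilde{\bU}^{\top}$ is the projection onto $\Range(\widetilde{\bU} \bW)$; and (iii) $\widetilde{\bU} \bW \stackrel{d}{=} \bW$ by the rotational invariance of i.i.d.~Gaussian entries (left multiplication by an orthogonal matrix preserves the law). Combining these, I would write
\[
\widetilde{\bU} (\bQ \bP \bv) = \bQ \widetilde{\bU} \bP \bv = \bQ (\widetilde{\bU} \bP \widetilde{\bU}^{\top}) \widetilde{\bU} \bv = \bQ \widetilde{\bP} \bv,
\]
where $\widetilde{\bP}$ is the projection onto $\Range(\widetilde{\bU} \bW)$, and I used $\widetilde{\bU} \bv = \bv$ in the last step. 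By (iii), $\widetilde{\bP} \stackrel{d}{=} \bP$, hence $\bQ \widetilde{\bP} \bv \stackrel{d}{=} \bQ \bP \bv$, which yields the desired invariance.

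There is no real obstacle here; the only minor point to state carefully is the extension of $\bU$ from the hyperplane $\bv^{\perp}$ to an orthogonal matrix on $\RR^n$ fixing $\bv$, and the verification that $\widetilde{\bU}$ commutes with $\bQ$. The proof is a clean one-paragraph application of Gaussian rotational invariance.
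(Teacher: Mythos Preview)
Your proposal is correct and follows essentially the same approach as the paper: the paper also extends the given orthonormal transform on $\Range(\bQ)$ to an orthogonal matrix on $\RR^n$ fixing $\bv$ (written there as $\bT = \bT_0 + \bv\bv^{\top}$), verifies the commutation $\bT\bQ = \bQ\bT$, and then uses the orthogonal invariance $\bW \overset{d}{=} \bT\bW$ to conclude $\bQ(\bT\bP\bT^{\top})\bv \overset{d}{=} \bQ\bP\bv$. The computations and key observations match line for line.
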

\begin{proof}[\bf Proof of Lemma \ref{lem-proj-unif}]
	Let $\{ \bm{b}_j \}_{j=1}^{n - 1} \subseteq \RR^{n}$ be an orthonormal basis of $\Range (\bQ)$ and $\bB= ( \bm{b}_1,\cdots , \bm{b}_{n-1} ) \in \RR^{n\times (n-1)}$. Then $\bQ = \bB \bB^{\top}$ and $\bB^{\top} \bB = \bI_{n - 1}$.
	Choose any orthonormal transform $\bT_0$ in $\Range (\bQ)$. There exists an orthonormal matrix $\bS \in \RR^{(n - 1) \times (n - 1)}$ such that $\bT_0 = \bB \bS \bB^{\top}$. Define $\bT = \bT_0 + \bv \bv^{\top} $, which is an orthonormal matrix.
	
	Observe that
	\[
	\bT \bQ = (\bB \bS \bB^{\top} + \bv \bv^{\top} ) (\bB \bB^{\top}) = (\bB \bS \bB^{\top} ) (\bB \bB^{\top}) = \bB \bS ( \bB^{\top} \bB ) \bB^{\top} = \bT_0
	\]
	and similarly, $\bQ \bT = \bT_0$. Then $\bT \bQ = \bQ \bT = \bT_0$ and
\begin{align*}
	\bT_0 (\bQ \bP \bv) & =
\bT \bQ \bP \bv = \bQ \bT \bP \bv = \bQ ( \bT \bP \bT^{\top} ) \bT \bv \\
& =  \bQ ( \bT \bP \bT^{\top} )  (\bB \bS \bB^{\top} + \bv \bv^{\top} ) \bv
= \bQ ( \bT \bP \bT^{\top} ) \bv.
\end{align*}	
	The orthonormal invariance $\bW \overset{d}{=} \bT \bW$ implies $\bP \overset{d}{=} \bT \bP \bT^{\top} $ and
\[
\bT_0 (\bQ \bP \bv) = \bQ ( \bT \bP \bT^{\top} ) \bv \overset{d}{=} \bQ \bP \bv.
\]
	In words, the distribution of $\bQ \bP \bv$ is invariant under orthonormal transforms in $\Range(\bQ)$.
\end{proof}

{
\bibliographystyle{ims}
\bibliography{bib}
}

\end{document}